\DeclareMathOperator*{\argmin}{arg\,min}
\begin{document}

\title{An ICTM–RMSAV Framework for Bias-Field Aware Image Segmentation under Poisson and Multiplicative Noise 
}


\author{Xinyu Wang       \and
        Wenjun Yao   \and
        Fanghui Song* \and
        Zhichang Guo
}


\institute{Xinyu Wang \at
              School of Mathematics, Harbin Institute of Technology, Harbin 150006, China \\
              \email{25b312002@stu.hit.edu.cn}           
           \and
           Wenjun Yao \at
              School of Mathematics, Harbin Institute of Technology, Harbin 150006, China \\
              \email{mathywj@hit.edu.cn}
           \and
           Fanghui Song*(Corresponding Author) \at
              School of Mathematics, Harbin Institute of Technology, Harbin 150006, China \\
              \email{fanghuisong@stu.hit.edu.cn}
           \and
           Zhichang Guo \at
              School of Mathematics, Harbin Institute of Technology, Harbin 150006, China \\
              \email{mathgzc@hit.edu.cn}
}

\date{Received: date / Accepted: date}

\maketitle

\begin{abstract}
 Image segmentation is a core task in image processing, yet many methods degrade when images are heavily corrupted by noise and exhibit intensity inhomogeneity. Within the iterative-convolution thresholding method (ICTM) framework, we propose a variational segmentation model that integrates denoising terms. Specifically, the denoising component consists of an I-divergence term and an adaptive total-variation (TV) regularizer, making the model well suited to images contaminated by Gamma--distributed multiplicative noise and Poisson noise. A spatially adaptive weight derived from a gray-level indicator guides diffusion differently across regions of varying intensity. To further address intensity inhomogeneity, we estimate a smoothly varying bias field, which improves segmentation accuracy. Regions are represented by characteristic functions, with contour length encoded accordingly. For efficient optimization, we couple ICTM with a relaxed modified scalar auxiliary variable (RMSAV) scheme. Extensive experiments on synthetic and real-world images with intensity inhomogeneity and diverse noise types show that the proposed model achieves superior accuracy and robustness compared with competing approaches.
\keywords{Image segmentation \and Intensity inhomogeneity \and Multiplicative noise \and Iterative convolution-thresholding method \and Scalar auxiliary variable}
\subclass{65K10 \and 68U10 \and 62H35}
\end{abstract}

\section{Introduction}
As an important part of image processing, image segmentation partitions an image into disjoint regions corresponding to objects of interest. It is a useful tool for computer vision \cite{ liu2024rotated}, medical image analysis \cite{wali2023level}, autonomous driving \cite{papadeas2021real} and satellite remote sensing \cite{li2024review}. However, severe intensity inhomogeneity and complex noise often occur in images. Due to illumination variations and inherent limitations of imaging devices, many images exhibit intensity inhomogeneity, which results in blurred object boundaries.
In addition, various noise sources are inevitably introduced during image acquisition and transmission; for example, Poisson noise and multiplicative Gamma noise frequently arise in positron emission tomography (PET) \cite{vardi1985statistical}, X-ray computed tomography (CT) \cite{ding2018statistical}, ultrasound imaging \cite{wagner1983statistics}, and synthetic aperture radar (SAR) images \cite{lee1986speckle}. Consequently, segmenting images affected by strong intensity inhomogeneity and high noise remains a significant challenge.

 Various image segmentation methods have been proposed, including active contour models (ACMs) \cite{chen2023overview}, graph-based methods \cite{chen2018survey}, wavelet-based methods \cite{gao2020wavelet}, thresholding methods \cite{jardim2023image}, and clustering-based methods \cite{mittal2022comprehensive}. Among these traditional segmentation approaches, ACMs have been widely used due to their ability to accurately capture object boundaries. These models construct energy functionals using image features such as gradients and intensity, converting contour evolution into an energy minimization problem. One of the earliest ACMs is the Snakes model \cite{kass1988snakes} proposed by Kass et al. in 1988. The energy functional consists of internal and external energy, and the minimizer of the energy approximates the boundary of the target object.
Following the Snake model, many parametric active contour models were subsequently introduced \cite{cohen1991active,xu1998snakes,xu1998generalized}.
However, such models have low segmentation accuracy and cannot effectively handle topological changes such as object splitting or merging.

To overcome the above topological limitations, Osher and Sethian \cite{osher1988fronts} introduced the level set method, which was subsequently incorporated into ACMs by Caselles et al. \cite{caselles1997geodesic}. In level set–based ACMs, evolving contours are represented implicitly as the zero level set of a level set function (LSF), and their motion toward object boundaries is driven by minimizing an appropriate energy. In general, ACMs can be categorized into edge-based and region-based models. The geodesic active contour (GAC) model \cite{caselles1997geodesic} is a classical edge-based approach that uses image gradients to build an edge detector; Li et al. \cite{li2010distance} later proposed a distance-regularized level set evolution (DRLSE) model to avoid reinitializing the LSF by adding a penalty term. Because edge-based models rely primarily on gradient information, they are sensitive to noise and often degrade on images with weak or missing edges.

In contrast, region-based models are better suited to images with weak edges. The Mumford–Shah (MS) model \cite{mumford1989optimal} is highly influential, but its nonconvexity leads to low computational efficiency. The Chan–Vese (CV) model \cite{chan2001active} approximates images as piecewise constant and is robust to initialization, yet it fails under intensity inhomogeneity. To address this, Li et al. first proposed the local binary fitting (LBF) model \cite{li2007implicit}, assuming locally constant intensities, and later proposed the local intensity clustering (LIC) model incorporating multiplicative bias-field estimation \cite{li2011level}. Subsequent work introduced additive \cite{weng2021level} and hybrid bias corrections \cite{pang2023image}. These level-set formulations handle topological changes and improve segmentation with inhomogeneous intensities; however, most bias-field models assume Gaussian noise, limiting accuracy on images affected by complex noise.

Unlike additive noise, Poisson and multiplicative Gamma noise are intensity-dependent, causing stronger degradation and aggravating intensity inhomogeneity. To mitigate noise, Cai et al. \cite{cai2013two} proposed a two-stage scheme--convex denoising or deblurring followed by thresholding--which is effective for Gaussian noise with blur. Chan et al. \cite{chan2014two} extended this to blurred images with multiplicative and Poisson noise, which enables easy multi-phase segmentation tends to degrade under strong inhomogeneity.
Although two-stage methods have demonstrated effectiveness in segmentation, their practical application necessitates careful assessment of whether the first-stage outcome is beneficial to the performance of the second-stage model. As a result, recent research trends are moving from denoise-then-segment toward unified variational formulations that jointly incorporate denoising and segmentation terms.
In joint variational/level-set methods, the VLSGIS model proposed by Ali et al. \cite{ali2018image} combines two denoising terms to handle inhomogeneity and noise, but some subtle object details are not well preserved in its results; the AVLSM model by Cai et al. \cite{cai2021avlsm} employs an adaptive-scale bias field with a denoising term. Zhang et al. \cite{zhang2022rvlsm} map data into a diffusion-induced space to suppress noise while enhancing details; Kumar et al. \cite{kumar2024ecdm} couple a despeckling equation with bias-field correction (ECDM) to preserve boundaries under speckle and inhomogeneity; more recently, Zhou et al. \cite{zhou2025variational} introduced a variational model with a nonlinear transformation, achieving simultaneous segmentation and denoising for multiplicative Gamma noise.

Recent years have seen growing interest in segmenting images degraded by noise and intensity inhomogeneity. Wang et al. \cite{wang2022iterative} proposed a novel framework applicable to various active contour models for image segmentation. In this framework, different segments are represented by their characteristic functions, and the regularization term is formulated as a concave functional of these characteristic functions. Based on this framework, an iterative convolution-thresholding method (ICTM) was proposed to minimize the energy functional, providing a faster approach compared with level set methods. However, this method performs poorly when applied to noise-sensitive models. To handle this problem, Liu et al. \cite{liu2023active} proposed an improved energy functional incorporating a local variance force term. The energy functional was solved using a modification ICTM. However, the model yields unsatisfactory on images with severe intensity inhomogeneity. Similarly, Hsieh et al. \cite{hsieh2024efficient} integrated a denoising mechanism with the LIC model and employed an efficient ICTM to solve the minimization problem.

In this paper, we propose a variational model incorporating denoising terms to solve the problem of segmenting images degraded by noise and intensity inhomogeneity. The model employs the I--divergence for noise removal and a bias field to correct intensity inhomogeneity, thereby improving segmentation robustness under challenging imaging conditions. Specifically, our main contributions are as follows.
\begin{itemize}
  \item We employ a bias field to correct images with intensity inhomogeneity and use characteristic functions to represent regions and approximate contour length. This approach enhances the robustness of severe intensity inhomogeneity and noise. Representing regions with characteristic functions eliminates the need for additional penalty terms or re-initialization.
  \item The I-divergence and adaptive total variation (TV) regularization are incorporated into the segmentation model as denoising terms. Since images exhibit intensity inhomogeneity and noise is signal-dependent, a gray level indicator is introduced as an adaptive weight in the TV regularization term. This enables adaptive denoising and better preservation of image details, thereby improving segmentation accuracy.
  \item We employ the RMSAV algorithm and ICTM to efficiently solve the model. In addition, we conduct segmentation experiments on various images with noise and intensity inhomogeneity, as well as real medical images, demonstrating the effectiveness of our model.
\end{itemize}

The paper is organized as follows. In Sect.~\ref{sec:related}, a brief introduction to classical image segmentation models is provided. In Sect.~\ref{sec:proposed}, we elucidate the motivation behind our model and propose a new active contour model. In Sect.~\ref{sec:algorithms}, the ICTM and RMSAV algorithms are employed to obtain the minimizer of the proposed model. In Sect.~\ref{sec:experiments}, numerical experiments are given to validate the effectiveness and segmentation accuracy of our model. Finally, conclusions are given in Sect.~\ref{sec:conclusions}.

\section{Related Work}
\label{sec:related}
In this section, we introduce some representative region-based ACMs. Let $\Omega \subset \mathbb{R}^2 $ be the image domain, and $ f:\Omega \rightarrow \mathbb{R}^d $ be an input image, where $d=1$ represents the grayscale image and $d=3$ represents the color image. The goal of image segmentation is to partition $\Omega$ into $n$ disjoint regions, denoted as $\Omega_1,\Omega_2,\dots,\Omega_n$, that is, $\Omega=\displaystyle \bigcup\limits_{i=1}^n \Omega_i$ and $\Omega_i \cap \Omega_j=\emptyset$ for $i\neq j$.
\subsection{The CV Model and LIC Model}
The CV model \cite{chan2001active} assumes that the image intensity is constant in different regions and formulates the $n$-phase segmentation problem as follows:
\begin{equation}\label{eq:CV}
  E_{\text{CV}}(\Omega_1,\dots, \Omega_n,\mathbf{c})=\sum_{i=1}^n\int_{\Omega_i} \lambda_i|f(x)-c_i|^2 \,dx + \mu \sum_{i=1}^n |\partial \Omega_i| ,
\end{equation}
where $\lambda_i$ and $\mu$ are parameters. $\partial \Omega_i$ is the boundary of $\Omega_i$, and $|\partial \Omega_i|$ represents the length of the boundary. $\mathbf{c}=(c_1,\dots,c_n)$ is a vector, where each $c_i$ approximates the mean intensity of the image in $\Omega_i$. The CV model can effectively segment images with blurred boundaries. However, when the image intensity in a region is not constant, the CV model fails to produce accurate segmentation results.

To address the issue of intensity inhomogeneity in images, Horn et al. \cite{horn1974determining} proposed a classical multiplicative model of intensity inhomogeneity, expressed as $f(x)=b(x)J(x)+n(x)$. In this model, $f(x)$ is an observed image, $J(x)$ is the real image (bias-free image), $b(x)$ is the bias field that accounts for intensity inhomogeneity and $n(x)$ is additive noise. Based on this model, Li et al. \cite{li2011level} proposed the LIC model, which assumes that the bias field $b$ varies slowly. The energy functional of the LIC model is defined as
\begin{equation}\label{eq:LIC}
\begin{split}
E_{\text{LIC}}(\Omega_1,\dots, \Omega_n,b,\mathbf{c})
= &\sum_{i=1}^n\lambda_i\int_{\Omega} \int_{\Omega_i} G_\rho(y-x)|f(y)-b(x)c_i|^2 \,dydx \\
&+ \mu \sum_{i=1}^n |\partial \Omega_i| ,
\end{split}
\end{equation}
where $G_\rho$ denotes a Gaussian kernel function. $\mathbf{c}=(c_1,\dots,c_n)$ is a vector, where each $c_i$ approximates the mean intensity of the real image $J(x)$ in $\Omega_i$. $\lambda_i$ and $\mu$ are parameters. Using a multiplicative model and clustering criteria, the LIC model simultaneously estimates the bias field $b(x)$ and segments the image, effectively handling intensity inhomogeneity.

\subsection{The General Model of ICTM}
To enhance the computational efficiency of variational models, Wang et al. \cite{wang2022iterative} introduced a novel approximation for contour length representation based on characteristic functions and Gaussian kernels. Specifically, the boundary length $|\partial \Omega_i|$ can be approximated as
\begin{equation}
  |\partial \Omega_i \cap \partial \Omega_j |\approx \sqrt{\frac{\pi}{\tau}}\int_{\Omega}u_iG_{\tau}*u_j\,dx ,
\end{equation}
where $u_i$ is the characteristic function of the region $\Omega_i$, $*$ represents convolution, i.e., $G_{\tau}*u_j(x)=\int G_{\tau}(x-y)u_j(y)dy $, with $G_{\tau}$ being the Gaussian kernel. This variant approach, based on the heat kernel, has been shown \cite{ledoux1994semigroup,pallara2007short} to provide an accurate estimate of boundary length.

Based on this length representation, they proposed a new framework applicable to most segmentation models, expressed as
 \begin{equation}\label{eq:ICTM}
 \begin{split}
   E_{ICTM}(\mathbf{u}, \mathbf{\Theta_1}, \dots, \mathbf{\Theta_n})
   = & \sum_{i=1}^{n} \lambda_i \int_{\Omega} u_i F_i (f, \mathbf{\Theta_1}, \dots, \mathbf{\Theta_n}) \,dx \\
   & + \mu \sum_{i=1}^{n} \sum_{j=1,j \neq i}^{n} \sqrt{\frac{\pi}{\tau}} \int_{\Omega} u_i G_{\tau} * u_j \,dx ,
 \end{split}
\end{equation}
where $\mathbf{u}=(u_1,\dots,u_n)$  is a vector of characteristic functions, and each $u_i$ represents the characteristic function of $\Omega_i$. $\mathbf{\Theta_i}= ( \Theta_{i,1}, \dots, \Theta_{i,m} )$ denotes all variables or functions of the data fitting terms. To solve the problem, the authors proposed the ICTM, which features high computational efficiency and unconditional energy descent.

\section{The Proposed Model}
\label{sec:proposed}
In this section, we first explain the motivation behind our model. Subsequently, to overcome the insufficient robustness of existing active contour models to noise and intensity inhomogeneity, we propose a novel variational model, aiming to achieve more accurate segmentation results.

\subsection{Motivation}
\label{subsec:motivation}

\begin{figure}
  \centering
  \begin{subfigure}[b]{0.16\linewidth}
      \centering
      \includegraphics[width=\textwidth]{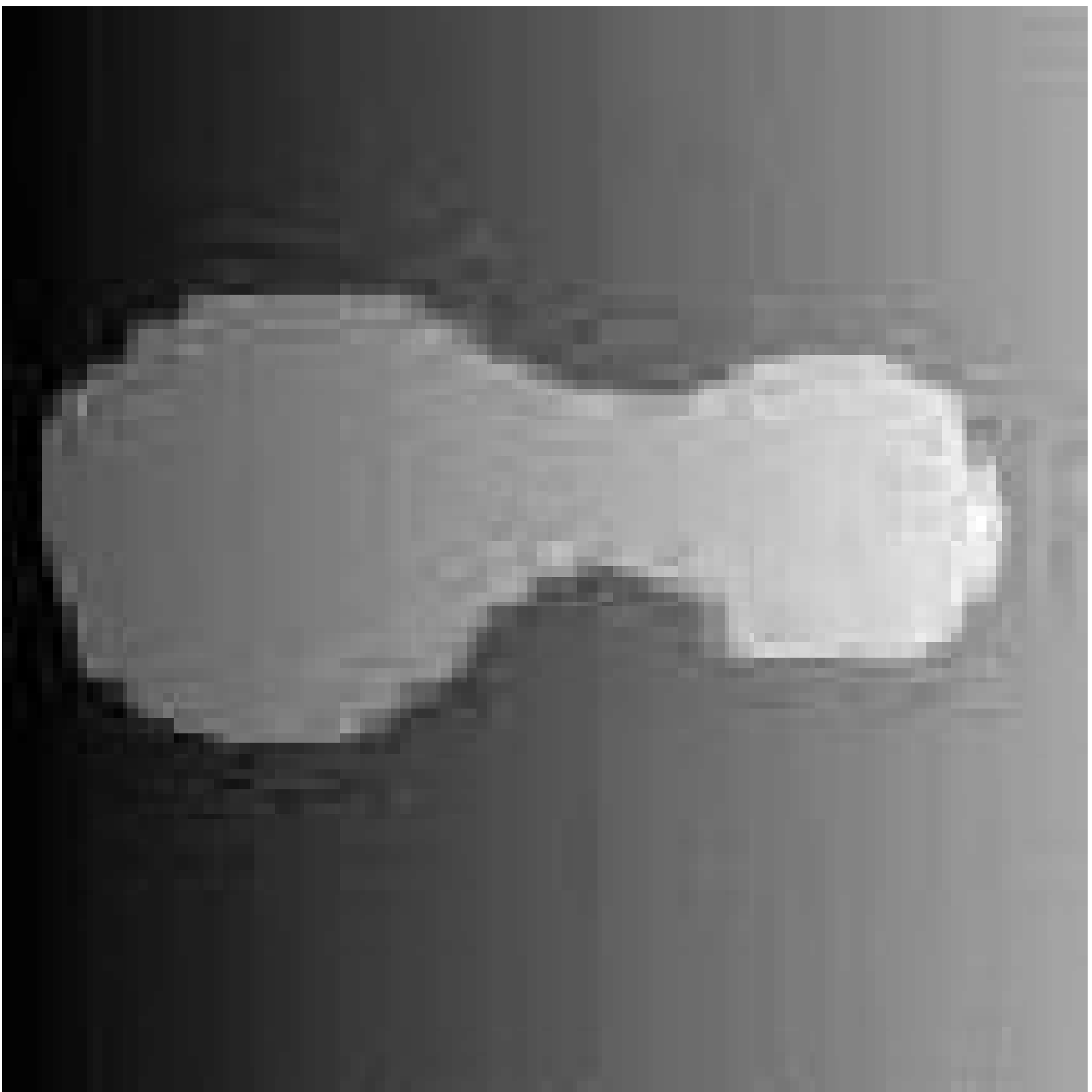}
      \caption{}
      \label{fig:1-origianl}
  \end{subfigure}
  \hfill
  \begin{subfigure}[b]{0.16\linewidth}
      \centering
      \includegraphics[width=\textwidth]{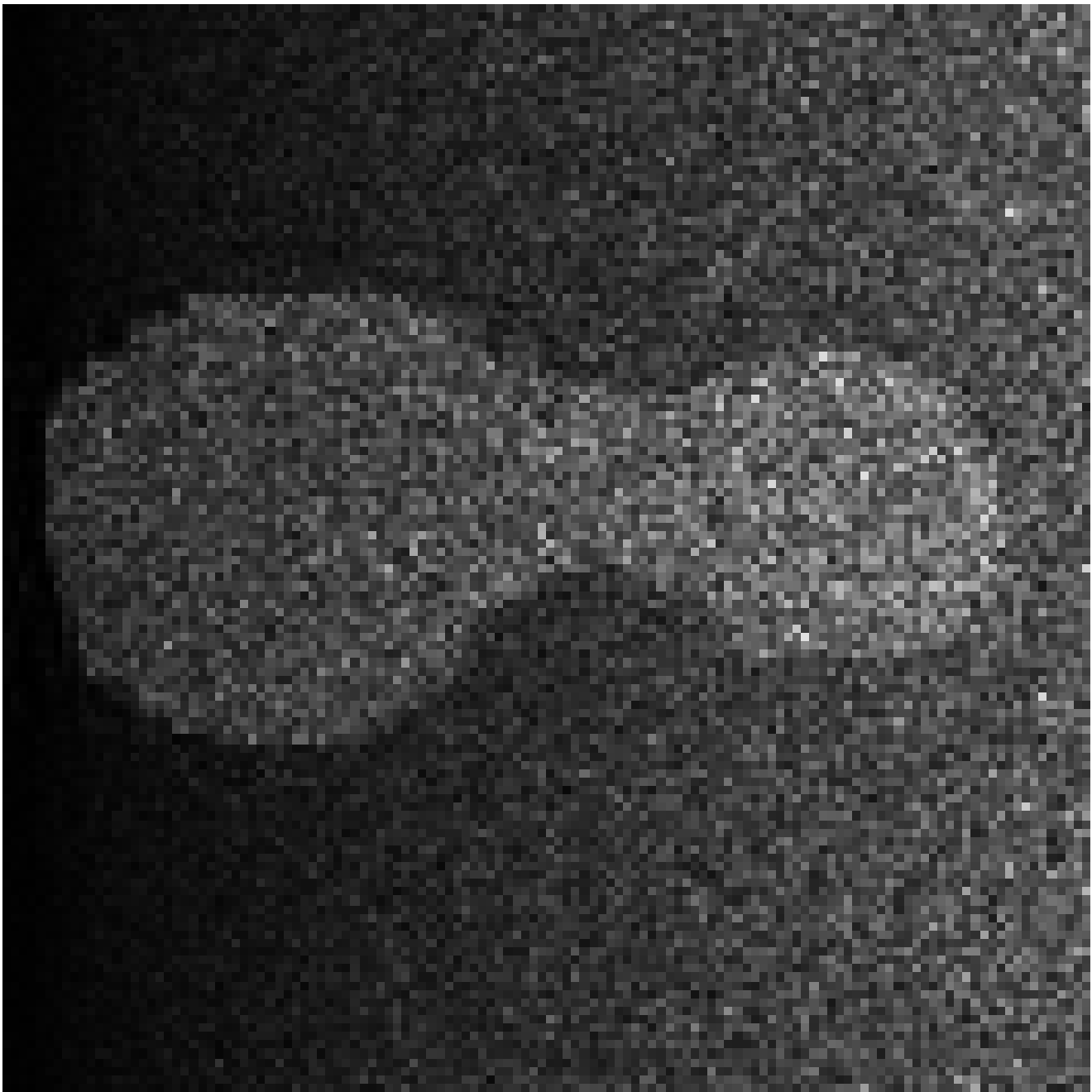}
      \caption{}
      \label{fig:1-L10}
  \end{subfigure}
  \hfill
    \begin{subfigure}[b]{0.16\linewidth}
      \centering
      \includegraphics[width=\textwidth]{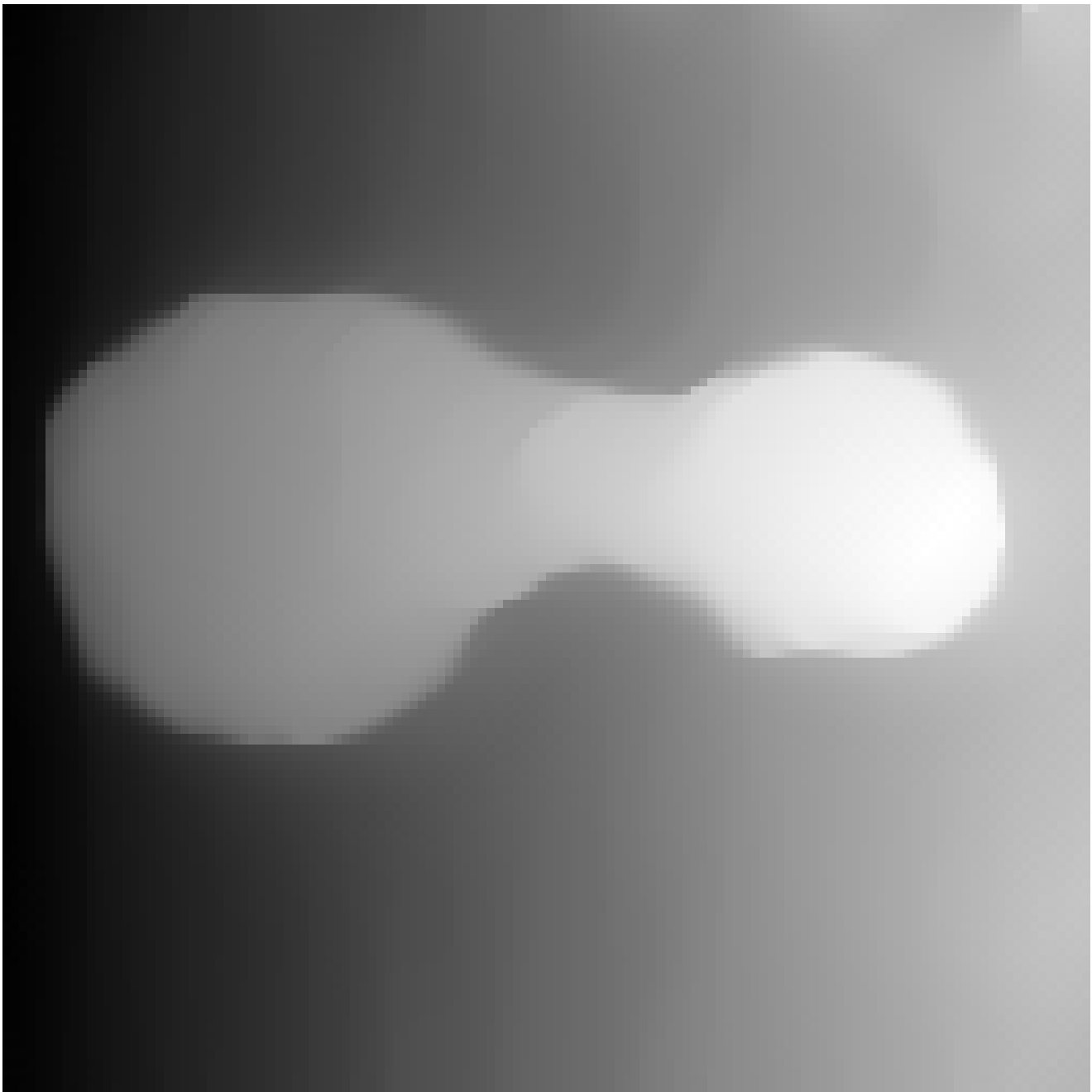}
      \caption{}
      \label{fig:1-L10-denoise}
  \end{subfigure}
 \hfill
   \begin{subfigure}[b]{0.16\linewidth}
      \centering
      \includegraphics[width=\textwidth]{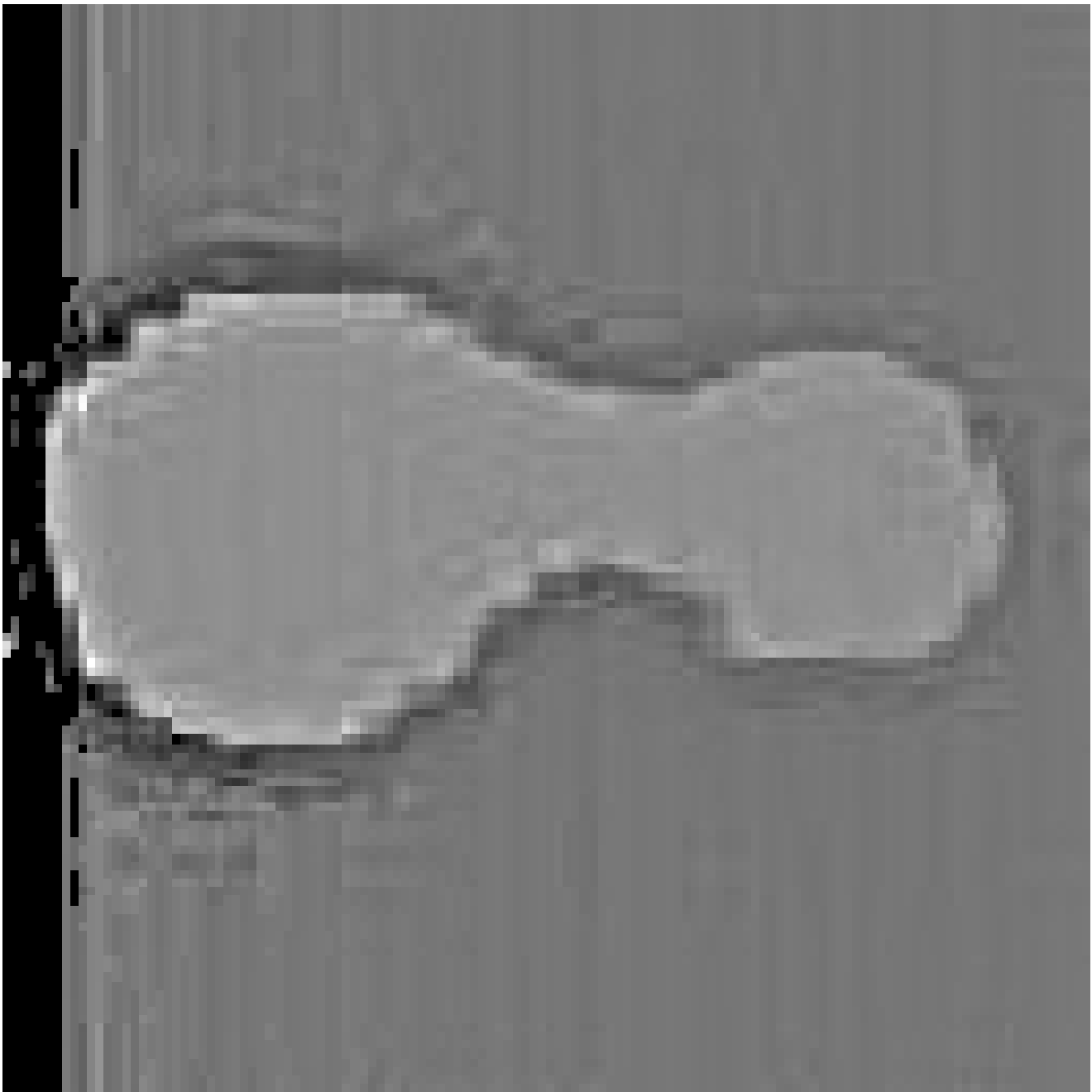}
      \caption{}
      \label{fig:1-bias}
  \end{subfigure}
  \hfill
  \begin{subfigure}[b]{0.16\linewidth}
      \centering
      \includegraphics[width=\textwidth]{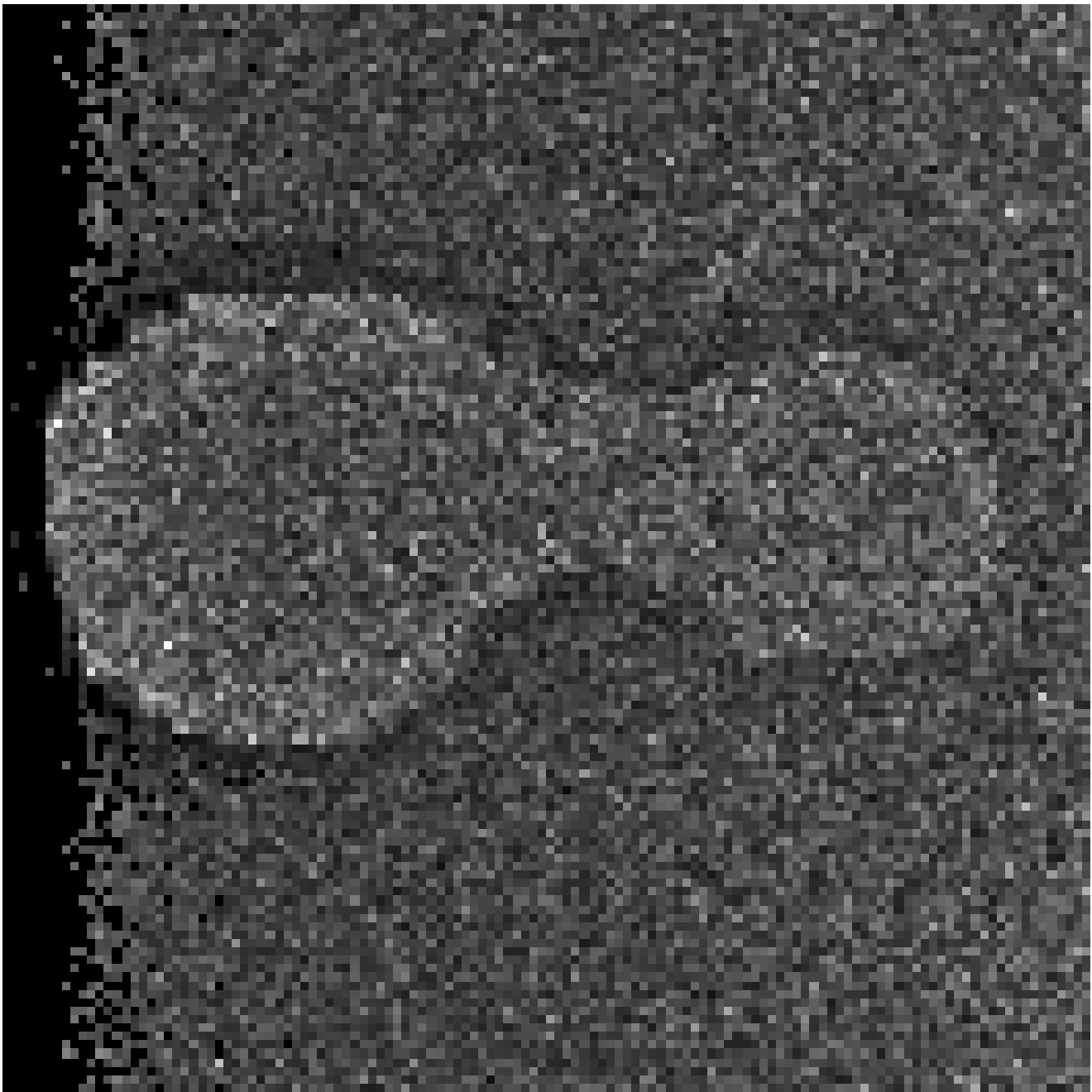}
      \caption{}
      \label{fig:1-L10-bias}
  \end{subfigure}
    \hfill
  \begin{subfigure}[b]{0.16\linewidth}
      \centering
      \includegraphics[width=\textwidth]{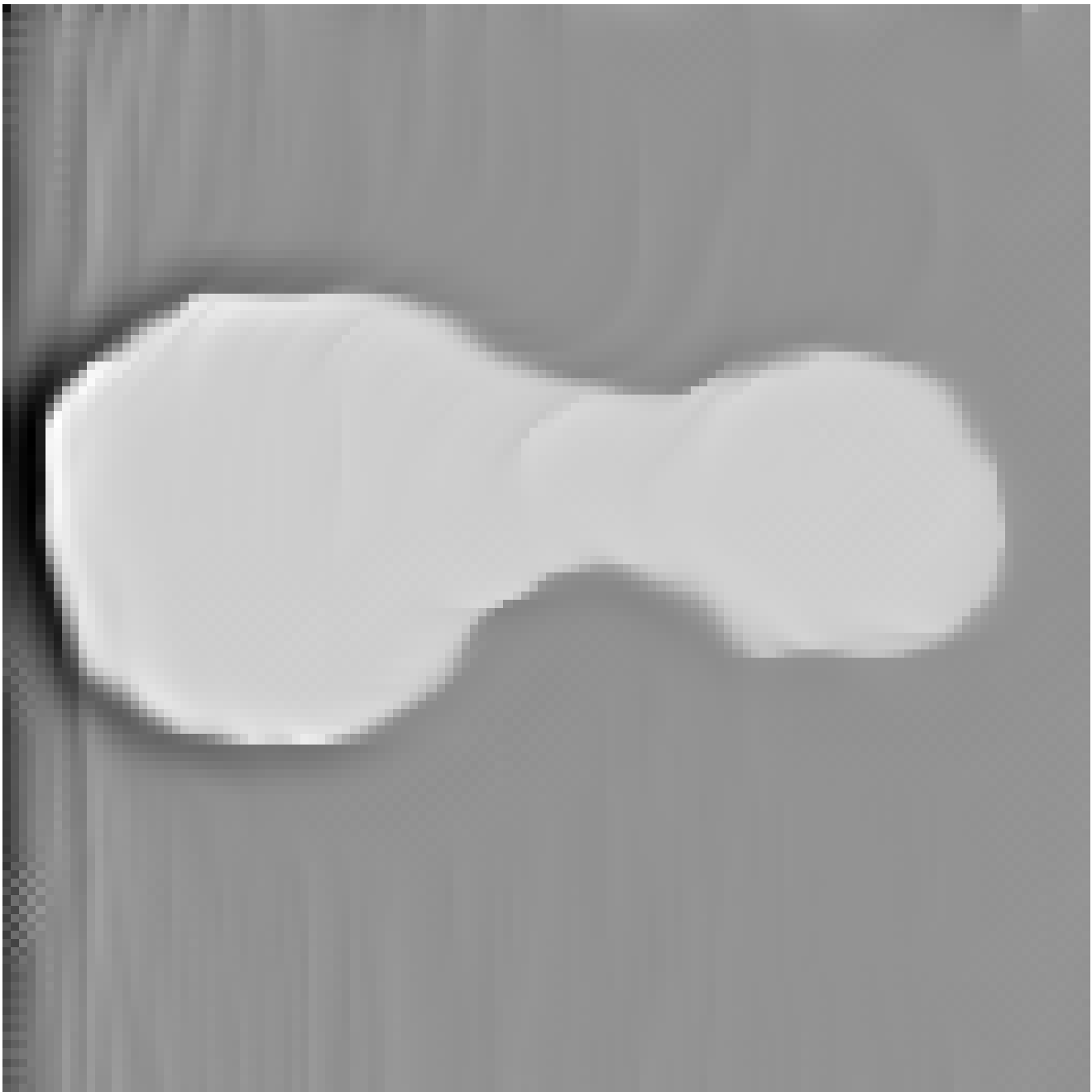}
      \caption{}
      \label{fig:1-L10-denoise-bias}
  \end{subfigure}

  \begin{subfigure}[b]{0.16\linewidth}
      \centering
      \includegraphics[width=\textwidth]{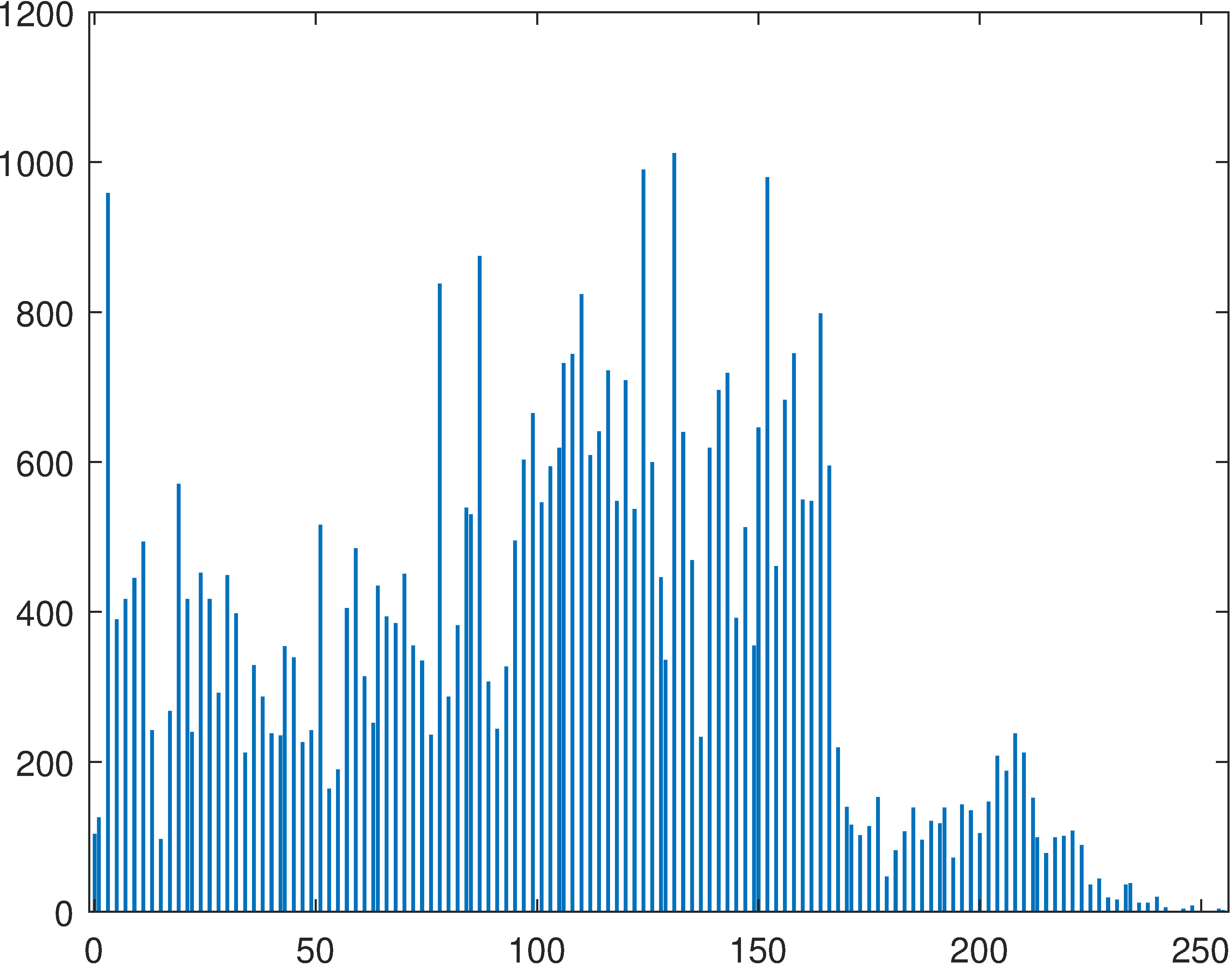}
      \caption{}
      \label{fig:1-intensity}
  \end{subfigure}
  \hfill
  \begin{subfigure}[b]{0.16\linewidth}
      \centering
      \includegraphics[width=\textwidth]{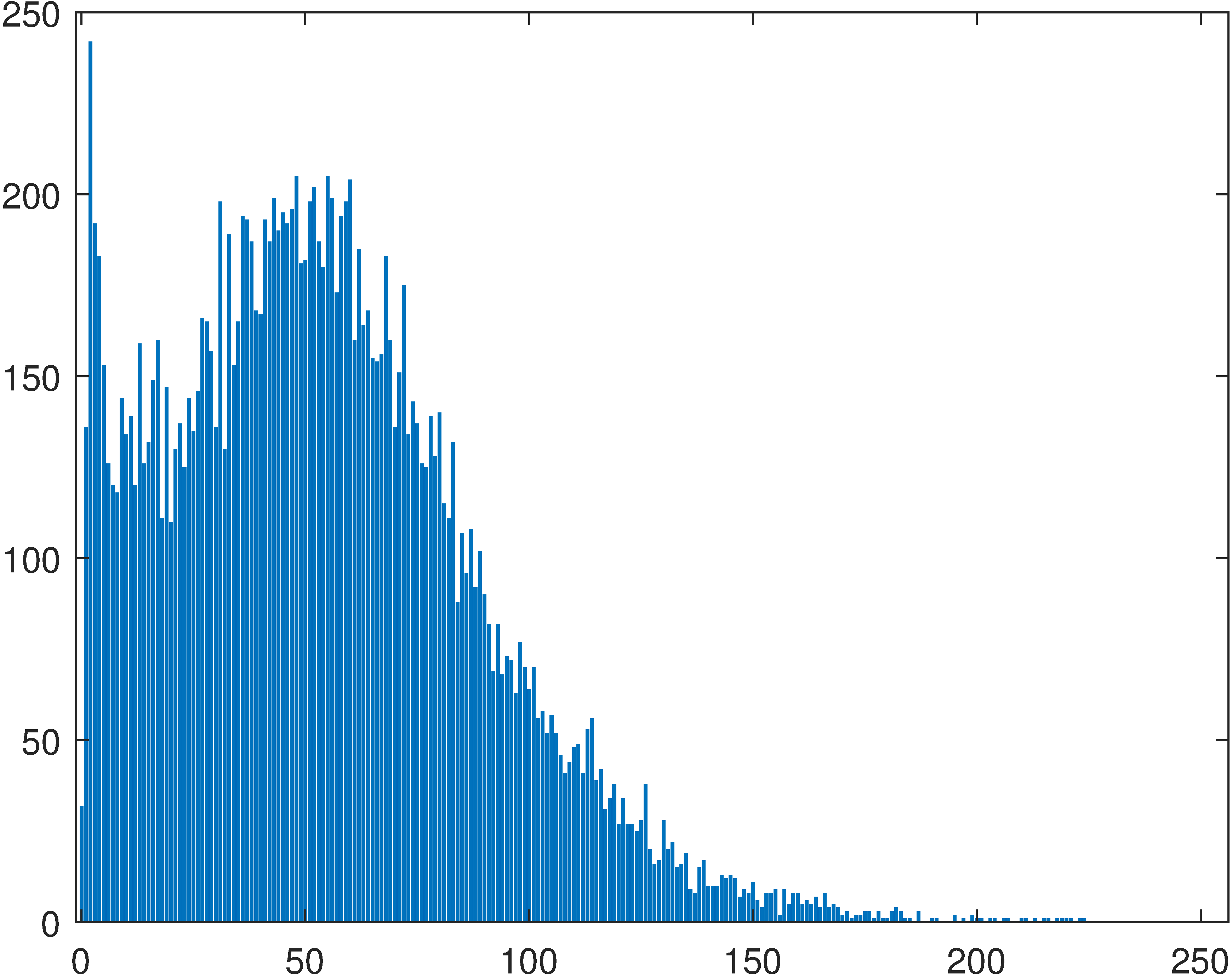}
      \caption{}
      \label{fig:1-L10-intensity}
  \end{subfigure}
     \hfill
    \begin{subfigure}[b]{0.16\linewidth}
      \centering
      \includegraphics[width=\textwidth]{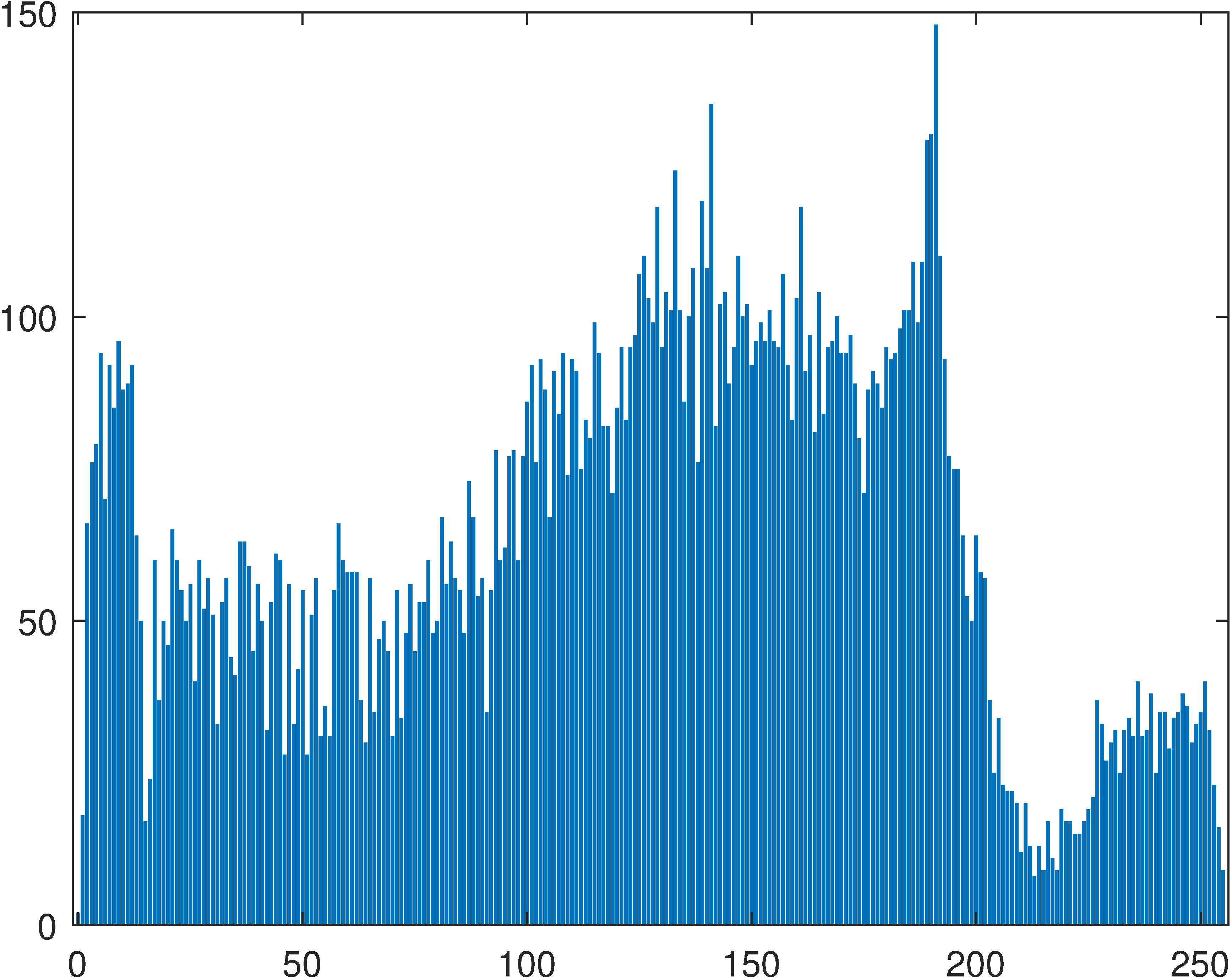}
      \caption{}
      \label{fig:1-L10-denoise-intensity}
  \end{subfigure}
   \hfill
     \begin{subfigure}[b]{0.16\linewidth}
      \centering
      \includegraphics[width=\textwidth]{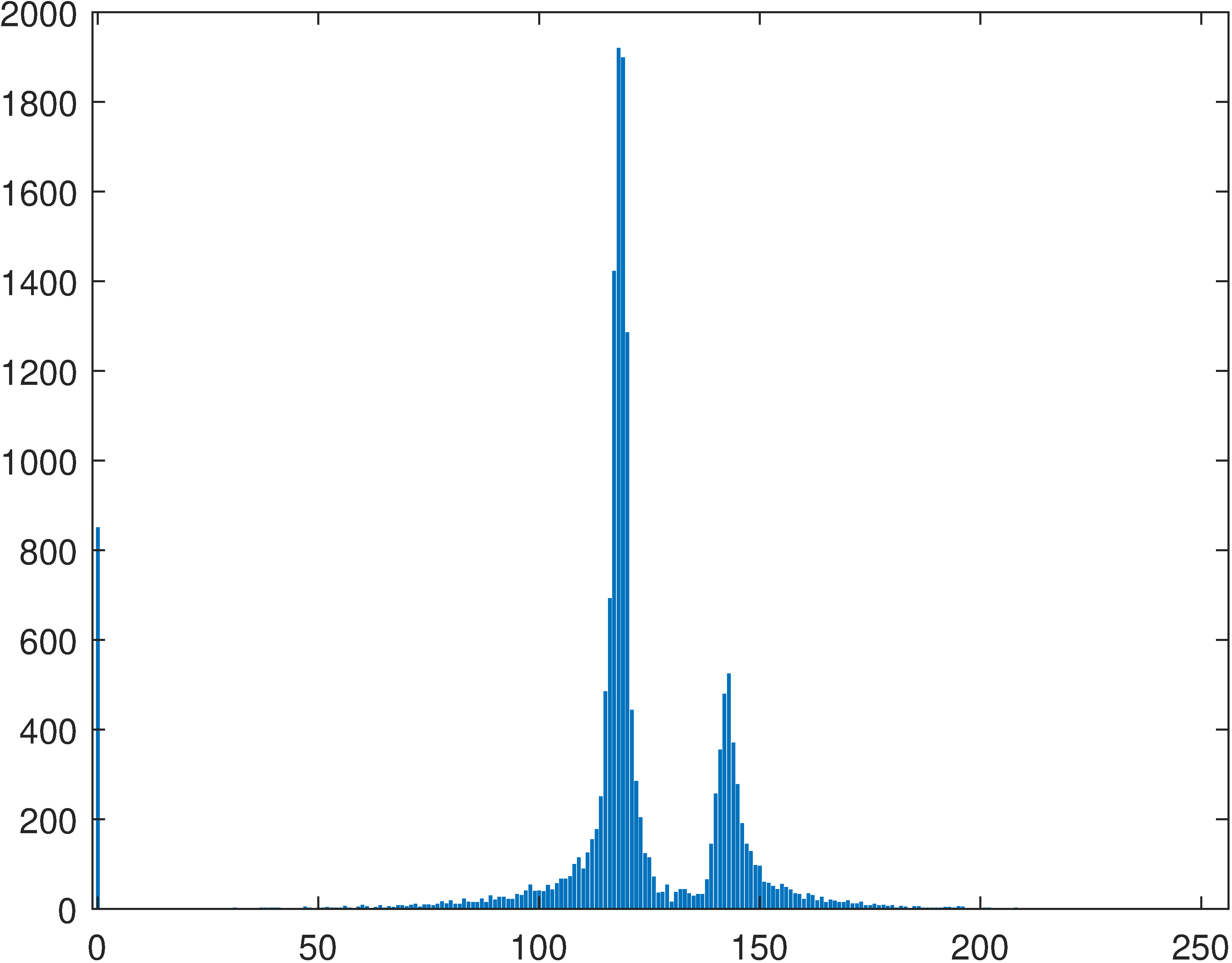}
      \caption{}
      \label{fig:1-bias-intensity}
  \end{subfigure}
   \hfill
  \begin{subfigure}[b]{0.16\linewidth}
      \centering
      \includegraphics[width=\textwidth]{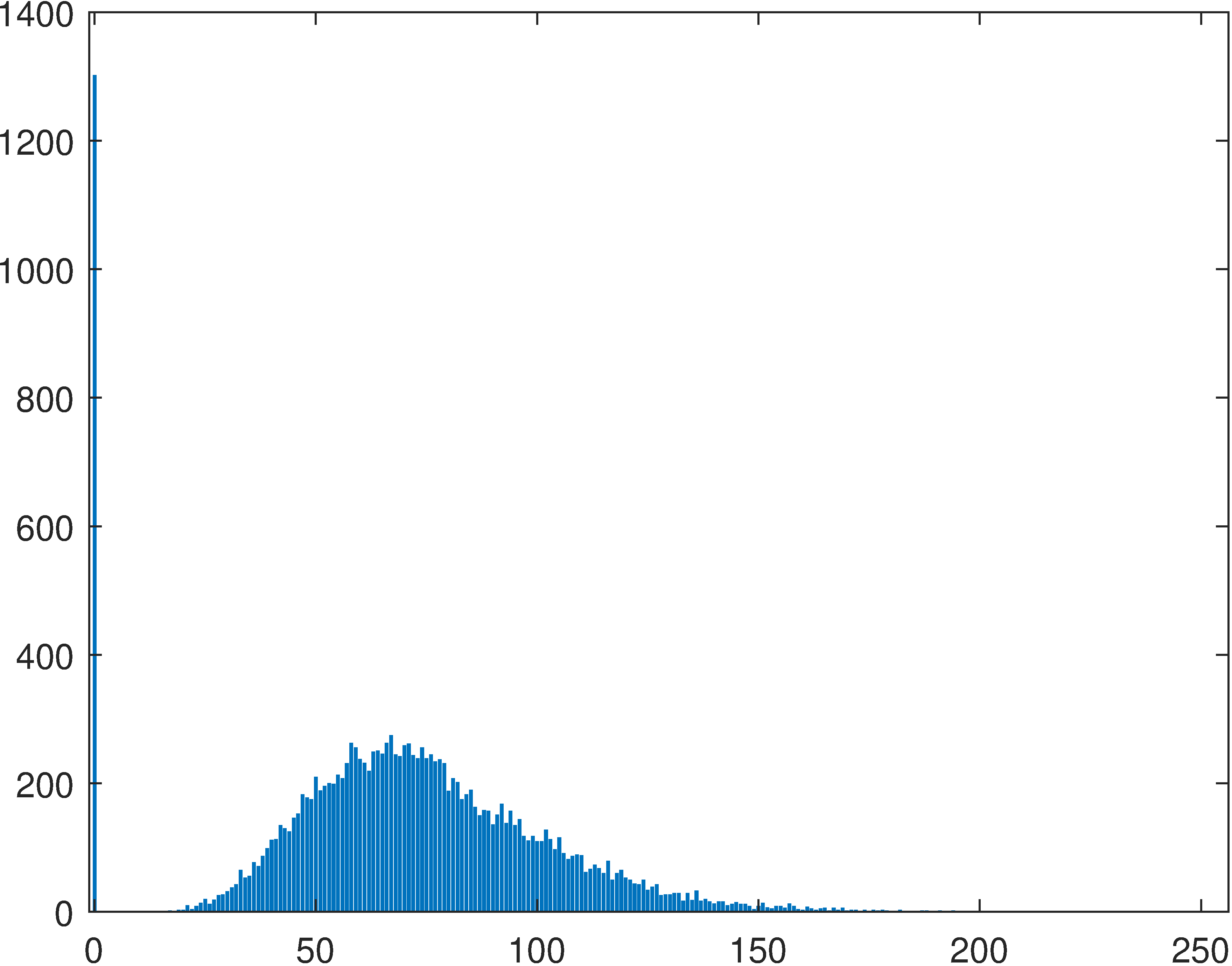}
      \caption{}
      \label{fig:1-L10-bias-intensity}
  \end{subfigure}
   \hfill
  \begin{subfigure}[b]{0.16\linewidth}
      \centering
      \includegraphics[width=\textwidth]{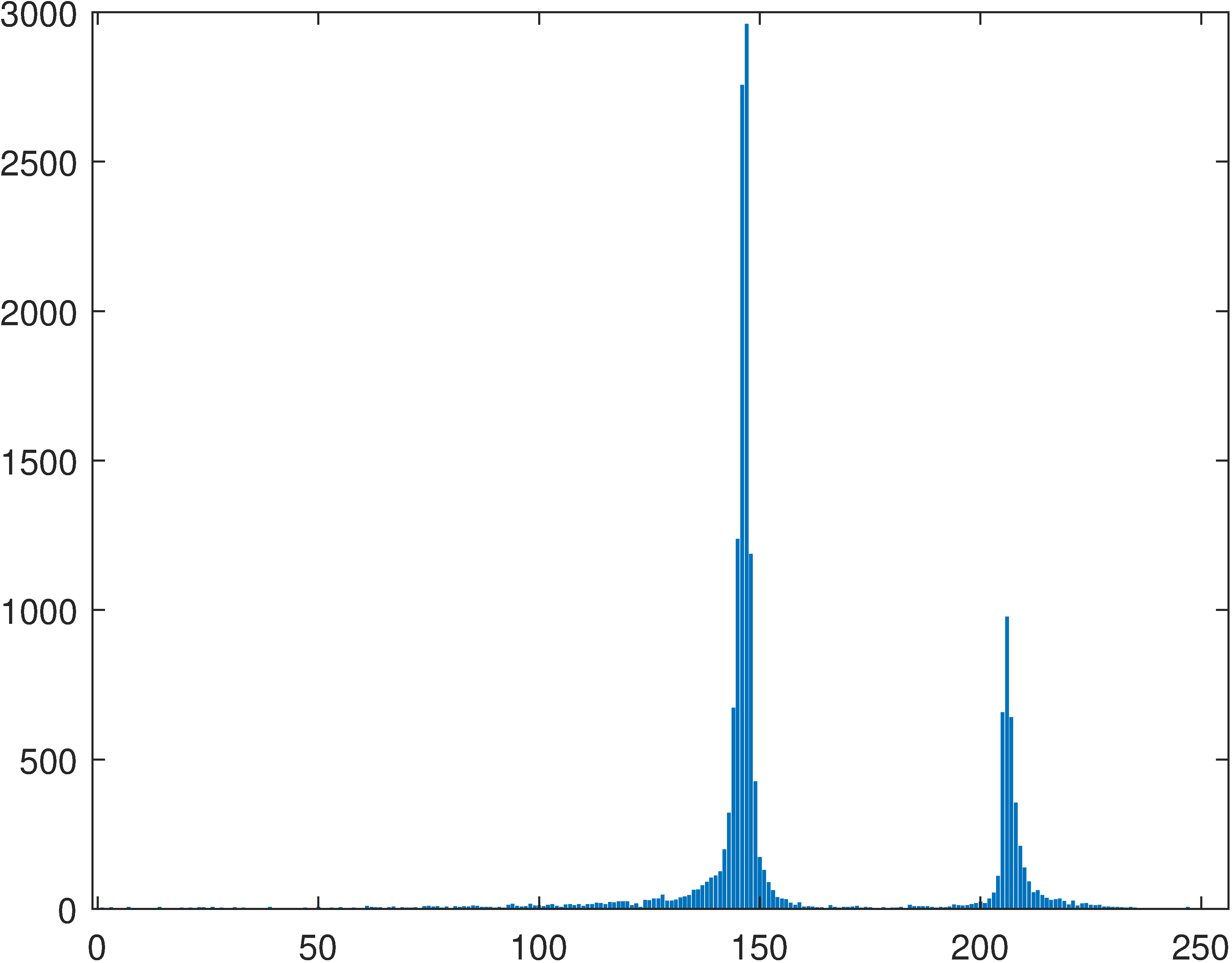}
      \caption{}
      \label{fig:1-L10-denoise-bias-intensity}
  \end{subfigure}

  \caption{Intensity Distribution Histogram. (a) is the original image; (b) is the image corrupted by multiplicative noise; (c) is the denoised image; (d)--(f) are the bias-corrected versions of (a)--(c), respectively; (g)--(i) are the intensity histograms corresponding to (a)--(f), respectively}
  \label{fig:inetsity}
\end{figure}
\begin{figure}
  \centering
  \begin{subfigure}[b]{0.19\linewidth}
      \centering
      \includegraphics[width=\textwidth]{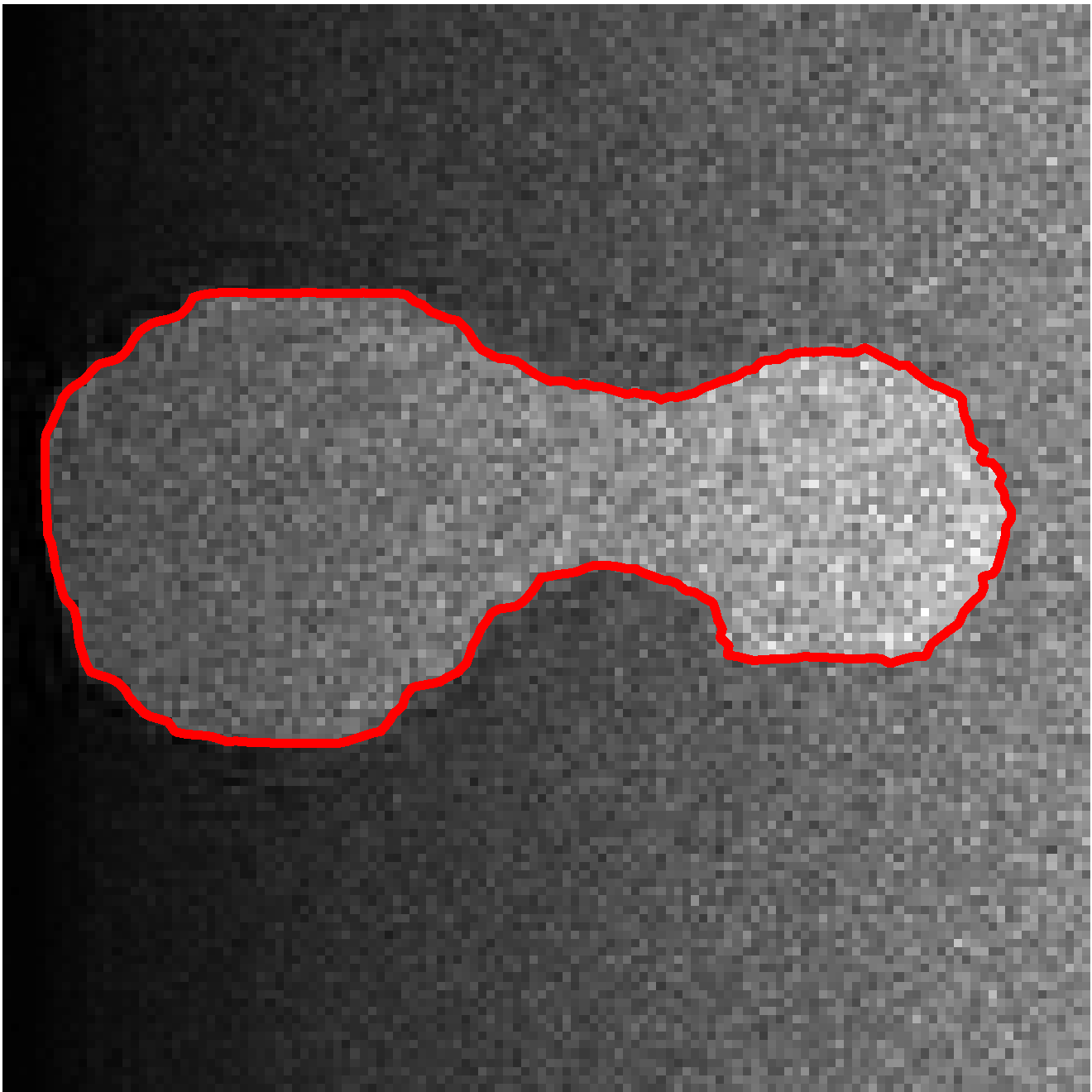}
      \caption{}
      \label{fig:1-origianl}
  \end{subfigure}
  \hfill
  \begin{subfigure}[b]{0.19\linewidth}
      \centering
      \includegraphics[width=\textwidth]{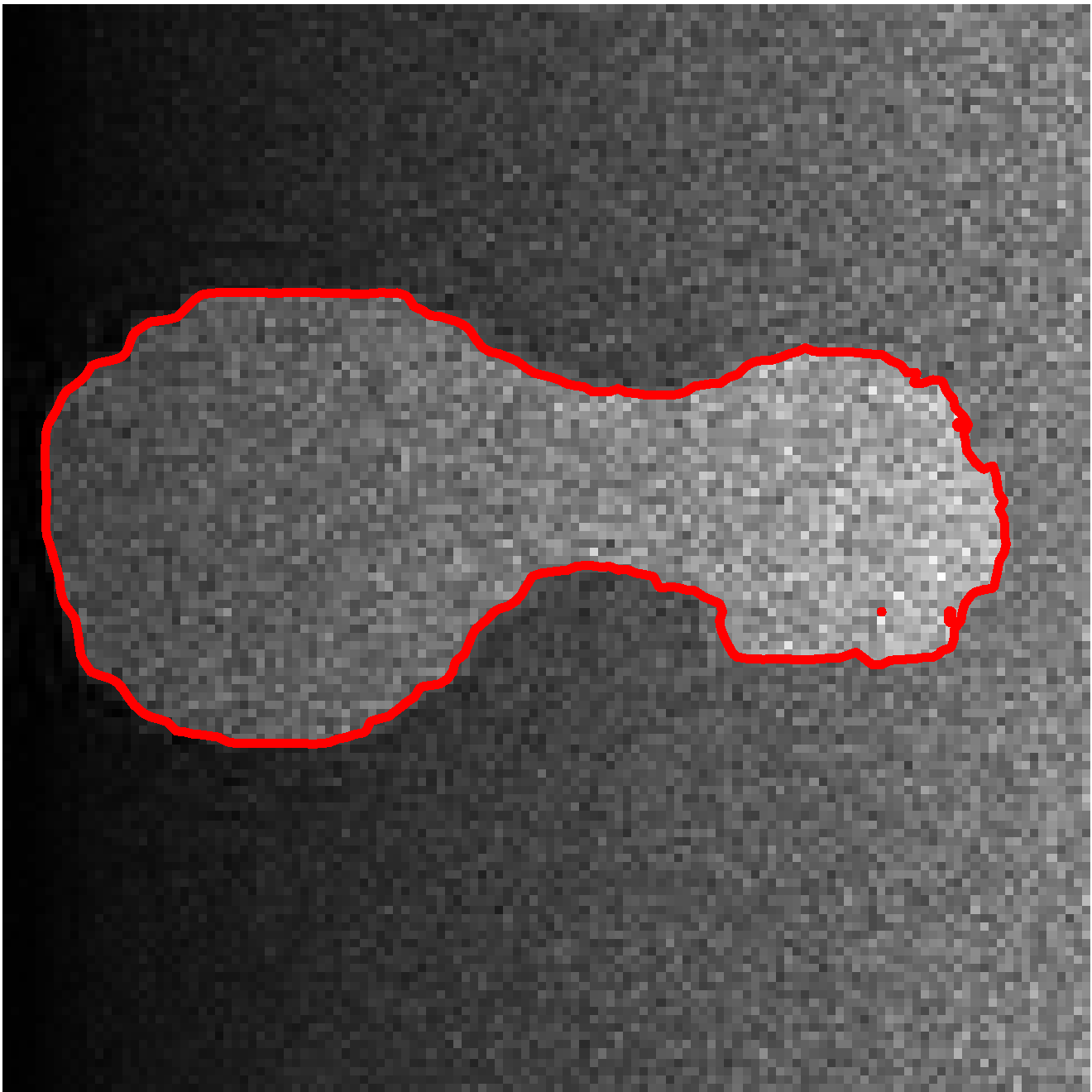}
      \caption{}
      \label{fig:1-L40-seg}
  \end{subfigure}
  \hfill
  \begin{subfigure}[b]{0.19\linewidth}
      \centering
      \includegraphics[width=\textwidth]{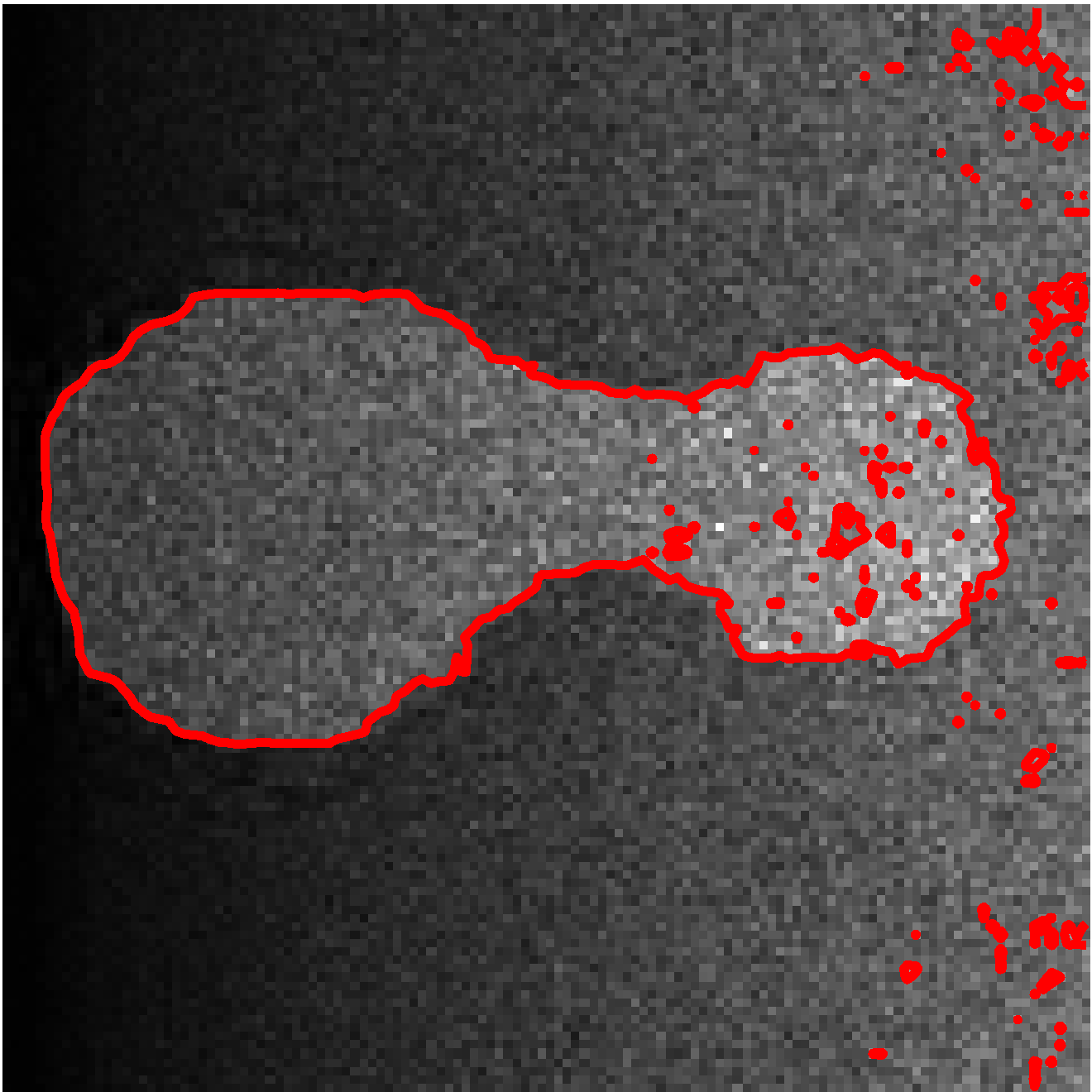}
      \caption{}
      \label{fig:1-L30-seg}
  \end{subfigure}
  \hfill
    \begin{subfigure}[b]{0.19\linewidth}
      \centering
      \includegraphics[width=\textwidth]{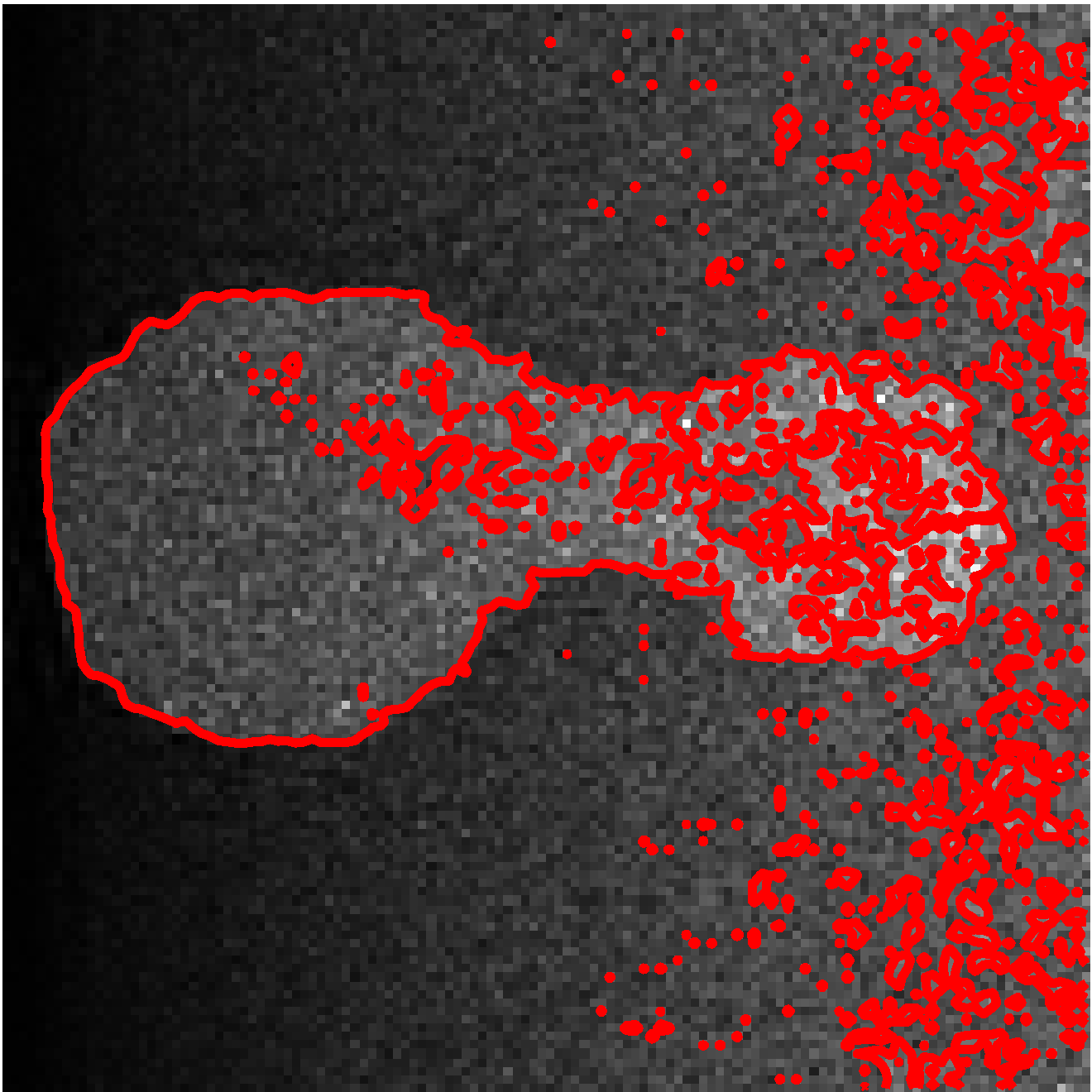}
      \caption{}
      \label{fig:1-L20-seg}
  \end{subfigure}
 \hfill
   \begin{subfigure}[b]{0.19\linewidth}
      \centering
      \includegraphics[width=\textwidth]{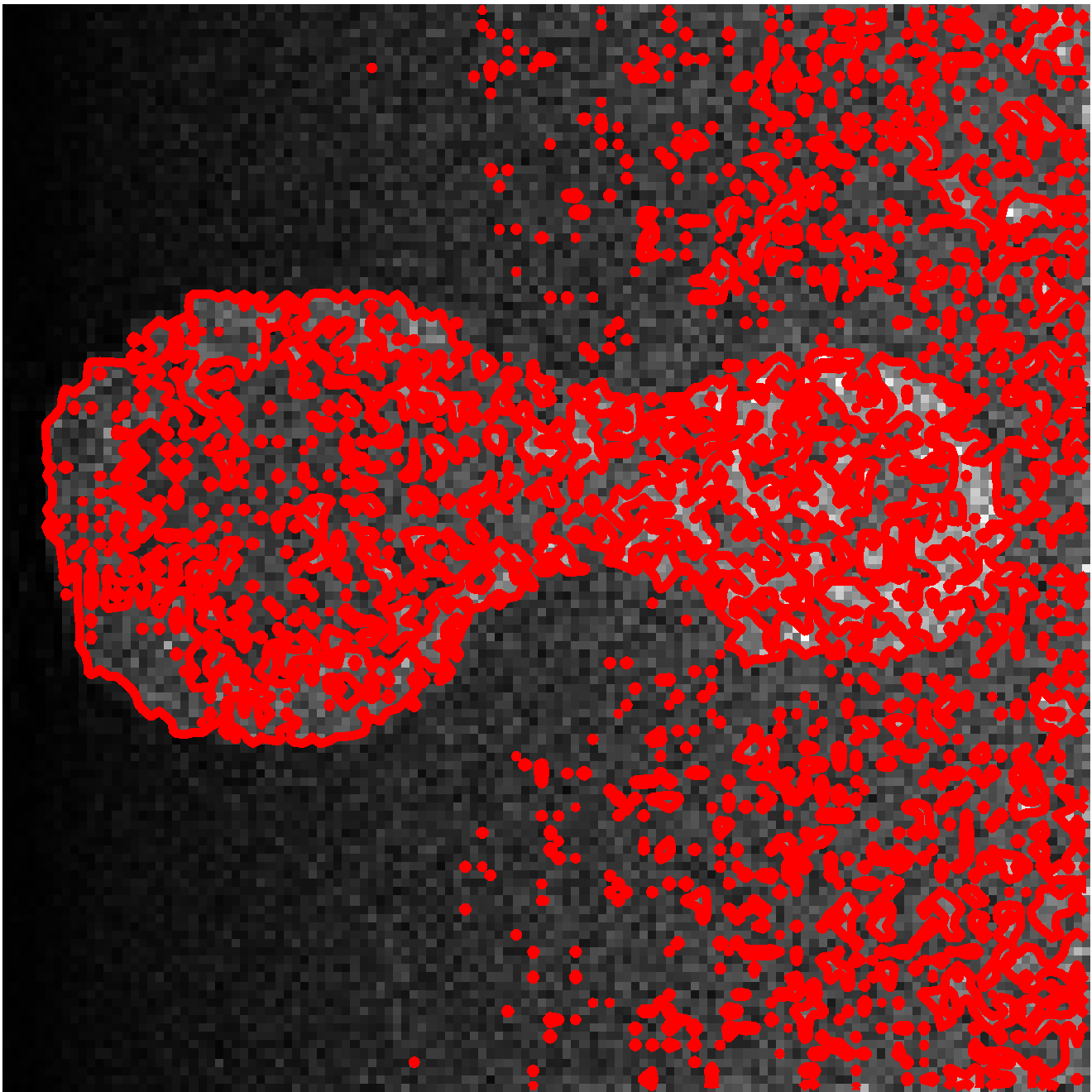}
      \caption{}
      \label{fig:1-L10-seg}
  \end{subfigure}
  \caption{Segmentation results with a model containing regularization (length) term. (a)–(e) correspond to multiplicative noise levels of $ L=50,40,30,20, \text{and }  10$, respectively}
  \label{fig:length term}
\end{figure}
To illustrate the impact of noise on the accuracy of image segmentation, we present the statistical distributions of the image intensity values under different conditions in Fig.~\ref{fig:inetsity}. As shown in Fig.~\ref{fig:1-intensity}, the intensity histogram of the image does not exhibit clearly separable peaks, indicating that the foreground and background have similar intensity values. This explains why the CV model cannot accurately segment such images. In contrast, the LIC model assumes that images with intensity inhomogeneity follow a multiplicative model, i.e., an image with intensity inhomogeneity can be represented as the product of an intensity-homogeneous image and a bias field.
After bias correction, the intensity values in different regions are distinguishable, allowing the model to utilize the image intensity information for segmentation. The intensity distribution after bias correction using the LIC model is illustrated in Fig.~\ref{fig:1-bias-intensity}. It can be seen that the corrected image has clearly separable peaks, which enables the LIC model to achieve accurate segmentation. However, as shown in Fig.~\ref{fig:1-L10} and Fig.~\ref{fig:1-L10-bias}, when the image is degraded by noise, the intensity of the target and background remains similar even after bias correction with the LIC model.
After denoising and bias correction (Fig.~\ref{fig:1-L10-denoise-bias}), a clear difference in intensity values emerges between the foreground and background, resulting in a distinct boundary.

Liu et al. \cite{liu2013image} claimed the regularization term can enhance the robustness of the model to noise. However, as shown in Fig.~\ref{fig:length term}, the regularized model produces correct boundaries at low noise levels, but still yields inaccurate segmentation under severe noise, as strong noise significantly perturbs the image intensity distribution.
Therefore, to improve segmentation accuracy for noisy images, it is necessary to incorporate denoising terms into the segmentation model.

\subsection{Robust Variational Image Segmentation Model}

\subsubsection{Denoising Terms of Segmentation Model}
\label{subsubsec:noise}
\begin{figure}
    \centering
    \begin{subfigure}[b]{0.24\linewidth}
        \centering
        \includegraphics[width=\linewidth]{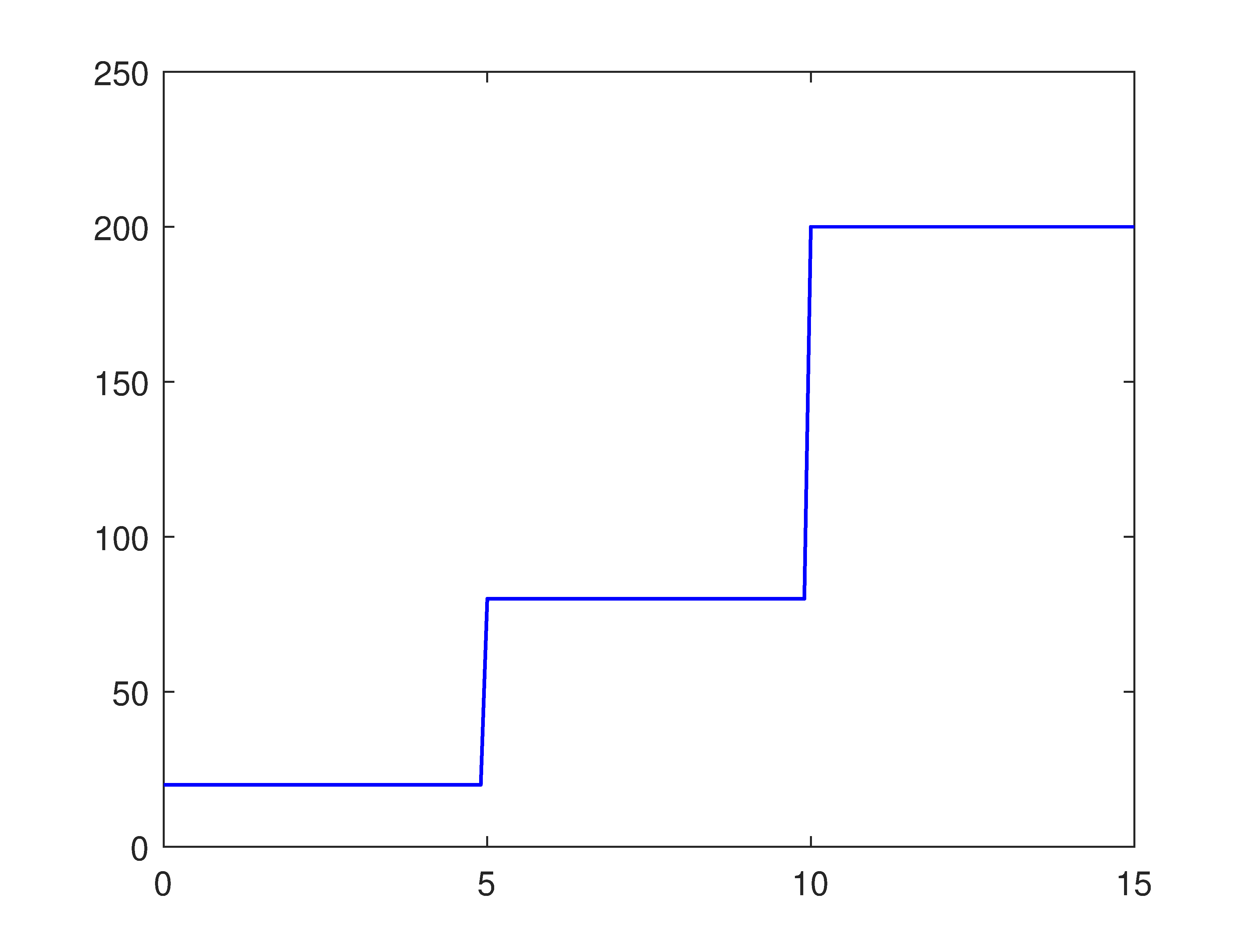}
        \caption{}
        \label{fig:OriSingle}
    \end{subfigure}
   \hfill
   \begin{subfigure}[b]{0.24\linewidth}
       \centering
       \includegraphics[width=\linewidth]{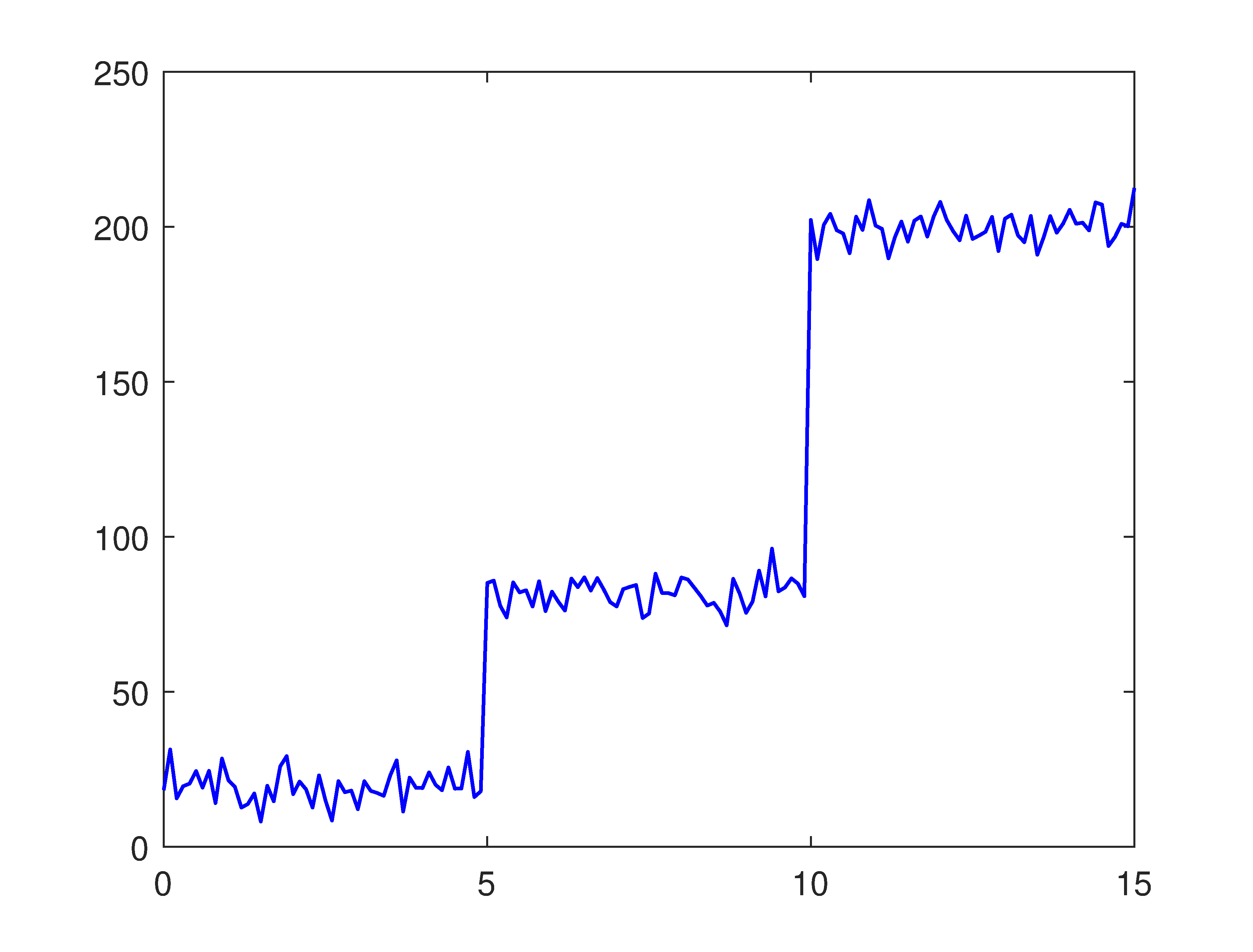}
       \caption{}
       \label{fig:Gaussian}
   \end{subfigure}
    \hfill
    \begin{subfigure}[b]{0.24\linewidth}
        \centering
        \includegraphics[width=\linewidth]{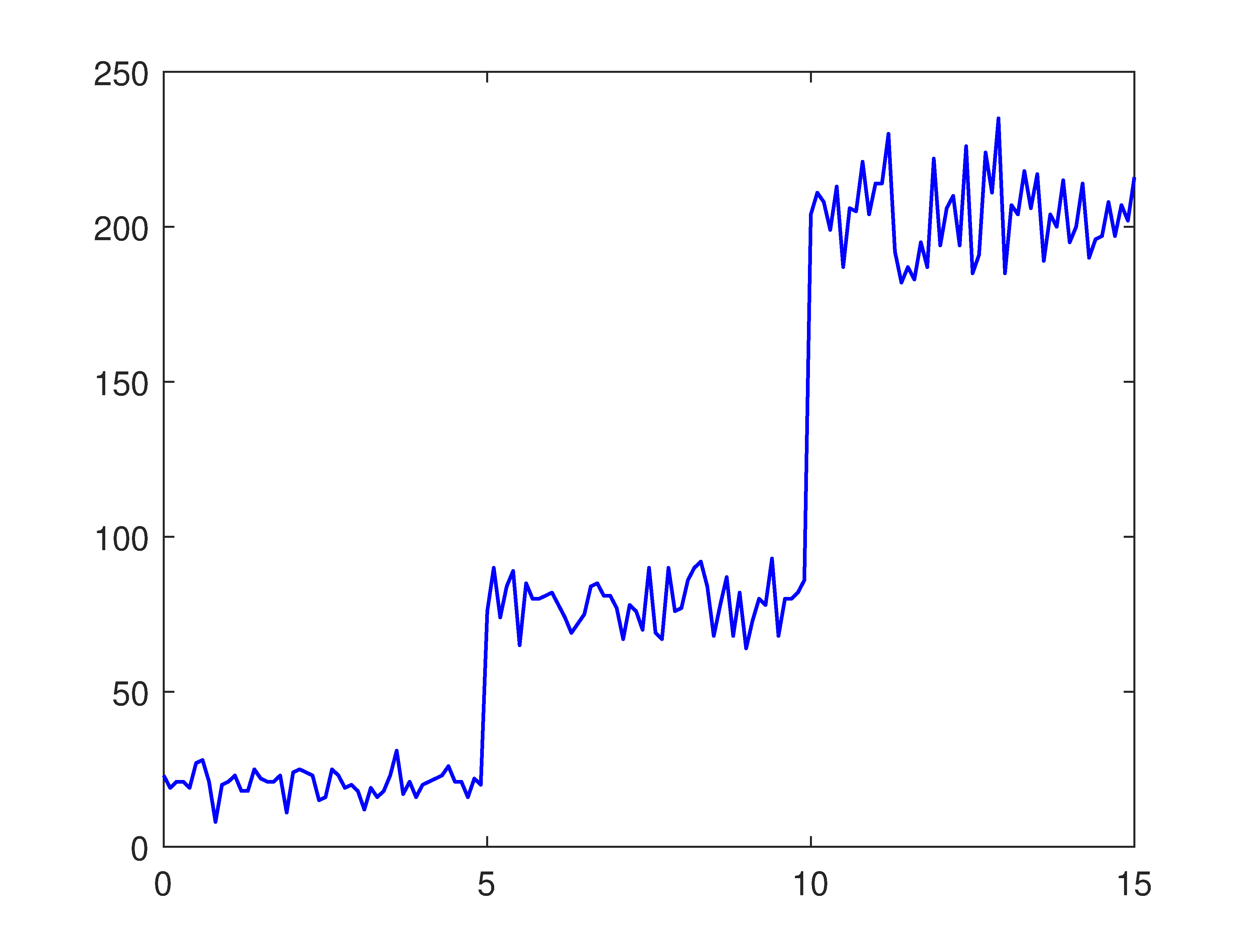}
        \caption{}
        \label{fig:Poisson}
    \end{subfigure}
    \hfill
    \begin{subfigure}[b]{0.24\linewidth}
        \centering
        \includegraphics[width=\linewidth]{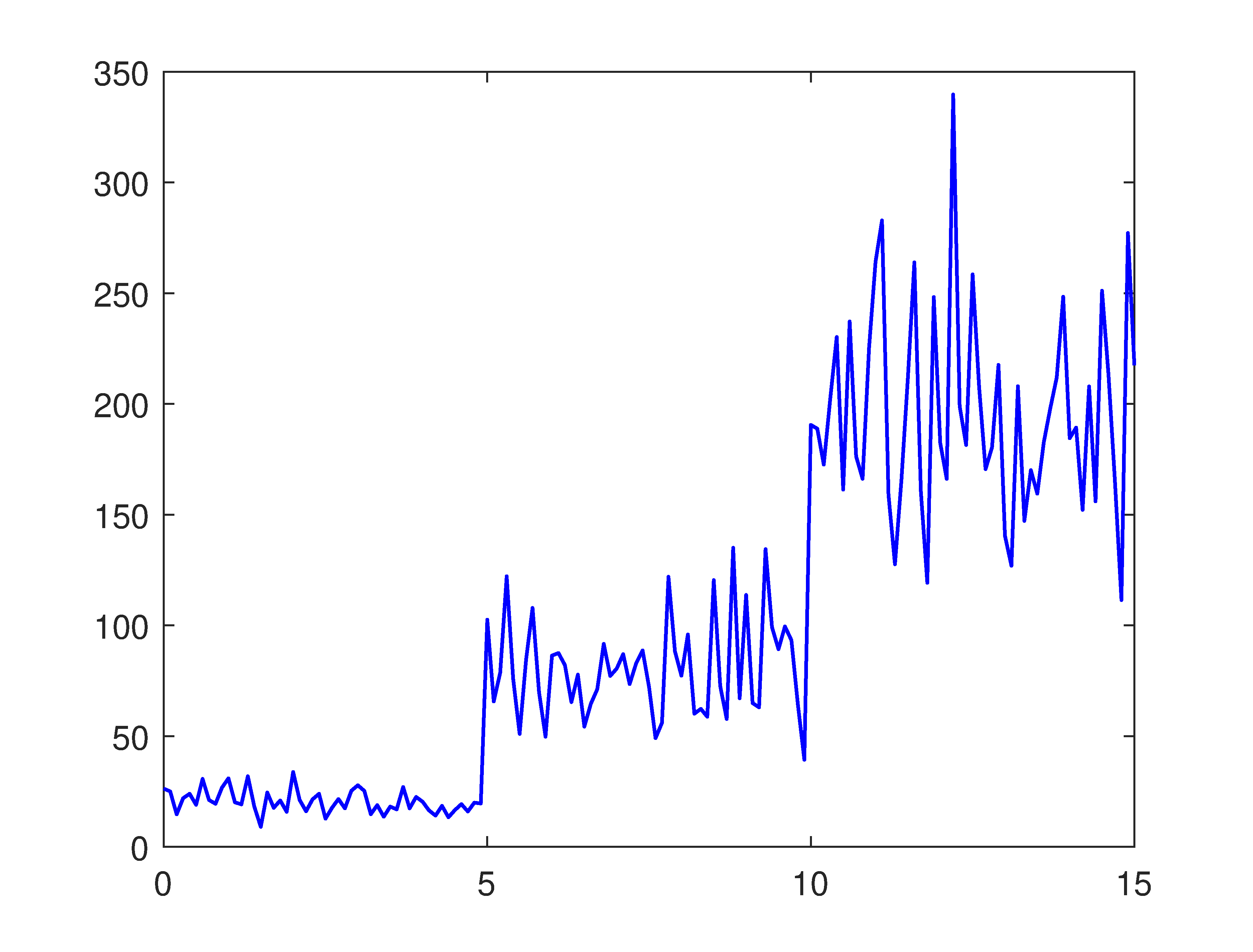}
        \caption{}
        \label{fig:MultiGamma}
    \end{subfigure}
    \caption{Effects of different noise on 1D signals. (a) Original 1D signal; (b) Signal with added Gaussian noise; (c) Signal with added Poisson noise; (d) Signal with multiplicative Gamma noise ($L=10$)}
    \label{fig:noise}
\end{figure}

With the gradual advancement of image segmentation techniques, incorporating appropriate noise models has become increasingly important, particularly for applications such as PET and SAR imaging. For data with a low signal-to-noise ratio (SNR), considering the statistical characteristics of corresponding noise in the segmentation process is crucial \cite{sawatzky2013variational}. Unlike additive Gaussian noise, both Poisson noise and multiplicative noise are signal-dependent. As illustrated in Fig.~\ref{fig:noise}, higher signal intensity leads to a greater influence of Poisson and multiplicative noise.

For any $x \in \Omega$, let $f(x)$ be an observed image degraded by Poisson noise, and let $g(x)$ denote the corresponding clean image. Then $f(x)$ follows a Poisson distribution with both mean and variance equal to $g(x)$, i.e.,
\begin{equation*}
  p_{f(x)}(k)=\frac{g(x)^k e^{-g(x)}}{k!} .
\end{equation*}
Using maximum a posteriori (MAP) estimation and total variation (TV) regularization, Le et al. \cite{le2007variational} proposed a variational model for Poisson denoising, formulated as
\begin{equation}
\label{eq:I-div}
  E(g)=\int_{\Omega} (g - f\log g)\,dx+\gamma \int_{\Omega}|\nabla g|\,dx ,
\end{equation}
where $g\in BV(\Omega)$ and $\log g \in L^1(\Omega)$. The first term in \eqref{eq:I-div} is also known as the I-divergence (or Kullback-Leibler (KL) divergence) between $g(x)$ and $f(x)$. Unlike the $L^2$-norm used as a fitting term, the I-divergence is highly nonlinear, which increases the computational complexity of solving the associated minimization problem \cite{sawatzky2013variational}. Therefore, it is important to develop efficient algorithms.

Consider the multiplicative Gamma noise model, let $g(x)$ denote the clean image defined on $\Omega$. The observed image $f(x)$ is corrupted by multiplicative Gamma noise if:
\begin{equation*}
  f(x)=g(x)\eta(x),\ \forall x\in \Omega ,
\end{equation*}
where the random variable $\eta(x)$ follows a Gamma distribution. The probability density function is given by
\begin{equation*}
  p_{\eta(x)}(k;\theta,L)=\frac{1}{\theta^L\Gamma(L)}k^{L-1}e^{-\frac{k}{\theta}},\ k>0 ,
\end{equation*}
where $\Gamma(\cdot)$ denotes the standard Gamma function, $\theta$ and $L$ are the scale and shape parameters, respectively. The mean and variance of $\eta$ are given by $\theta L$ and $\theta^2 L$, respectively. Typically, it is assumed that the mean of $\eta$ is equal to 1, that is, $\theta L=1$. Consequently, the variance of $\eta$ is $1/L$.

Although the functional in \eqref{eq:I-div} was originally established as a denoising model for Poisson noise, Steidl et al. \cite{steidl2010removing} theoretically and experimentally demonstrated that it is also effective for multiplicative Gamma noise.

To address the signal-dependent nature of noise, we incorporate a gray level indicator operator as a weight into the TV regularization term. This enables the model to perform adaptive denoising across regions of different intensity levels, preserving boundary details, and thereby improving segmentation accuracy. The denoising terms of our segmentation model are formulated as
\begin{equation}
\label{eq:denoising terms}
    E_{d}=\gamma \int_{\Omega}(g(x)-f(x)\log g(x)) \,dx + \nu \int_{\Omega} \alpha(x) |\nabla g(x)|dx ,
\end{equation}
where $\gamma$ and $\nu >0$ are equilibrium parameters. The function $\alpha(x)>0$ represents the gray level indicator \cite{zhang2021adaptive}. Specifically, $\alpha$ is represented as follows:
\begin{equation}
\label{eq:alpha}
    \alpha(x)=\left( \frac{G_{\sigma}*f}{M} \right)^p ,
\end{equation}
where $p>0$ is a parameter, $M=\max(G_{\sigma}*f)$, $G_{\sigma}$ is the Gaussian kernel.

By incorporating the gray level indicator as an adaptive weight into the denoising terms, this model can adaptively control the diffusion rate based on the intensity values of images. In regions with high intensity values, where noise induces more severe degradation, the diffusion coefficient increases, leading to faster diffusion. In contrast, in regions with low intensity values, where noise has less impact, the diffusion coefficient decreases. This results in slower diffusion, thus preventing over-smoothing.

\subsubsection{Robust Segmentation Model}
In order to improve the model's effectiveness in segmenting images with intensity inhomogeneity and noisy images, we propose a segmentation model that incorporates denoising terms.

Let $f:\Omega \rightarrow \mathbb{R}^d$ denote the observed image. The image domain $\Omega$ is divided into $n$ disjoint regions.
Each subregion $\Omega_i$ is represented by its characteristic function $u_i(x)$, defined as
\begin{equation}\label{eq:CharFunction}
  u_i(x)=
  \begin{cases}
    1, & x\in\Omega_i , \\
    0, & \text{otherwise} .
  \end{cases}
\end{equation}
Let $\mathbf{u(x)}:=\{u_1(x),u_2(x),\dots,u_n(x) \}$, where $\mathbf{u(x)} \in \Lambda_1$, with $\Lambda_1$ defined as
\begin{equation*}
    \Lambda_1= \left\{\mathbf{u(x)} \in BV(\Omega,\  \mathbb{R}^n) \mid u_i(x)\in \{ 0,1 \} , \ \sum_{i=1}^{n}u_i(x)=1 \right\} .
\end{equation*}
As illustrated in Sect.~\ref{subsubsec:noise}, the denoising terms \eqref{eq:I-div} are capable of removing both Poisson noise and multiplicative Gamma noise. To enhance segmentation accuracy for images with high noise, we propose an $n$-phase segmentation model that incorporates these denoising terms and the bias field. The energy functional of the model is formulated as
\begin{equation}\label{eq:PoissonICTM}
\begin{split}
   \min_{\mathbf{c},b,g,\mathbf{u}} E(\mathbf{c},b,g,\mathbf{u})
   = &
   \underbrace{\sum_{i=1}^{n} \lambda_i\int_{\Omega}\int_{\Omega} u_i(x)G_\rho(y-x)(g(x)-b(y)c_i)^2\,dydx}_{\text{fitting term}}  \\
   & + \underbrace{\mu \sum_{i=1}^{n} \sum_{j=1,j \neq i}^{n} \sqrt{\tfrac{\pi}{\tau}} \int_{\Omega} u_i(x) (G_{\tau} * u_j)(x) \,dx}_{\text{regularization term}} \\
   & + \underbrace{\gamma \int_{\Omega}(g(x)-f(x)\log g(x)) \,dx + \nu \int_{\Omega} \alpha(x)|\nabla g(x)|dx}_{\text{denoising terms}} , \\
\end{split}
\end{equation}
where $\lambda_i, \mu, \gamma, \nu$ are positive constants and $G_\rho, G_\tau$ are two-dimensional Gaussian kernels with standard deviations $\rho$ and $\tau$, respectively. $b$ represents the bias field and $g \in BV(\Omega)$ is the denoised image. $\mathbf{c}=\{ c_1, c_2, \dots, c_n \} \in \mathbb{R}^n $ and $\mathbf{u} \in \Lambda_1$.

The fitting term of the denoised image $g$ uses the bias field to correct the image with intensity inhomogeneity. The regularization term approximates the contour length based on the characteristic functions of the regions. Minimizing this term leads to smoother contours and enhances the model's robustness to noise. The denoising terms specifically address Poisson noise and multiplicative Gamma noise. By integrating these denoising terms into the segmentation model, a smoother image $g$ can be obtained. Segmentation using the smoother image can reduce the impact of high noise and lead to more accurate segmentation results.

 By incorporating bias correction and adaptive denoising terms into our model, we enhance its robustness to noise and intensity inhomogeneity in images. In addition, this model simultaneously performs image segmentation and denoising, thereby functioning as a multi-task model.
 Furthermore, integrating the denoising terms into the segmentation process allows adaptive adjustment of the smoothed image based on intermediate results. By tuning the smoothing parameter, our model can effectively segment images with severe noise. The model reduces to the LIC model when the denoising parameters are set to 0, and to the CV model with added denoising terms when $b(y)=1$.

\section{Numerical Algorithms}
\label{sec:algorithms}
In this section, we present the algorithmic framework for solving our model, which is based on the ICTM and RMSAV methods. We also demonstrate the energy decay property for each subproblem. The model \eqref{eq:PoissonICTM} is decomposed into four subproblems, which are solved using the alternating iterative method. The subproblems are formulated as follows:
\begin{align*}
     & \mathbf{c}^{k+1} = \argmin_{\mathbf{c}\in \mathbb{R}^n} E(\mathbf{c},b^k,g^k,\mathbf{u}^k) , \\
     & b^{k+1} = \argmin_{b\in L^2(\Omega)} E(\mathbf{c}^{k+1},b,g^k,\mathbf{u}^k) , 
\\
     & g^{k+1} = \argmin_{g\in BV(\Omega)} E(\mathbf{c}^{k+1},b^{k+1},g,\mathbf{u}^k) , 
\\
     & \mathbf{u}^{k+1} = \argmin_{\mathbf{u}\in\Lambda_1} E(\mathbf{c}^{k+1},b^{k+1},g^{k+1},\mathbf{u}) . 
\end{align*}

\subsection{Minimizer of Subproblem with Respect to $\mathbf{c}$}
For fixed $b^k$, $g^k$, and $\mathbf{u}^k$, the subproblem with respect to $\mathbf{c}$ is formulated as
\begin{equation*} 
  E_\mathbf{c}(\mathbf{c})=\sum_{i=1}^{n} \lambda_i \int_{\Omega}\int_{\Omega} u_i^k(x)G_\rho(y-x)(g^k(x)-b^k(y)c_i)^2\,dydx .
\end{equation*}
The minimizer of the subproblem with respect to $\mathbf{c}=(c_1,\dots,c_n)$ satisfies the following optimality condition:
\begin{equation*}
  2\lambda_i\int_{\Omega}\int_{\Omega}u_i^k(x)G_\rho(y-x)(g^k(x)-b^k(y)c_i)b^k(y)\,dydx=0 .
\end{equation*}
Consequently, we can obtain
\begin{equation}\label{eq:c}
  c_i^{k+1}=\frac{\int_{\Omega} u_i^k(x)g^k(x) (G_\rho * b^k)(x)\,dx}{\int_{\Omega} u_i^k(x) (G_\rho * (b^k)^2)(x)\,dx},\, i=1,\dots,n .
\end{equation}

\subsection{Minimizer of subproblem with respect to $b$}
For fixed $\mathbf{c}^{k+1}$, $g^k$, and $\mathbf{u}^k$, the subproblem with respect to $b$ is formulated as
\begin{equation*} 
  E_b(b)=\sum_{i=1}^{n} \lambda_i \int_{\Omega}\int_{\Omega} u_i^k(x)G_\rho(y-x)(g^k(x)-b(y)c_i^{k+1})^2\,dydx .
\end{equation*}
The corresponding Euler--Lagrange equation is as follows:
\begin{equation*}
  \sum_{i=1}^{n}2\lambda\int_{\Omega}u_i^k(x)G_\rho(y-x)(g^k(x)-b(y)(c^{k+1}_i))c^{k+1}_i\,dx=0 .
\end{equation*}
The minimizer of $E_b$ is given by
\begin{equation}\label{eq:b}
  b^{k+1}(y)=\frac{\sum_{i=1}^{n}\lambda_i c_i^{k+1} (G_\rho*(u^k_ig^k))(y)}{\sum_{i=1}^{n}\lambda_i (c^{k+1}_i)^2 (G_\rho*u^k_i)(y)} .
\end{equation}

\subsection{Minimizer of Subproblem with Respect to $g$}
In order to compute the gradient flow efficiently and stably, the scalar auxiliary variable (SAV) algorithm was proposed in \cite{shen2018scalar}. The SAV algorithm has received a lot of attention because of the unconditional energy stability of the numerical schemes and the simplicity of the implementation. In \cite{liu2023efficient}, a relaxed version of the modified SAV (RMSAV) method was proposed for energy minimization problems without linear terms. To solve the minimization subproblem efficiently, we employ the RMSAV algorithm to obtain the minimizer. For fixed $\mathbf{c}^{k+1}$, $b^{k+1}$, and $\mathbf{u}^k$, the subproblem with respect to $g$ is formulated as
\begin{equation}\label{eq:E-g}
\begin{split}
    E_g(g)= &  \sum_{i=1}^{n}\lambda_i\int_{\Omega}\int_{\Omega}u^k(x)G_\rho(y-x)(g(x)-b^{k+1}(y)c_i^{k+1})^2\,dydx \\
     & +\gamma\int_{\Omega}(g(x)-f(x)\log g(x))\,dx+\nu\int_{\Omega} \alpha(x)|\nabla g(x)|\,dx .
\end{split}
\end{equation}
The $L^2$-gradient flow of the functional \eqref{eq:E-g} is
\begin{equation*}
\left\{
\begin{aligned}
   &  \frac{\partial g}{\partial t}=\sum_{i=1}^{n}2\lambda_i \left( c^{k+1}_i(G_\rho *b^{k+1})u^k_i-(\mathbf{1}_G)u^k_ig\right)+\gamma\frac{f-g}{g}+\nu\text{div}\left(\alpha(x) \frac{\nabla g}{|\nabla g|} \right) , \\
   & \frac{\partial g}{\partial \overrightarrow{\mathbf{n}}} =0,\ \text{on} \  \partial \Omega ,
\end{aligned}
\right.
\end{equation*}
where $\mathbf{1}_G(x)=\int_{\Omega} G(y-x) \,dy$. \\
We denote
\begin{equation} \label{eq:F'}
     F'(g) = \sum_{i=1}^{n}2\lambda_i u^k_i\left((\mathbf{1}_G)g-c^{k+1}_iG_\rho *b^{k+1}\right) -\gamma\frac{f-g}{g}-\nu\text{div}\left( \alpha(x)\frac{\nabla g}{|\nabla g|}\right)  .
\end{equation}
We reformulate the above system of equations in the following form:
\begin{equation*}
\left\{
\begin{aligned}
& \frac{\partial g}{\partial t} + \mathcal{L}g +F'(g)-\mathcal{L}g=0 , \\
& \frac{\partial g}{\partial \overrightarrow{\mathbf{n}}} =0,\ \text{on} \  \partial \Omega ,
\end{aligned}
\right.
\end{equation*}
where $\mathcal{L}$ is a non-negative linear operator. In this paper, we take $\mathcal{L}g=\Delta^2g=g_{xxxx}+2g_{xxyy}+g_{yyyy}$.

The energy functional $E_g(g)$ has a lower bound. Thus, there exists a constant $C_0 > 0$ such that $E_g(g) + C_0 > 0$. Let the scalar auxiliary variable be $z=\sqrt{E_g(g)+C_0}$. The gradient flow can be rewritten as
\begin{equation} \label{eq:SAV}
\left\{
\begin{aligned}
   &  \frac{\partial g}{\partial t} +\mathcal{L}g + \frac{F'(g)}{\sqrt{E_g(g)+C_0}}z-\mathcal{L}g =0 , \\
   & \frac{\partial z}{\partial t} = \frac{1}{2\sqrt{E_g(g)+C_0}}\int_{\Omega}F'(g)\frac{\partial g}{\partial t} \,dx .
\end{aligned}
\right.
\end{equation}
Jiang et al. \cite{jiang2022improving} proposed a relaxed SAV (RSAV) method, which ensures that the numerical results satisfy the original energy law and guarantees energy dissipation.
Based on the RSAV method and with time step $\Delta t$, the first-order scheme of \eqref{eq:SAV} is given as follows:
\begin{subequations}
\begin{align}
   &  \frac{g^{k,j+1}-g^{k,j}}{\Delta t} + \mathcal{L}g^{k,j+1} + \frac{\tilde{z}^{k,j+1}}{\sqrt{E_g(g^{k,j})+C_0}}F'(g^{k,j}) - \mathcal{L}g^{k,j} = 0 \label{eq:SAV-g} , \\
   & \frac{\tilde{z}^{k,j+1}-z^{k,j}}{\Delta t} = \frac{1}{2\sqrt{E_g(g^{k,j})+C_0}}\int_{\Omega}F'(g^{k,j})\frac{g^{k,j+1}-g^{k,j}}{\Delta t} \,dx \label{eq:SAV-z} , \\
   & z^{k,j+1}=\xi \tilde{z}^{k,j+1} + (1-\xi)\sqrt{E_g(g^{k,j+1})+C_0} , \label{eq:SAV-rez}
\end{align}
\end{subequations}
where $k$ and $j$ denote the outer and inner iteration indices, respectively. Let $g^k$ denote the estimate of the outer iteration at step $k$, then set $g^{k,1}=g^k$, i.e., $g^k$ is the initial value of the inner iteration. When $g^{k,j+1}$ satisfies the stopping condition, update the outer iteration by setting $g^{k+1}=g^{k,j+1}$. For notational simplicity, throughout this subsection, we omit the outer iteration index $k$ and simply write $g^j$, $\tilde{z}^j$, and $z^j$ for $g^{k,j}$, $\tilde{z}^{k,j}$, and $z^{k,j}$, respectively.

Here, $\xi$ denotes a scalar relaxation parameter selected from the set
\begin{equation*}
    \Lambda_{\xi}=\left\{ \xi \in [0,1]:(z^{j+1})^2-(\tilde{z}^{j+1})^2-(\tilde{z}^{j+1}-z^j)^2\leq\eta\mathcal{G}(g^{j+1},g^j)\right\} ,
\end{equation*}
where $\mathcal{G}(g^{j+1},g^j)=\frac{1}{\Delta t}\left( \left( g^{j+1}-g^j \right), A\left( g^{j+1}-g^j \right) \right)$, with $A=I+\Delta t \mathcal{L}$ and $\eta\in [0,1]$ is a parameter. In our experiments, we set $\eta=0.99$.

By substituting \eqref{eq:SAV-g} into \eqref{eq:SAV-z}, we obtain
\begin{align*}
  & \tilde{z}^{j+1}=\frac{z^j}{1+\frac{\Delta t}{2}(m^j,\hat{m}^j)} ,  \\
  & g^{j+1}=g^{j}-\Delta t \hat{m}^j \tilde{z}^{j+1} ,
\end{align*}
where $m^j=\frac{F'(g^j)}{\sqrt{E_g(g^{j})+C_0}}$ and $\hat{m}^j=A^{-1}m^j$.
As $\xi \in\Lambda_{\xi}$, it follows that $\xi$ satisfies
\begin{equation*}
    q\xi^2+2d\xi+h\leq 0 ,
\end{equation*}
where
\begin{align*}
   & q=\left( \tilde{z}^{j+1}-\sqrt{E_g(g^{j+1})+C_0} \right)^2 , \\
   & d=2\left( \tilde{z}^{j+1}-\sqrt{E_g(g^{j+1})+C_0} \right)\sqrt{E_g(g^{j+1})+C_0} , \\
   & h=E_g(g^{j+1})+C_0-(\tilde{z}^{j+1})^2-(\tilde{z}^{j+1}-z^j)^2-\eta\mathcal{G}(g^{j+1},g^j) .
\end{align*}
In particular, following \cite{jiang2022improving}, we choose
\begin{equation}
    \xi = \max \left\{ 0,\ \frac{-d-\sqrt{d^2-4qh}}{2q} \right\} .
\end{equation}
Taking the inner product of \eqref{eq:SAV-g} with $g^{j+1}-g^j$, multiplying \eqref{eq:SAV-z} by $2\tilde{z}^{j+1}$, and then combining the two, we obtain
\begin{equation} \label{eq:SAV-G}
    \mathcal{G}(g^{j+1},g^j)=\frac{1}{\Delta t}\left( g^{j+1}-g^j, A(g^{j+1}-g^j)\right)=-2(\tilde{z}^{j+1})^2+2\tilde{z}^{j+1}z^j .
\end{equation}
Therefore, when $\xi$ is nonzero, it can be expressed as
\begin{equation} \label{eq:SAV-xi}
\begin{aligned}
      \xi  = & \frac{-d-\sqrt{d^2-4qh}}{2q} \\
       =&-\frac{\sqrt{(\sqrt{E_g(g^{j+1})+C_0}-\tilde{z}^{j+1})^2((1-\eta)((\tilde{z}^{j+1})^2+(\tilde{z}^{j+1}-z^j)^2)+\eta(z^j)^2}}{(\sqrt{E_g(g^{j+1})+C_0}-\tilde{z}^{j+1})^2}   \\
       & + \frac{\sqrt{E_g(g^{j+1})+C_0}}{\sqrt{E_g(g^{j+1})+C_0}-\tilde{z}^{j+1}}
\end{aligned}
\end{equation}


\begin{theorem}
  For any $\Delta t$, the scheme \eqref{eq:SAV-g}--\eqref{eq:SAV-rez} is unconditionally stable in the sense that
  \begin{equation*}
    ({z}^{j+1})^2-(z^{j})^2=-(1-\eta)\mathcal{G}(g^{j+1},g^j)\leq 0 .
  \end{equation*}
\end{theorem}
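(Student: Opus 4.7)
The plan is to show the identity by combining the defining relation of $\xi$ with the previously derived equation (\ref{eq:SAV-G}), and then to conclude the nonpositivity from the fact that $A = I + \Delta t \mathcal{L}$ is a positive operator. The statement has two components: an equality $(z^{j+1})^2 - (z^j)^2 = -(1-\eta)\mathcal{G}(g^{j+1},g^j)$, and the sign claim $\mathcal{G}(g^{j+1},g^j) \geq 0$.

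First I would exploit the construction of $\xi$. Recall $\xi$ is chosen as the largest admissible value on the boundary of $\Lambda_\xi$, so by definition (and by the explicit formula in (\ref{eq:SAV-xi}), which is precisely the root of the quadratic $q\xi^2 + 2d\xi + h = 0$), one actually has equality:
\begin{equation*}
(z^{j+1})^2 - (\tilde{z}^{j+1})^2 - (\tilde{z}^{j+1} - z^j)^2 = \eta\,\mathcal{G}(g^{j+1},g^j).
\end{equation*}
In the degenerate case $\xi = 0$, the relaxation update gives $z^{j+1} = \sqrt{E_g(g^{j+1}) + C_0}$ and a direct check via the quadratic shows the identity still holds in equality form (with a sign/sign-free argument using $h \leq 0$). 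This equality is the key algebraic handle.

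Next I would rearrange the above as
\begin{equation*}
(z^{j+1})^2 = (z^j)^2 + 2(\tilde{z}^{j+1})^2 - 2\tilde{z}^{j+1} z^j + \eta\,\mathcal{G}(g^{j+1},g^j),
\end{equation*}
and then invoke identity (\ref{eq:SAV-G}), namely $\mathcal{G}(g^{j+1},g^j) = -2(\tilde{z}^{j+1})^2 + 2\tilde{z}^{j+1} z^j$, to substitute out the $\tilde{z}^{j+1}$ terms. This collapses the right-hand side to $(z^j)^2 - \mathcal{G}(g^{j+1},g^j) + \eta\,\mathcal{G}(g^{j+1},g^j)$, giving exactly
\begin{equation*}
(z^{j+1})^2 - (z^j)^2 = -(1-\eta)\,\mathcal{G}(g^{j+1},g^j).
\end{equation*}

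Finally, to conclude nonpositivity, it remains to verify $\mathcal{G}(g^{j+1},g^j) \geq 0$. Since $\mathcal{L} = \Delta^2$ is non-negative (self-adjoint with nonnegative spectrum under the chosen boundary conditions), the operator $A = I + \Delta t\,\mathcal{L}$ is positive semidefinite, hence $(g^{j+1}-g^j,\, A(g^{j+1}-g^j)) \geq 0$, so $\mathcal{G}(g^{j+1},g^j) \geq 0$. Combined with $\eta \in [0,1]$, this yields $(z^{j+1})^2 - (z^j)^2 \leq 0$.

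The main obstacle I anticipate is justifying the passage from the relaxation inequality in the definition of $\Lambda_\xi$ to an equality. The argument must handle the two branches in $\xi = \max\{0,\, (-d - \sqrt{d^2-4qh})/(2q)\}$: in the nondegenerate branch, $\xi$ is literally a root of the boundary quadratic, so equality is immediate; in the degenerate branch $\xi = 0$, one needs $h \leq 0$, which should follow from $\tilde z^{j+1}$ lying on the correct side of the admissible interval, and then a direct substitution recovers the same identity. Once this point is clean, everything else is algebraic and the sign analysis is straightforward.
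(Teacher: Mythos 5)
Your core route is the same as the paper's: combine the identity \eqref{eq:SAV-G} with the constraint defining $\Lambda_\xi$, then use positivity of $A=I+\Delta t\,\mathcal{L}$ to get $\mathcal{G}(g^{j+1},g^j)\ge 0$. That part is sound, and the algebra you describe (expanding $(\tilde z^{j+1}-z^j)^2$ and substituting $\mathcal{G}=-2(\tilde z^{j+1})^2+2\tilde z^{j+1}z^j$) correctly yields the dissipation bound $(z^{j+1})^2-(z^j)^2\le -(1-\eta)\mathcal{G}(g^{j+1},g^j)\le 0$, which is exactly what the paper's proof establishes and all that is needed for unconditional stability.

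The gap is in your attempt to upgrade the constraint to an \emph{equality}. In the nondegenerate branch, $\xi$ is indeed a root of the boundary quadratic, so equality holds and your derivation of $(z^{j+1})^2-(z^j)^2=-(1-\eta)\mathcal{G}$ goes through. But in the degenerate branch $\xi=0$ one has $z^{j+1}=\sqrt{E_g(g^{j+1})+C_0}$ and the constraint reads $h\le 0$; if $h<0$ strictly, the same substitution gives $(z^{j+1})^2-(z^j)^2=-(1-\eta)\mathcal{G}+h$, which is \emph{strictly less} than $-(1-\eta)\mathcal{G}$. So "a direct substitution recovers the same identity" is false there: the stated equality fails whenever the clamped branch is active with $h<0$, and only the inequality survives. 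You correctly identified this as the main obstacle but then dismissed it incorrectly. Note that the paper's own proof sidesteps the issue entirely by only ever using $\xi\in\Lambda_\xi$ as an inequality and concluding with "$\le$"; the "$=$" in the theorem statement is not actually proved there either, and should be read as "$\le$". If you restrict your argument to the inequality form throughout, your proof is complete and matches the paper's.
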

\begin{proof}
From \eqref{eq:SAV-G}, we obtain
\begin{equation*}
    (\tilde{z}^{j+1}-z^j)^2-(z^j)^2=-\mathcal{G}(g^{j+1},g^j)-(\tilde{z}^{j+1})^2 .
\end{equation*}
It follows that
\begin{equation*}
    (z^{j+1})^2-(z^j)^2=-\mathcal{G}(g^{j+1},g^j)-(\tilde{z}^{j+1}-z^j)^2-(\tilde{z}^{j+1})^2+(z^{j+1})^2 .
\end{equation*}
Because $\xi \in \Lambda_{\xi}$, we can obtain
\begin{equation*}
\begin{aligned}
     (z^{j+1})^2-(z^j)^2 & =-\mathcal{G}(g^{j+1},g^j)-(\tilde{z}^{j+1}-z^j)^2-(\tilde{z}^{j+1})^2+(z^{j+1})^2   \\
     & \leq -(1-\eta)\mathcal{G}(g^{j+1},g^j) \leq 0 .
\end{aligned}
\end{equation*}

\end{proof}


\subsection{Minimizer of Subproblem with Respect to $\mathbf{u}$}
 To efficiently solve $u$, we employ the ICTM \cite{wang2022iterative}. For fixed $\mathbf{c}^{k+1}$, $b^{k+1}$, and $g^{k+1}$, the subproblem with respect to $\mathbf{u}$ is formulated as
\begin{equation}\label{eq:E-u}
    E_\mathbf{u}(\mathbf{u}) =\sum_{i=1}^{n} \lambda_i\int_{\Omega}u_ie_i\,dx + \mu \sum_{i=1}^{n} \sum_{j=1,j \neq i}^{n} \sqrt{\frac{\pi}{\tau}} \int_{\Omega} u_i (G_{\tau} * u_j) \,dx ,
\end{equation}
where $\mathbf{u}\in\Lambda_1$, and $e_i=\int_{\Omega}G_\rho(y-x)(g^{k+1}(x)-b^{k+1}(y)c^{k+1}_i)^2\,dy$.
Due to the non-convexity of the set $\Lambda_1$, the subproblem is difficult to solve numerically. To address this, we adopt the relaxation strategy proposed in \cite{wang2022iterative}, and the relaxed formulation is given by
\begin{equation}\label{eq:Eu-relaxed}
  \min_{\mathbf{u}\in\Lambda_2} E_\mathbf{u}(\mathbf{u}) =\sum_{i=1}^{n} \lambda_i\int_{\Omega}u_ie_i\,dx + \mu \sum_{i=1}^{n} \sum_{j=1,j \neq i}^{n} \sqrt{\frac{\pi}{\tau}} \int_{\Omega} u_i (G_{\tau} * u_j) \,dx ,
\end{equation}
where $\Lambda_2= \left\{\mathbf{u}(x) \in BV(\Omega,\mathbb{R}^n) \mid u_i(x)\in [0,1] ,\ \sum_{i=1}^{n}u_i(x)=1 \right\}$ is the convex hull of $\Lambda_1$.

\begin{theorem}
  Let $\hat{\mathbf{u}} \in \Lambda_2$ be a minimizer of the relaxed problem \eqref{eq:Eu-relaxed}. Then $\hat{\mathbf{u}} \in \Lambda_1$, and $\hat{\mathbf{u}}$ is also a minimizer of the original problem \eqref{eq:E-u}.
\end{theorem}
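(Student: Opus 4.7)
The strategy is to show that $E_\mathbf{u}$ is \emph{strictly} concave on $\Lambda_2$ and that the extreme points of $\Lambda_2$ coincide with $\Lambda_1$; these two facts together force any minimizer to be binary. First I would split $E_\mathbf{u}$ into the affine fitting term $\sum_i \lambda_i \int u_i e_i\,dx$, which is neutral for concavity, and the quadratic regularizer $R(\mathbf{u}) = \mu\sqrt{\pi/\tau}\sum_{i\neq j}\int u_i\,(G_\tau * u_j)\,dx$. A direct expansion shows that for $\mathbf{u}^1, \mathbf{u}^2 \in \Lambda_2$ with $\delta u_i := u_i^1 - u_i^2$ and $\theta\in(0,1)$, the concavity defect of $E_\mathbf{u}$ reduces to $-\theta(1-\theta)\mu\sqrt{\pi/\tau}\sum_{i\neq j}\int \delta u_i\,(G_\tau * \delta u_j)\,dx$.

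The key simplification comes from the sum-to-one constraint: since $\sum_i \delta u_i \equiv 0$, the identity $\sum_{i,j}\delta u_i\,(G_\tau*\delta u_j) = \bigl(\sum_i \delta u_i\bigr)\bigl(G_\tau * \sum_j \delta u_j\bigr) = 0$ yields $\sum_{i\neq j}\int \delta u_i\,(G_\tau*\delta u_j)\,dx = -\sum_i \int \delta u_i\,(G_\tau * \delta u_i)\,dx$. Because the Gaussian kernel has a strictly positive Fourier transform, each $\int \delta u_i\,(G_\tau*\delta u_i)\,dx$ is non-negative and vanishes only for $\delta u_i \equiv 0$. Hence whenever $\mathbf{u}^1 \neq \mathbf{u}^2$ the concavity defect is strictly positive, giving strict concavity of $E_\mathbf{u}$ on $\Lambda_2$.

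Next I would verify that the extreme points of $\Lambda_2$ are exactly the elements of $\Lambda_1$. If some coordinate $u_i(x)$ takes a value in $(0,1)$ on a set of positive measure, then using $\sum_j u_j(x)=1$ another coordinate must also be fractional there, and one can redistribute a small amount of mass between those two indices to produce two distinct elements of $\Lambda_2$ whose mean is $\mathbf{u}$, contradicting extremality. Combining everything: if $\hat{\mathbf{u}} \in \Lambda_2$ solves the relaxed problem but lies outside $\Lambda_1$, then it is not an extreme point and can be written as $\theta \mathbf{v} + (1-\theta)\mathbf{w}$ with $\mathbf{v}\neq \mathbf{w}$ in $\Lambda_2$. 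Strict concavity gives $E_\mathbf{u}(\hat{\mathbf{u}}) > \theta E_\mathbf{u}(\mathbf{v}) + (1-\theta)E_\mathbf{u}(\mathbf{w}) \geq E_\mathbf{u}(\hat{\mathbf{u}})$, a contradiction. Therefore $\hat{\mathbf{u}} \in \Lambda_1$, and since $\Lambda_1 \subset \Lambda_2$, $\hat{\mathbf{u}}$ is also a minimizer of the original problem \eqref{eq:E-u}.

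The main obstacle I anticipate is justifying strict positive-definiteness of the quadratic form $\delta u \mapsto \int \delta u\,(G_\tau*\delta u)\,dx$ on the bounded domain $\Omega$ under the boundary convention used in practice (periodic, zero-padding, or reflective extension of $u$). On $\mathbb{R}^2$ the strict positivity follows at once from Plancherel and the positivity of $\widehat{G_\tau}$, but on $\Omega$ one must briefly check that the chosen extension preserves strict positivity for nonzero $\delta u_i$. Once this is granted, the rest is a clean concavity-plus-extreme-points argument.
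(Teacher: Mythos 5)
Your proof is correct, and it rests on the same underlying mechanism as the paper's — concavity of the regularizer on the convex relaxation forces minimizers to the binary vertices — but the execution is genuinely different and worth contrasting. The paper argues locally: assuming $\hat{\mathbf{u}}\notin\Lambda_1$, it perturbs along the single swap direction $s\,\chi_D(e_1-e_2)$ and computes the second derivative $-4\mu\sqrt{\pi/\tau}\int_\Omega \chi_D\,(G_\tau*\chi_D)\,dx$, whose strict negativity needs only the \emph{pointwise} positivity of $G_\tau$ applied to the nonnegative function $\chi_D$ — no Fourier analysis, no boundary-extension issue. You instead prove \emph{global} strict concavity of $E_\mathbf{u}$ on $\Lambda_2$: your cancellation identity $\sum_{i\neq j}\int \delta u_i\,(G_\tau*\delta u_j)\,dx=-\sum_i\int\delta u_i\,(G_\tau*\delta u_i)\,dx$ (valid because $\sum_i\delta u_i\equiv 0$) is correct, but it transfers the burden to positive-definiteness of the Gaussian kernel acting on \emph{signed} functions $\delta u_i$, which requires Bochner/Plancherel and — as you rightly flag — a check that the boundary convention (zero-padding or periodization, both of which do preserve positive-definiteness) does not spoil strict positivity. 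Your route buys a stronger structural statement (strict concavity everywhere on $\Lambda_2$, hence \emph{every} minimizer is an extreme point, and the extreme points are exactly $\Lambda_1$), at the cost of heavier machinery; the paper's one-dimensional perturbation is more elementary and entirely sidesteps the kernel-positivity subtlety, since it only ever convolves $G_\tau$ against the nonnegative function $\chi_D$. One small economy you could make: your contradiction only uses the implication "$\hat{\mathbf{u}}\notin\Lambda_1$ $\Rightarrow$ $\hat{\mathbf{u}}$ is not extreme," so the full characterization of the extreme points of $\Lambda_2$ is not needed; and in fixing the pair $(i,j)$ and the set $D$ on which both components are bounded away from $\{0,1\}$ you should make the pigeonhole step over the finitely many index pairs explicit, exactly as the paper does when it selects $d\in(0,0.5)$ and the indices $1,2$.
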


\begin{proof}
  Let $\hat{\mathbf{u}}\in\Lambda_2$ be the minimizer of the relaxed model \eqref{eq:Eu-relaxed}. Since $\Lambda_1 \subset \Lambda_2$, we obtain
  \begin{equation*}
    E_\mathbf{u}(\hat{\mathbf{u}})=\min_{\mathbf{u}\in\Lambda_2} E_\mathbf{u}(\mathbf{u})\leq \min_{\mathbf{u}\in\Lambda_1} E_\mathbf{u}(\mathbf{u}) .
  \end{equation*}
  It suffices to prove that $\hat{\mathbf{u}}\in\Lambda_1$.

  We now prove this by contradiction. Assume that $\hat{\mathbf{u}}\notin\Lambda_1$; that is, there exists a measurable set $D\subset \Omega$ of positive measure and a constant $d\in(0,0.5)$ such that for some indices $i\ne j$, we have $\hat{u}_i(x),\ \hat{u}_j(x)\in(d,1-d)$ a.e. $x\in D$. Assume without loss of generality that they are $\hat{u}_1$ and $\hat{u}_2$.
  Let $\chi_D(x)$  denote the characteristic function of $D$.
  \begin{equation*}
    \chi_D(x)=
    \begin{cases}
      1, & x\in D , \\
      0, & \text{otherwise}.
    \end{cases}
  \end{equation*}
  For any $s\in (-d,d)$, let $u_1^s=\hat{u}_1+s\chi_D,\ u_2^s=\hat{u}_2-s\chi_D$ and $u_i^s=\hat{u}_i,\ i=3,\dots,n$. Then, it is clear that $\mathbf{u}^s=(u_1^s,\dots,u_n^s) \in \Lambda_2$. The corresponding energy functional can be written as
  \begin{equation*}
  \begin{split}
     E_\mathbf{u}(\mathbf{u}^s) = & \sum_{i=1}^{n} \lambda_i\int_{\Omega}u_i^se_i\,dx + \mu \sum_{i=1}^{n} \sum_{j=1,j \neq i}^{n} \sqrt{\frac{\pi}{\tau}} \int_{\Omega} u_i^s (G_{\tau} * u_j^s) \,dx \\
      = & \lambda_1\int_{\Omega}(\hat{u}_1+s\chi_D)e_1\,dx + \lambda_2\int_{\Omega}(\hat{u}_2-s\chi_D)e_2\,dx + \sum_{i=3}^{n} \lambda_i\int_{\Omega}\hat{u}_ie_i\,dx \\
       & +\mu\sqrt{\frac{\pi}{\tau}} \int_{\Omega} (\hat{u}_1+s\chi_D) (G_{\tau} * (1-\hat{u}_1-s\chi_D))\,dx  \\
       & +\mu\sqrt{\frac{\pi}{\tau}} \int_{\Omega} (\hat{u}_2-s\chi_D) (G_{\tau} * (1-\hat{u}_2+s\chi_D)) \,dx \\
       & +\mu\sum_{i=3}^{n} \sqrt{\frac{\pi}{\tau}} \int_{\Omega} \hat{u}_i (G_{\tau} * (1-\hat{u}_i)) \,dx . \\
  \end{split}
  \end{equation*}
  Taking the first and second derivatives of $E_\mathbf{u}(\mathbf{u}^s)$ with respect to $s$, we obtain
  \begin{equation*}
  \begin{split}
     \frac{dE_\mathbf{u}(\mathbf{u}^s)}{ds}  = & \lambda_1\int_{\Omega}\chi_De_1\,dx - \lambda_2\int_{\Omega}\chi_De_2\,dx \\
       & + \mu\sqrt{\frac{\pi}{\tau}} \int_{\Omega} \chi_D (G_{\tau} * (1-\hat{u}_1-s\chi_D))\,dx \\
       &  - \mu\sqrt{\frac{\pi}{\tau}} \int_{\Omega} \chi_D (G_{\tau} * (\hat{u}_1+s\chi_D))\,dx \\
       & - \mu\sqrt{\frac{\pi}{\tau}} \int_{\Omega} \chi_D (G_{\tau} * (1-\hat{u}_2+s\chi_D))\,dx \\
       & + \mu\sqrt{\frac{\pi}{\tau}} \int_{\Omega} \chi_D (G_{\tau} * (\hat{u}_2-s\chi_D))\,dx ,
  \end{split}
  \end{equation*}
  and
  \begin{equation*}
    \frac{d^2E_\mathbf{u}(\mathbf{u}^s)}{ds^2}=-4\mu\sqrt{\frac{\pi}{\tau}} \int_{\Omega} \chi_D (G_{\tau} * \chi_D)\,dx .
  \end{equation*}
  Since $\chi_D>0$ on $D$ and $G_\tau$ is strictly positive, it follows that $\frac{d^2E_\mathbf{u}(\mathbf{u}^s)}{ds^2}<0$ for all $s\in(-d,d)$. Therefore, $\mathbf{u}^s|_{s=0}=\hat{\mathbf{u}}$ is not a minimizer of the relaxed problem, which contradicts the assumption.
\end{proof}

Following the ICTM, we linearize the energy functional \eqref{eq:Eu-relaxed} at $\mathbf{u}^k$, leading to the corresponding linearized problem:
\begin{equation}\label{eq:Eu-liner}
  \min_{\mathbf{u}\in\Lambda_2} \mathcal{L}(\mathbf{u}^{k},\mathbf{u}) = \sum_{i=1}^{n} \lambda_i\int_{\Omega}u_i e_i\,dx + 2\mu \sum_{i=1}^{n} \sum_{j=1,j \neq i}^{n} \sqrt{\frac{\pi}{\tau}} \int_{\Omega} u_i (G_{\tau} * u_j^k) \,dx .
\end{equation}
To simplify the expression, we define
\begin{equation}\label{eq:phi}
  \phi_i^k= \lambda_i e_i + 2\mu \sqrt{\frac{\pi}{\tau}} \sum_{j=1,j \neq i}^{n} G_{\tau} * u_j^k .
\end{equation}
The linearized problem \eqref{eq:Eu-liner} can be rewritten as
\begin{equation}
\label{eq:Eu-phi}
  \mathcal{L}(\mathbf{u}^{k},\mathbf{u}) = \sum_{i=1}^{n} \int_{\Omega} u_i \phi_i^k \,dx .
\end{equation}
The solution is given by $\mathbf{u}^{k+1}=\argmin_{\mathbf{u}\in\Lambda_1}\mathcal{L}(\mathbf{u}^{k},\mathbf{u})$. Since $\mathbf{u} \in \Lambda_2$ and $\phi_i^k\geq 0$, we can obtain
\begin{equation}\label{eq:u}
  u_i^{k+1}(x)=
  \begin{cases}
    1, & \text{if } i=\argmin_{l\in[n]} \phi^{k}_l(x), \\
    0, & \text{otherwise}.
  \end{cases}
\end{equation}

\begin{theorem}
  Let $\mathbf{u}^{k+1}$ be the $k+1$ iteration obtained by \eqref{eq:u}. Then, for any $\tau > 0$, it holds that
  \begin{equation*}
    E_\mathbf{u}(\mathbf{u}^{k+1})\leq E_\mathbf{u}(\mathbf{u}^{k}) .
  \end{equation*}
\end{theorem}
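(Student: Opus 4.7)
The plan is to recognize $E_\mathbf{u}$ as a concave functional on the simplex-valued set $\Lambda_2$, so that the linearization $\mathcal{L}(\mathbf{u}^k,\cdot)$ defined in \eqref{eq:Eu-liner} serves as a global upper bound for $E_\mathbf{u}$ that is tight at $\mathbf{u}^k$. Combined with the fact that $\mathbf{u}^{k+1}$ from \eqref{eq:u} minimizes this linearization, the descent then follows by a standard majorization--minimization chain.

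First I would establish concavity of the regularization piece $Q(\mathbf{u}) := \sum_i \sum_{j\ne i}\int_\Omega u_i\,G_\tau*u_j\,dx$ on $\Lambda_2$; the fitting contribution $\sum_i \lambda_i \int u_i e_i$ is linear in $\mathbf{u}$, so only $Q$ needs attention. The key is to exploit the simplex constraint $\sum_j u_j \equiv 1$: the total double sum
\begin{equation*}
\sum_{i,j}\int_\Omega u_i\,G_\tau*u_j\,dx = \int_\Omega\int_\Omega G_\tau(x-y)\,dy\,dx =: C
\end{equation*}
is a constant independent of $\mathbf{u}$, so $Q(\mathbf{u}) = C - \sum_i \langle u_i,G_\tau * u_i\rangle$. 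Since $G_\tau$ is a Gaussian kernel its Fourier symbol is nonnegative, and each $\langle u_i,G_\tau*u_i\rangle = \int |\widehat{u_i}|^2\widehat{G_\tau}$ is a convex quadratic, making $Q$ concave on $\Lambda_2$ and hence $E_\mathbf{u}$ concave as well.

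Next I would apply the tangent-plane upper bound for concave functions at $\mathbf{u}^k$, using the symmetry $\int u_i\,G_\tau*u_j^k = \int u_j^k\,G_\tau*u_i$ to rearrange the gradient term, and obtain
\begin{equation*}
E_\mathbf{u}(\mathbf{u}) \le \mathcal{L}(\mathbf{u}^k,\mathbf{u}) - \mu\sqrt{\pi/\tau}\,Q(\mathbf{u}^k),
\end{equation*}
with equality at $\mathbf{u}=\mathbf{u}^k$. (The factor $2\mu\sqrt{\pi/\tau}$ in front of the cross term in $\mathcal{L}$ is precisely the derivative of the quadratic $Q$ at $\mathbf{u}^k$.) Since $\mathcal{L}(\mathbf{u}^k,\cdot)$ is linear in $\mathbf{u}$, its minimum over the convex set $\Lambda_2$ is attained at a vertex of the simplex, which by the pointwise rule \eqref{eq:u} is exactly $\mathbf{u}^{k+1}$; in particular $\mathcal{L}(\mathbf{u}^k,\mathbf{u}^{k+1}) \le \mathcal{L}(\mathbf{u}^k,\mathbf{u}^k)$. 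Chaining the two bounds,
\begin{equation*}
E_\mathbf{u}(\mathbf{u}^{k+1}) \le \mathcal{L}(\mathbf{u}^k,\mathbf{u}^{k+1}) - \mu\sqrt{\pi/\tau}\,Q(\mathbf{u}^k) \le \mathcal{L}(\mathbf{u}^k,\mathbf{u}^k) - \mu\sqrt{\pi/\tau}\,Q(\mathbf{u}^k) = E_\mathbf{u}(\mathbf{u}^k).
\end{equation*}

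The main obstacle is the concavity step: without the simplex constraint the cross-coupling form $Q$ is sign-indefinite, and only the identity $\sum_j u_j = 1$ converts it into a constant minus diagonal quadratic forms, at which point the positivity of $\widehat{G_\tau}$ makes concavity immediate. Once this reformulation is in hand, the majorization--minimization chain above is essentially routine, and the factor $2$ in the linearization \eqref{eq:Eu-liner} is exactly what makes $\mathcal{L}$ a tight majorizer rather than a loose one.
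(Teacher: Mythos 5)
Your proposal is correct and is essentially the paper's argument recast in majorization--minimization language: the paper likewise uses $\mathcal{L}(\mathbf{u}^k,\mathbf{u}^{k+1})\le\mathcal{L}(\mathbf{u}^k,\mathbf{u}^k)$ and then shows the remainder $E_\mathbf{u}(\mathbf{u}^{k+1})-E_\mathbf{u}(\mathbf{u}^k)-\bigl(\mathcal{L}(\mathbf{u}^k,\mathbf{u}^{k+1})-\mathcal{L}(\mathbf{u}^k,\mathbf{u}^k)\bigr)$ equals $-\mu\sqrt{\pi/\tau}\sum_i\int(u_i^{k+1}-u_i^k)\,G_\tau*(u_i^{k+1}-u_i^k)\,dx\le 0$, which is exactly your concavity-of-$Q$ step (the simplex constraint turning the cross-coupling into a negative diagonal quadratic). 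The only cosmetic difference is that the paper certifies positivity of the quadratic form via the semigroup factorization $G_\tau=G_{\tau/2}*G_{\tau/2}$ rather than via the nonnegativity of $\widehat{G_\tau}$; these are the same fact about the Gaussian kernel.
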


\begin{proof}
  From the functional \eqref{eq:Eu-phi}, we have
  \begin{equation*}
    \mathcal{L}(\mathbf{u}^{k},\mathbf{u}^k) = E_\mathbf{u}(\mathbf{u}^k)+\mu \sqrt{\frac{\pi}{\tau}}\sum_{i=1}^{n} \sum_{j=1,j \neq i}^{n} \int_{\Omega}u_i^kG_{\tau} * u_j^k\,dx ,
  \end{equation*}
  and
    \begin{equation*}
    \begin{split}
       \mathcal{L}(\mathbf{u}^{k},\mathbf{u}^{k+1}) = & E_\mathbf{u}(\mathbf{u}^{k+1})+2\mu \sqrt{\frac{\pi}{\tau}}\sum_{i=1}^{n} \sum_{j=1,j \neq i}^{n} \int_{\Omega}u_i^{k+1}G_{\tau} * u_j^k\,dx \\
         & -\mu \sqrt{\frac{\pi}{\tau}}\sum_{i=1}^{n} \sum_{j=1,j \neq i}^{n} \int_{\Omega}u_i^{k+1}G_{\tau} * u_j^{k+1}\,dx .
    \end{split}
  \end{equation*}
  According to \eqref{eq:u}, we have $\mathcal{L}(\mathbf{u}^{k},\mathbf{u}^{k+1})\leq\mathcal{L}(\mathbf{u}^{k},\mathbf{u}^{k})$, which implies that
  \begin{equation*}
     E_\mathbf{u}(\mathbf{u}^{k+1})-E_\mathbf{u}(\mathbf{u}^k) \leq M ,
  \end{equation*}
  where
  \begin{equation*}
  \begin{aligned}
     M &  =\mu \sqrt{\frac{\pi}{\tau}}\sum_{i=1}^{n} \sum_{j=1,j \neq i}^{n} \int_{\Omega}\left[ u_i^{k+1} (G_\tau * u_j^{k+1}) + u_i^k (G_\tau * u_j^k) - 2 u_i^{k+1} ( G_\tau * u_j^k) \right]\,dx  \\
      & = -\mu \sqrt{\frac{\pi}{\tau}}\sum_{i=1}^{n}\int_{\Omega}(u_i^{k+1}-u_i^k)G_{\tau} * (u_i^{k+1}-u_i^k)\,dx \\
      & = -\mu \sqrt{\frac{\pi}{\tau}}\sum_{i=1}^{n}\int_{\Omega}[G_{\frac{\tau}{2}} * (u_i^{k+1}-u_i^k)]^2\,dx \\
      & \leq 0 .
  \end{aligned}
  \end{equation*}
This implies that $E_\mathbf{u}(\mathbf{u}^{k+1})-E_\mathbf{u}(\mathbf{u}^k) \leq 0$.
\end{proof}
\begin{remark}
    As the RMSAV algorithm only guarantees the decrease in the auxiliary energy, the reduction in energy in the two subproblems does not necessarily guarantee that $E(\mathbf{c}^{k+1},b^{k+1},g^{k+1},\mathbf{u}^{k+1})\leq E(\mathbf{c}^{k},b^{k},g^{k},\mathbf{u}^{k})$. However, the monotonic decrease within the subproblems can still provide a certain level of stability for the algorithm and help prevent divergence.
\end{remark}

The original ICTM algorithm consists of the first two convolution steps for updating $\mathbf{c}^{k+1}$ and $b^{k+1}$, followed by a thresholding step for updating $\mathbf{u}^{k+1}$.
Based on the original ICTM framework, we introduce the RMSAV algorithm with additional denoising terms to obtain a smoothed image for segmentation.
This modification significantly enhances the model’s robustness to high noise.

The algorithm used to solve the proposed model is summarized in Algorithm~\ref{alg:ICTM-SAV}.
\begin{algorithm}
\caption{The algorithm for the proposed model \eqref{eq:PoissonICTM}}
\label{alg:ICTM-SAV}
\KwIn{original image $f$, initialization $b^0,\ g^0,\ \mathbf{u}^0$, parameters $\lambda_i,\ \mu,\ \gamma,\ \nu,\ \rho,\ \tau,\ \Delta t,\ C_0$, tolerance $tol_1,\ tol_2$, error $err_1 \gg tol_1,\ err_2 \gg tol_2$}
\While {$err_1>tol_1$ }{
update $\mathbf{c}$ by \eqref{eq:c}

update $b$ by \eqref{eq:b}

compute $z_0=\sqrt{E_g(g^{k})+C_0}$

\While {$err_2>tol_2$}{
compute $F'(g^{k,j})$ by \eqref{eq:F'}

compute $m^{k,j}=\frac{F'(g^{k,j})}{\sqrt{E_g(g^{k,j})+C_0}}$

compute $\hat{m}^{k,j}=A^{-1}m^{k,j}$

compute $\tilde{z}^{k,j+1}=\frac{z^{k,j}}{1+\frac{\Delta t}{2}(m^{k,j},\hat{m}^{k,j})}$

update $g^{k,j+1}=g^{k,j}-\Delta t \tilde{z}^{k,j+1} \hat{m}^{k,j}$

compute $\xi$ by \eqref{eq:SAV-xi}

update $\xi$ by $\xi=\max\{0,\xi\}$

update $z^{k,j+1}$ by $z^{k,j+1}=\xi \tilde{z}^{k,j+1}+(1-\xi)\sqrt{E_g(g^{k,j+1})+C_0}$

update $err_2$ by $err_2=\frac{\|E_g(g^{k,j+1})-E_g(g^{k,j}) \|}{\|E_g(g^{k,j+1}) \|}$
}
update $g$ by $g^{k+1}=g^{k,j+1}$

compute $\phi_i^k = \lambda_i e_i + 2\mu \sqrt{\frac{\pi}{\tau}} \sum\limits_{j=1 , j \neq i}^{n} G_{\tau} * u_j^k$, for $i=1,\dots,n$

update $\mathbf{u}$ by
\begin{equation*}
  u_i^{k+1}(x)=
  \begin{cases}
    1, & \text{if } i=\argmin_{l\in[n]} \phi^{k}_l(x), \\
    0, & \text{otherwise};
  \end{cases}
\end{equation*}

update $err_1$ by $err_1=\|u^{k+1}-u^{k}\|_{L^2}$
}
\KwOut{$\mathbf{c}^{k+1}, b^{k+1}, g^{k+1}, \mathbf{u}^{k+1}$}
\end{algorithm}

\section{Experimental Results}
\label{sec:experiments}
In this section, segmentation experiments are conducted to evaluate the robustness of the proposed model against noise and intensity inhomogeneity. The experiments are performed on synthetic and real images exhibiting intensity inhomogeneity and corrupted by Poisson and multiplicative Gamma noise. In our algorithm, we set $\sigma = 1$, $p = 1.3$, $\tau = 0.02$, $\rho = 3$, $tol_1 = 10^{-8}$, and $tol_2 = 10^{-3}$. We perform comparative experiments with the LIC model \cite{li2011level} to demonstrate that incorporating denoising terms improves segmentation performance on noisy images. Furthermore, our model is compared with several state-of-the-art models, including the TSS model \cite{chan2014two}, the VLSGIS model \cite{ali2018image}, the NCASTV model \cite{wang2020variational}, the ICTM-CV model \cite{wang2022iterative}, and the ICTM-LVF-CV model \cite{liu2023active}. For a fair comparison, we use the same initial contours for all methods and carefully tune the parameters of each model to achieve optimal performance.

To quantitatively assess segmentation performance, we compute the Dice Similarity Coefficient (DSC), Intersection over Union (IoU), Accuracy, and $\kappa$ score ($\kappa$). Higher metric values, particularly those close to 1, indicate more accurate segmentation results.
\subsection{Robustness of Our Model}
The robustness of a segmentation model is a critical factor that influences the reliability and accuracy of its performance across diverse scenarios. In this section, we evaluate the model’s robustness with respect to noise, initialization, and denoising parameters by introducing various types of noise, randomly selecting different initial contours, and varying the denoising parameters to assess segmentation stability under these conditions.
\subsubsection{Robustness to Noise}
\begin{figure}
  \centering
  \begin{subfigure}[b]{0.16\linewidth}
      \centering
      \includegraphics[width=\textwidth]{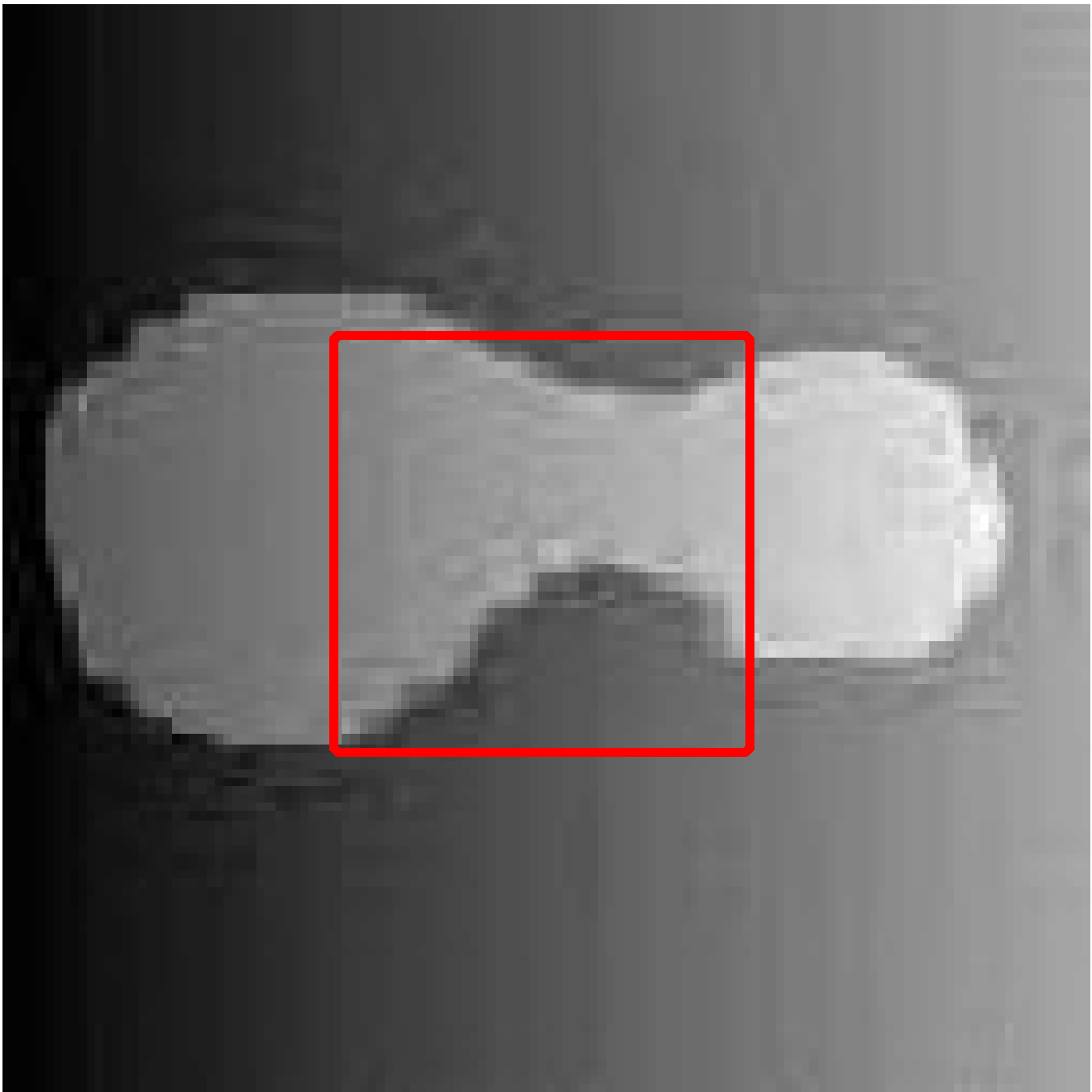}
  \end{subfigure}
  \hfill
  \begin{subfigure}[b]{0.16\linewidth}
      \centering
      \includegraphics[width=\textwidth]{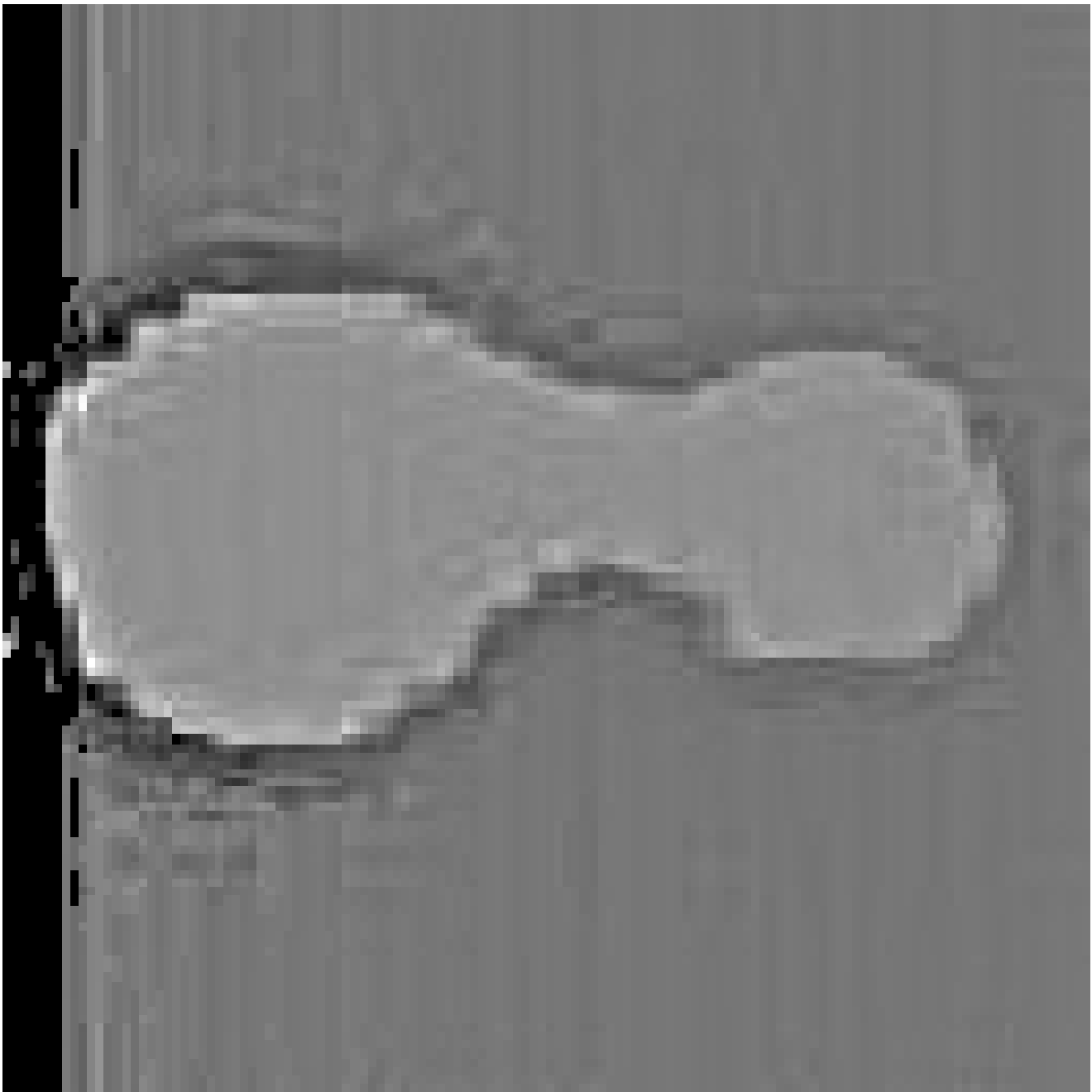}
  \end{subfigure}
 \hfill
  \begin{subfigure}[b]{0.16\linewidth}
      \centering
      \includegraphics[width=\textwidth]{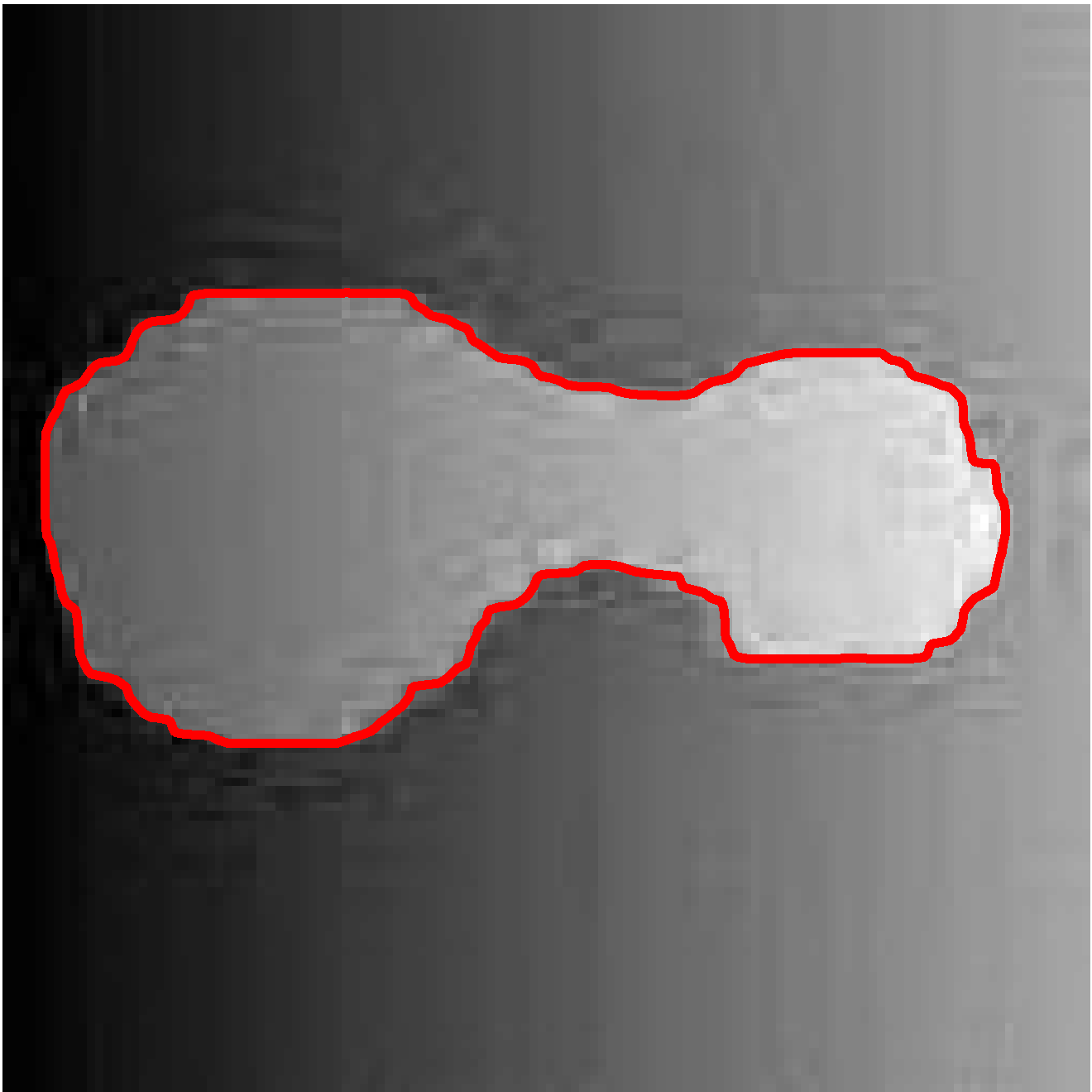}
  \end{subfigure}
  \hfill
  \begin{subfigure}[b]{0.16\linewidth}
      \centering
      \includegraphics[width=\textwidth]{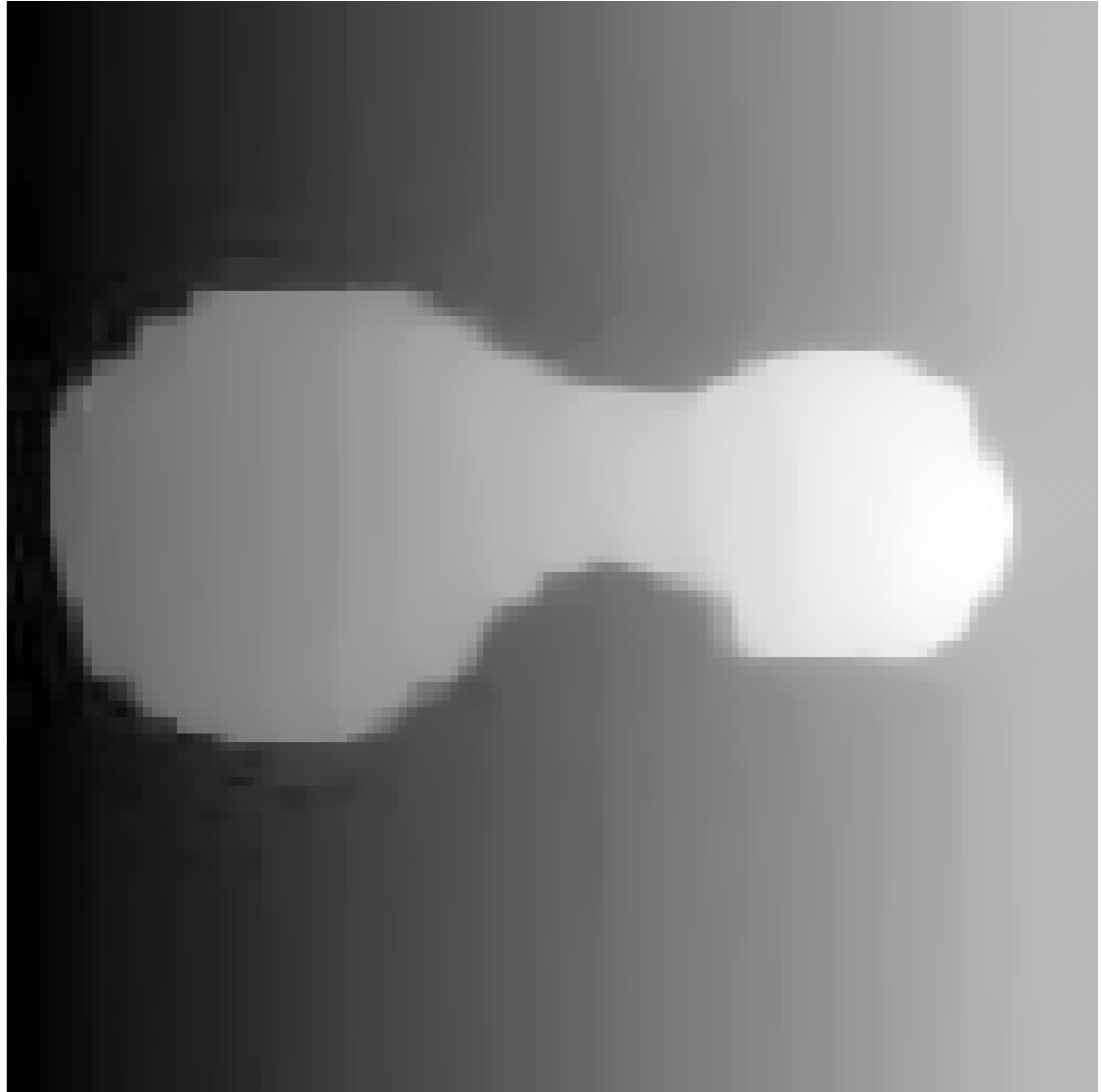}
  \end{subfigure}
  \hfill
  \begin{subfigure}[b]{0.16\linewidth}
      \centering
      \includegraphics[width=\textwidth]{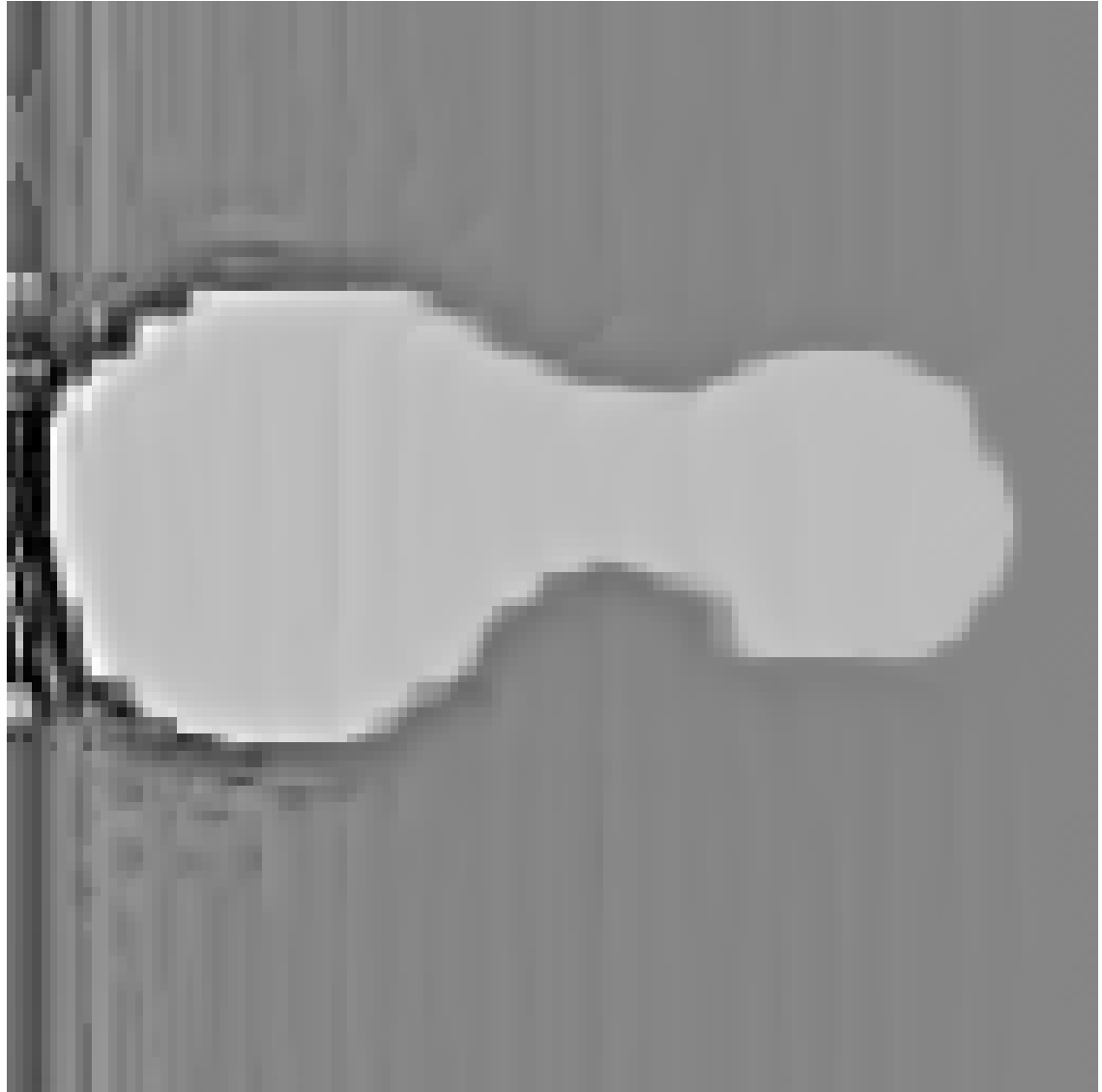}
  \end{subfigure}
  \hfill
  \begin{subfigure}[b]{0.16\linewidth}
      \centering
      \includegraphics[width=\textwidth]{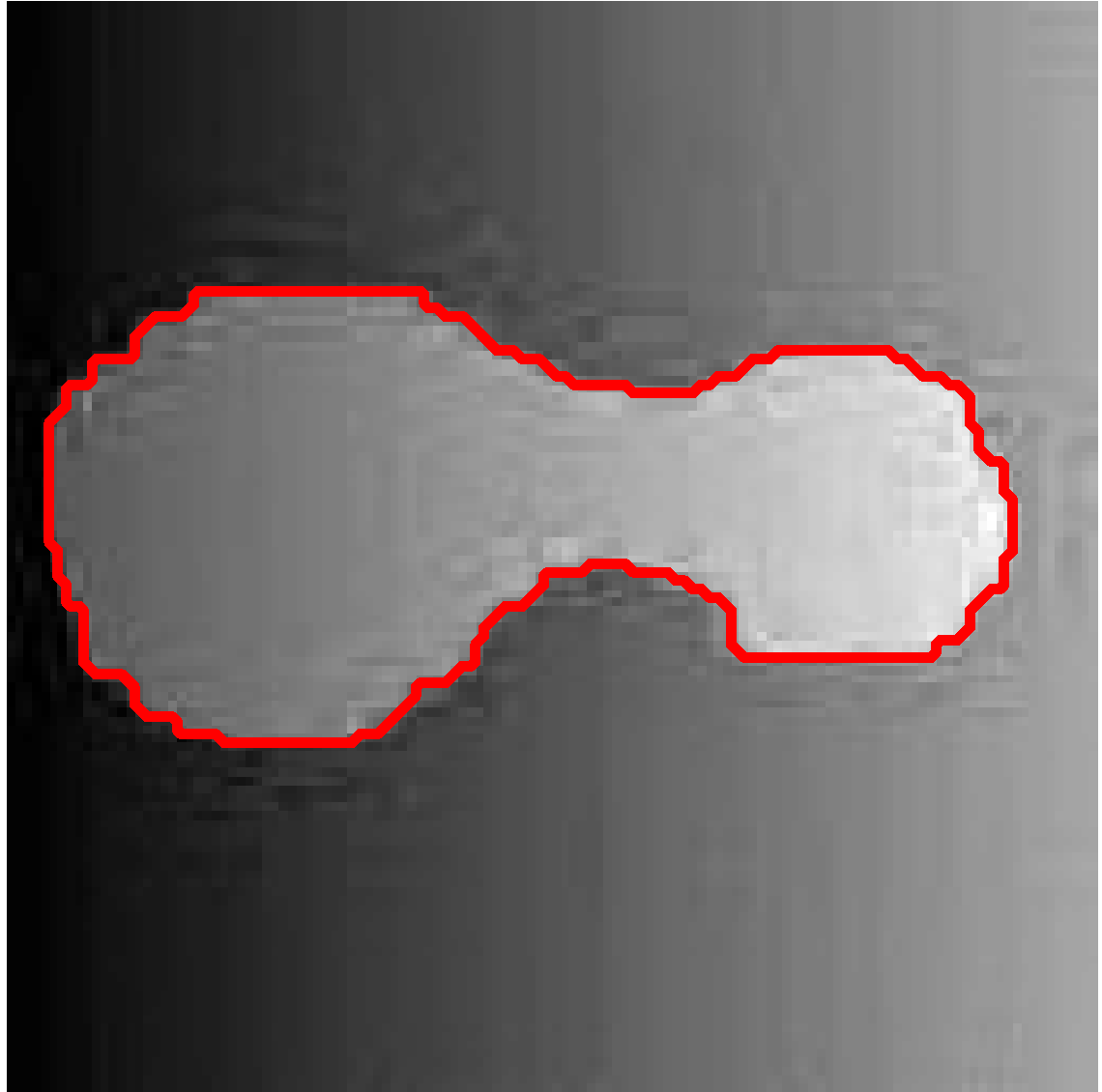}
  \end{subfigure}

  \begin{subfigure}[b]{0.16\linewidth}
      \centering
      \includegraphics[width=\textwidth]{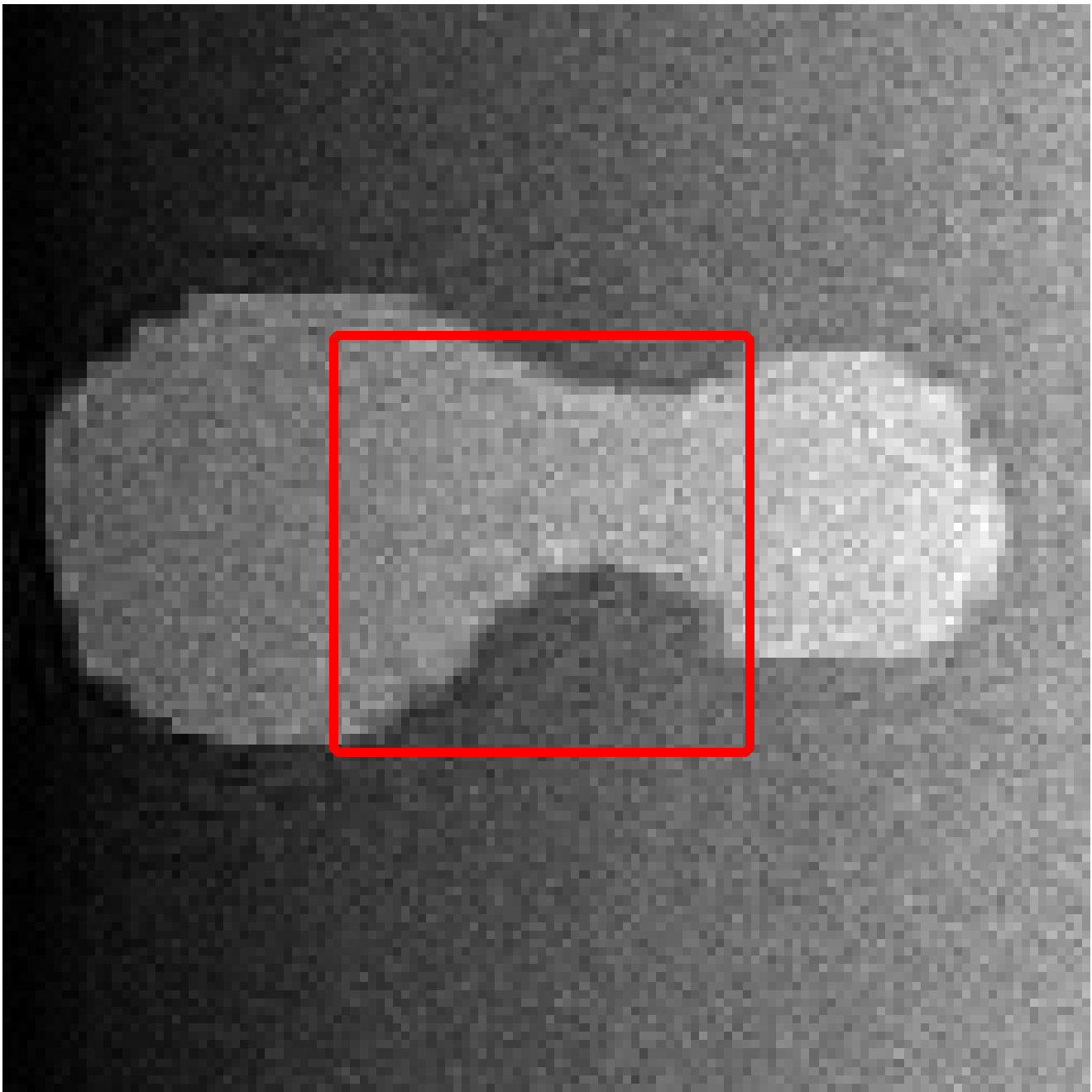}
  \end{subfigure}
   \hfill
  \begin{subfigure}[b]{0.16\linewidth}
      \centering
      \includegraphics[width=\textwidth]{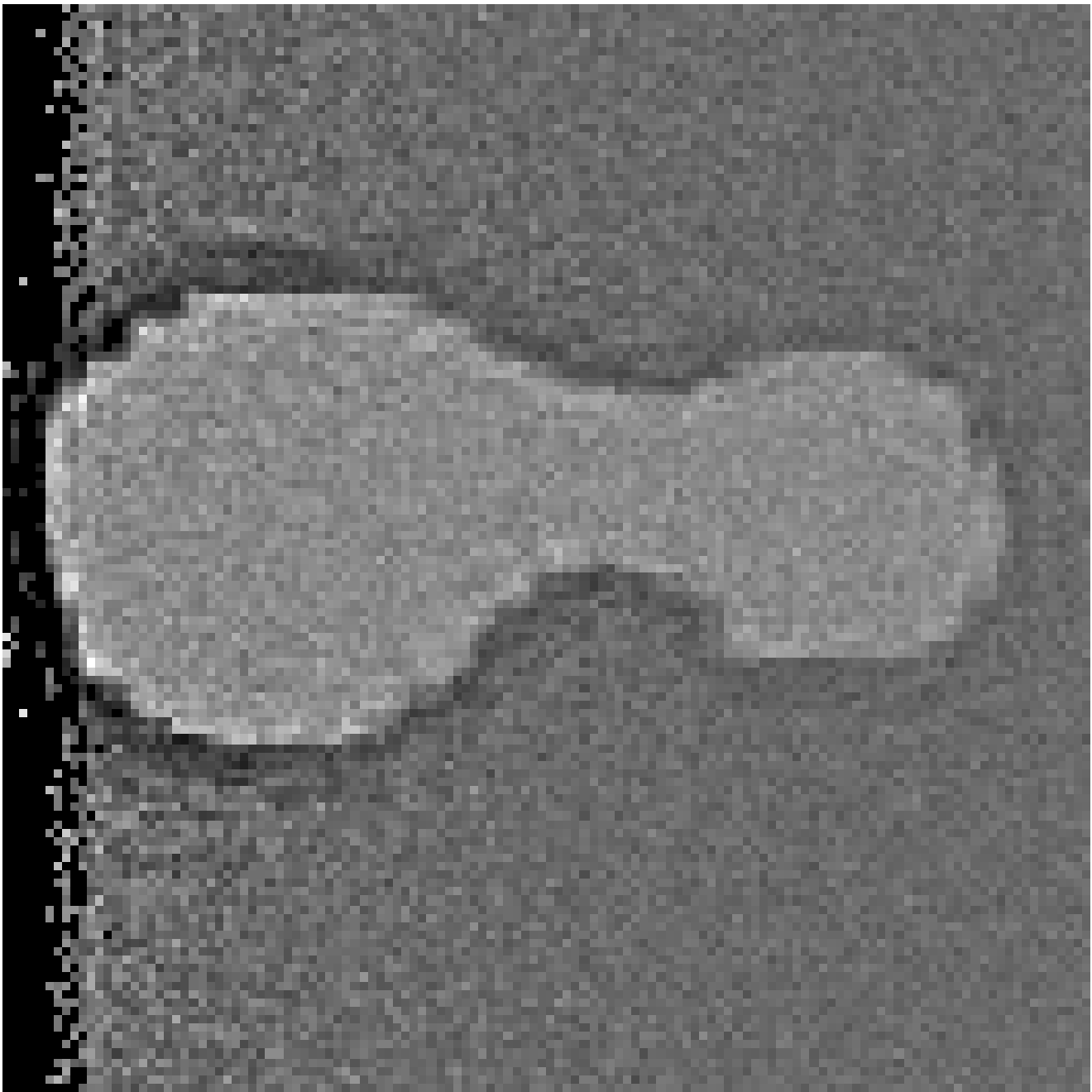}
  \end{subfigure}
   \hfill
  \begin{subfigure}[b]{0.16\linewidth}
      \centering
      \includegraphics[width=\textwidth]{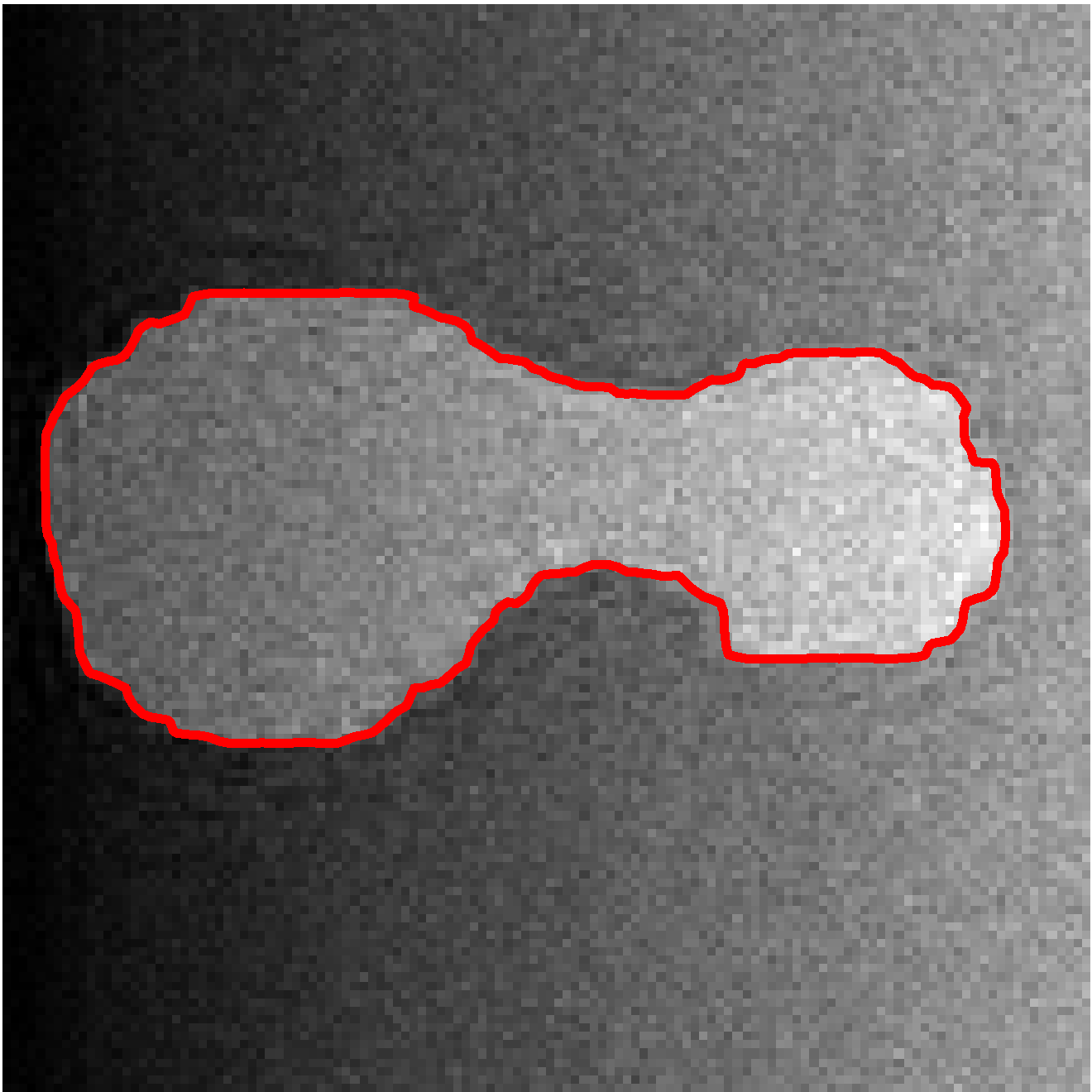}
  \end{subfigure}
   \hfill
  \begin{subfigure}[b]{0.16\linewidth}
      \centering
      \includegraphics[width=\textwidth]{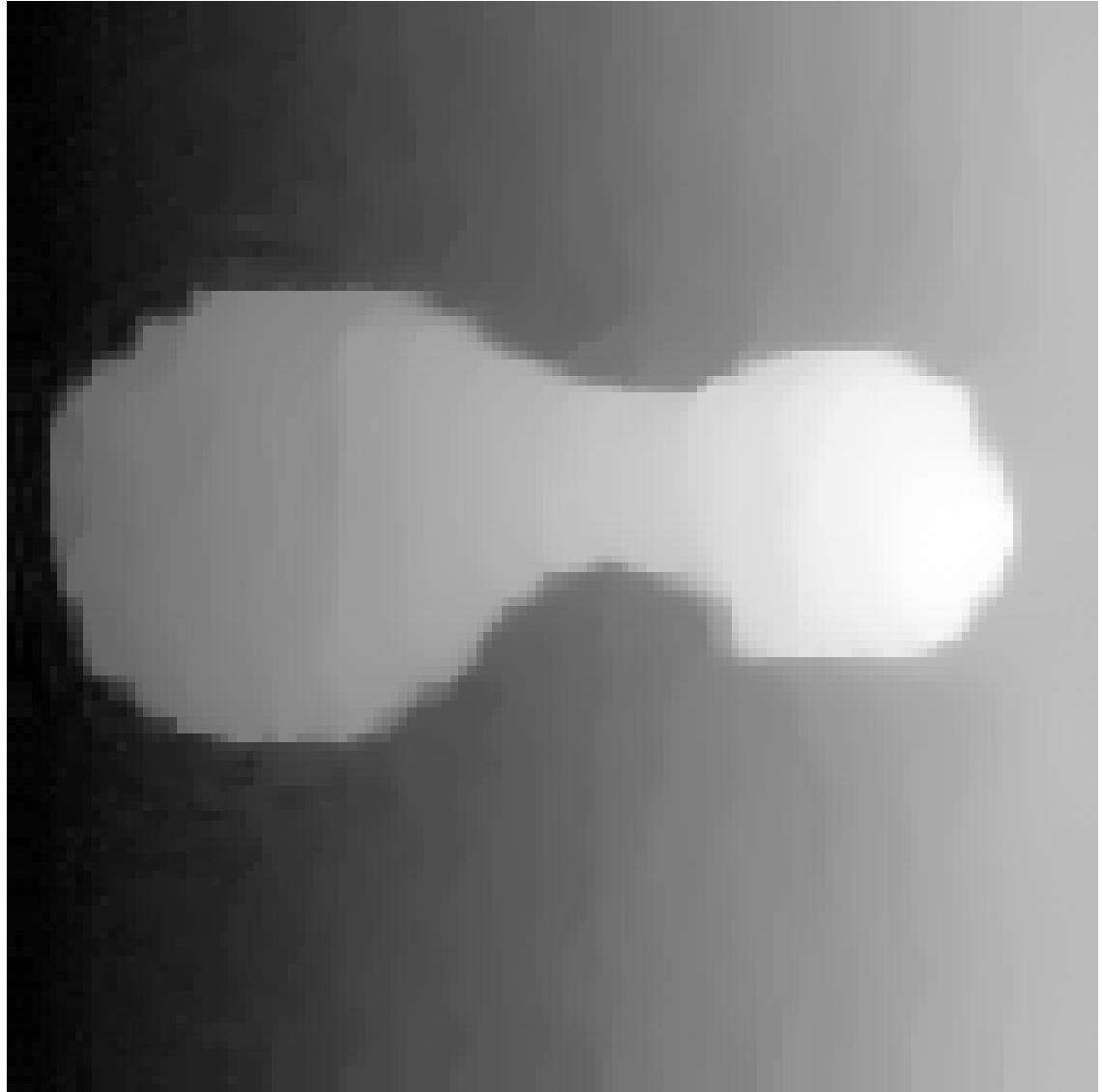}
  \end{subfigure}
   \hfill
  \begin{subfigure}[b]{0.16\linewidth}
      \centering
      \includegraphics[width=\textwidth]{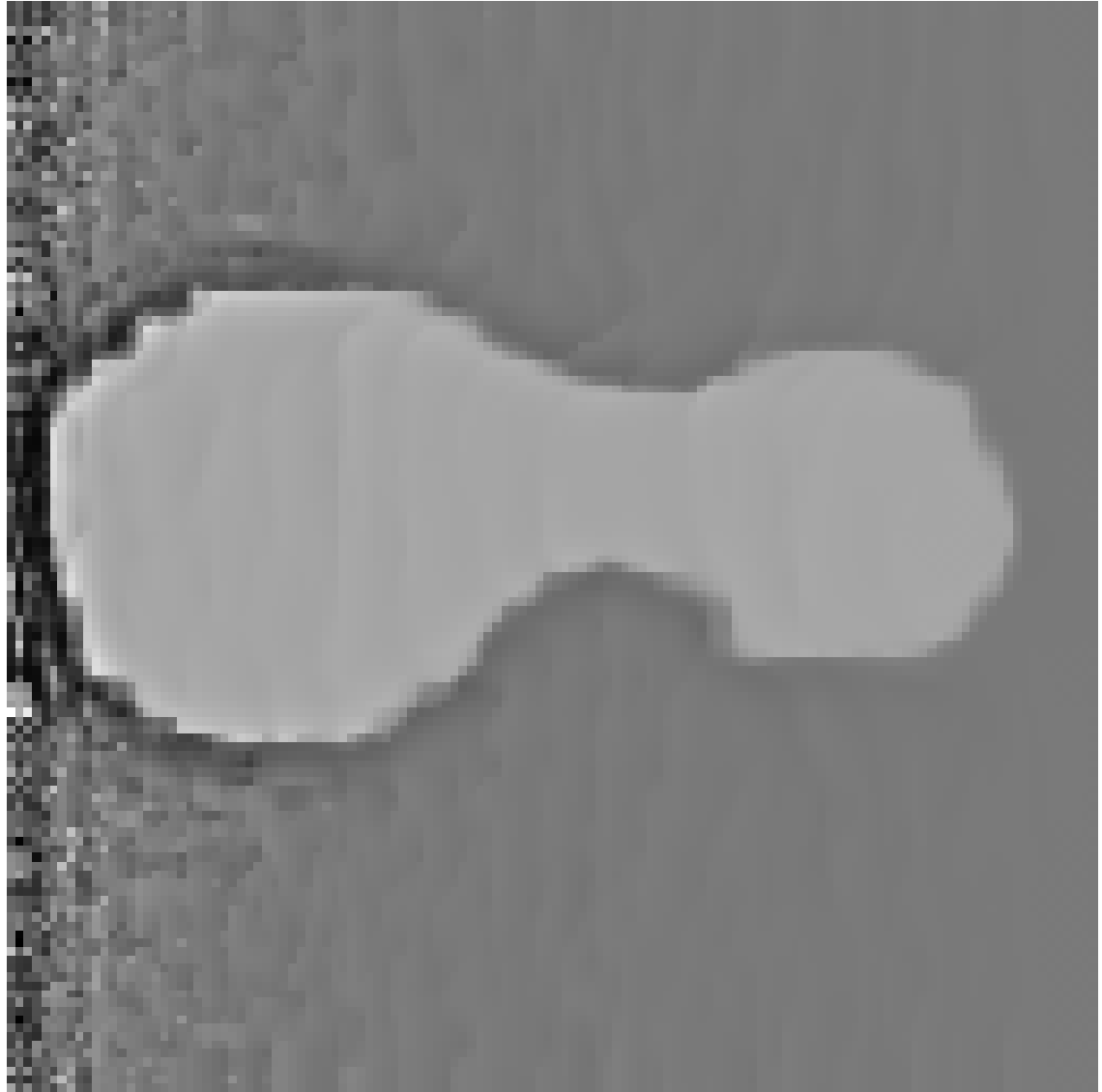}
  \end{subfigure}
   \hfill
  \begin{subfigure}[b]{0.16\linewidth}
      \centering
      \includegraphics[width=\textwidth]{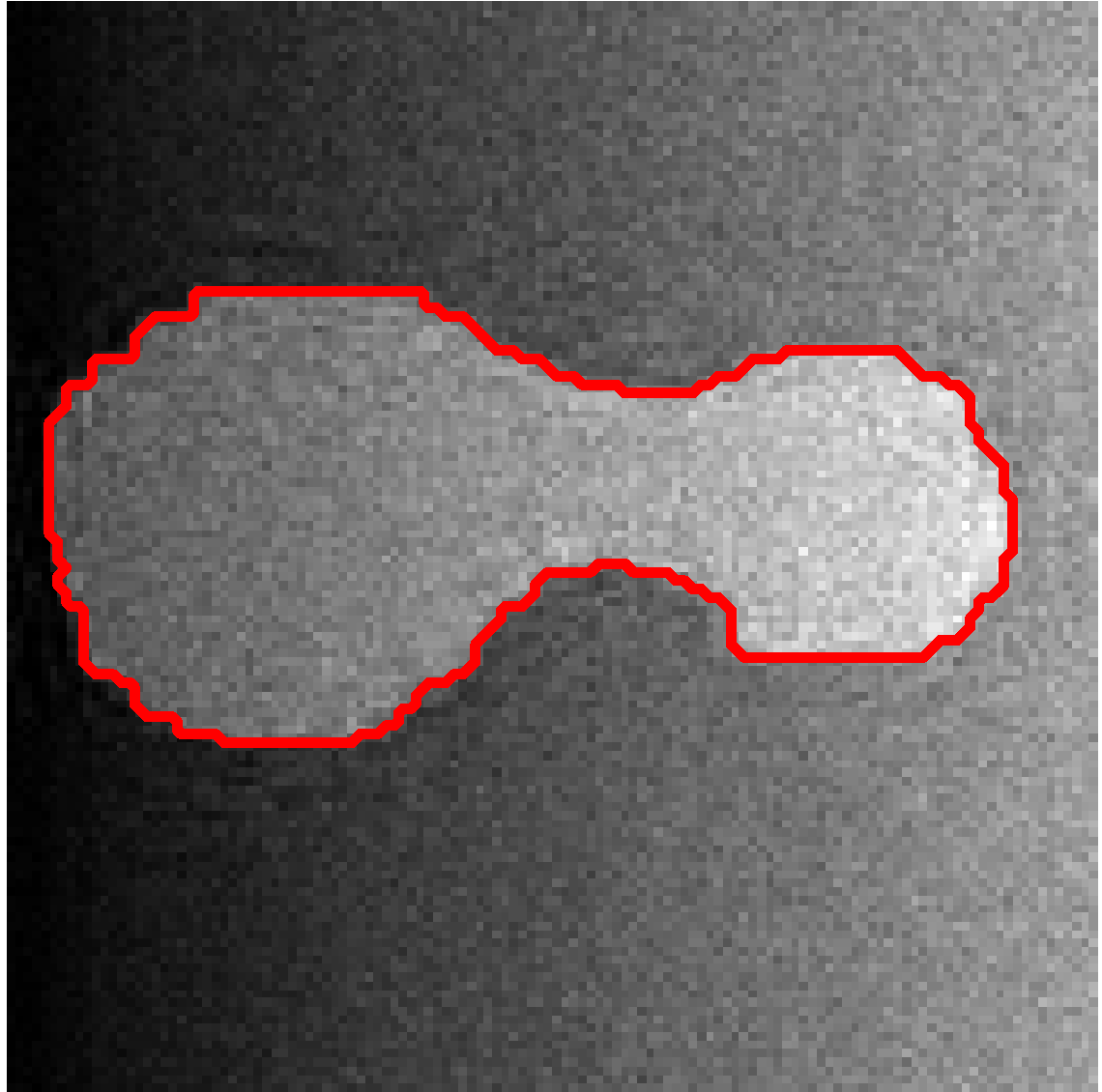}
  \end{subfigure}

  \begin{subfigure}[b]{0.16\linewidth}
      \centering
      \includegraphics[width=\textwidth]{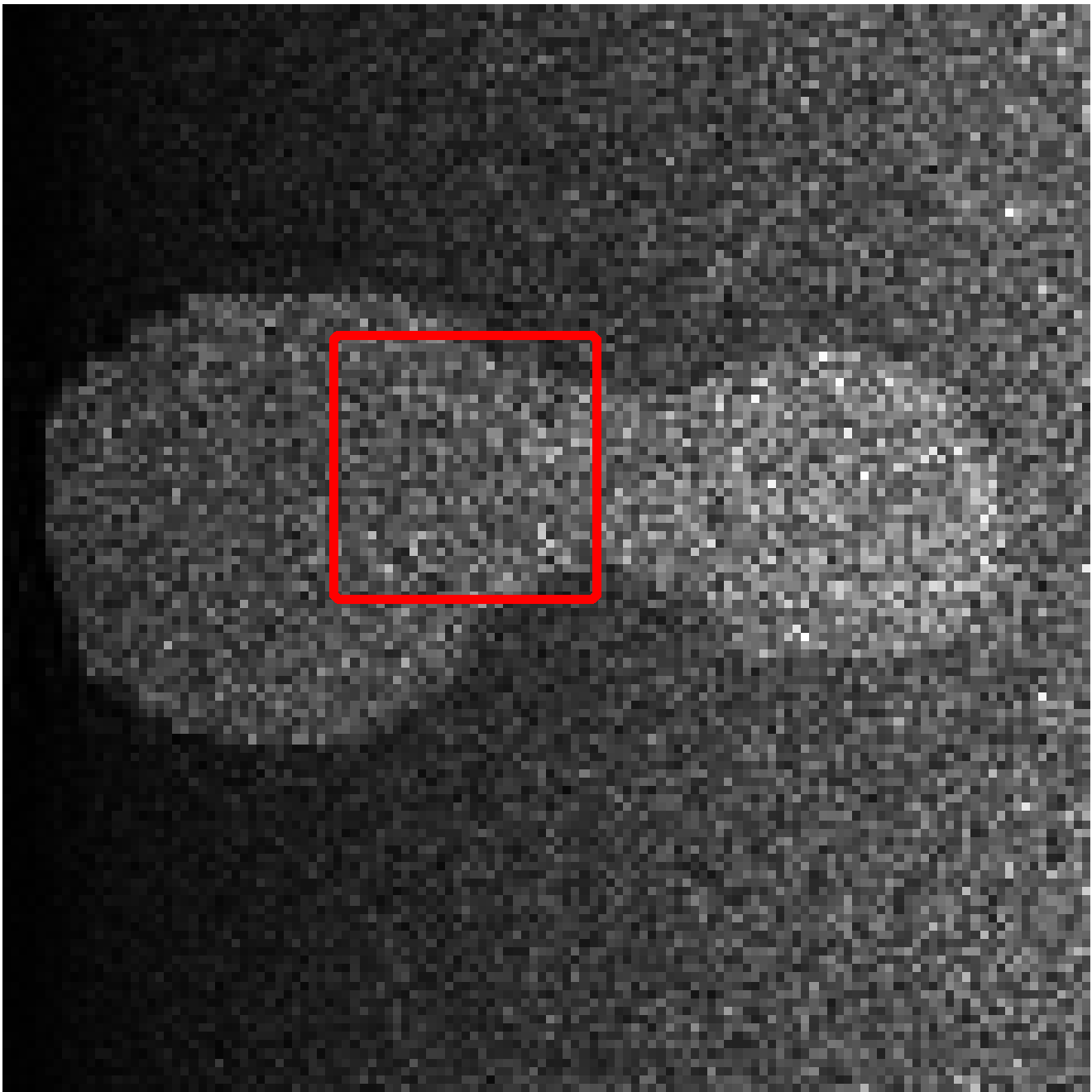}
  \end{subfigure}
   \hfill
  \begin{subfigure}[b]{0.16\linewidth}
      \centering
      \includegraphics[width=\textwidth]{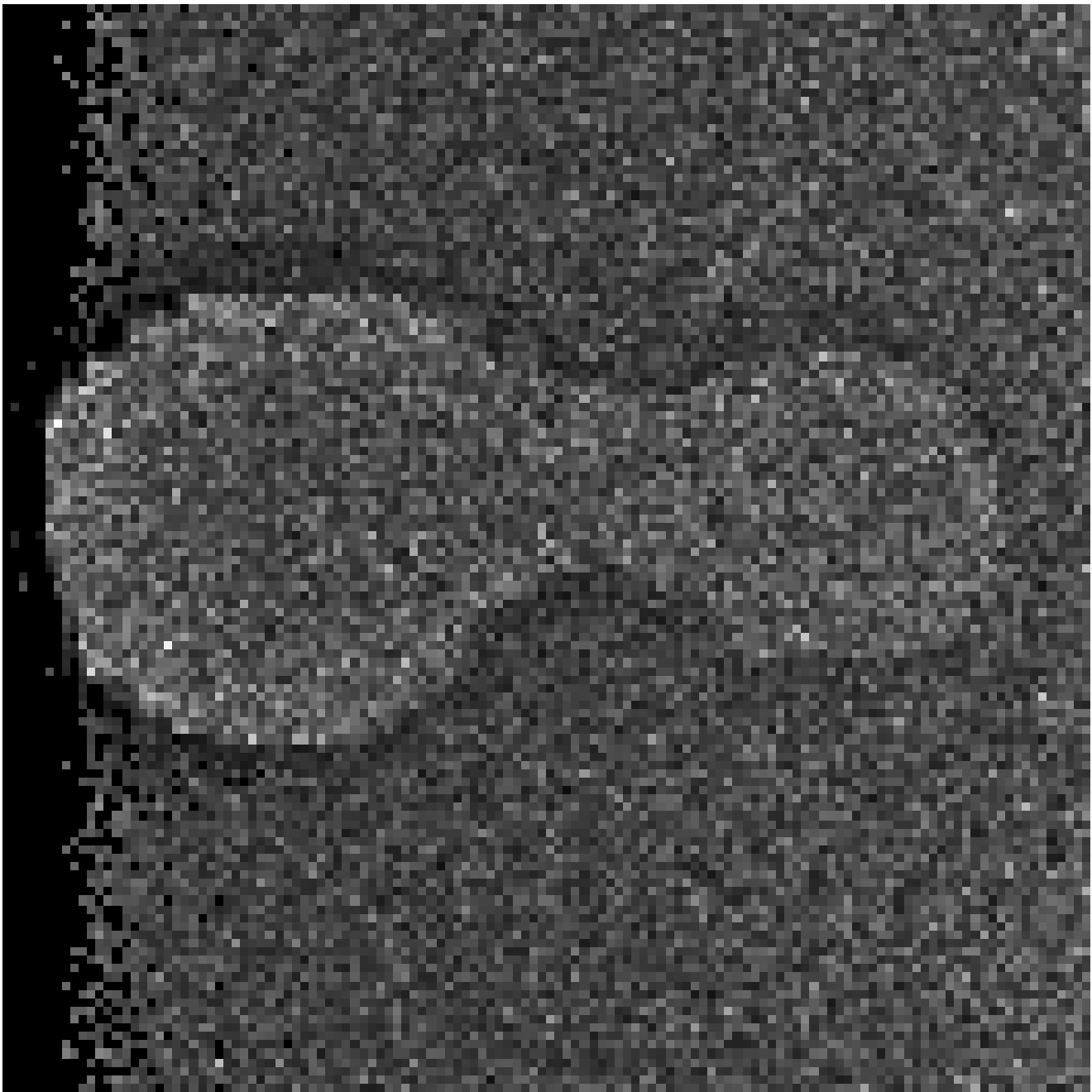}
  \end{subfigure}
   \hfill
  \begin{subfigure}[b]{0.16\linewidth}
      \centering
      \includegraphics[width=\textwidth]{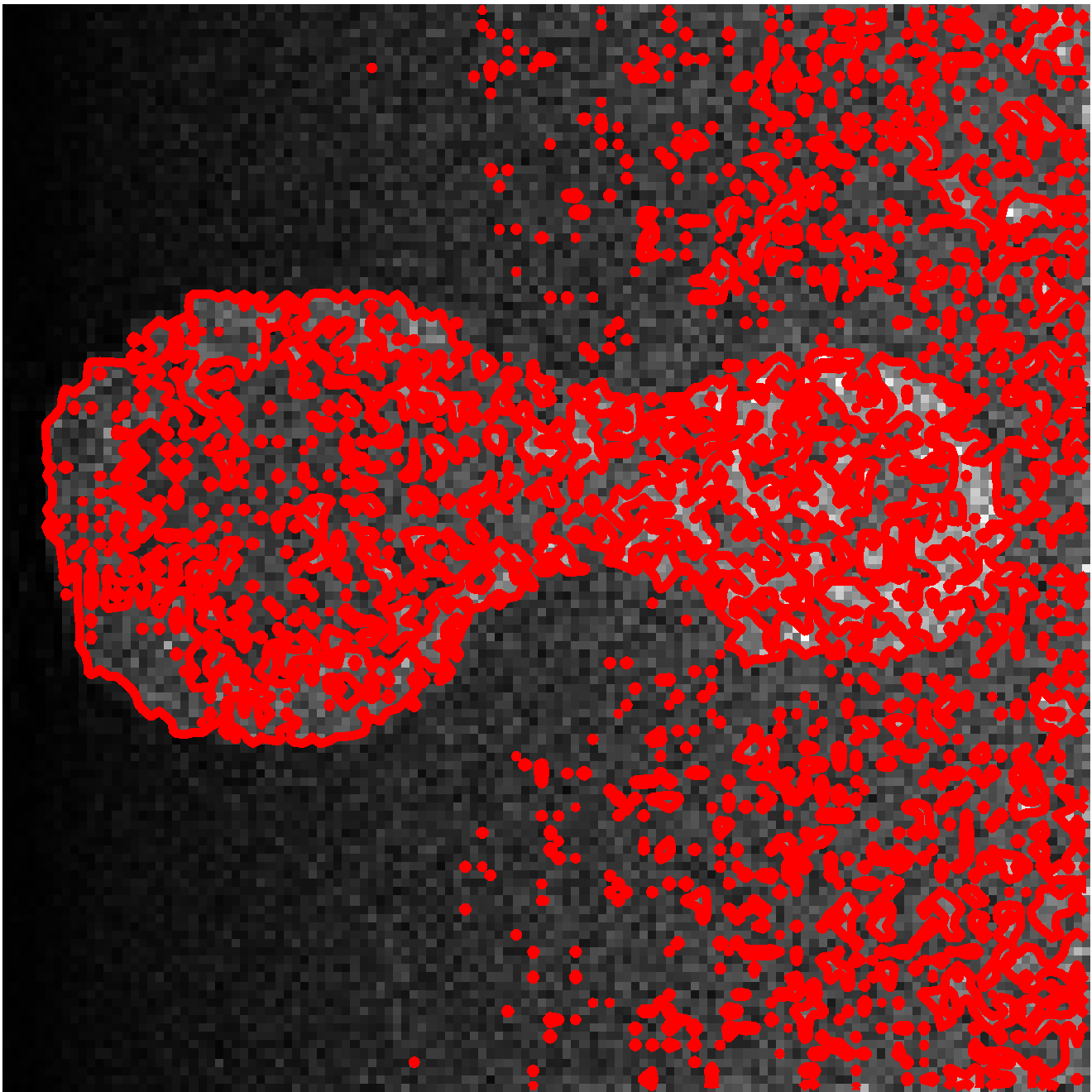}
  \end{subfigure}
   \hfill
  \begin{subfigure}[b]{0.16\linewidth}
      \centering
      \includegraphics[width=\textwidth]{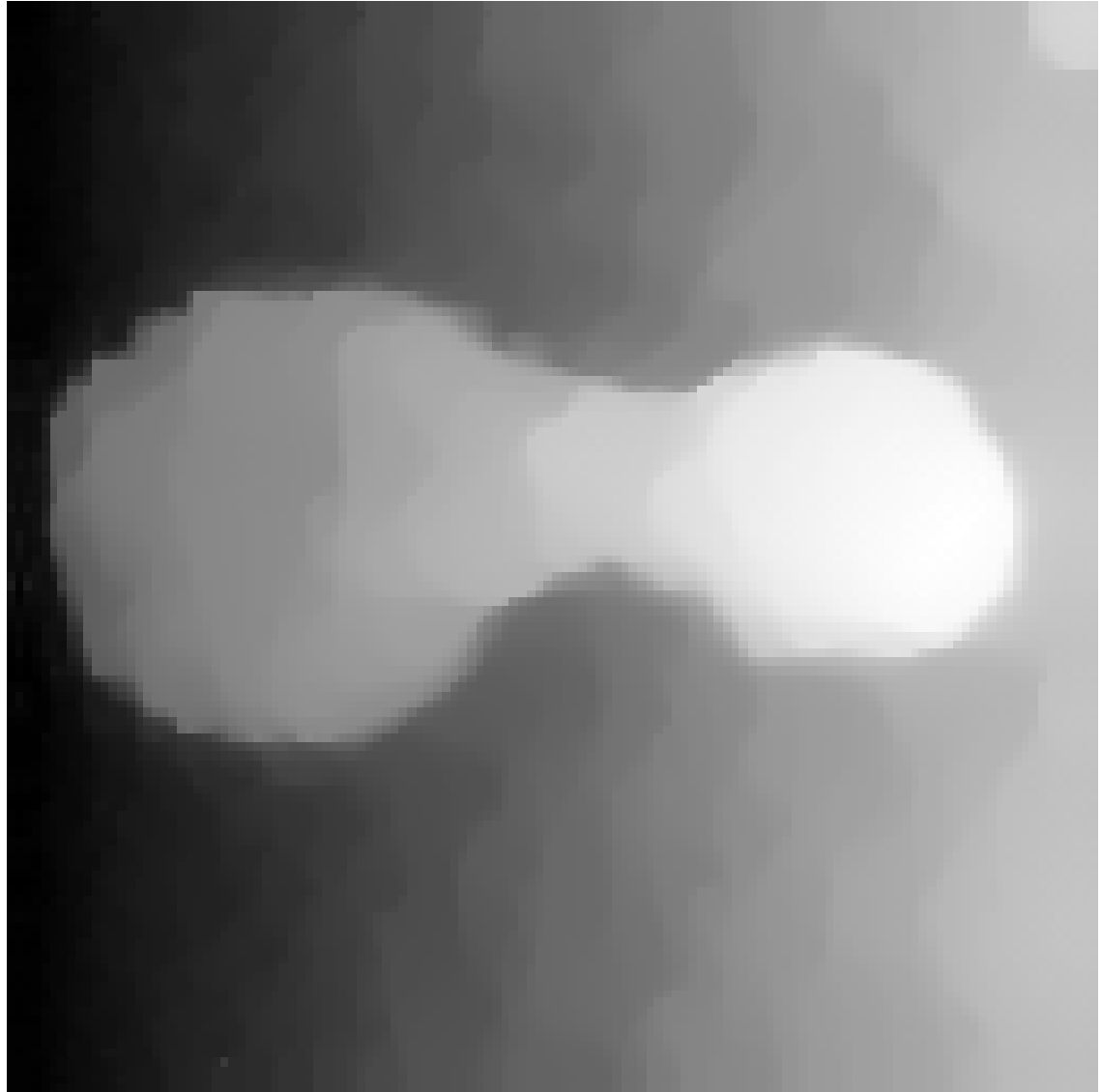}
  \end{subfigure}
   \hfill
  \begin{subfigure}[b]{0.16\linewidth}
      \centering
      \includegraphics[width=\textwidth]{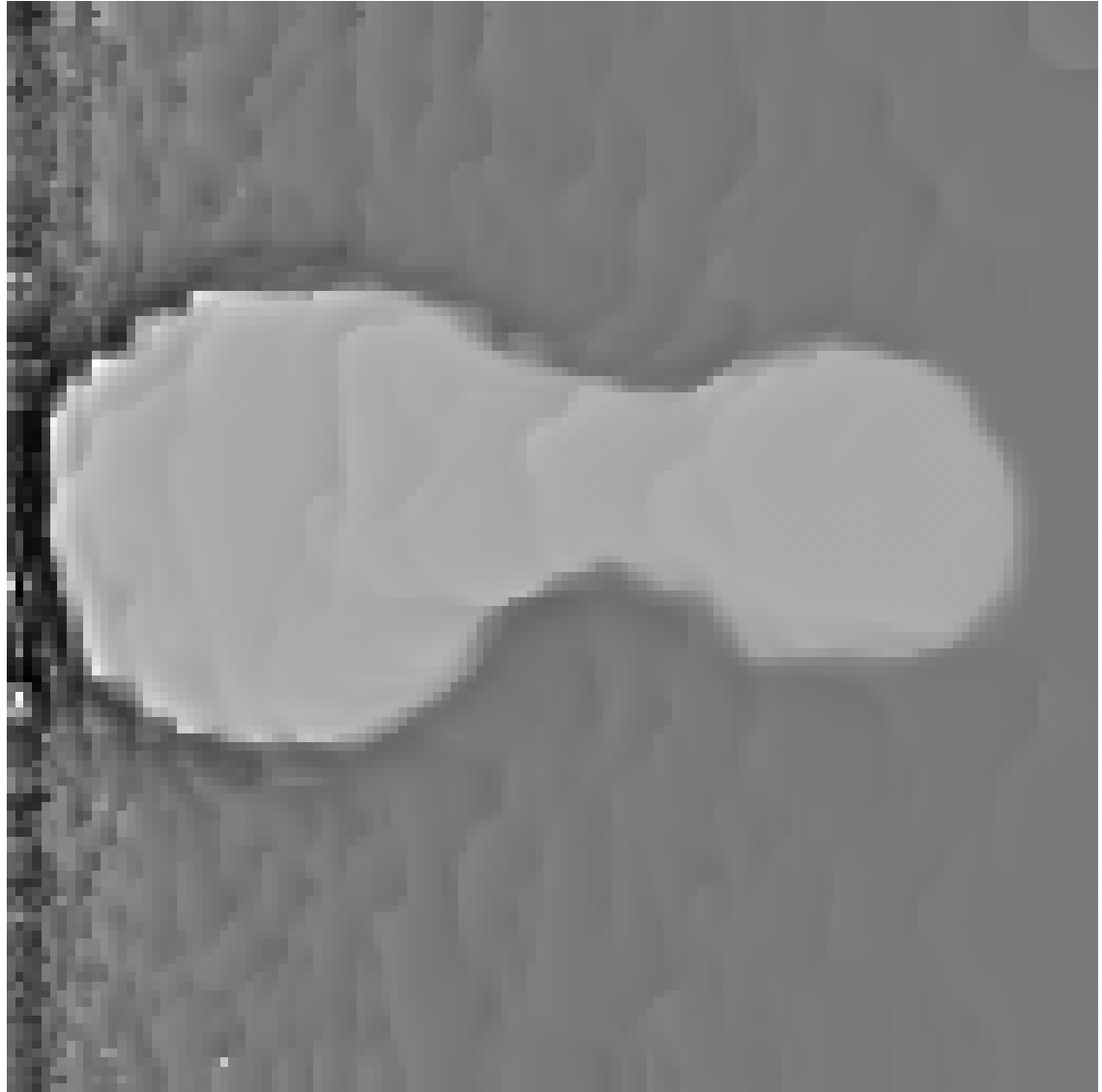}
  \end{subfigure}
   \hfill
  \begin{subfigure}[b]{0.16\linewidth}
      \centering
      \includegraphics[width=\textwidth]{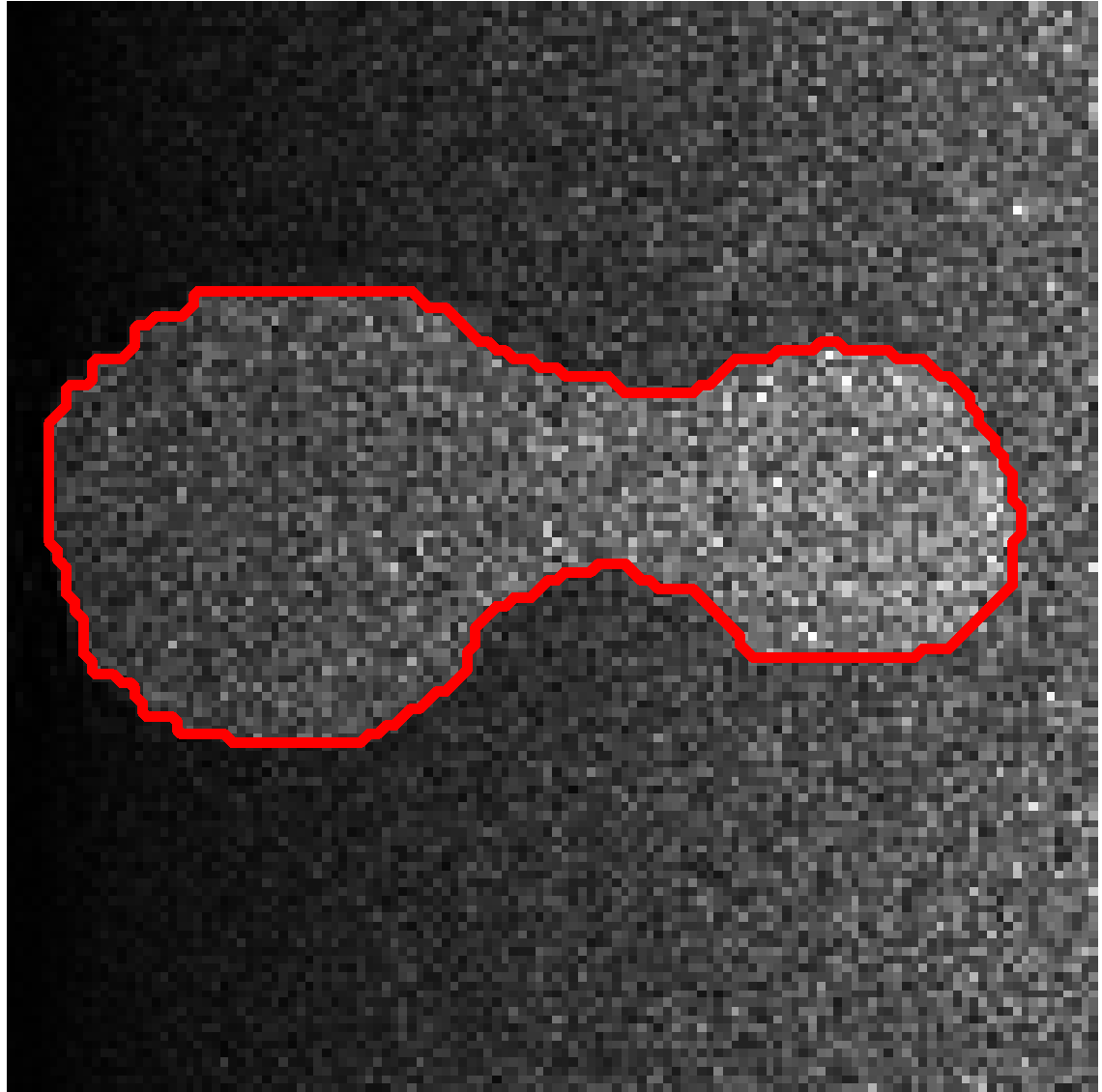}
  \end{subfigure}

  \begin{subfigure}[b]{0.16\linewidth}
      \centering
      \includegraphics[width=\textwidth]{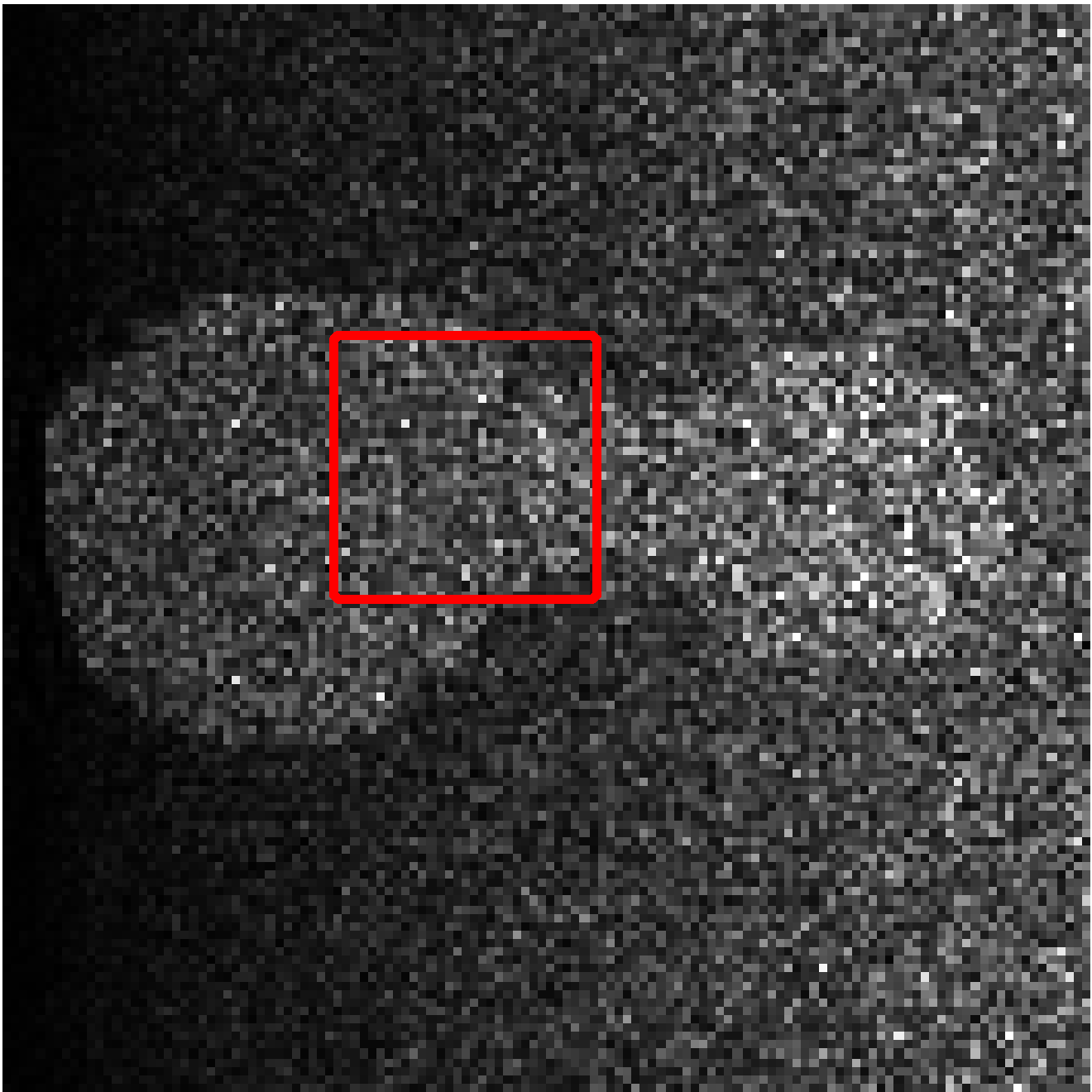}
  \end{subfigure}
   \hfill
  \begin{subfigure}[b]{0.16\linewidth}
      \centering
      \includegraphics[width=\textwidth]{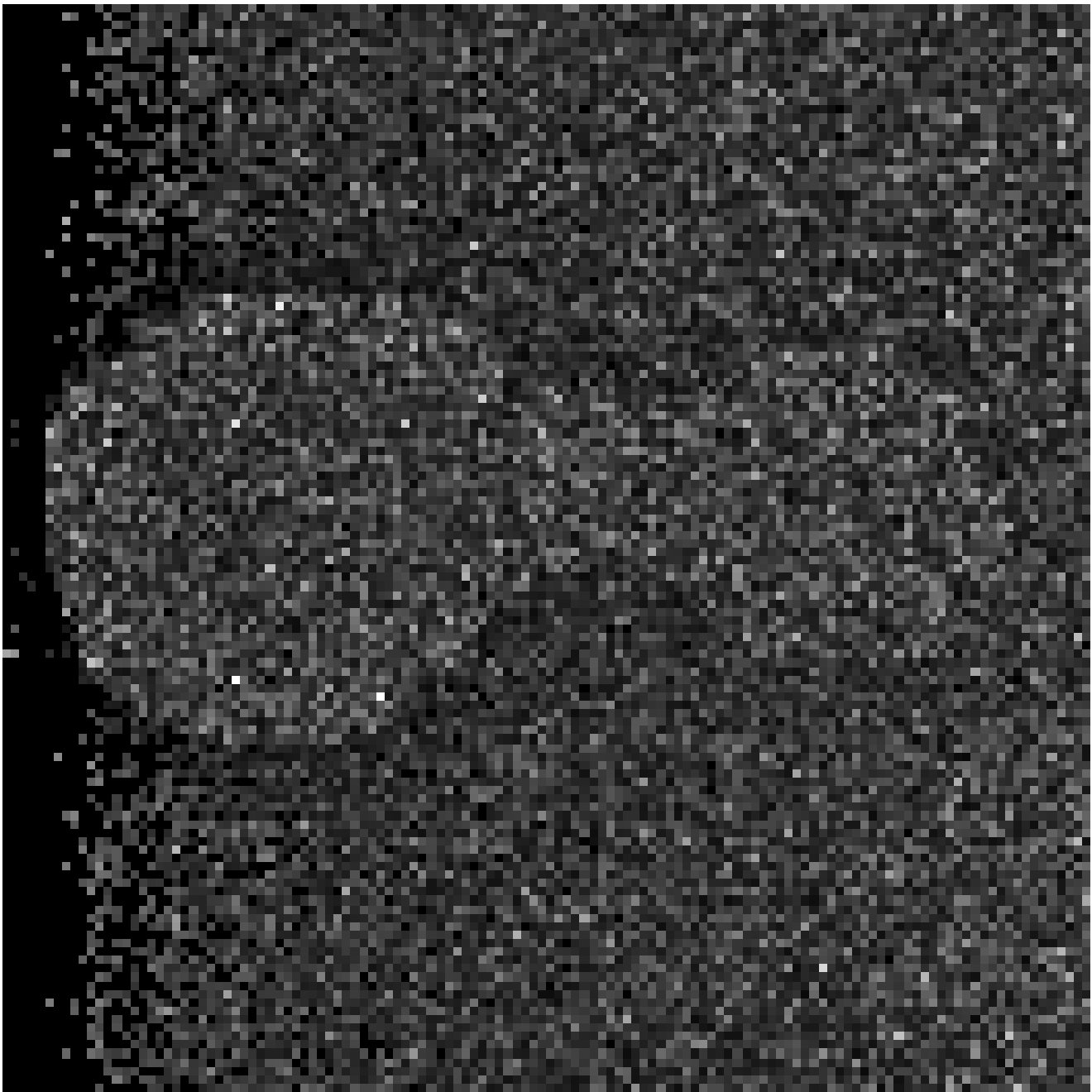}
  \end{subfigure}
   \hfill
  \begin{subfigure}[b]{0.16\linewidth}
      \centering
      \includegraphics[width=\textwidth]{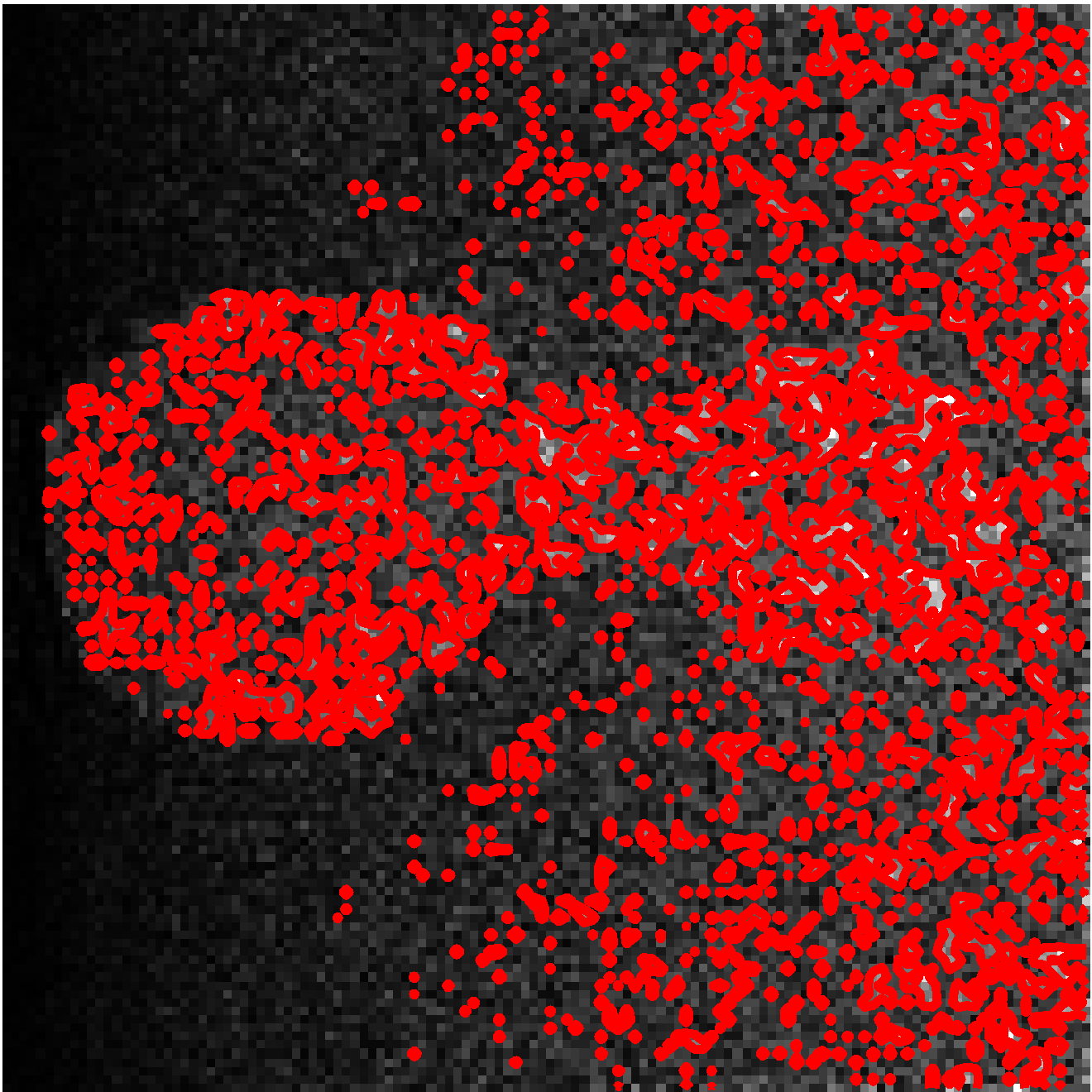}
  \end{subfigure}
   \hfill
  \begin{subfigure}[b]{0.16\linewidth}
      \centering
      \includegraphics[width=\textwidth]{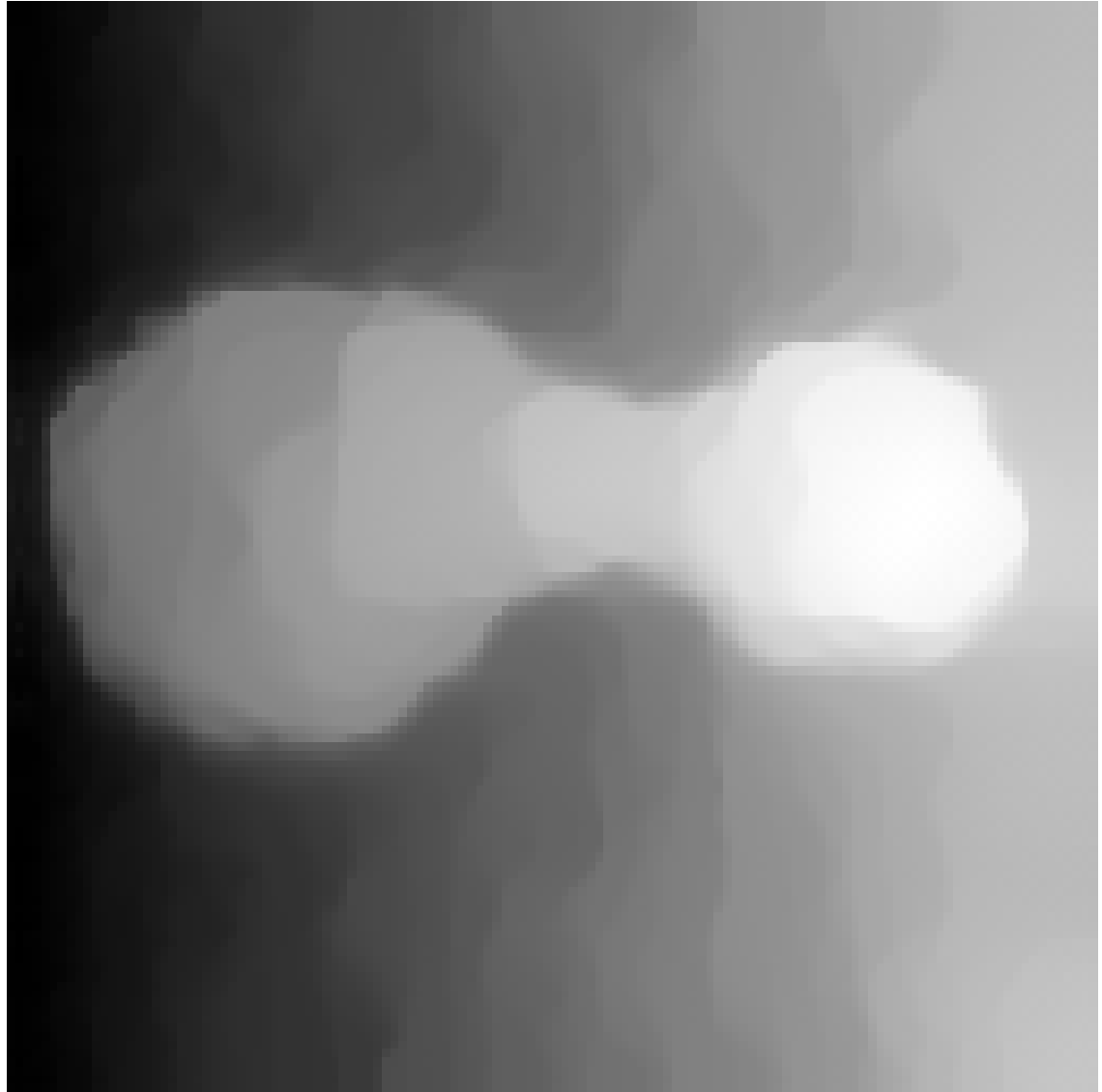}
  \end{subfigure}
   \hfill
  \begin{subfigure}[b]{0.16\linewidth}
      \centering
      \includegraphics[width=\textwidth]{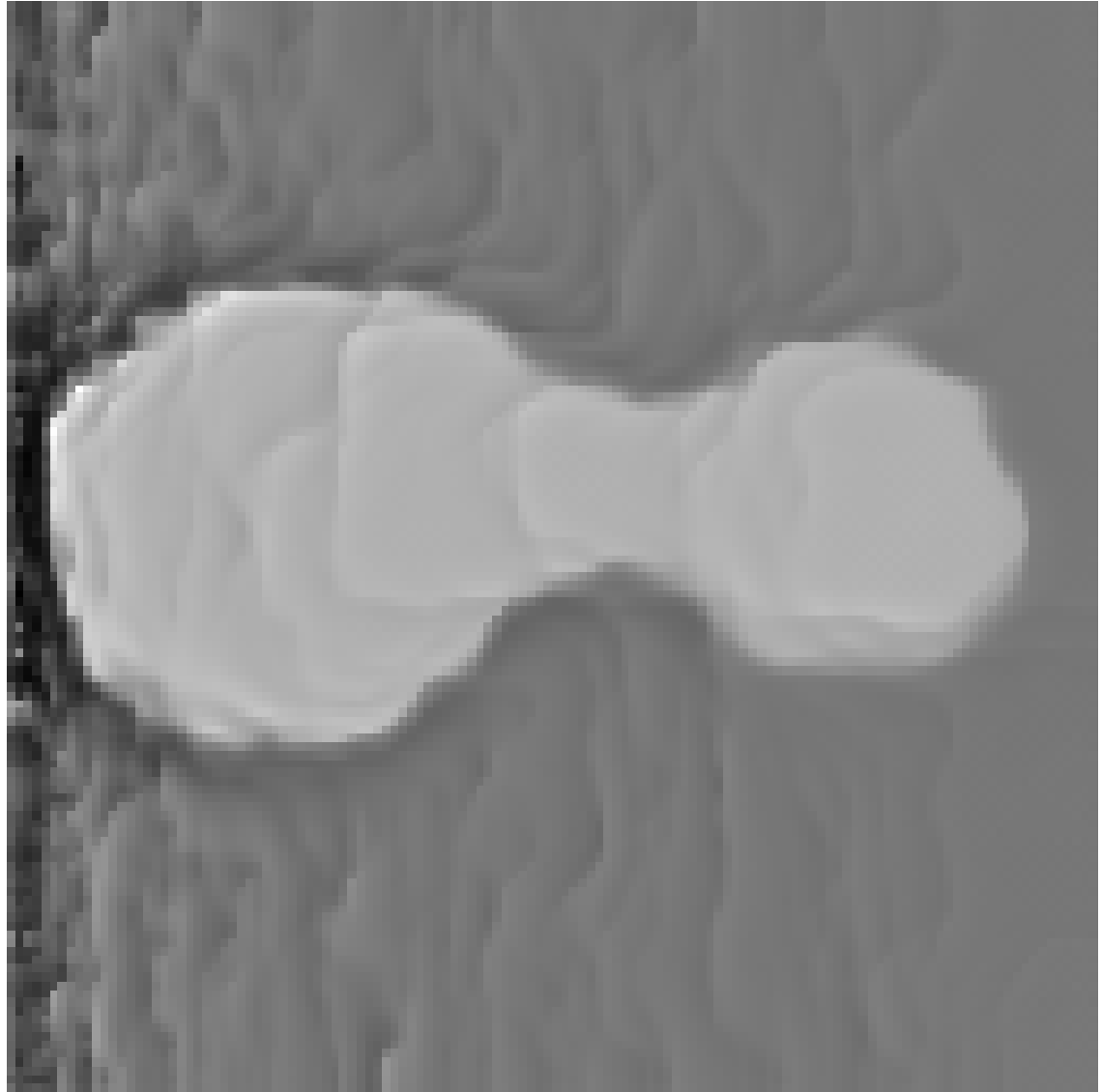}
  \end{subfigure}
   \hfill
  \begin{subfigure}[b]{0.16\linewidth}
      \centering
      \includegraphics[width=\textwidth]{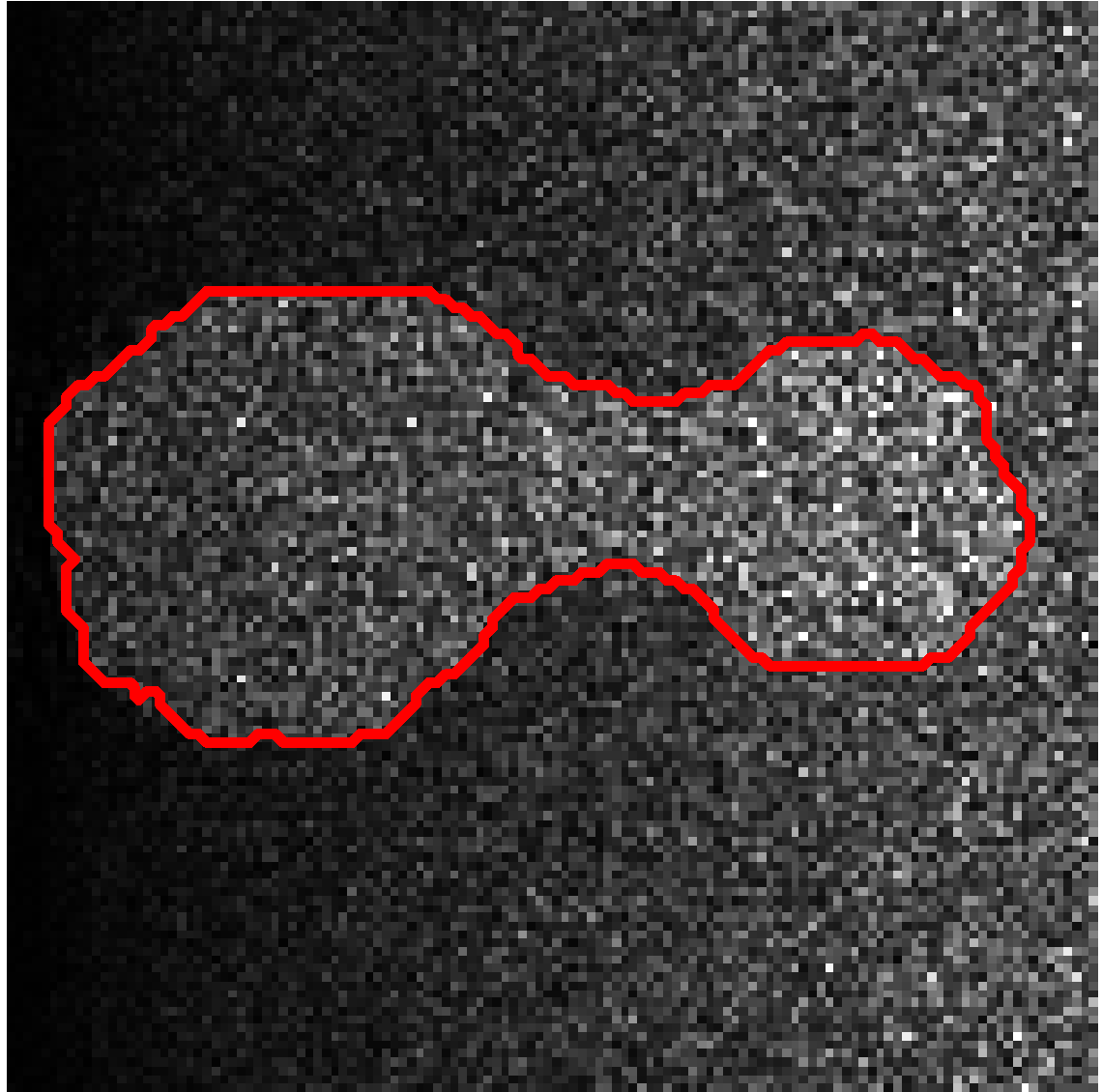}
  \end{subfigure}

  \begin{subfigure}[b]{0.16\linewidth}
      \centering
      \includegraphics[width=\textwidth]{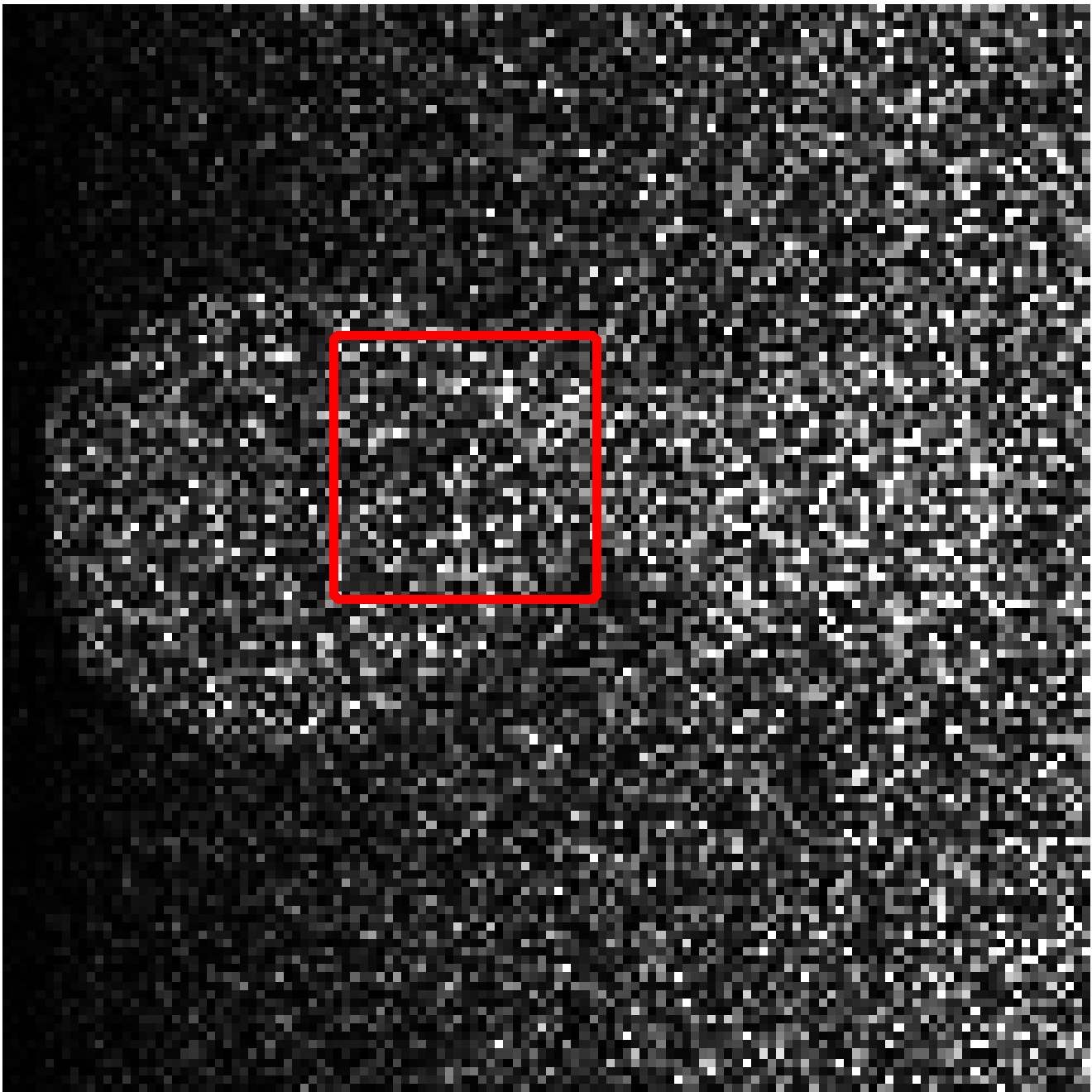}
      \caption{}
  \end{subfigure}
   \hfill
  \begin{subfigure}[b]{0.16\linewidth}
      \centering
      \includegraphics[width=\textwidth]{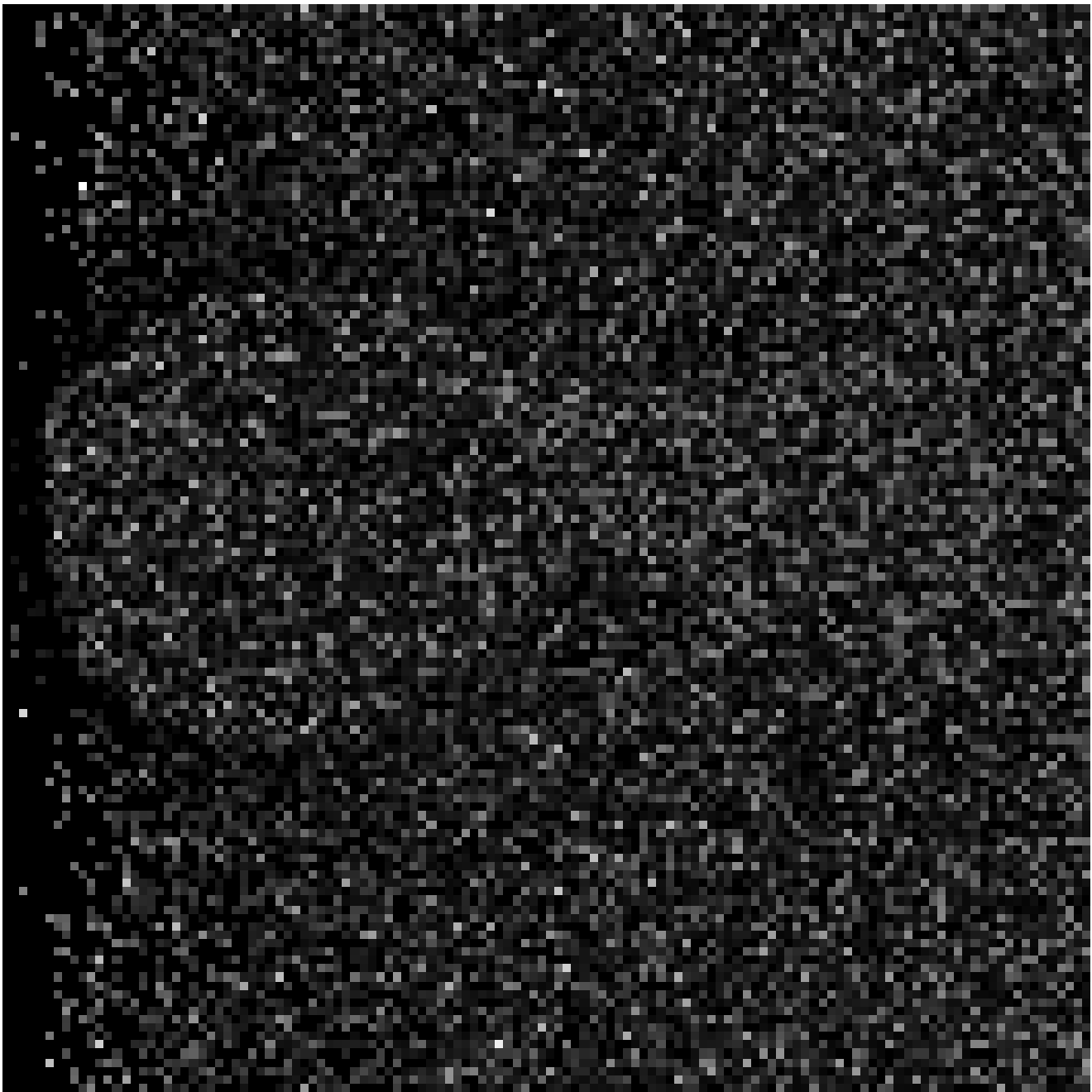}
      \caption{}
  \end{subfigure}
   \hfill
  \begin{subfigure}[b]{0.16\linewidth}
      \centering
      \includegraphics[width=\textwidth]{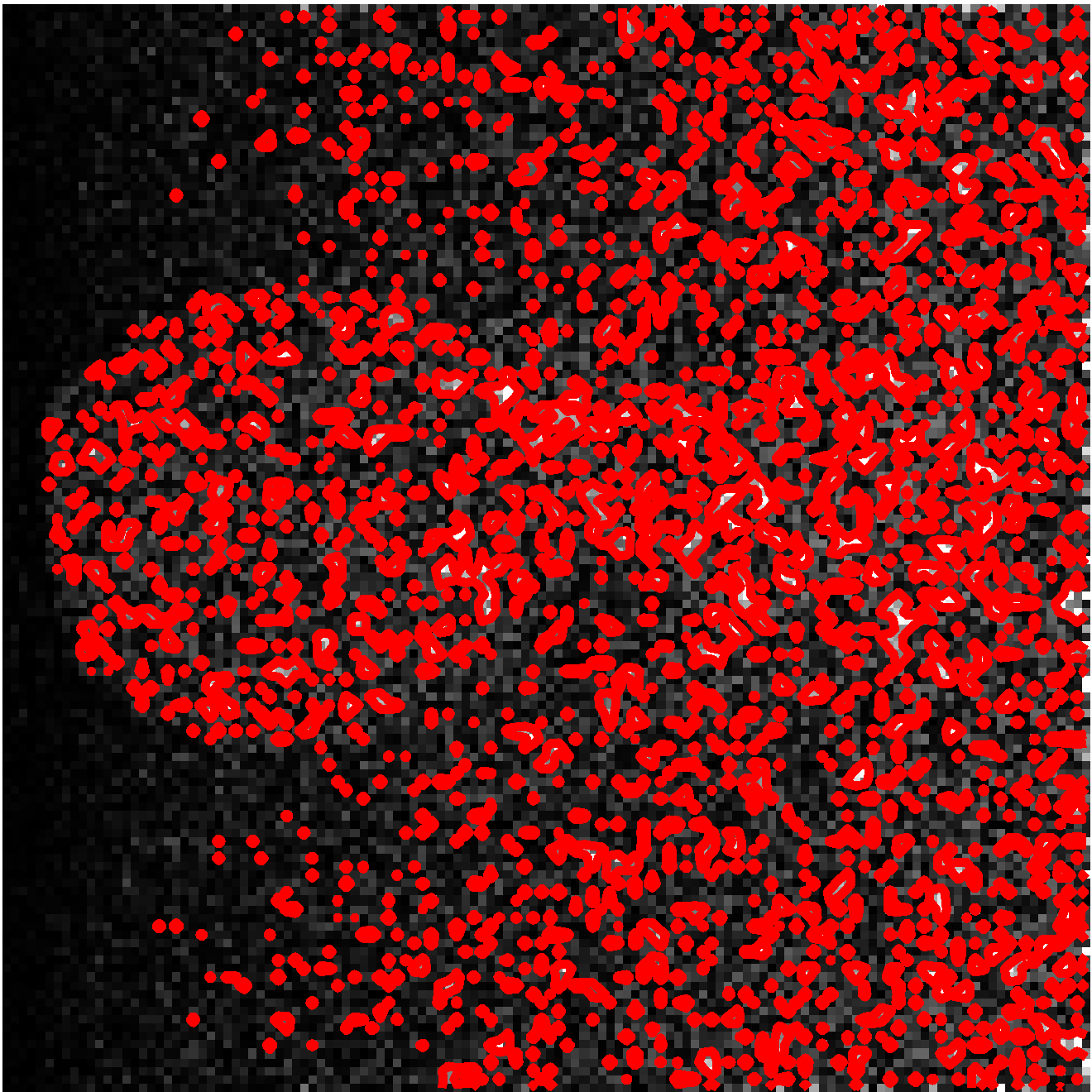}
      \caption{}
  \end{subfigure}
   \hfill
  \begin{subfigure}[b]{0.16\linewidth}
      \centering
      \includegraphics[width=\textwidth]{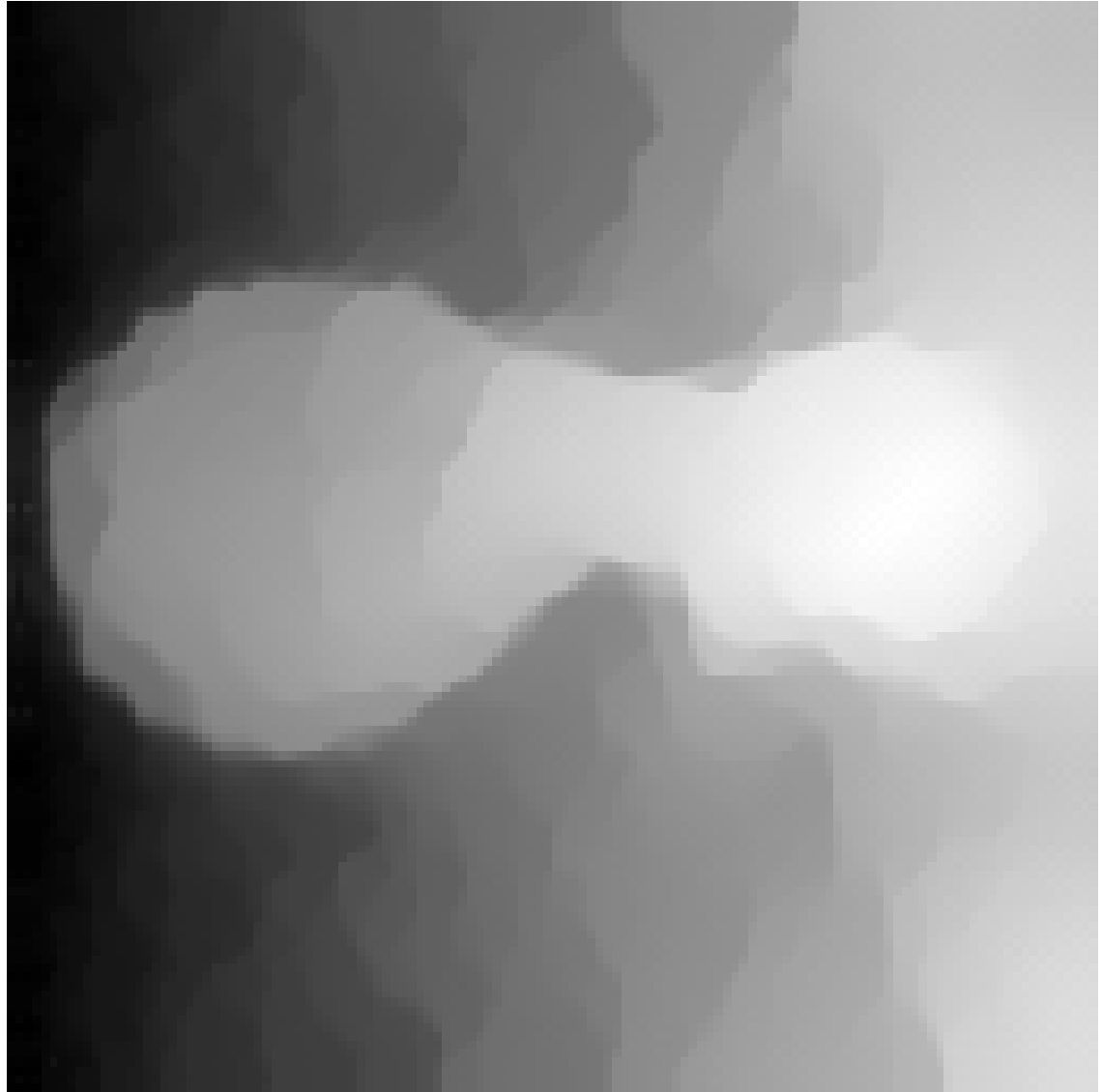}
      \caption{}
  \end{subfigure}
   \hfill
  \begin{subfigure}[b]{0.16\linewidth}
      \centering
      \includegraphics[width=\textwidth]{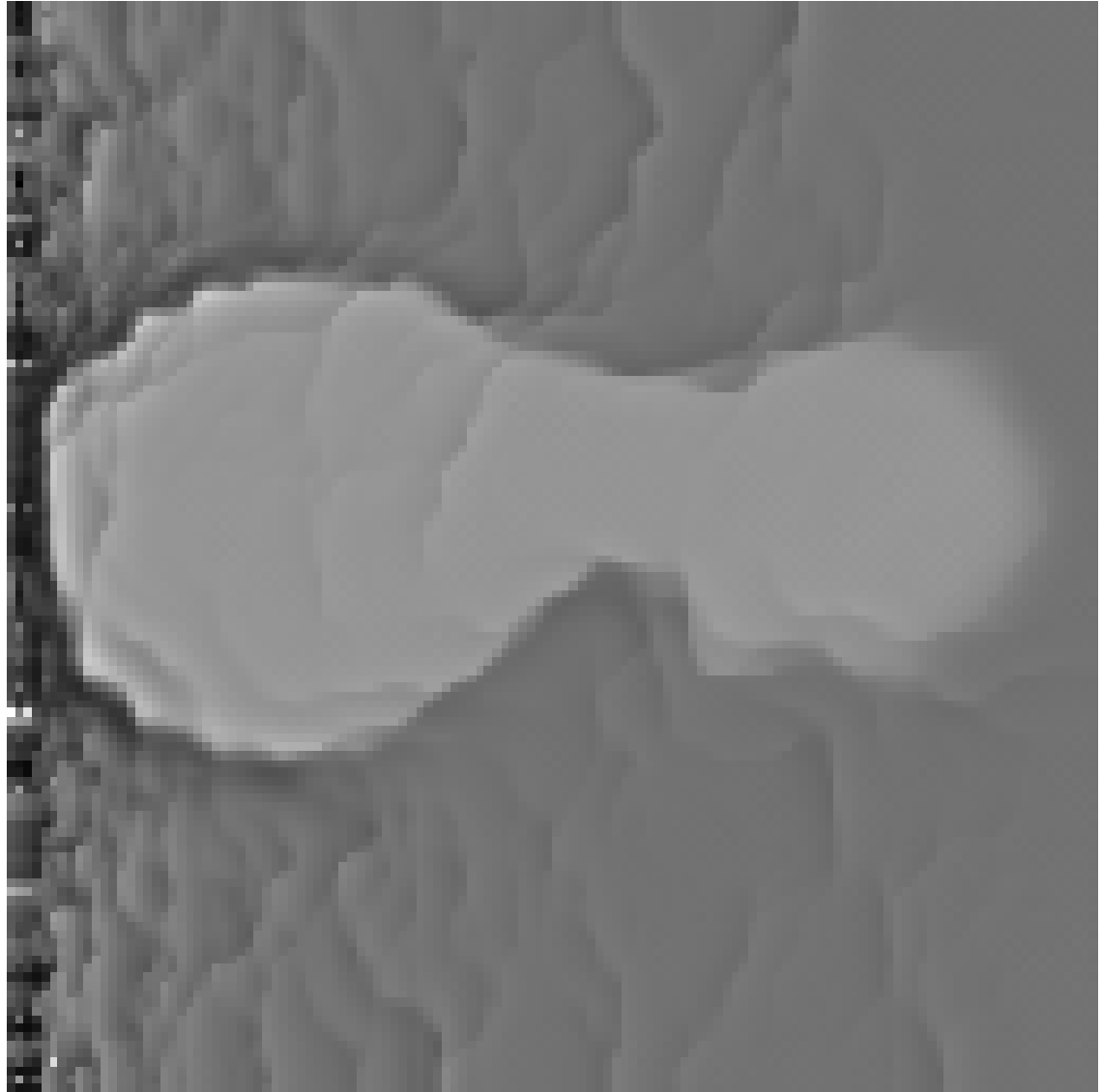}
      \caption{}
  \end{subfigure}
   \hfill
  \begin{subfigure}[b]{0.16\linewidth}
      \centering
      \includegraphics[width=\textwidth]{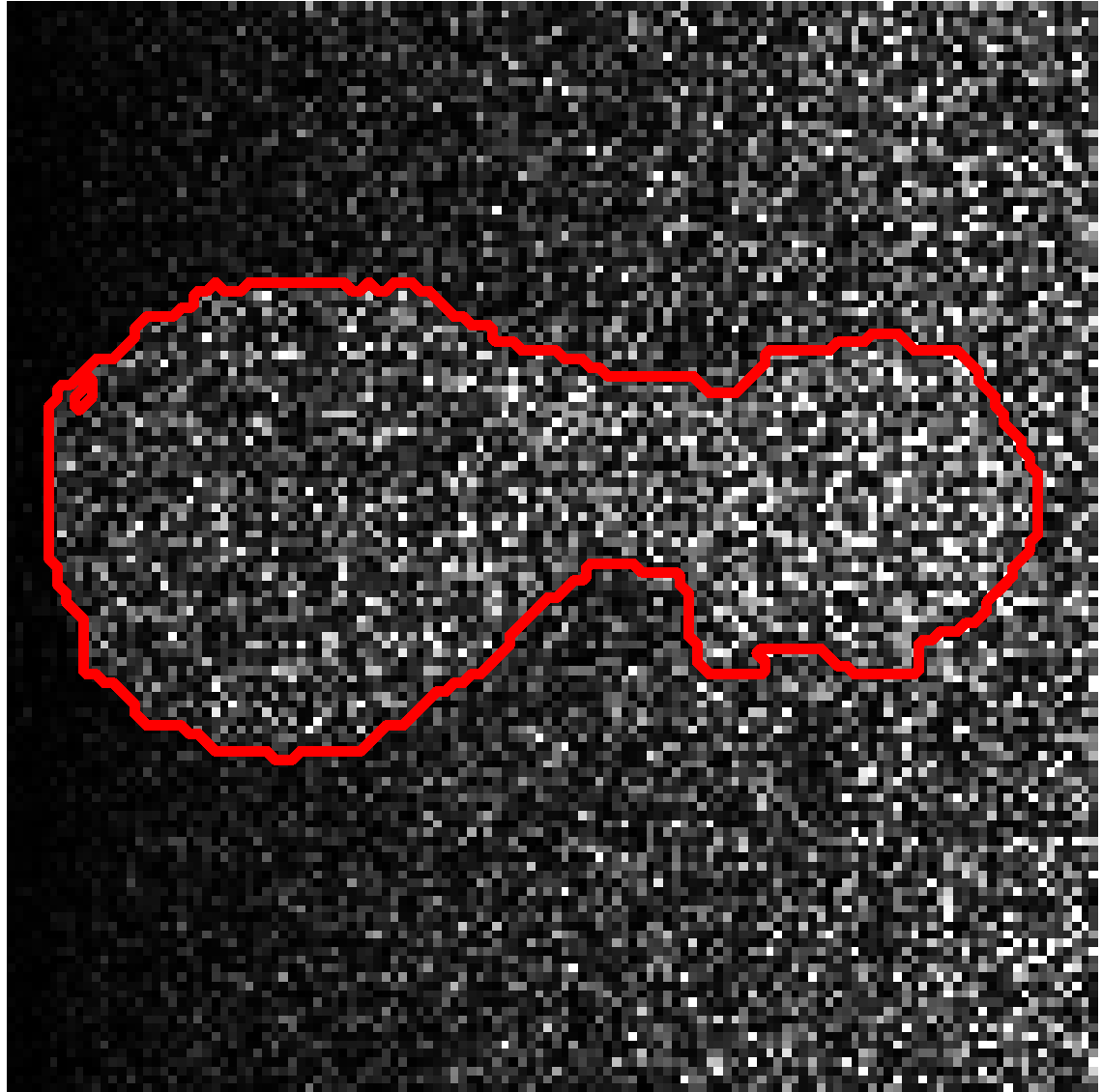}
      \caption{}
  \end{subfigure}
  \caption{Robustness to noise. (a): input images (original, Poisson noise, and Gamma noise with $L=10,4,1$); (b)--(c): bias correction and segmentation by LIC; (d)--(f): denoising, bias correction and segmentation by our model}
  \label{fig:noise-seg}
\end{figure}
To evaluate the robustness of our model to noise, we conduct segmentation experiments comparing it with the LIC model on synthetic images with intensity inhomogeneity degraded by various types of noise. The segmentation results are presented in Fig.~\ref{fig:noise-seg}. When the noise level is relatively low, the LIC model produces accurate segmentation results. This can be attributed to the incorporation of a regularization term (length term), which enhances the model's robustness to noise to some extent. However, the LIC model remains susceptible to noise, leading to misclassification of isolated background noise points as part of the target at higher noise levels. This effect becomes progressively more pronounced as the noise level increases. In contrast, our model achieves accurate segmentation results on images corrupted by different types of noise. In addition, due to the denoising mechanism, our corrected image can be regarded as denoised and then bias corrected; consequently, it is smoother than that obtained by the LIC model. Since our corrected images exhibit clear boundaries, accurate segmentation results can be obtained even when the original image is severely corrupted by noise.

\subsubsection{Robustness to Initialization}
\begin{figure}
  \centering
  \begin{subfigure}[b]{0.19\linewidth}
      \centering
      \includegraphics[width=\textwidth]{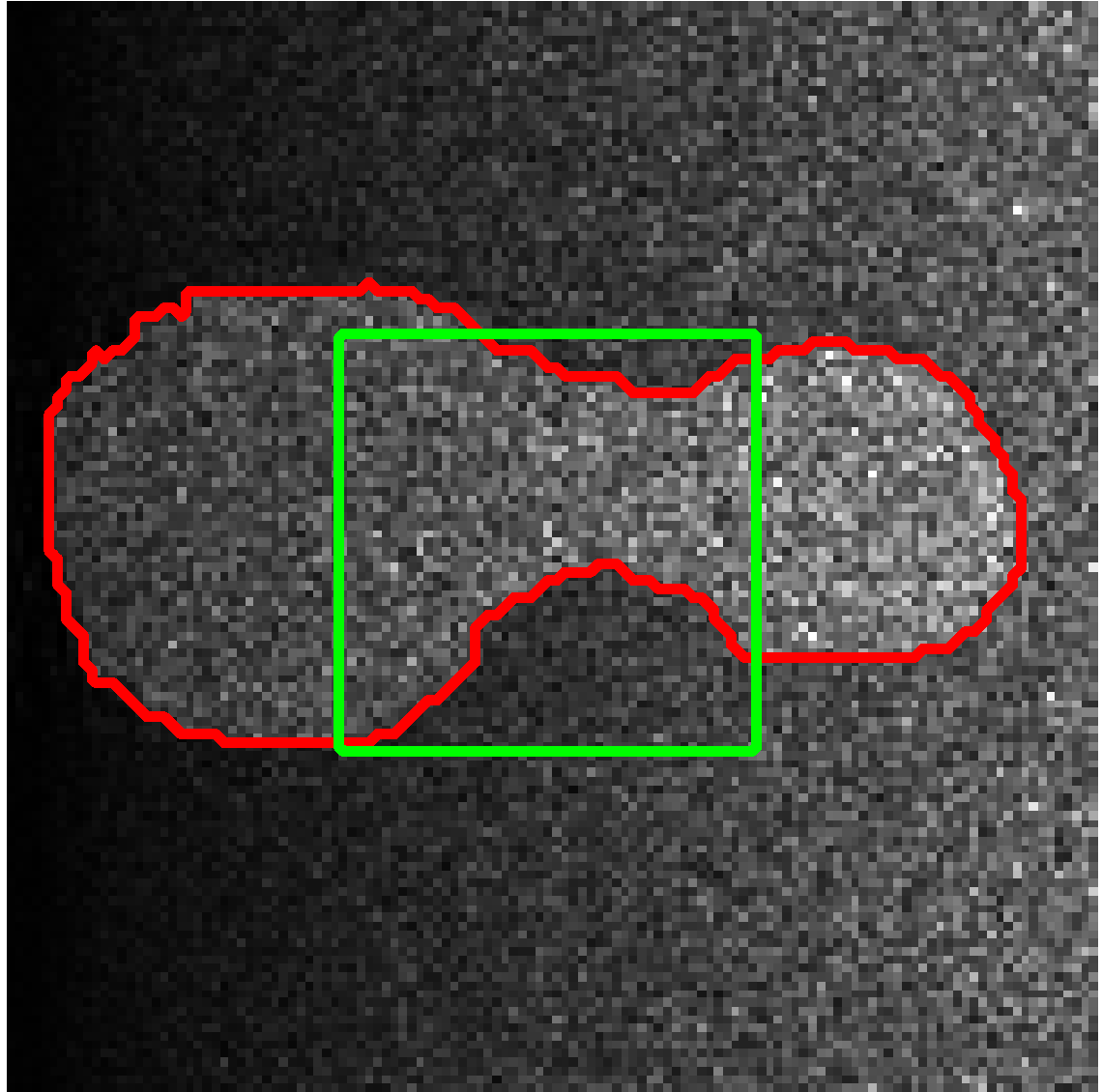}
  \end{subfigure}
  \hfill
  \begin{subfigure}[b]{0.19\linewidth}
      \centering
      \includegraphics[width=\textwidth]{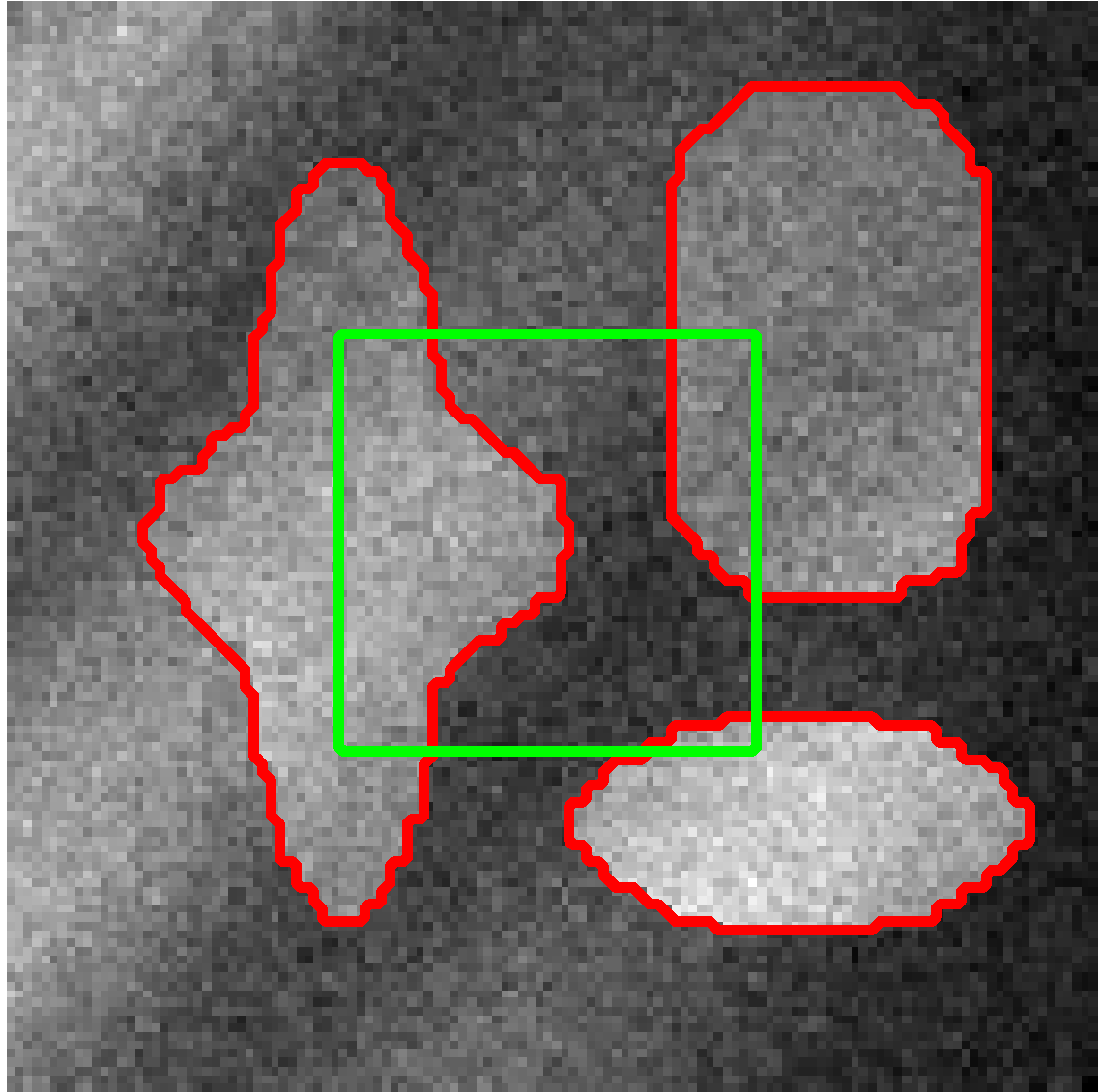}
  \end{subfigure}
 \hfill
  \begin{subfigure}[b]{0.19\linewidth}
      \centering
      \includegraphics[width=\textwidth]{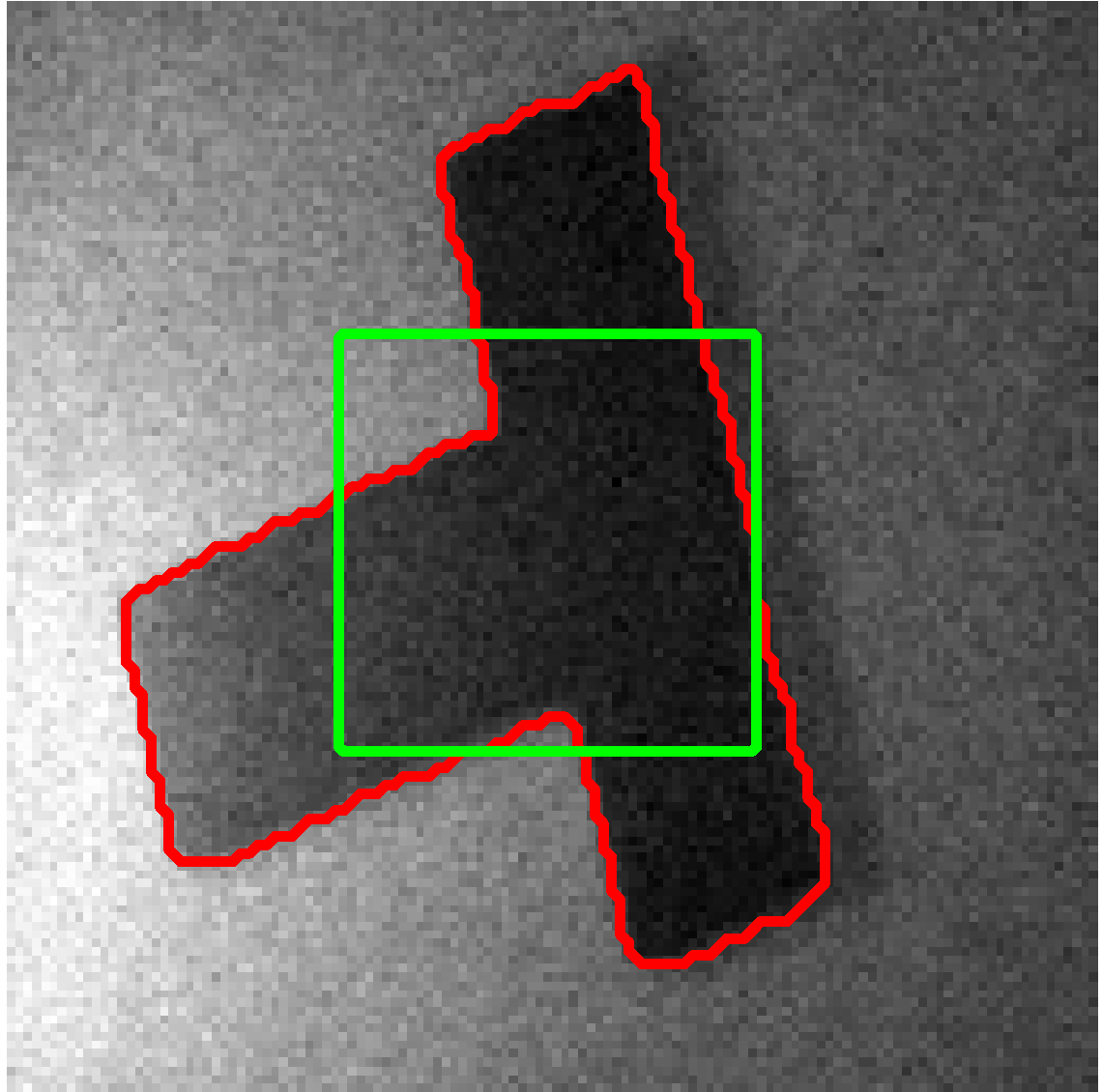}
  \end{subfigure}
  \hfill
  \begin{subfigure}[b]{0.19\linewidth}
      \centering
      \includegraphics[width=\textwidth]{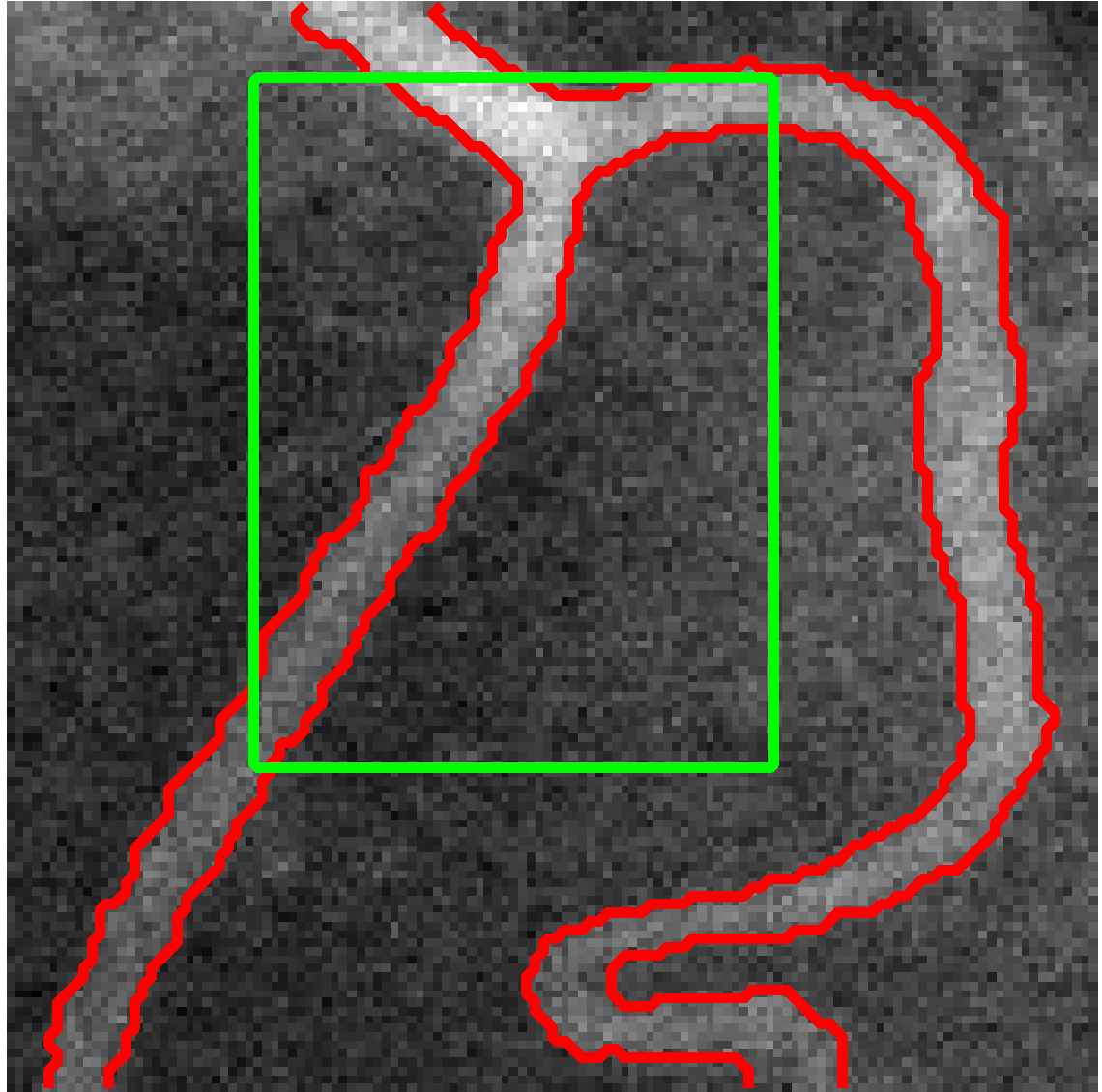}
  \end{subfigure}
  \hfill
  \begin{subfigure}[b]{0.19\linewidth}
      \centering
      \includegraphics[width=\textwidth]{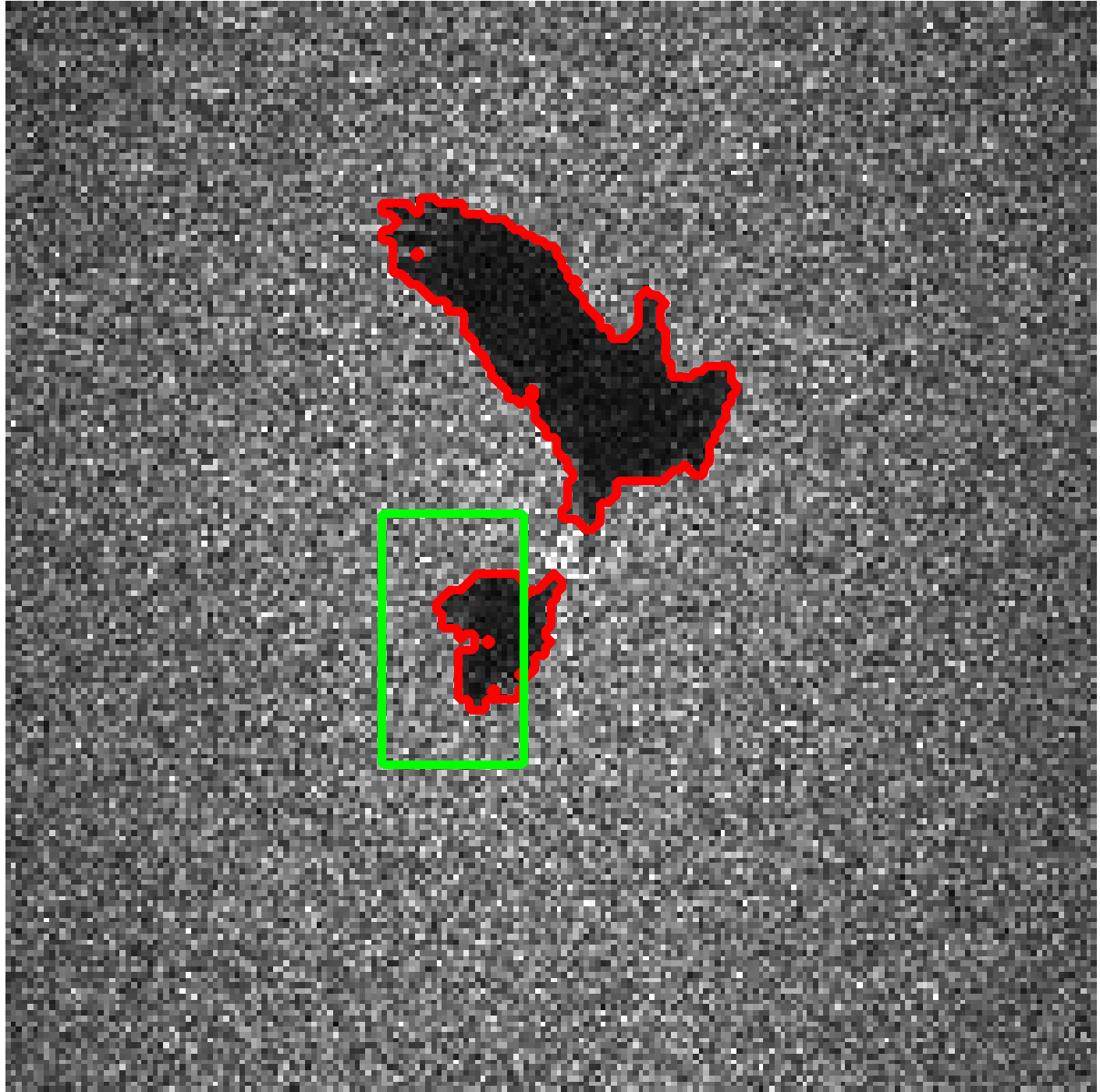}
  \end{subfigure}

   \begin{subfigure}[b]{0.19\linewidth}
      \centering
      \includegraphics[width=\textwidth]{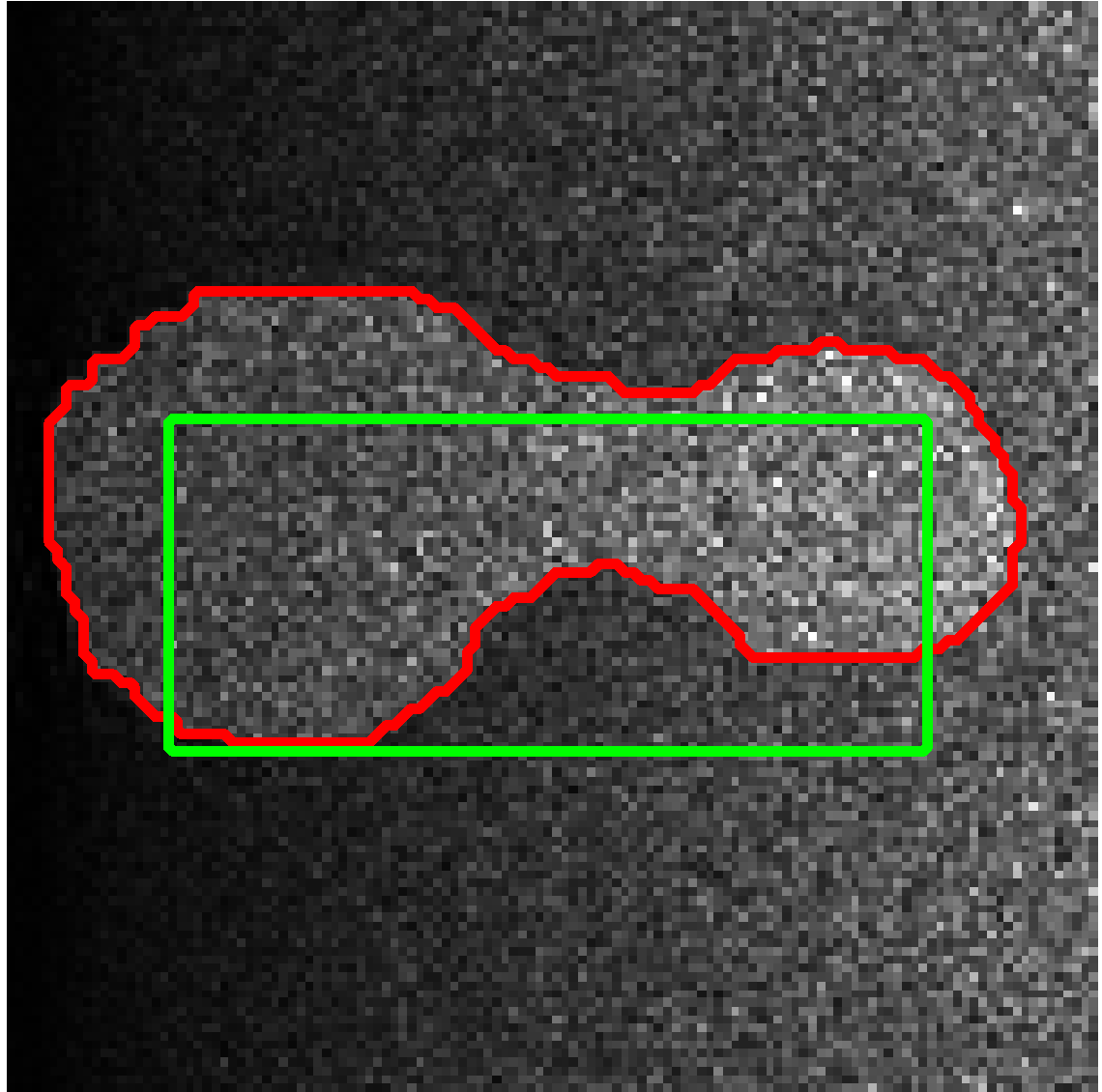}
  \end{subfigure}
  \hfill
  \begin{subfigure}[b]{0.19\linewidth}
      \centering
      \includegraphics[width=\textwidth]{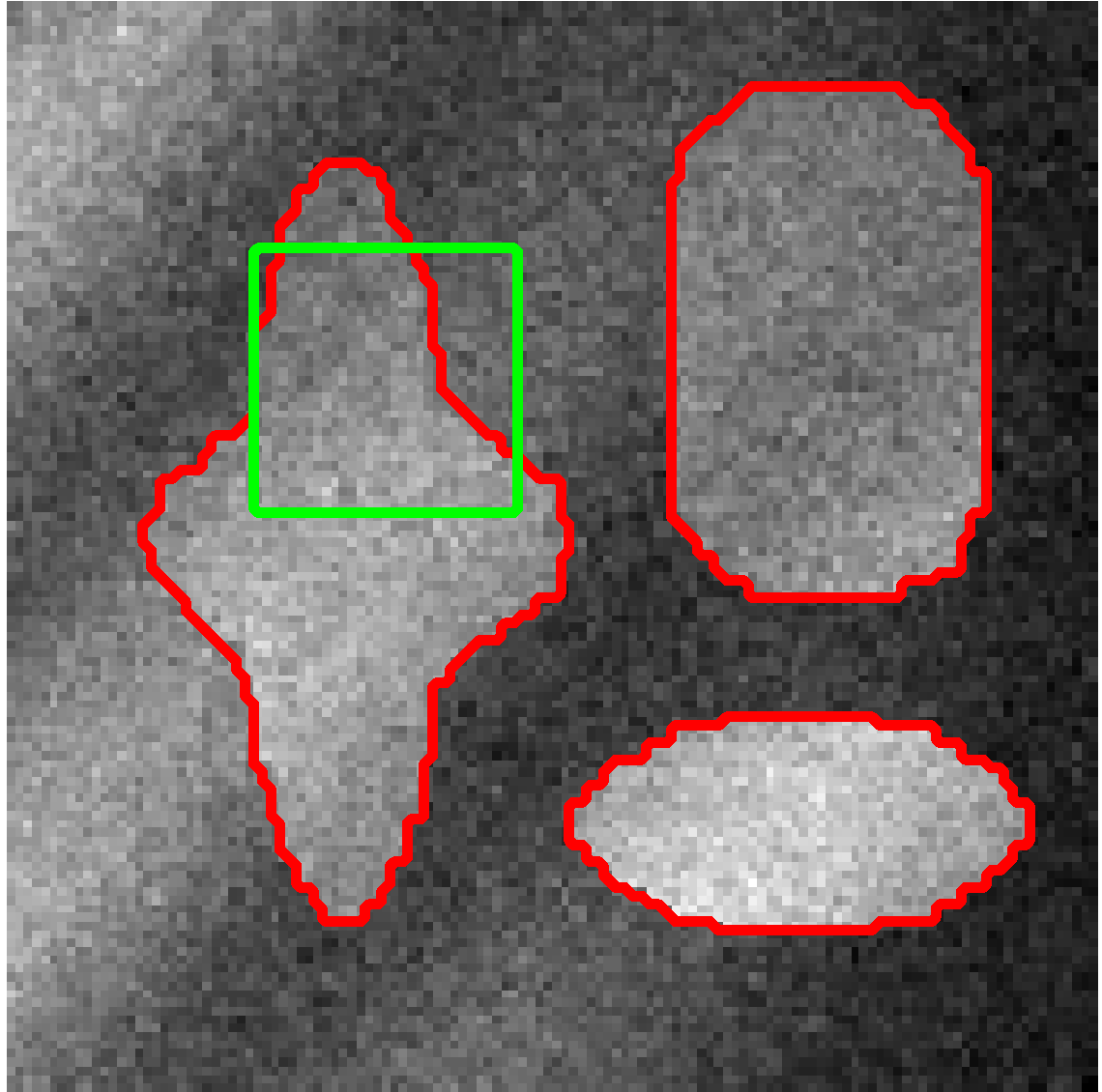}
  \end{subfigure}
 \hfill
  \begin{subfigure}[b]{0.19\linewidth}
      \centering
      \includegraphics[width=\textwidth]{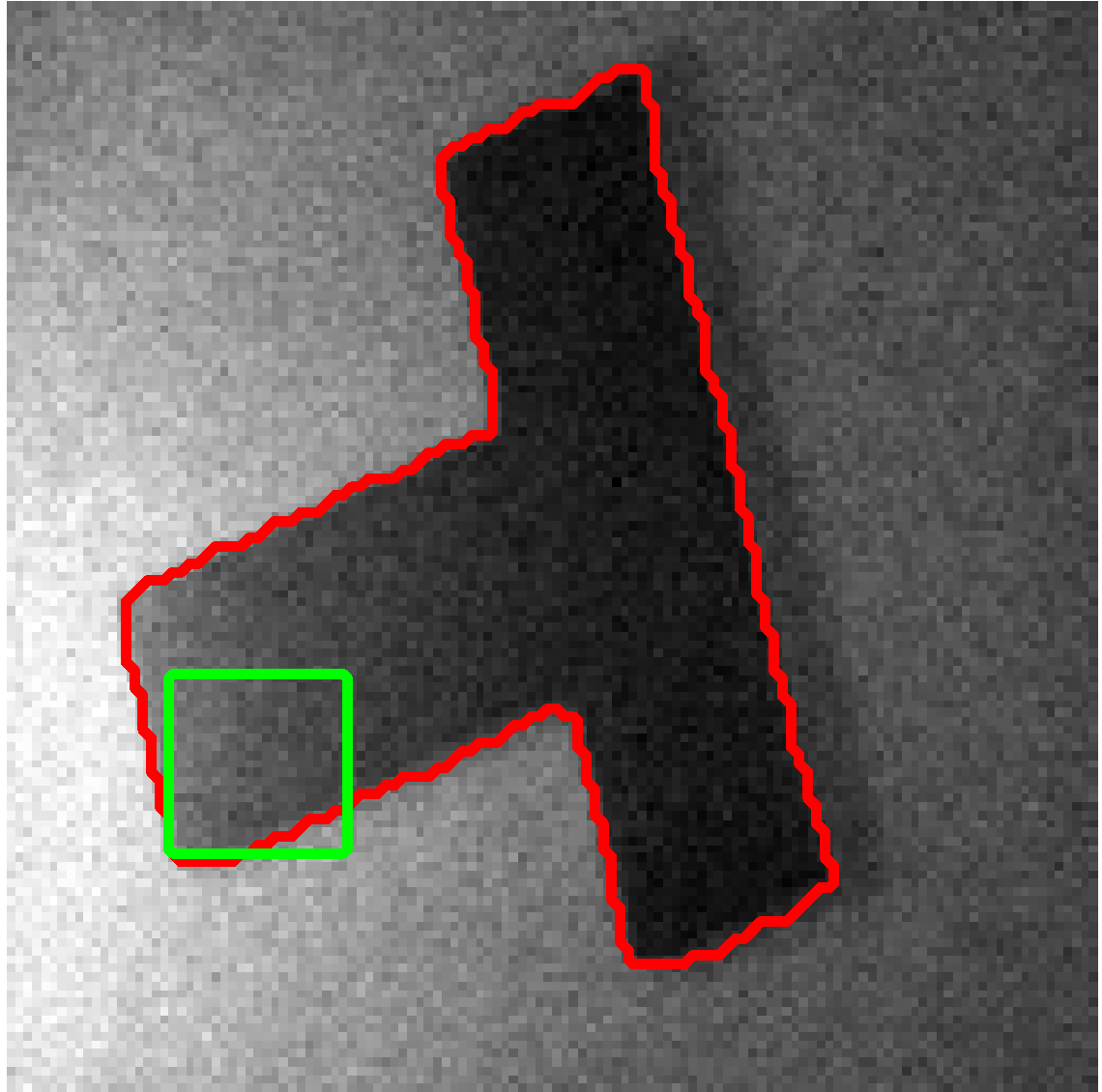}
  \end{subfigure}
  \hfill
  \begin{subfigure}[b]{0.19\linewidth}
      \centering
      \includegraphics[width=\textwidth]{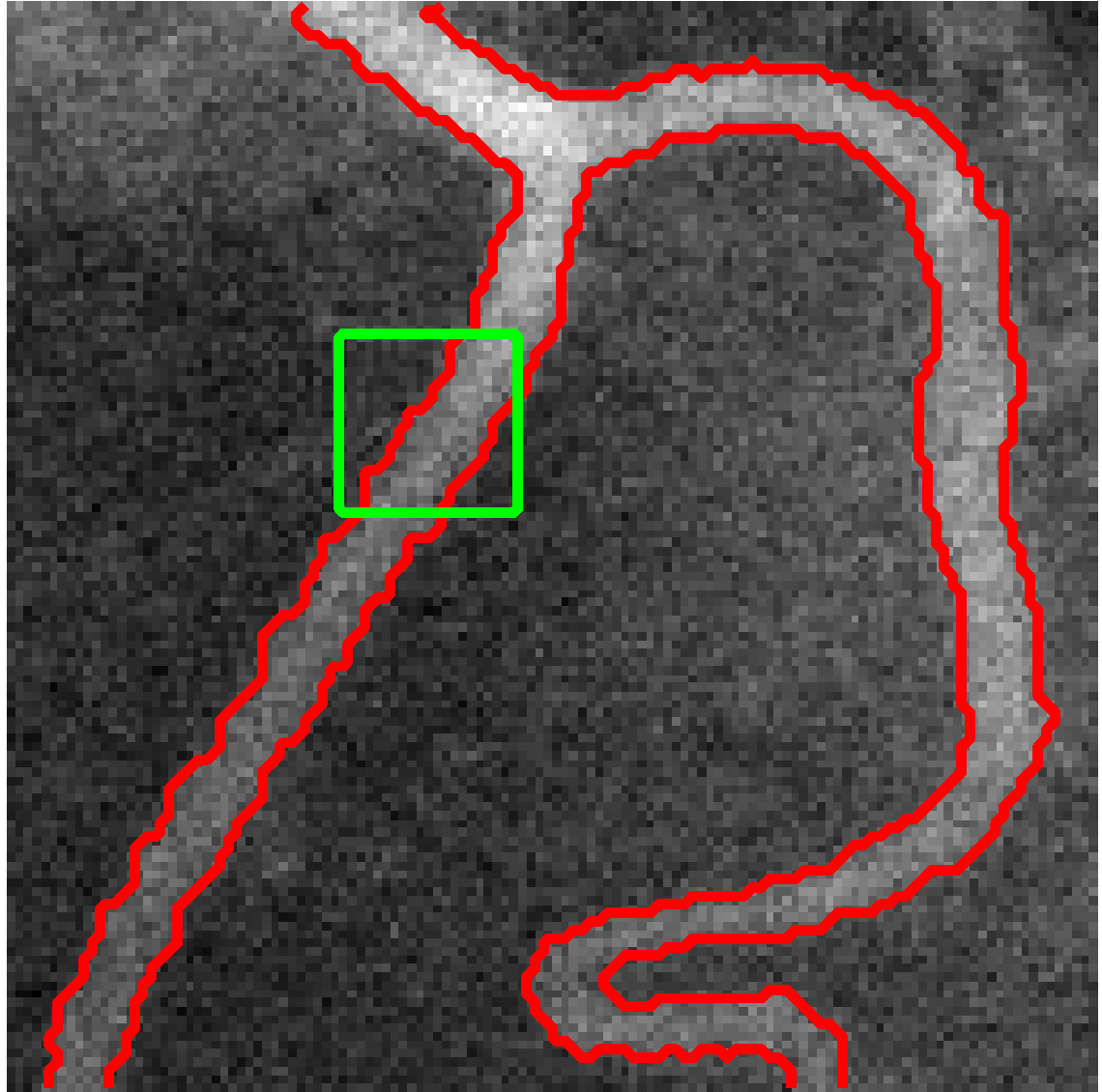}
  \end{subfigure}
  \hfill
  \begin{subfigure}[b]{0.19\linewidth}
      \centering
      \includegraphics[width=\textwidth]{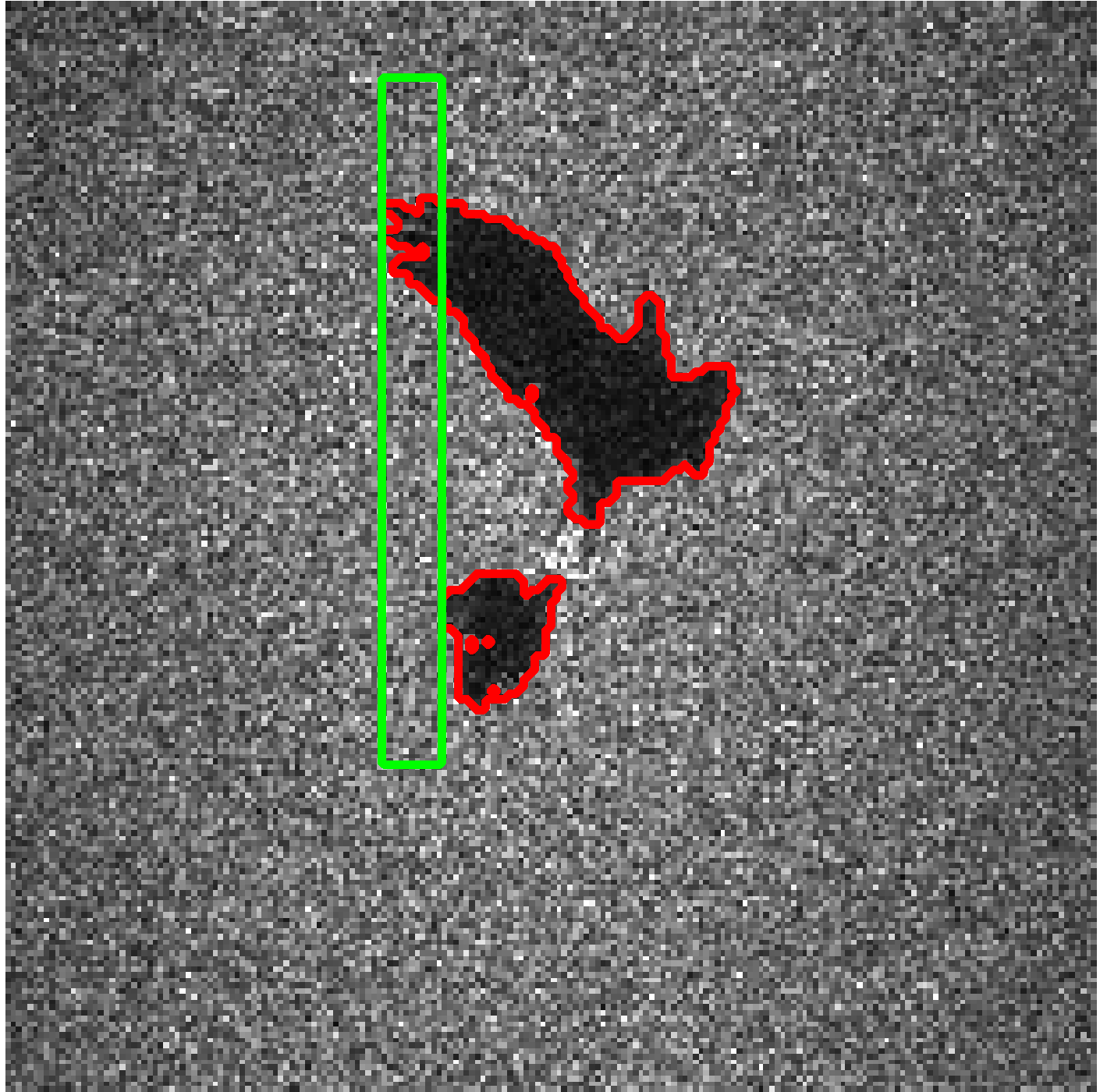}
  \end{subfigure}

    \begin{subfigure}[b]{0.19\linewidth}
      \centering
      \includegraphics[width=\textwidth]{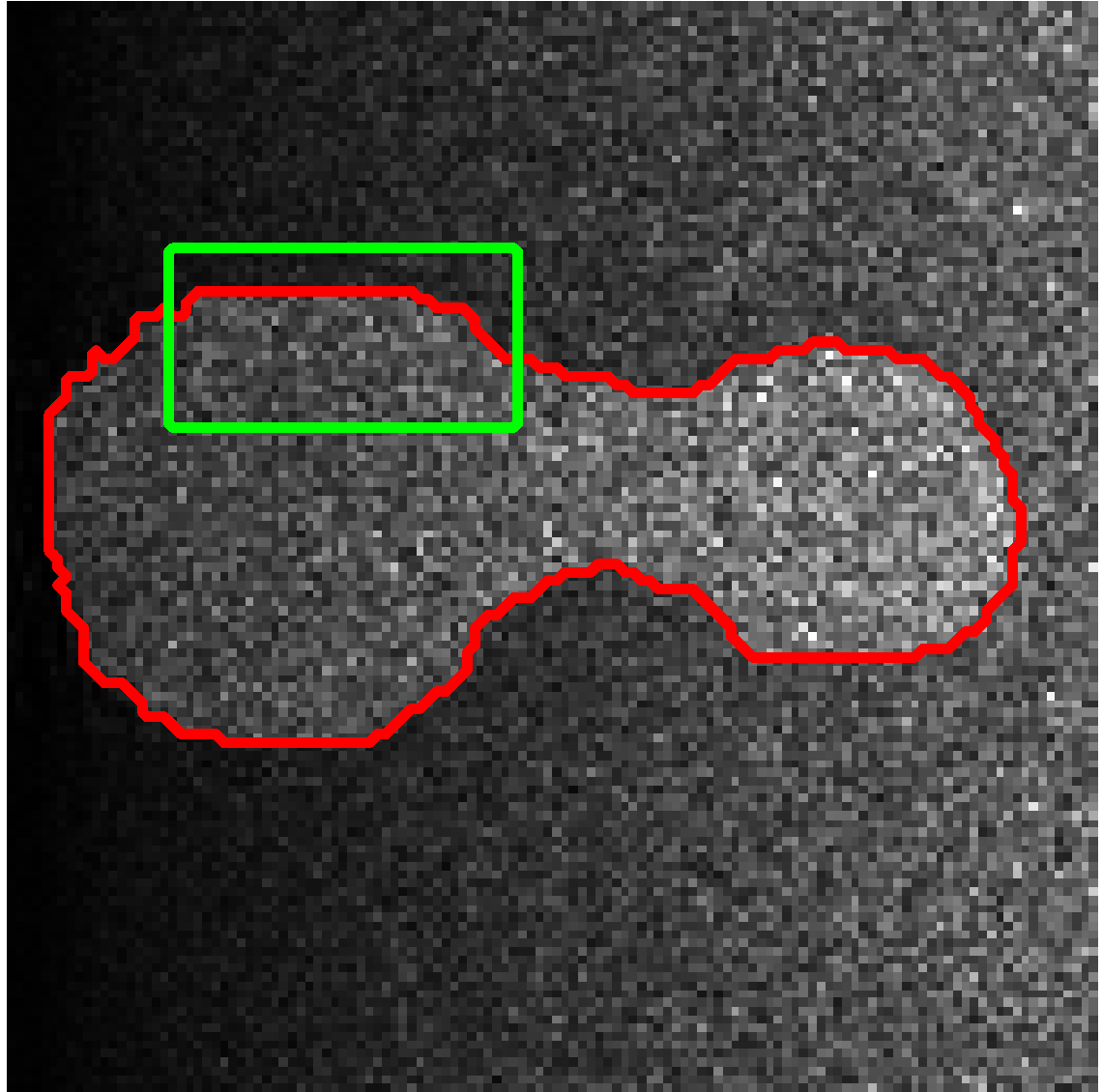}
  \end{subfigure}
  \hfill
  \begin{subfigure}[b]{0.19\linewidth}
      \centering
      \includegraphics[width=\textwidth]{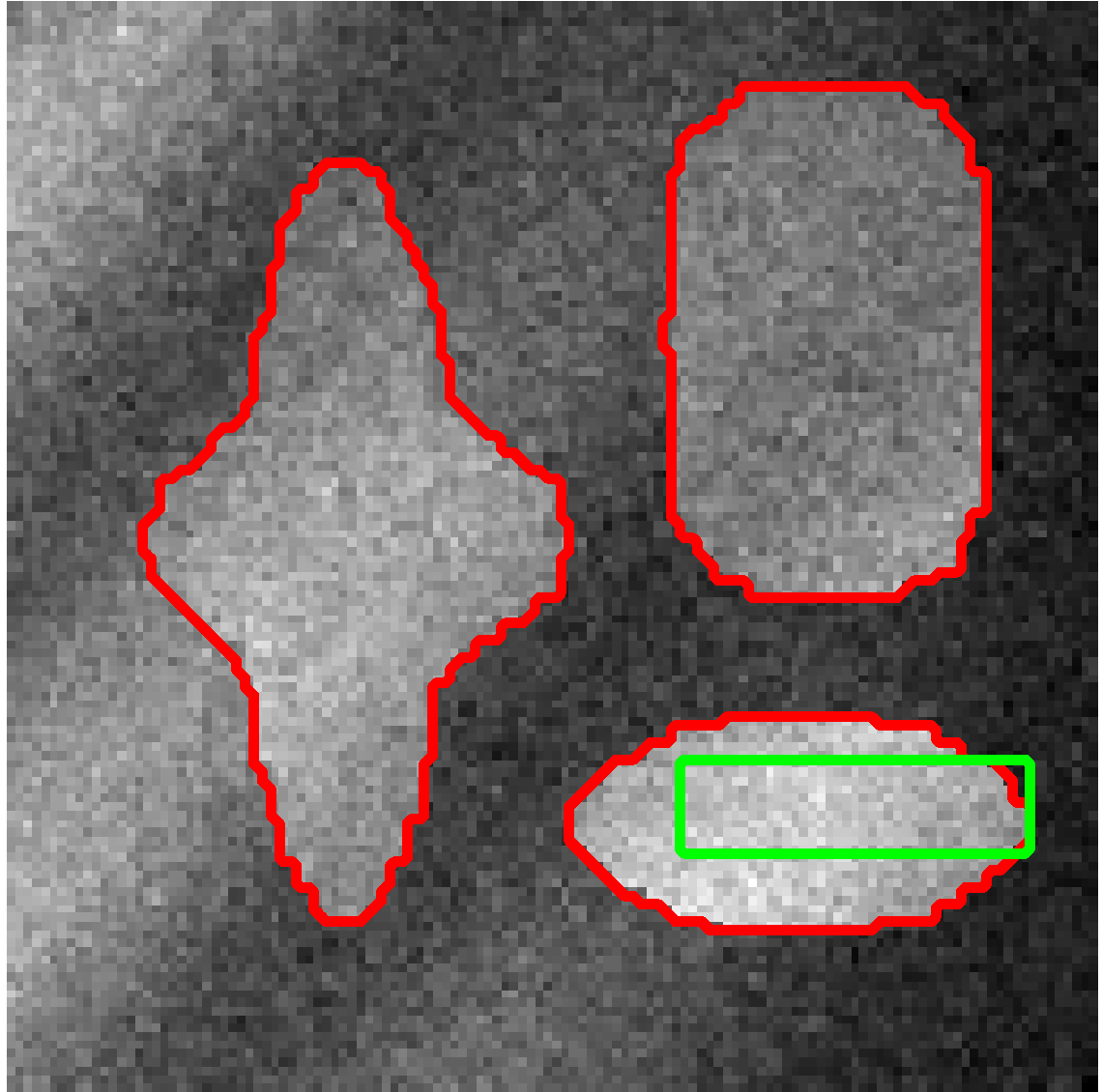}
  \end{subfigure}
 \hfill
  \begin{subfigure}[b]{0.19\linewidth}
      \centering
      \includegraphics[width=\textwidth]{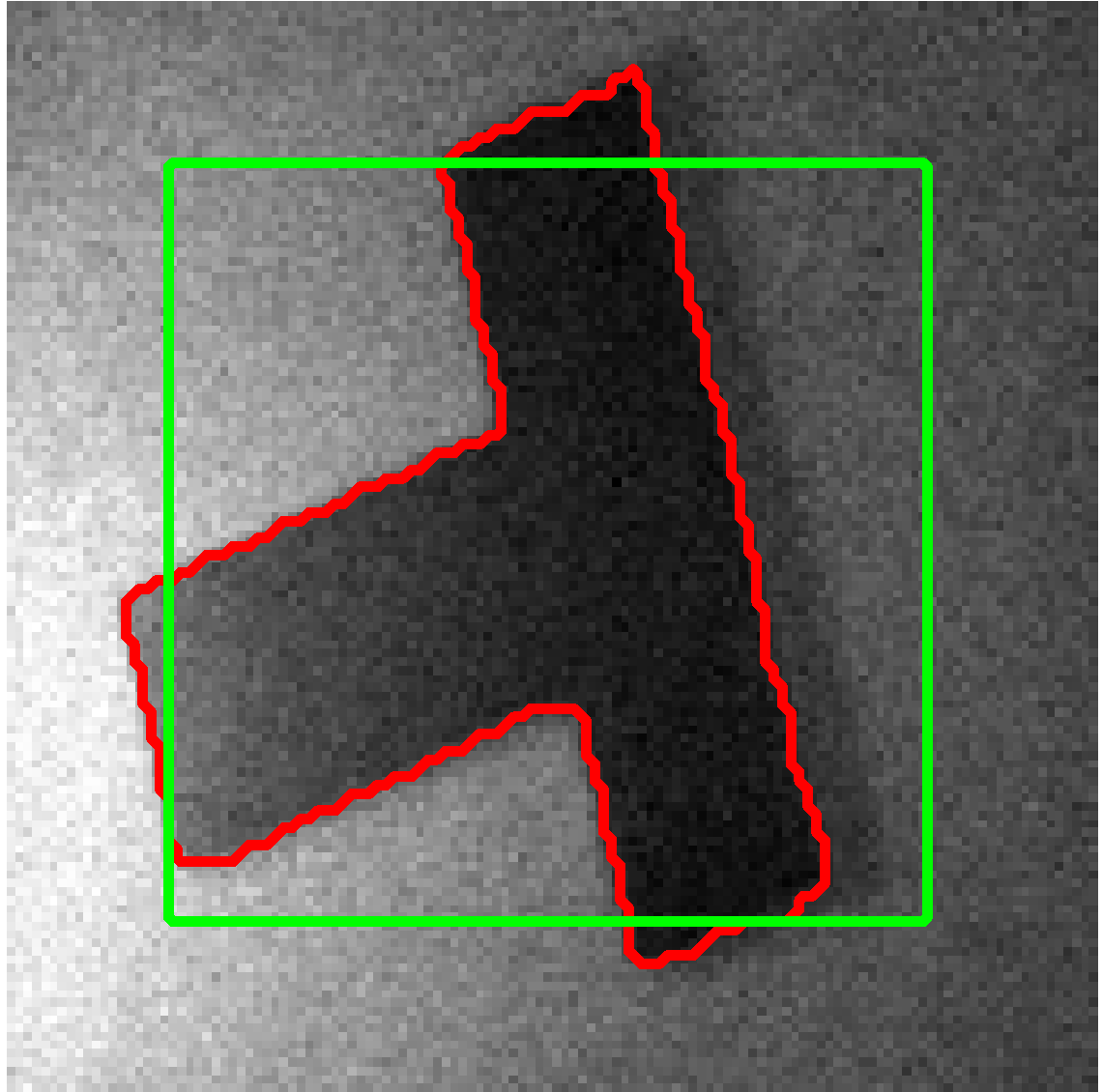}
  \end{subfigure}
  \hfill
  \begin{subfigure}[b]{0.19\linewidth}
      \centering
      \includegraphics[width=\textwidth]{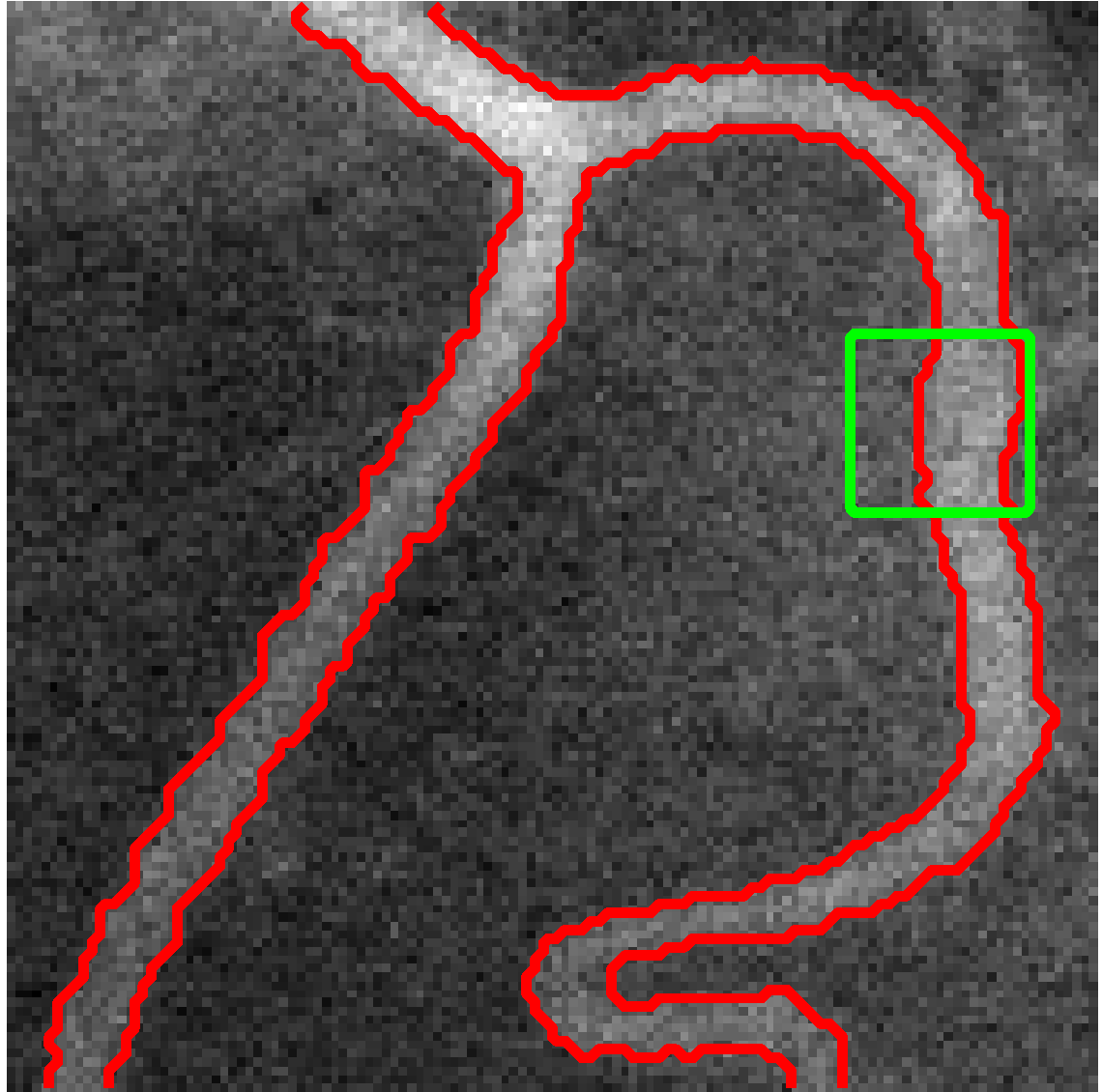}
  \end{subfigure}
  \hfill
  \begin{subfigure}[b]{0.19\linewidth}
      \centering
      \includegraphics[width=\textwidth]{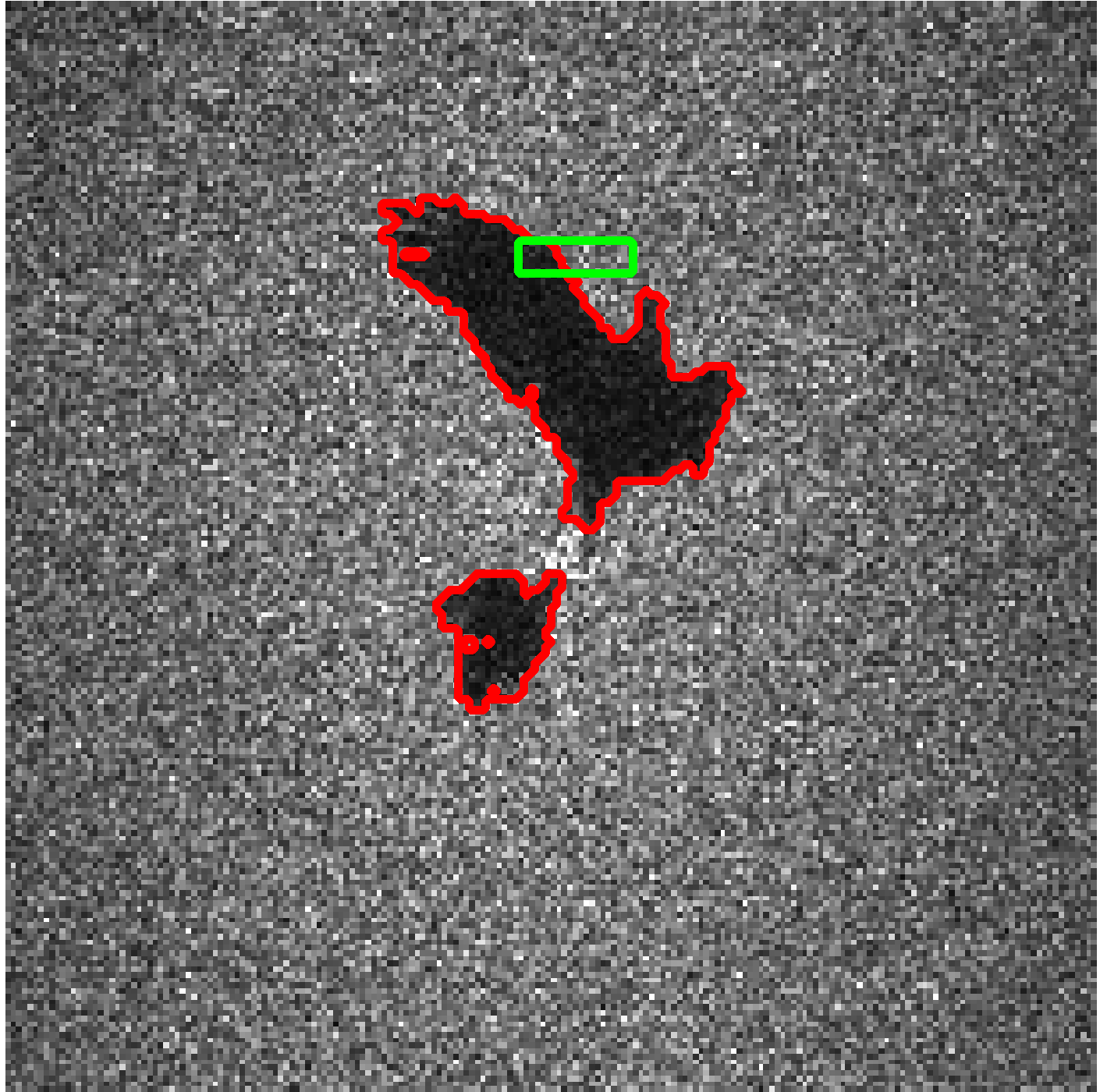}
  \end{subfigure}

    \begin{subfigure}[b]{0.19\linewidth}
      \centering
      \includegraphics[width=\textwidth]{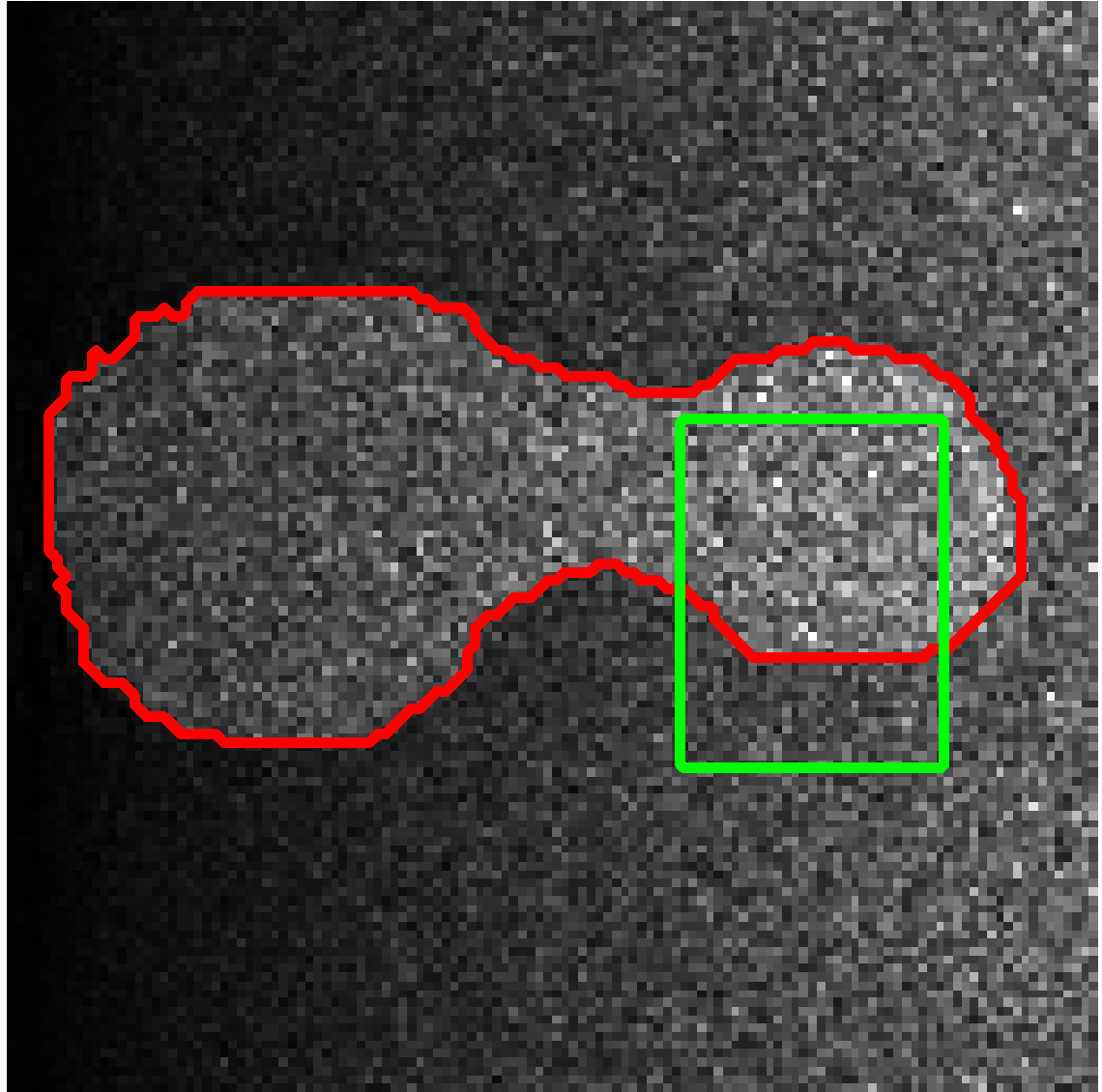}
  \end{subfigure}
  \hfill
  \begin{subfigure}[b]{0.19\linewidth}
      \centering
      \includegraphics[width=\textwidth]{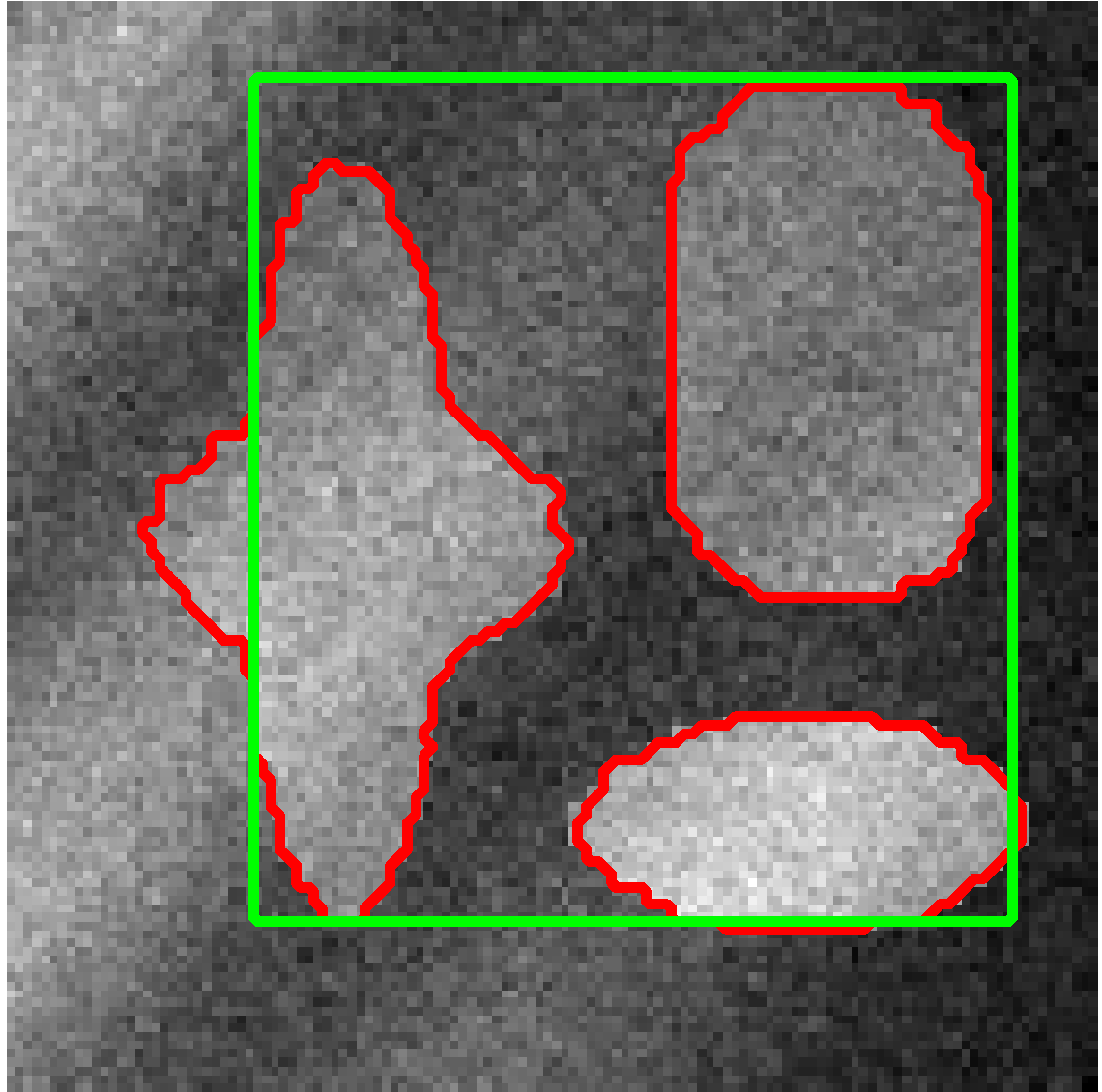}
  \end{subfigure}
 \hfill
  \begin{subfigure}[b]{0.19\linewidth}
      \centering
      \includegraphics[width=\textwidth]{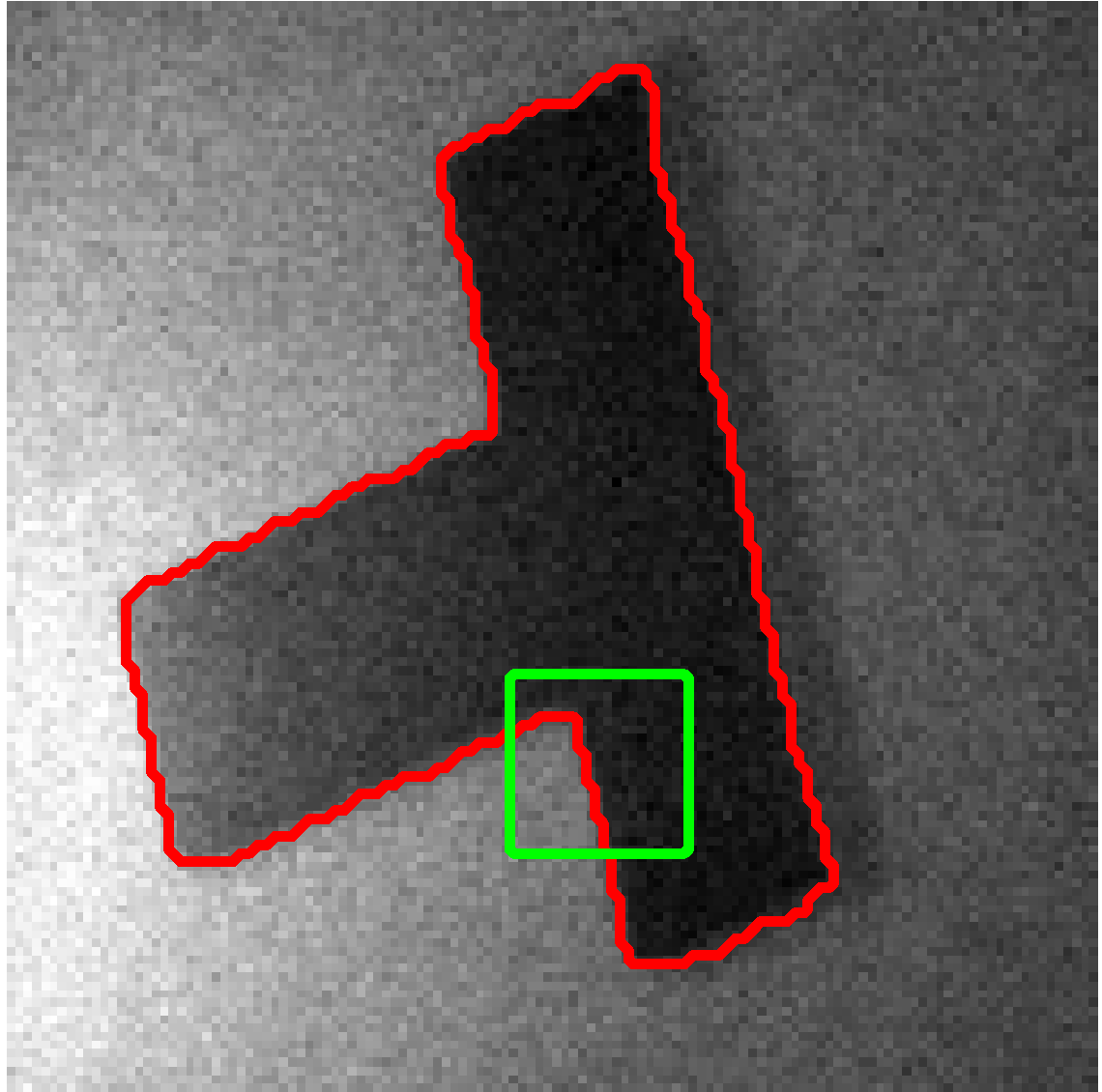}
  \end{subfigure}
  \hfill
  \begin{subfigure}[b]{0.19\linewidth}
      \centering
      \includegraphics[width=\textwidth]{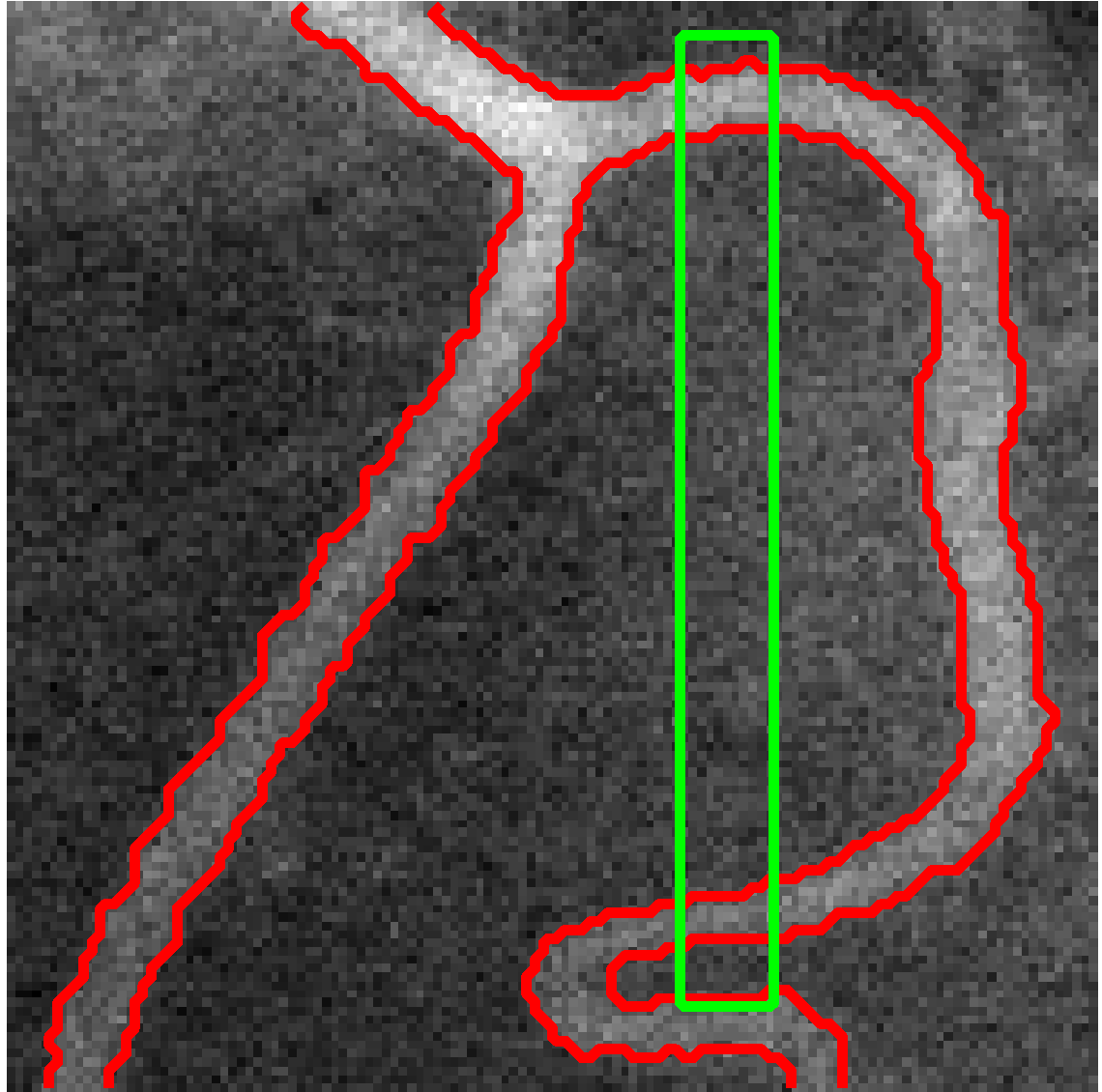}
  \end{subfigure}
  \hfill
  \begin{subfigure}[b]{0.19\linewidth}
      \centering
      \includegraphics[width=\textwidth]{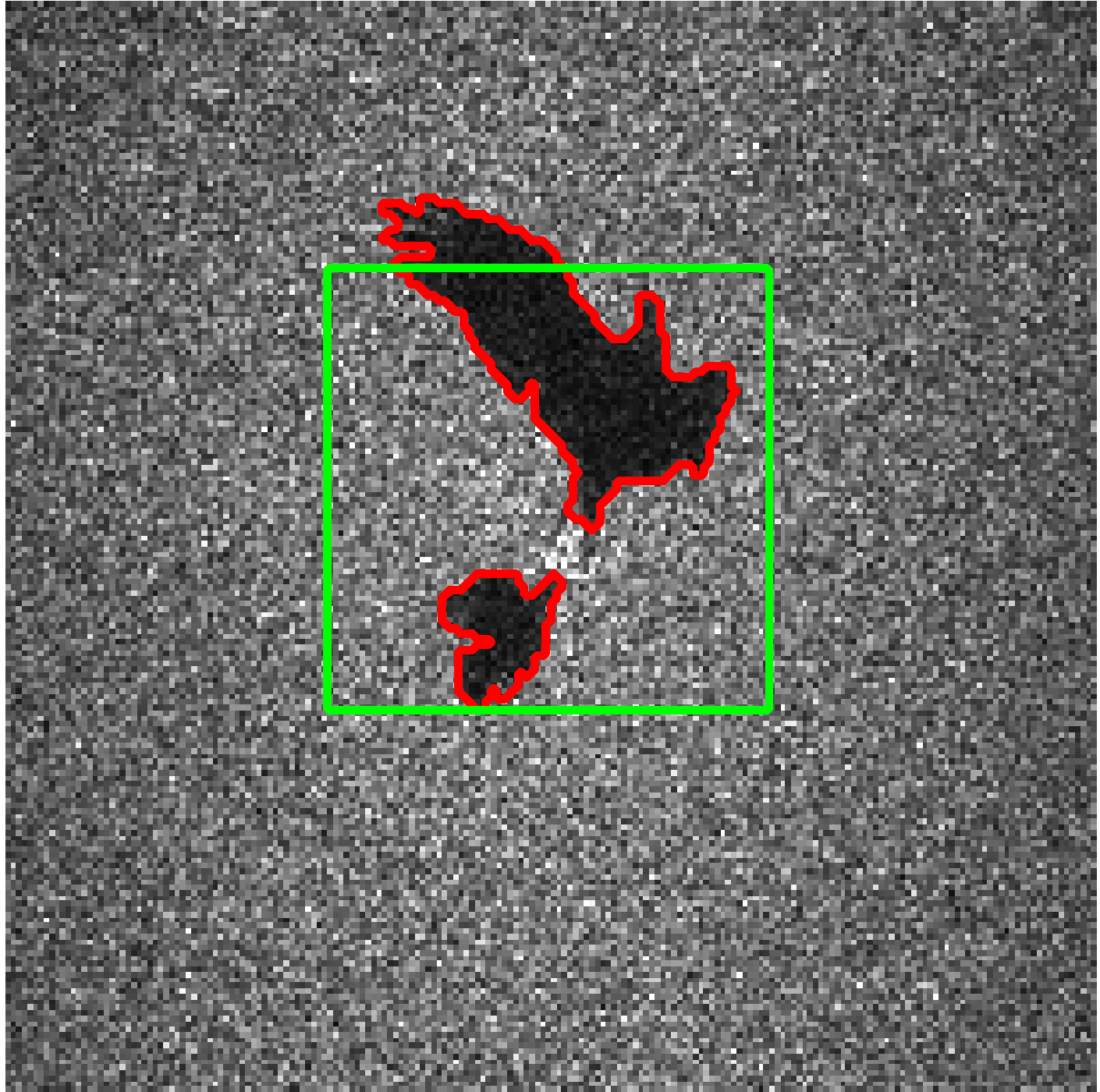}
  \end{subfigure}

    \begin{subfigure}[b]{0.19\linewidth}
      \centering
      \includegraphics[width=\textwidth]{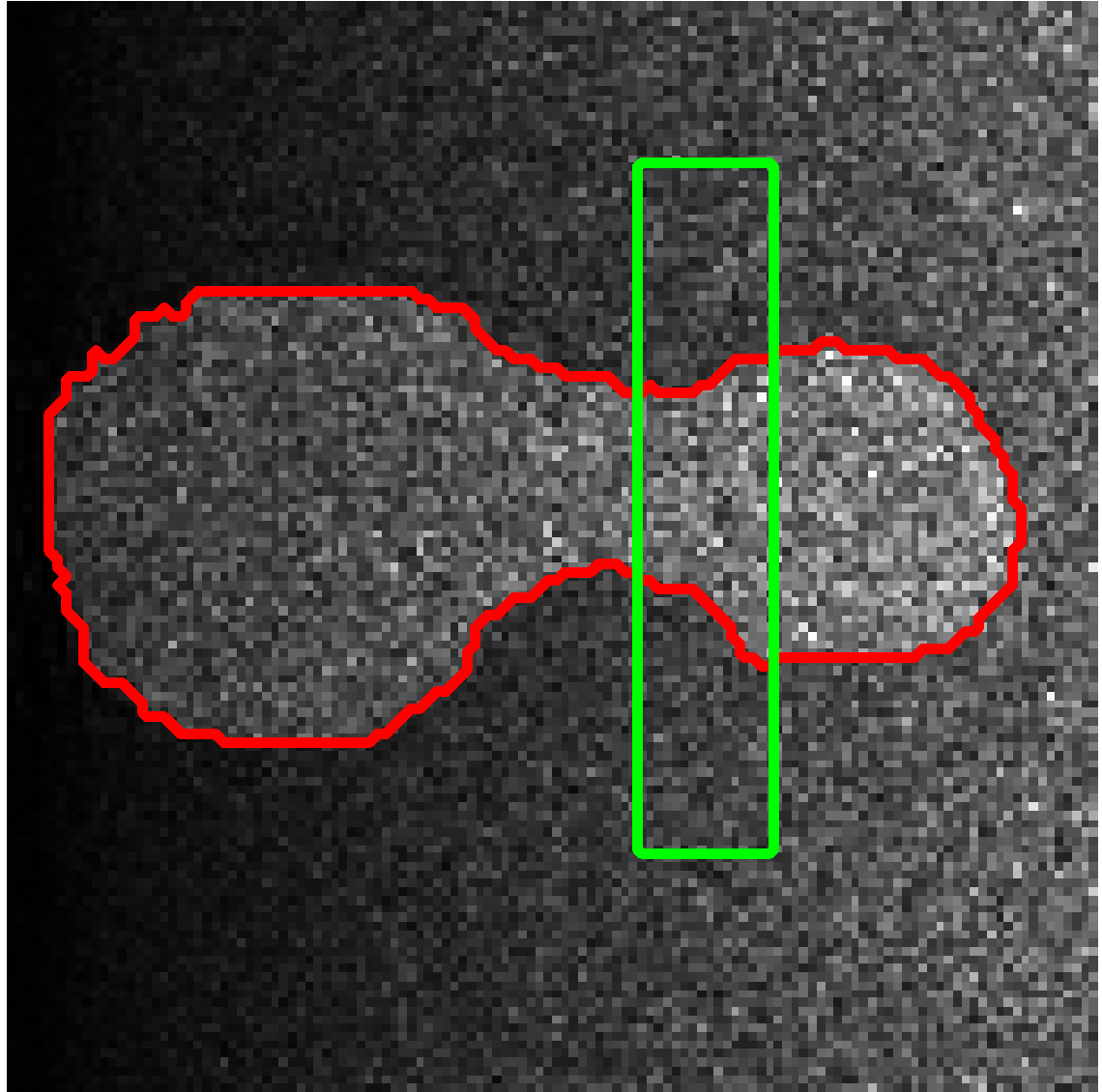}
      \caption{}
      \label{fig:initial-1}
  \end{subfigure}
  \hfill
  \begin{subfigure}[b]{0.19\linewidth}
      \centering
      \includegraphics[width=\textwidth]{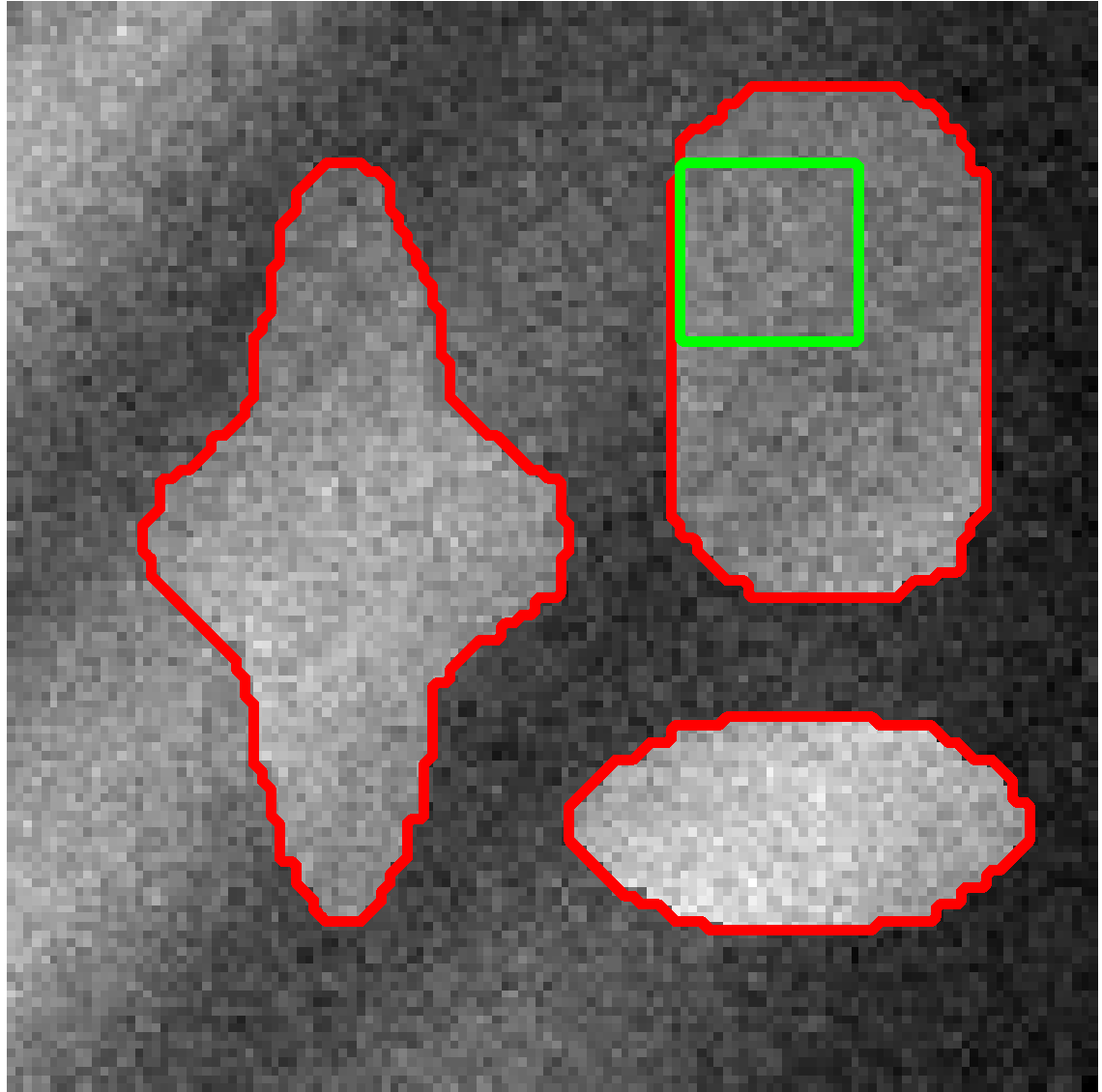}
      \caption{}
      \label{fig:1initial-2}
  \end{subfigure}
 \hfill
  \begin{subfigure}[b]{0.19\linewidth}
      \centering
      \includegraphics[width=\textwidth]{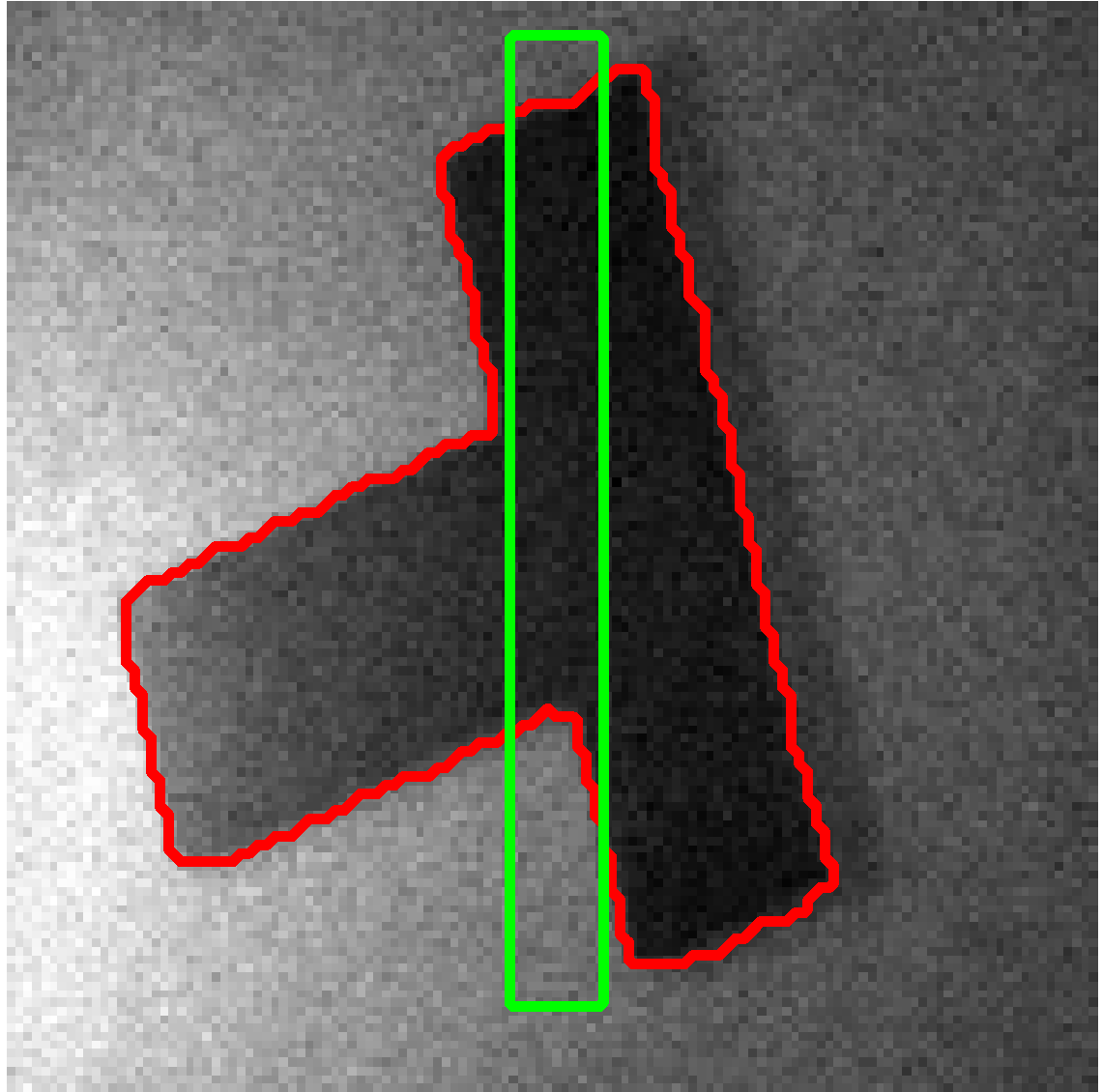}
      \caption{}
      \label{fig:initial-3}
  \end{subfigure}
  \hfill
  \begin{subfigure}[b]{0.19\linewidth}
      \centering
      \includegraphics[width=\textwidth]{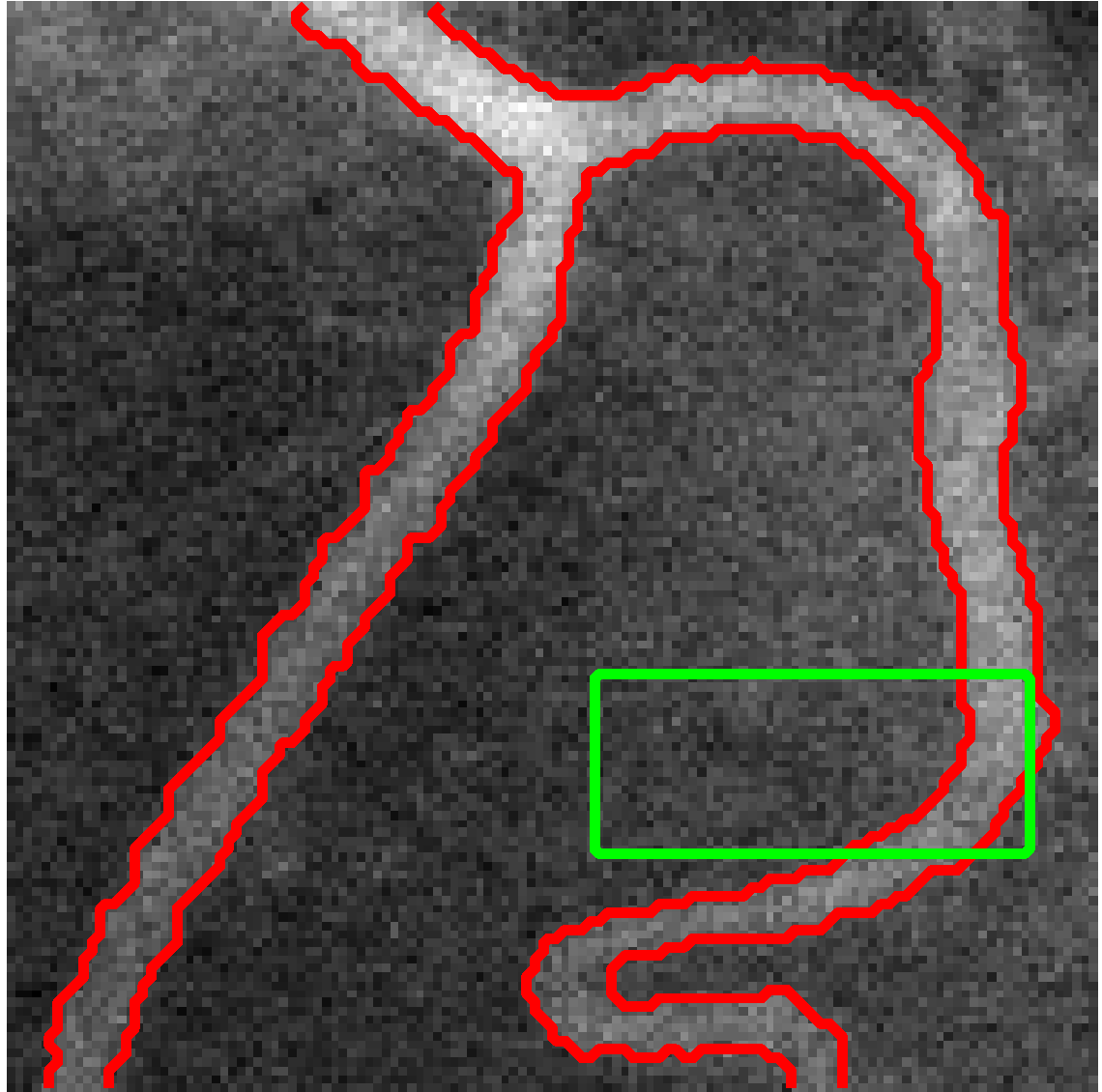}
      \caption{}
      \label{fig:initial-4}
  \end{subfigure}
  \hfill
  \begin{subfigure}[b]{0.19\linewidth}
      \centering
      \includegraphics[width=\textwidth]{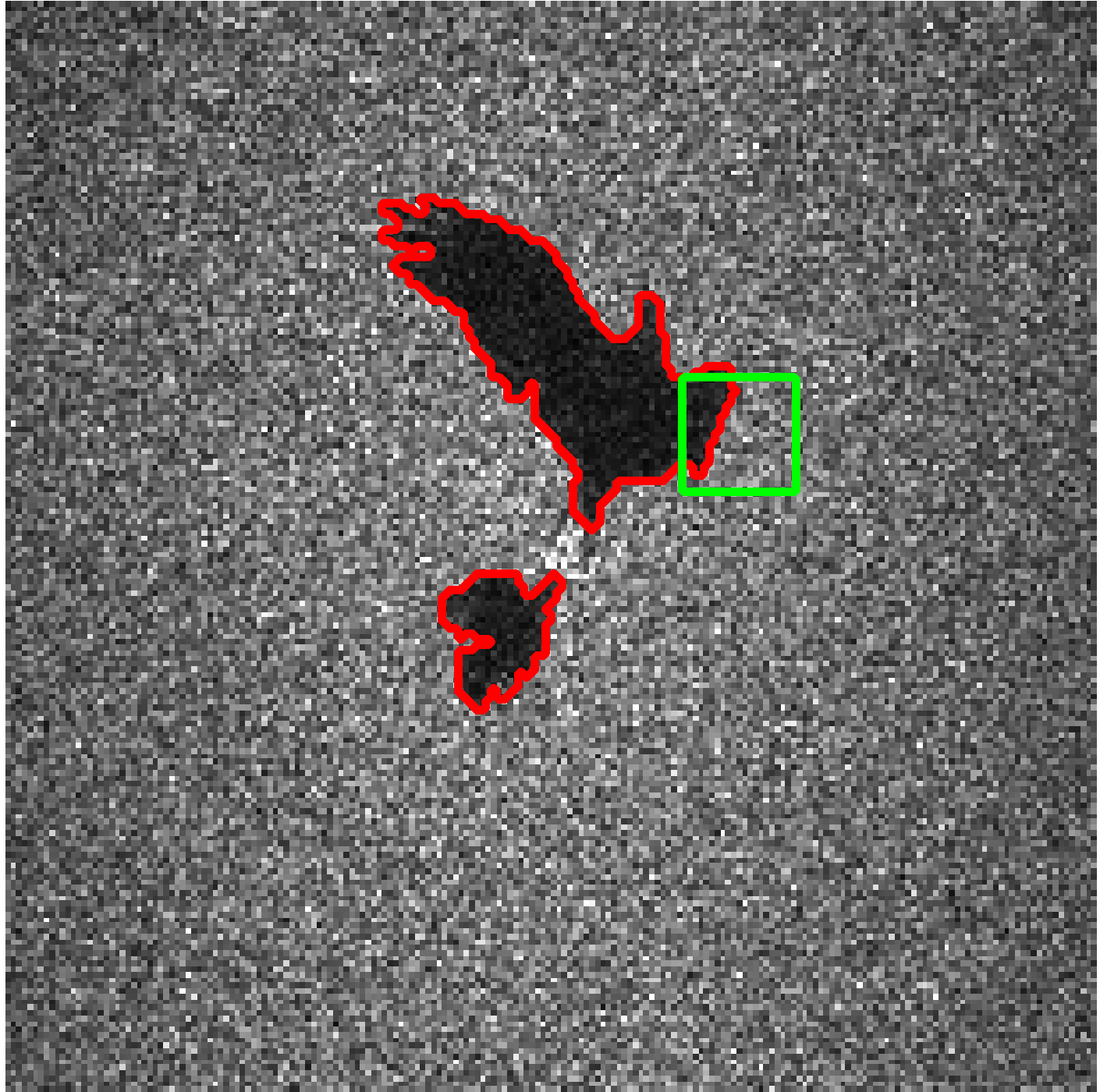}
      \caption{}
      \label{fig:initial-5}
  \end{subfigure}
  \caption{Robustness to initial contours. Each column shows an input image with different initial contours (green) and the corresponding segmentation results (red)}
  \label{fig:initial}
\end{figure}
\begin{figure}
  \centering
  \begin{subfigure}[b]{0.32\linewidth}
      \centering
      \includegraphics[width=\textwidth]{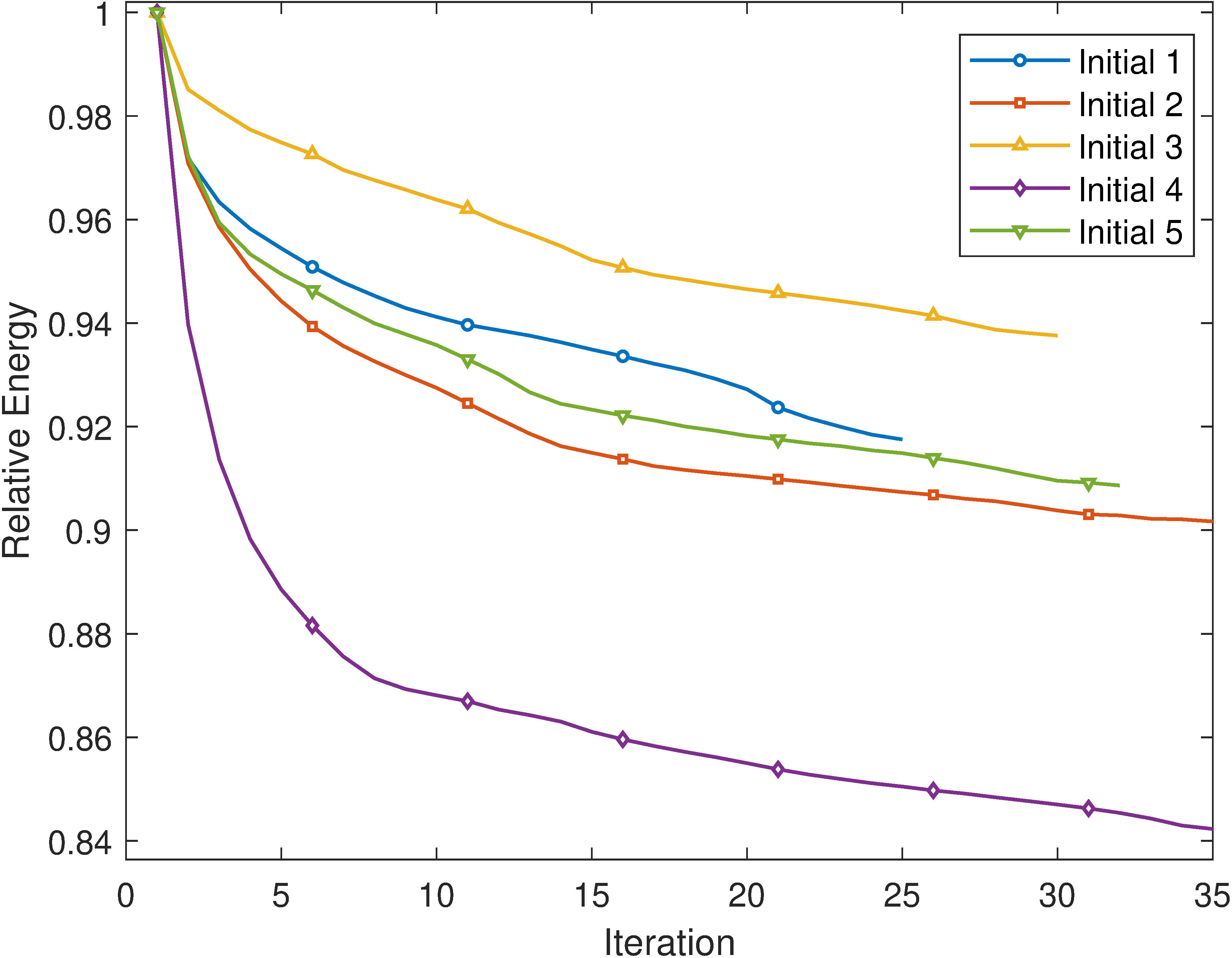}
      \caption{}
  \end{subfigure}
  \hfill
  \begin{subfigure}[b]{0.32\linewidth}
      \centering
      \includegraphics[width=\textwidth]{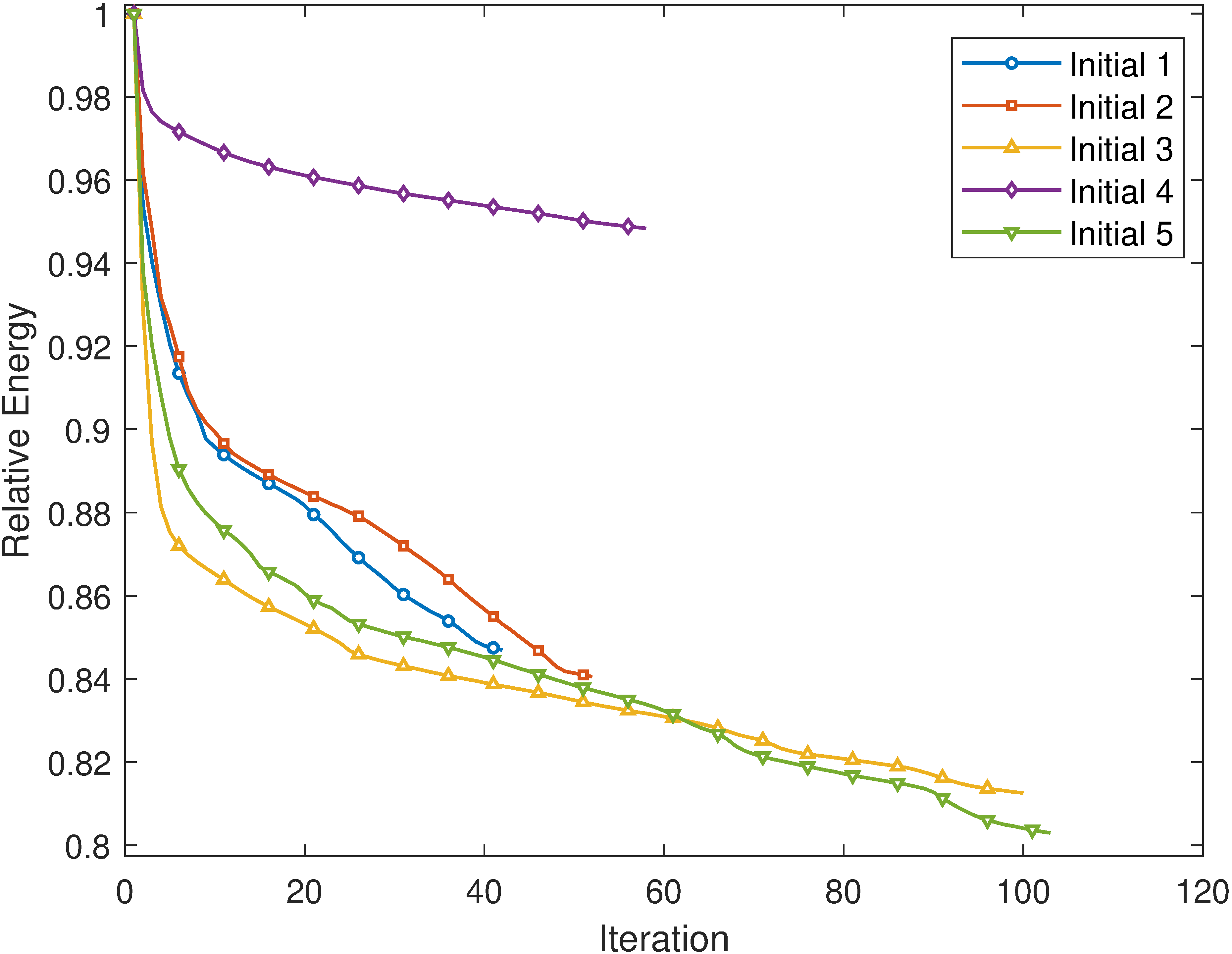}
      \caption{}
  \end{subfigure}
 \hfill
  \begin{subfigure}[b]{0.32\linewidth}
      \centering
      \includegraphics[width=\textwidth]{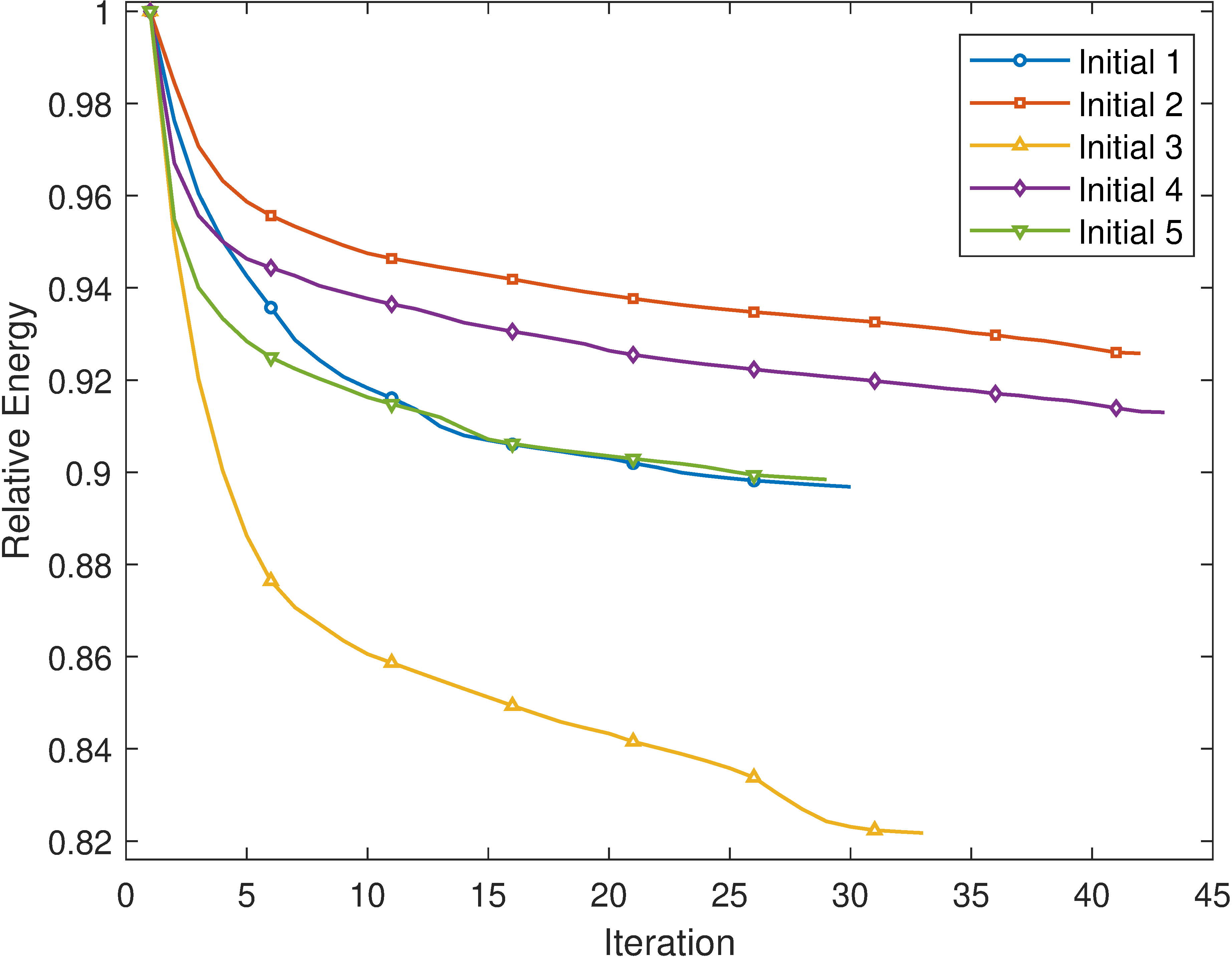}
      \caption{}
  \end{subfigure}

  \begin{subfigure}[b]{0.32\linewidth}
      \centering
      \includegraphics[width=\textwidth]{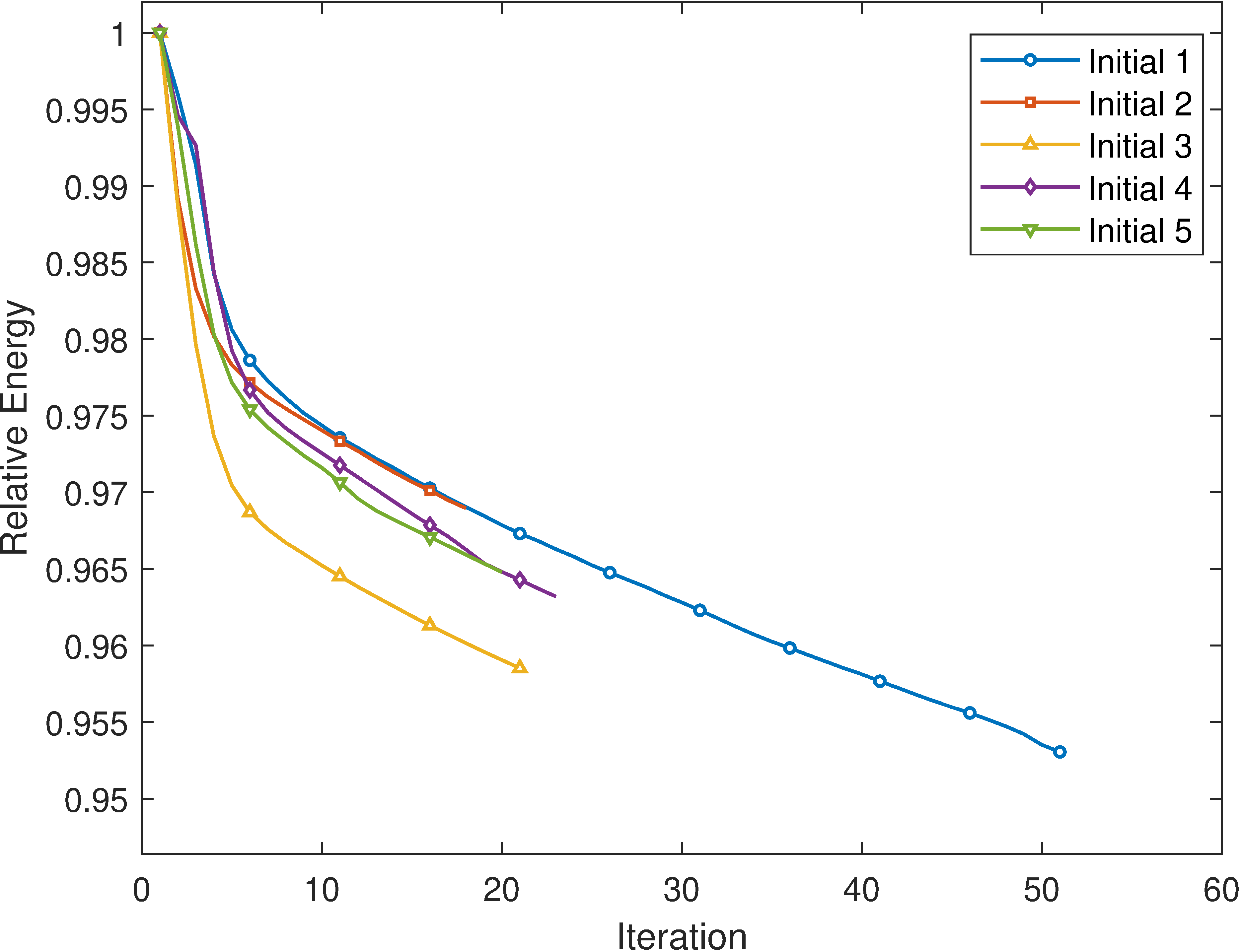}
      \caption{}
  \end{subfigure}
  \hfill
  \begin{subfigure}[b]{0.32\linewidth}
      \centering
      \includegraphics[width=\textwidth]{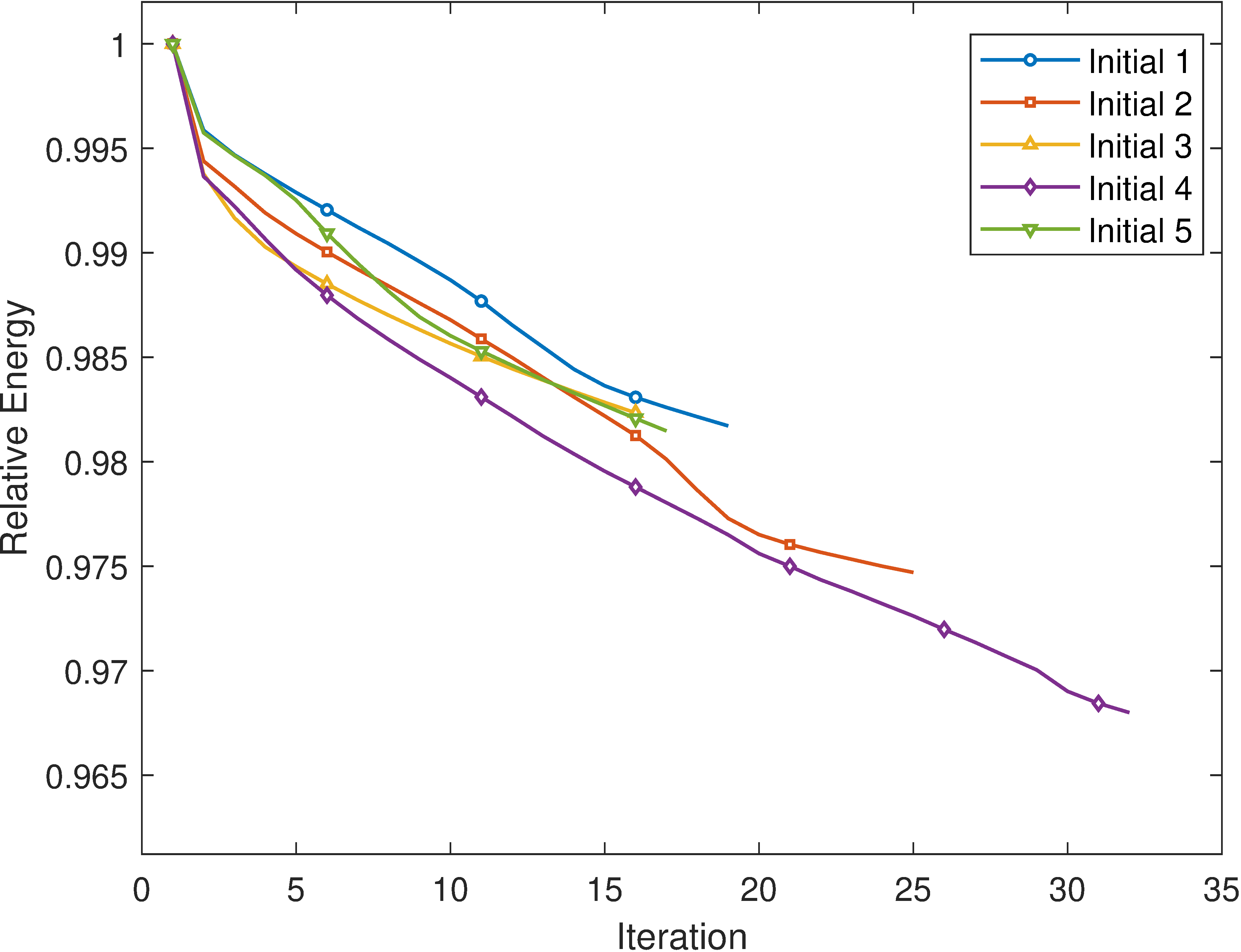}
      \caption{}
  \end{subfigure}
  \hfill
  \begin{subfigure}[b]{0.32\linewidth}
      \centering
      \includegraphics[width=\textwidth]{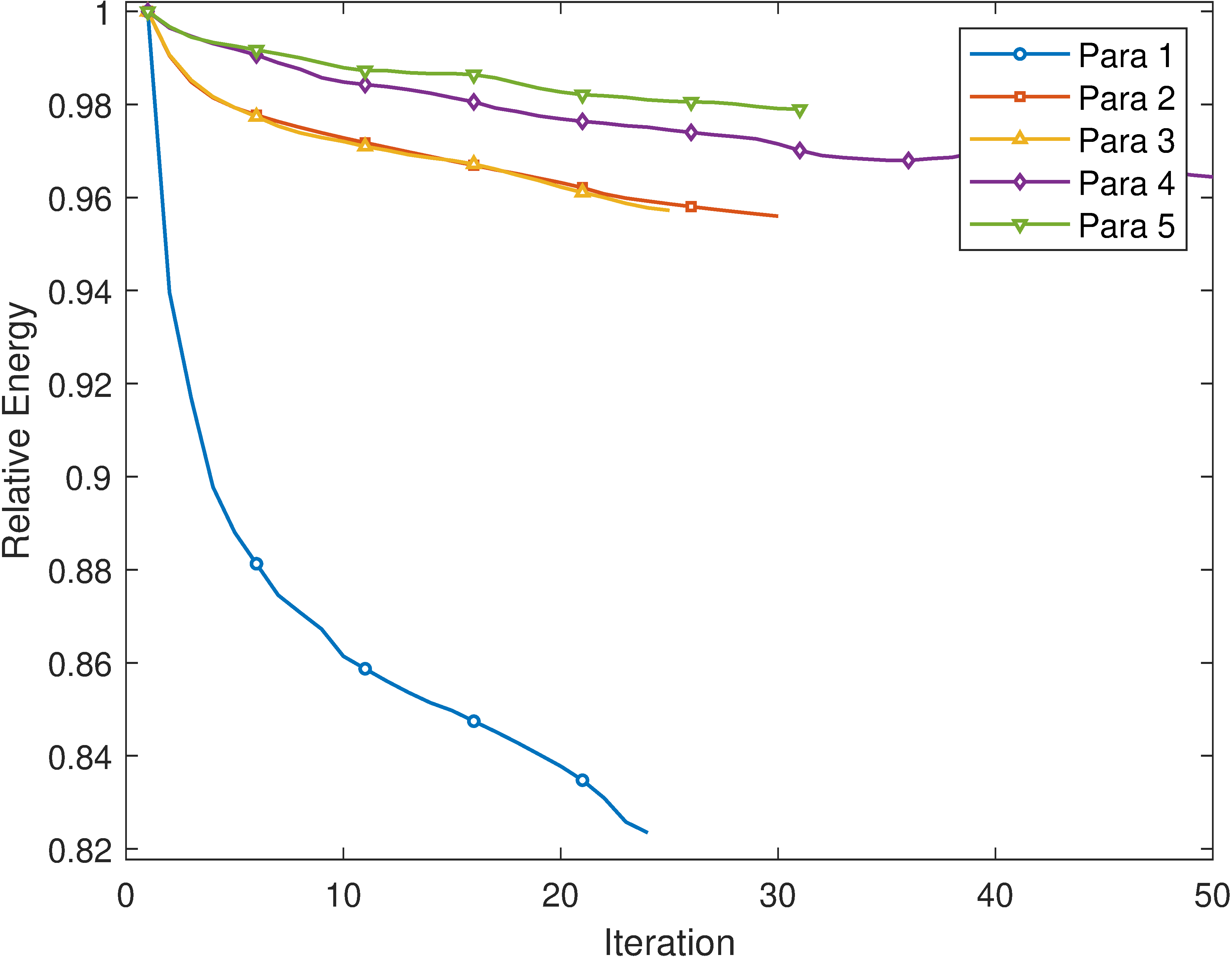}
      \caption{}
      \label{fig:energy-004-para}
  \end{subfigure}
  \caption{Energy of different image. from (a)--(e) correspond to Fig.~\ref{fig:initial-1}--\ref{fig:initial-5}, and (f) correspond to Fig.~\ref{fig:004-para}}
  \label{fig:energy}
\end{figure}
To evaluate the robustness of our model with respect to the initial contour, we conduct segmentation experiments on five images, initialized with five randomly selected contours. In Fig.~\ref{fig:initial}, the green and red curves depict the initial and final contours, respectively. It can be observed that our model consistently yields accurate segmentation results despite different initializations. Furthermore, Fig.~\ref{fig:energy} illustrates the energy evolution throughout the segmentation process for different initializations. For clarity of presentation, all energy values are normalized. These energy curves further demonstrate the effectiveness and stability of the proposed numerical scheme.

\subsubsection{Robustness to Denoising Parameter}
\begin{figure}
  \centering
  \begin{subfigure}[b]{0.16\linewidth}
      \centering
      \includegraphics[width=\textwidth]{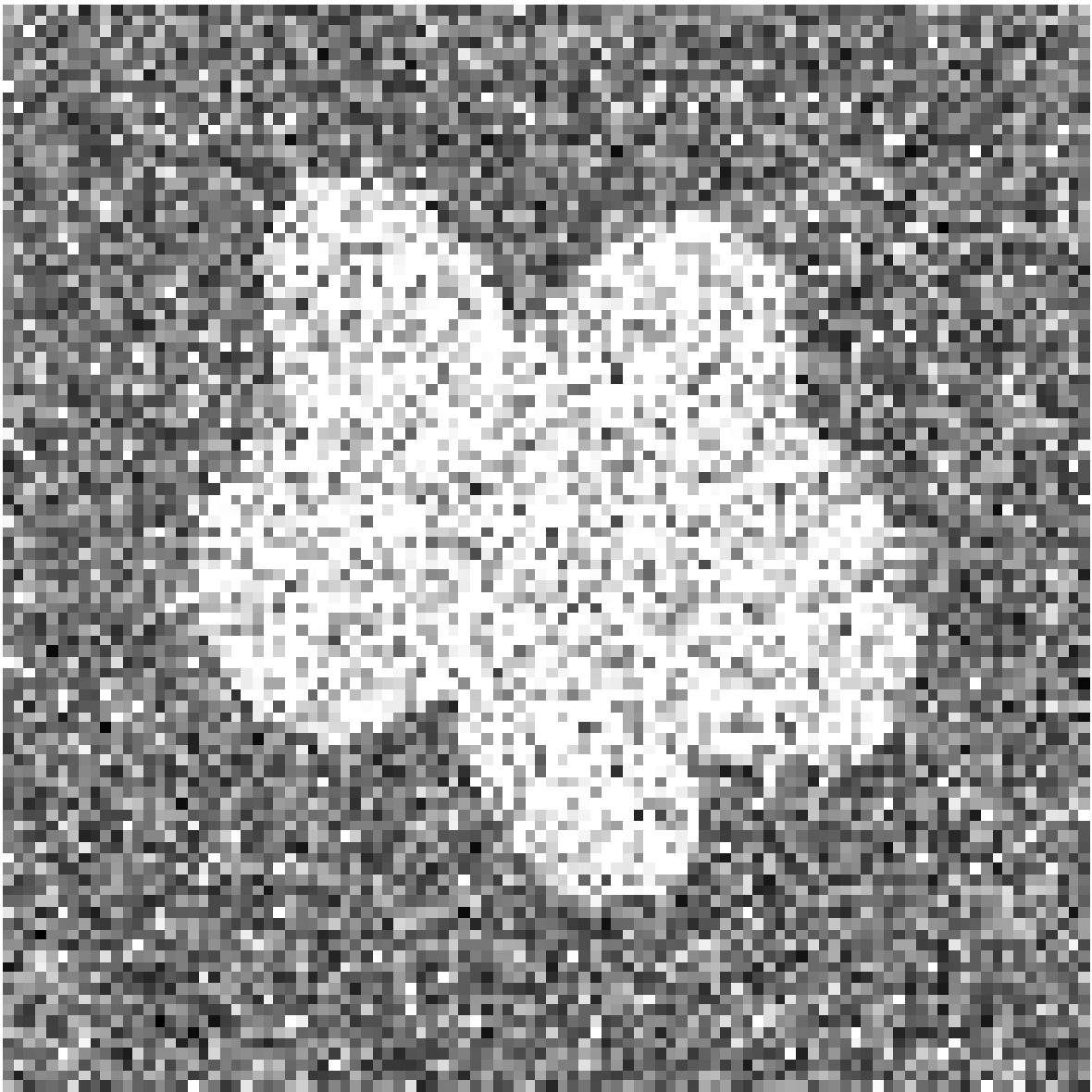}
      \caption{}
  \end{subfigure}
  \hfill
  \begin{subfigure}[b]{0.16\linewidth}
      \centering
      \includegraphics[width=\textwidth]{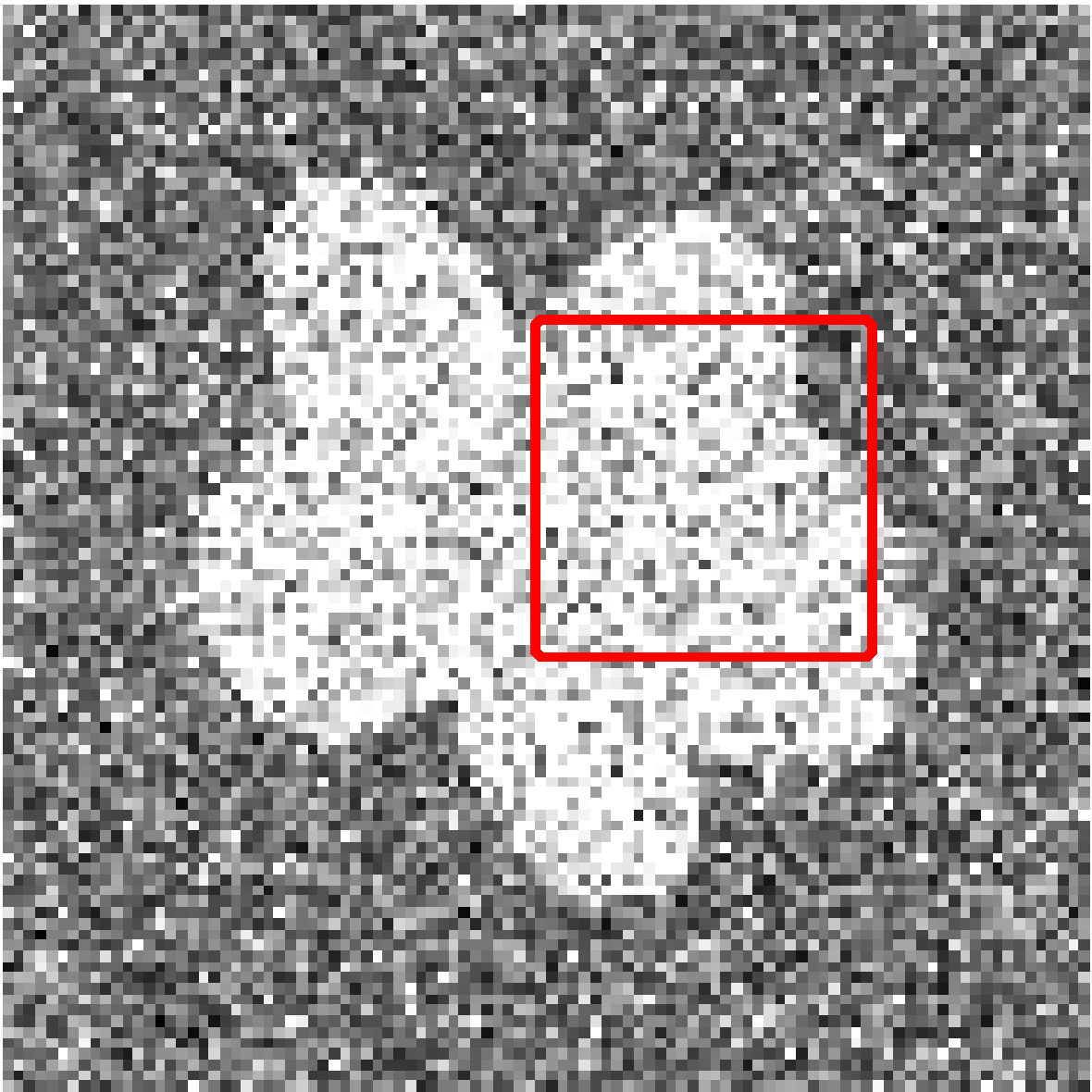}
      \caption{}
  \end{subfigure}
   \hfill
  \begin{subfigure}[b]{0.16\linewidth}
      \centering
      \includegraphics[width=\textwidth]{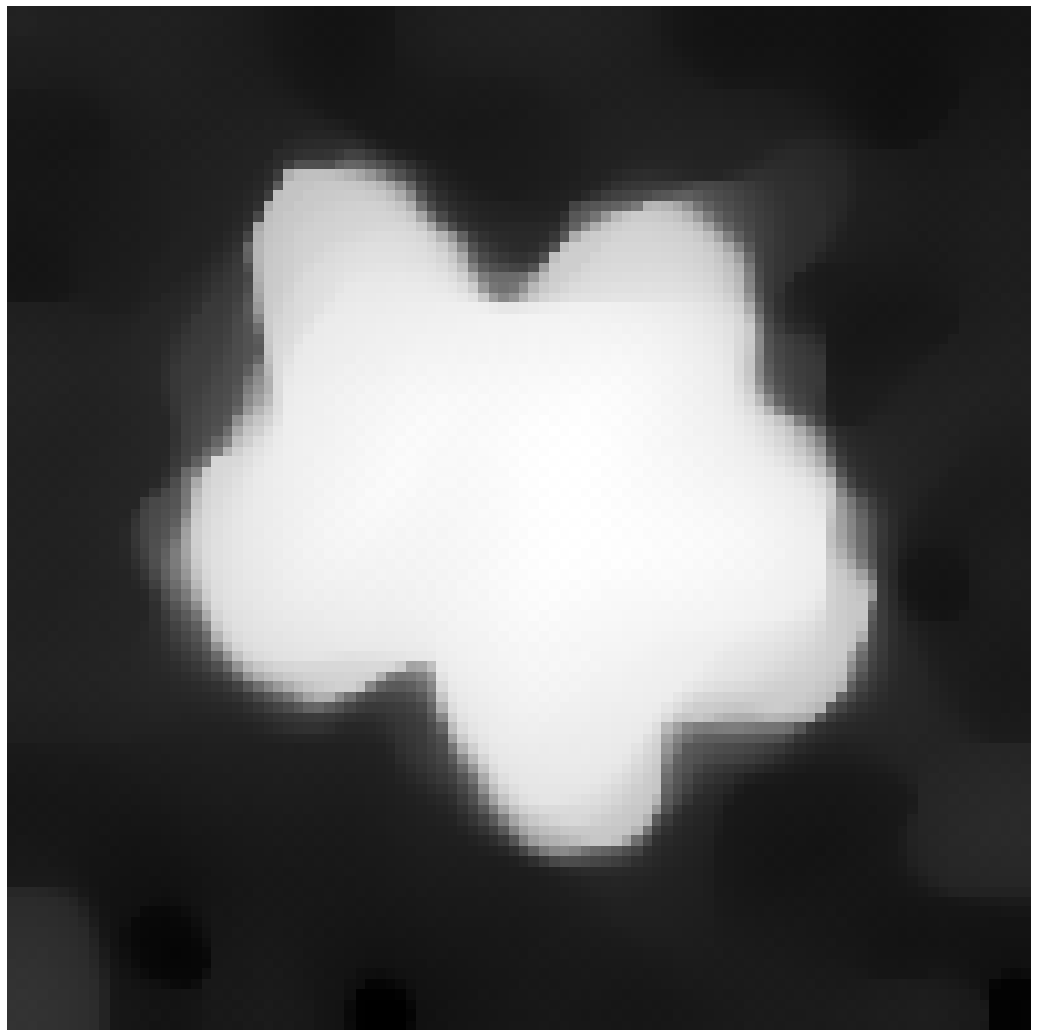}
      \caption{}
  \end{subfigure}
   \hfill
  \begin{subfigure}[b]{0.16\linewidth}
      \centering
      \includegraphics[width=\textwidth]{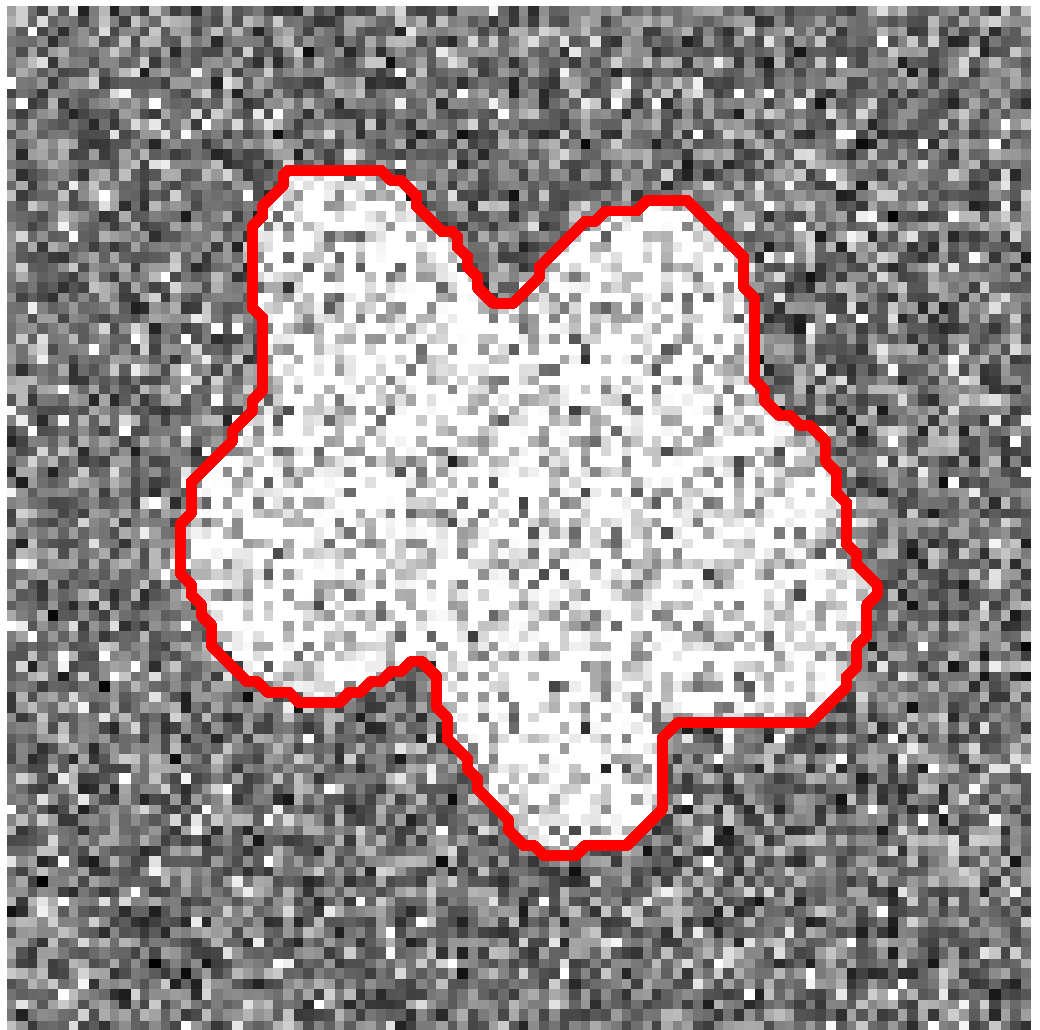}
      \caption{}
  \end{subfigure}
   \hfill
  \begin{subfigure}[b]{0.16\linewidth}
      \centering
      \includegraphics[width=\textwidth]{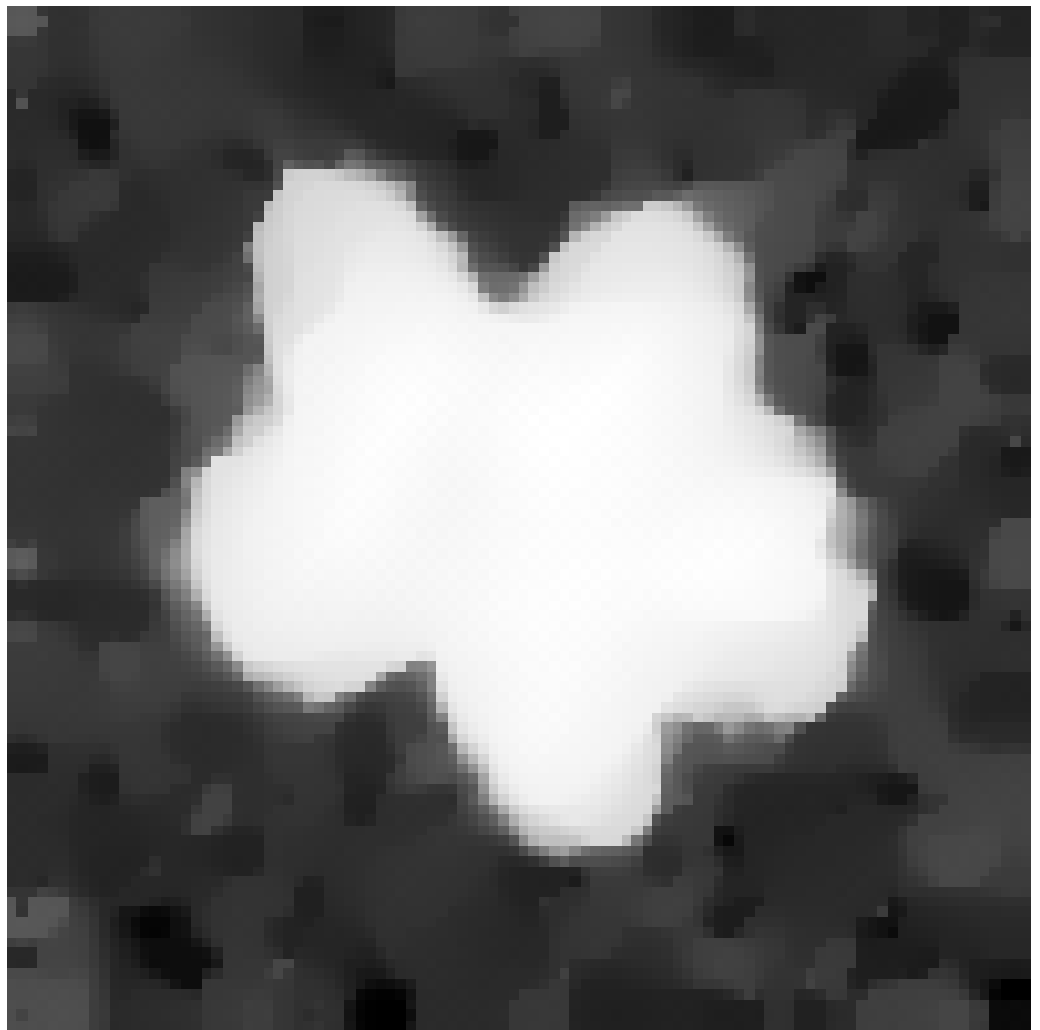}
      \caption{}
  \end{subfigure}
   \hfill
  \begin{subfigure}[b]{0.16\linewidth}
      \centering
      \includegraphics[width=\textwidth]{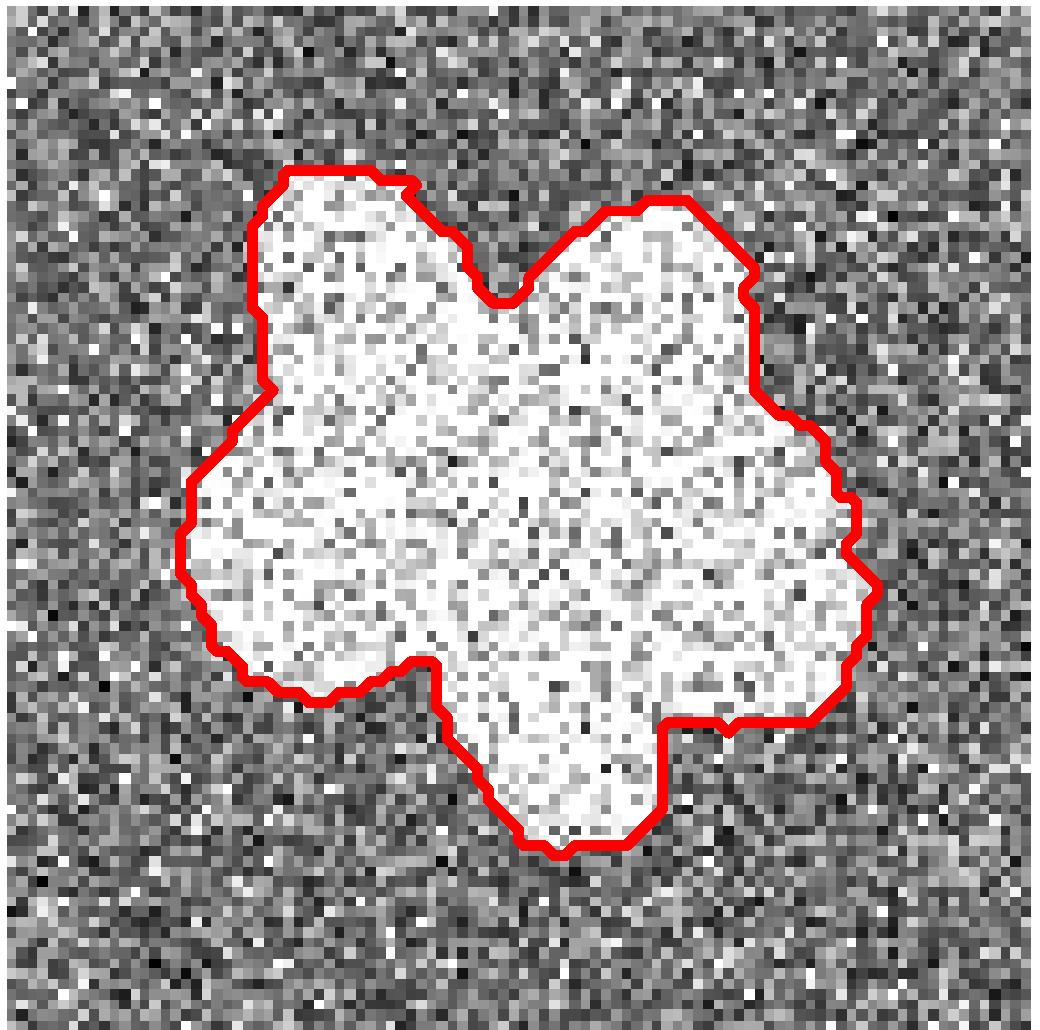}
      \caption{}
  \end{subfigure}

  \begin{subfigure}[b]{0.16\linewidth}
      \centering
      \includegraphics[width=\textwidth]{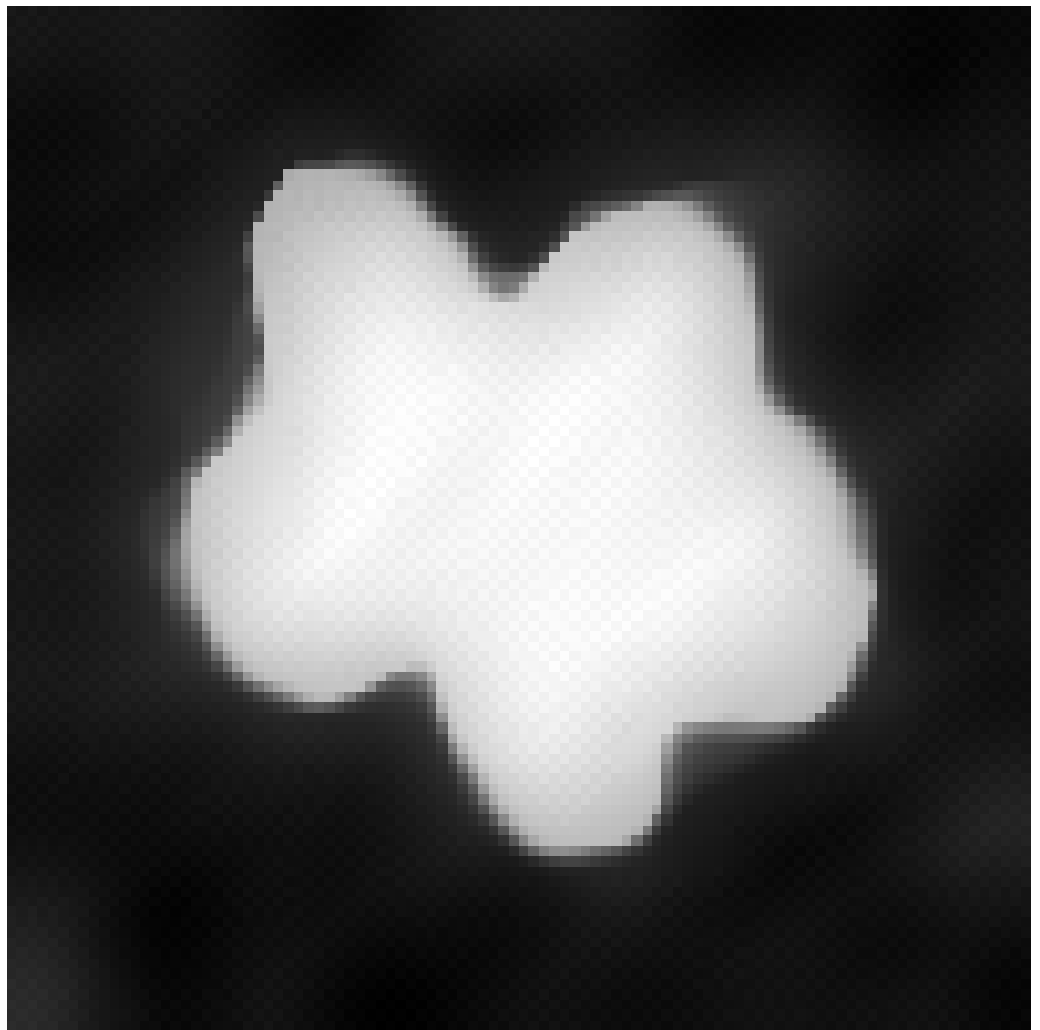}
      \caption{}
  \end{subfigure}
  \hfill
  \begin{subfigure}[b]{0.16\linewidth}
      \centering
      \includegraphics[width=\textwidth]{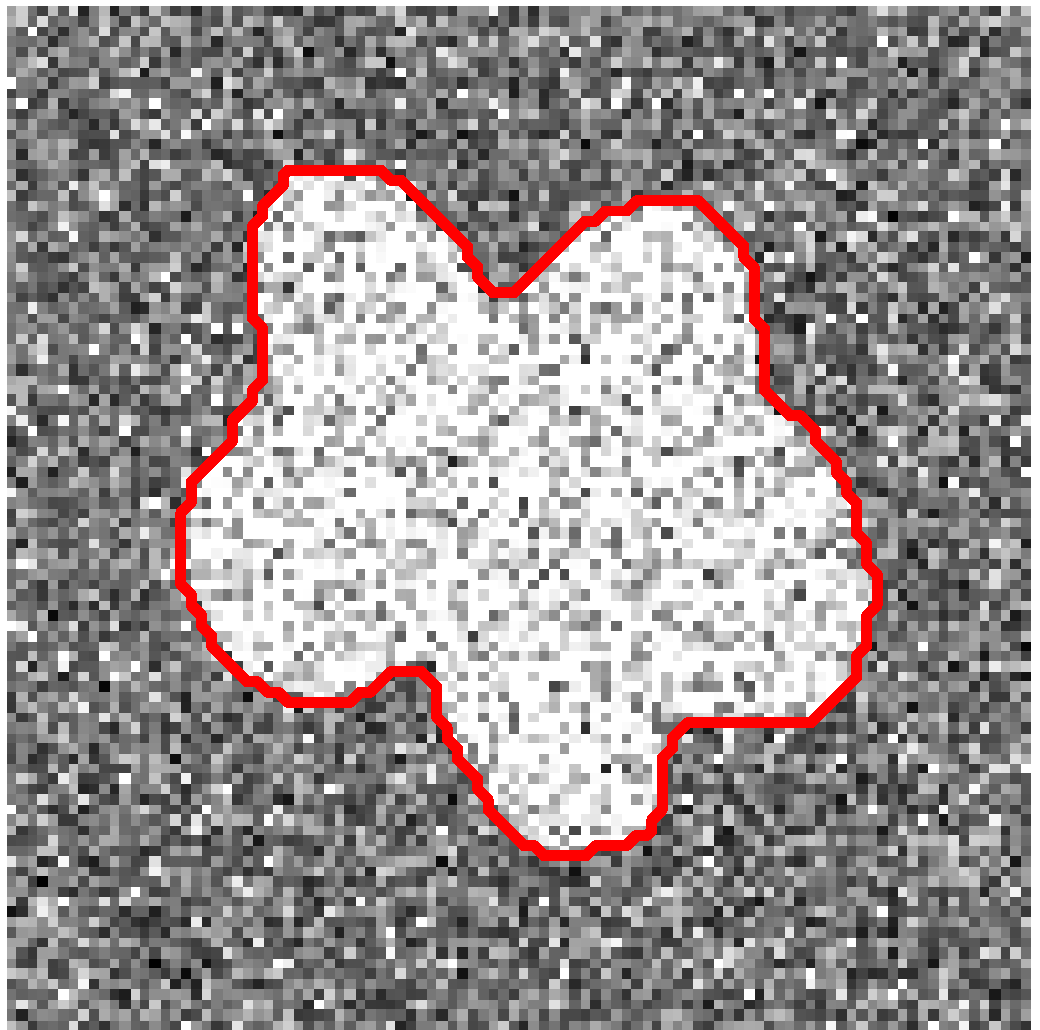}
      \caption{}
  \end{subfigure}
  \hfill
  \begin{subfigure}[b]{0.16\linewidth}
      \centering
      \includegraphics[width=\textwidth]{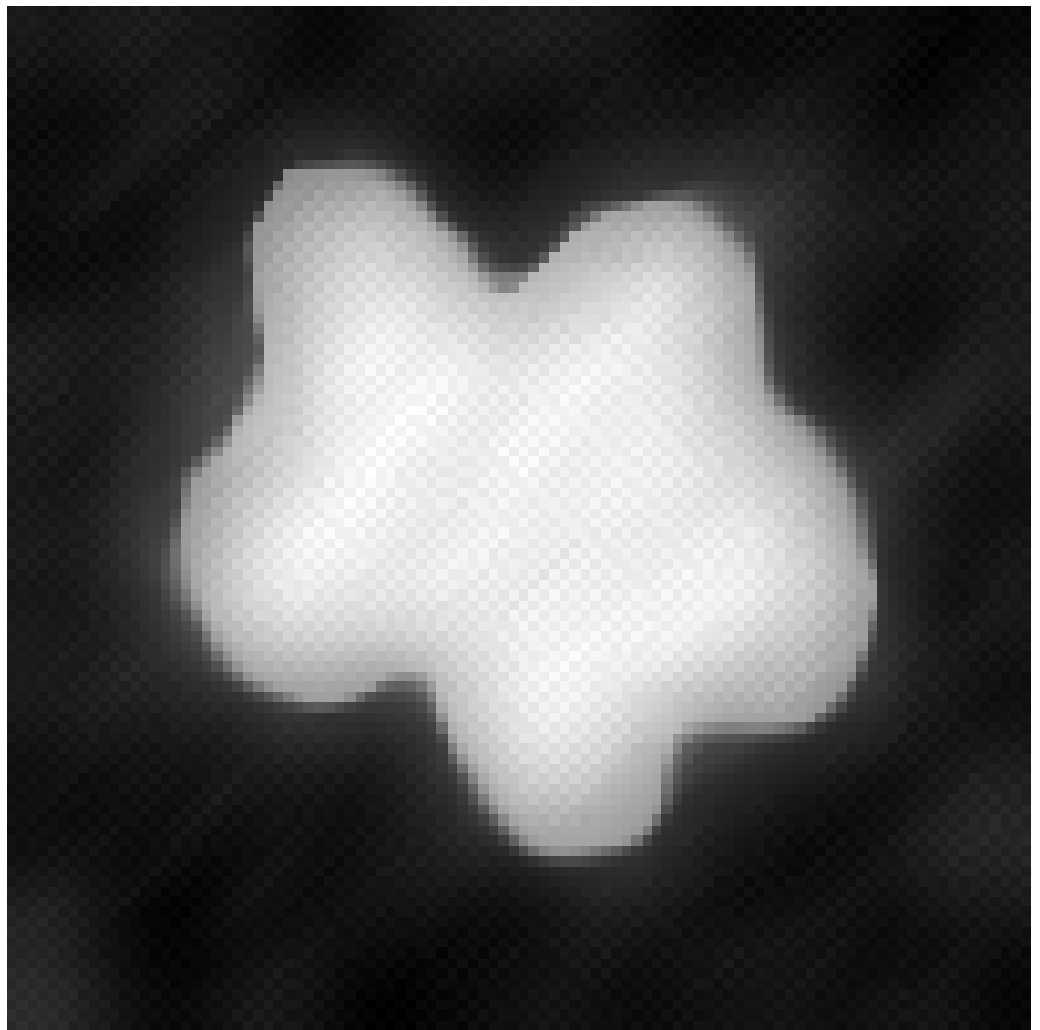}
      \caption{}
  \end{subfigure}
 \hfill
  \begin{subfigure}[b]{0.16\linewidth}
      \centering
      \includegraphics[width=\textwidth]{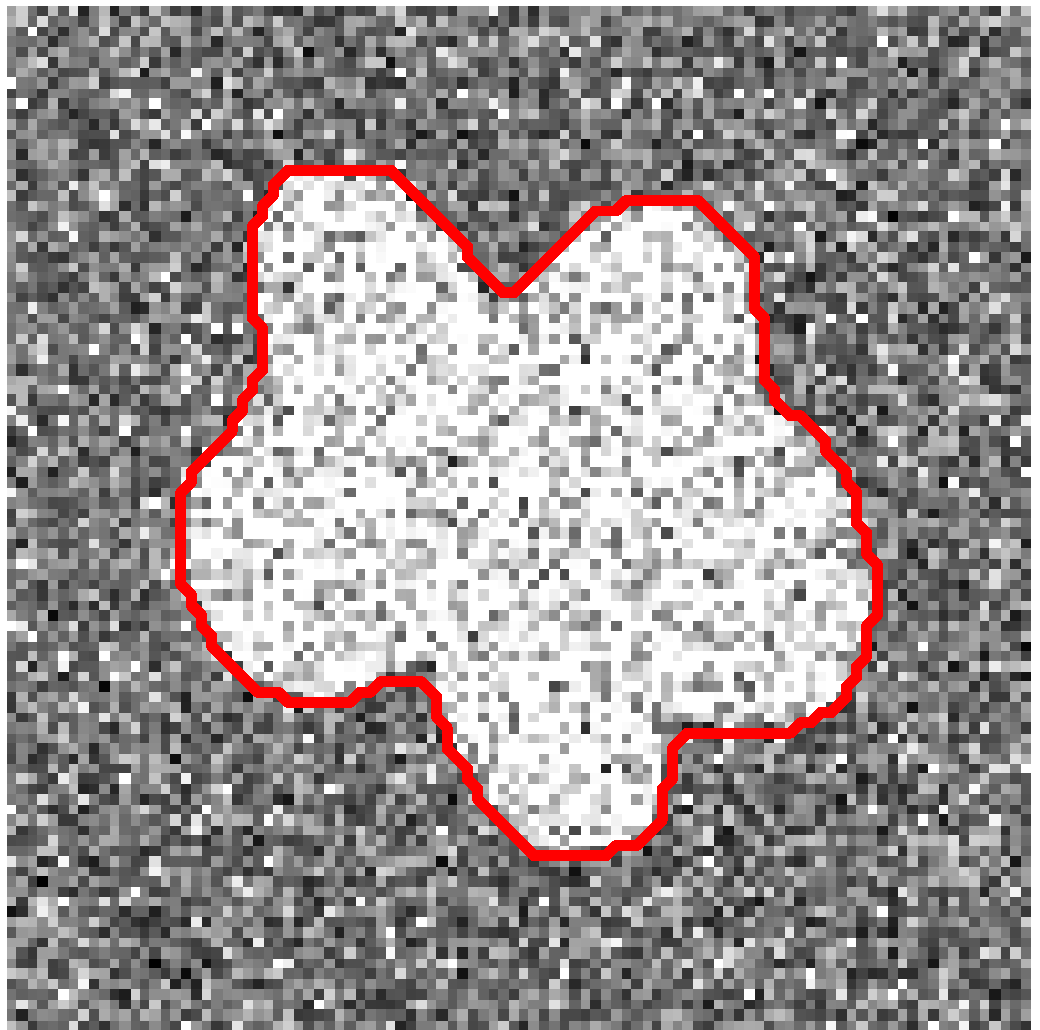}
      \caption{}
  \end{subfigure}
  \hfill
  \begin{subfigure}[b]{0.16\linewidth}
      \centering
      \includegraphics[width=\textwidth]{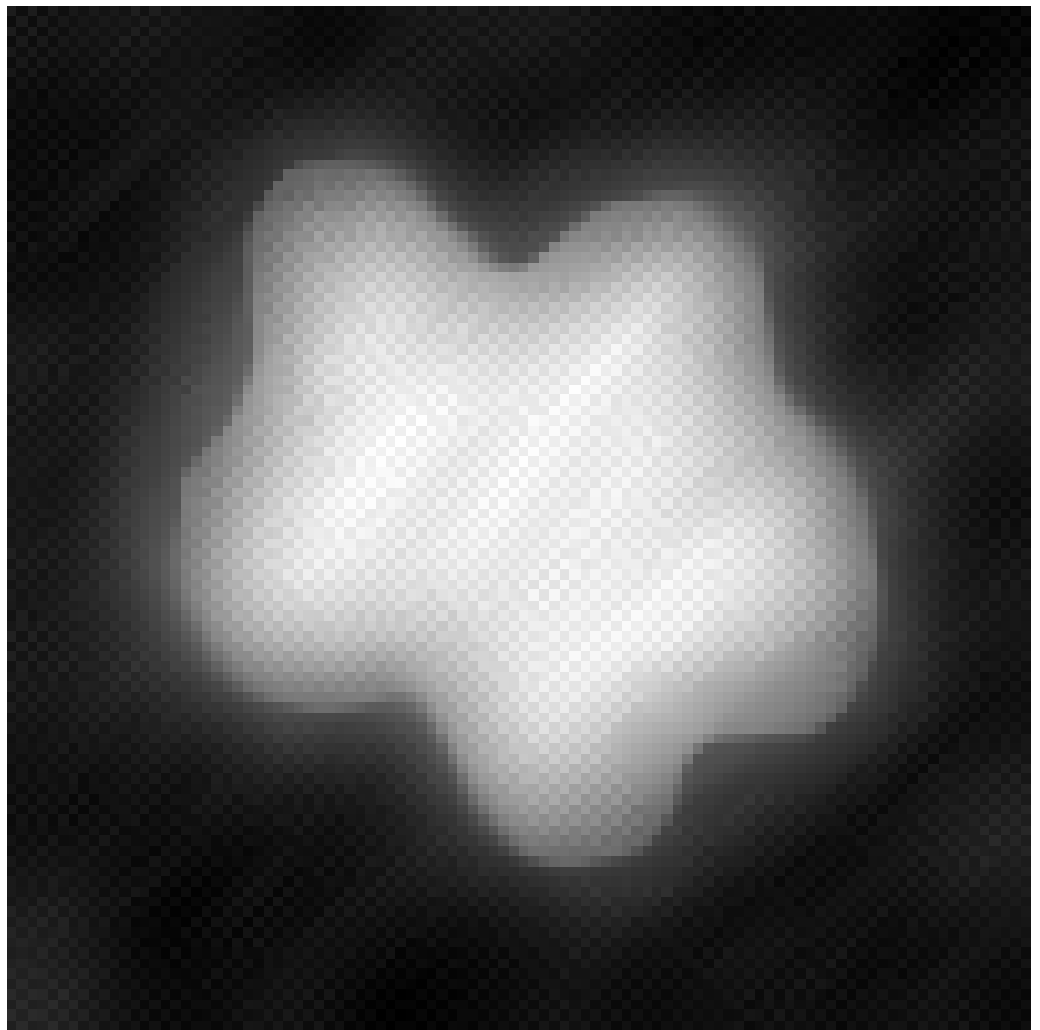}
      \caption{}
  \end{subfigure}
  \hfill
  \begin{subfigure}[b]{0.16\linewidth}
      \centering
      \includegraphics[width=\textwidth]{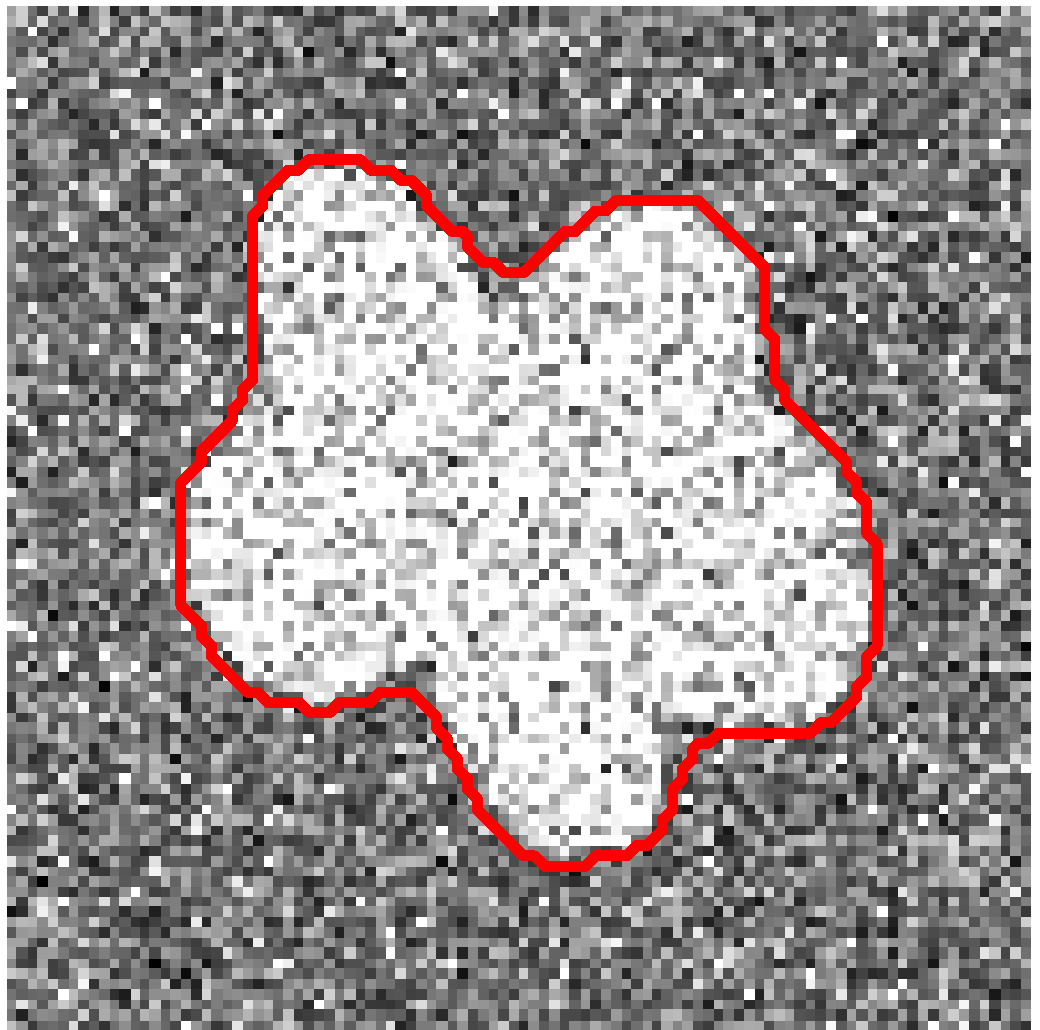}
      \caption{}
  \end{subfigure}

  \caption{Denoising and segmentation with different parameter settings $(\gamma , \nu)$. (a) noisy image; (b) initial contour; (c)–(d) (0.01,1); (e)–(f) (0.1,1); (g)–(h) (0.1,4); (i)–(j) (0.5,7); (k)–(l) (0.5,10). In each pair, the first image shows denoising and the second shows segmentation}
  \label{fig:004-para}
\end{figure}
\begin{table}
\centering
\caption{DSC, IoU, Accuracy and $\kappa$ values of the models under different parameter settings}
\label{tab:004-results}
\begin{tabularx}{\textwidth}{@{}>{\hsize=1.8\hsize}X>{\hsize=0.8\hsize}X>{\hsize=0.8\hsize}X>{\hsize=0.8\hsize}X>{\hsize=0.8\hsize}X@{}}
\toprule
Parameters & DSC & IoU & Accuracy & $\kappa$ \\
\midrule
$\gamma=0.01,\nu=1$ & 0.9765 & 0.9541 & 0.9865 & 0.9671 \\
$\gamma=0.1,\nu=1$ & 0.9779 & 0.9567 & 0.9873 & 0.9690 \\
$\gamma=0.1,\nu=4$ & 0.9665 & 0.9351 & 0.9805 & 0.9528 \\
$\gamma=0.1,\nu=5$ & 0.9574 & 0.9183 & 0.9750  & 0.9398 \\
$\gamma=0.5,\nu=10$ & 0.9381 & 0.8834 & 0.9629 & 0.9118 \\
\bottomrule
\end{tabularx}
\end{table}
We evaluate the robustness of our model with respect to variations in  the denoising parameters $\nu$ and $\gamma$. We perform segmentation experiments on a synthetic image corrupted by multiplicative Gamma noise with $L=10$. Several denoising parameter settings are tested, while the other parameters fixed at $\lambda_i=1$ and $\mu=10^{-9}\times255^2$. The experimental results and the corresponding evaluation metrics are presented in Fig.~\ref{fig:004-para} and Table~\ref{tab:004-results}, respectively. The evolution of the energy over iterations under different denoising parameter settings is depicted in Fig.~\ref{fig:energy-004-para}. It can be seen that all parameter settings yield accurate segmentation results, with only slight variations in the evaluation metrics, thereby demonstrating the robustness of our model to the denoising parameters. As a region-based method, our approach can effectively segment images without clear edges. Consequently, even when the denoised images are over-smoothed and the boundaries become blurred, the model still achieves accurate segmentation.

\subsection{Application on SAR Images}
\begin{figure}
  \centering
  \begin{subfigure}[b]{0.19\linewidth}
      \centering
      \includegraphics[width=\textwidth]{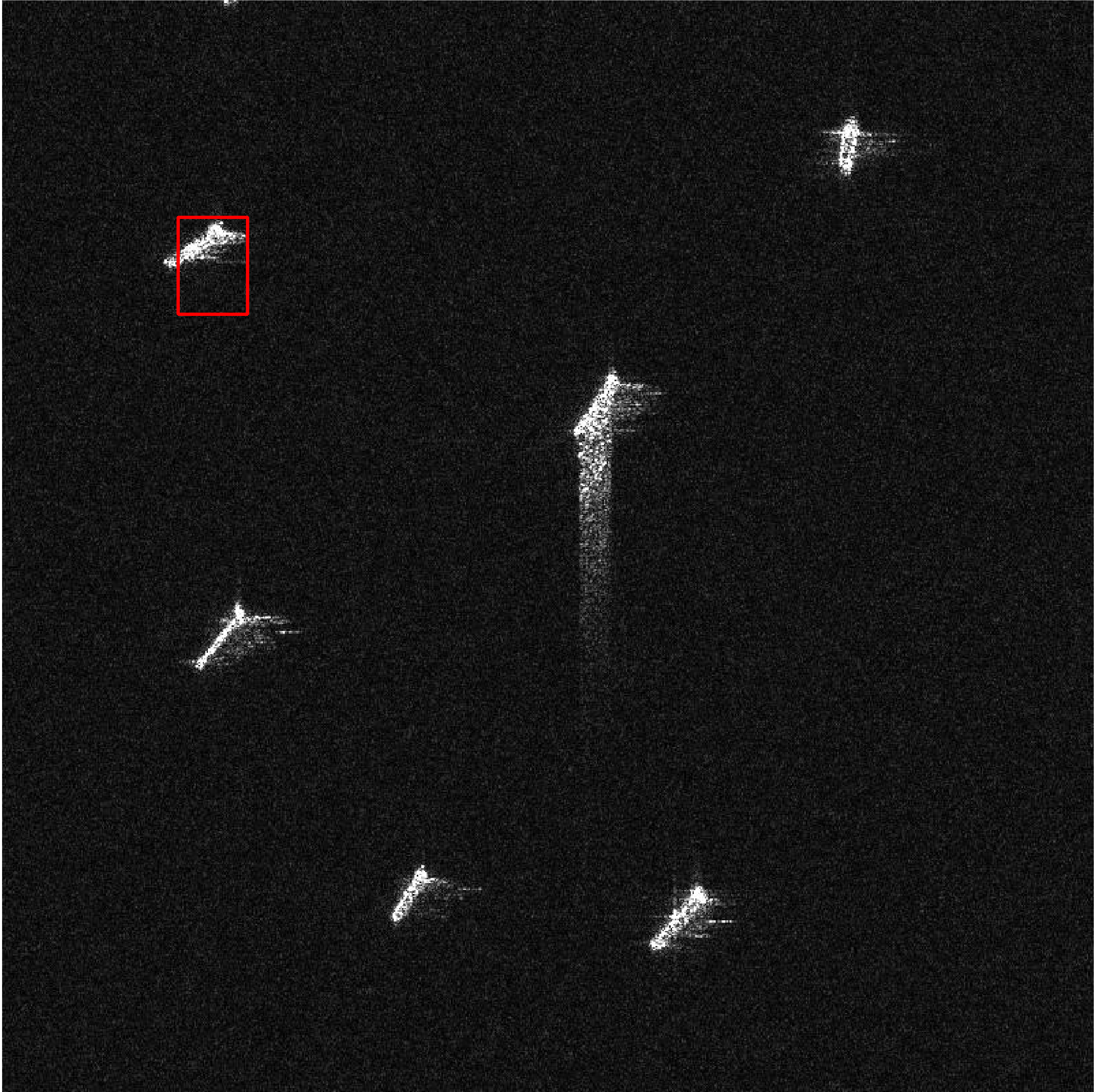}
  \end{subfigure}
  \hfill
  \begin{subfigure}[b]{0.19\linewidth}
      \centering
      \includegraphics[width=\textwidth]{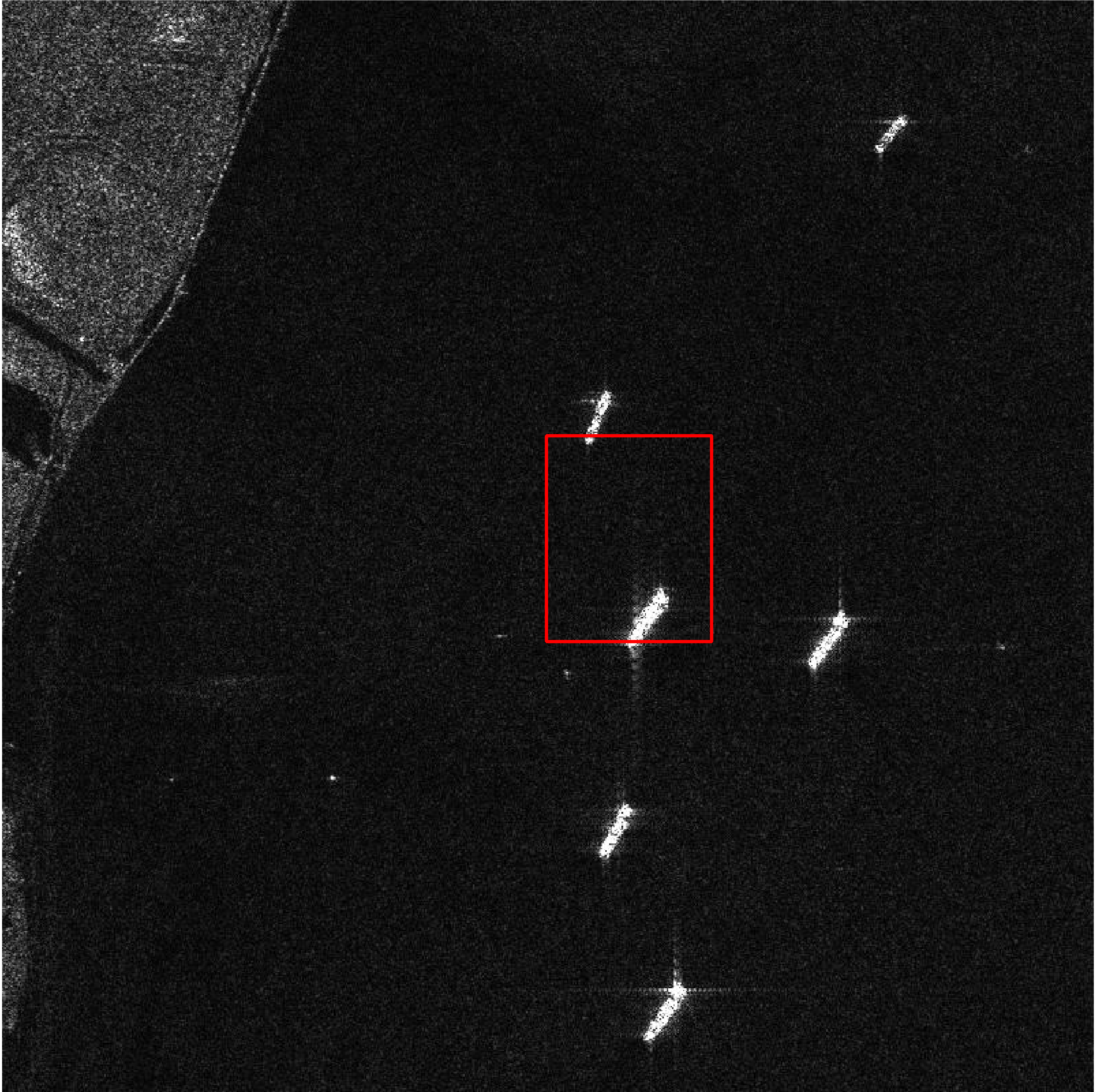}
  \end{subfigure}
 \hfill
 \begin{subfigure}[b]{0.19\linewidth}
      \centering
      \includegraphics[width=\textwidth]{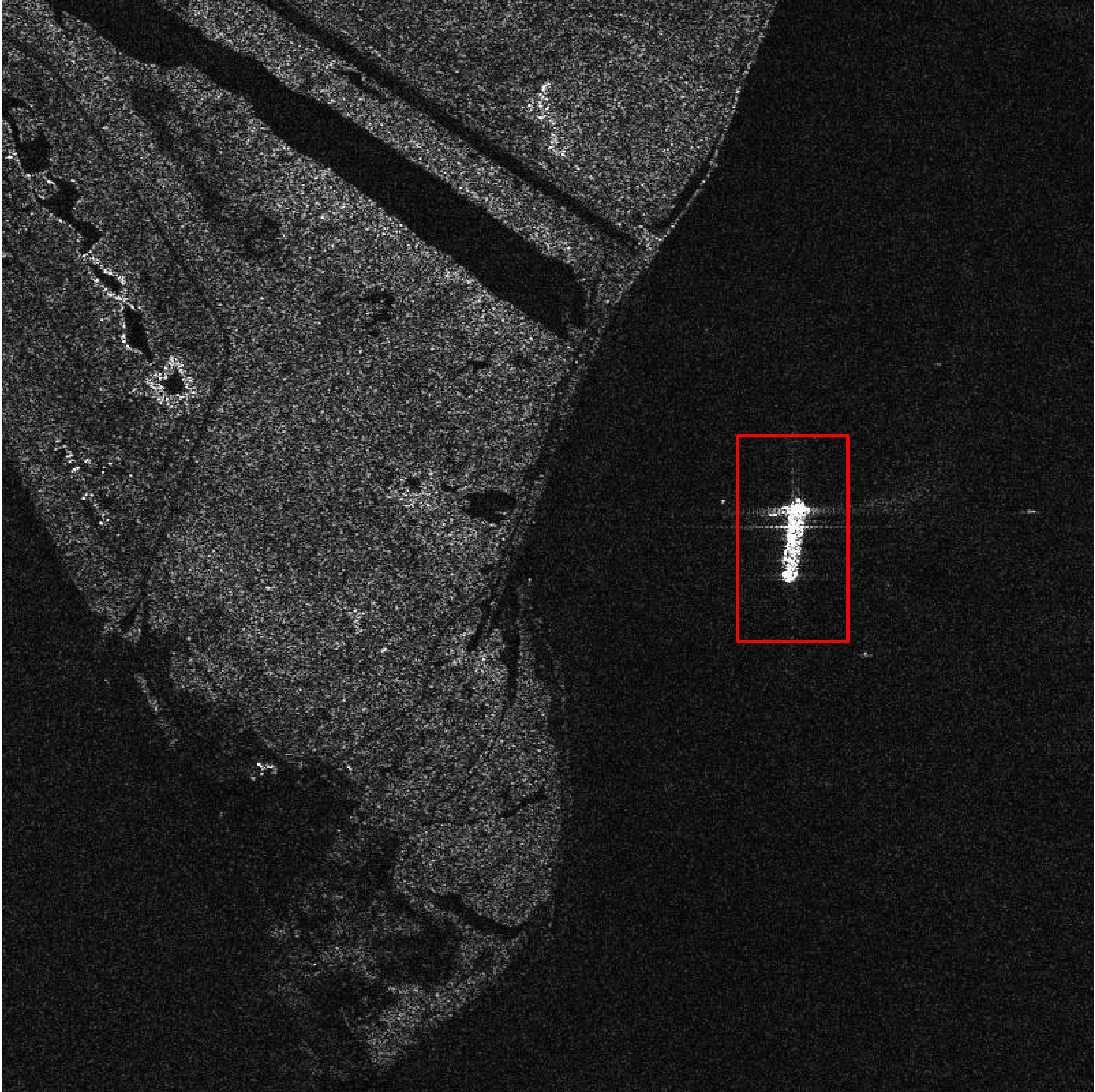}
  \end{subfigure}
  \hfill
   \begin{subfigure}[b]{0.19\linewidth}
      \centering
      \includegraphics[width=\textwidth]{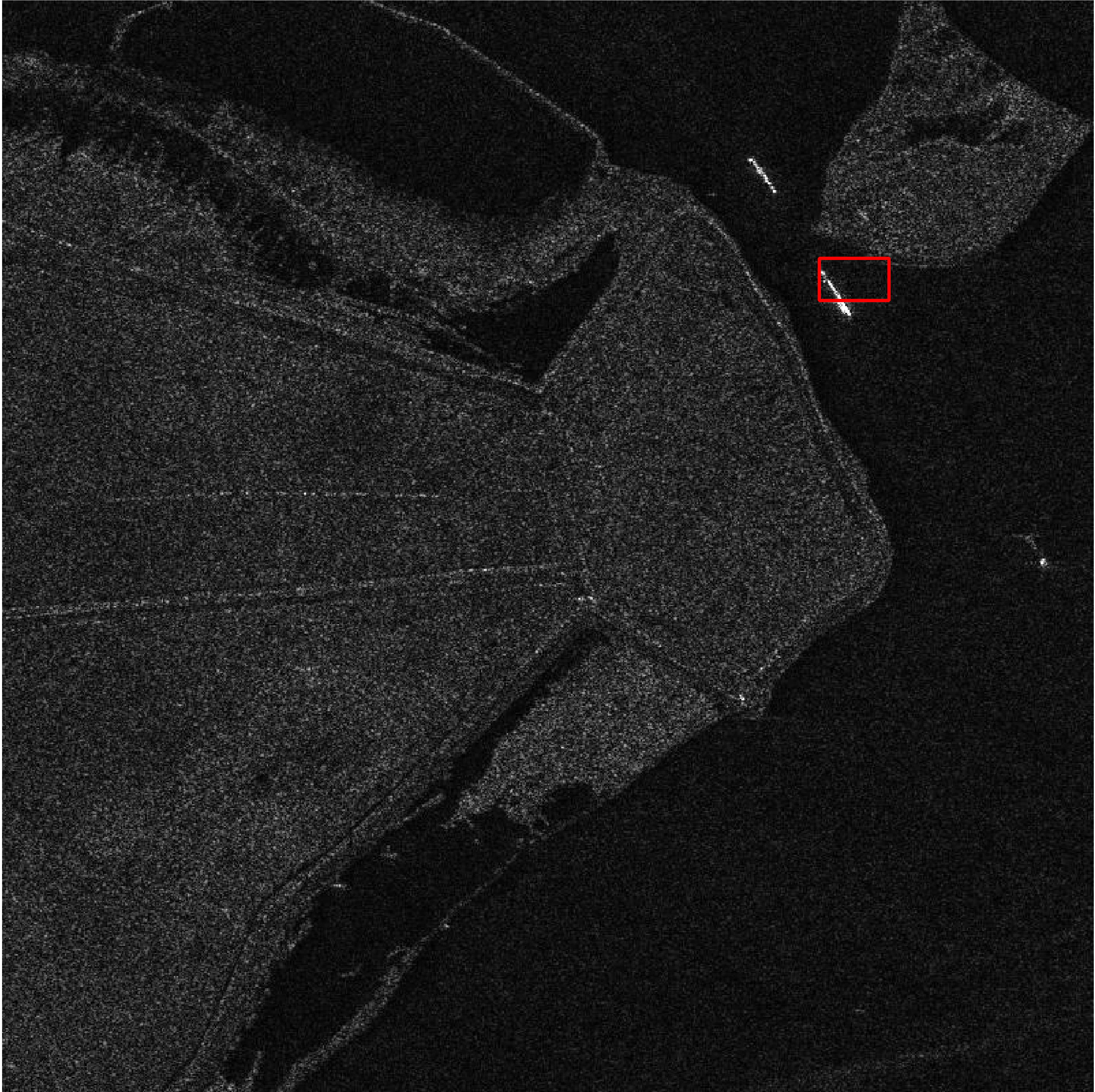}
  \end{subfigure}
  \hfill
  \begin{subfigure}[b]{0.19\linewidth}
      \centering
      \includegraphics[width=\textwidth]{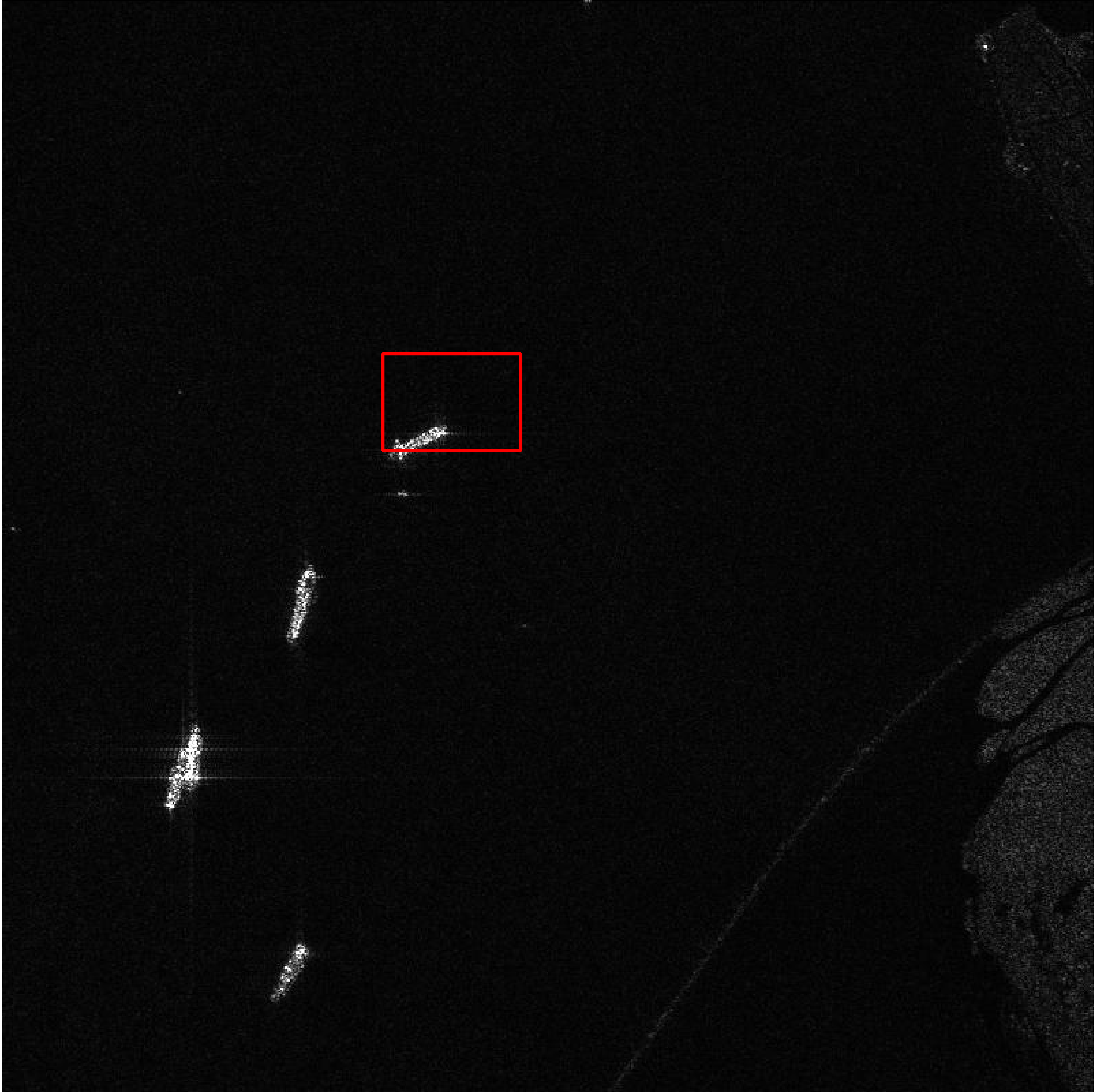}
  \end{subfigure}

  \begin{subfigure}[b]{0.19\linewidth}
      \centering
      \includegraphics[width=\textwidth]{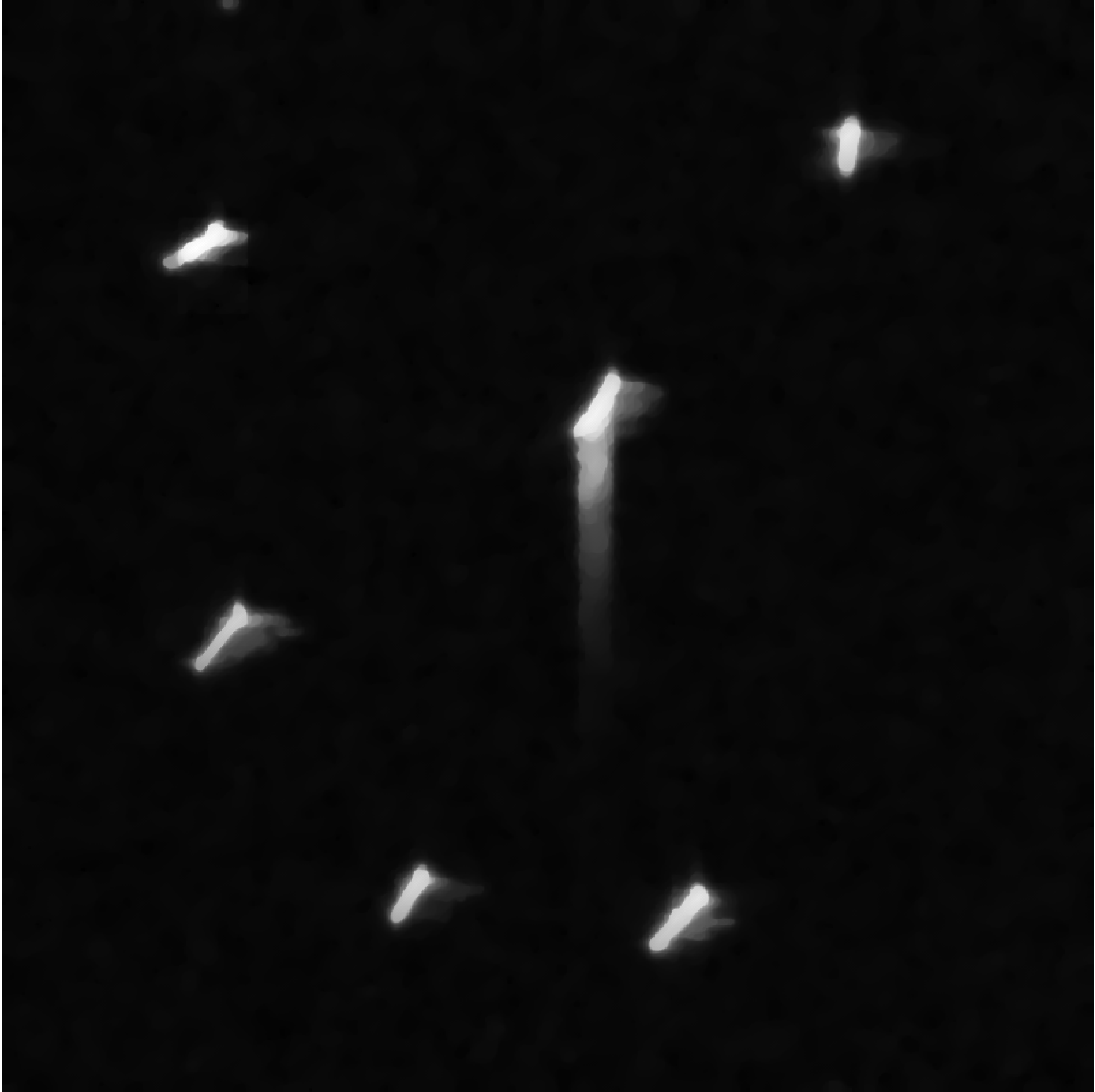}
  \end{subfigure}
  \hfill
  \begin{subfigure}[b]{0.19\linewidth}
      \centering
      \includegraphics[width=\textwidth]{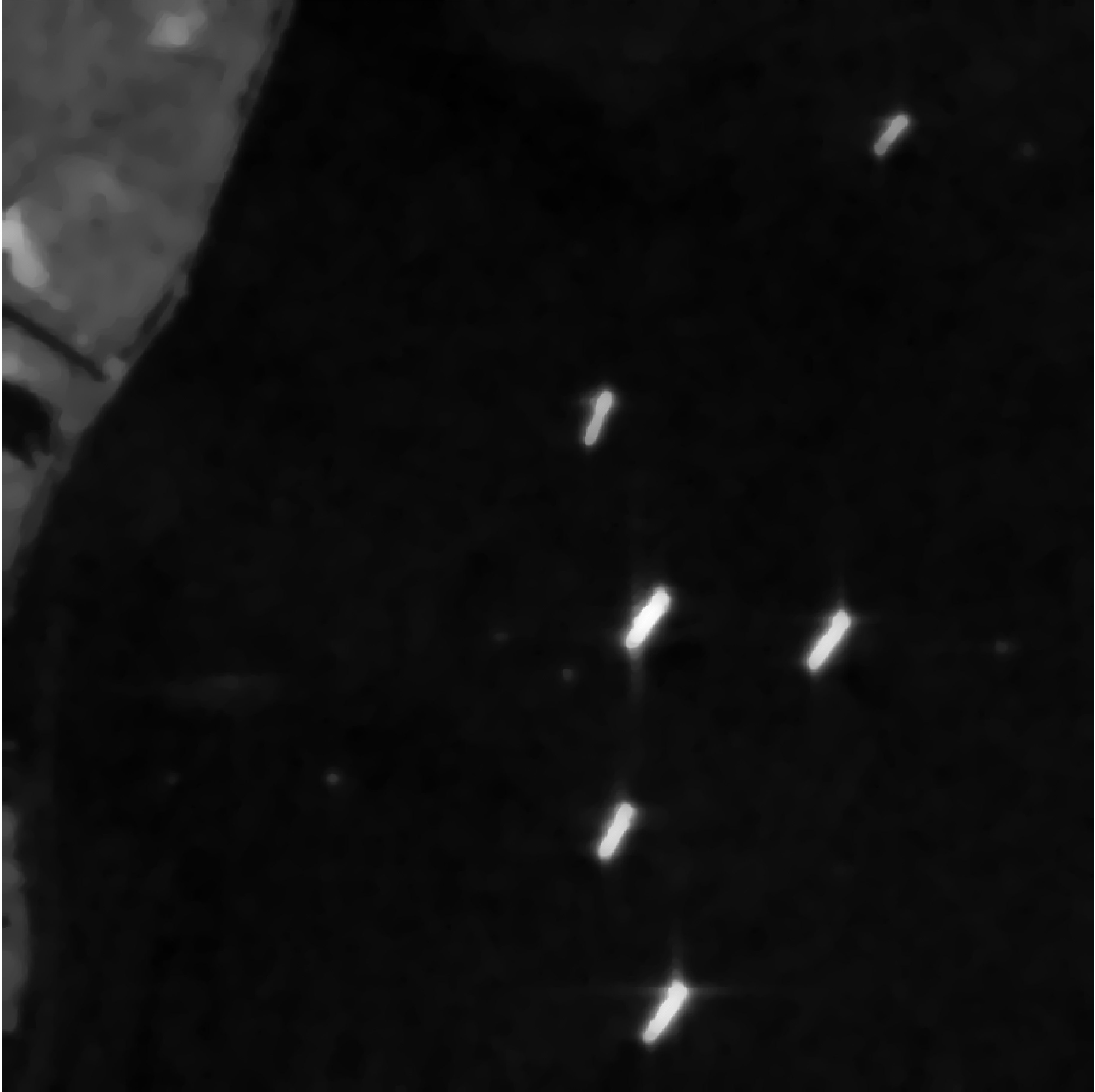}
  \end{subfigure}
 \hfill
 \begin{subfigure}[b]{0.19\linewidth}
      \centering
      \includegraphics[width=\textwidth]{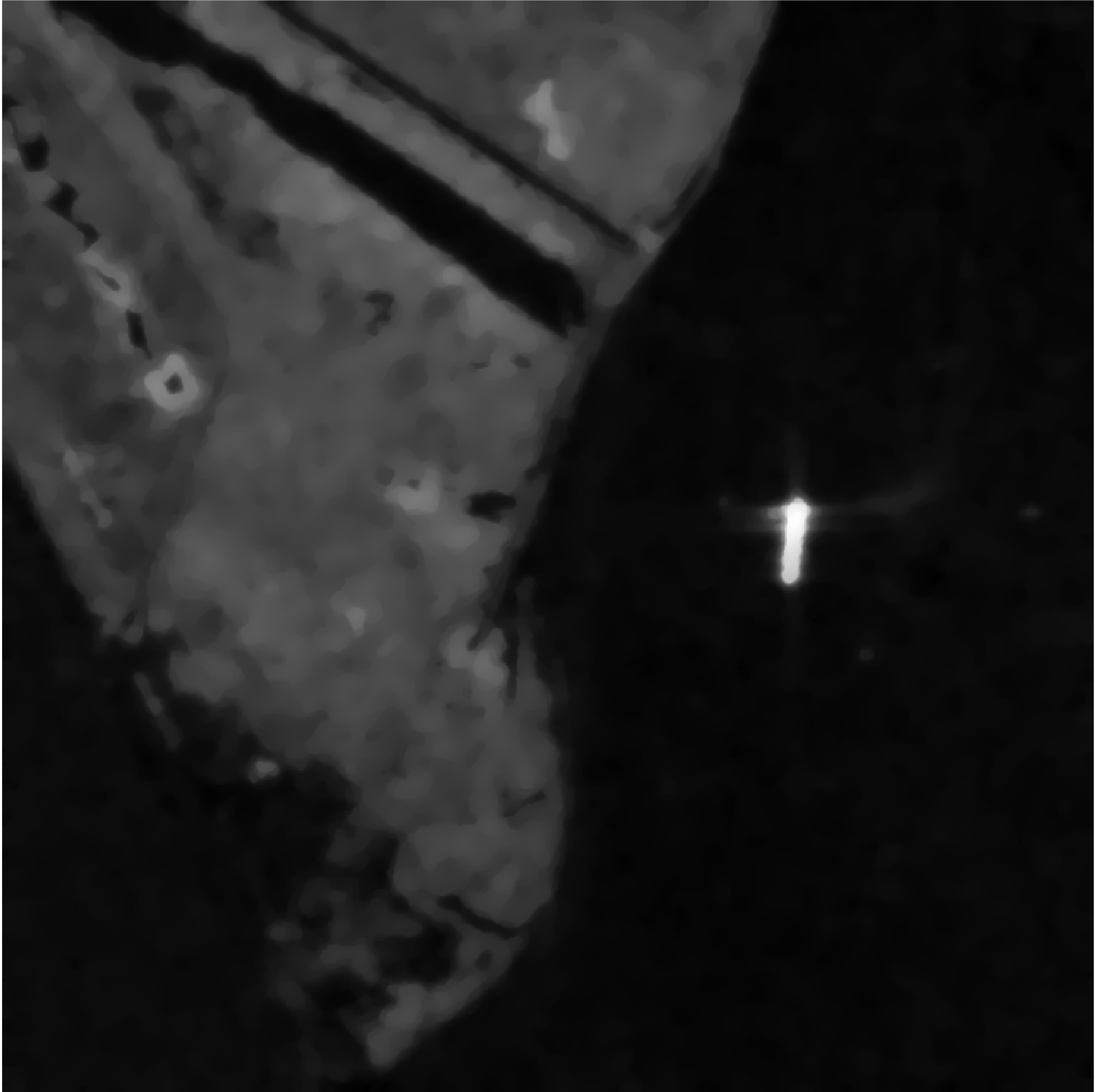}
  \end{subfigure}
 \hfill
   \begin{subfigure}[b]{0.19\linewidth}
      \centering
      \includegraphics[width=\textwidth]{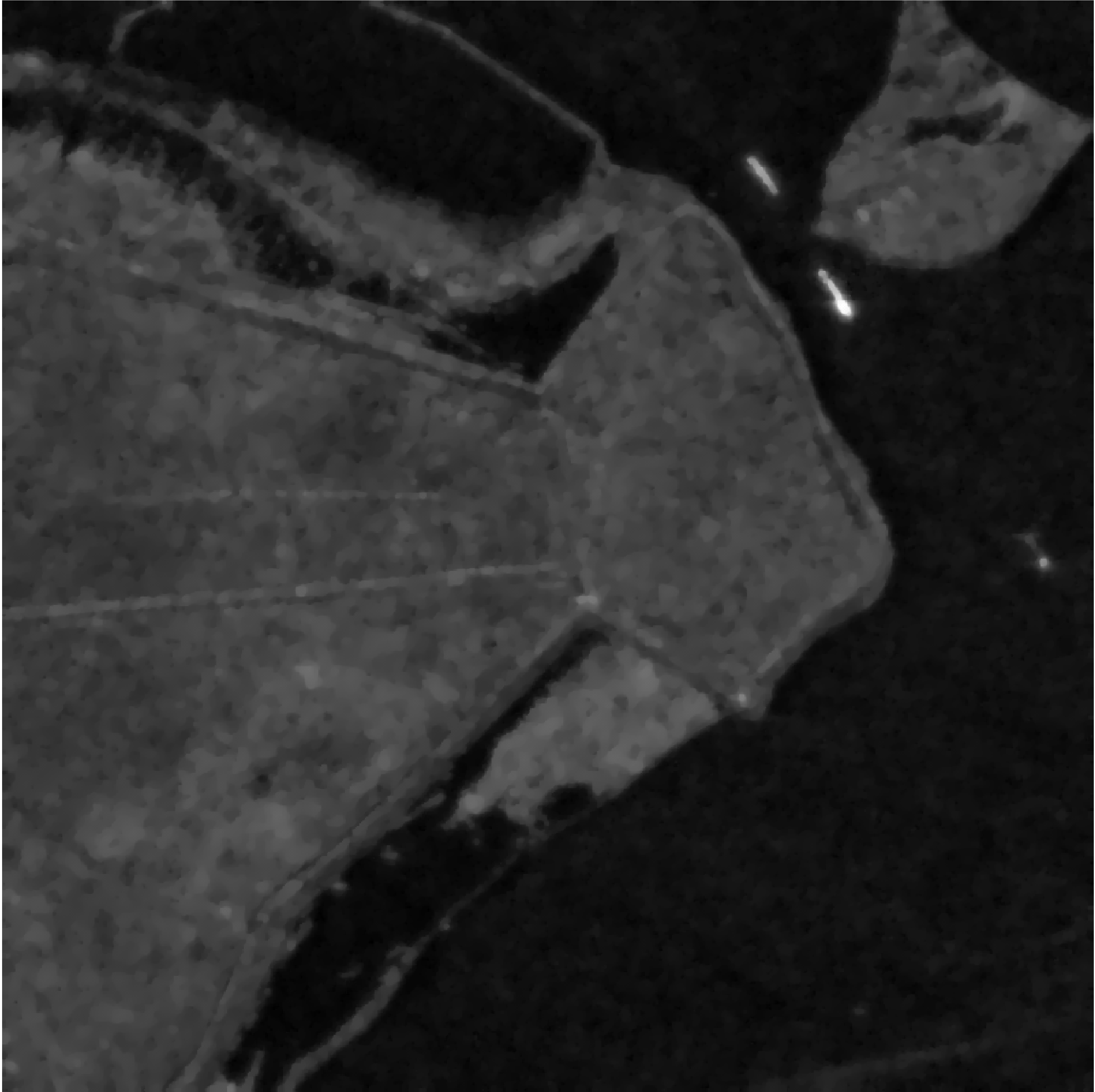}
  \end{subfigure}
   \hfill
  \begin{subfigure}[b]{0.19\linewidth}
      \centering
      \includegraphics[width=\textwidth]{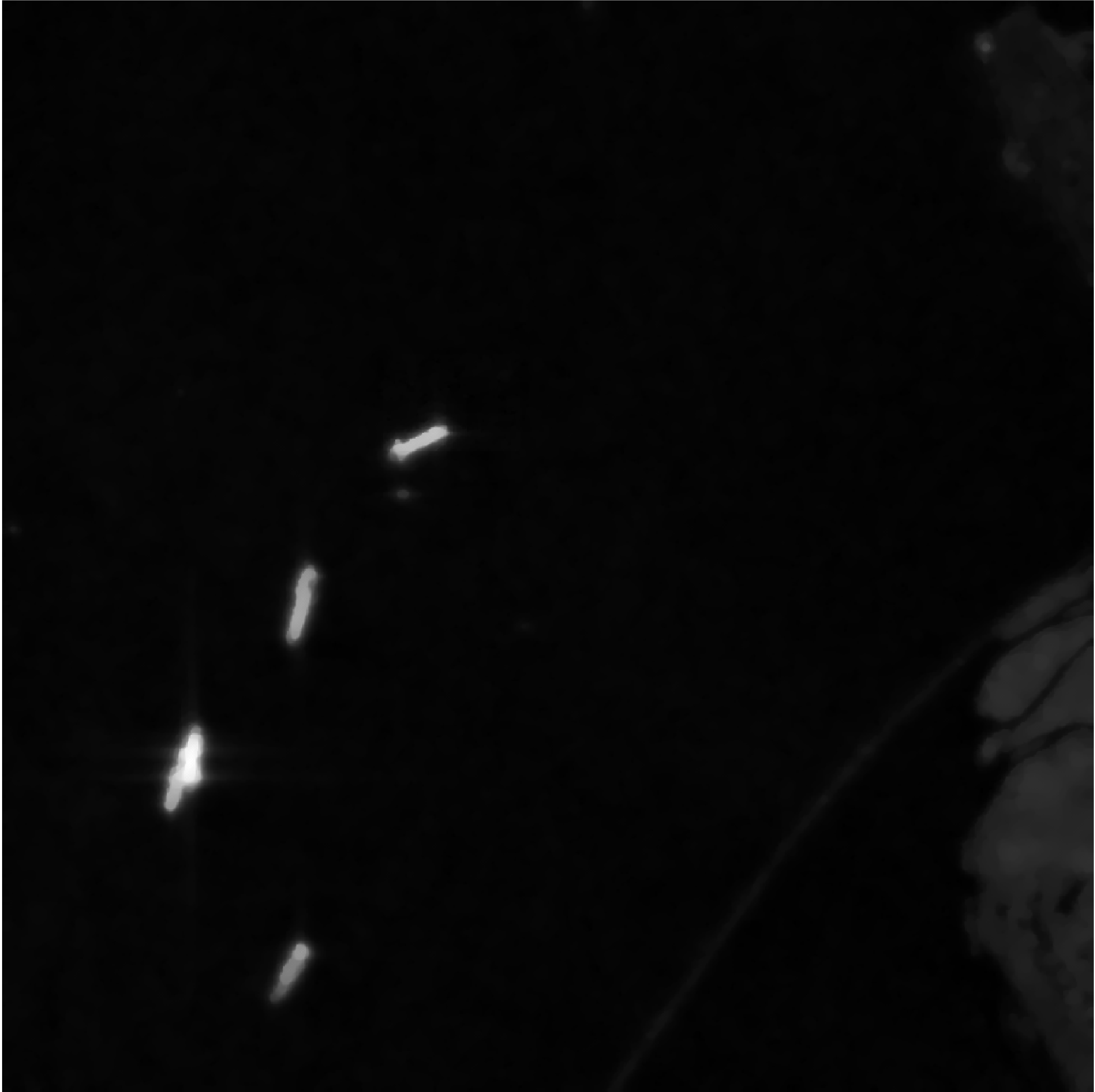}
  \end{subfigure}

    \begin{subfigure}[b]{0.19\linewidth}
      \centering
      \includegraphics[width=\textwidth]{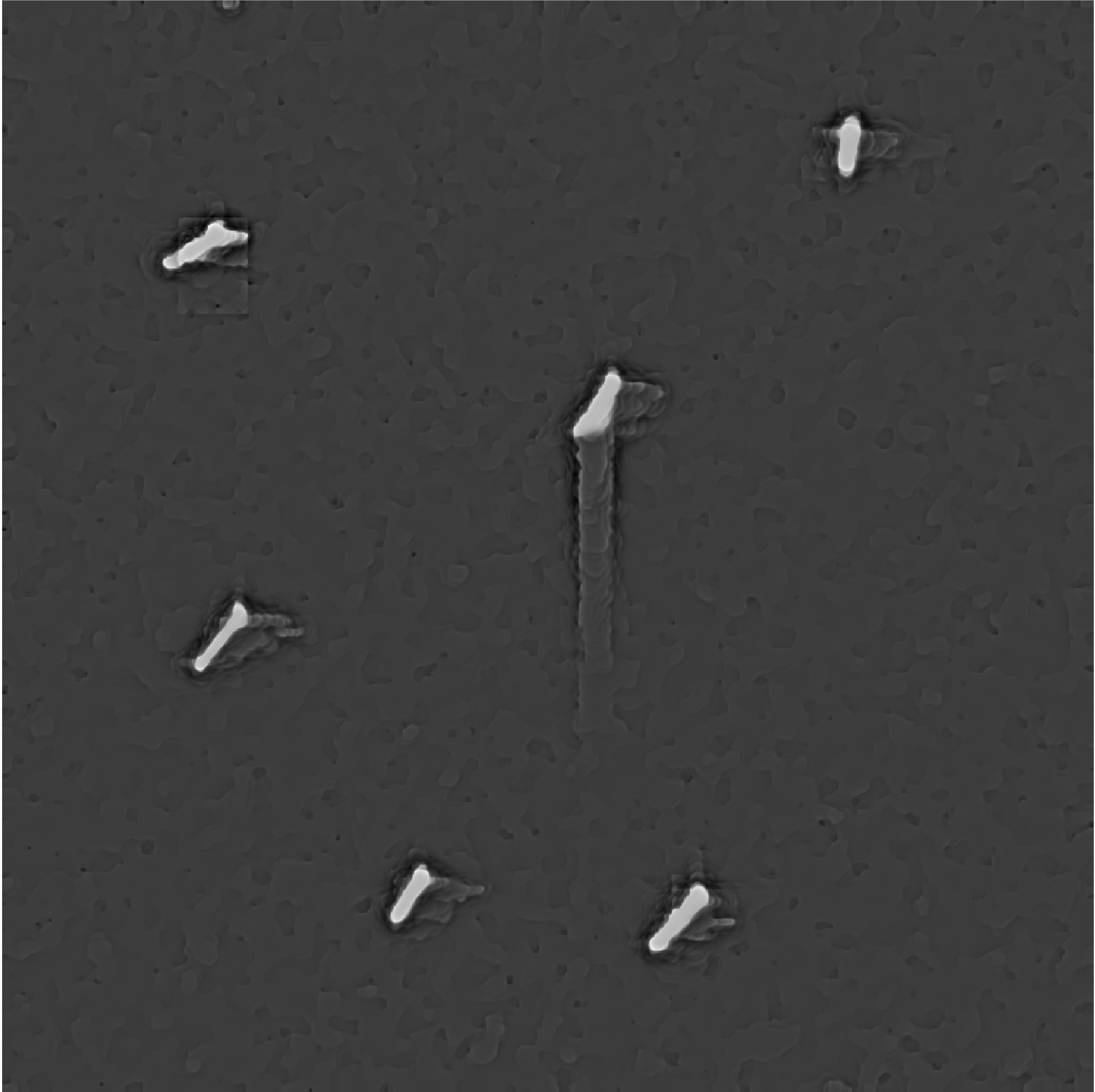}
  \end{subfigure}
  \hfill
    \begin{subfigure}[b]{0.19\linewidth}
      \centering
      \includegraphics[width=\textwidth]{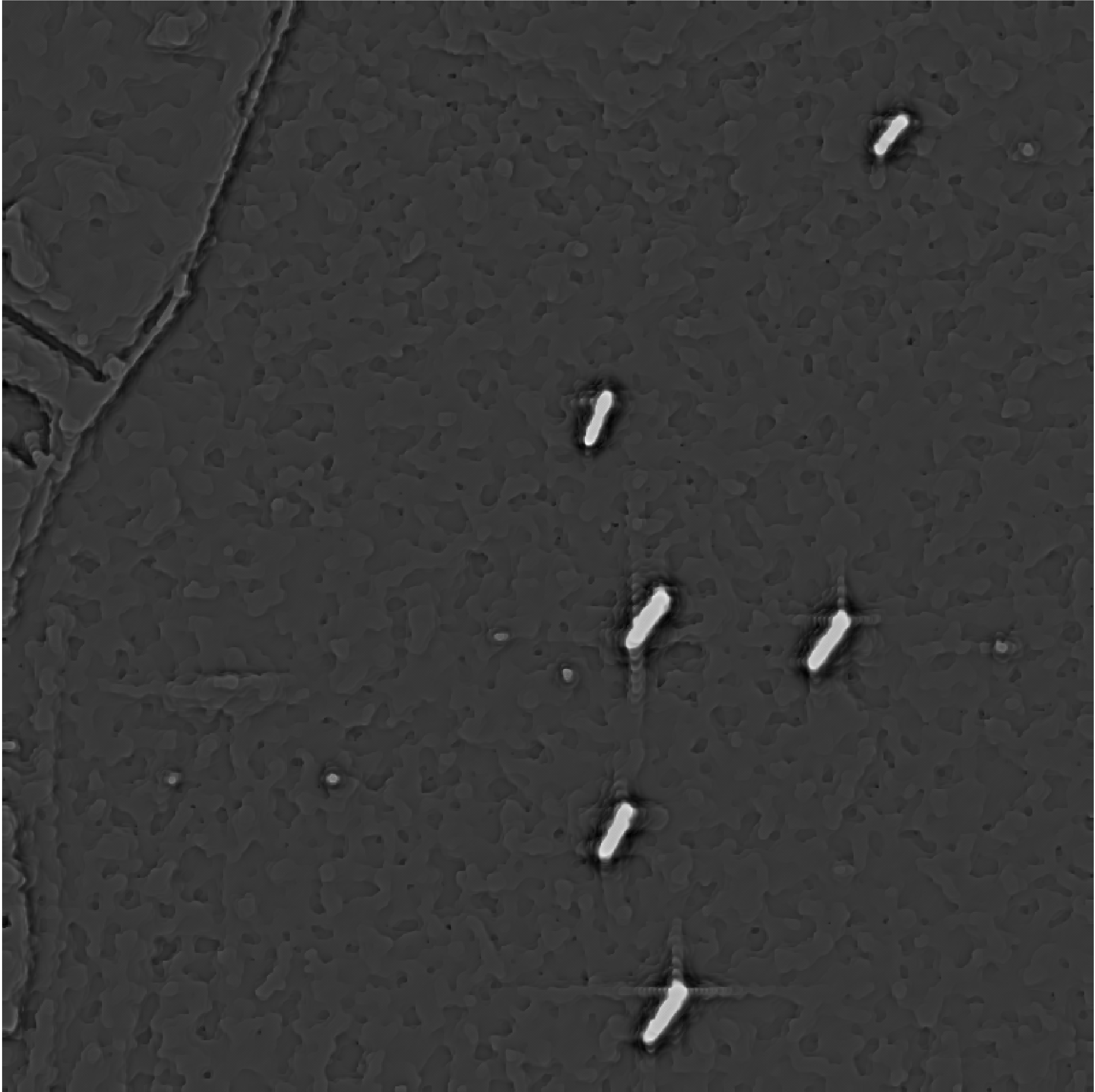}
  \end{subfigure}
  \hfill
  \begin{subfigure}[b]{0.19\linewidth}
      \centering
      \includegraphics[width=\textwidth]{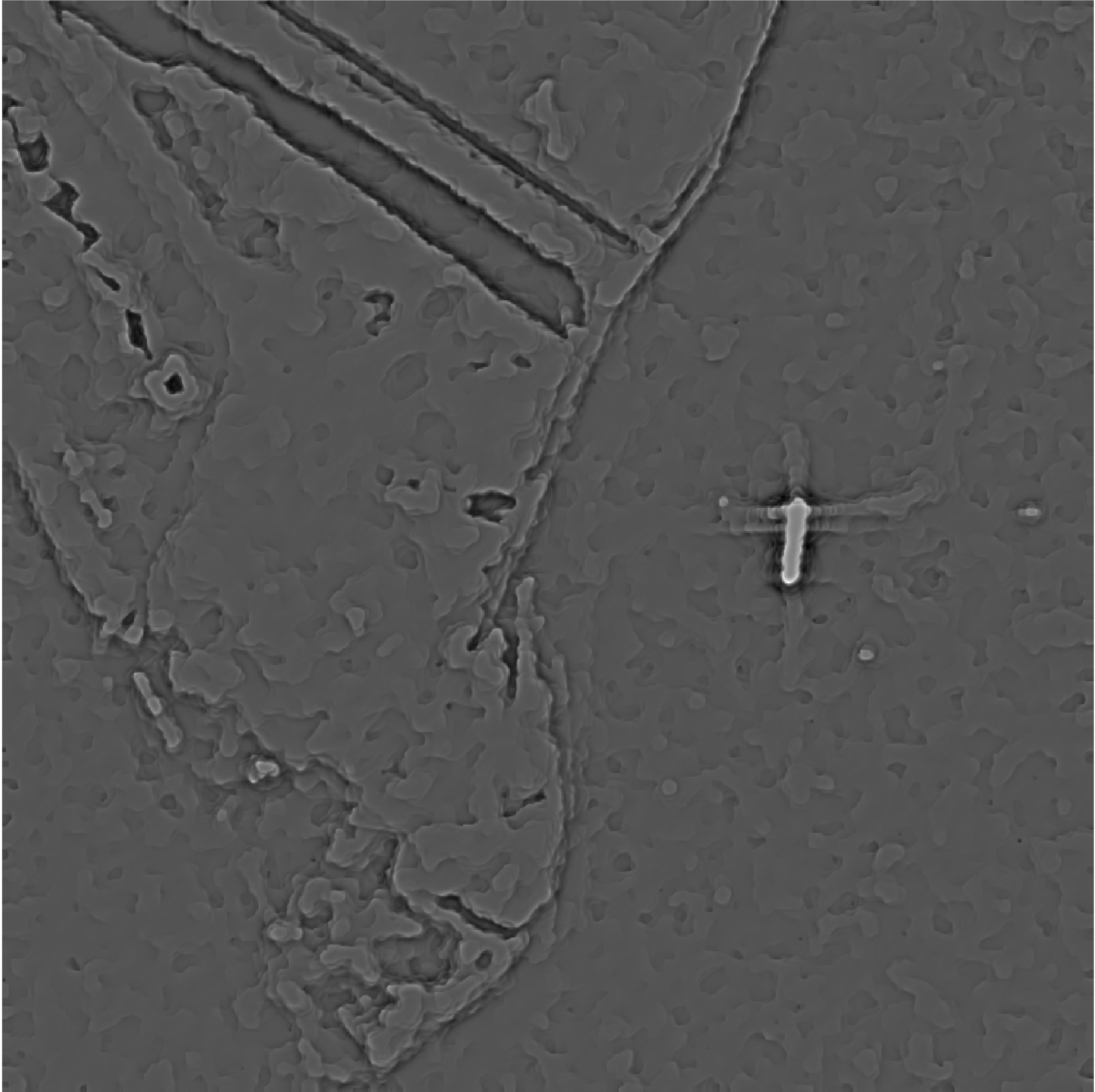}
  \end{subfigure}
  \hfill
   \begin{subfigure}[b]{0.19\linewidth}
      \centering
      \includegraphics[width=\textwidth]{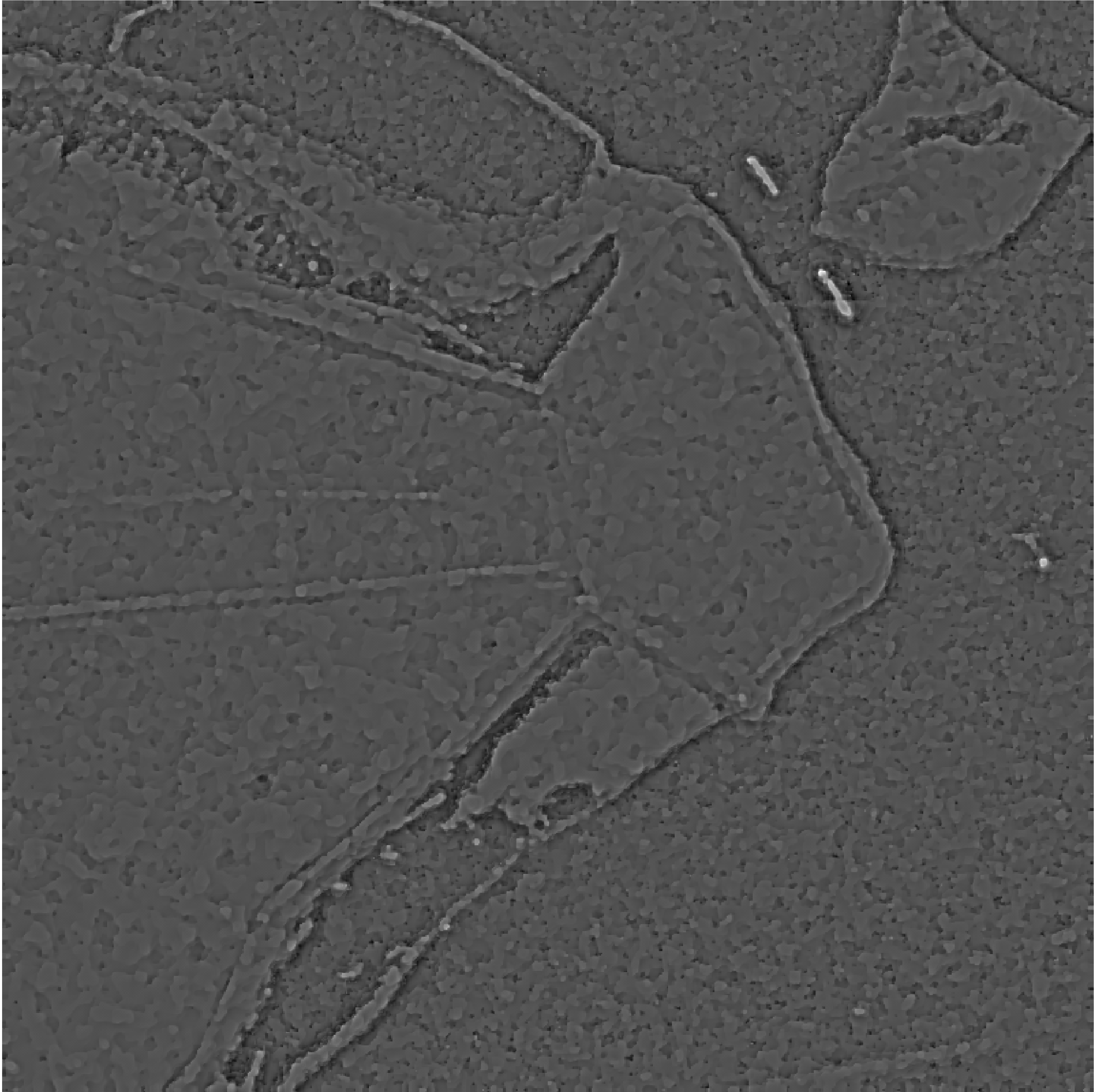}
  \end{subfigure}
  \hfill
  \begin{subfigure}[b]{0.19\linewidth}
      \centering
      \includegraphics[width=\textwidth]{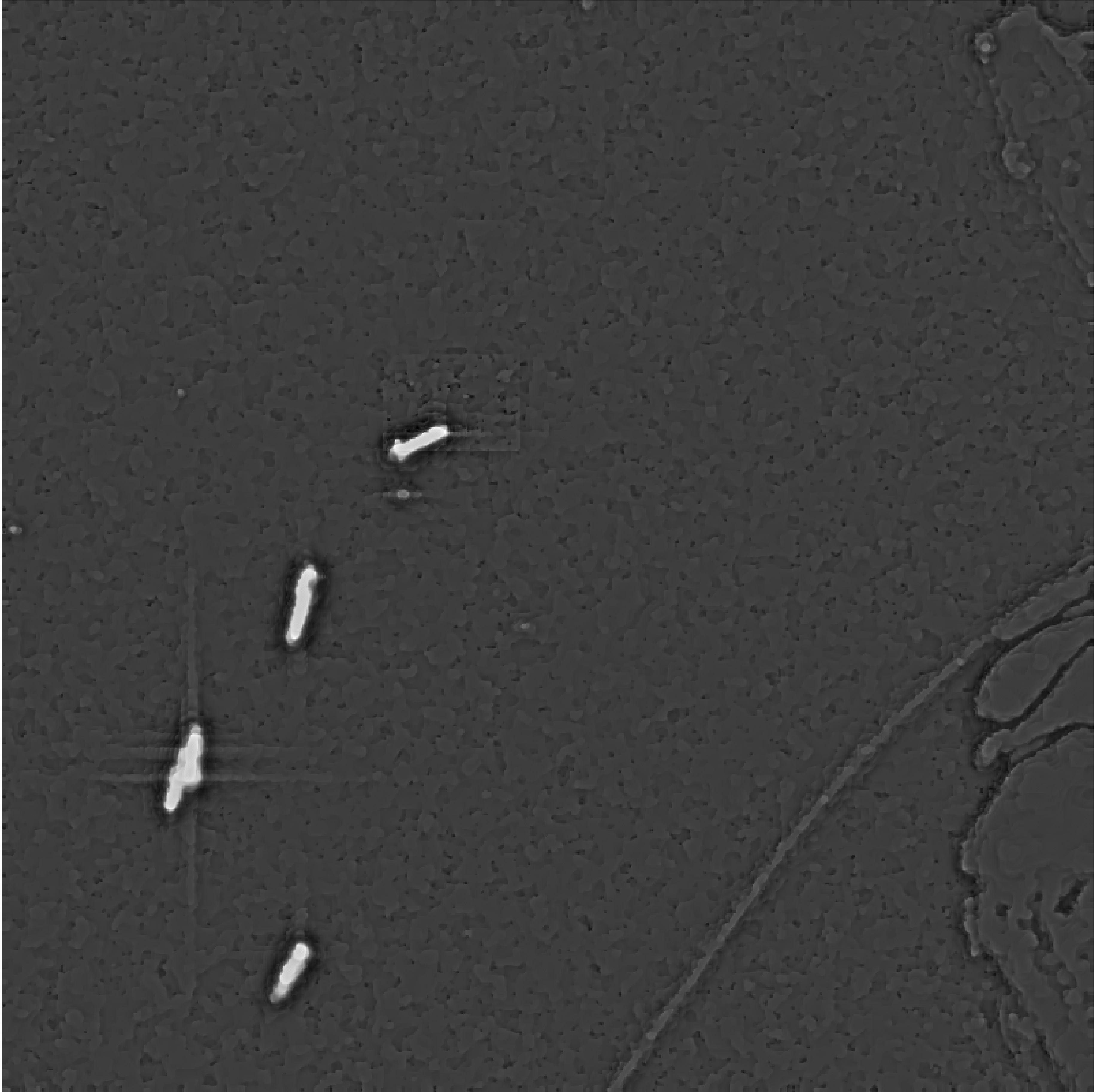}
  \end{subfigure}

  \begin{subfigure}[b]{0.19\linewidth}
      \centering
      \includegraphics[width=\textwidth]{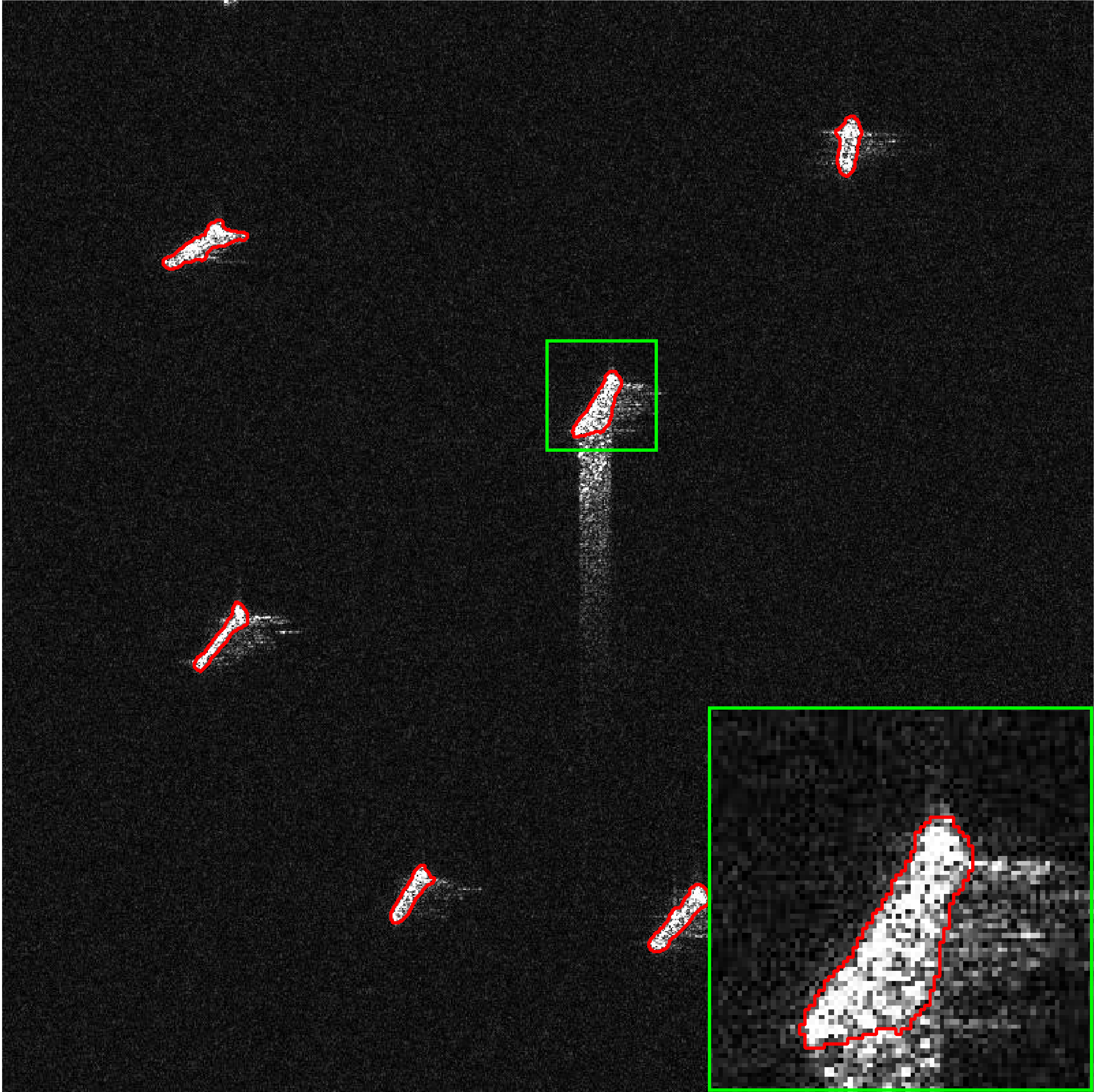}
  \end{subfigure}
  \hfill
    \begin{subfigure}[b]{0.19\linewidth}
      \centering
      \includegraphics[width=\textwidth]{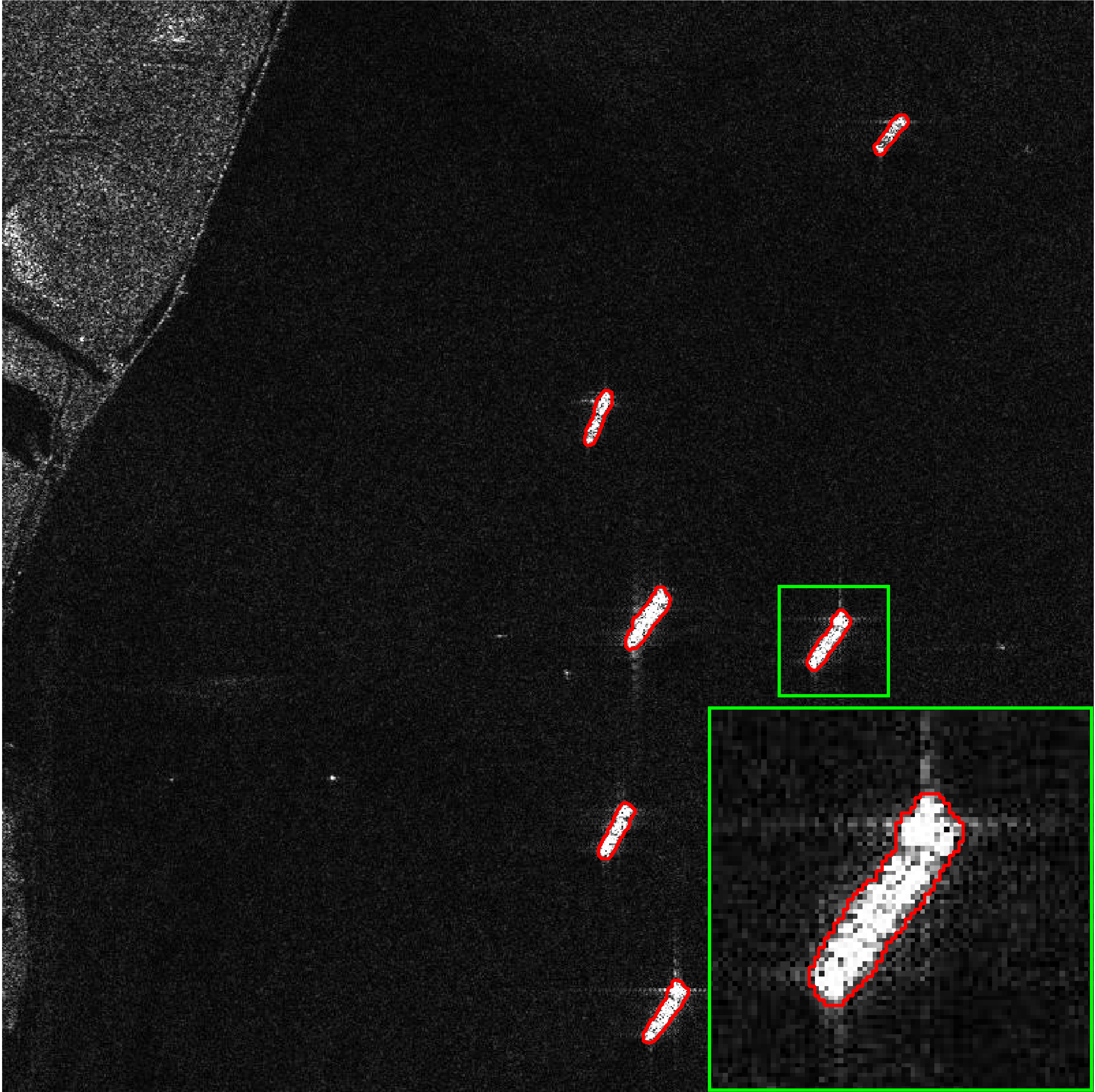}
  \end{subfigure}
 \hfill
  \begin{subfigure}[b]{0.19\linewidth}
      \centering
      \includegraphics[width=\textwidth]{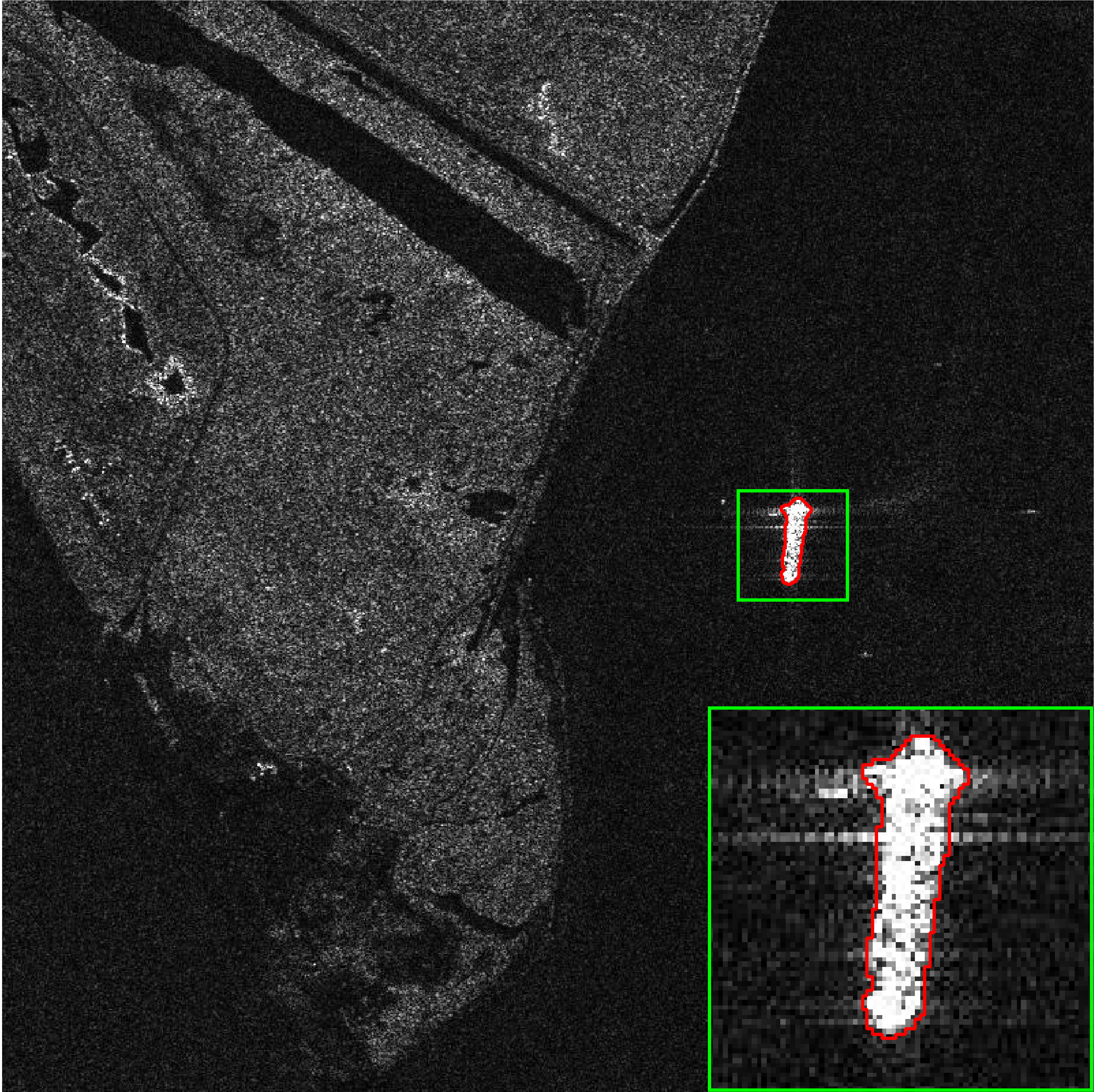}
  \end{subfigure}
  \hfill
  \begin{subfigure}[b]{0.19\linewidth}
      \centering
      \includegraphics[width=\textwidth]{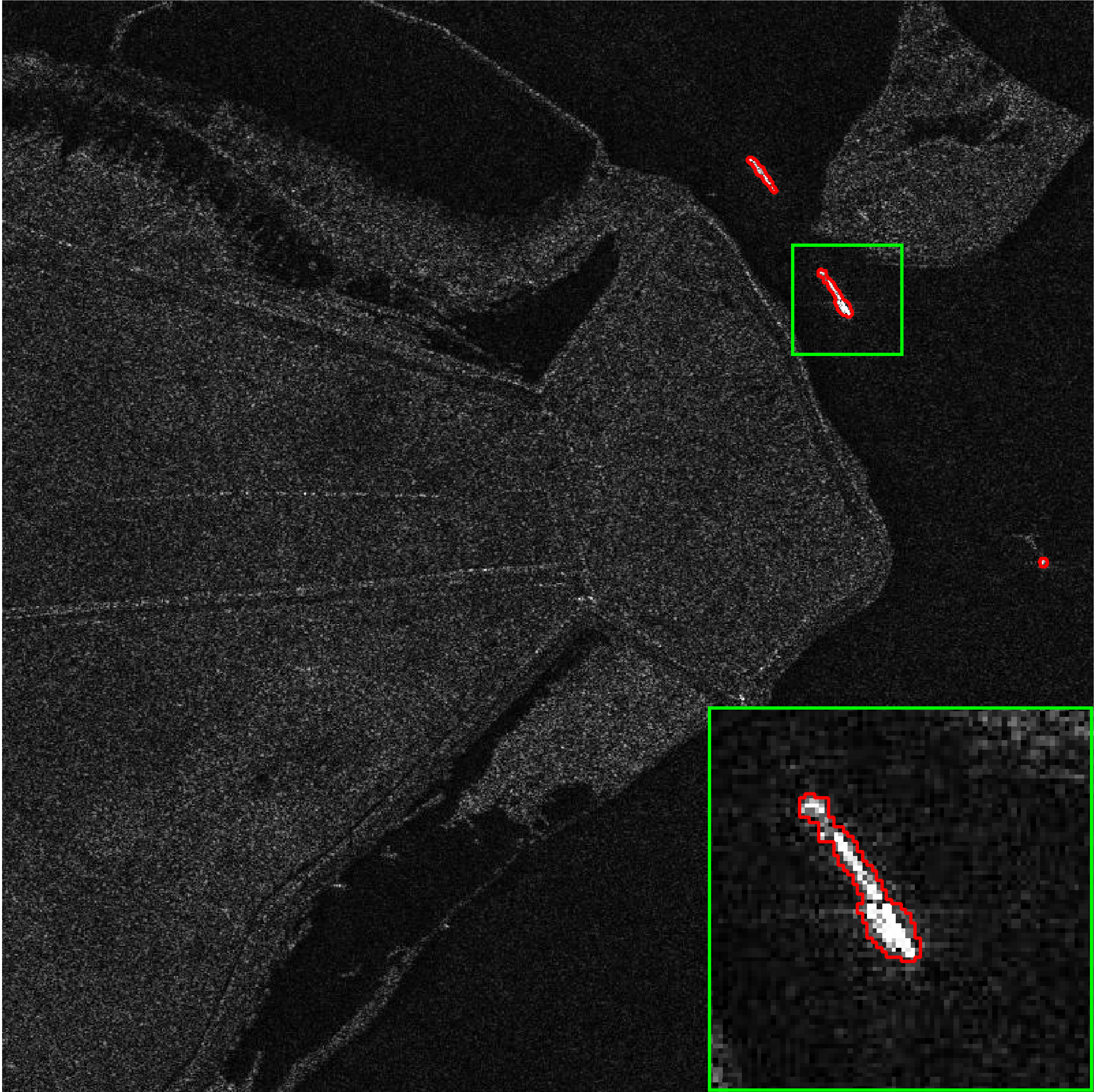}
  \end{subfigure}
  \hfill
  \begin{subfigure}[b]{0.19\linewidth}
      \centering
      \includegraphics[width=\textwidth]{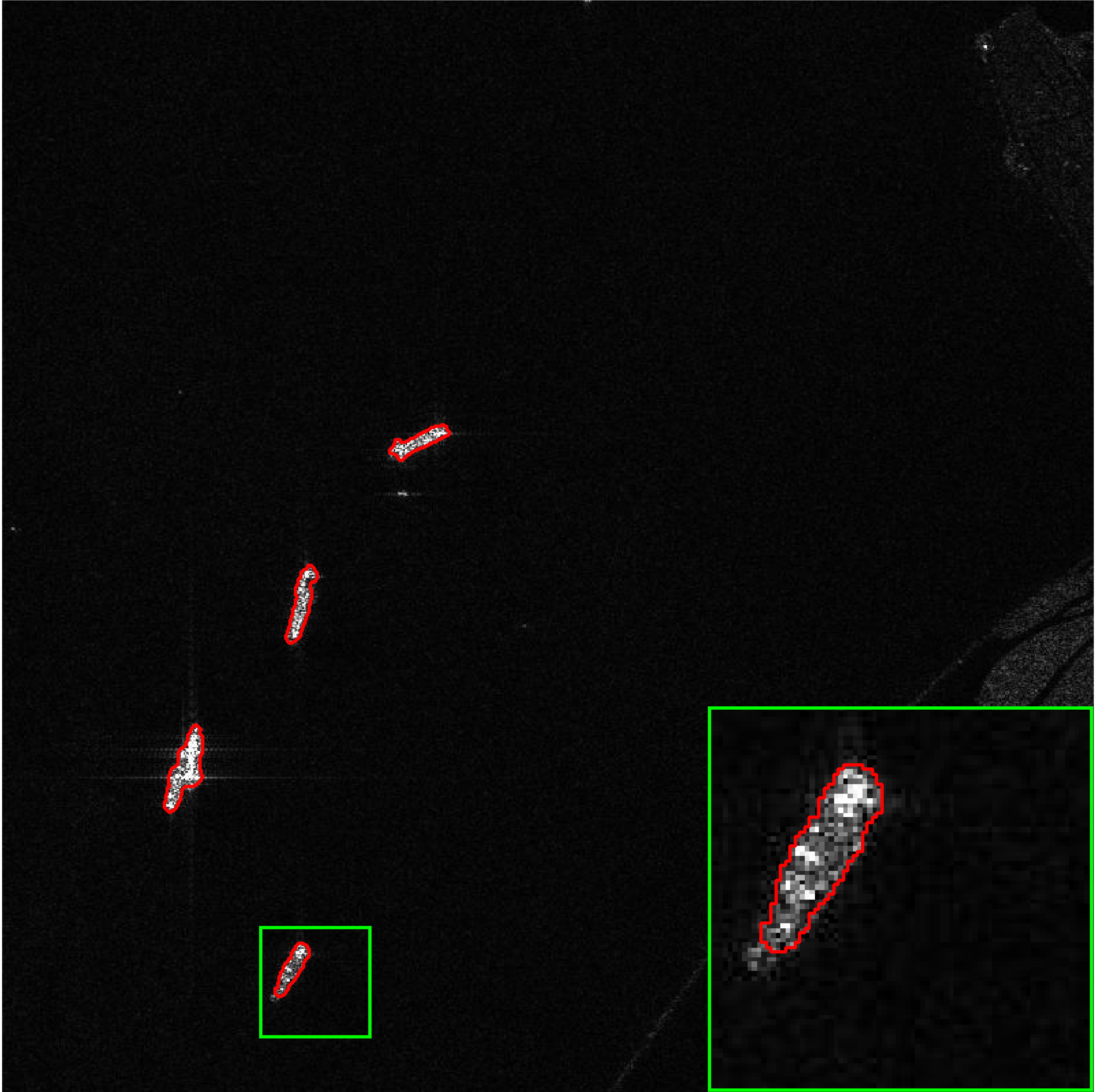}
  \end{subfigure}
  \caption{SAR image segmentation. Row 1: the input images with initial contours; Row 2: the denoised images; Row 3: the bias-field corrected images; Row 4: the segmentation results}
  \label{fig:ship}
\end{figure}
Segmentation performance on real-world images is crucial for practical applications. In SAR images, multiplicative Gamma noise is inevitably introduced by the imaging mechanism. To evaluate the segmentation performance of the proposed model, we conduct ship segmentation experiments on the High-Resolution SAR Images Dataset (HRSID) \cite{wei2020hrsid}. As shown in the zoomed-in regions of Fig.~\ref{fig:ship}, the target boundaries become indistinct due to noise. In addition, imaging artifacts often appear around objects as a result of the imaging process. The segmentation results obtained on the HRSID are presented in Fig.~\ref{fig:ship}. The proposed model effectively removes noise from images and corrects for intensity inhomogeneity. After denoising and bias correction, the object boundaries become clearer, thereby enabling more accurate segmentation results. These results confirm the effectiveness of the proposed model for real SAR images.

\subsection{Comparison with Some State-of-the-Art Algorithms}
\begin{figure}
  \centering
  \begin{subfigure}[b]{0.16\linewidth}
      \centering
      \includegraphics[width=\textwidth]{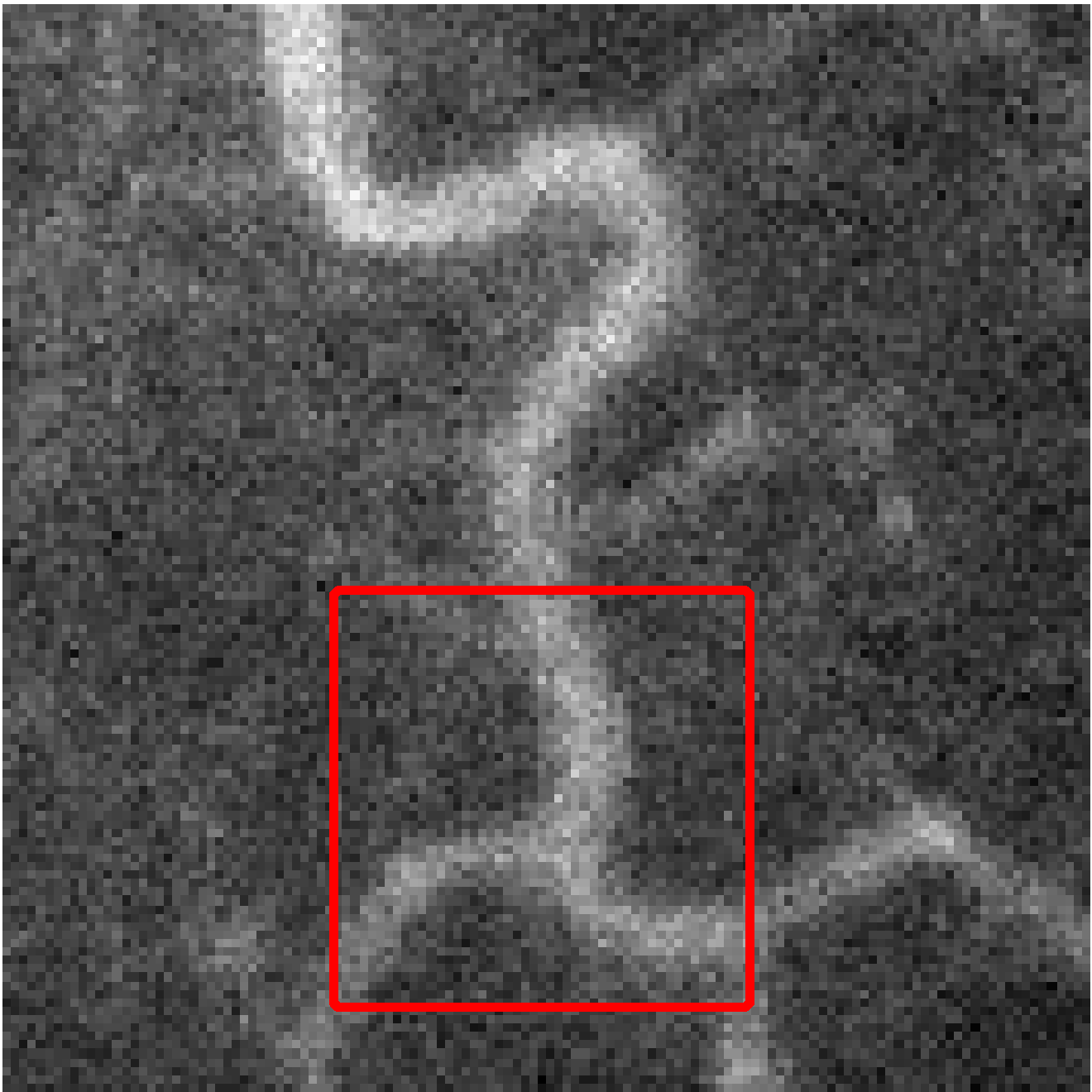}
  \end{subfigure}
  \hfill
  \begin{subfigure}[b]{0.16\linewidth}
      \centering
      \includegraphics[width=\textwidth]{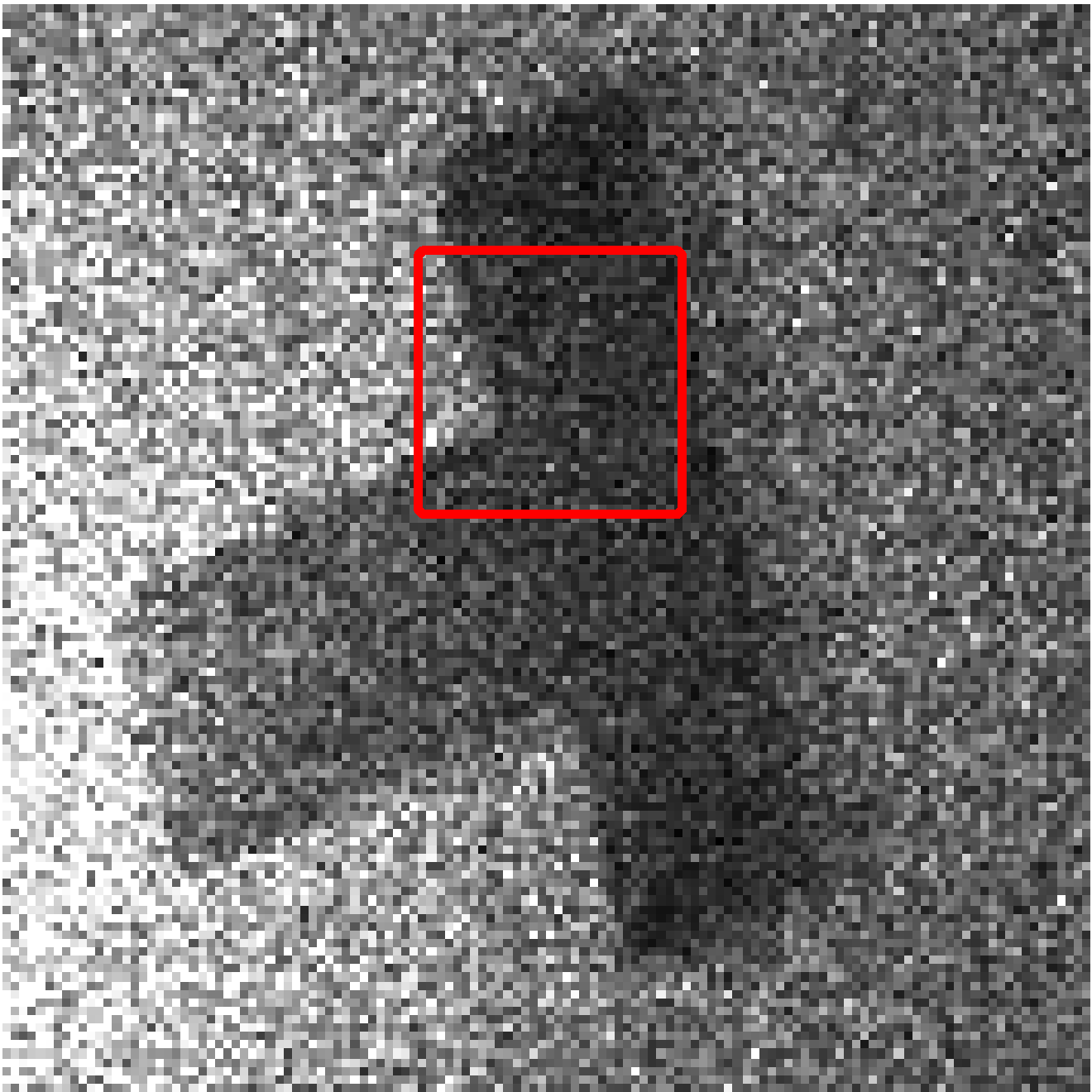}
  \end{subfigure}
  \hfill
  \begin{subfigure}[b]{0.16\linewidth}
      \centering
      \includegraphics[width=\textwidth]{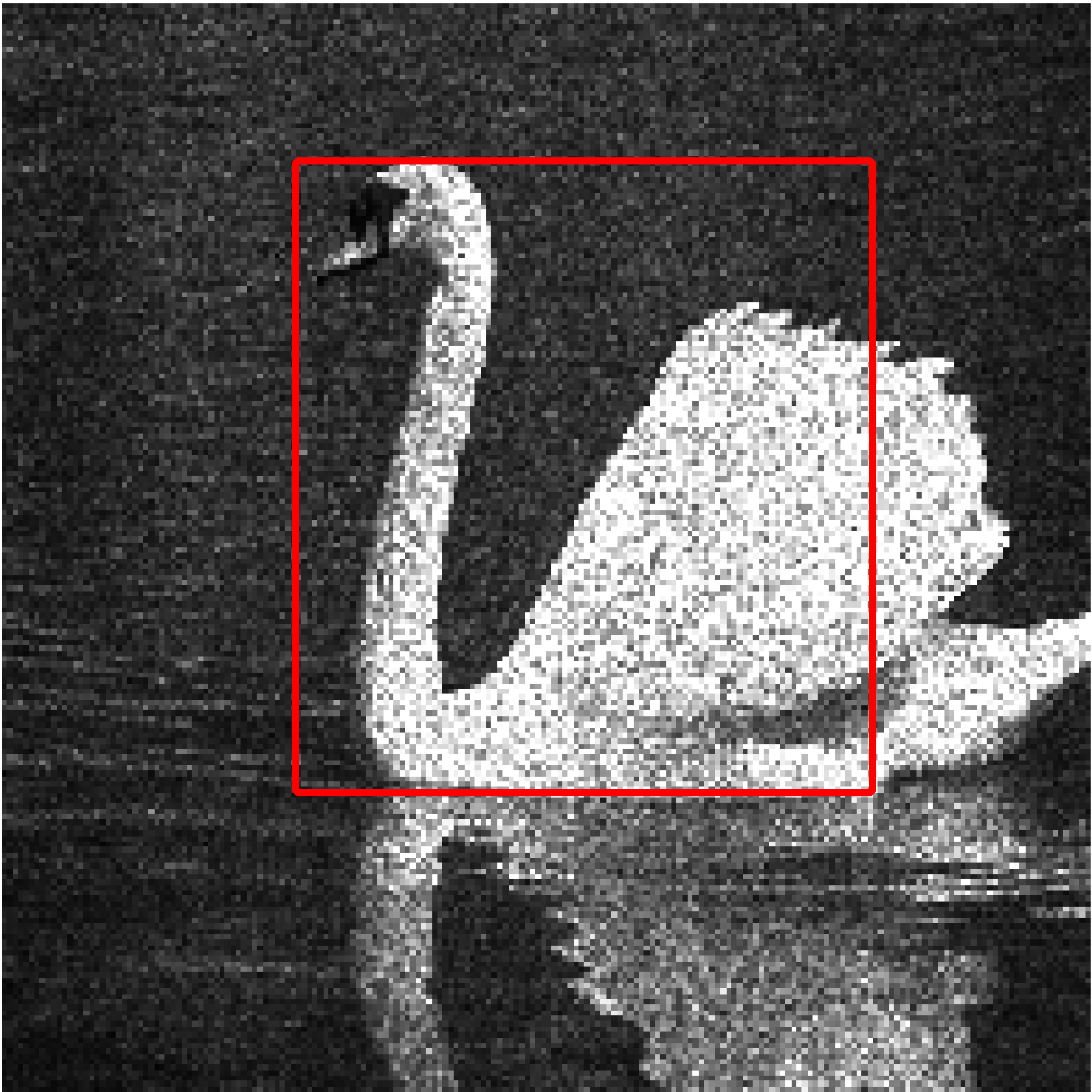}
  \end{subfigure}
  \hfill
  \begin{subfigure}[b]{0.16\linewidth}
      \centering
      \includegraphics[width=\textwidth]{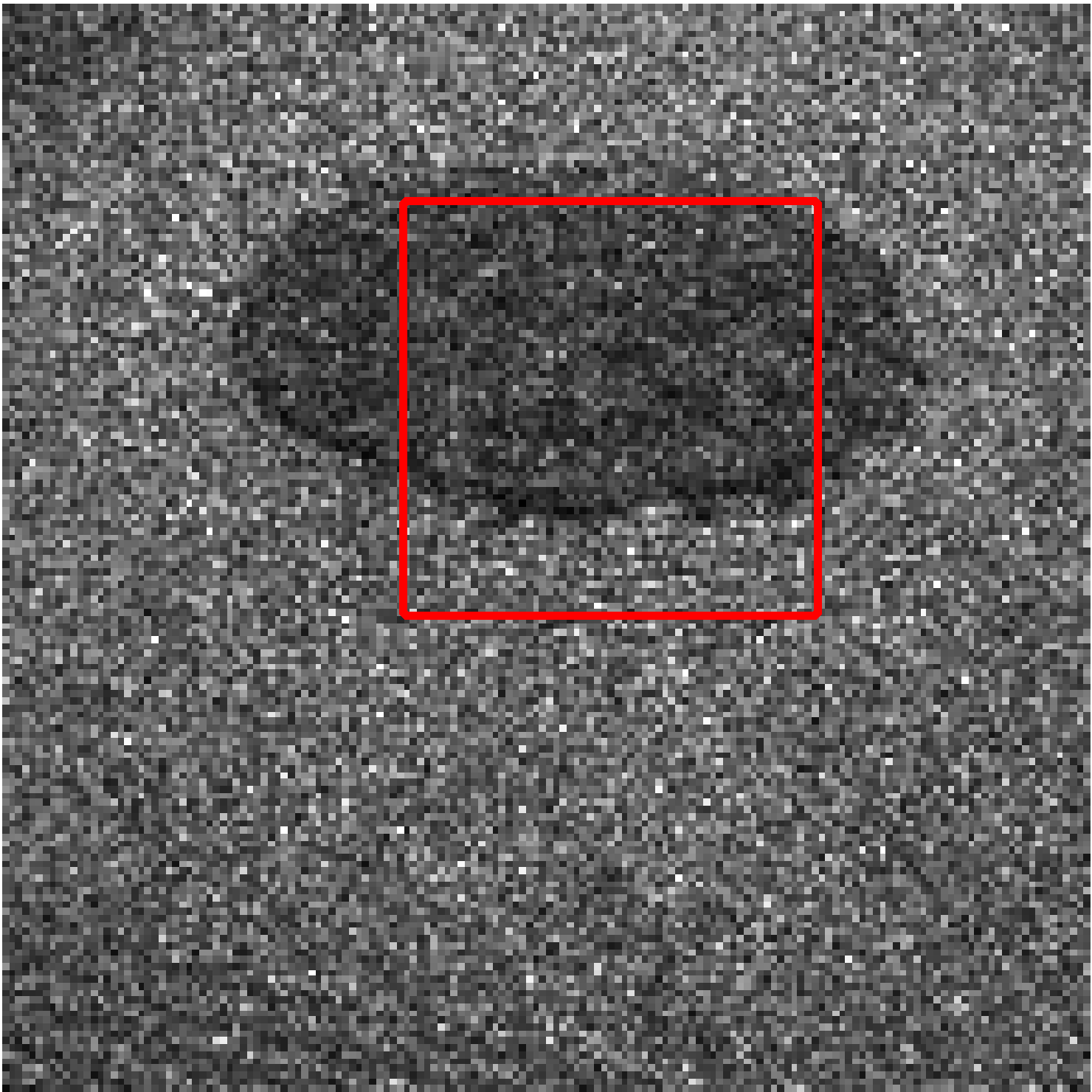}
  \end{subfigure}
  \hfill
  \begin{subfigure}[b]{0.16\linewidth}
      \centering
      \includegraphics[width=\textwidth]{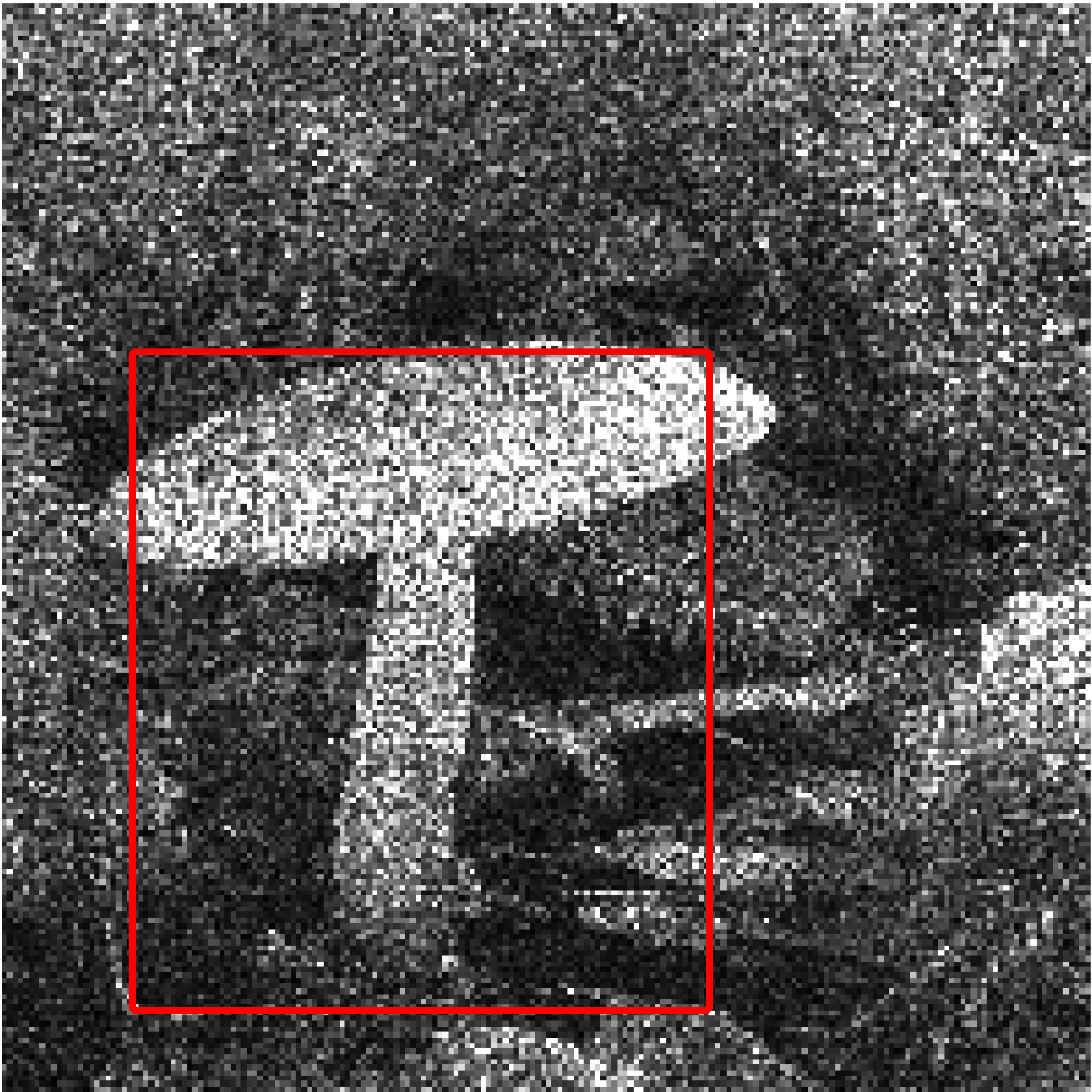}
  \end{subfigure}
  \hfill
  \begin{subfigure}[b]{0.16\linewidth}
      \centering
      \includegraphics[width=\textwidth]{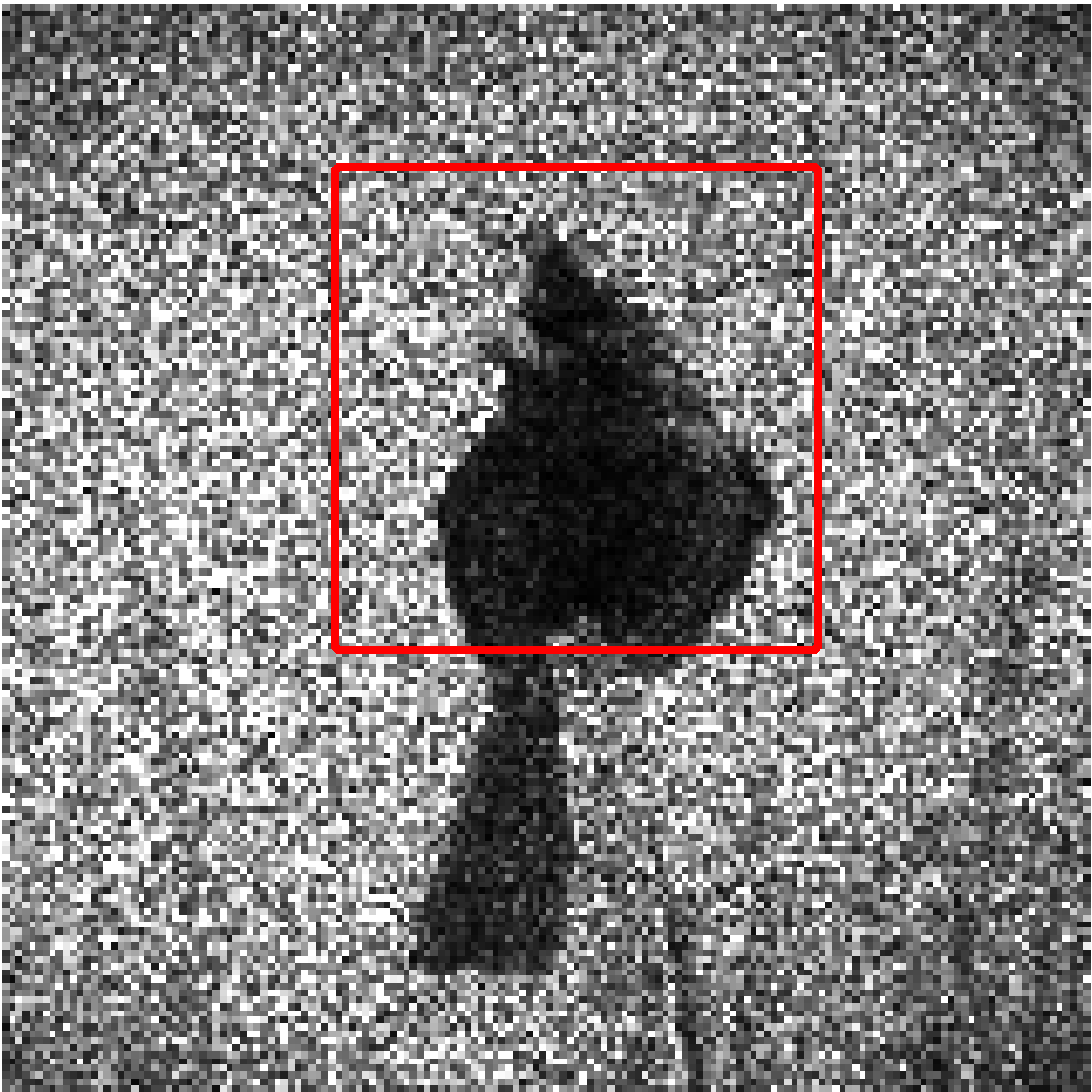}
  \end{subfigure}

  \begin{subfigure}[b]{0.16\linewidth}
      \centering
      \includegraphics[width=\textwidth]{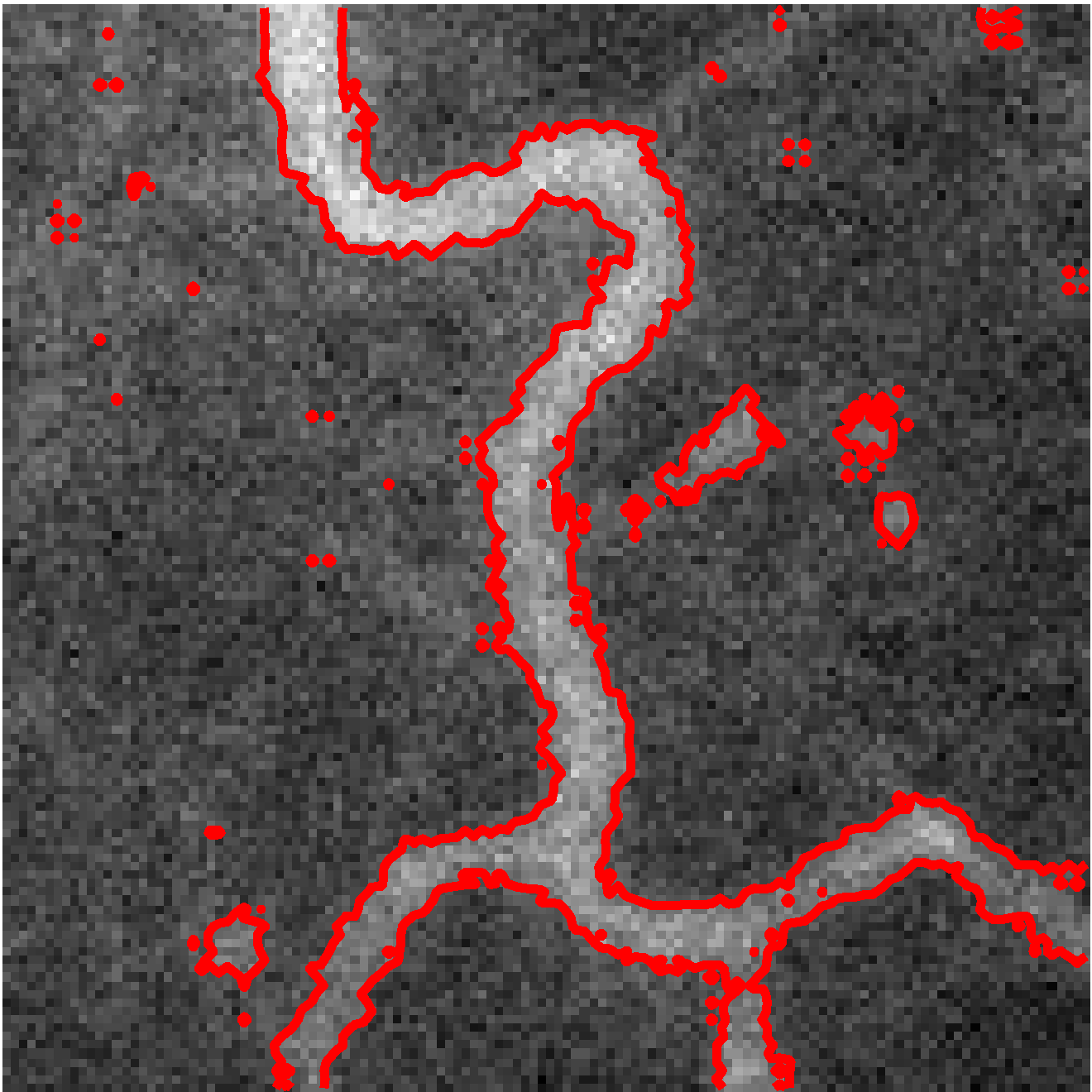}
  \end{subfigure}
  \hfill
  \begin{subfigure}[b]{0.16\linewidth}
      \centering
      \includegraphics[width=\textwidth]{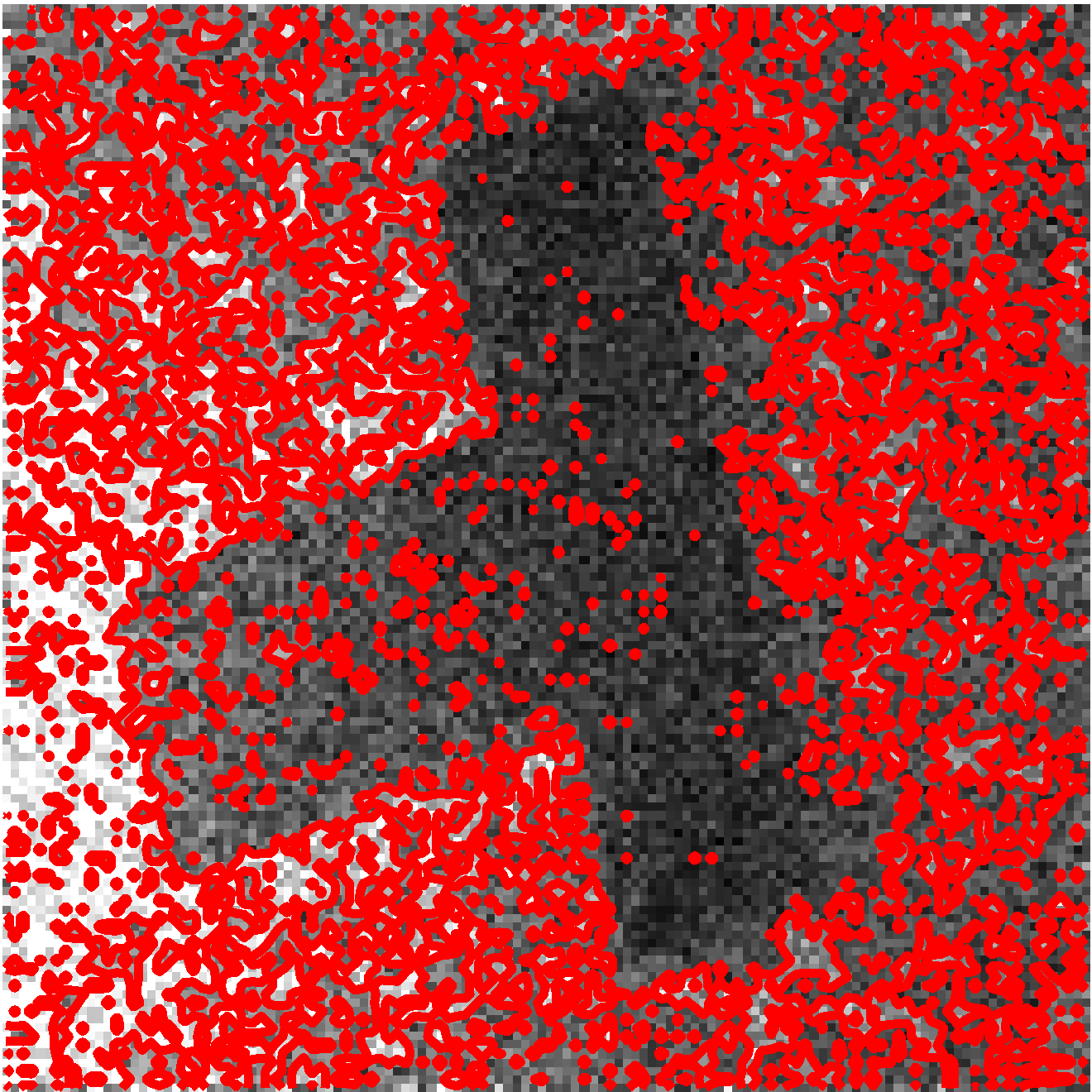}
  \end{subfigure}
  \hfill
  \begin{subfigure}[b]{0.16\linewidth}
      \centering
      \includegraphics[width=\textwidth]{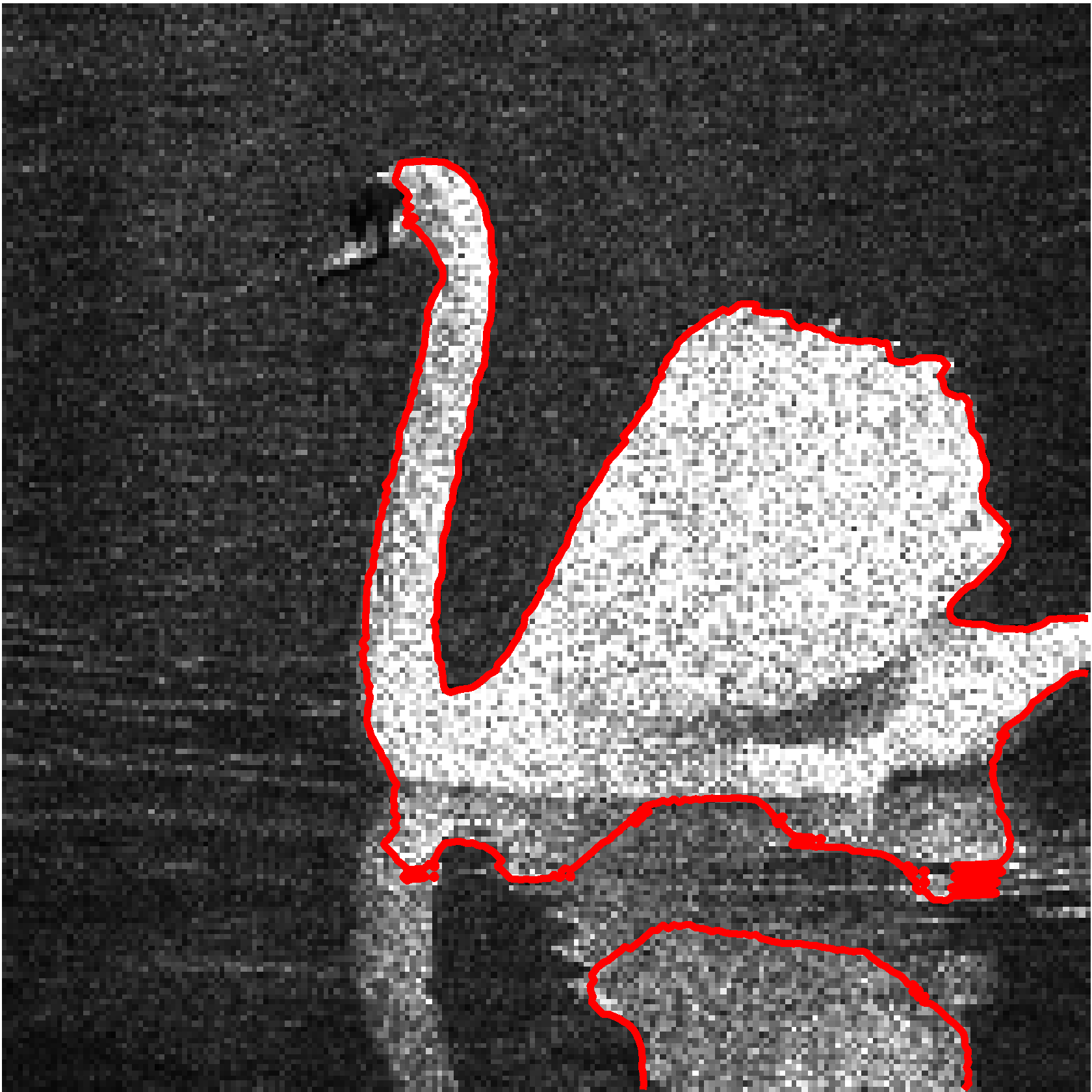}
  \end{subfigure}
  \hfill
  \begin{subfigure}[b]{0.16\linewidth}
      \centering
      \includegraphics[width=\textwidth]{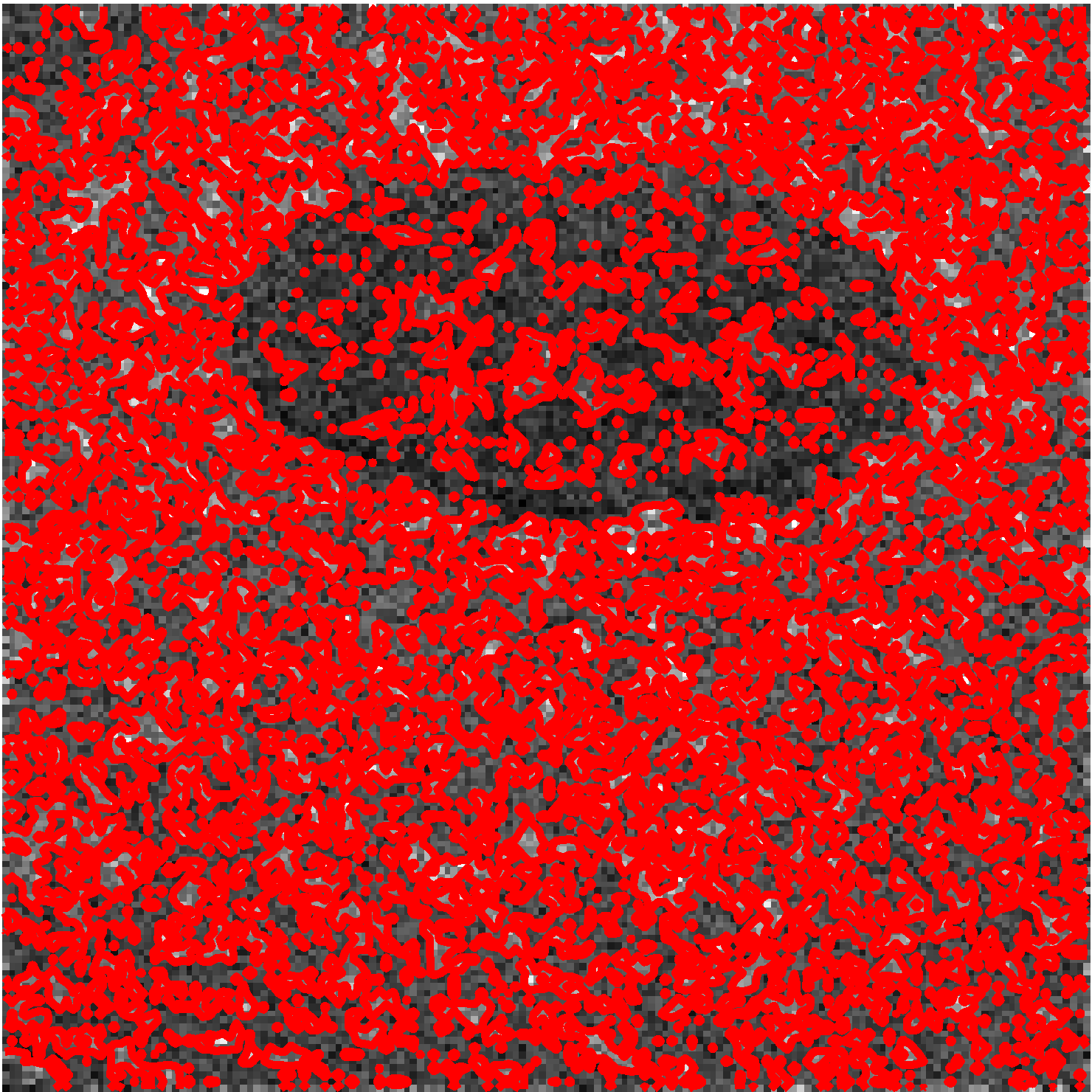}
  \end{subfigure}
  \hfill
  \begin{subfigure}[b]{0.16\linewidth}
      \centering
      \includegraphics[width=\textwidth]{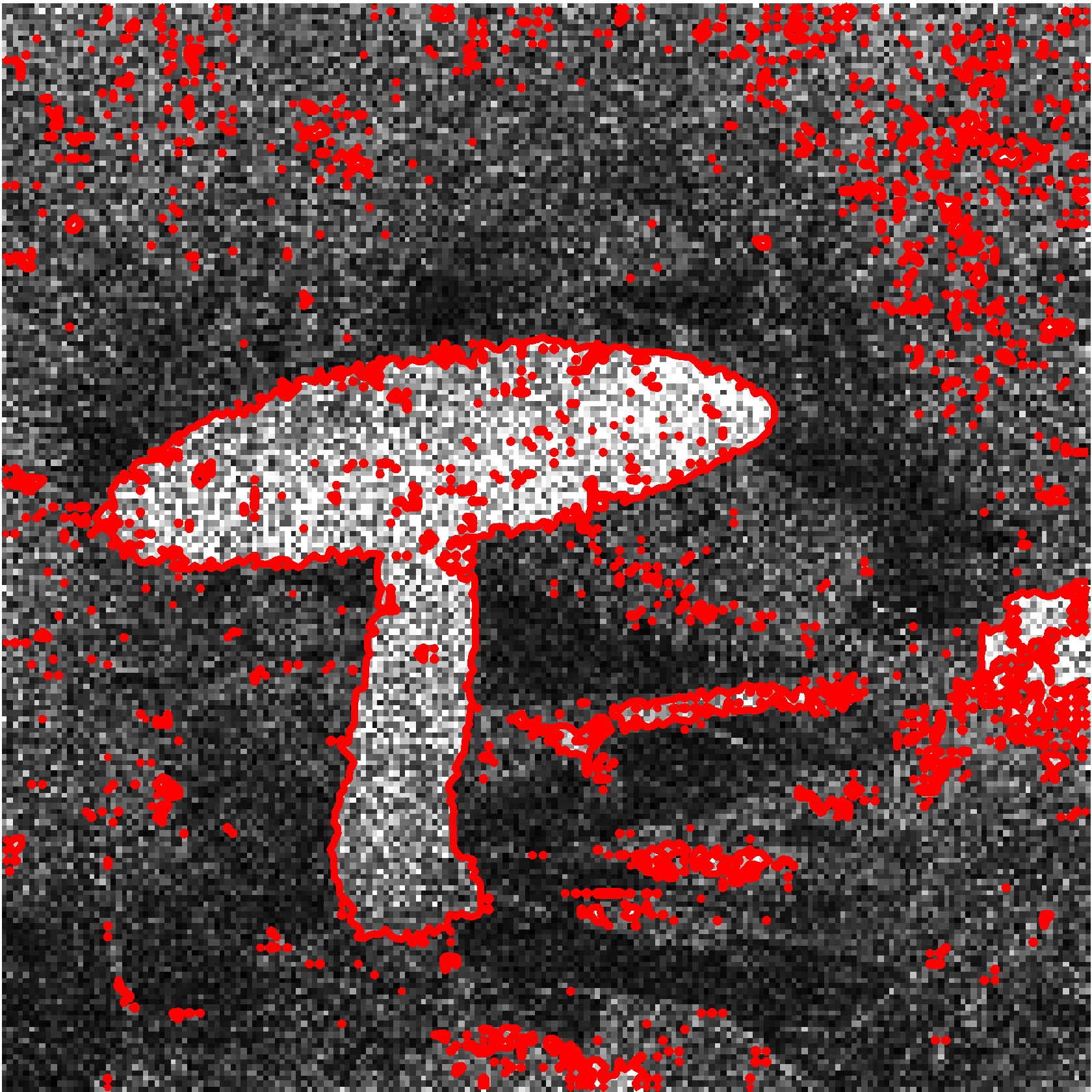}
  \end{subfigure}
  \hfill
  \begin{subfigure}[b]{0.16\linewidth}
      \centering
      \includegraphics[width=\textwidth]{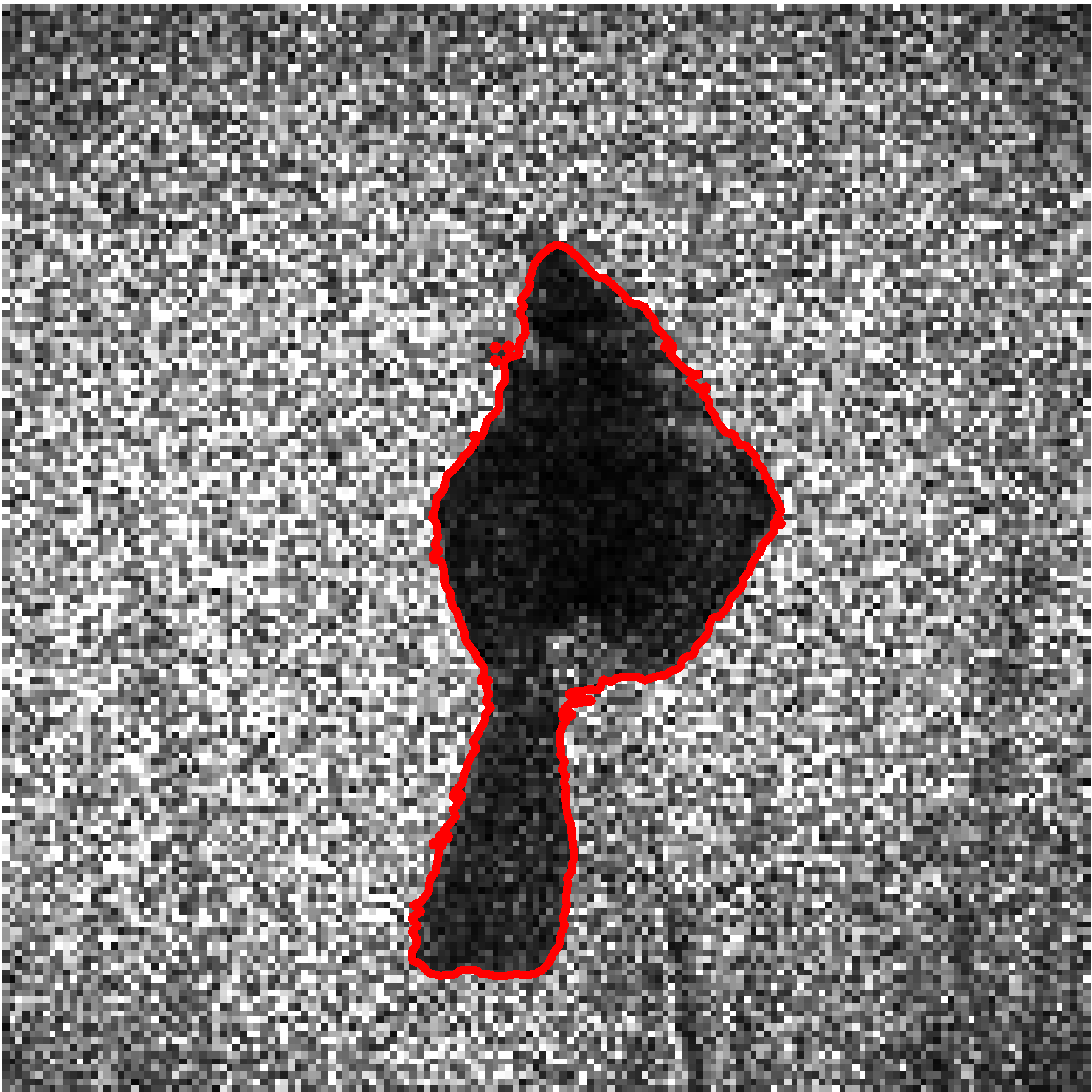}
  \end{subfigure}

  \begin{subfigure}[b]{0.16\linewidth}
      \centering
      \includegraphics[width=\textwidth]{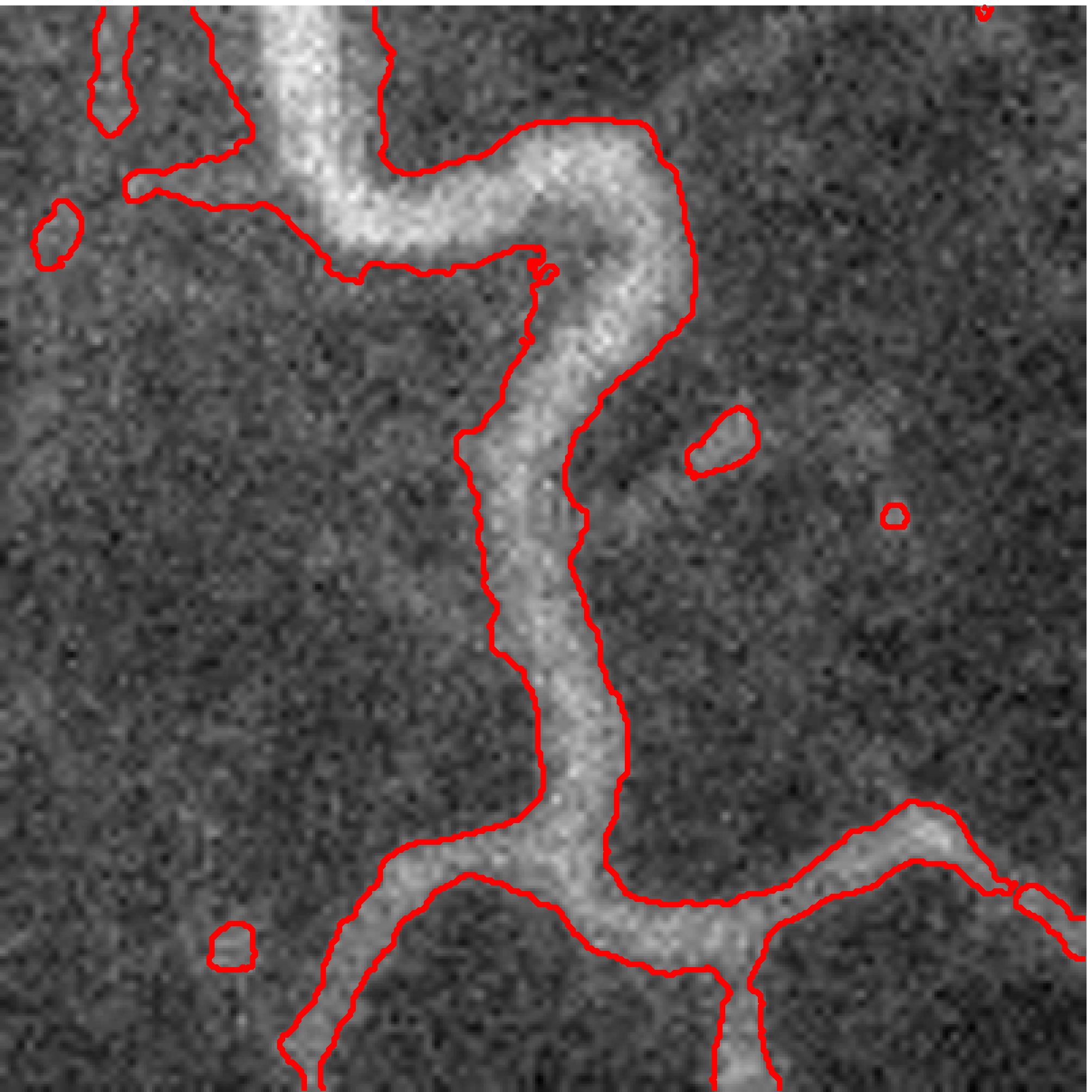}
  \end{subfigure}
  \hfill
  \begin{subfigure}[b]{0.16\linewidth}
      \centering
      \includegraphics[width=\textwidth]{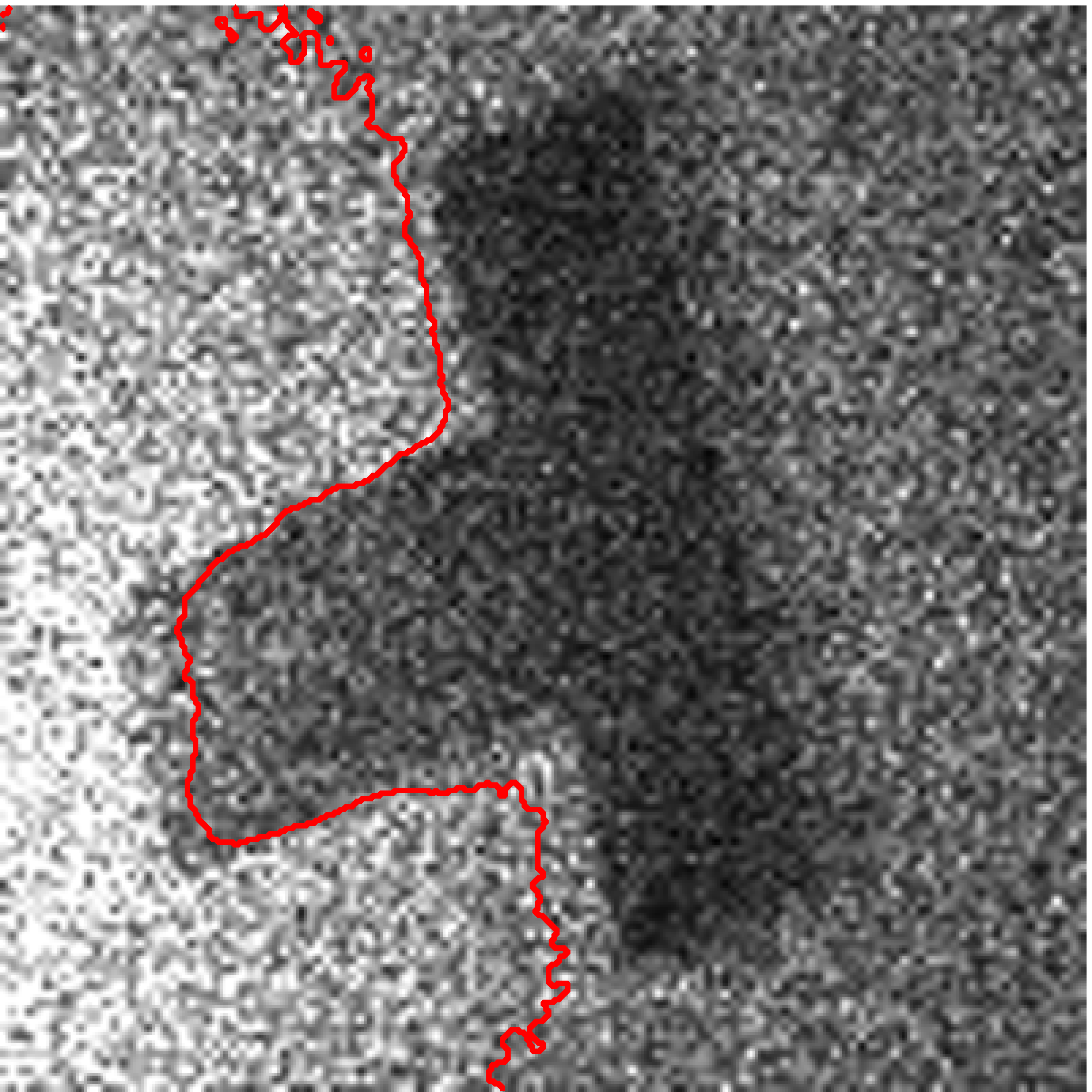}
  \end{subfigure}
  \hfill
  \begin{subfigure}[b]{0.16\linewidth}
      \centering
      \includegraphics[width=\textwidth]{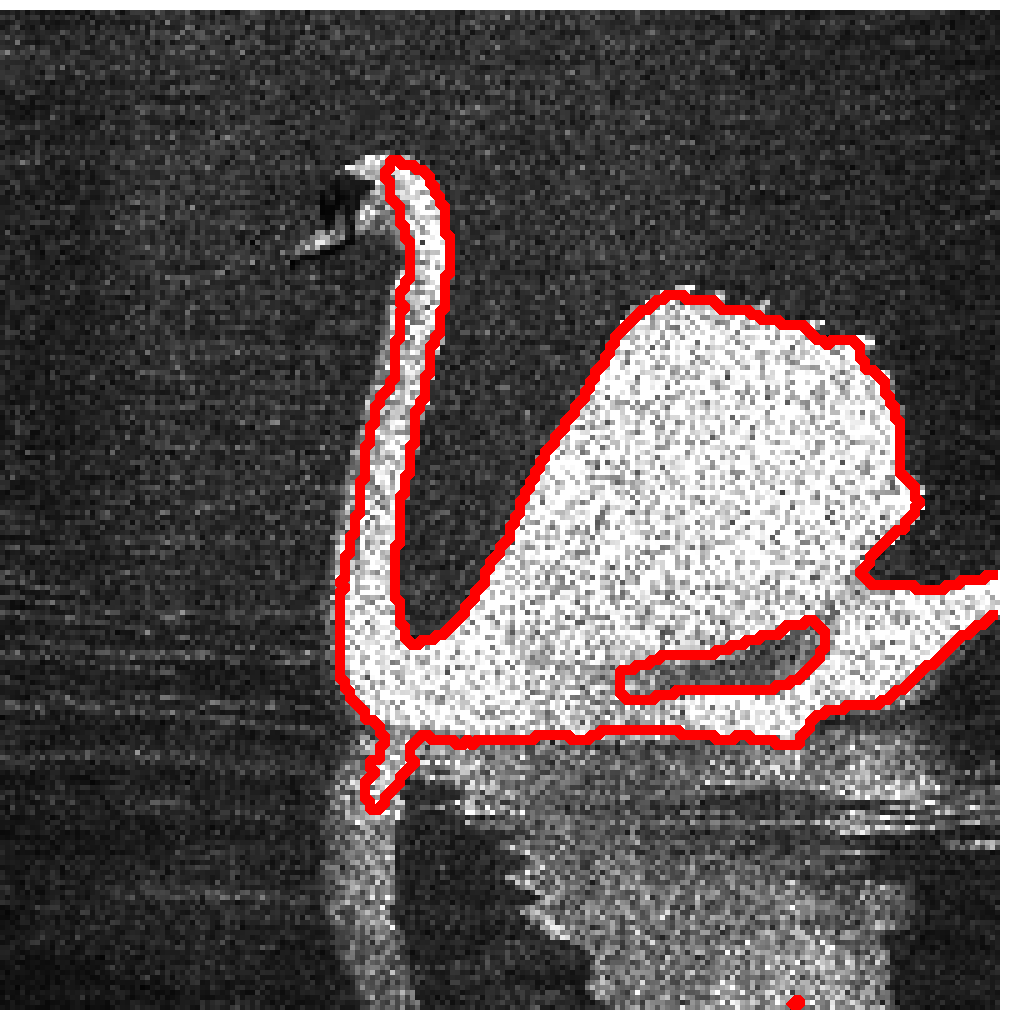}
  \end{subfigure}
  \hfill
  \begin{subfigure}[b]{0.16\linewidth}
      \centering
      \includegraphics[width=\textwidth]{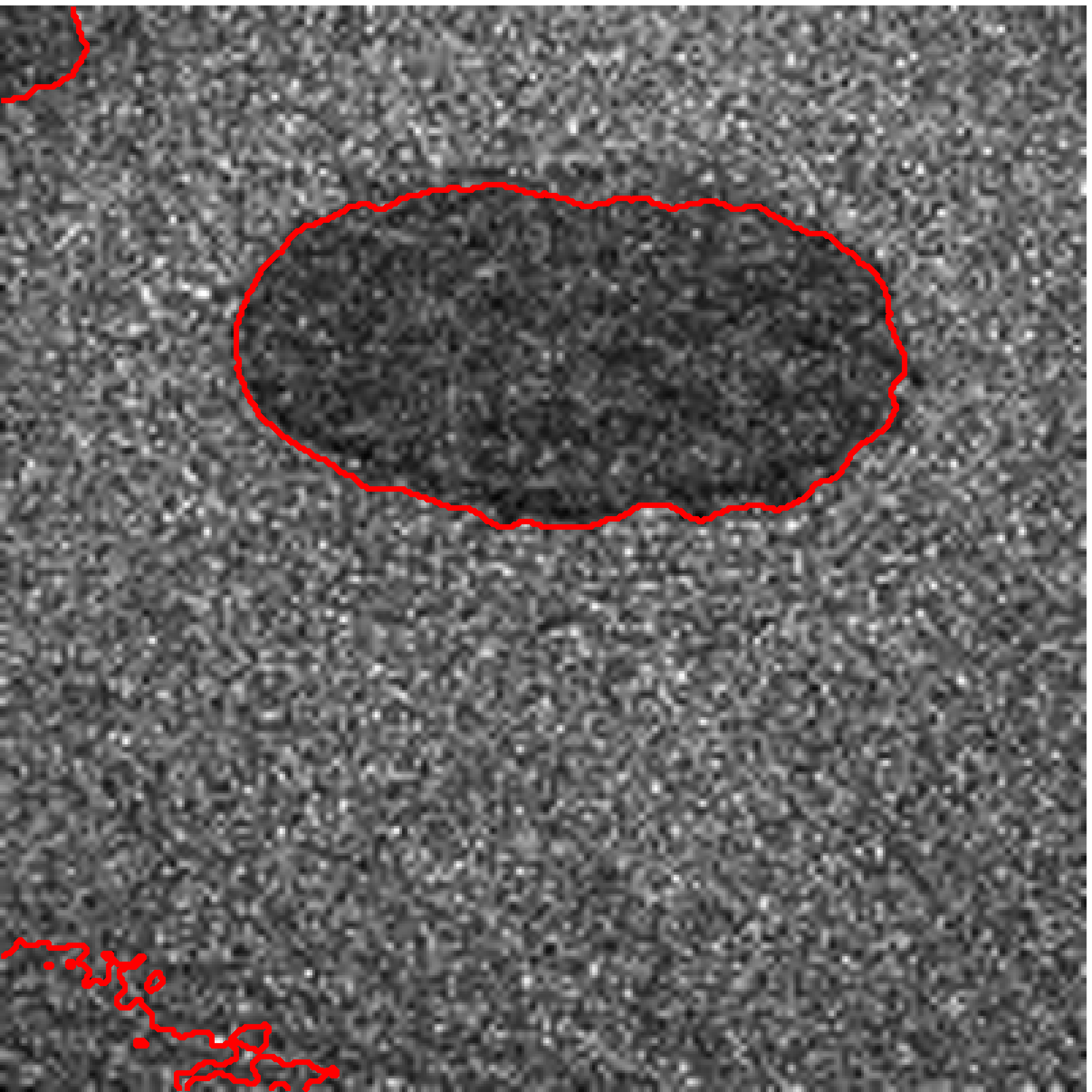}
  \end{subfigure}
  \hfill
  \begin{subfigure}[b]{0.16\linewidth}
      \centering
      \includegraphics[width=\textwidth]{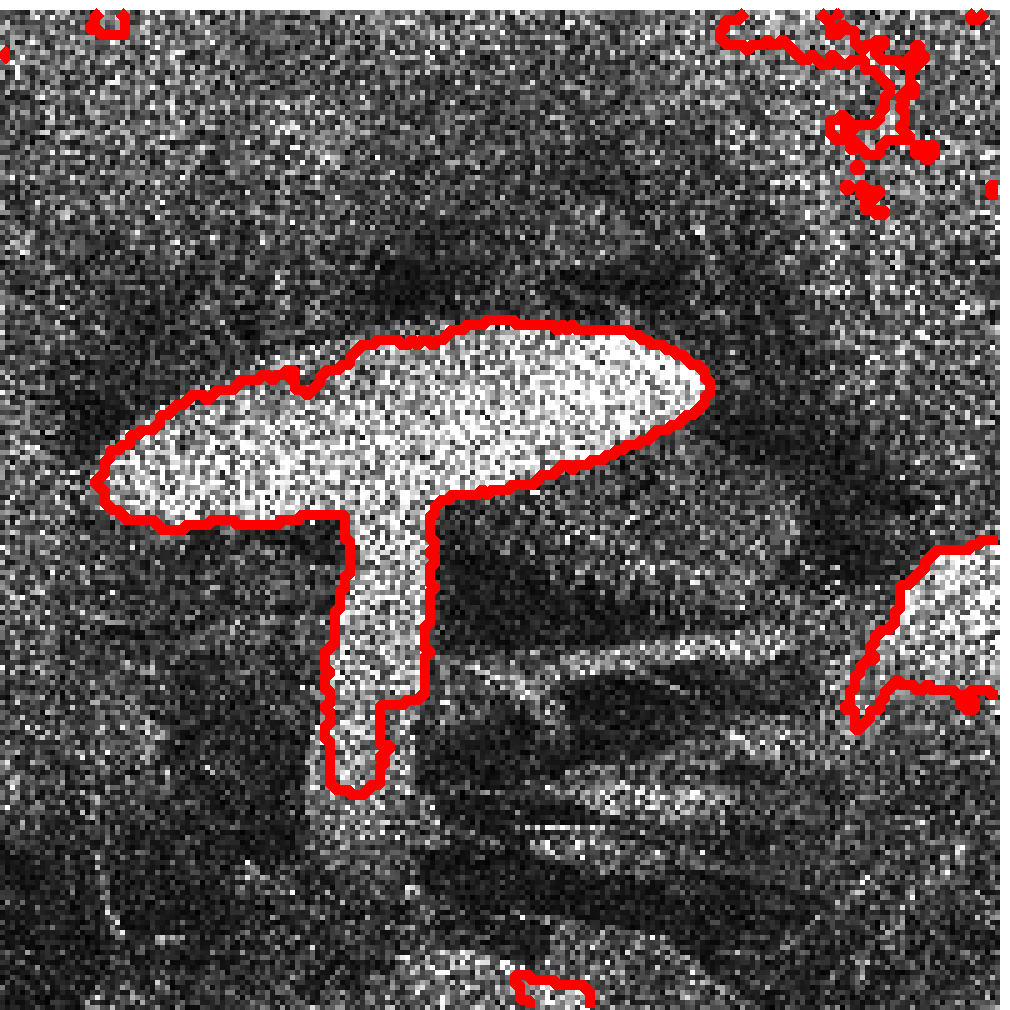}
  \end{subfigure}
  \hfill
  \begin{subfigure}[b]{0.16\linewidth}
      \centering
      \includegraphics[width=\textwidth]{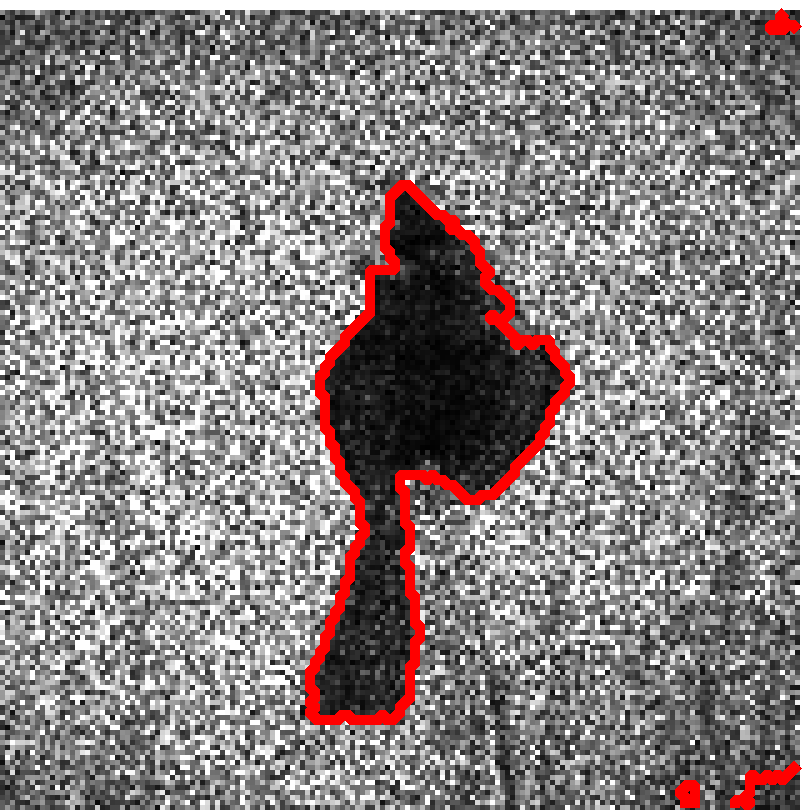}
  \end{subfigure}

   \begin{subfigure}[b]{0.16\linewidth}
      \centering
      \includegraphics[width=\textwidth]{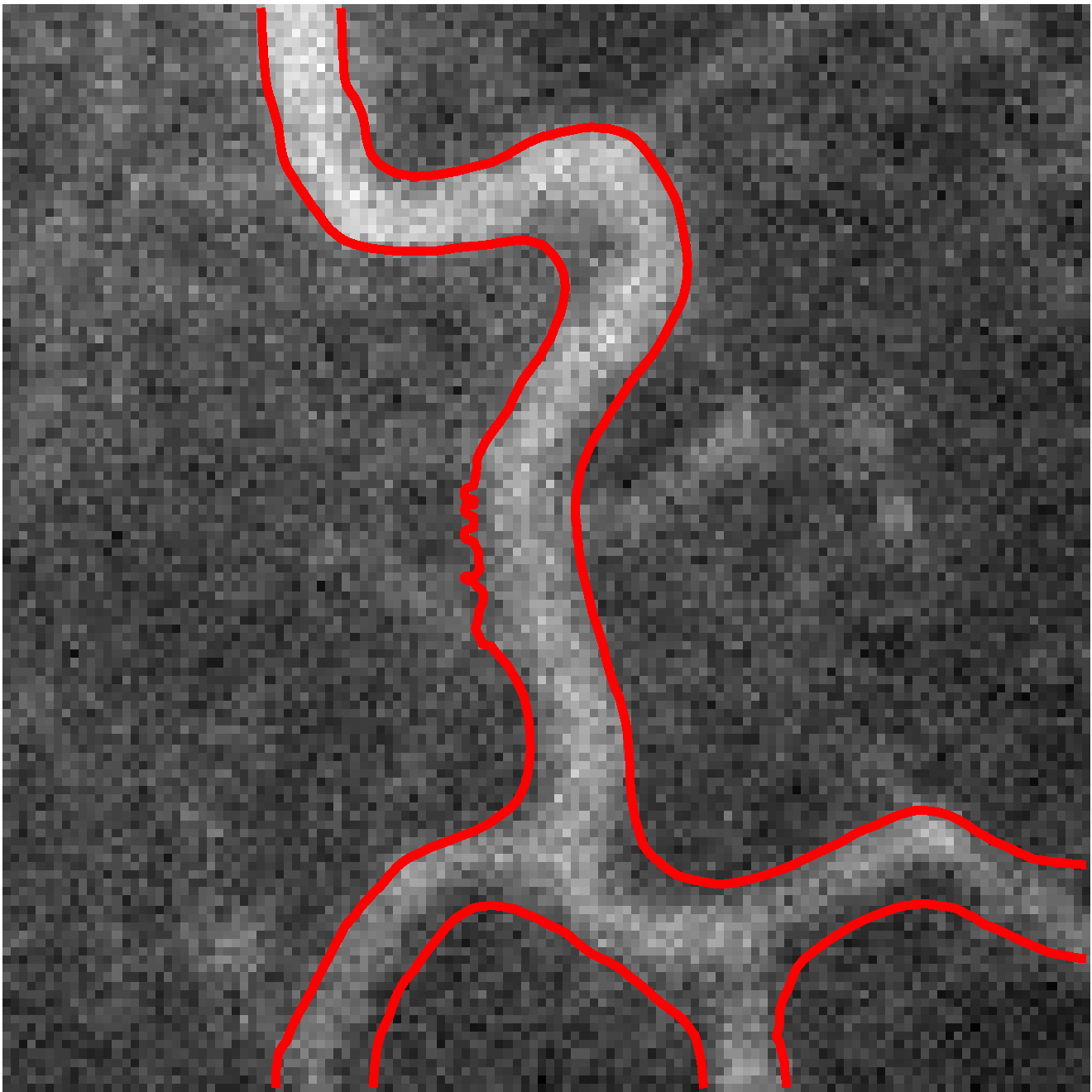}
  \end{subfigure}
  \hfill
  \begin{subfigure}[b]{0.16\linewidth}
      \centering
      \includegraphics[width=\textwidth]{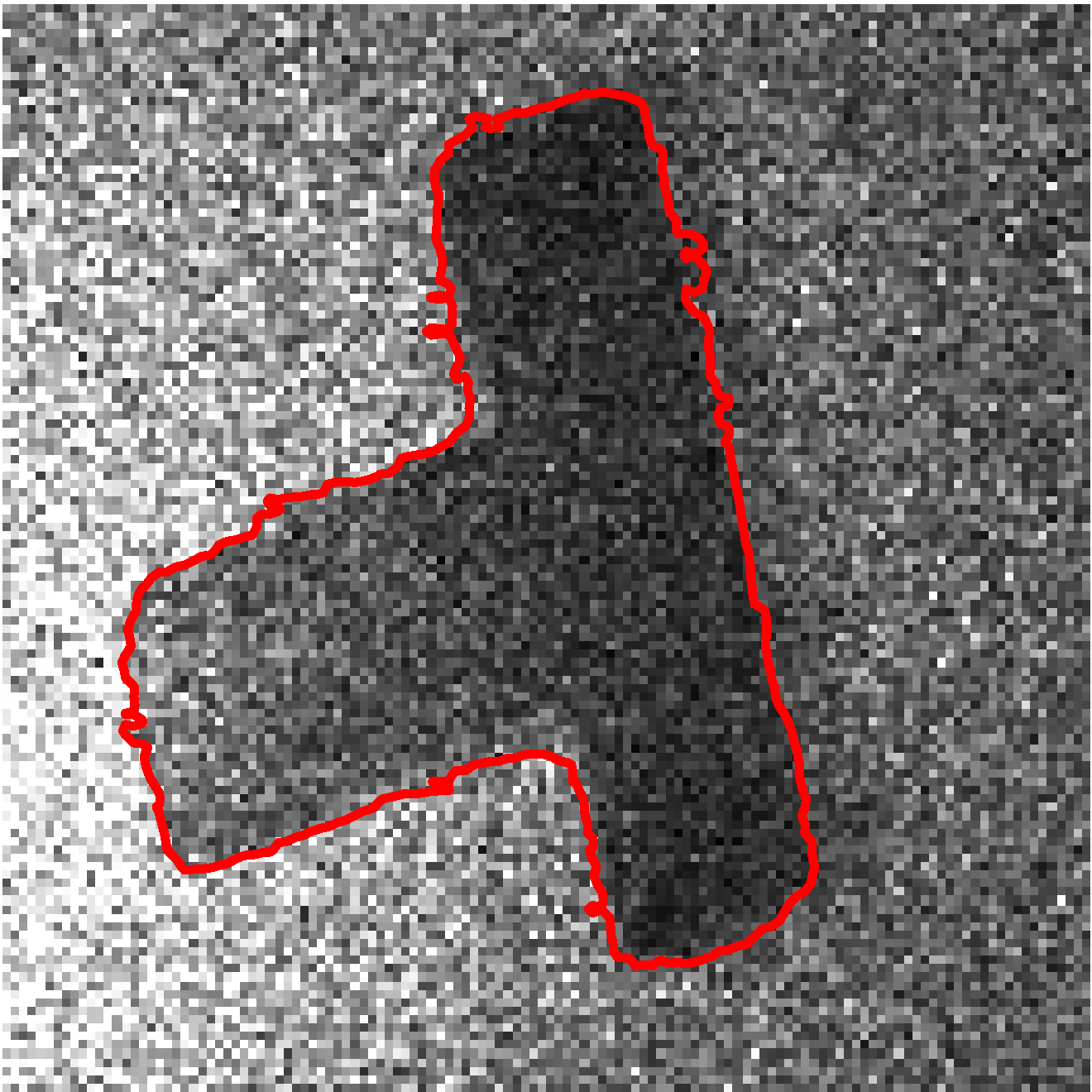}
  \end{subfigure}
  \hfill
  \begin{subfigure}[b]{0.16\linewidth}
      \centering
      \includegraphics[width=\textwidth]{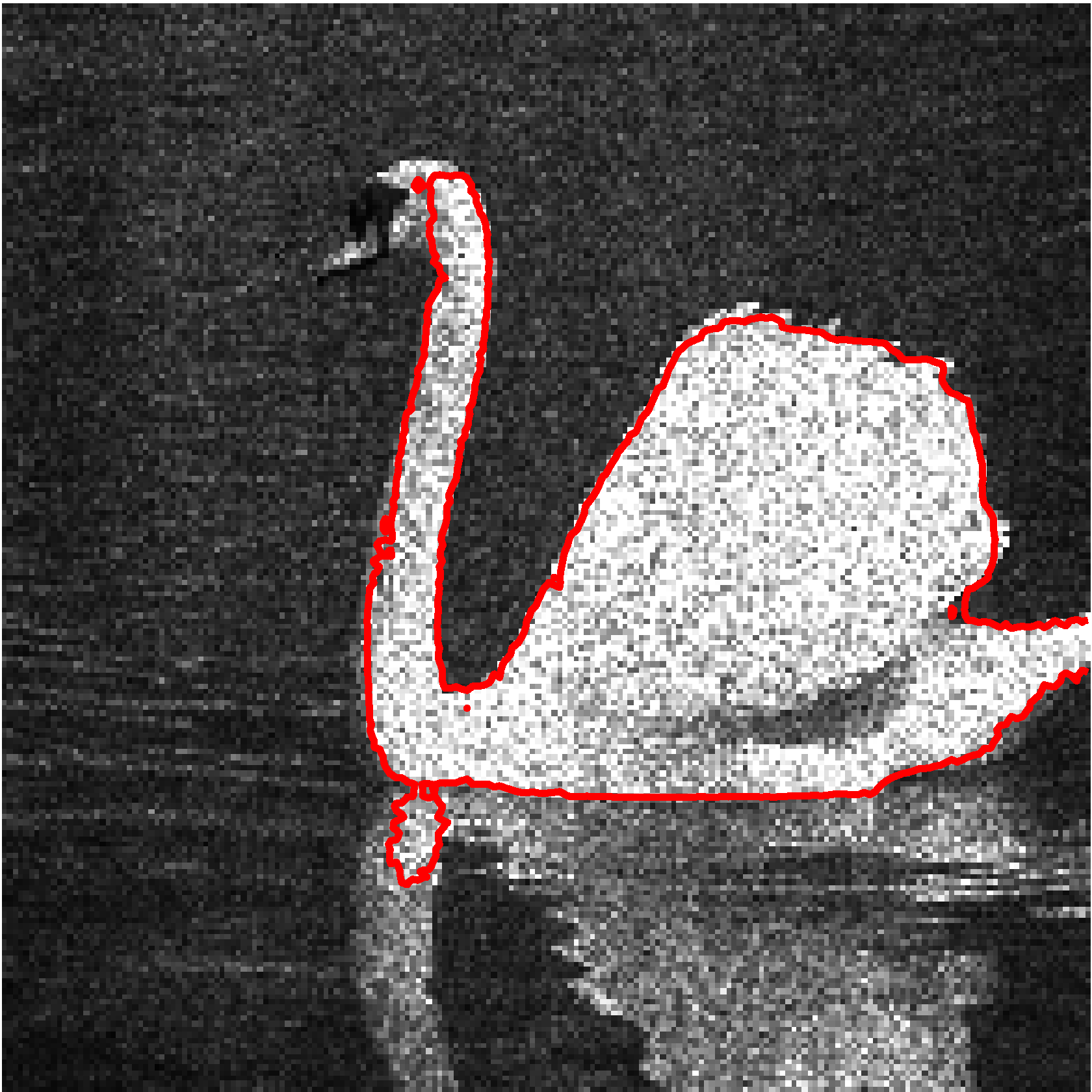}
  \end{subfigure}
  \hfill
  \begin{subfigure}[b]{0.16\linewidth}
      \centering
      \includegraphics[width=\textwidth]{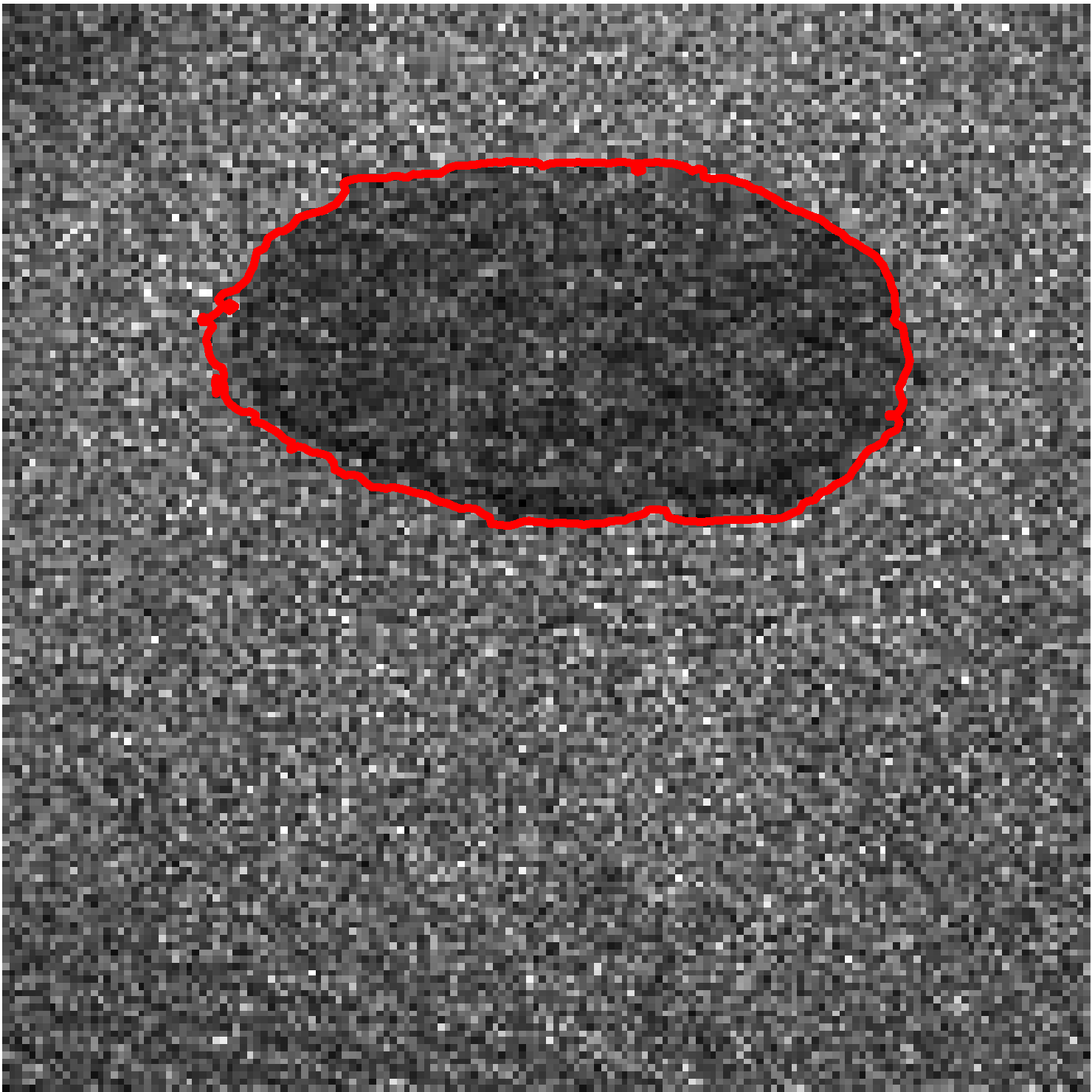}
  \end{subfigure}
  \hfill
  \begin{subfigure}[b]{0.16\linewidth}
      \centering
      \includegraphics[width=\textwidth]{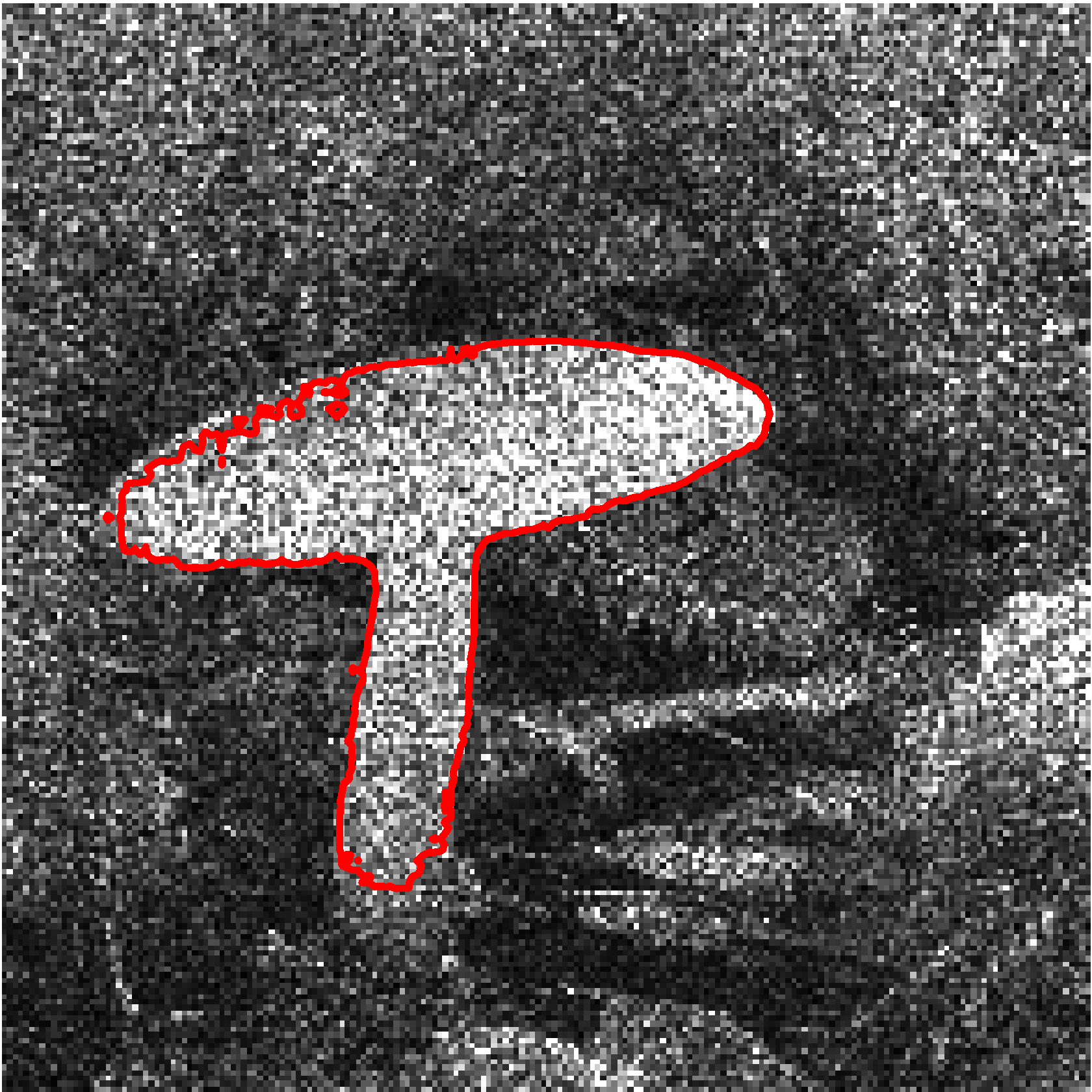}
  \end{subfigure}
  \hfill
  \begin{subfigure}[b]{0.16\linewidth}
      \centering
      \includegraphics[width=\textwidth]{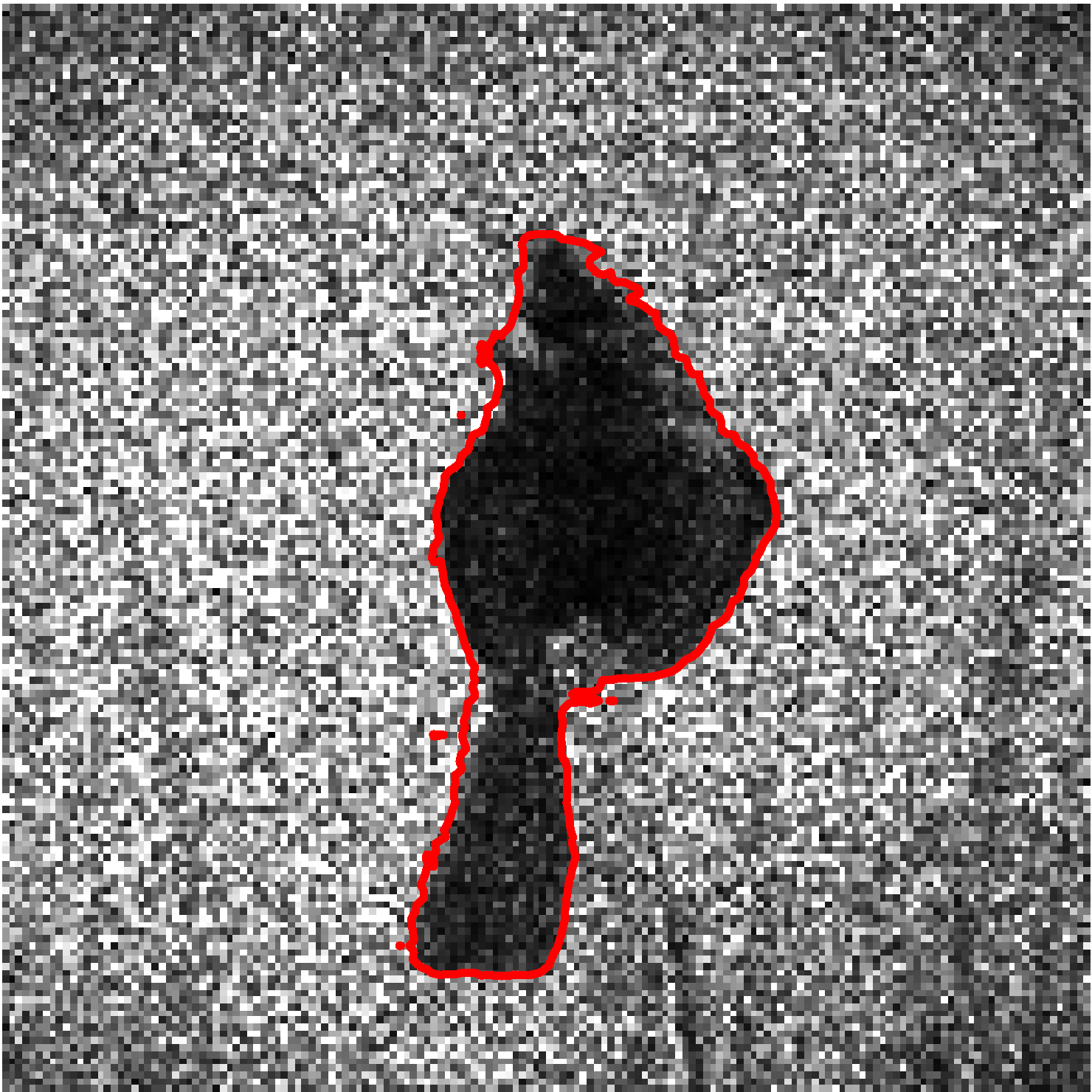}
  \end{subfigure}

   \begin{subfigure}[b]{0.16\linewidth}
      \centering
      \includegraphics[width=\textwidth]{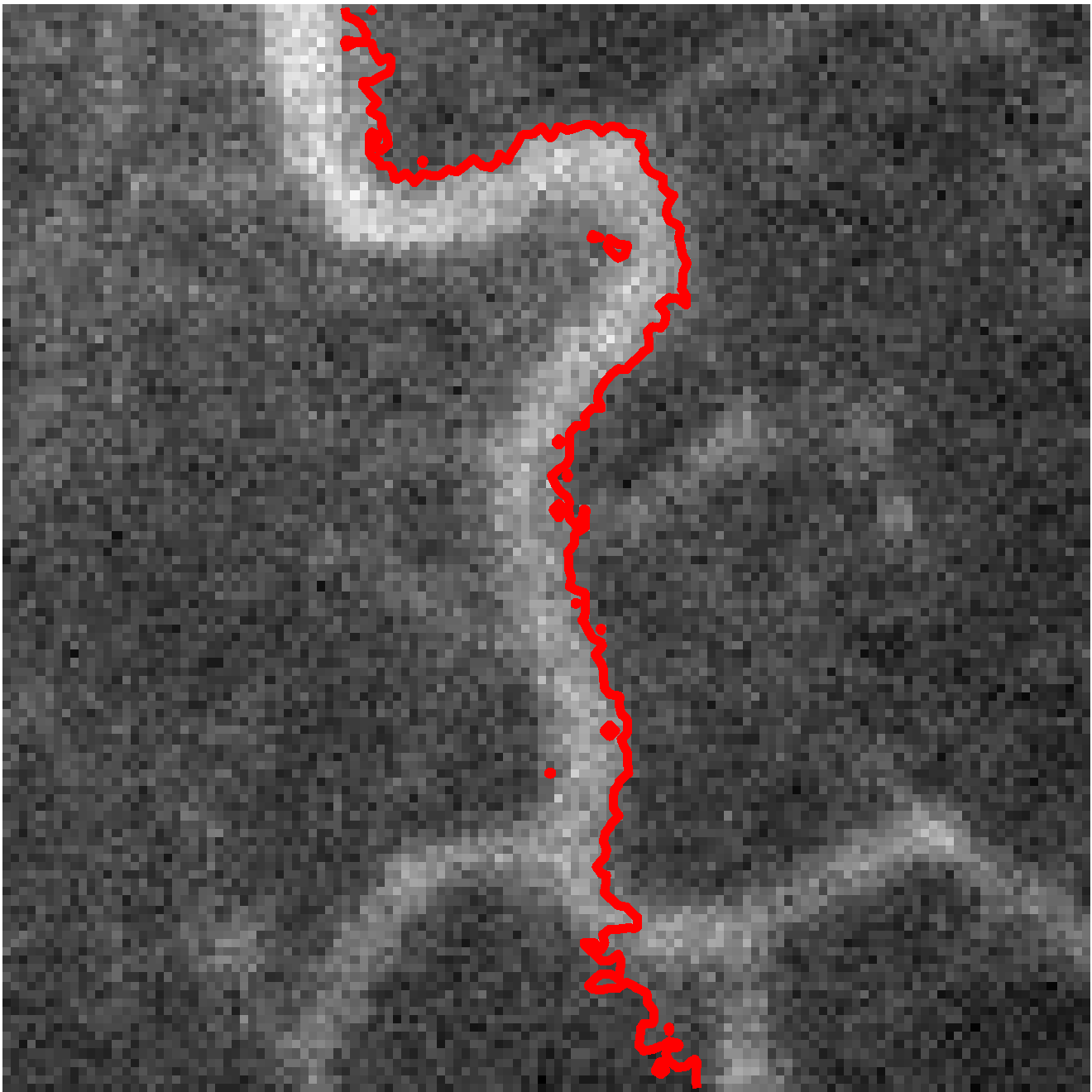}
  \end{subfigure}
  \hfill
   \begin{subfigure}[b]{0.16\linewidth}
      \centering
      \includegraphics[width=\textwidth]{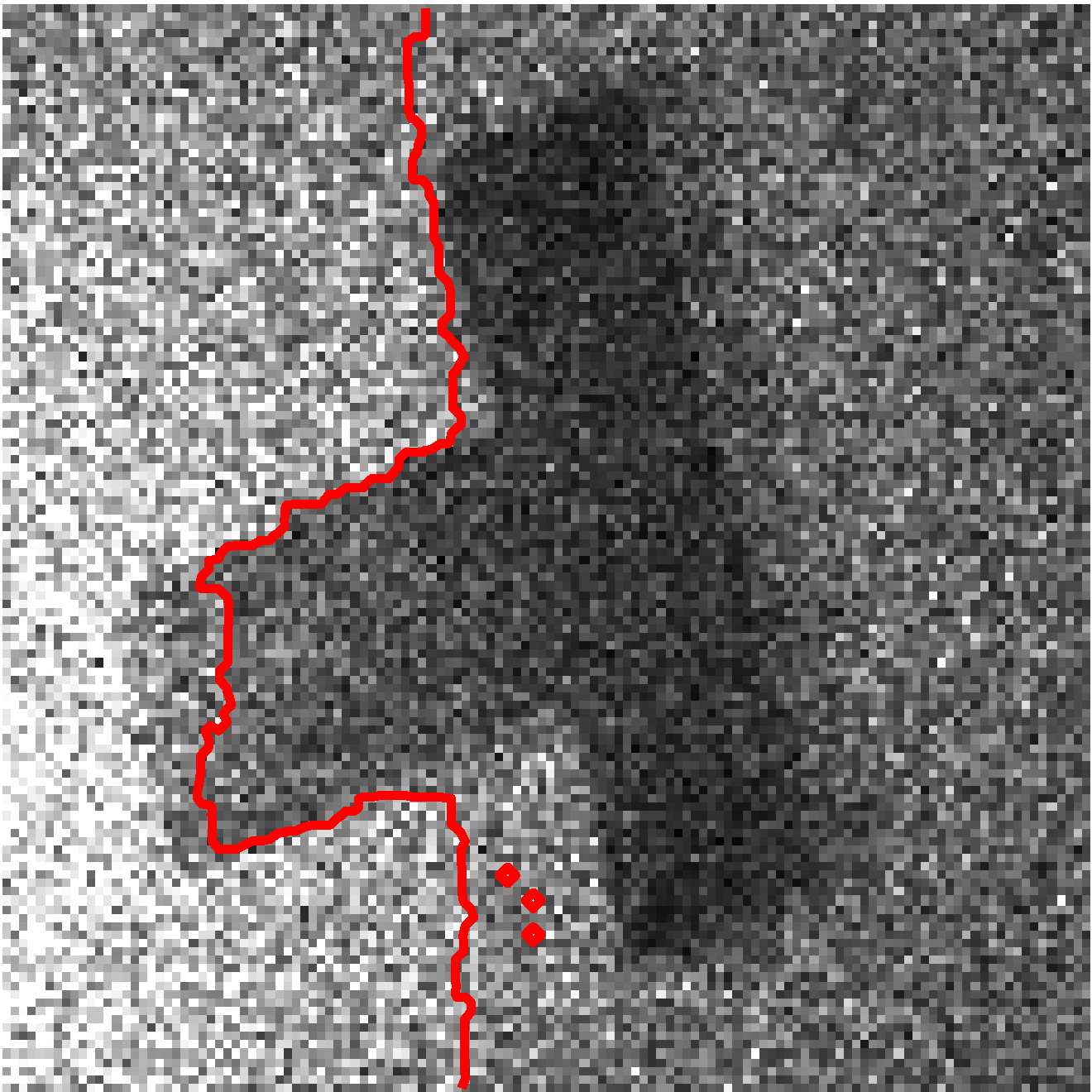}
  \end{subfigure}
  \hfill
  \begin{subfigure}[b]{0.16\linewidth}
      \centering
      \includegraphics[width=\textwidth]{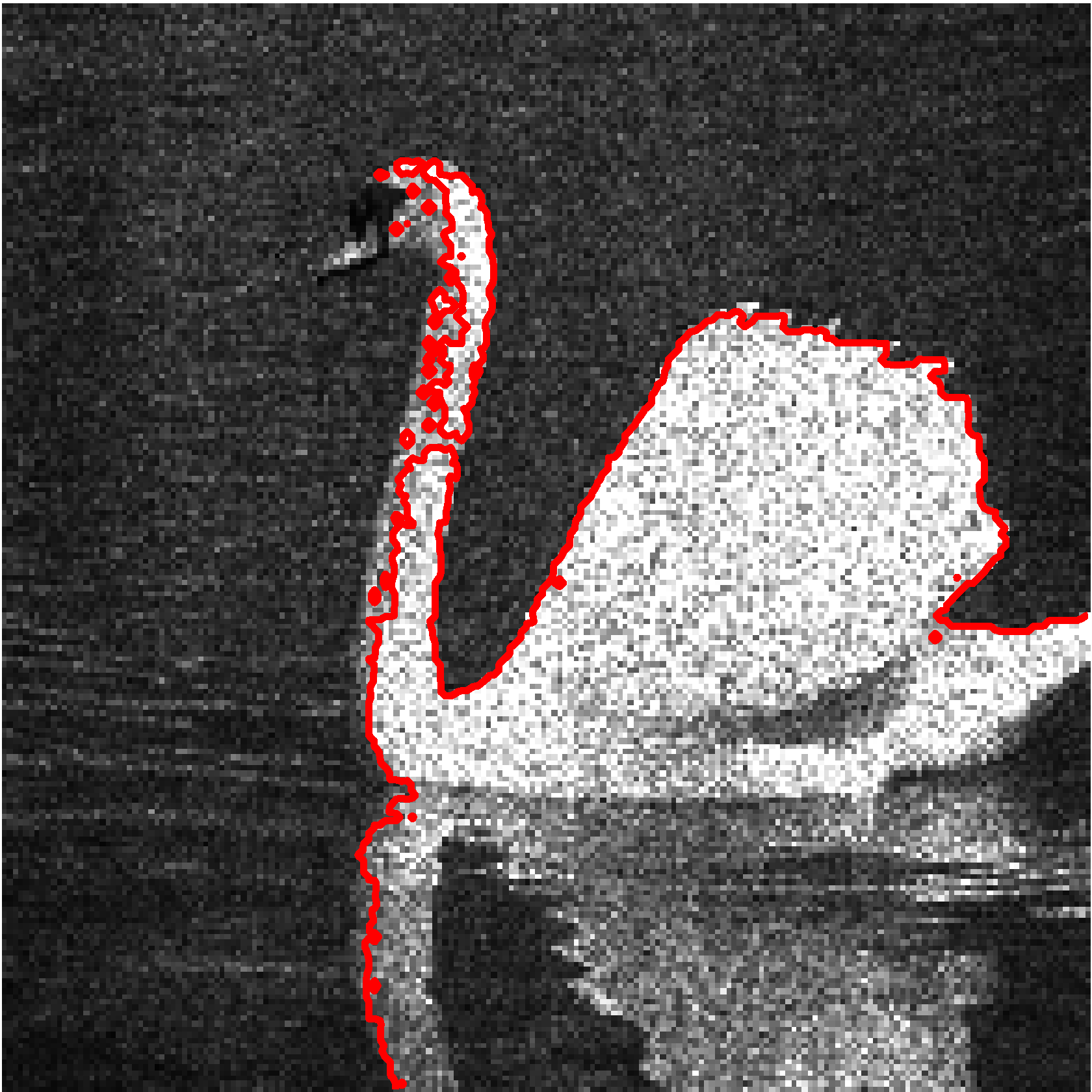}
  \end{subfigure}
  \hfill
  \begin{subfigure}[b]{0.16\linewidth}
      \centering
      \includegraphics[width=\textwidth]{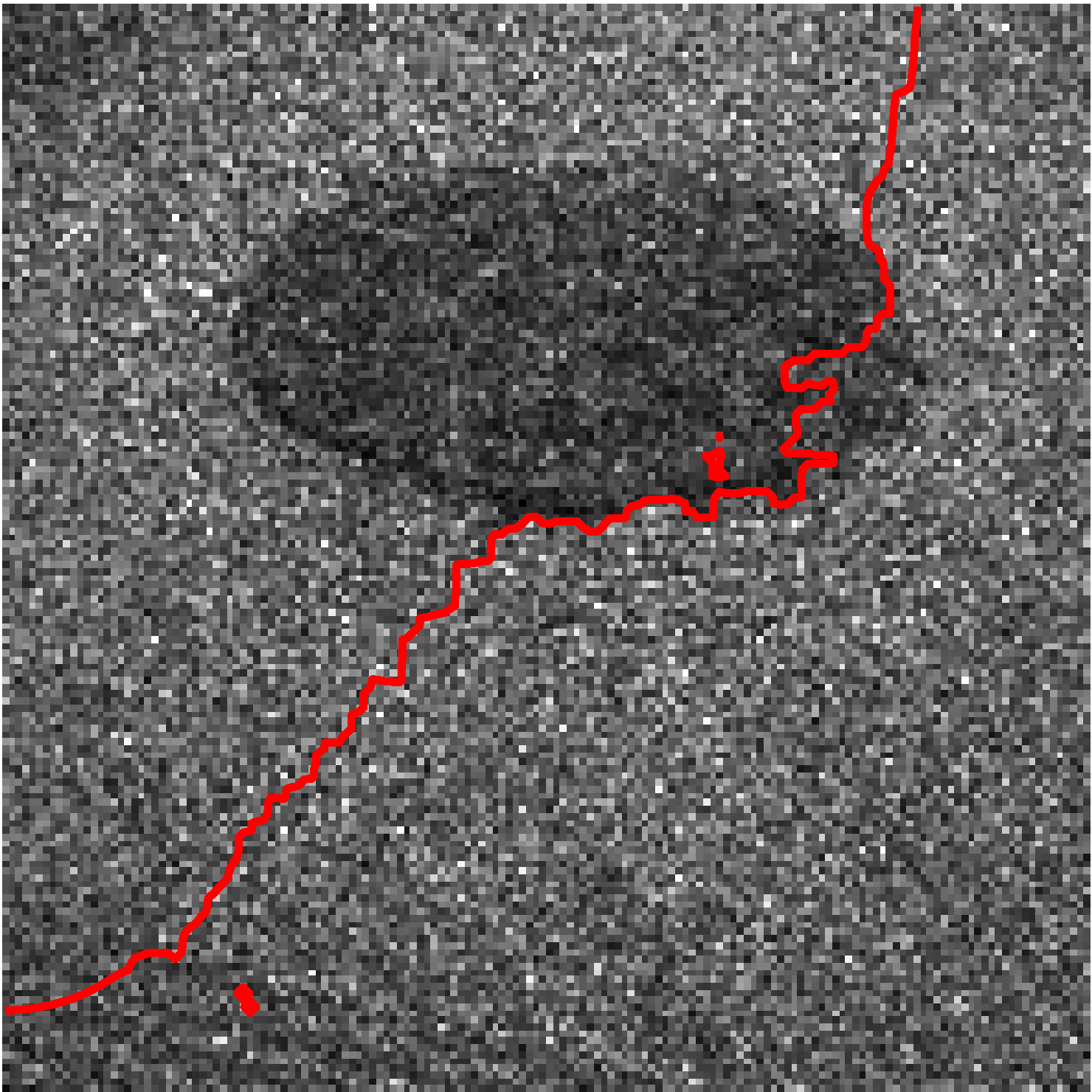}
  \end{subfigure}
  \hfill
   \begin{subfigure}[b]{0.16\linewidth}
      \centering
      \includegraphics[width=\textwidth]{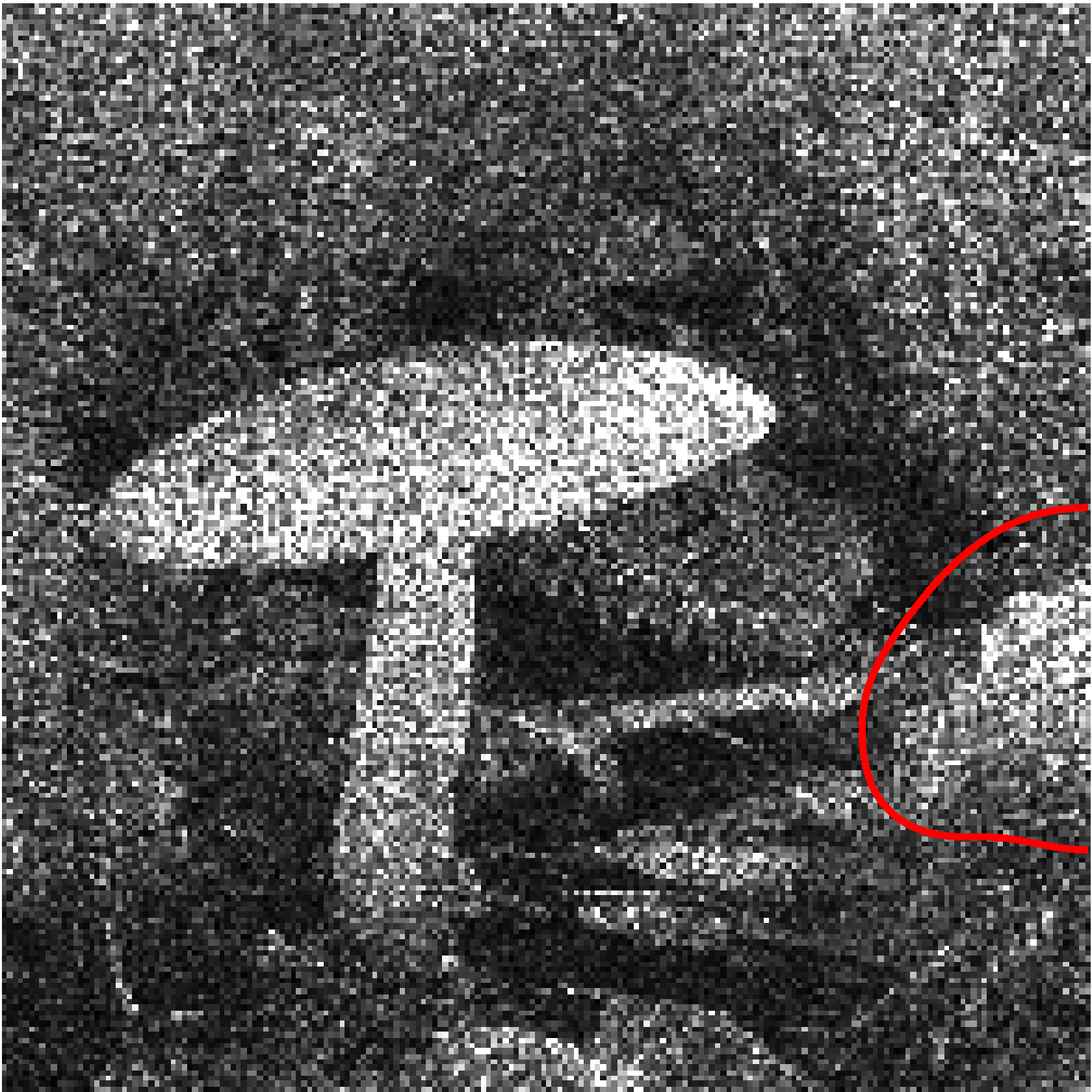}
  \end{subfigure}
  \hfill
  \begin{subfigure}[b]{0.16\linewidth}
      \centering
      \includegraphics[width=\textwidth]{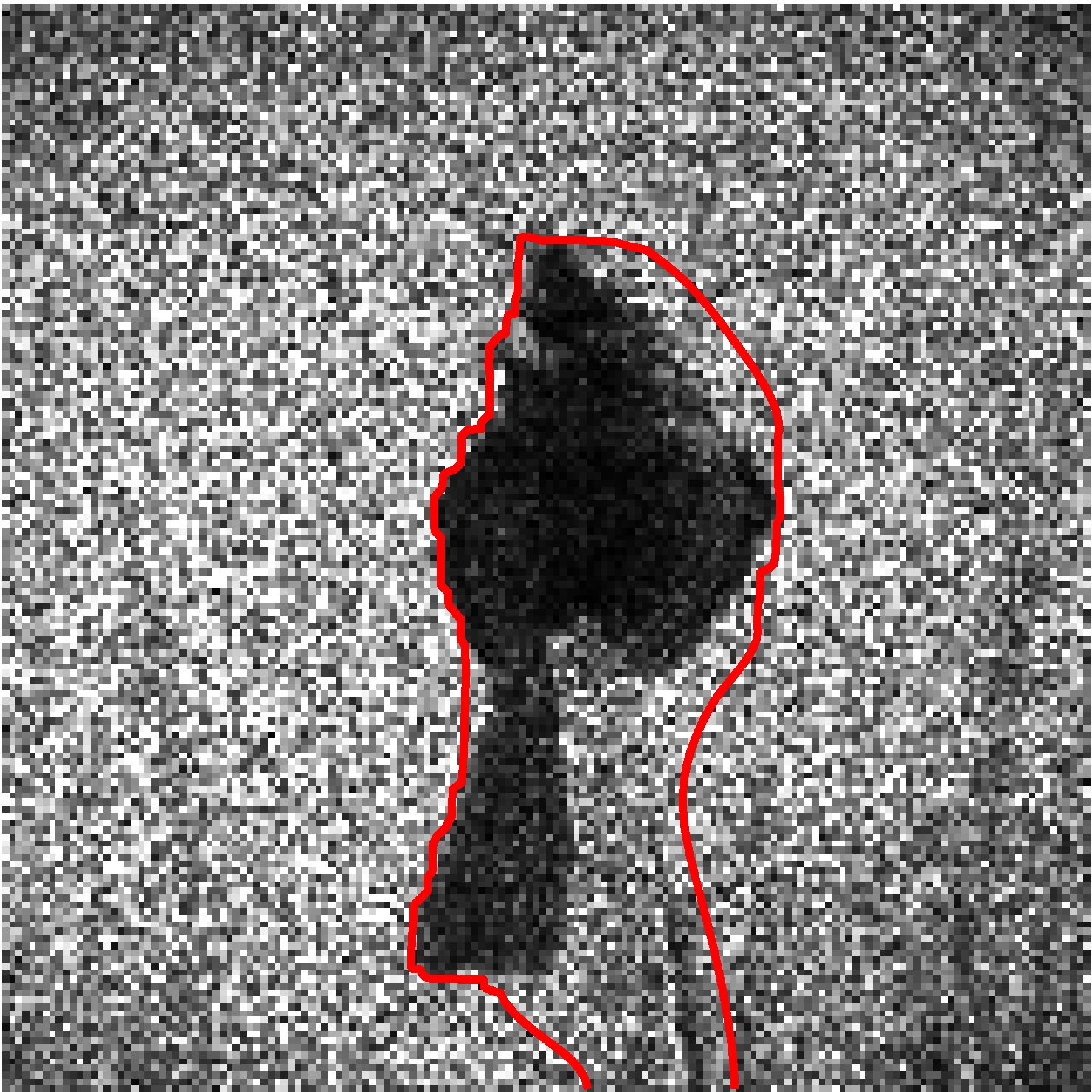}
  \end{subfigure}

  \begin{subfigure}[b]{0.16\linewidth}
      \centering
      \includegraphics[width=\textwidth]{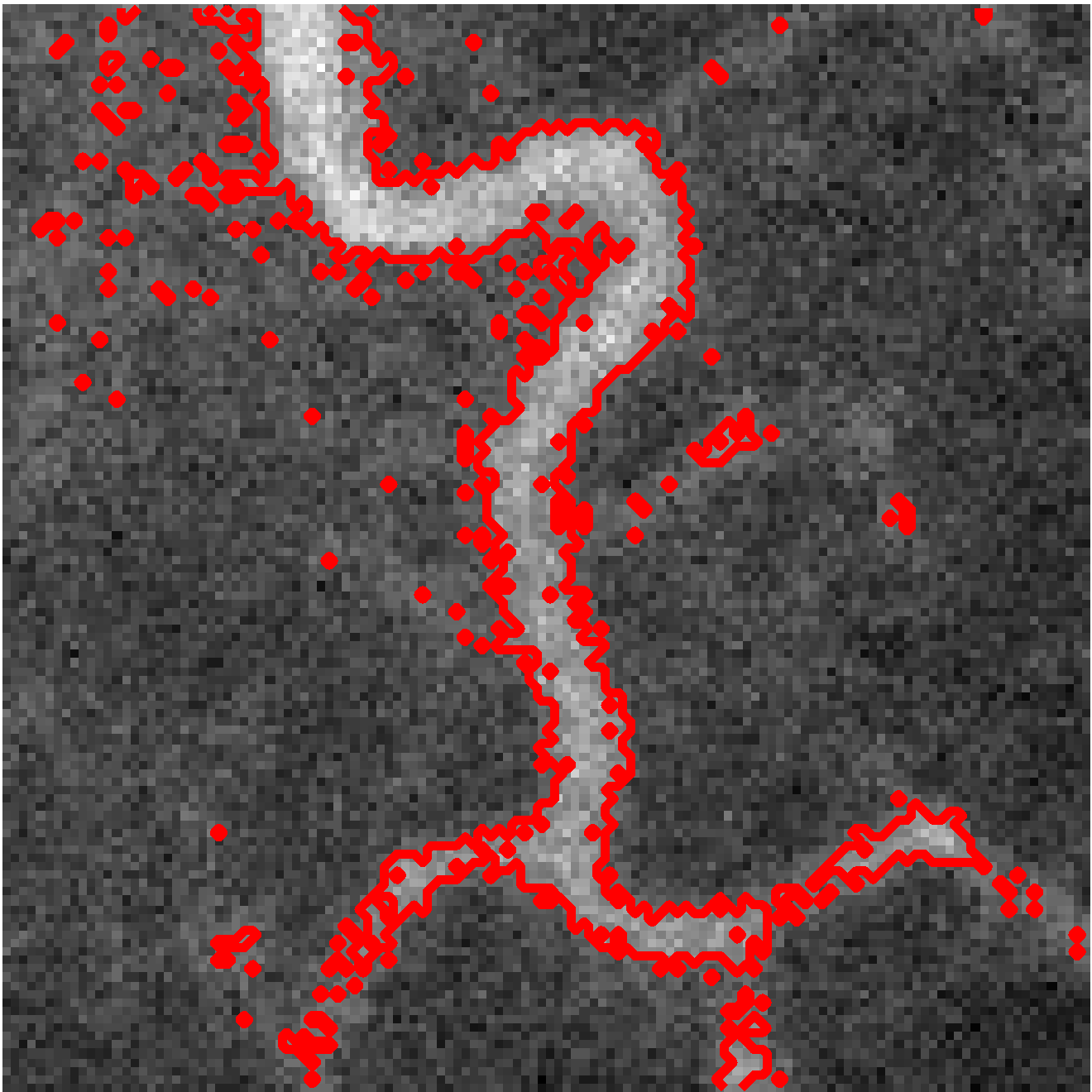}
  \end{subfigure}
  \hfill
  \begin{subfigure}[b]{0.16\linewidth}
      \centering
      \includegraphics[width=\textwidth]{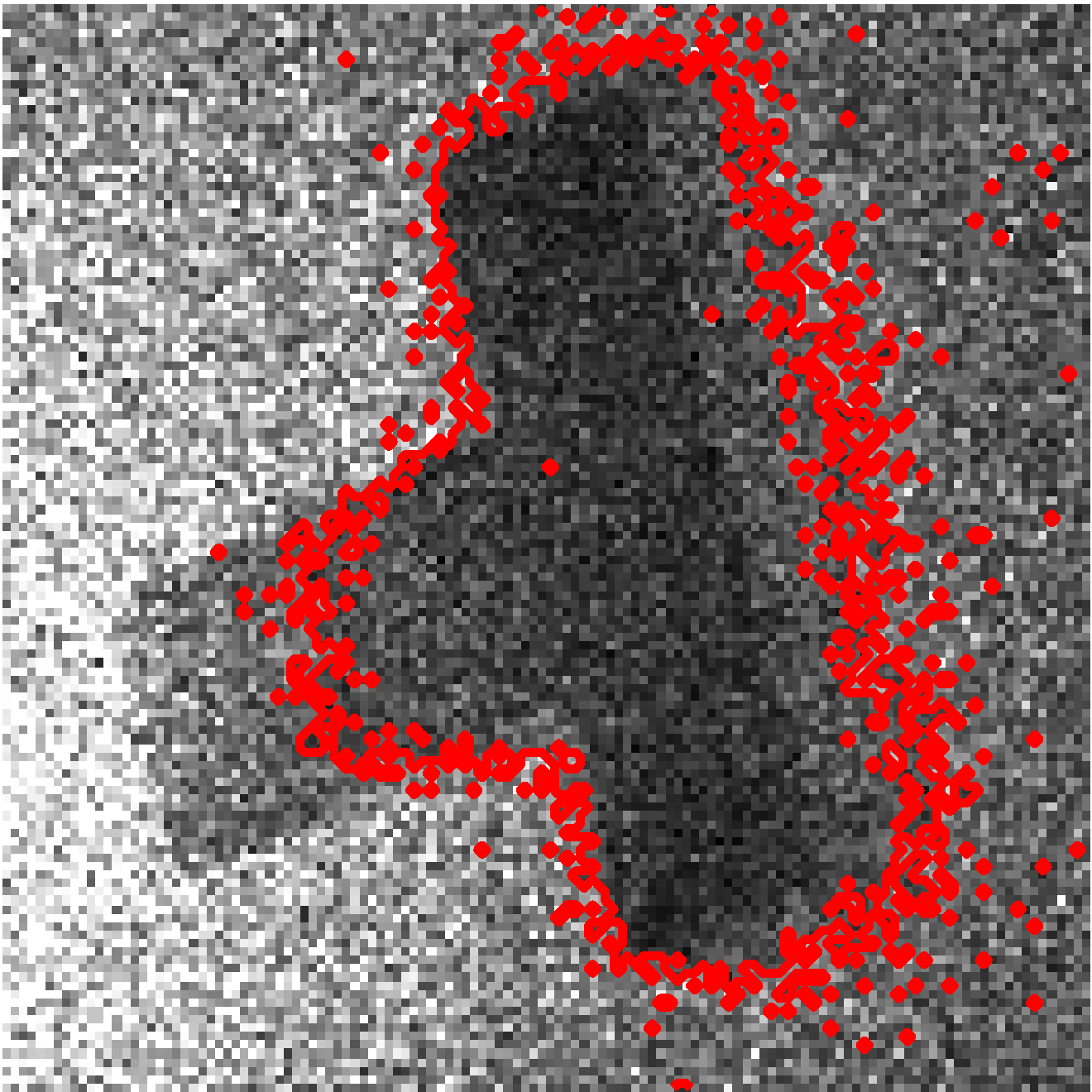}
  \end{subfigure}
  \hfill
  \begin{subfigure}[b]{0.16\linewidth}
      \centering
      \includegraphics[width=\textwidth]{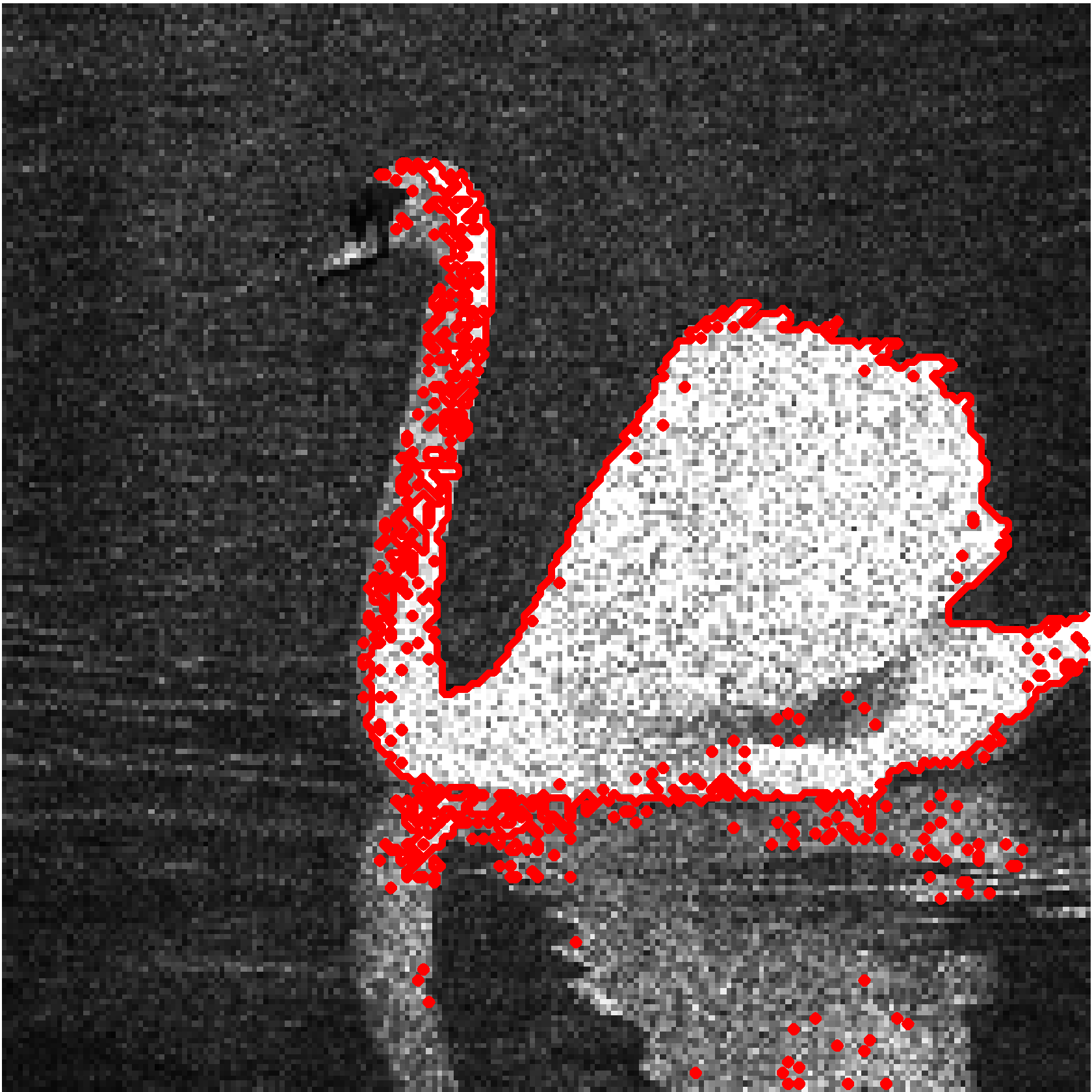}
  \end{subfigure}
  \hfill
  \begin{subfigure}[b]{0.16\linewidth}
      \centering
      \includegraphics[width=\textwidth]{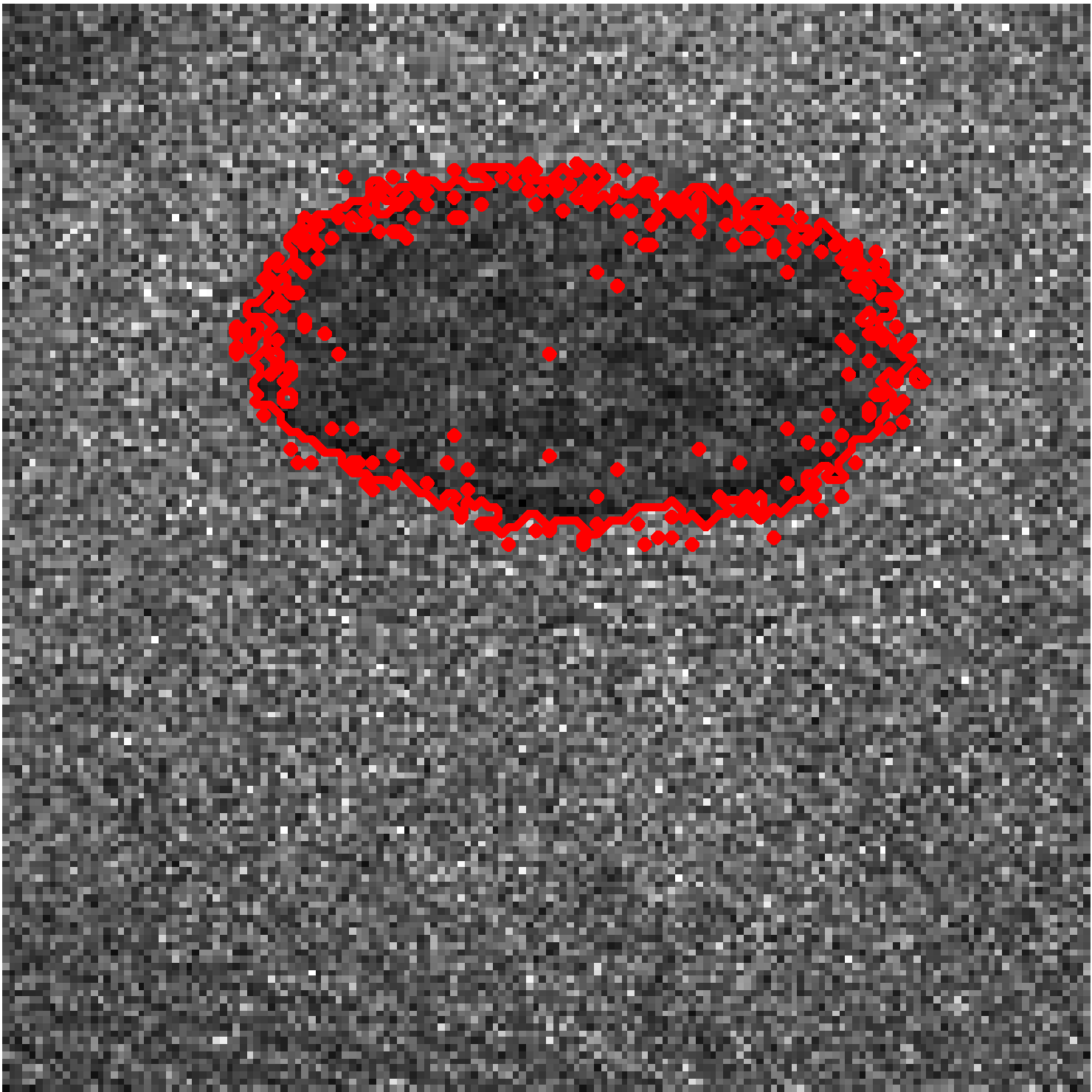}
  \end{subfigure}
  \hfill
  \begin{subfigure}[b]{0.16\linewidth}
      \centering
      \includegraphics[width=\textwidth]{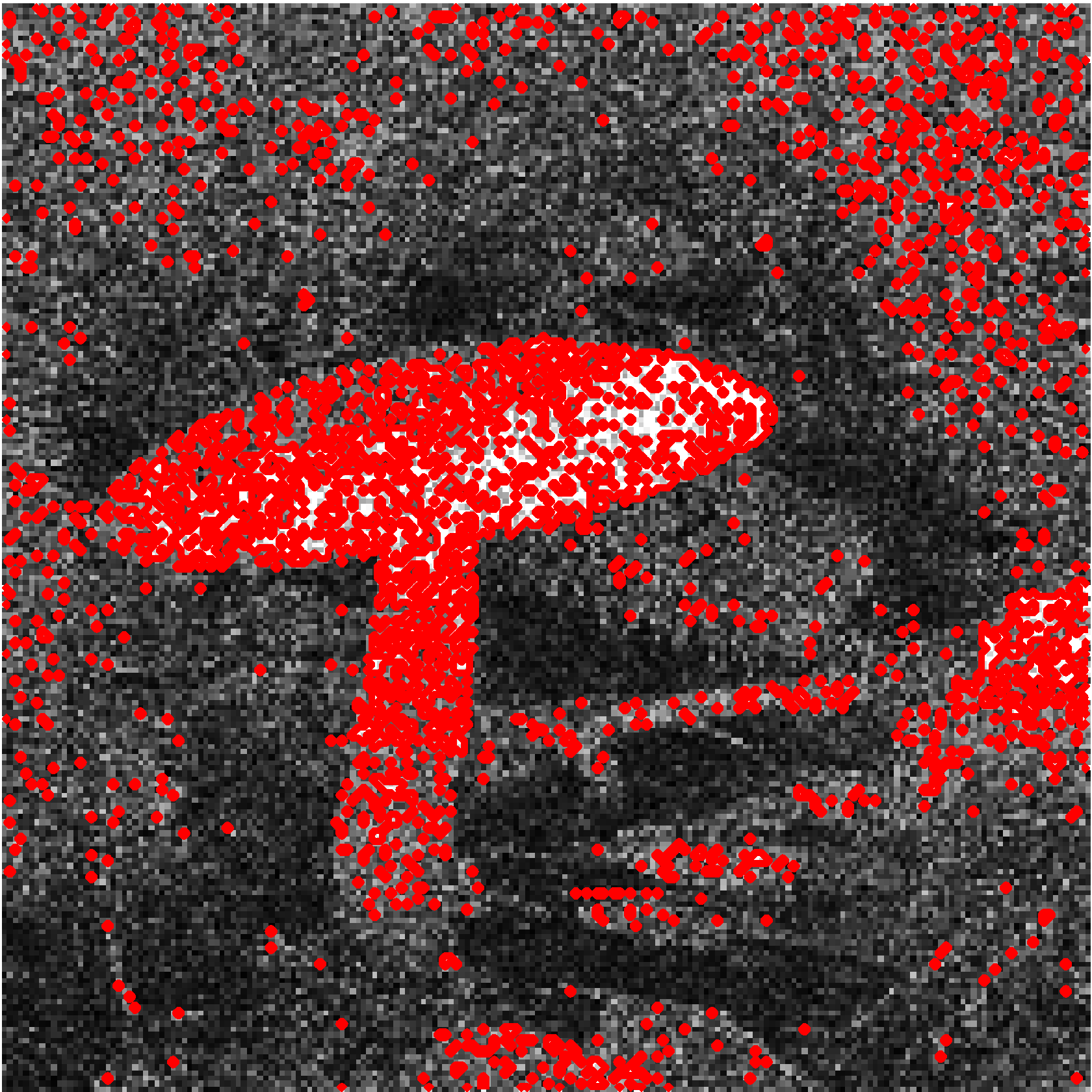}
  \end{subfigure}
  \hfill
  \begin{subfigure}[b]{0.16\linewidth}
      \centering
      \includegraphics[width=\textwidth]{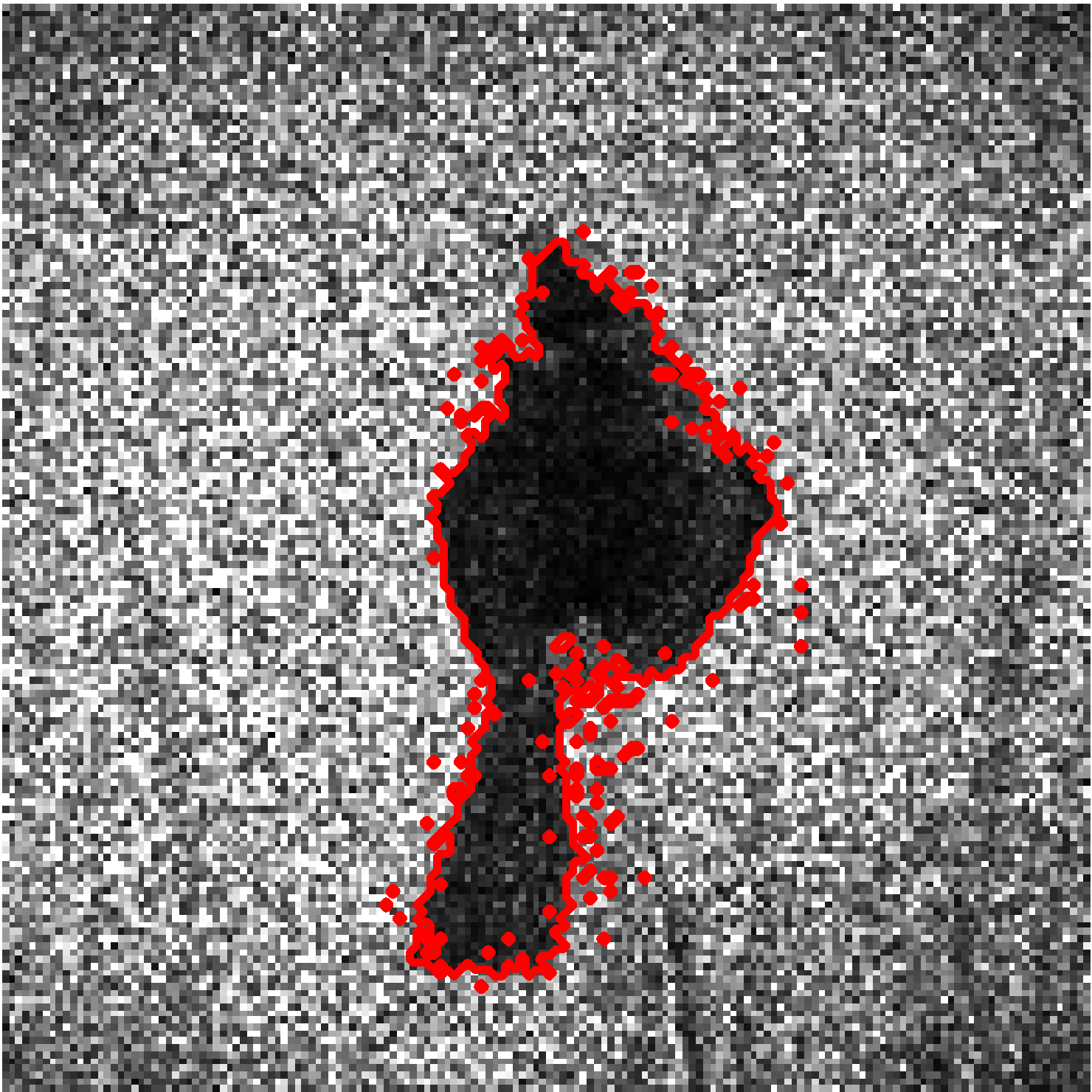}
  \end{subfigure}

  \begin{subfigure}[b]{0.16\linewidth}
      \centering
      \includegraphics[width=\textwidth]{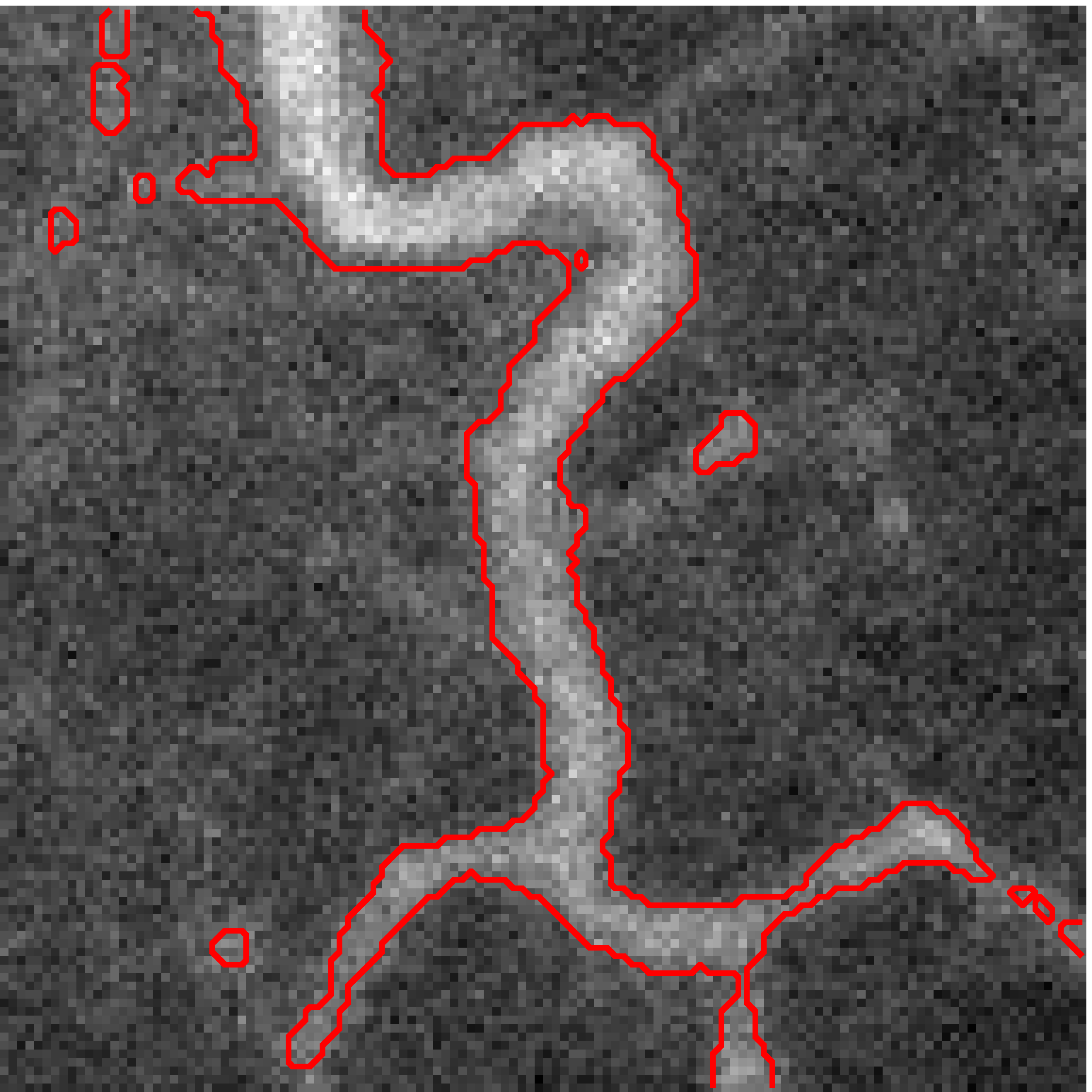}
  \end{subfigure}
  \hfill
  \begin{subfigure}[b]{0.16\linewidth}
      \centering
      \includegraphics[width=\textwidth]{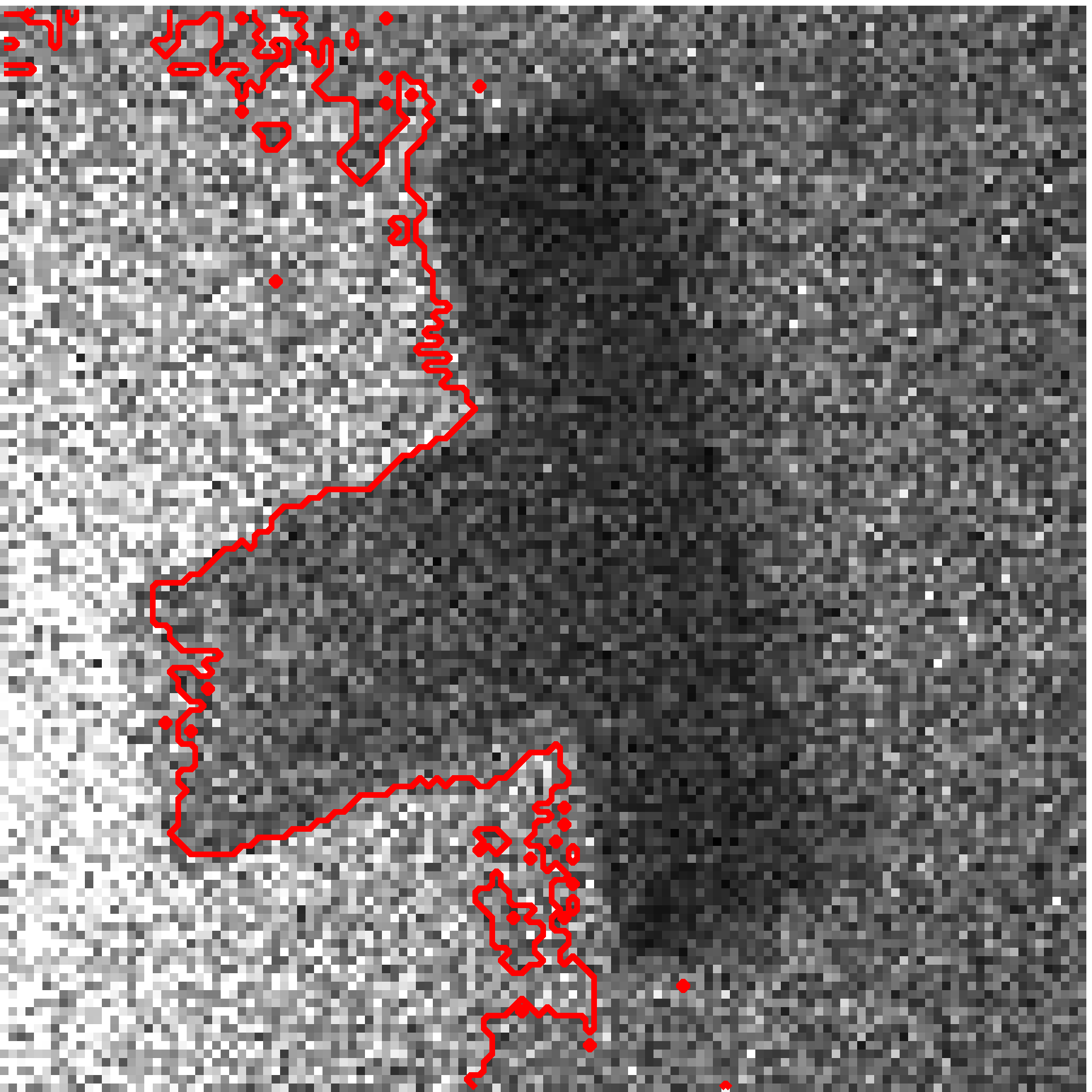}
  \end{subfigure}
  \hfill
  \begin{subfigure}[b]{0.16\linewidth}
      \centering
      \includegraphics[width=\textwidth]{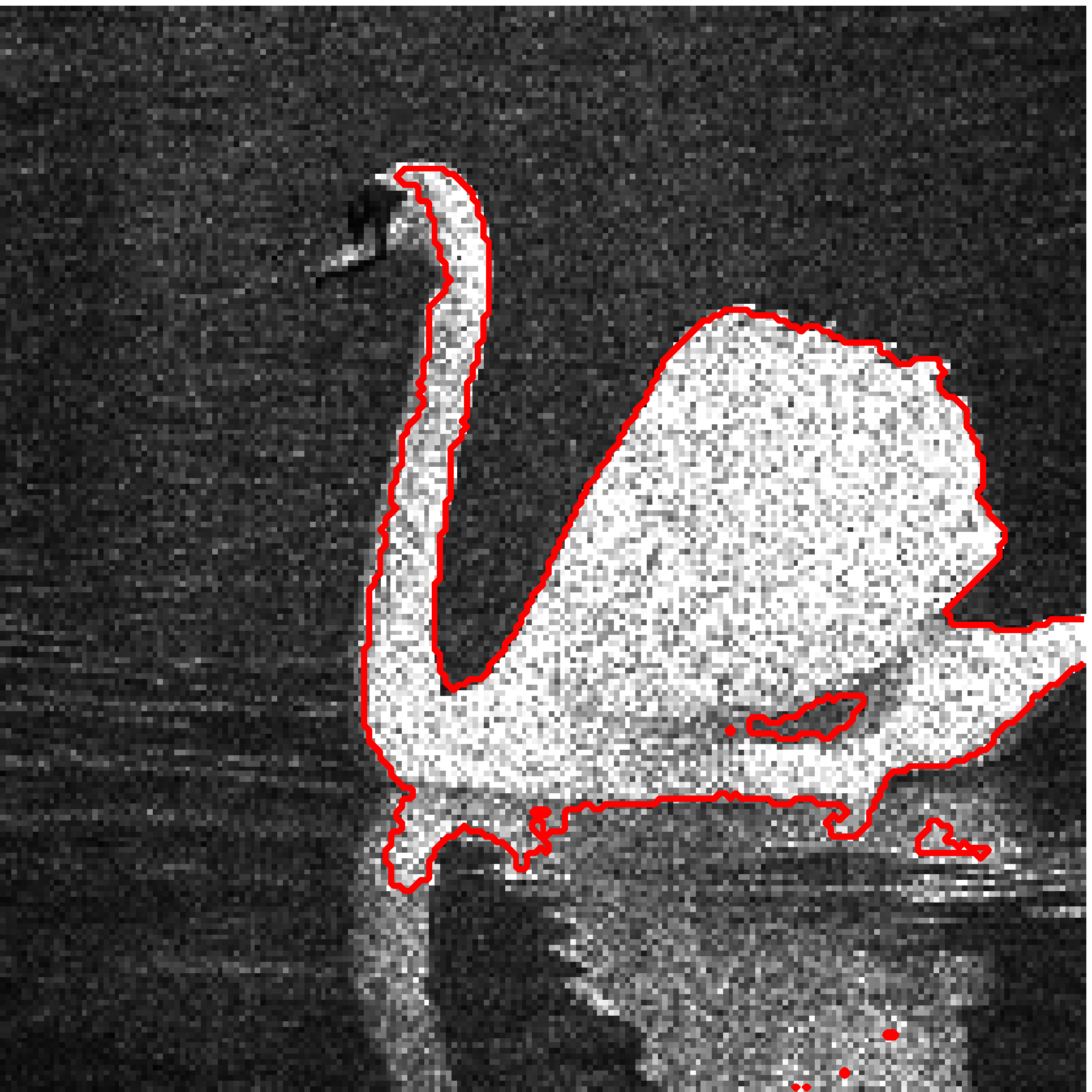}
  \end{subfigure}
  \hfill
  \begin{subfigure}[b]{0.16\linewidth}
      \centering
      \includegraphics[width=\textwidth]{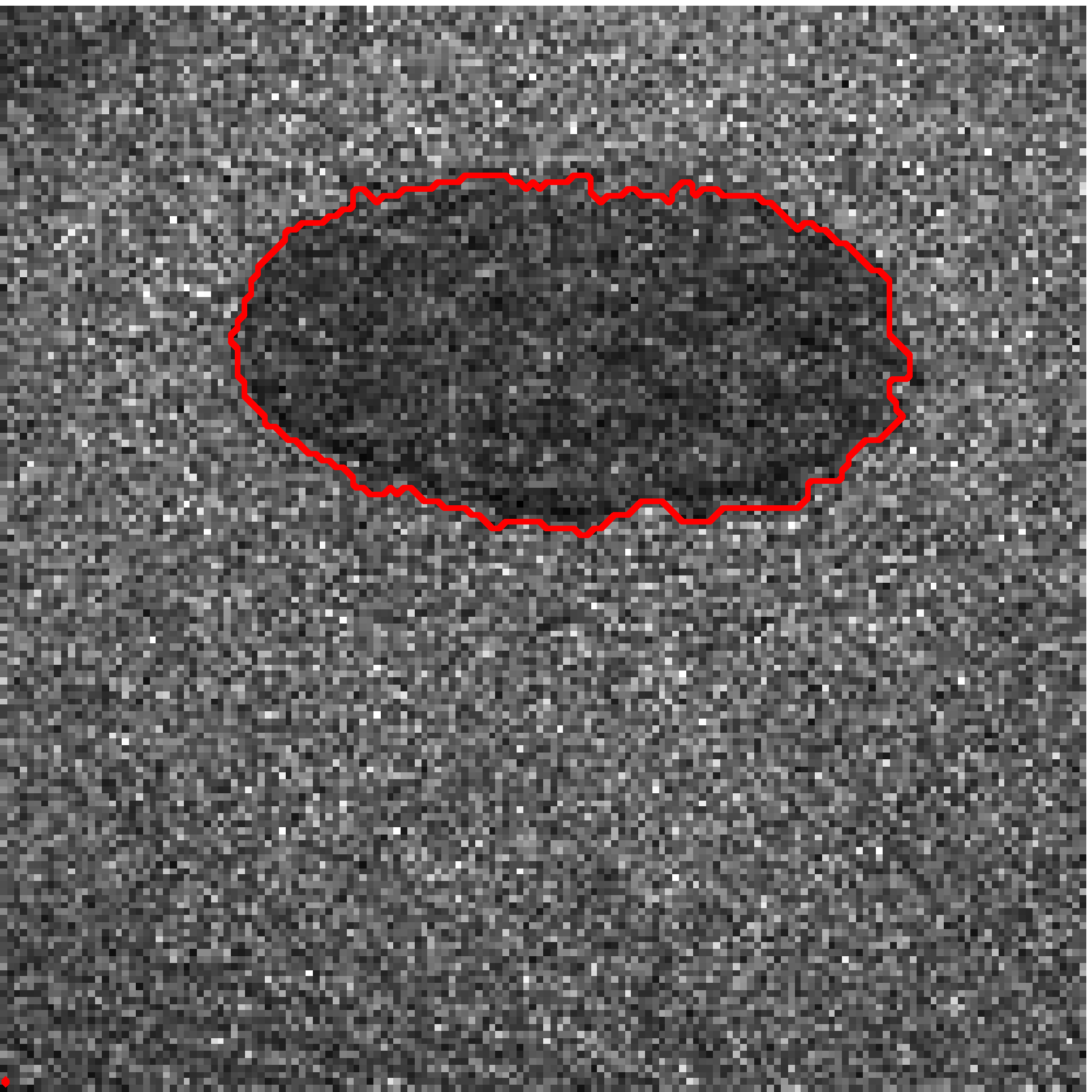}
  \end{subfigure}
  \hfill
  \begin{subfigure}[b]{0.16\linewidth}
      \centering
      \includegraphics[width=\textwidth]{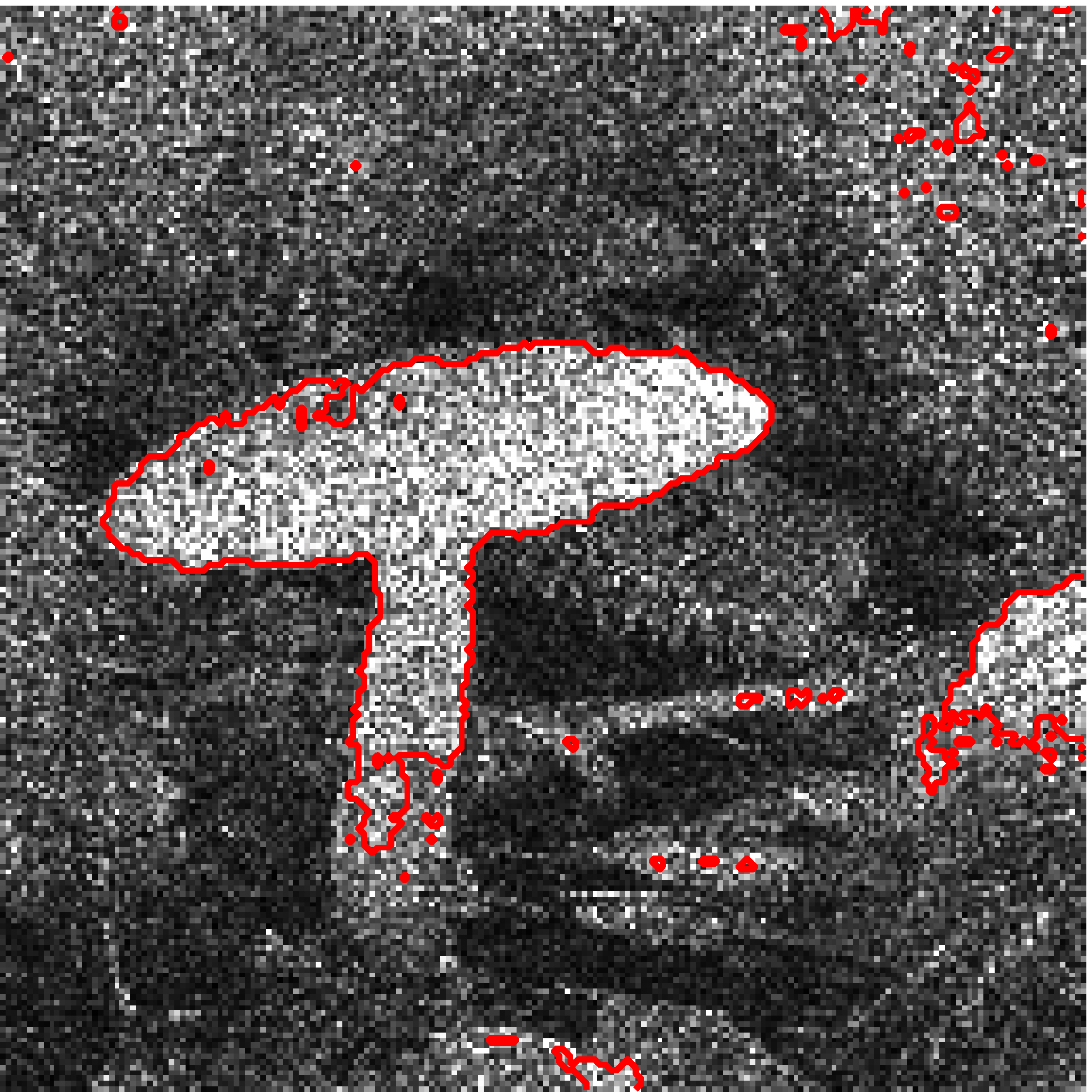}
  \end{subfigure}
  \hfill
  \begin{subfigure}[b]{0.16\linewidth}
      \centering
      \includegraphics[width=\textwidth]{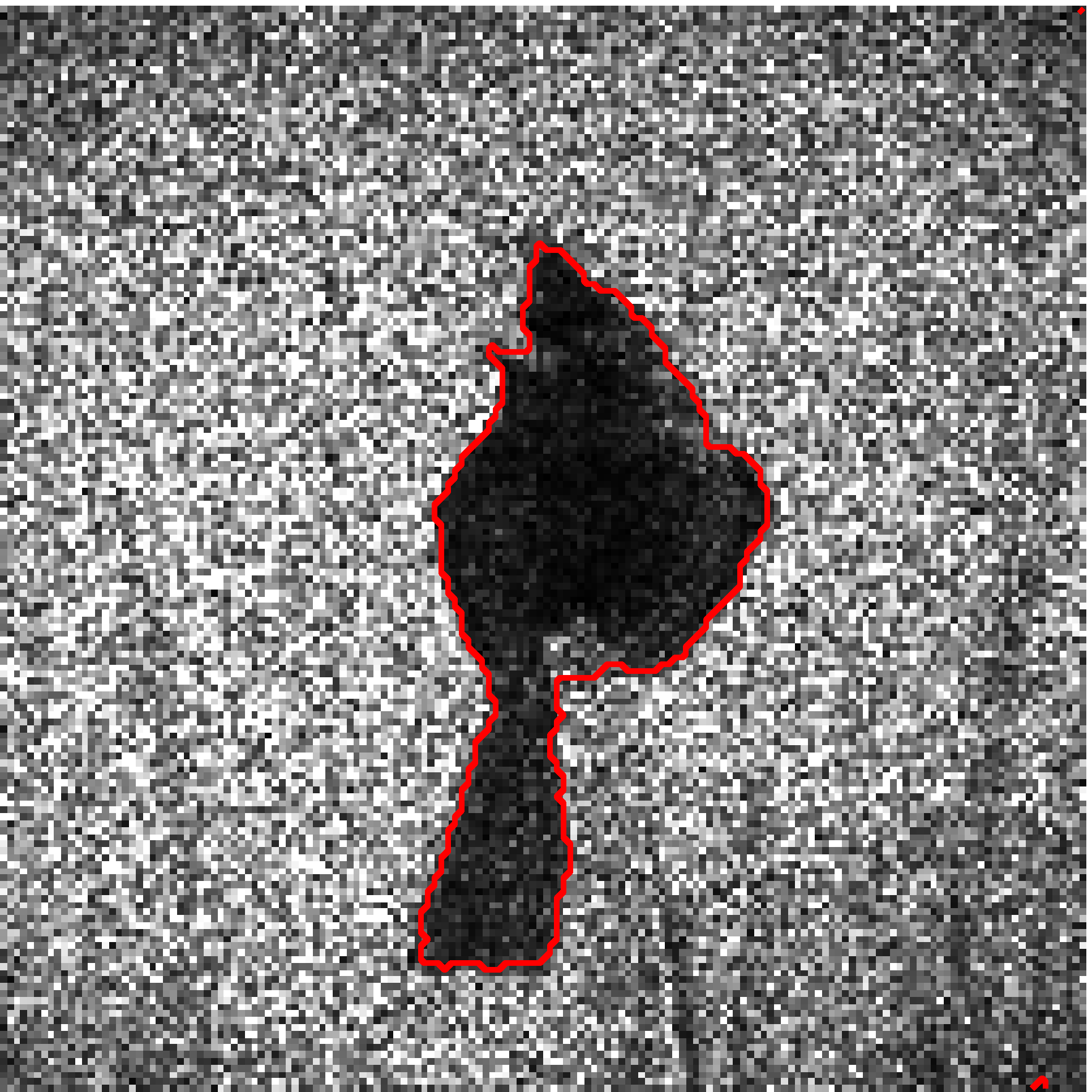}
  \end{subfigure}

  \begin{subfigure}[b]{0.16\linewidth}
      \centering
      \includegraphics[width=\textwidth]{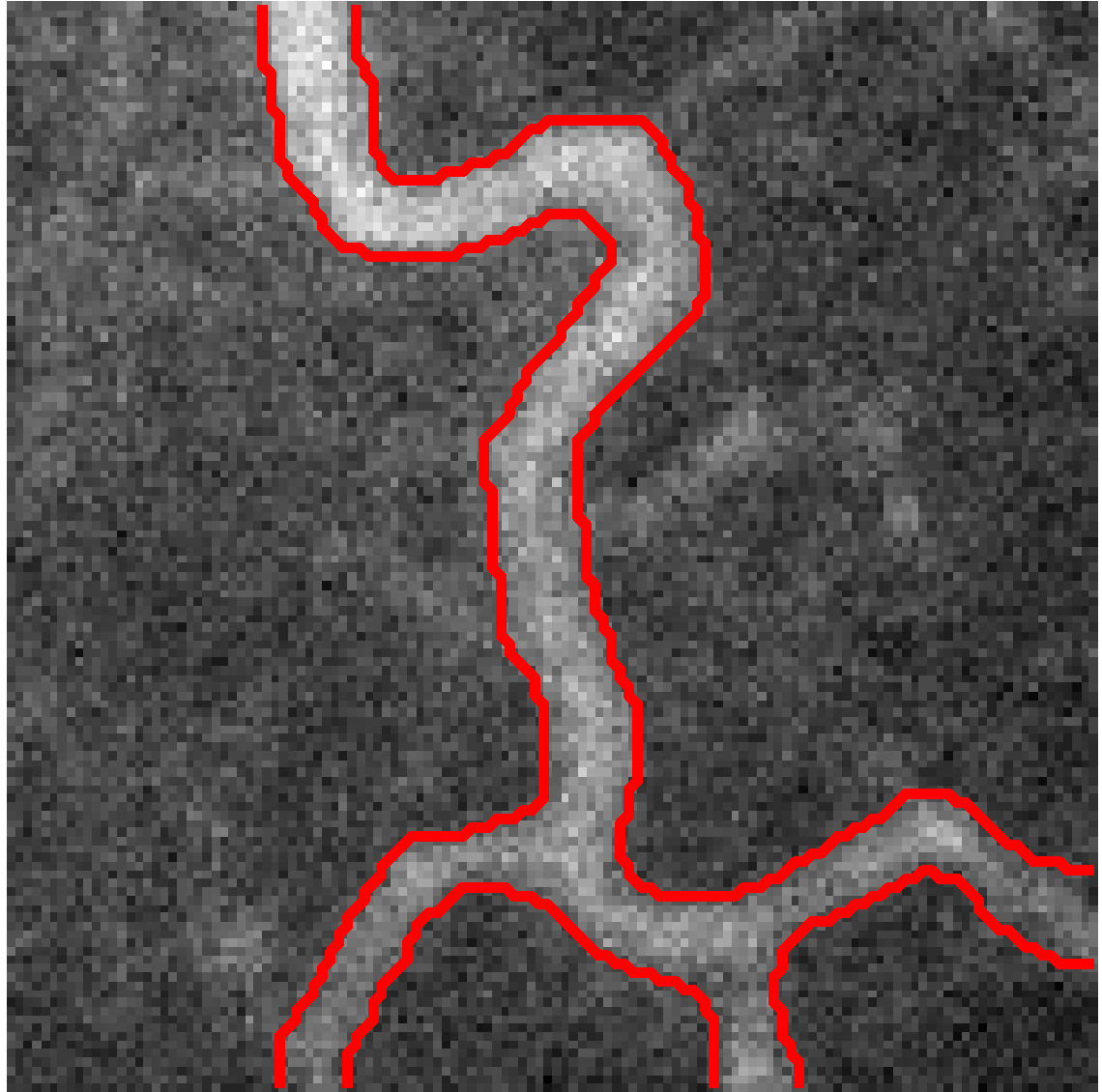}
      \caption{Poisson}
  \end{subfigure}
  \hfill
  \begin{subfigure}[b]{0.16\linewidth}
      \centering
      \includegraphics[width=\textwidth]{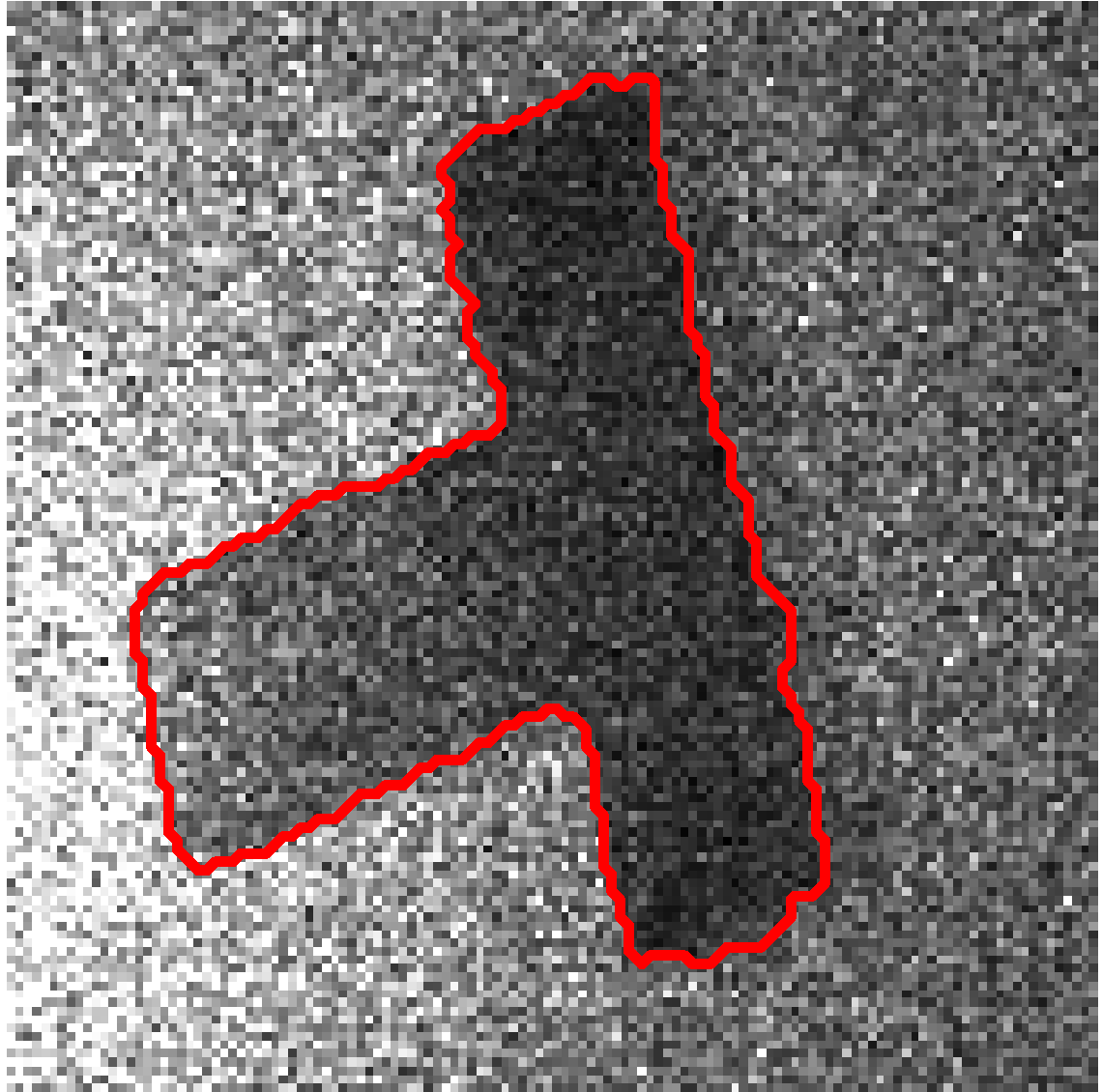}
      \caption{$L=10$}
  \end{subfigure}
  \hfill
  \begin{subfigure}[b]{0.16\linewidth}
      \centering
      \includegraphics[width=\textwidth]{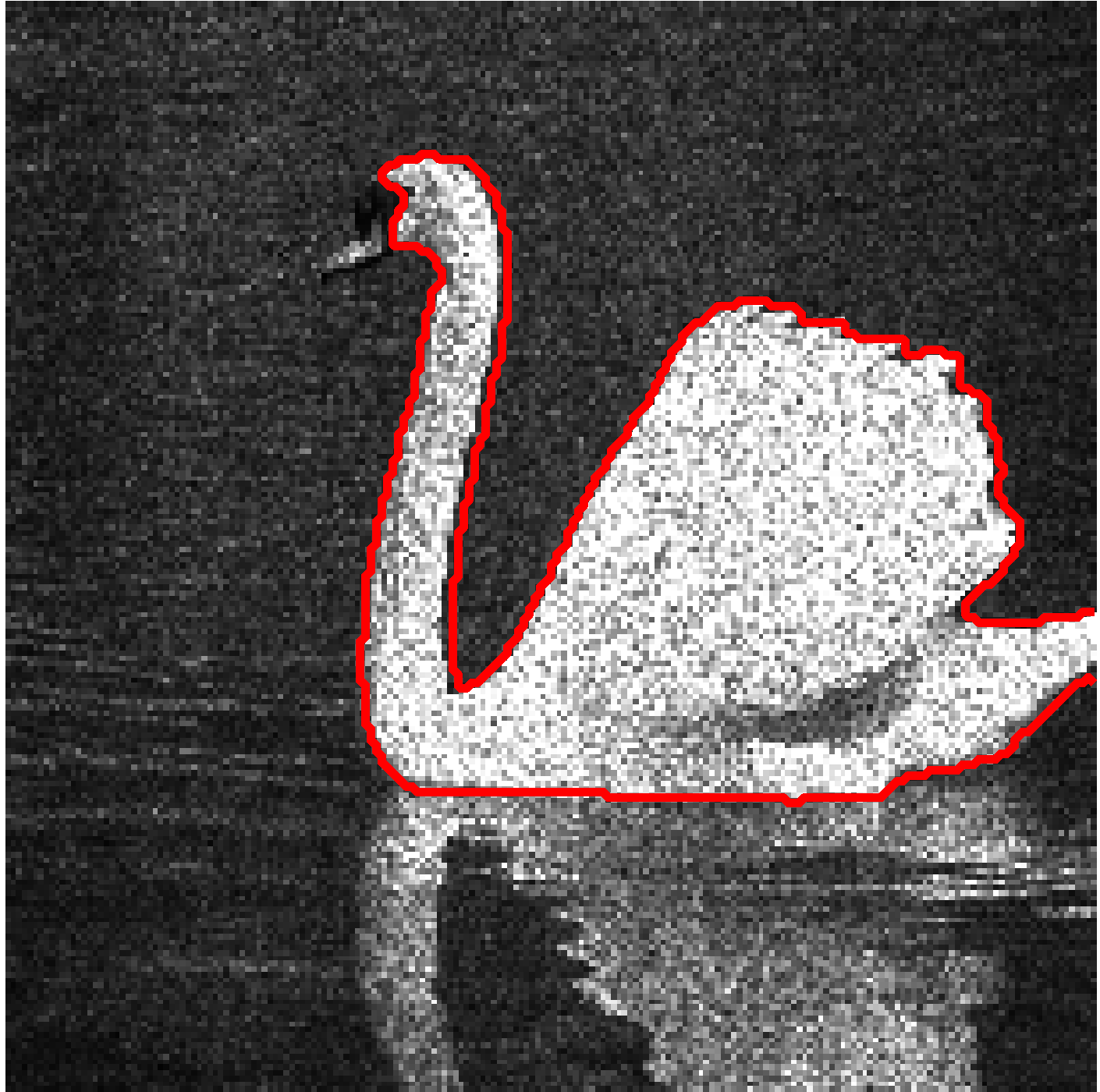}
      \caption{$L=10$}
      \label{fig:16-L10}
  \end{subfigure}
  \hfill
  \begin{subfigure}[b]{0.16\linewidth}
      \centering
      \includegraphics[width=\textwidth]{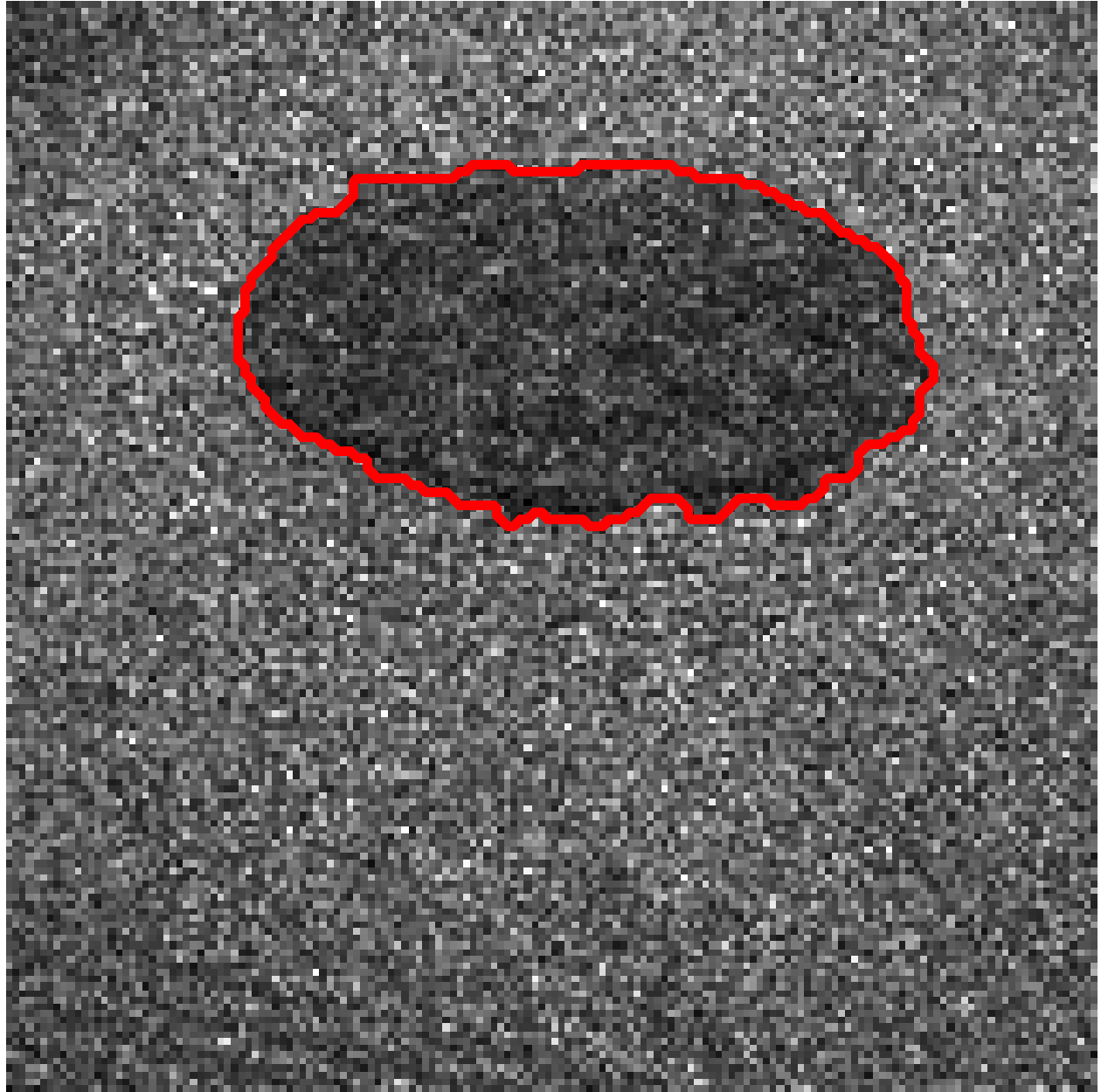}
      \caption{$L=10$}
      \label{fig:9-L10}
  \end{subfigure}
  \hfill
  \begin{subfigure}[b]{0.16\linewidth}
      \centering
      \includegraphics[width=\textwidth]{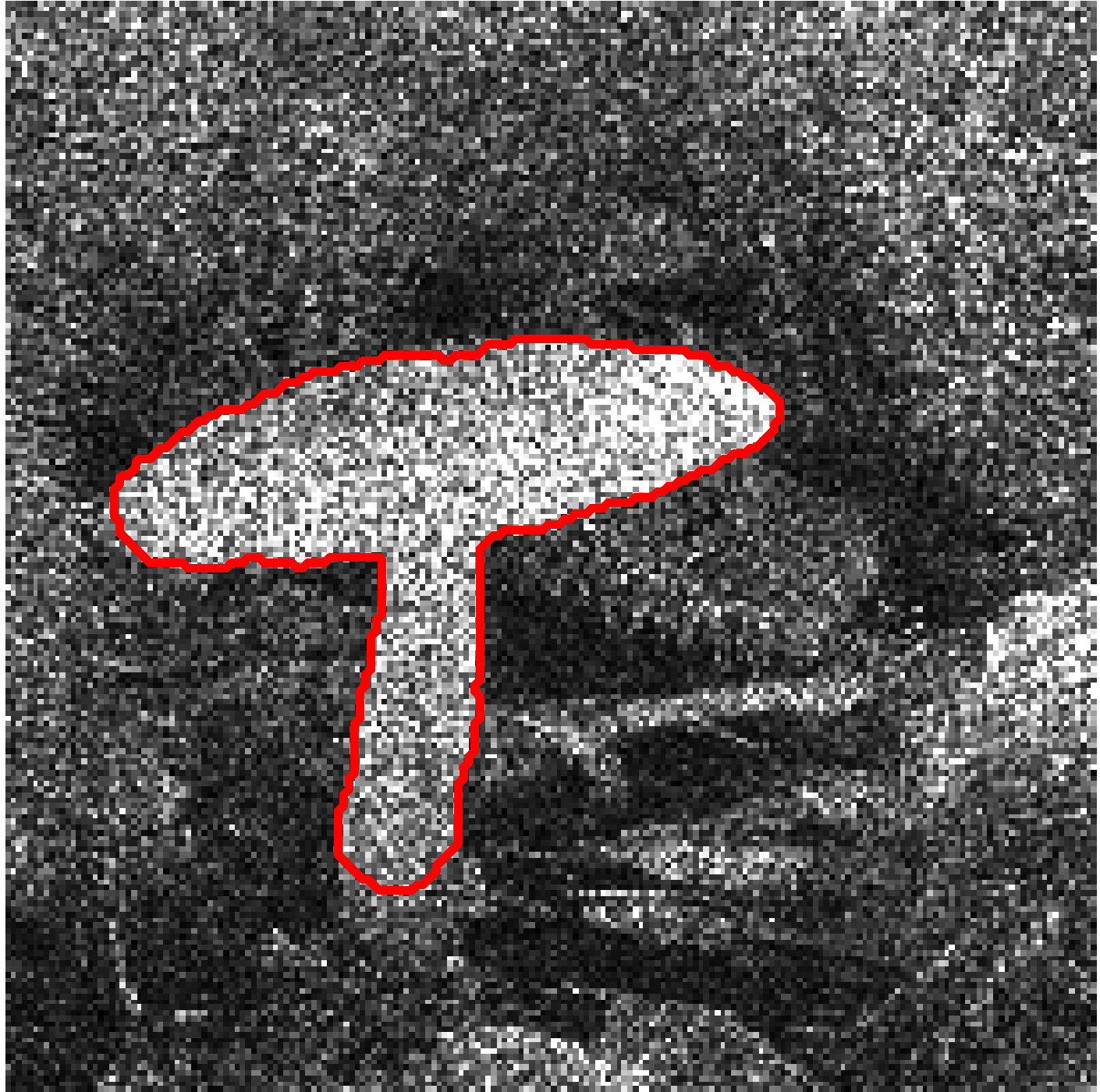}
      \caption{$L=4$}
      \label{fig:12-L4}
  \end{subfigure}
  \hfill
  \begin{subfigure}[b]{0.16\linewidth}
      \centering
      \includegraphics[width=\textwidth]{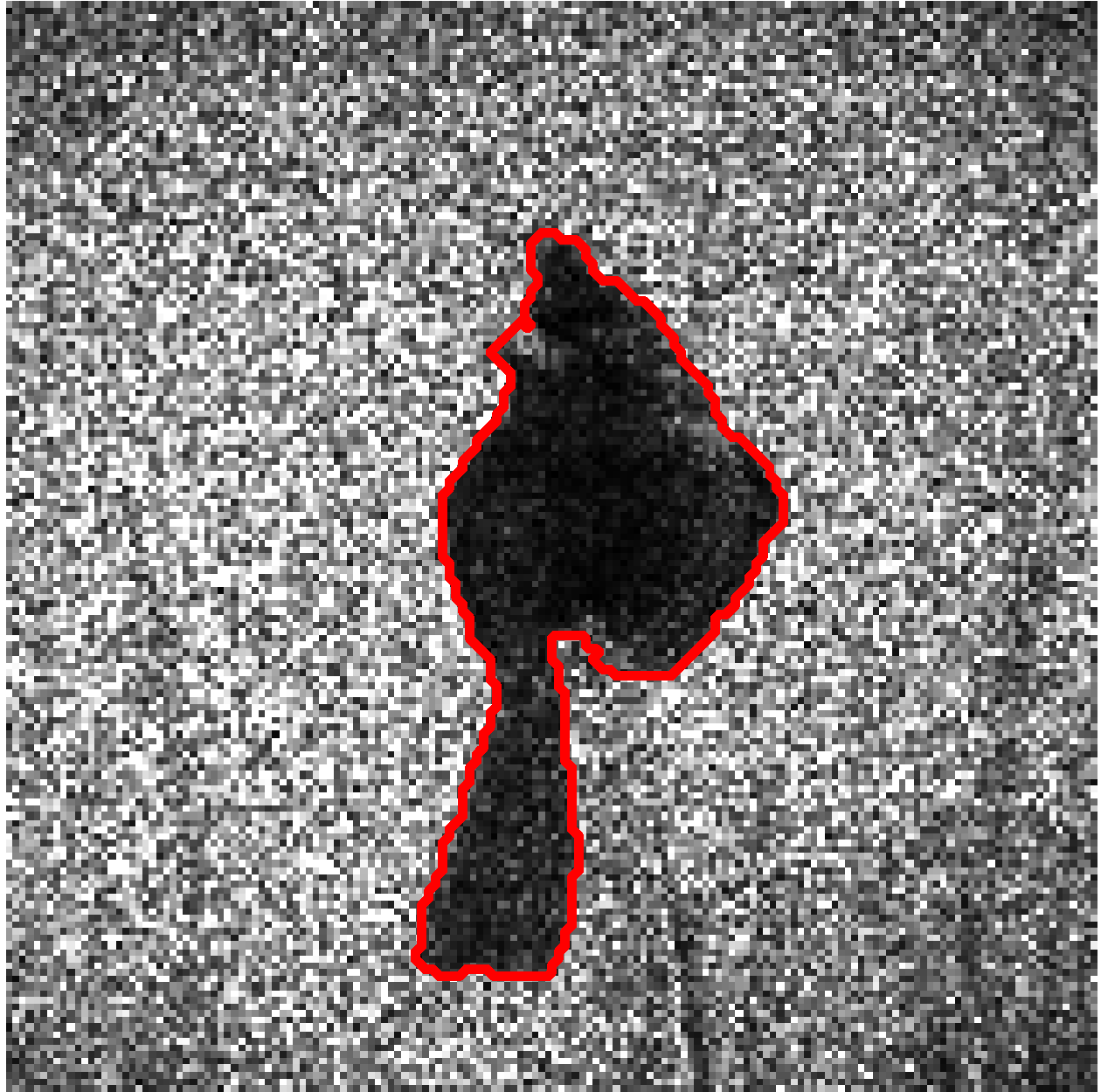}
      \caption{$L=4$}
      \label{fig:20-L4}
  \end{subfigure}

  \caption{Comparison with state-of-the-art models. Row1: input images with initial contours; Rows 2–8: results of LIC, TSS, VLSGIS, NCASTV, ICTM-CV, ICTM-LVF-CV, and our model, respectively}
  \label{fig:comparison}
\end{figure}
\begin{table}
\centering
\caption{DSC, IoU, Accuracy and $\kappa$ values of different models for the segmentation results }
\label{tab:seg_results}
\begin{tabularx}{\textwidth}{XXllll}
  \toprule
 Image & Model & DSC & IoU & Accuracy & $\kappa$\\
  \midrule
  \multirow[t]{7}{*}{Fig.\ref{fig:9-L10}}
   & LIC & 0.3150 & 0.1869 & 0.4497 & 0.0782 \\
   & TSS & 0.8674 & 0.7659 & 0.9548 & 0.8403 \\
   & VLSGIS & 0.9533 & 0.9108 & 0.9845 & 0.9441 \\
   & NCASTV & 0.4511 & 0.2913 & 0.6378  & 0.2750 \\
   & ICTM-CV & 0.9219 & 0.8552 & 0.9759 & 0.9077 \\
   & ICTM-LVF-CV & 0.8639 & 0.7605 & 0.9504 & 0.8342 \\
   & Our & \textbf{0.9677} & \textbf{0.9374} & \textbf{0.9895} & \textbf{0.9614} \\
   \multirow[t]{7}{*}{Fig.~\ref{fig:20-L4}}
   & LIC & 0.9389 & 0.8849 & 0.9862 & 0.9311 \\
   & TSS & 0.8993 & 0.8170 & 0.9781 & 0.8871 \\
   & VLSGIS & 0.8673 & 0.7656 & 0.9652 & 0.8475 \\
   & NCASTV & 0.7447 & 0.5933 & 0.9236 & 0.7027 \\
   & ICTM-CV & 0.9135 & 0.8407 & 0.9805 & 0.9025 \\
   & ICTM-LVF-CV & 0.8175 & 0.6913 & 0.9547 & 0.7925\\
   & Our & \textbf{0.9544} & \textbf{0.9127} & \textbf{0.9898} & \textbf{0.9486} \\
   \multirow[t]{6}{*}{Fig.~\ref{fig:u-8}}
   & LIC & 0.9526 & 0.9095 & 0.9990 & 0.9521 \\
   & TSS & 0.0352 & 0.0179 & 0.5522& 0.0138 \\
   & VLSGIS & 0.9447 & 0.8951 & 0.9988 & 0.9440 \\
   & ICTM-CV & 0.0011 & 0.0005 & 0.9654 & -0.0142 \\
   & ICTM-LVF-CV & 0.0360 & 0.0184 & 0.6076 & 0.0147 \\
   & Our & \textbf{0.9702} & \textbf{0.9421} & \textbf{0.9993} & \textbf{0.9698} \\
   \multirow[t]{6}{*}{Fig.~\ref{fig:u-72}}
   & LIC & 0.9498 & 0.9043 & 0.9917 & 0.9452 \\
   & TSS & 0.2575 & 0.1478 & 0.5890 & 0.1399 \\
   & VLSGIS & 0.9489 & 0.9027 & 0.9917 & 0.9444 \\
   & ICTM-CV & 0.9036 & 0.8241 & 0.9856 & 0.8958\\
   & ICTM-LVF-CV & 0.2355 & 0.1335 & 0.5221 & 0.1118 \\
   & Our & \textbf{0.9562} & \textbf{0.9161} & \textbf{0.9930} & \textbf{0.9524} \\
   \multirow[t]{6}{*}{Fig.~\ref{fig:u-87}}
   & LIC & 0.1267 & 0.0676 & 0.7205 & 0.0922 \\
   & TSS & 0.3041 & 0.1793 & 0.9217 & 0.2802 \\
   & VLSGIS & 0.9328 & 0.8749 & 0.9971 & 0.9313 \\
   & ICTM-CV & 0.9582 & 0.9197 & 0.9984 & 0.9573\\
   & ICTM-LVF-CV & 0.0654 & 0.0338 & 0.4212 & 0.0273 \\
   & Our & \textbf{0.9632} & \textbf{0.9290} & \textbf{0.9985} & \textbf{0.9624} \\
   \multirow[t]{6}{*}{Fig.~\ref{fig:u-107}}
   & LIC & 0.1868 & 0.1030 & 0.8266 & 0.1548 \\
   & TSS & 0.0572 & 0.0294 & 0.5278 & 0.0176 \\
   & VLSGIS & 0.9162 & 0.8454 & 0.9965 & 0.9144 \\
   & ICTM-CV & 0.0633 & 0.0327 & 0.4581 & 0.0238 \\
   & ICTM-LVF-CV & 0.0648 & 0.0335 & 0.4670 & 0.0254 \\
   & Our & \textbf{0.9338} & \textbf{0.8758} & \textbf{0.9972} & \textbf{0.9324} \\
   \multirow[t]{6}{*}{Fig.~\ref{fig:u-186}}
   & LIC & 0.5716 & 0.4002 & 0.8955 & 0.5206 \\
   & TSS & 0.6612 & 0.4939 & 0.9357 & 0.6268 \\
   & VLSGIS & 0.9494 & 0.9038 & 0.9924 & 0.9454 \\
   & ICTM-CV & 0.9264 & 0.8630 & 0.9893 & 0.9207 \\
   & ICTM-LVF-CV & 0.1120 & 0.0593 & 0.6666 & 0.0753 \\
   & Our & \textbf{0.9646} & \textbf{0.9315} & \textbf{0.9946} & \textbf{0.9616} \\
  \bottomrule
\end{tabularx}
\end{table}
Comparative experiments are conducted with several state-of-the-art methods, and the corresponding results are presented in Fig.~\ref{fig:comparison}. Real images exhibiting intensity inhomogeneity, selected from the BSD500 dataset \cite{arbelaez2010contour}, are used in these experiments, as illustrated in Figs.~\ref{fig:16-L10}--\ref{fig:20-L4}. To evaluate the robustness of the models against noise, Poisson noise and multiplicative Gamma noise are added to the original images. In Table~\ref{tab:seg_results}, we present quantitative metrics, including the DSC, IoU, Accuracy and $\kappa$ values of the segmentation results. Both the LIC and ICTM-CV models contain a length term but lack an additional denoising term. Although the length term enhances the robustness to noise, the segmentation contours produced by LIC and ICTM-CV exhibit irregular boundaries and isolated points, especially for multiplicative Gamma noise. In contrast, models that incorporate explicit denoising terms show a significant reduction in such irregular boundaries. This also demonstrates the importance of incorporating denoising terms into our model.
The NCASTV model is based on normalized cuts and uses the TV regularizer to enhance the robustness to noise. As illustrated in Fig.~\ref{fig:comparison}, the segmentation results of the NCASTV model fail to capture fine details. Moreover, this model is ineffective for segmenting images with severe intensity inhomogeneity.
The TSS model, a two-stage method, yields inaccurate results when segmenting images with intensity inhomogeneity, as the target and background regions often have similar intensity values, making the thresholding method unreliable.
The VLSGIS, ICTM-LVF-CV, and the proposed model all incorporate denoising terms into the segmentation framework. Consequently, the segmentation results of these three models are less sensitive to noise. However, the ICTM-LVF-CV model does not perform well on images with severe intensity inhomogeneity.
The VLSGIS model produces accurate segmentation results for both noisy and intensity-inhomogeneous images, but tends to miss fine object details. Compared with state-of-the-art models, our model better preserves object details. Our model also demonstrates greater robustness to intensity inhomogeneity owing to the incorporated bias correction. As shown in Table~\ref{tab:seg_results}, our model also achieves accurate segmentation results under high noise levels.

\subsection{Application on Medical Image Segmentation}
In this section, we conduct segmentation experiments on medical images, which are often affected by intensity inhomogeneity and high noise. We select ultrasound images from the Breast Ultrasound Images Dataset (BUSI)\footnote{\url{https://www.kaggle.com/datasets/sabahesaraki/breast-ultrasound-images-dataset}}, which exhibit severe intensity inhomogeneity in the background and are contaminated by complex noise, making accurate segmentation challenging. Additionally, we choose brain MR images from the Simulated Brain Database (SBD)\footnote{\url{http://www.bic.mni.mcgill.ca/brainweb/}} for multi-phase segmentation experiments. The dataset consists of synthetic brain MR images corrupted by $9\%$ noise and $40\%$ intensity inhomogeneity, where the gray matter (GM) and white matter (WM) have similar intensity values.
\subsubsection{Two-Phase Segmentation for Ultrasound Images}
\begin{figure}
  \centering
  \begin{subfigure}[b]{0.19\linewidth}
      \centering
      \includegraphics[width=\textwidth]{}
  \end{subfigure}
  \hfill
  \begin{subfigure}[b]{0.19\linewidth}
      \centering
      \includegraphics[width=\textwidth]{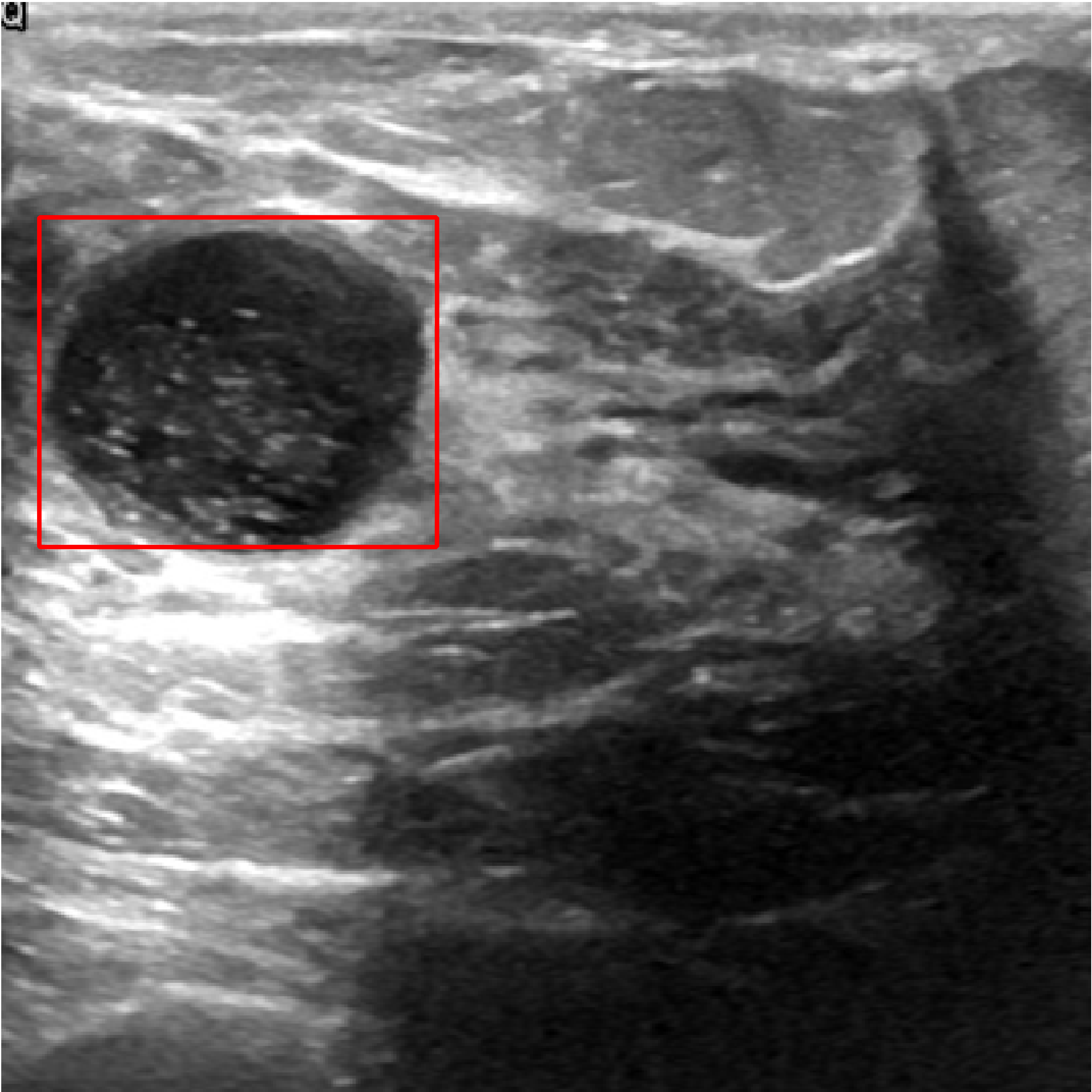}
  \end{subfigure}
  \hfill
  \begin{subfigure}[b]{0.19\linewidth}
      \centering
      \includegraphics[width=\textwidth]{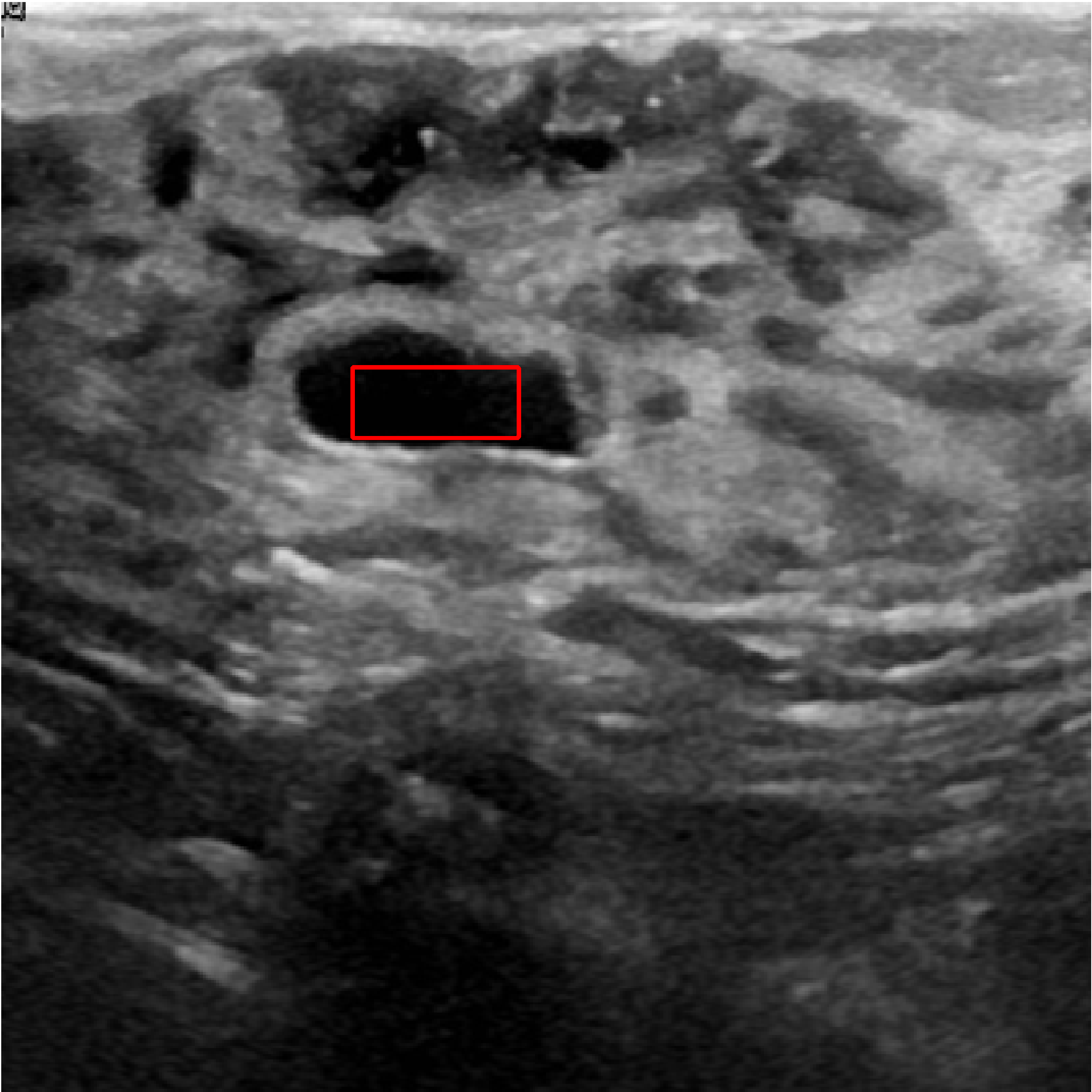}
  \end{subfigure}
  \hfill
  \begin{subfigure}[b]{0.19\linewidth}
      \centering
      \includegraphics[width=\textwidth]{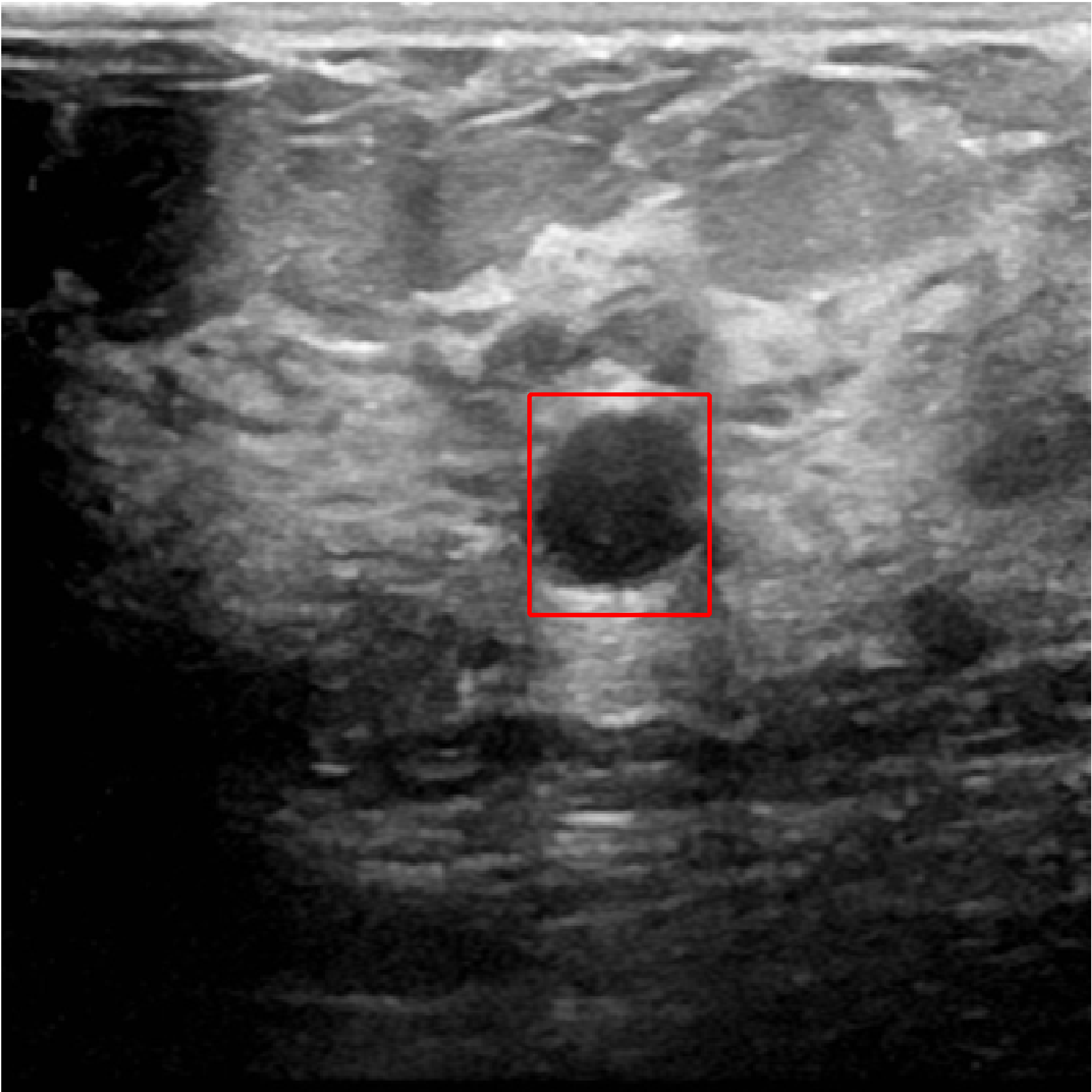}
  \end{subfigure}
  \hfill
  \begin{subfigure}[b]{0.19\linewidth}
      \centering
      \includegraphics[width=\textwidth]{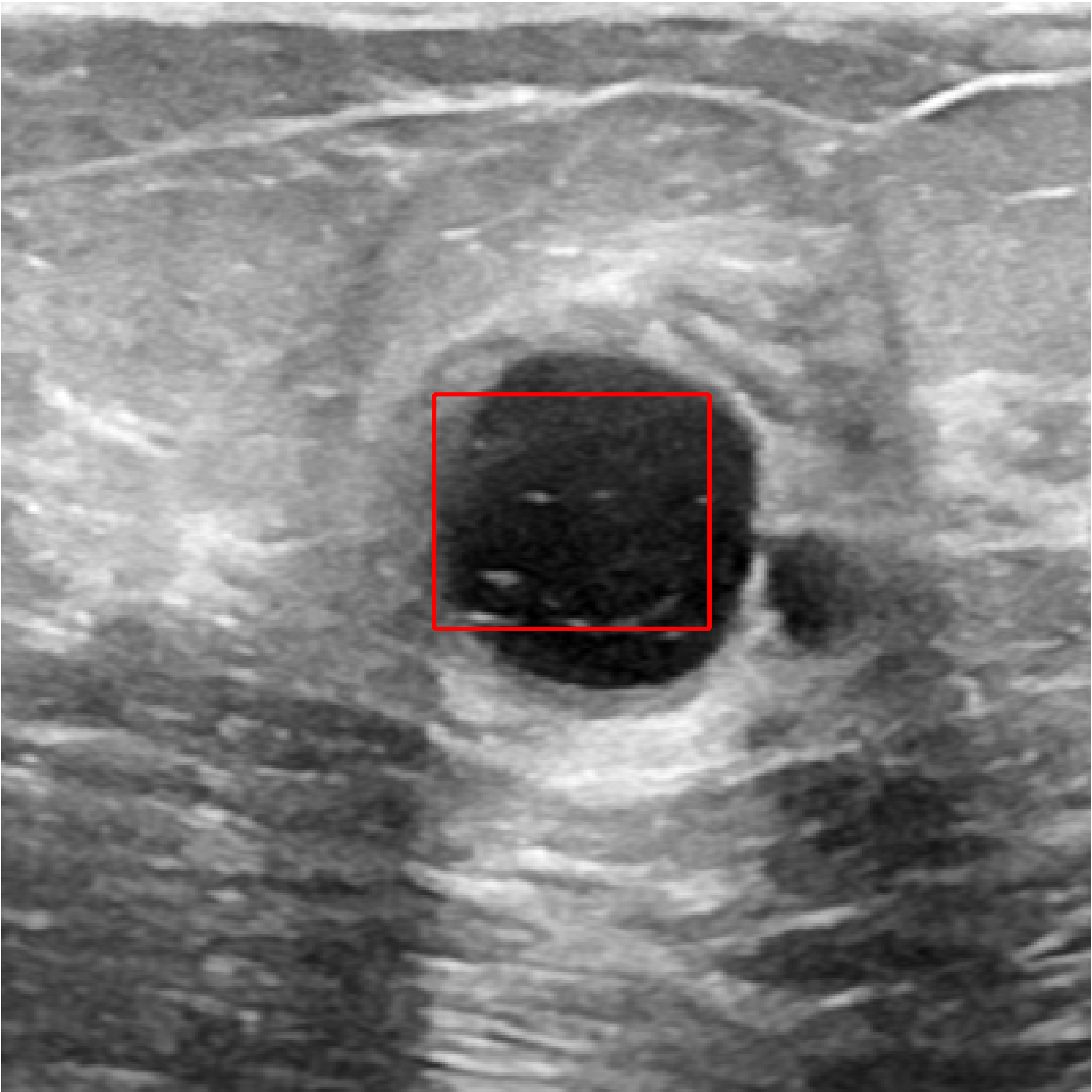}
  \end{subfigure}

  \begin{subfigure}[b]{0.19\linewidth}
      \centering
      \includegraphics[width=\textwidth]{}
  \end{subfigure}
  \hfill
  \begin{subfigure}[b]{0.19\linewidth}
      \centering
      \includegraphics[width=\textwidth]{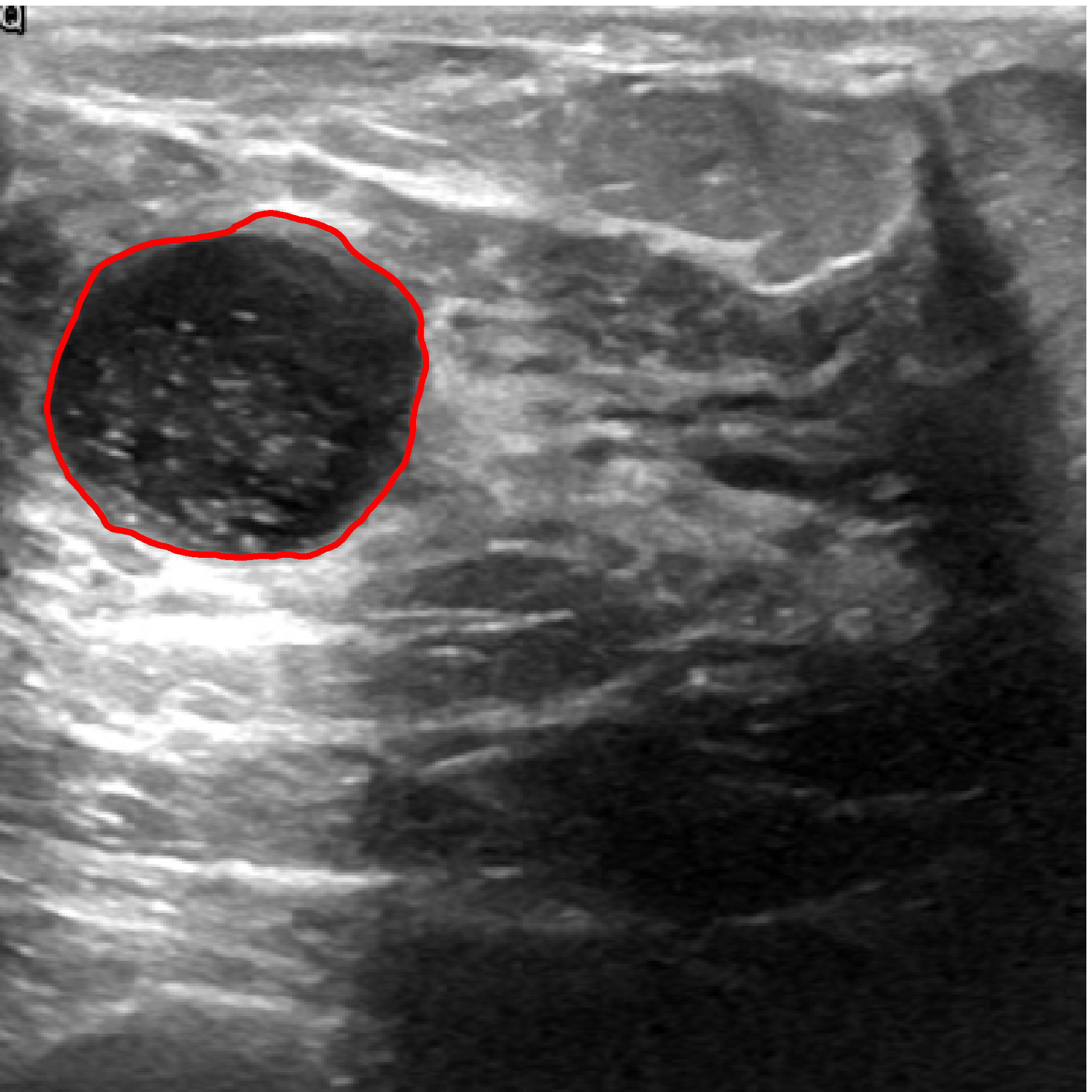}
  \end{subfigure}
  \hfill
  \begin{subfigure}[b]{0.19\linewidth}
      \centering
      \includegraphics[width=\textwidth]{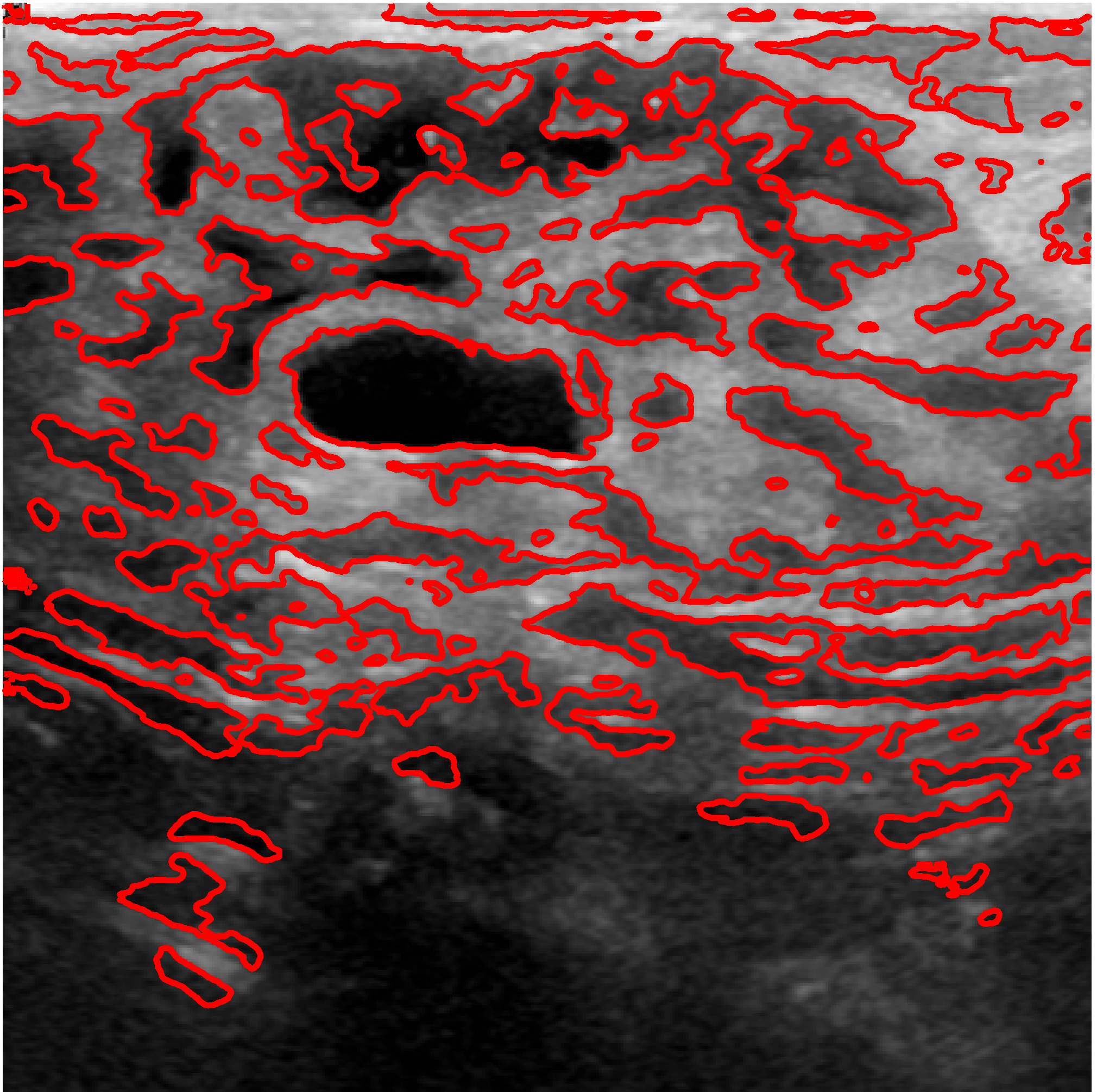}
  \end{subfigure}
  \hfill
  \begin{subfigure}[b]{0.19\linewidth}
      \centering
      \includegraphics[width=\textwidth]{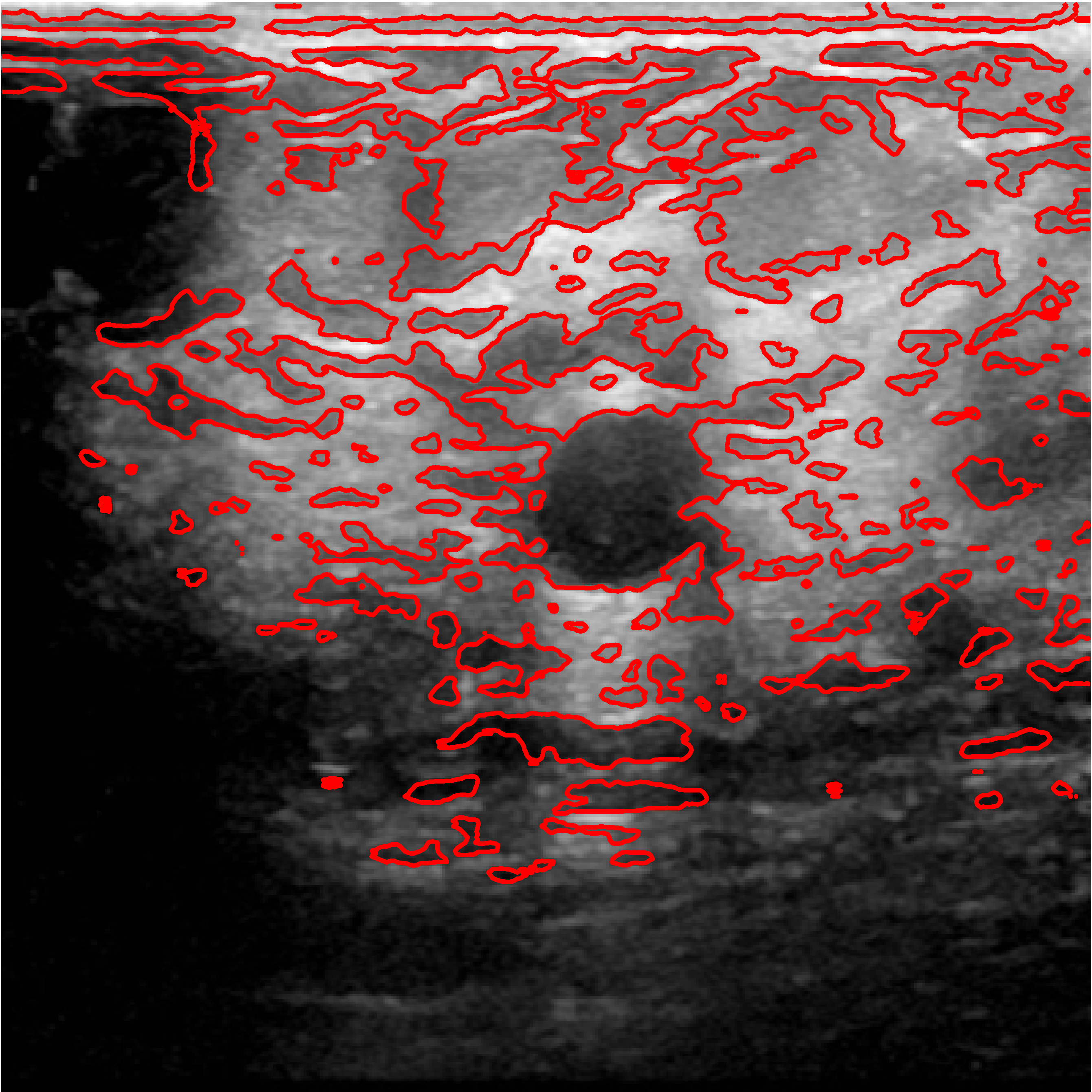}
  \end{subfigure}
  \hfill
  \begin{subfigure}[b]{0.19\linewidth}
      \centering
      \includegraphics[width=\textwidth]{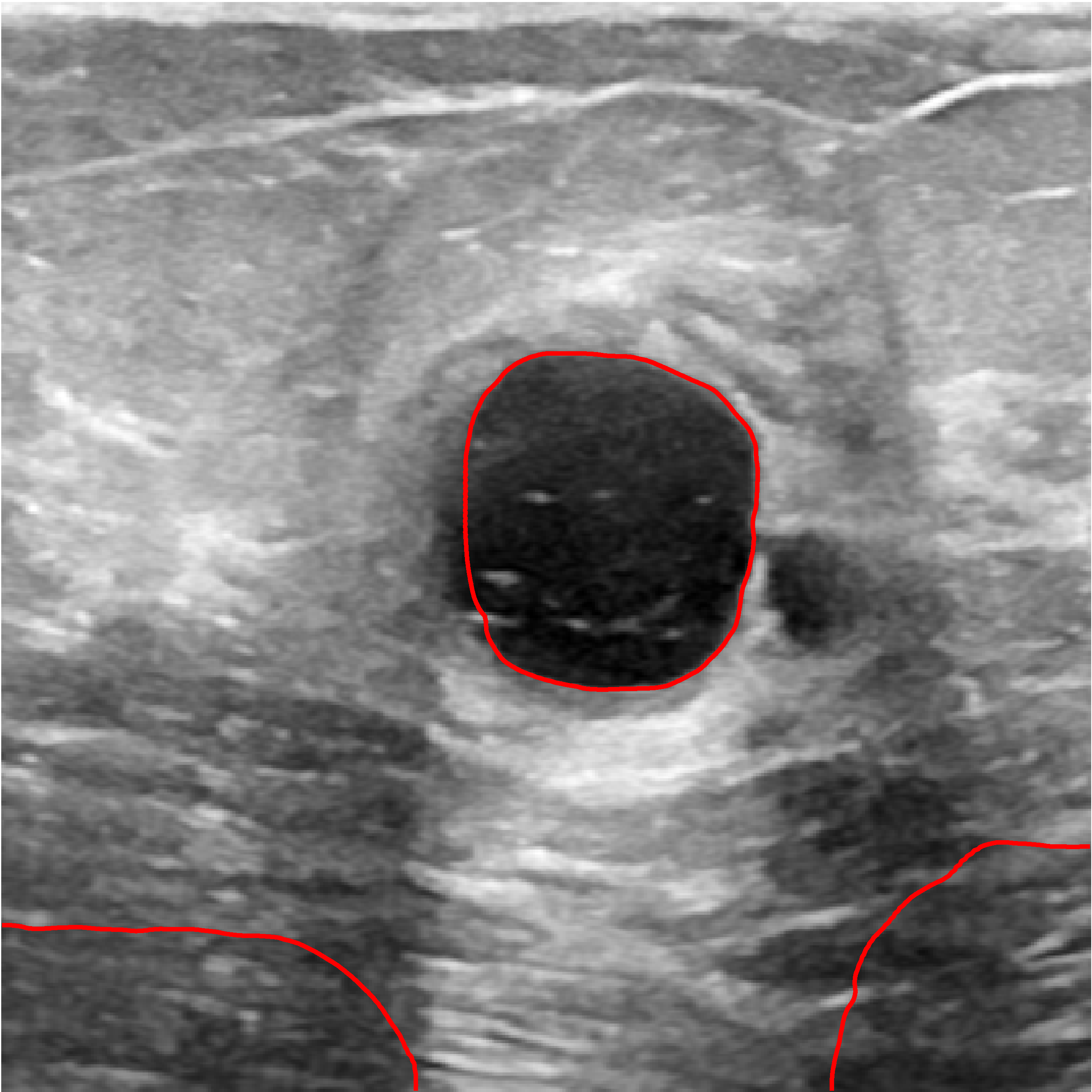}
  \end{subfigure}

  \begin{subfigure}[b]{0.19\linewidth}
      \centering
      \includegraphics[width=\textwidth]{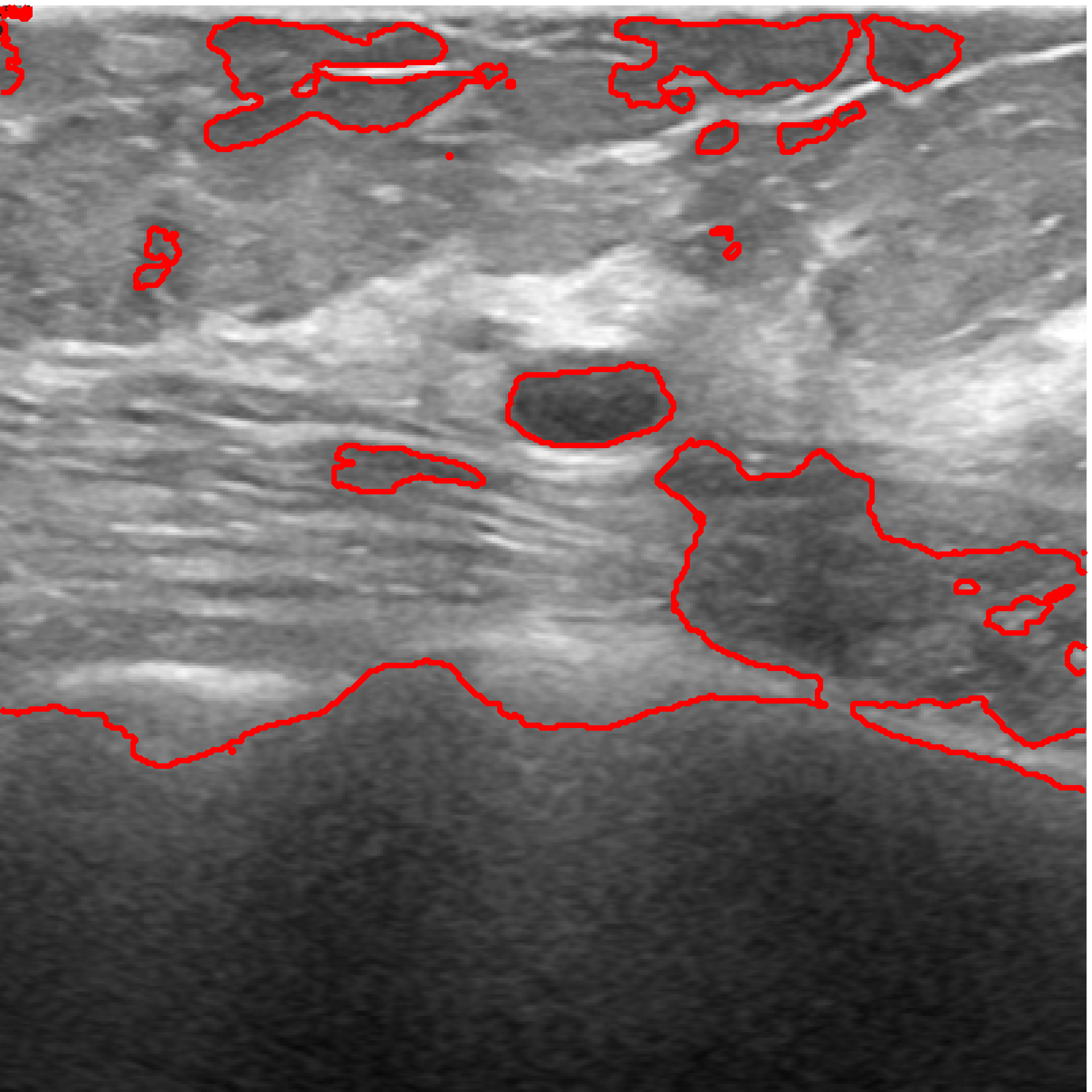}
  \end{subfigure}
  \hfill
  \begin{subfigure}[b]{0.19\linewidth}
      \centering
      \includegraphics[width=\textwidth]{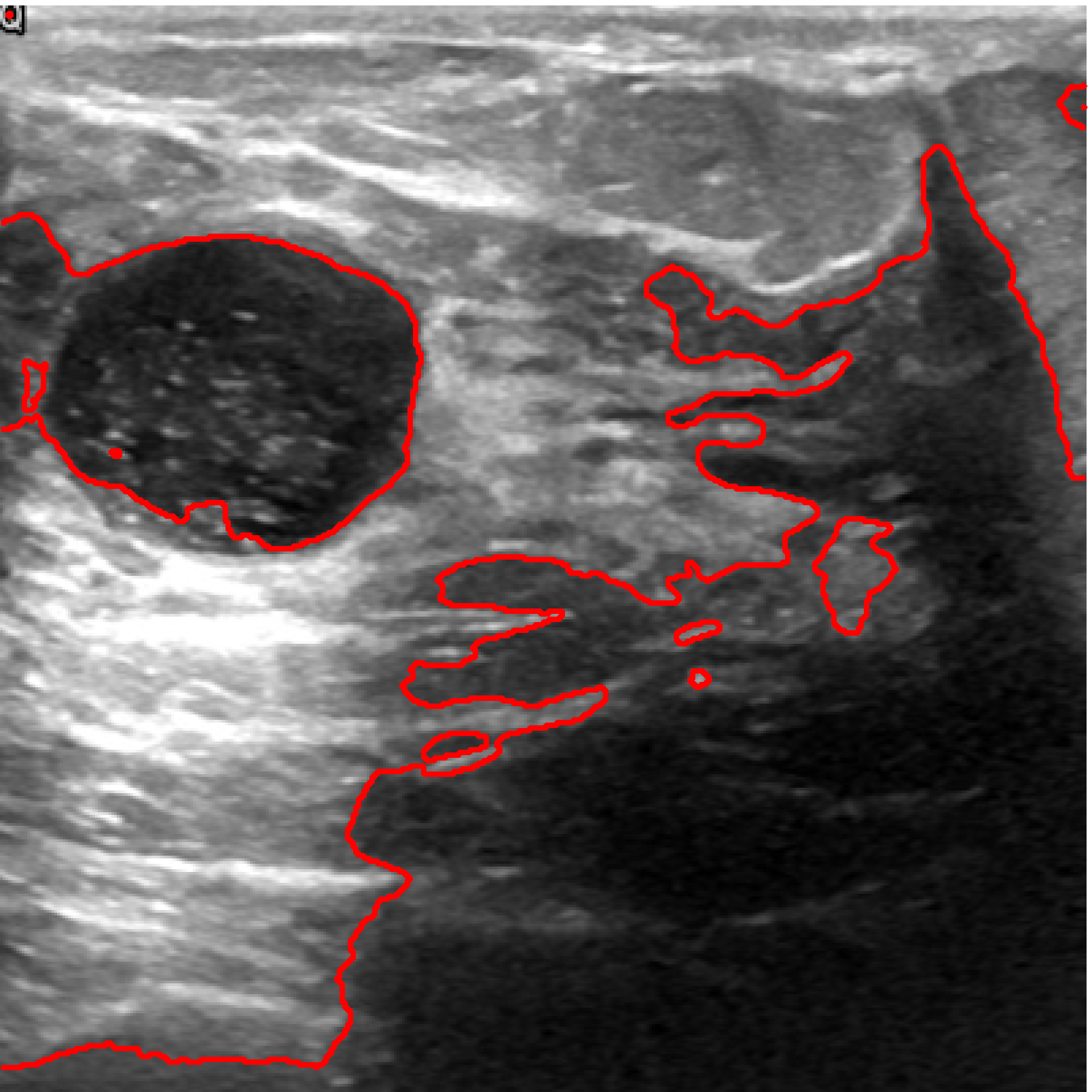}
  \end{subfigure}
  \hfill
  \begin{subfigure}[b]{0.19\linewidth}
      \centering
      \includegraphics[width=\textwidth]{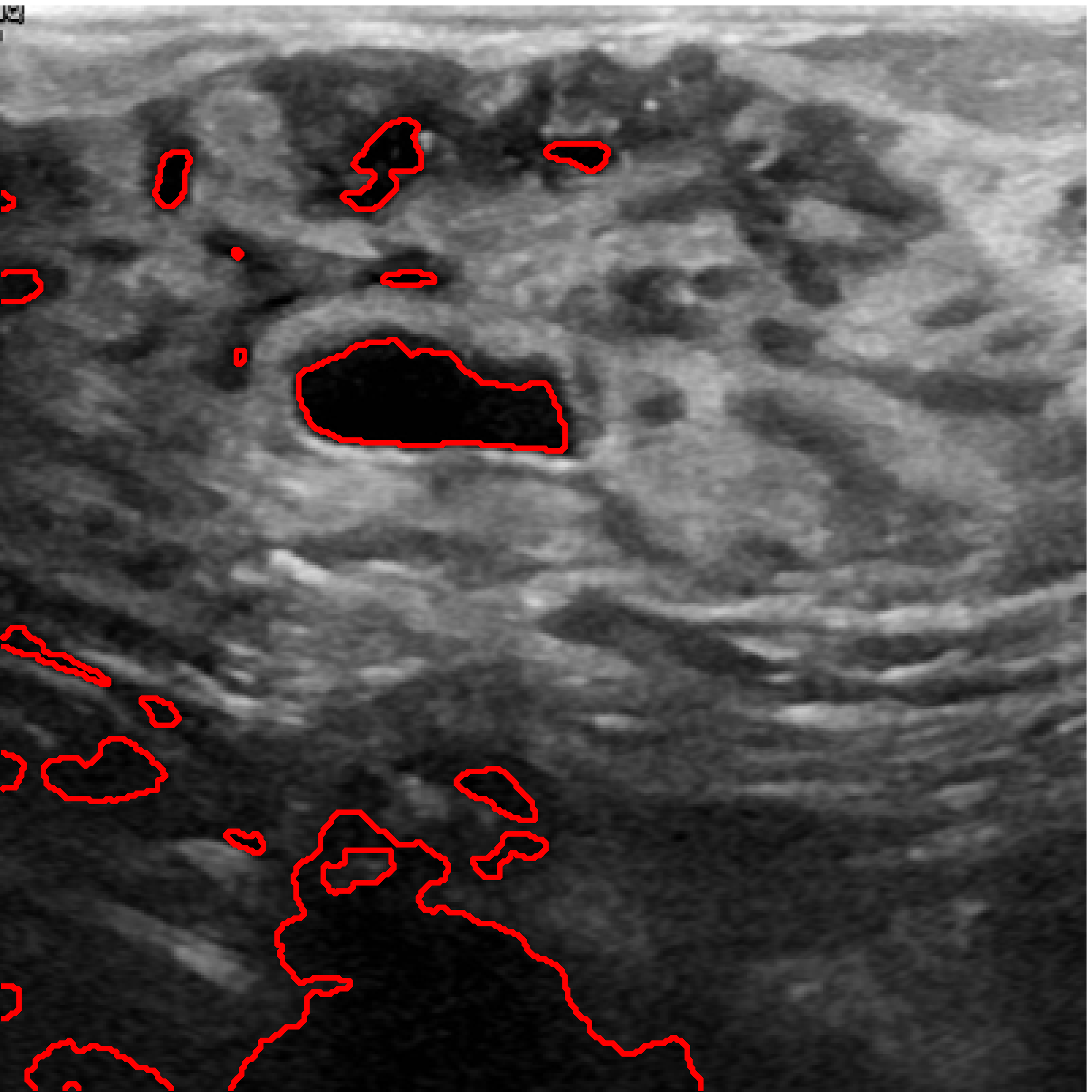}
  \end{subfigure}
  \hfill
  \begin{subfigure}[b]{0.19\linewidth}
      \centering
      \includegraphics[width=\textwidth]{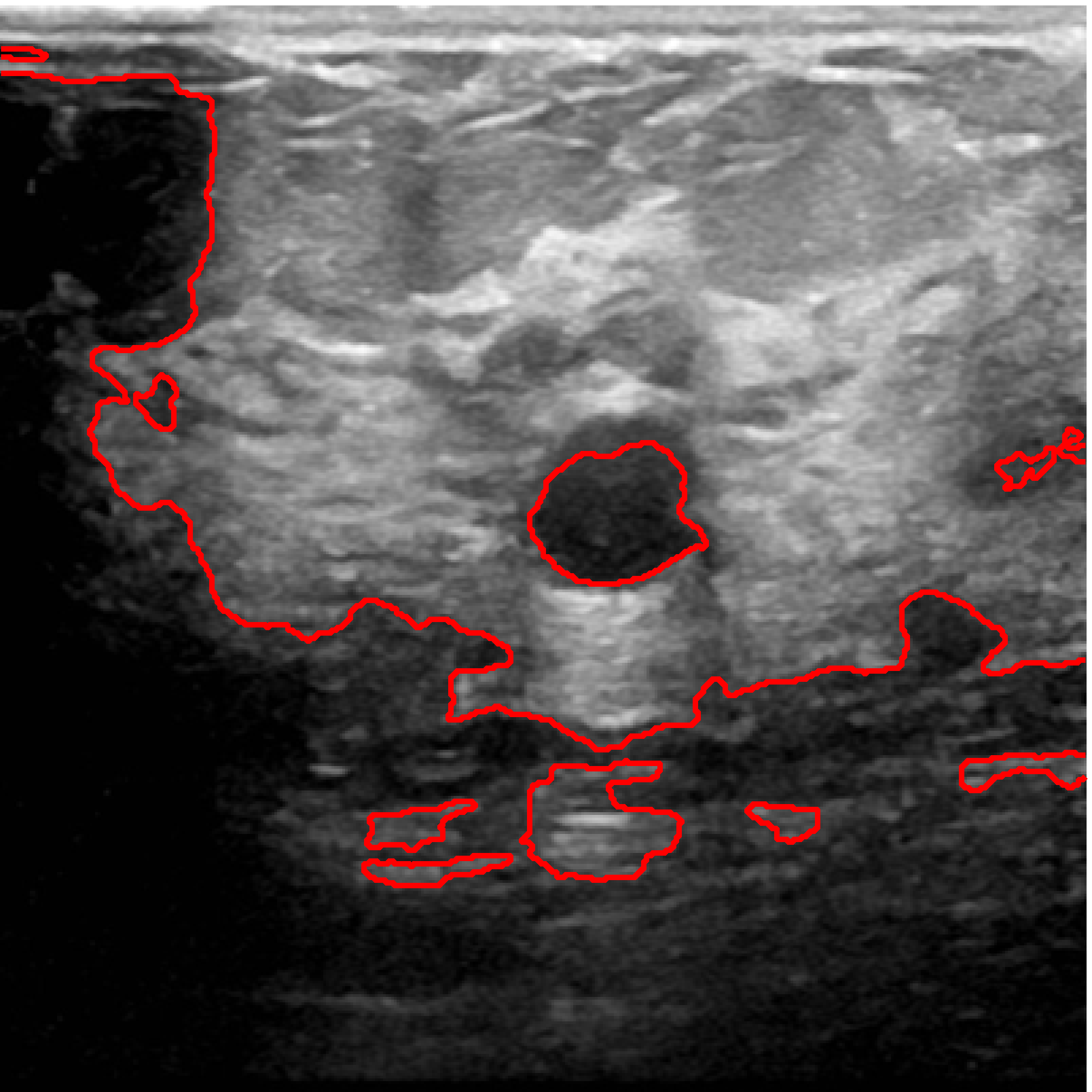}
  \end{subfigure}
  \hfill
  \begin{subfigure}[b]{0.19\linewidth}
      \centering
      \includegraphics[width=\textwidth]{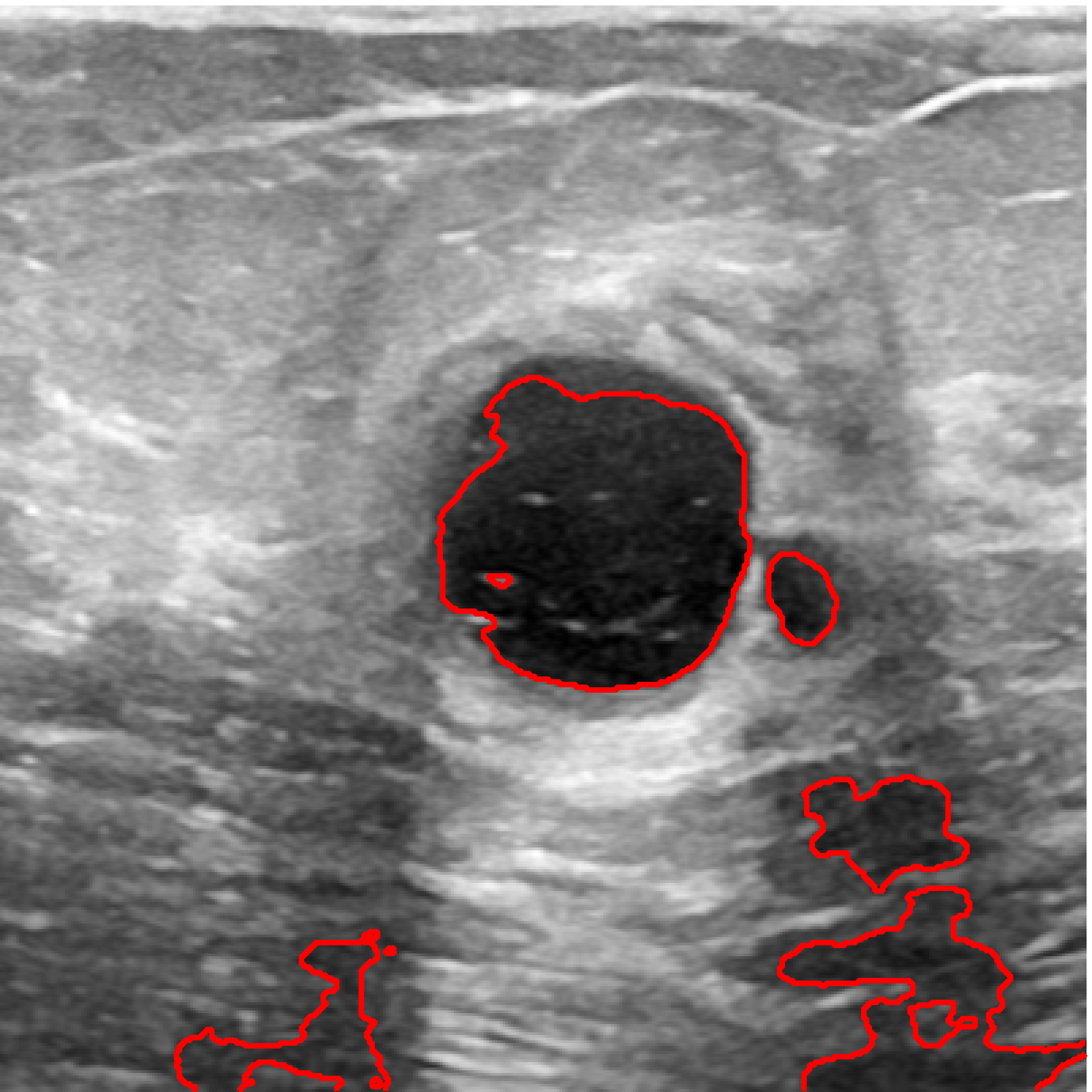}
  \end{subfigure}

  \begin{subfigure}[b]{0.19\linewidth}
      \centering
      \includegraphics[width=\textwidth]{}
  \end{subfigure}
  \hfill
  \begin{subfigure}[b]{0.19\linewidth}
      \centering
      \includegraphics[width=\textwidth]{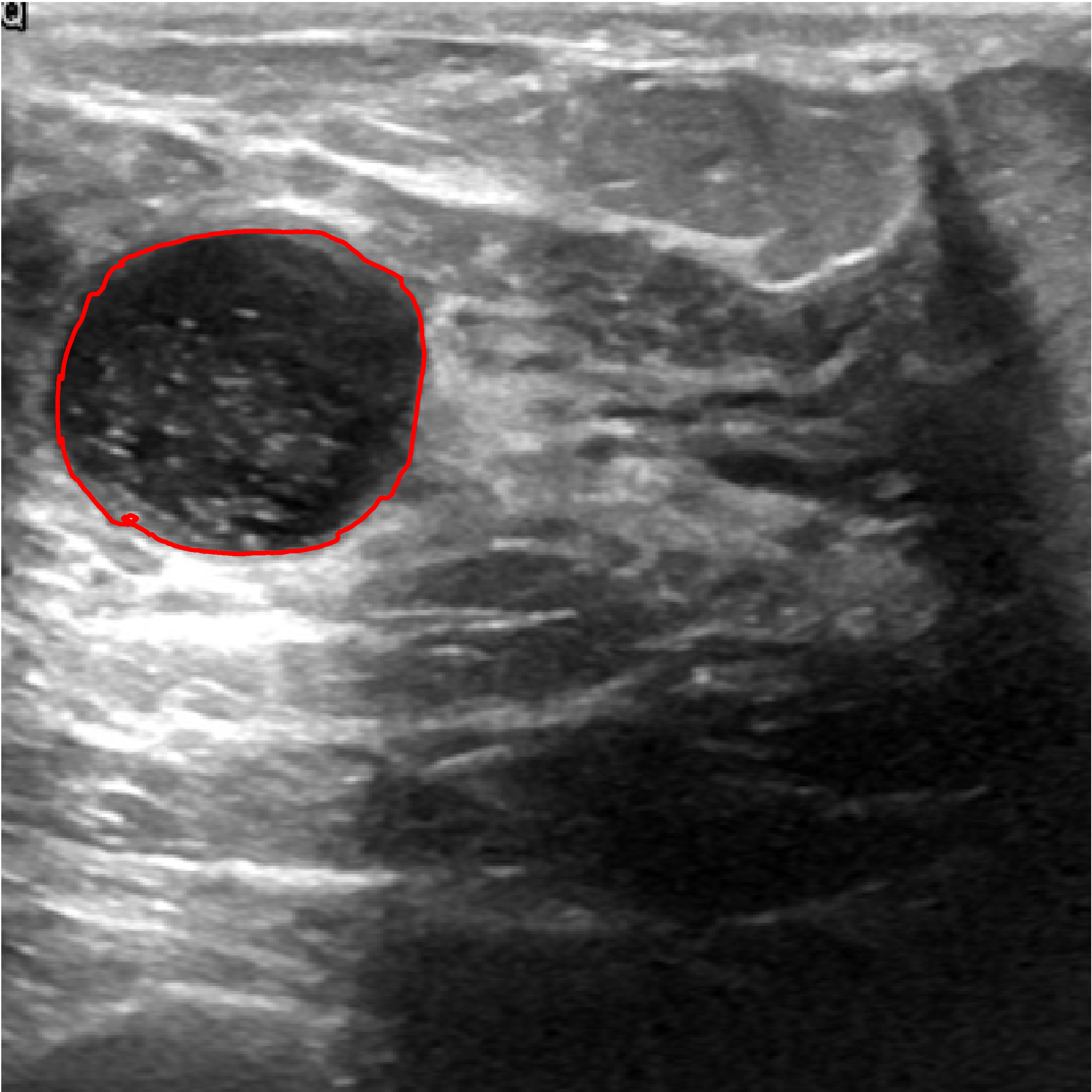}
  \end{subfigure}
  \hfill
  \begin{subfigure}[b]{0.19\linewidth}
      \centering
      \includegraphics[width=\textwidth]{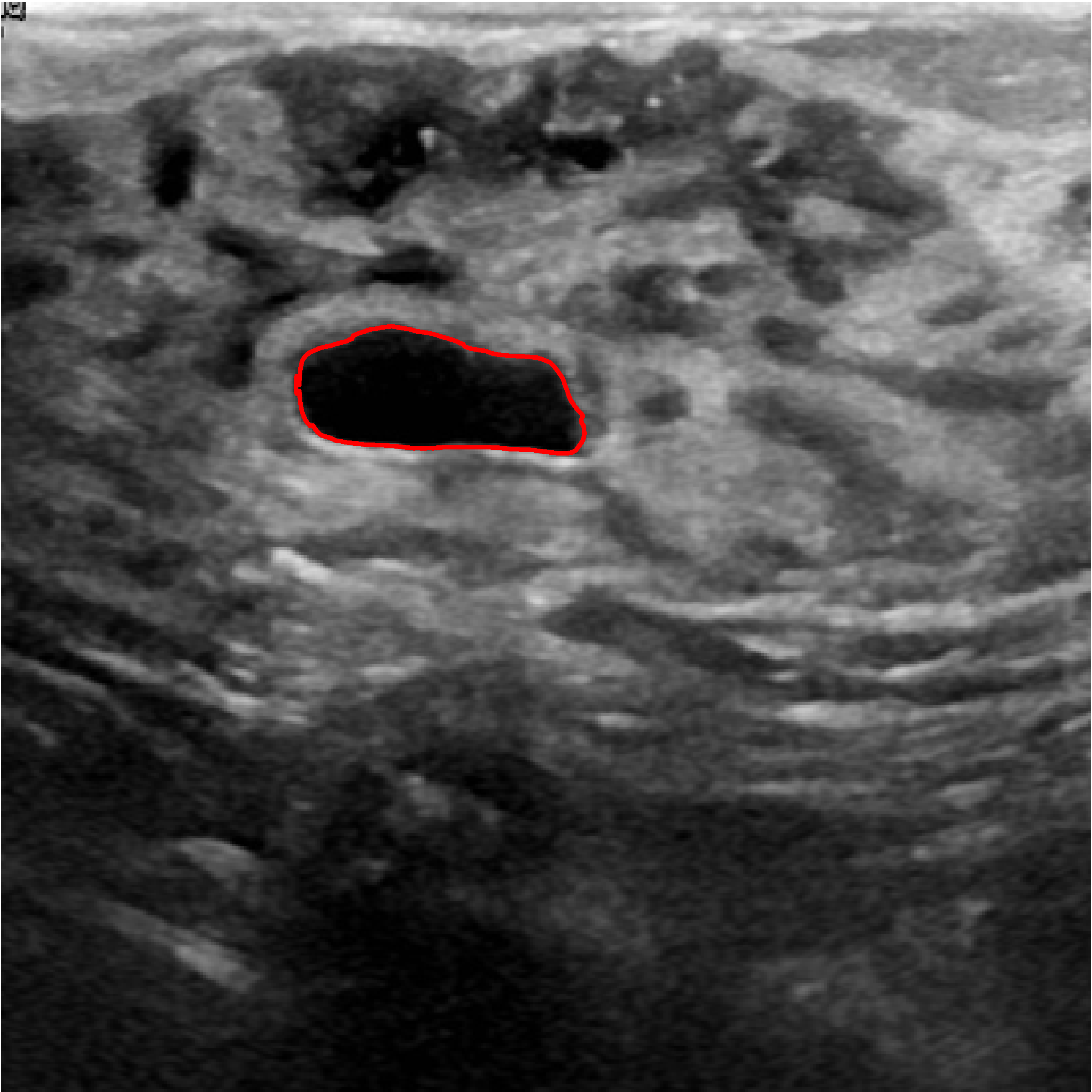}
  \end{subfigure}
  \hfill
  \begin{subfigure}[b]{0.19\linewidth}
      \centering
      \includegraphics[width=\textwidth]{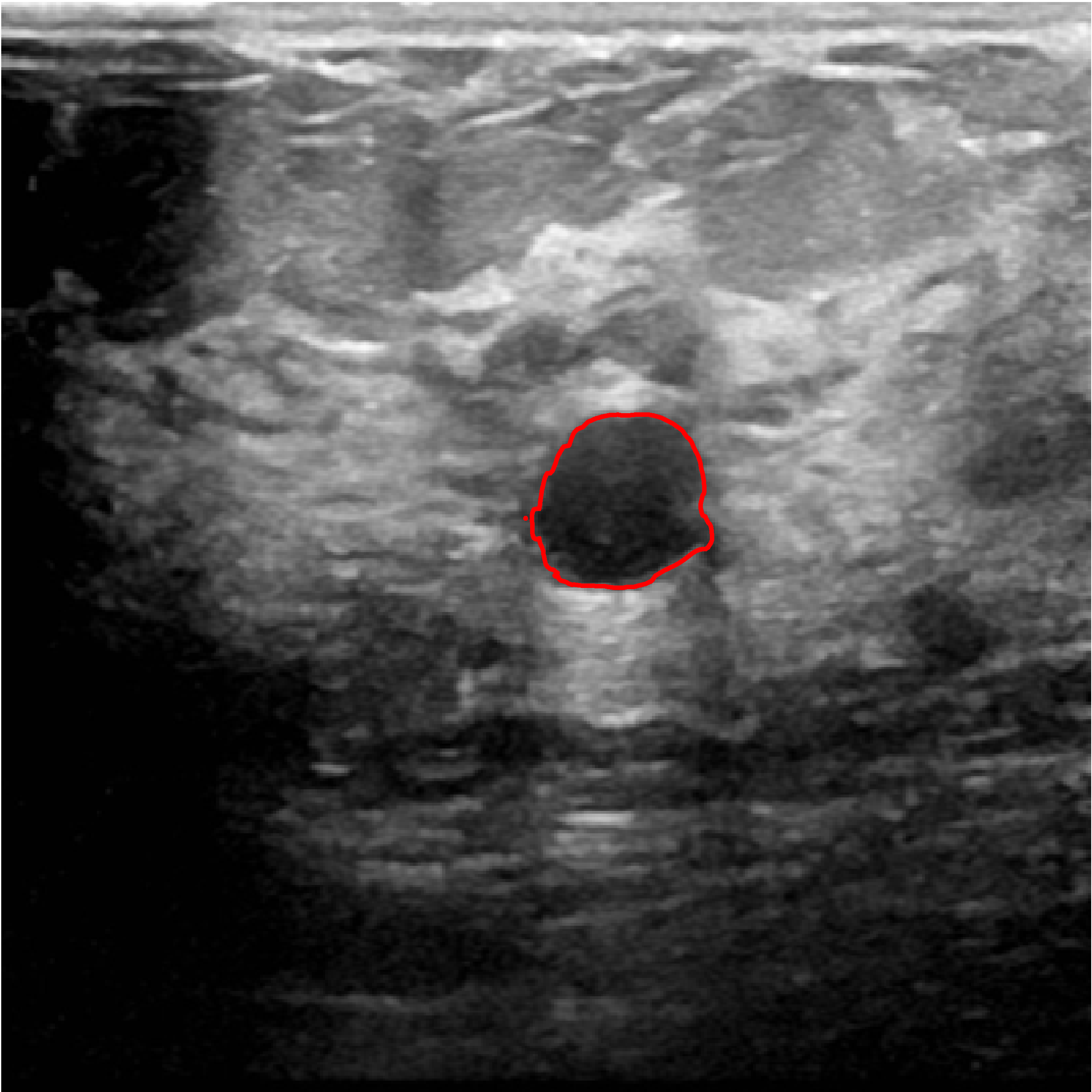}
  \end{subfigure}
  \hfill
  \begin{subfigure}[b]{0.19\linewidth}
      \centering
      \includegraphics[width=\textwidth]{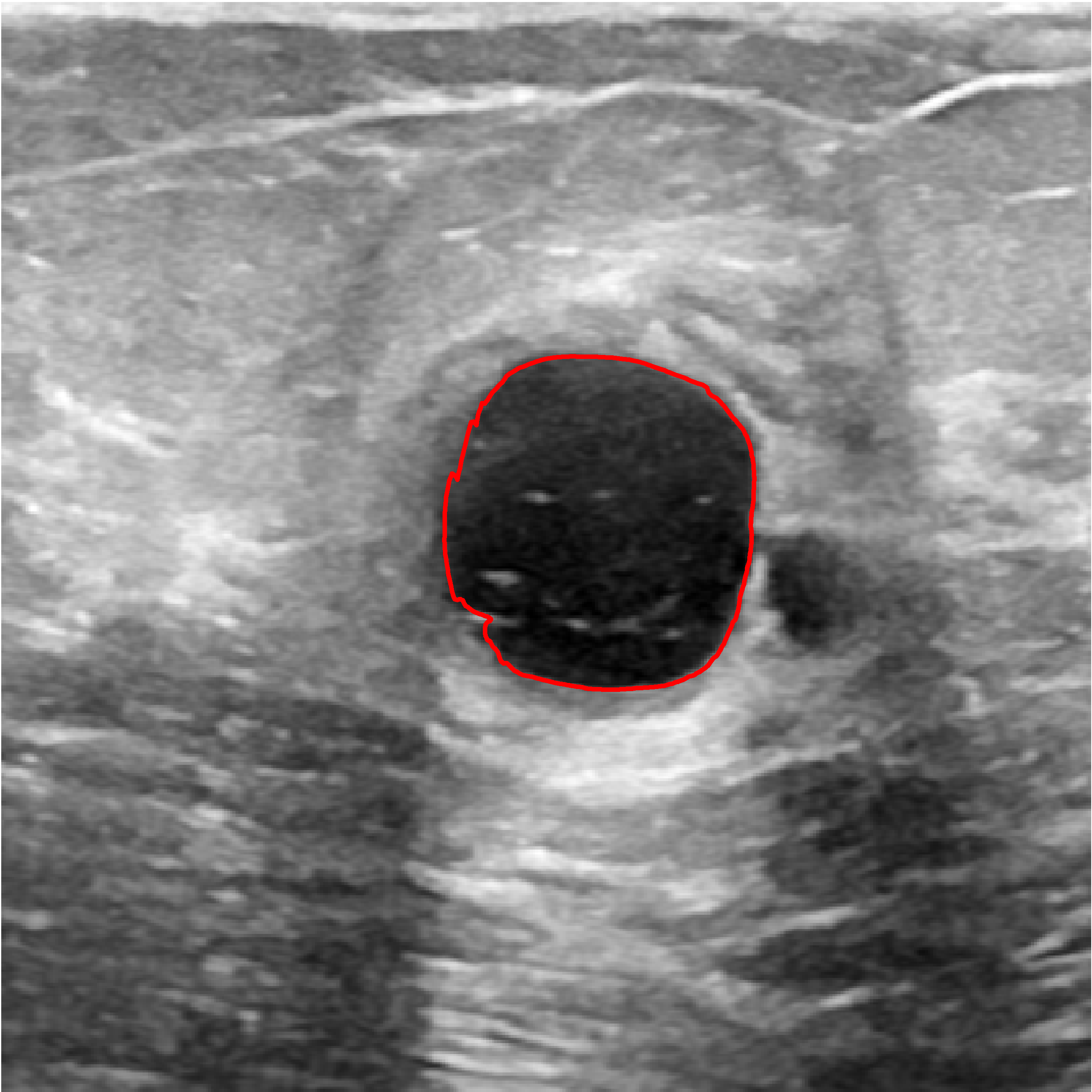}
  \end{subfigure}

  \begin{subfigure}[b]{0.19\linewidth}
      \centering
      \includegraphics[width=\textwidth]{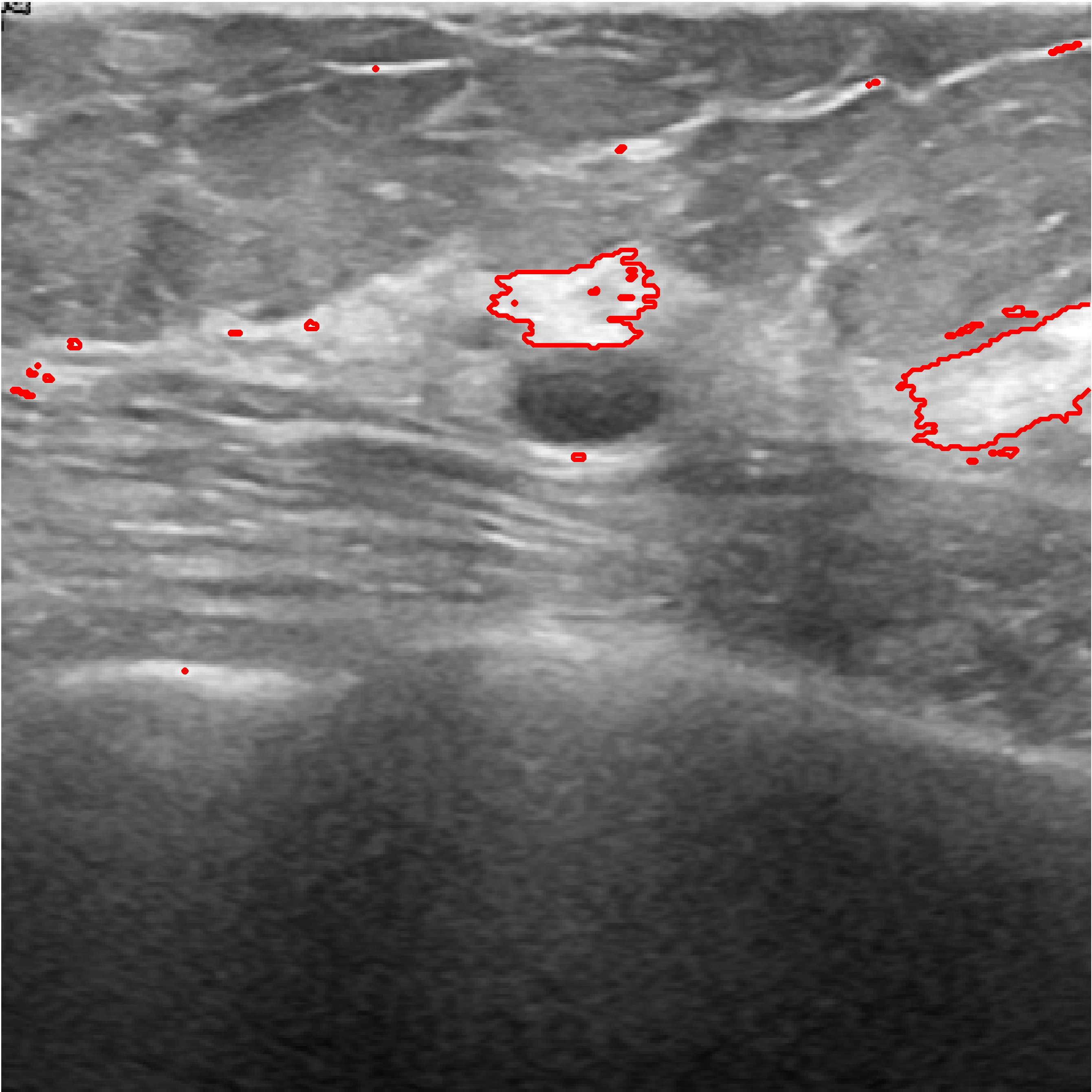}
  \end{subfigure}
  \hfill
  \begin{subfigure}[b]{0.19\linewidth}
      \centering
      \includegraphics[width=\textwidth]{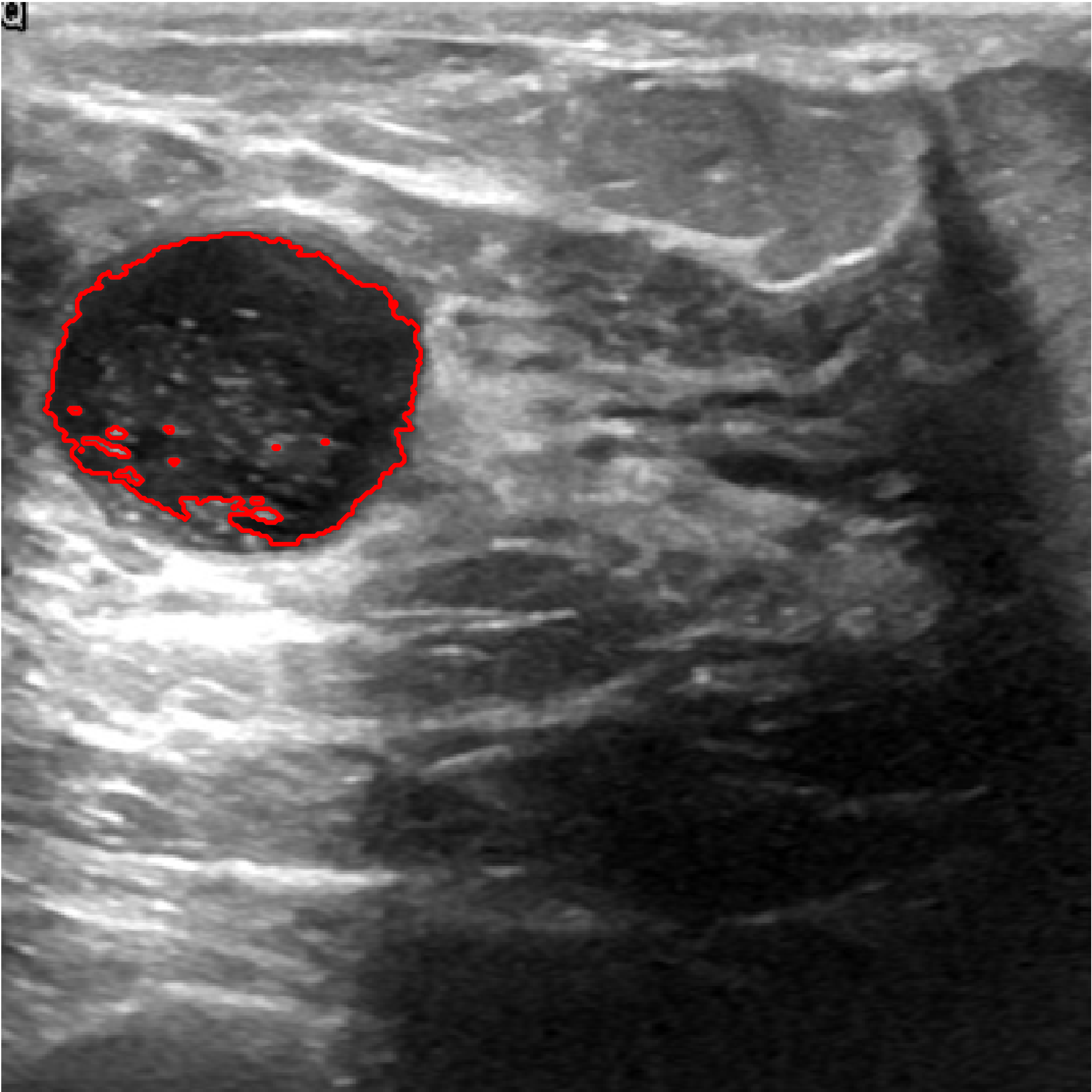}
  \end{subfigure}
  \hfill
  \begin{subfigure}[b]{0.19\linewidth}
      \centering
      \includegraphics[width=\textwidth]{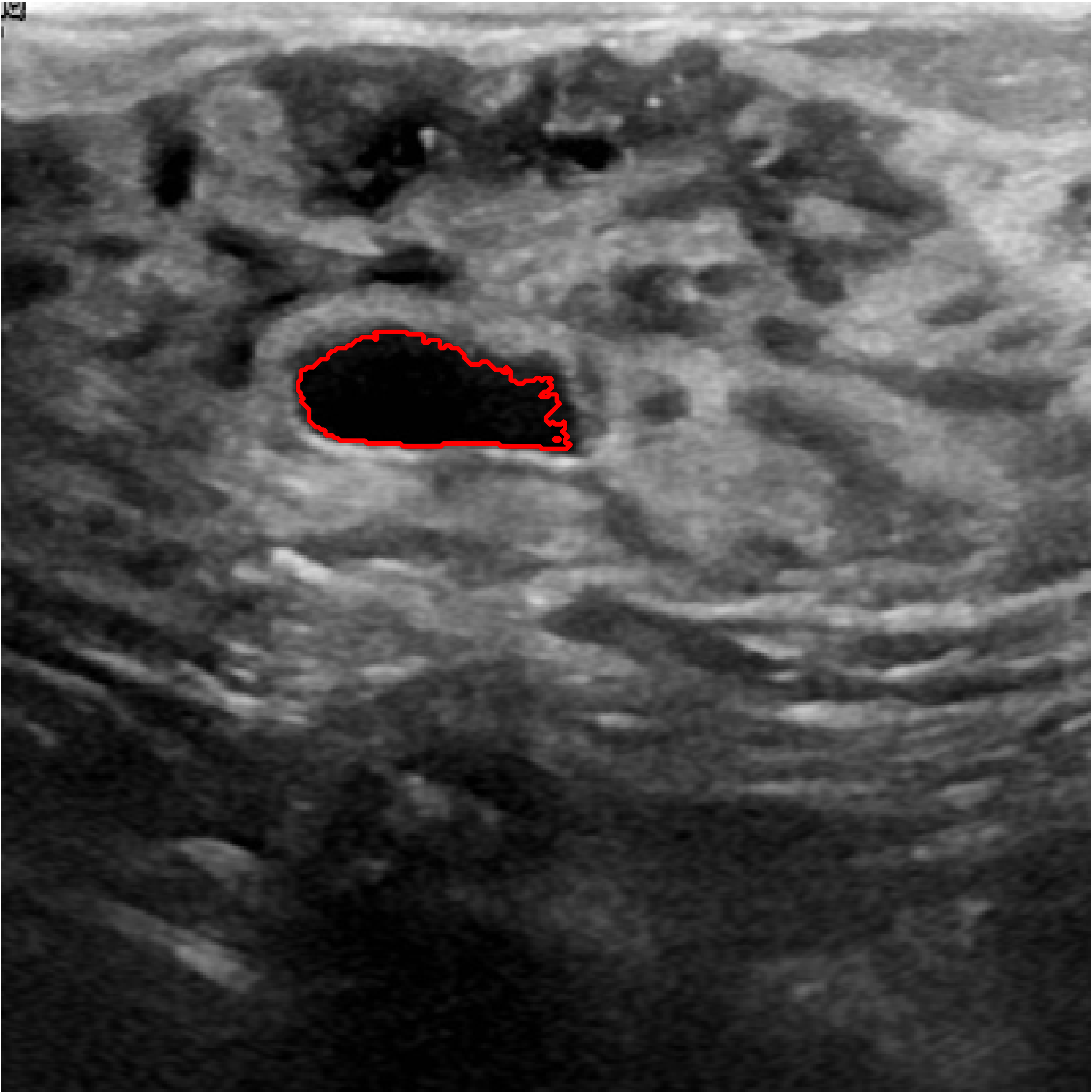}
  \end{subfigure}
  \hfill
  \begin{subfigure}[b]{0.19\linewidth}
      \centering
      \includegraphics[width=\textwidth]{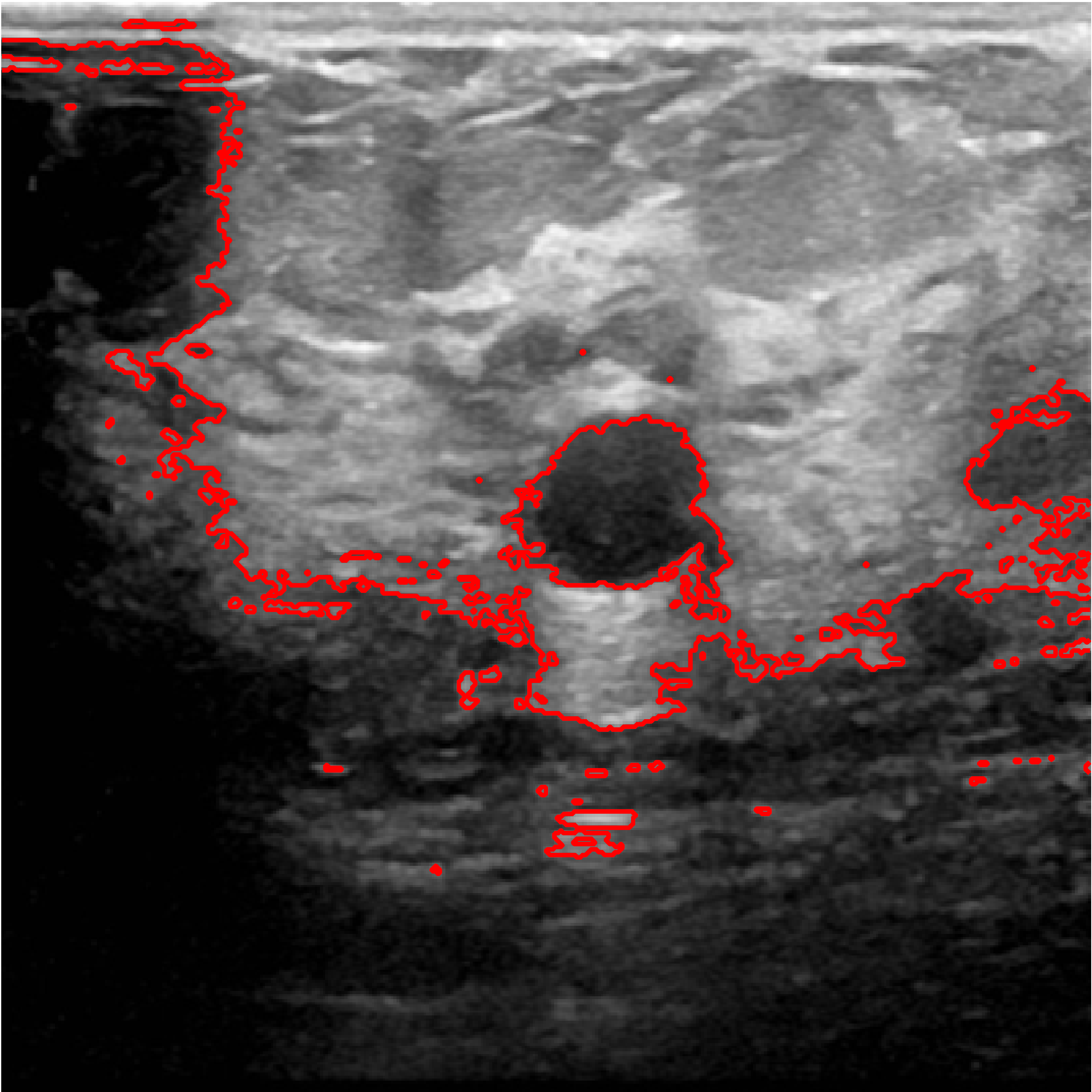}
  \end{subfigure}
  \hfill
  \begin{subfigure}[b]{0.19\linewidth}
      \centering
      \includegraphics[width=\textwidth]{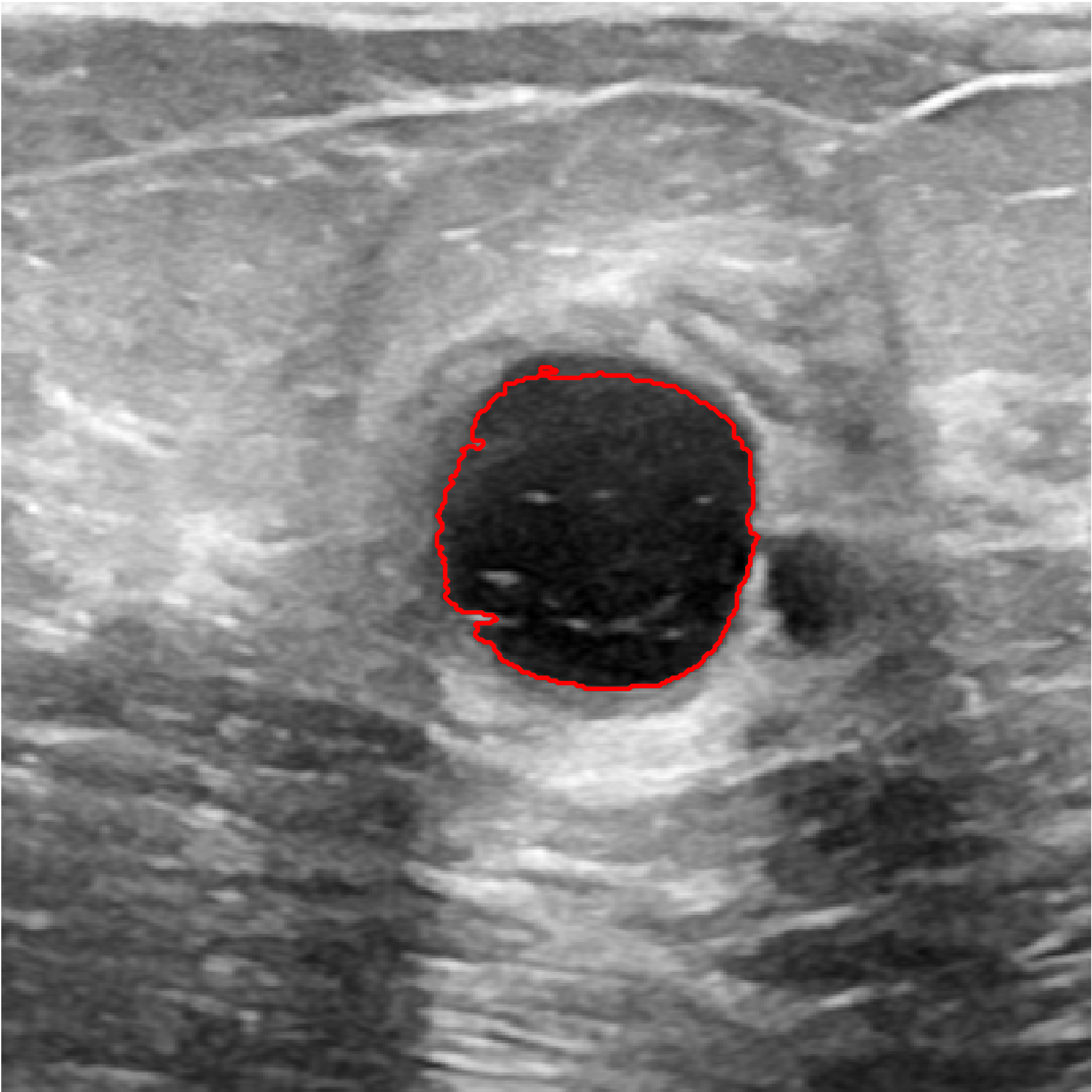}
  \end{subfigure}

  \begin{subfigure}[b]{0.19\linewidth}
      \centering
      \includegraphics[width=\textwidth]{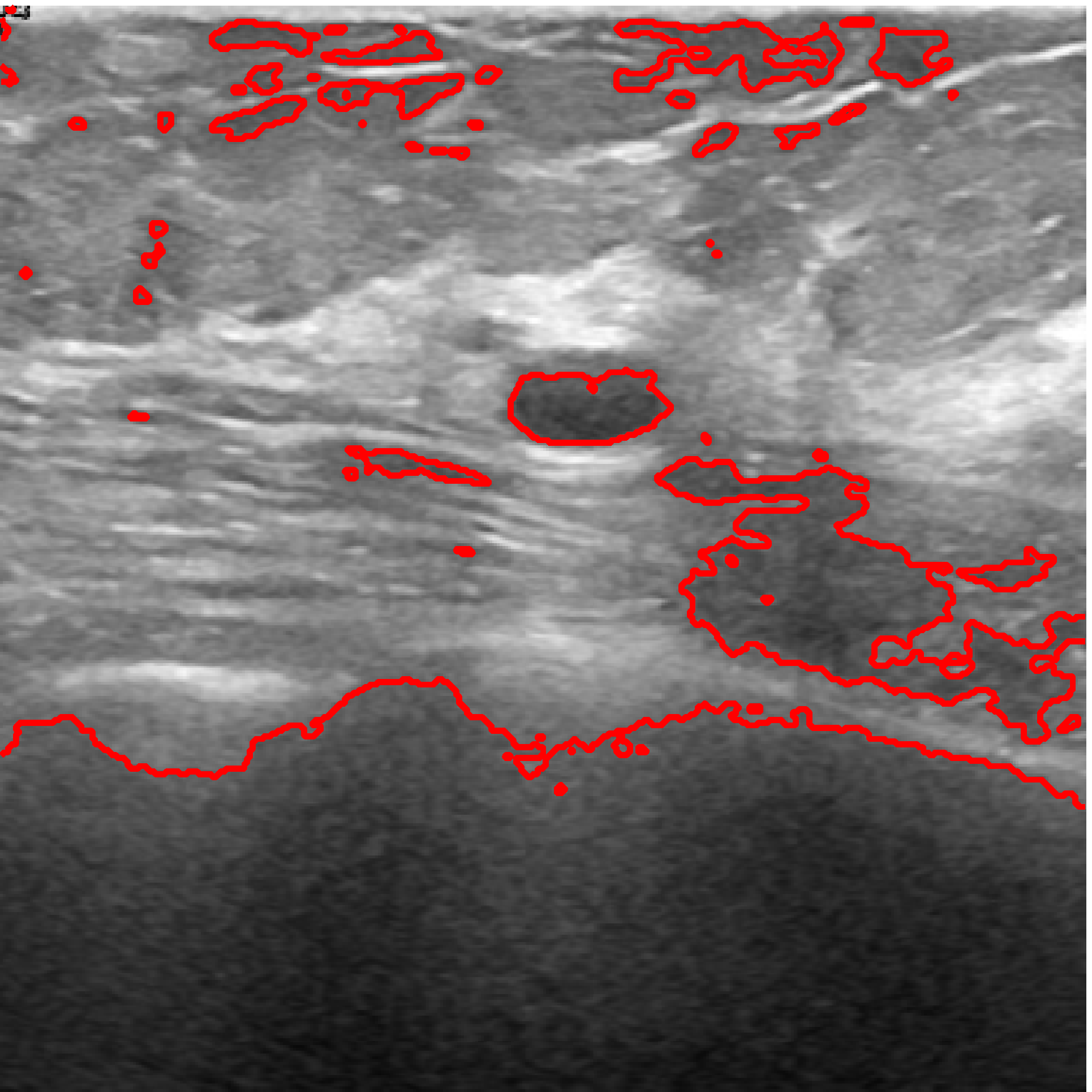}
  \end{subfigure}
  \hfill
  \begin{subfigure}[b]{0.19\linewidth}
      \centering
      \includegraphics[width=\textwidth]{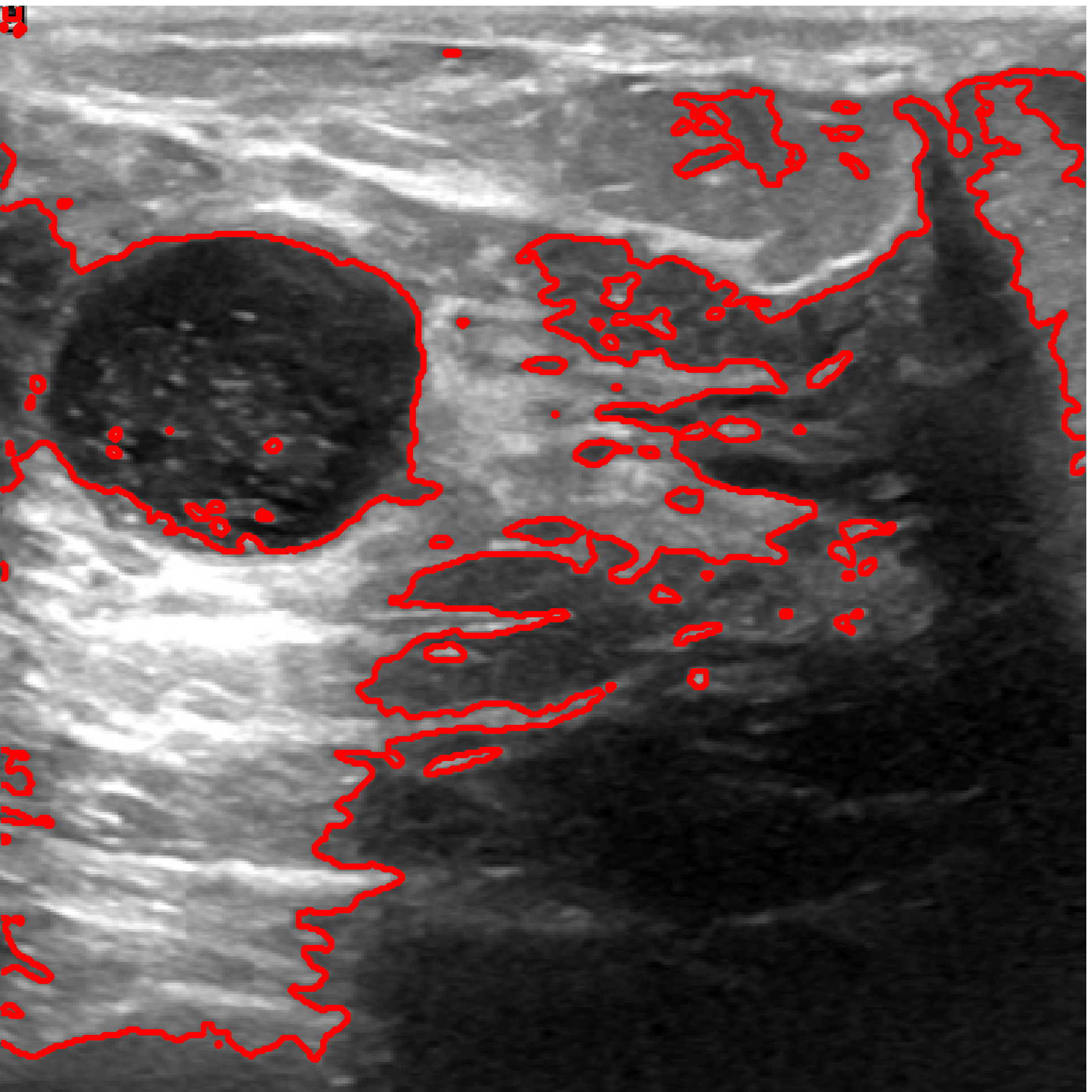}
  \end{subfigure}
  \hfill
  \begin{subfigure}[b]{0.19\linewidth}
      \centering
      \includegraphics[width=\textwidth]{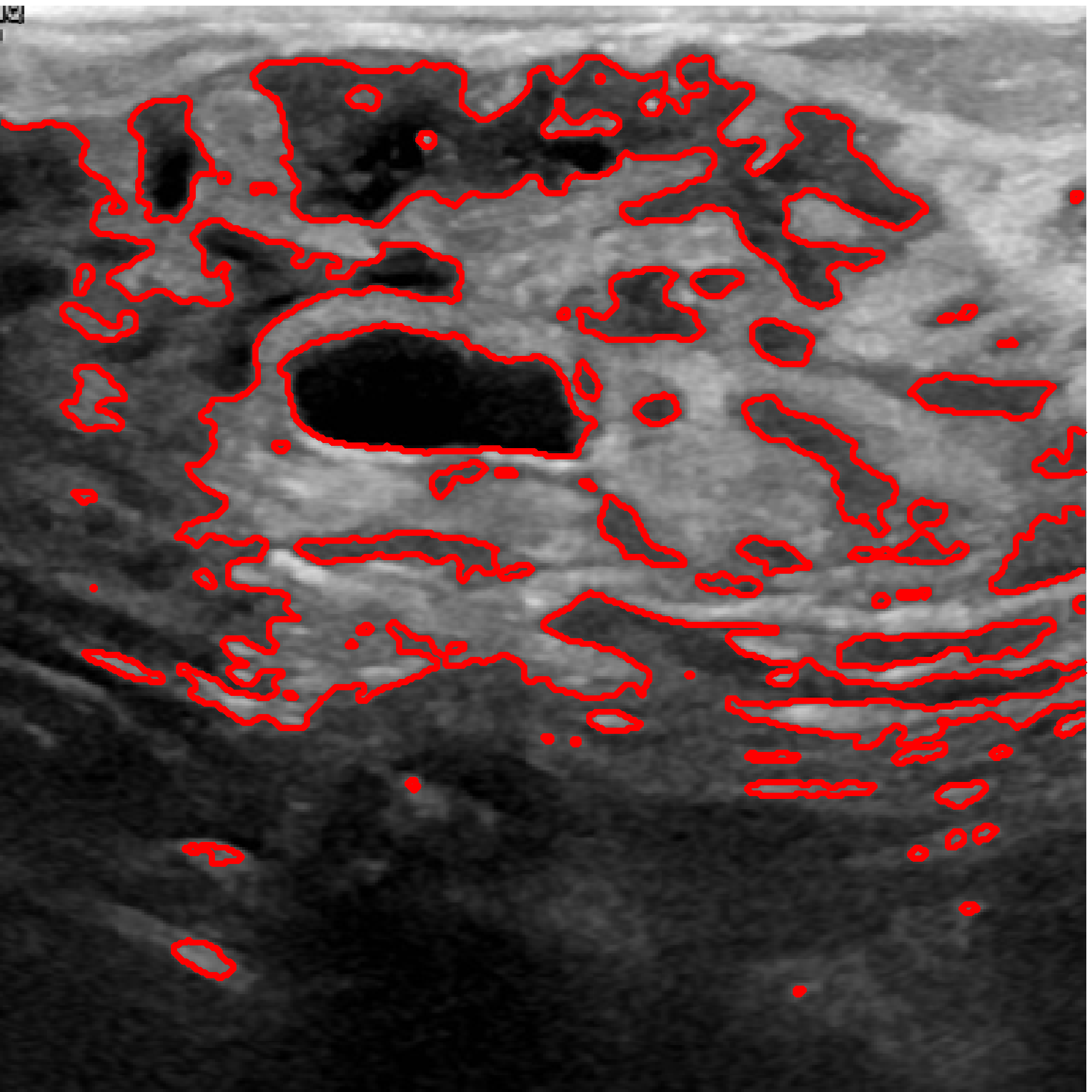}
  \end{subfigure}
  \hfill
  \begin{subfigure}[b]{0.19\linewidth}
      \centering
      \includegraphics[width=\textwidth]{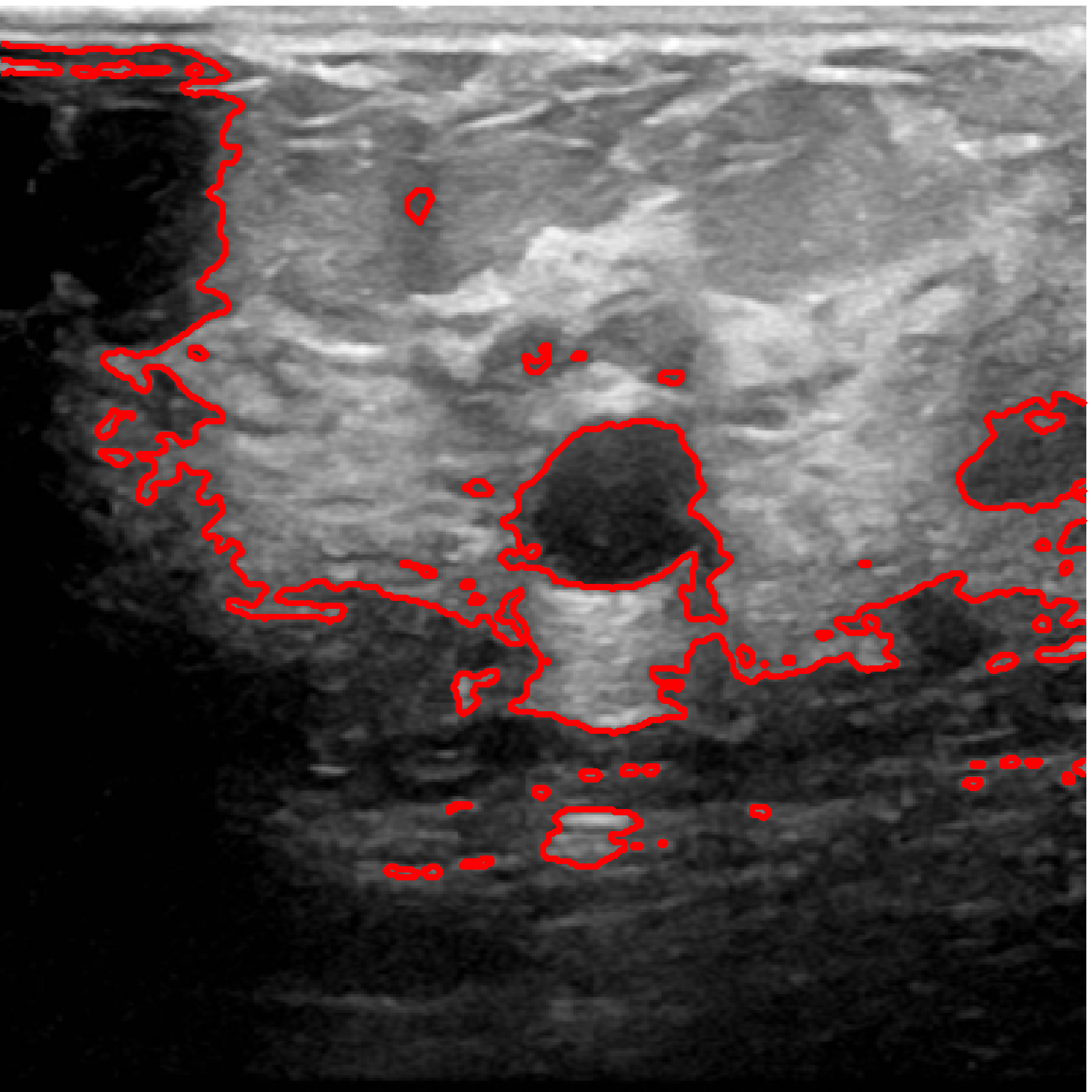}
  \end{subfigure}
  \hfill
  \begin{subfigure}[b]{0.19\linewidth}
      \centering
      \includegraphics[width=\textwidth]{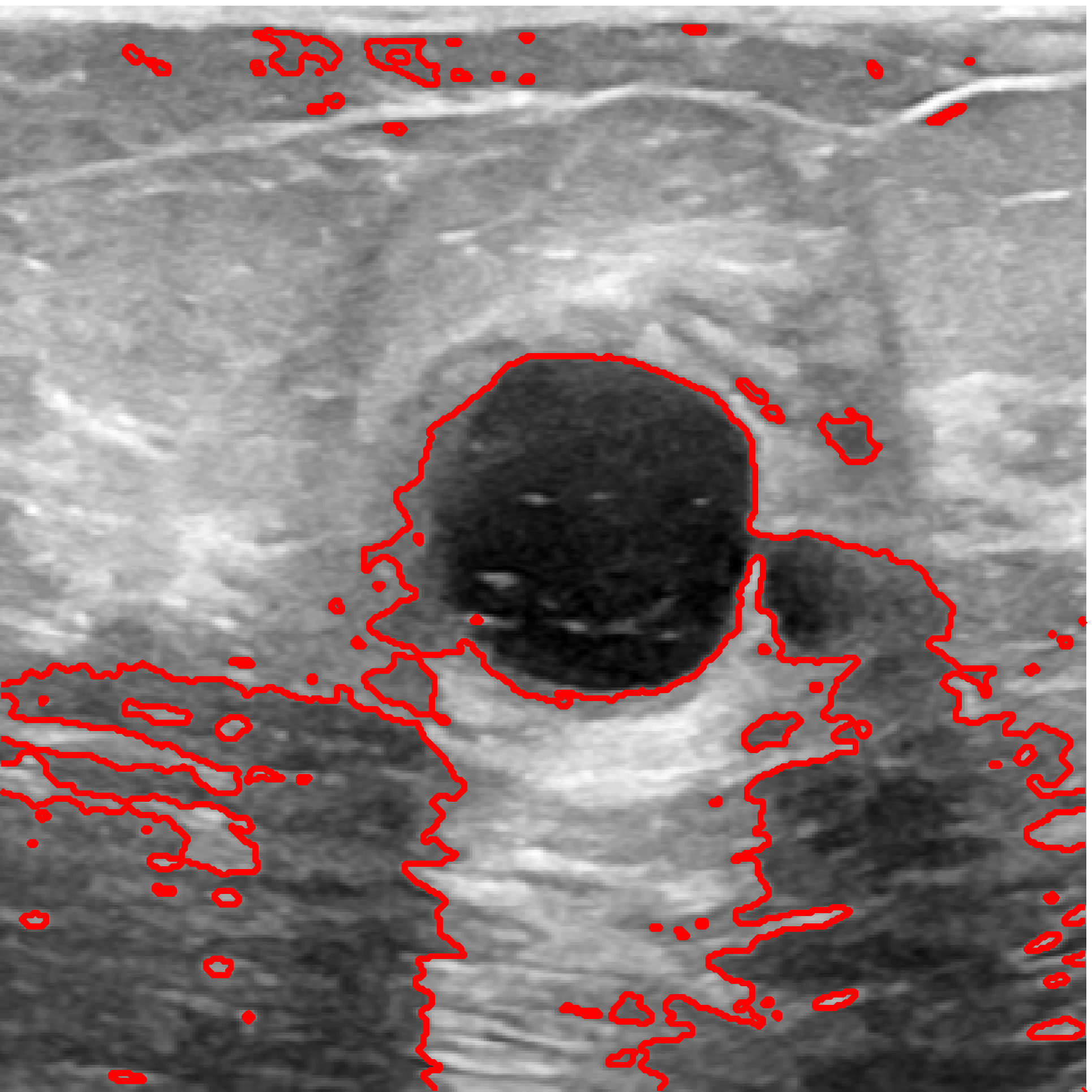}
  \end{subfigure}

  \begin{subfigure}[b]{0.19\linewidth}
      \centering
      \includegraphics[width=\textwidth]{}
  \end{subfigure}
  \hfill
  \begin{subfigure}[b]{0.19\linewidth}
      \centering
      \includegraphics[width=\textwidth]{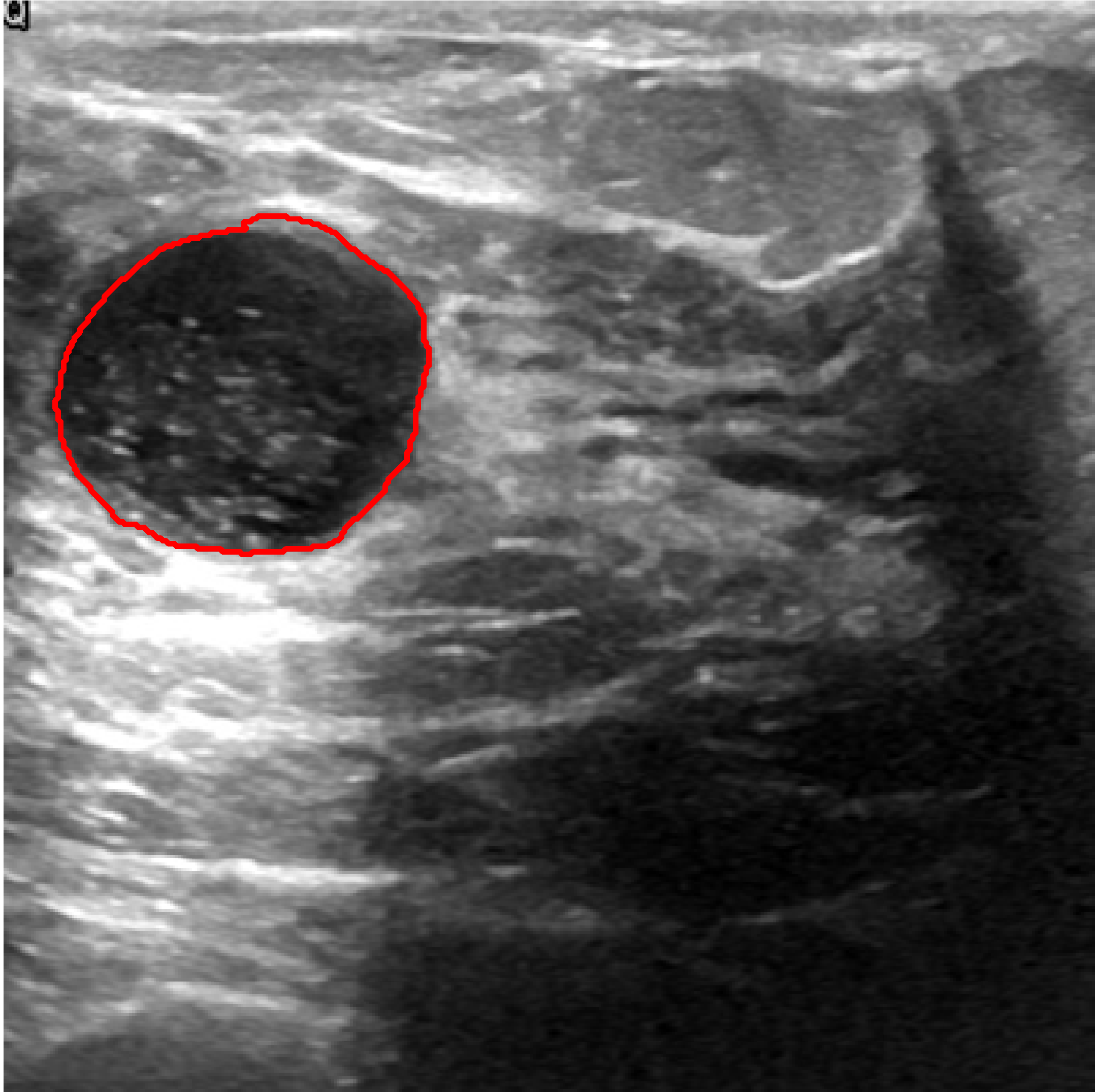}
  \end{subfigure}
  \hfill
  \begin{subfigure}[b]{0.19\linewidth}
      \centering
      \includegraphics[width=\textwidth]{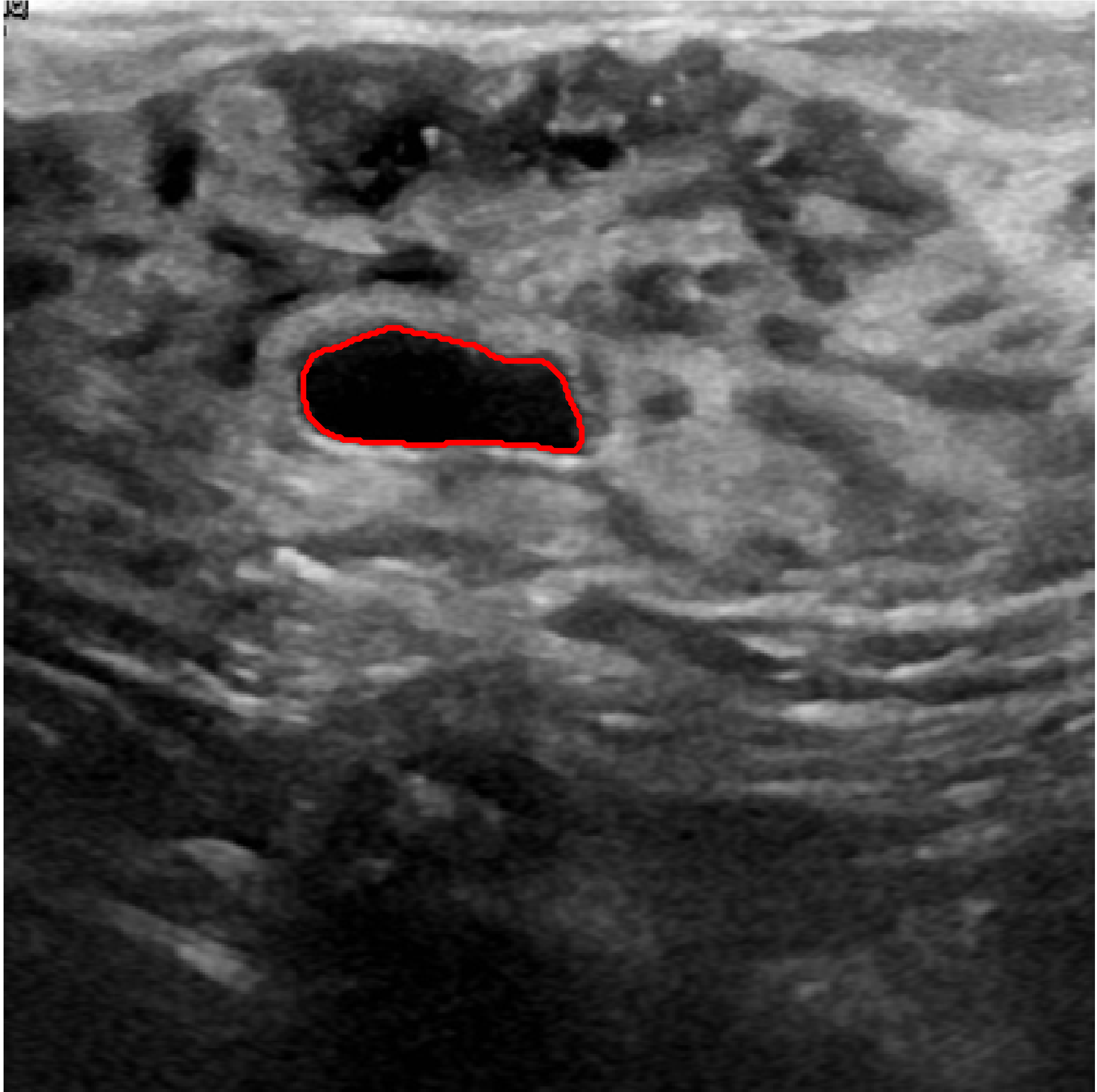}
  \end{subfigure}
  \hfill
  \begin{subfigure}[b]{0.19\linewidth}
      \centering
      \includegraphics[width=\textwidth]{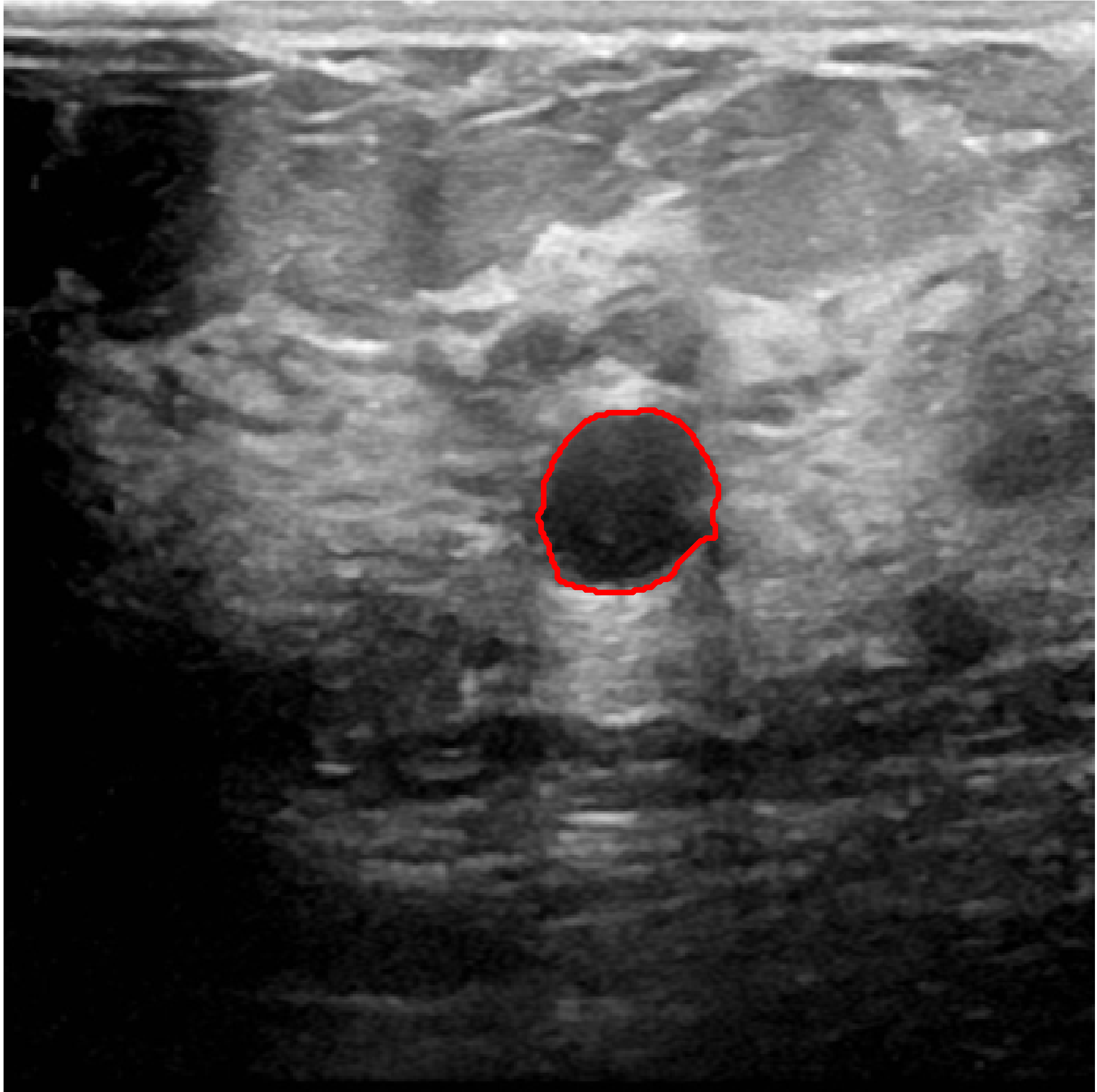}
  \end{subfigure}
  \hfill
  \begin{subfigure}[b]{0.19\linewidth}
      \centering
      \includegraphics[width=\textwidth]{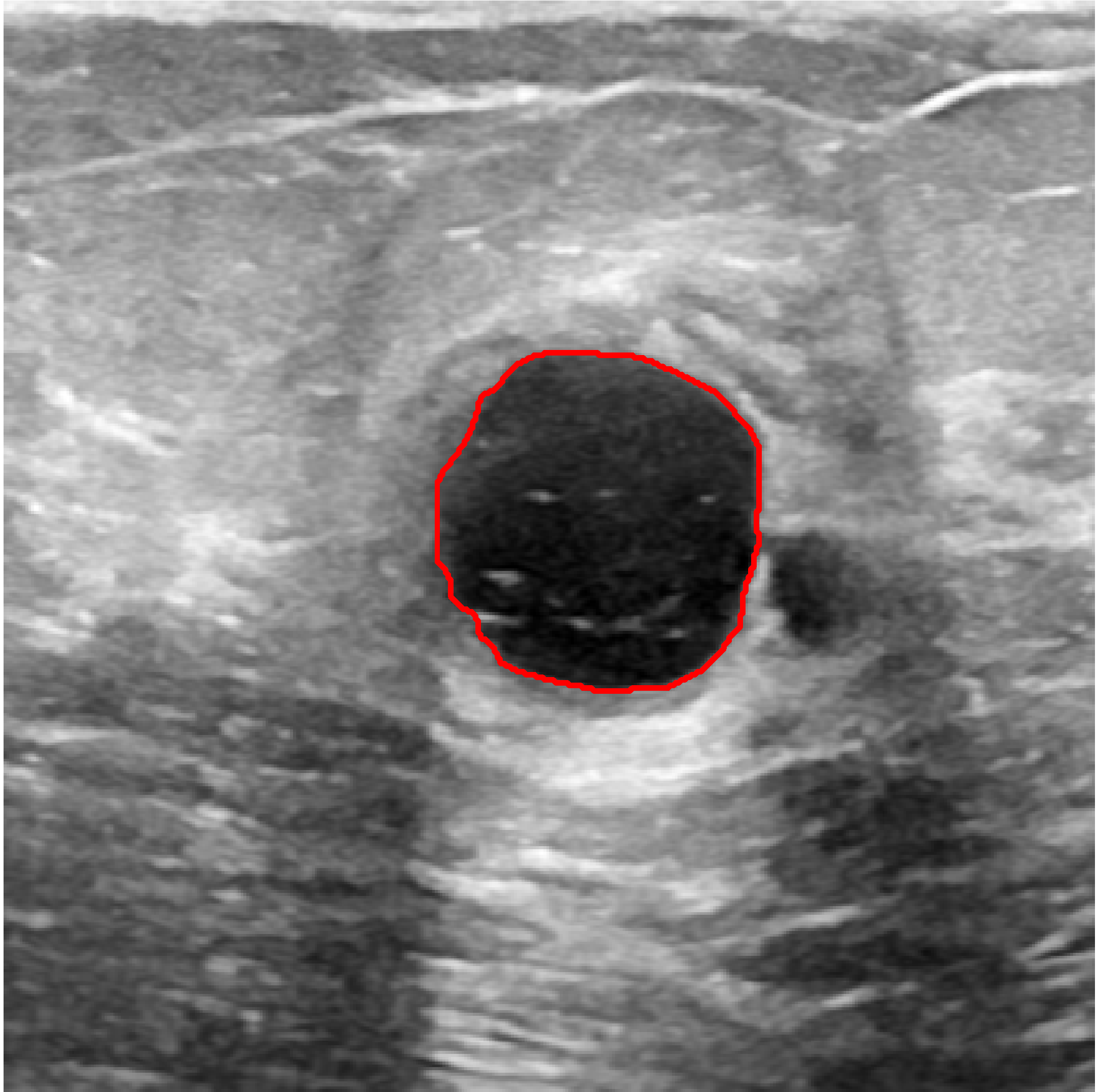}
  \end{subfigure}

  \begin{subfigure}[b]{0.19\linewidth}
      \centering
      \includegraphics[width=\textwidth]{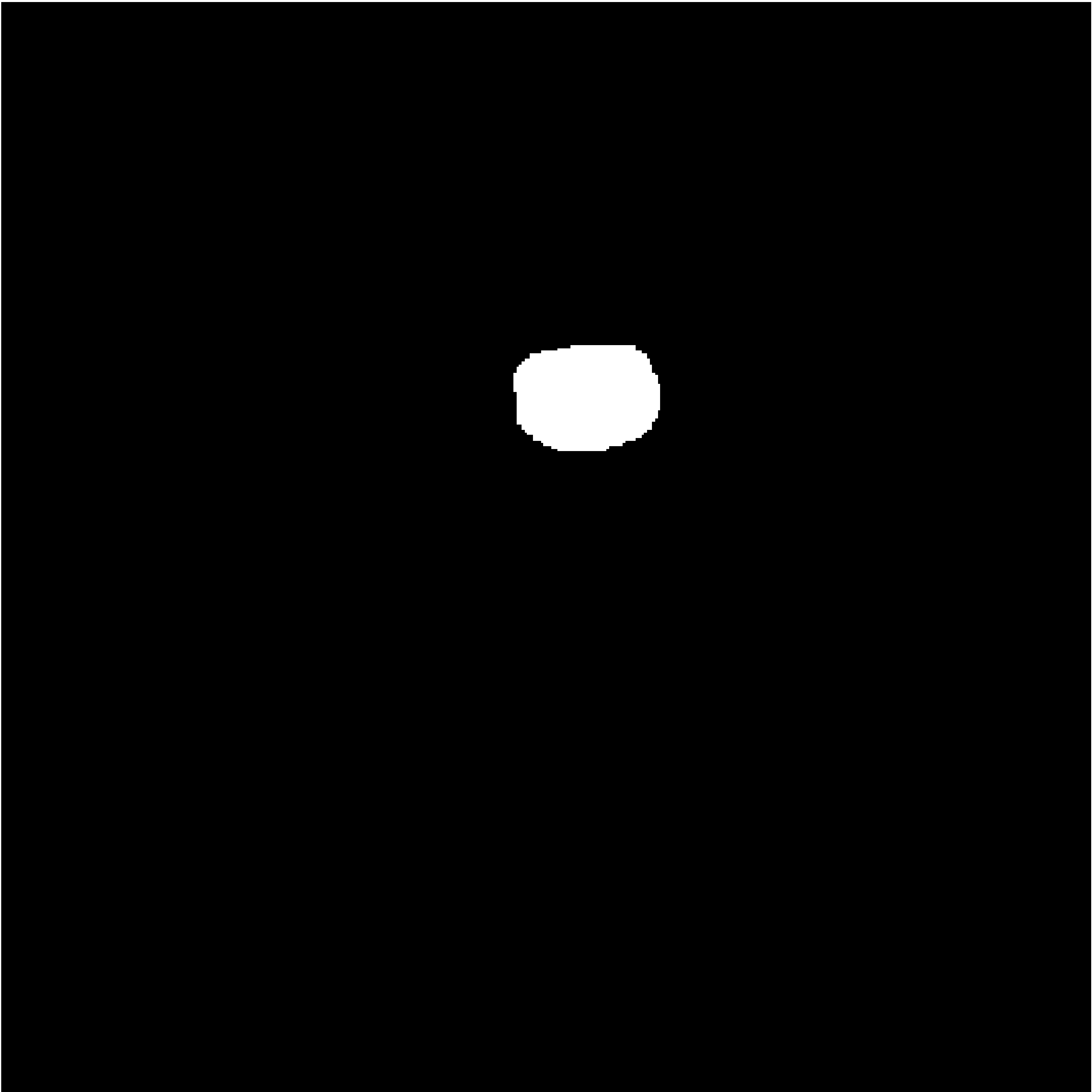}
      \caption{}
      \label{fig:u-8}
  \end{subfigure}
  \hfill
  \begin{subfigure}[b]{0.19\linewidth}
      \centering
      \includegraphics[width=\textwidth]{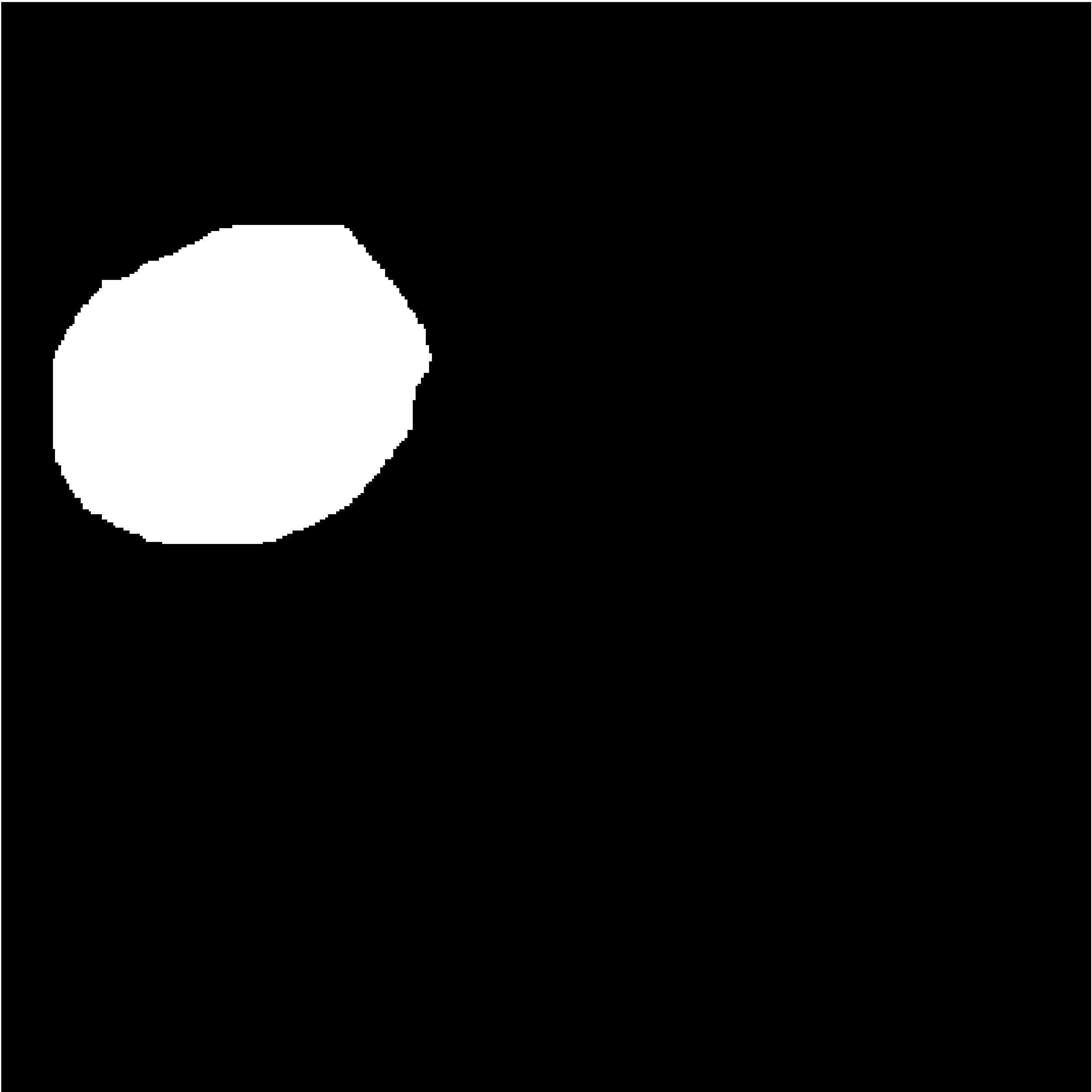}
      \caption{}
      \label{fig:u-72}
  \end{subfigure}
  \hfill
  \begin{subfigure}[b]{0.19\linewidth}
      \centering
      \includegraphics[width=\textwidth]{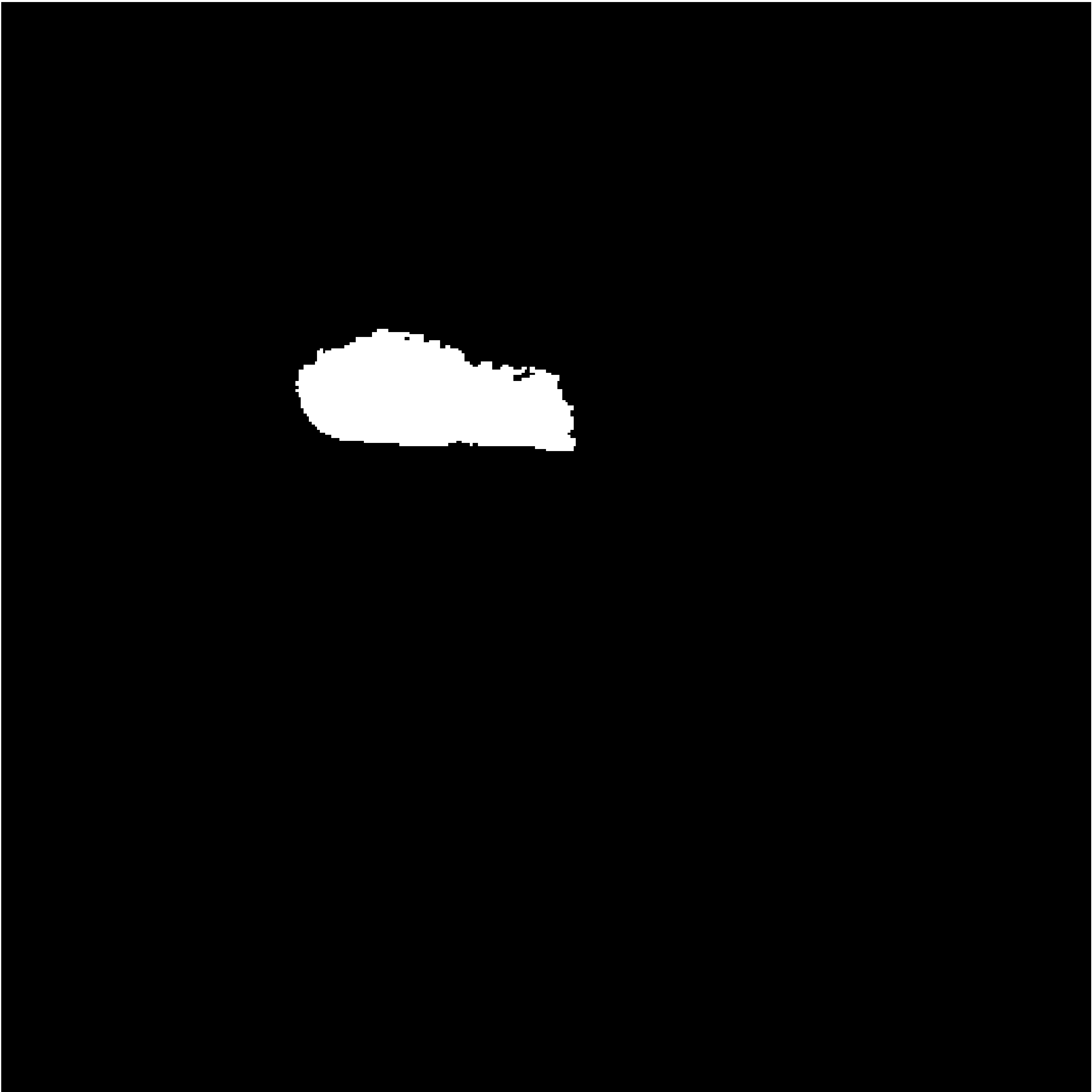}
      \caption{}
      \label{fig:u-87}
  \end{subfigure}
  \hfill
  \begin{subfigure}[b]{0.19\linewidth}
      \centering
      \includegraphics[width=\textwidth]{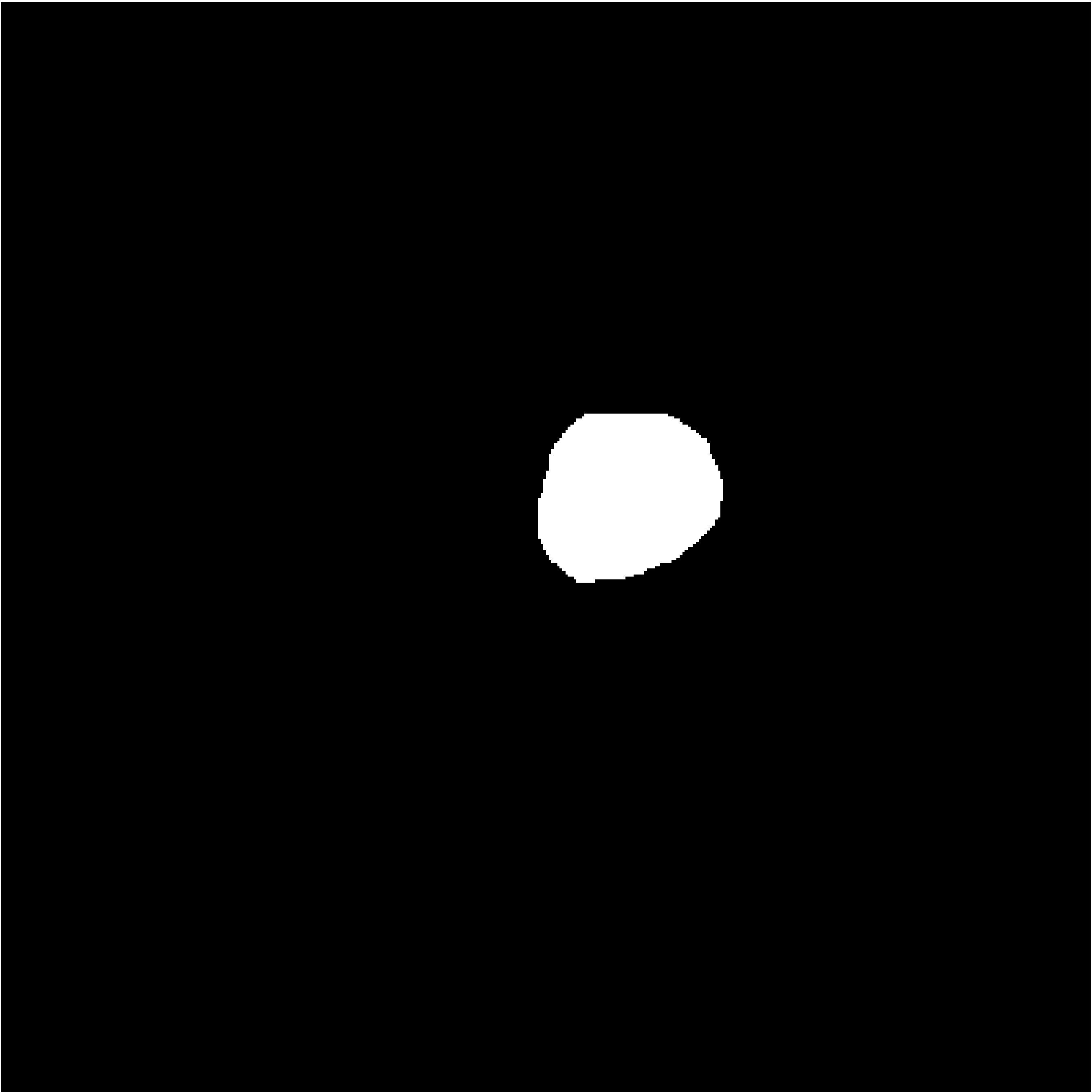}
      \caption{}
      \label{fig:u-107}
  \end{subfigure}
  \hfill
  \begin{subfigure}[b]{0.19\linewidth}
      \centering
      \includegraphics[width=\textwidth]{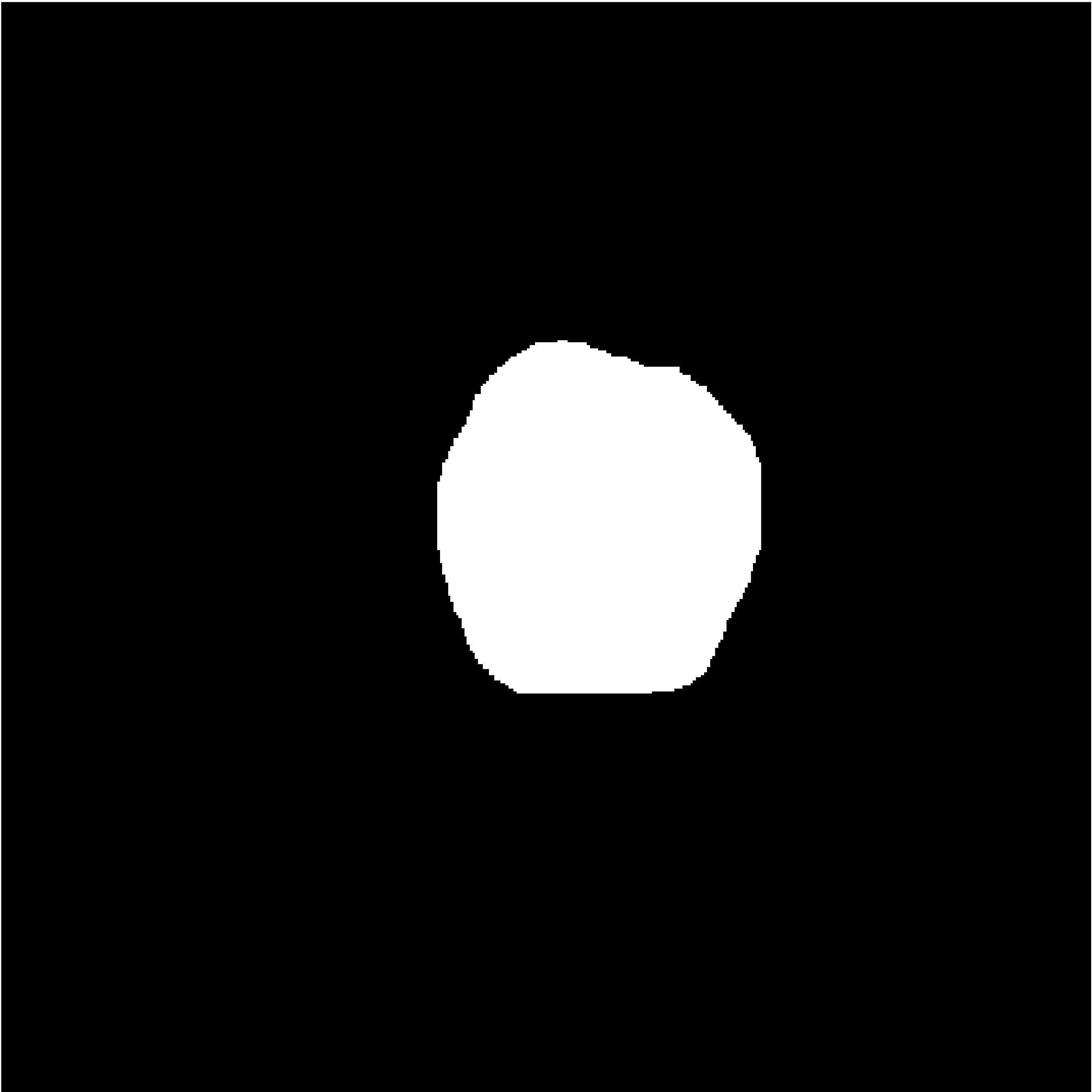}
      \caption{}
      \label{fig:u-186}
  \end{subfigure}

  \caption{Comparison with state-of-the-art models on breast ultrasound images. Row 1: input images with initial contours; Rows 2--7: segmentation results obtained by LIC, TSS, VLSGIS, ICTM-CV, ICTM-LVF-CV, and our model, respectively; The last row presents the ground truth }
  \label{fig:ultra}
\end{figure}

 To assess the performance of our model, we conducted comparative experiments with state-of-the-art methods on the BUSI dataset. In these experiments, all images and the corresponding ground truth were resized to $400\times400$. The segmentation results are presented in Fig.~\ref{fig:ultra} and the corresponding quantitative metrics are summarized in Table~\ref{tab:seg_results}. Due to the severe intensity inhomogeneity in the images, many models often classify the background regions incorrectly as foreground regions. However, both our model and the VLSGIS model successfully segment the lesion regions. Furthermore, the quantitative metrics reported in Table~\ref{tab:seg_results} show that our model achieves more reliable results on images with severe intensity inhomogeneity and noise.

\subsubsection{Three-Phases Segmentation for MR Brain Image}
\begin{figure}
\centering
  \begin{subfigure}[b]{0.16\linewidth}
      \centering
      \includegraphics[width=\textwidth]{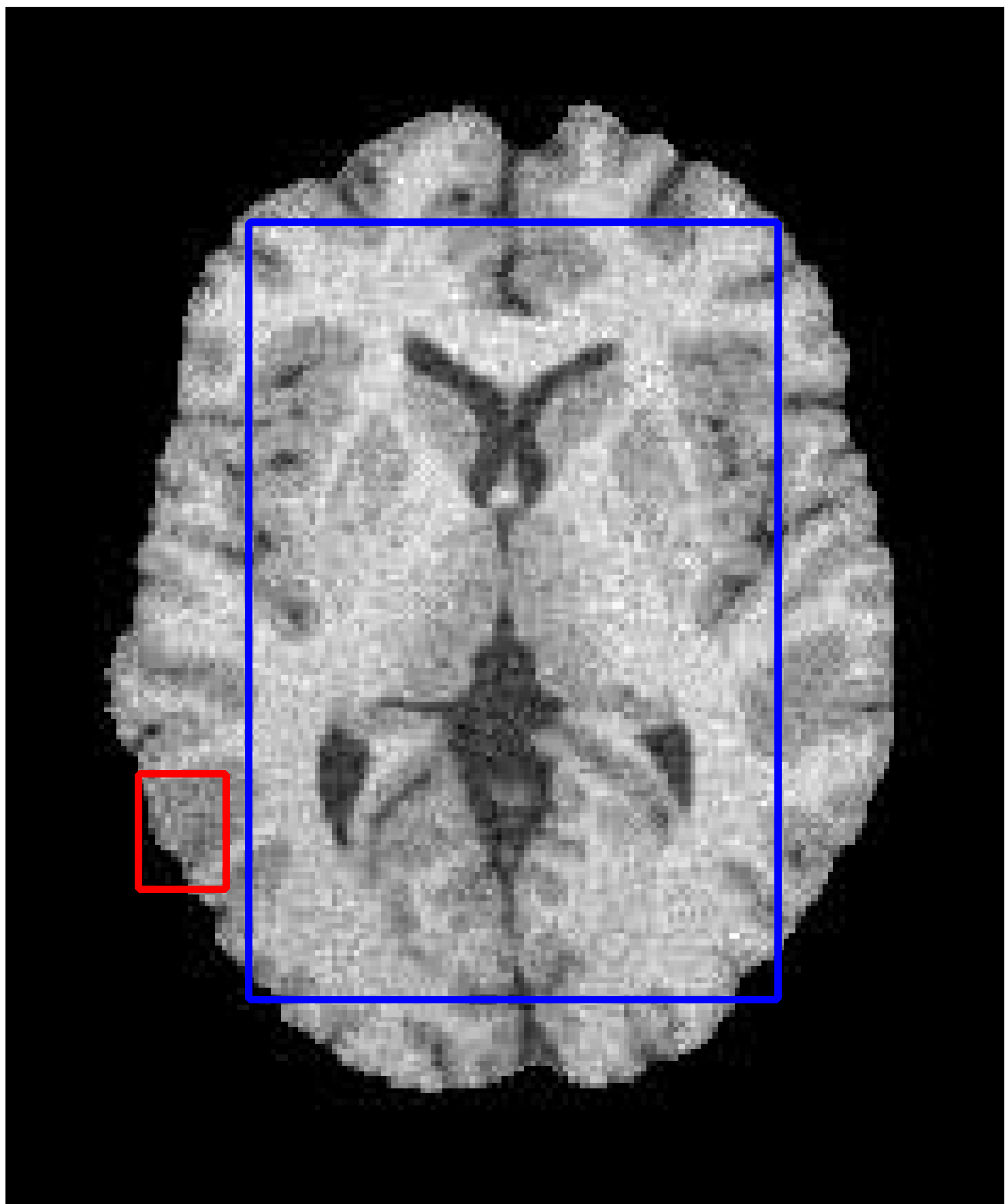}
  \end{subfigure}
  \hfill
  \begin{subfigure}[b]{0.16\linewidth}
      \centering
      \includegraphics[width=\textwidth]{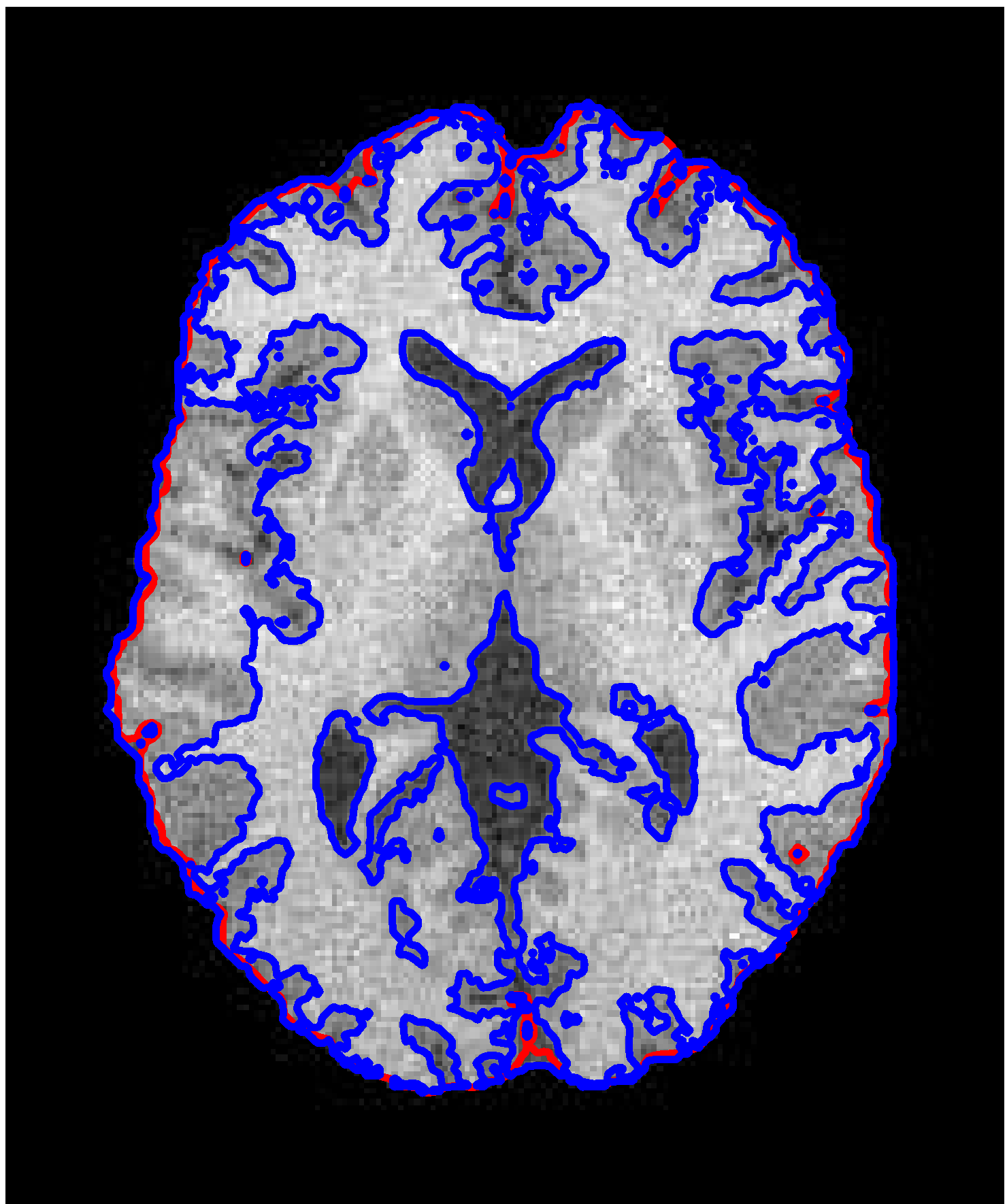}
  \end{subfigure}
  \hfill
  \begin{subfigure}[b]{0.16\linewidth}
      \centering
      \includegraphics[width=\textwidth]{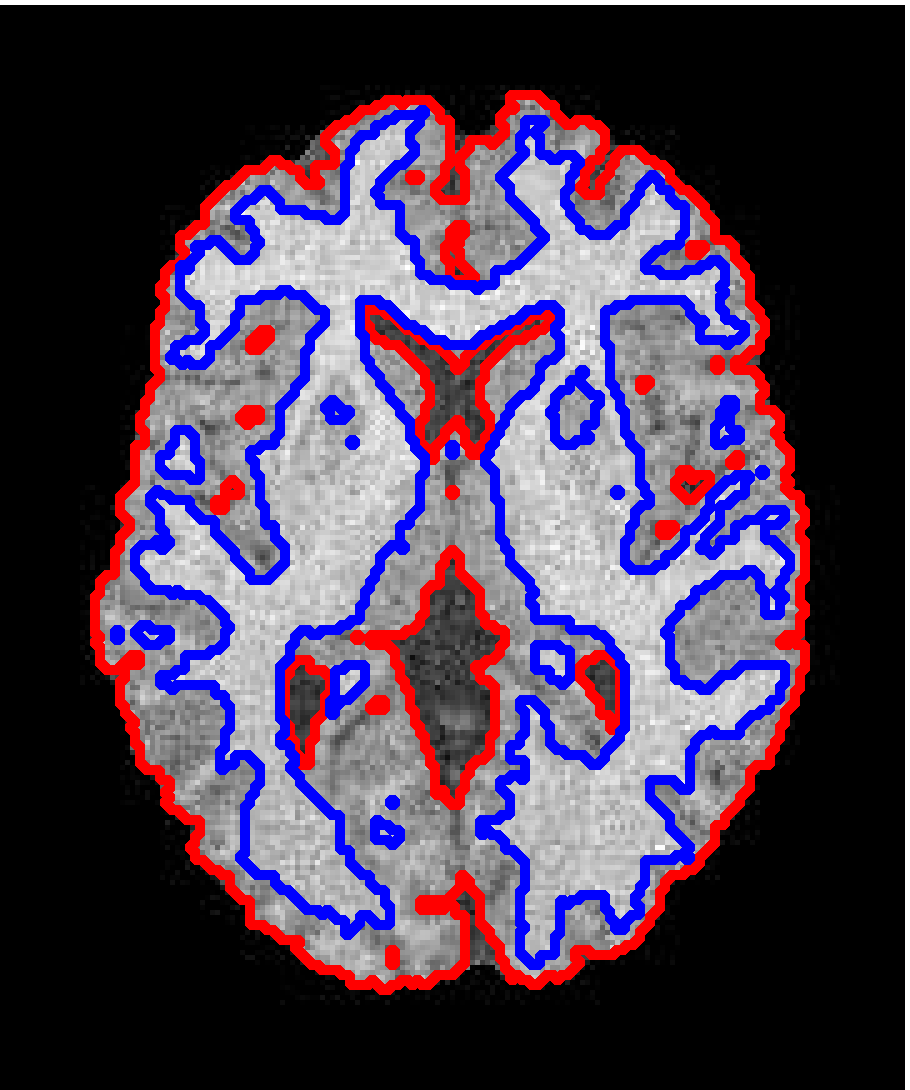}
  \end{subfigure}
 \hfill
  \begin{subfigure}[b]{0.16\linewidth}
      \centering
      \includegraphics[width=\textwidth]{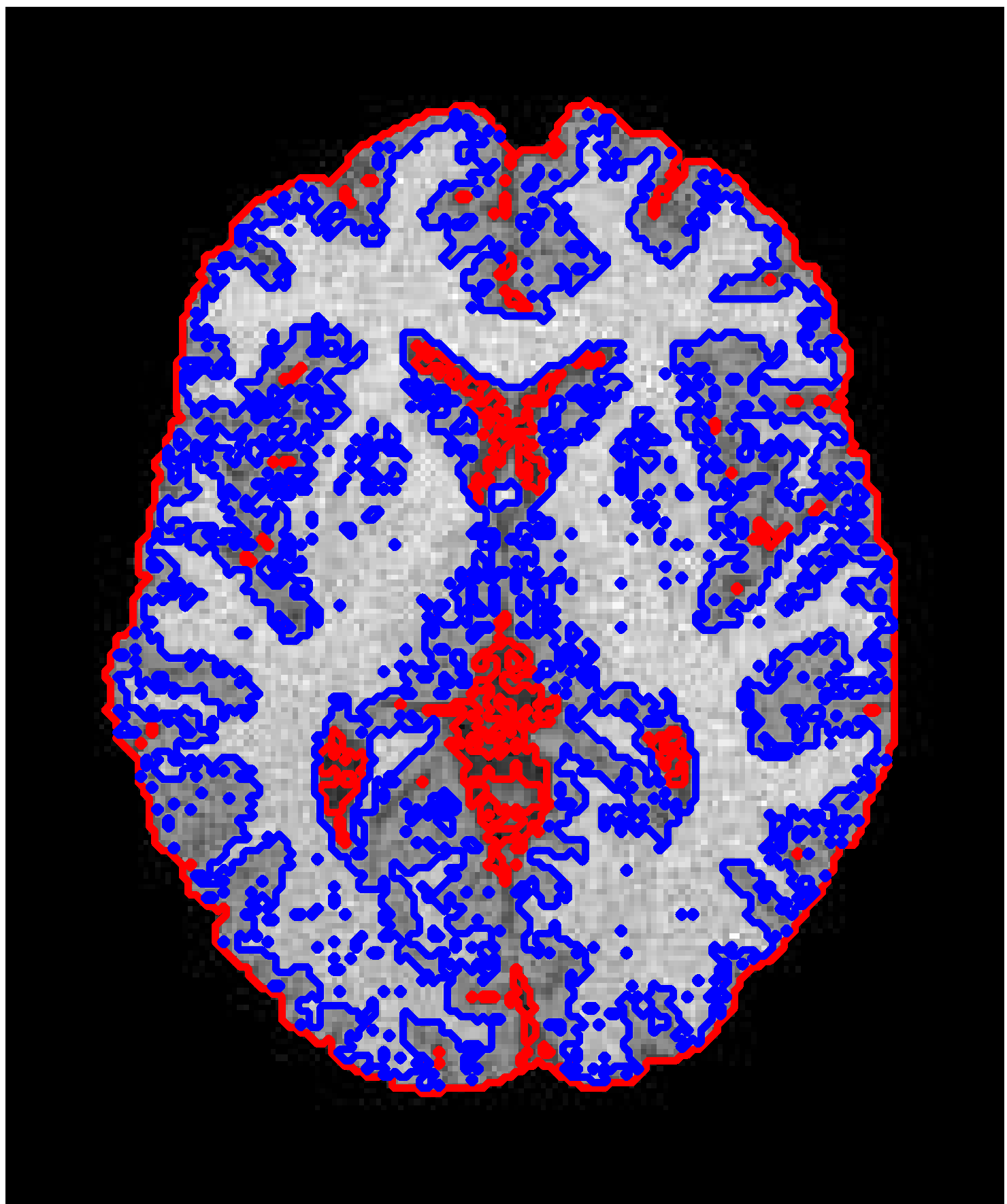}
  \end{subfigure}
   \hfill
  \begin{subfigure}[b]{0.16\linewidth}
      \centering
      \includegraphics[width=\textwidth]{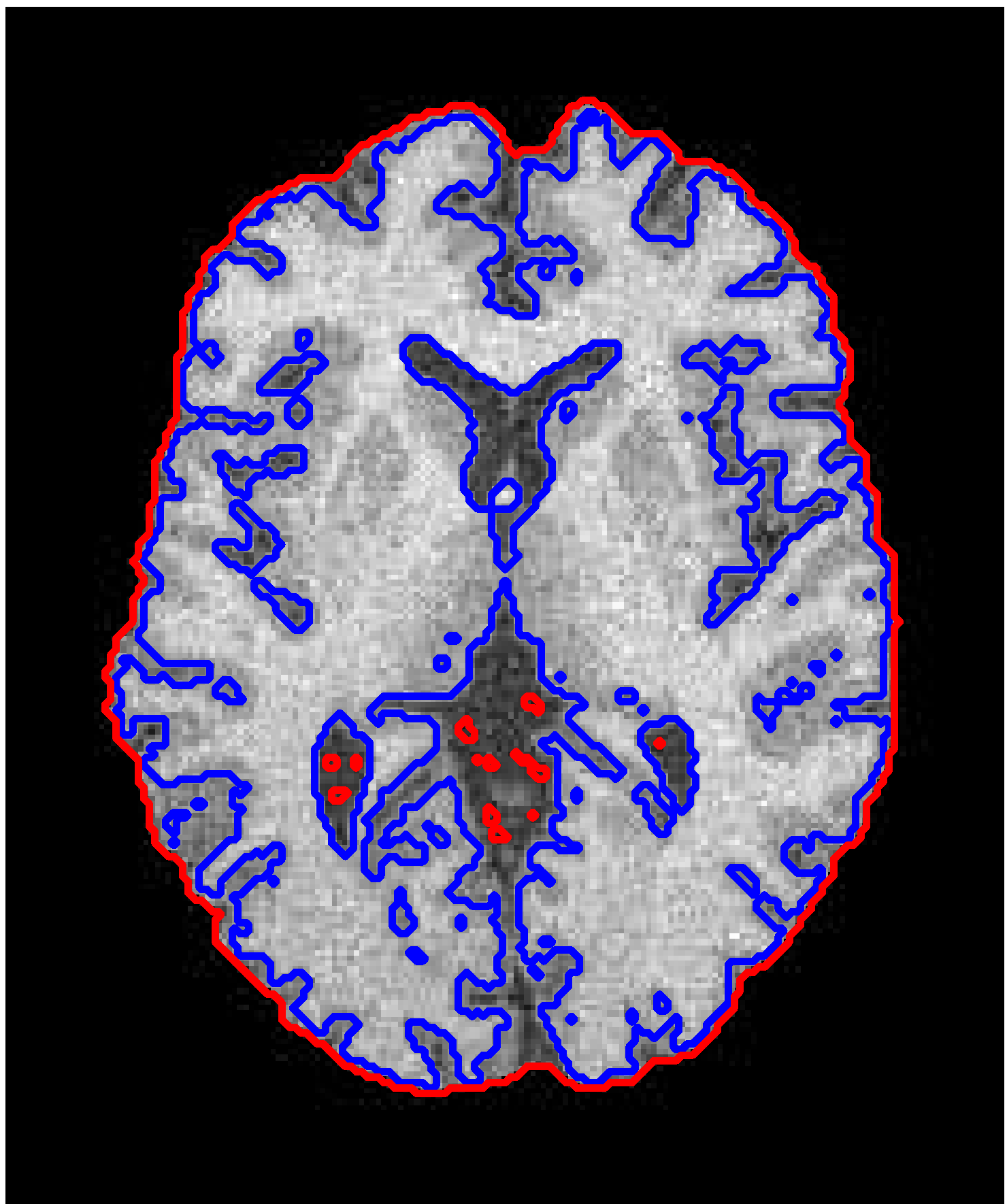}
  \end{subfigure}
 \hfill
  \begin{subfigure}[b]{0.16\linewidth}
      \centering
      \includegraphics[width=\textwidth]{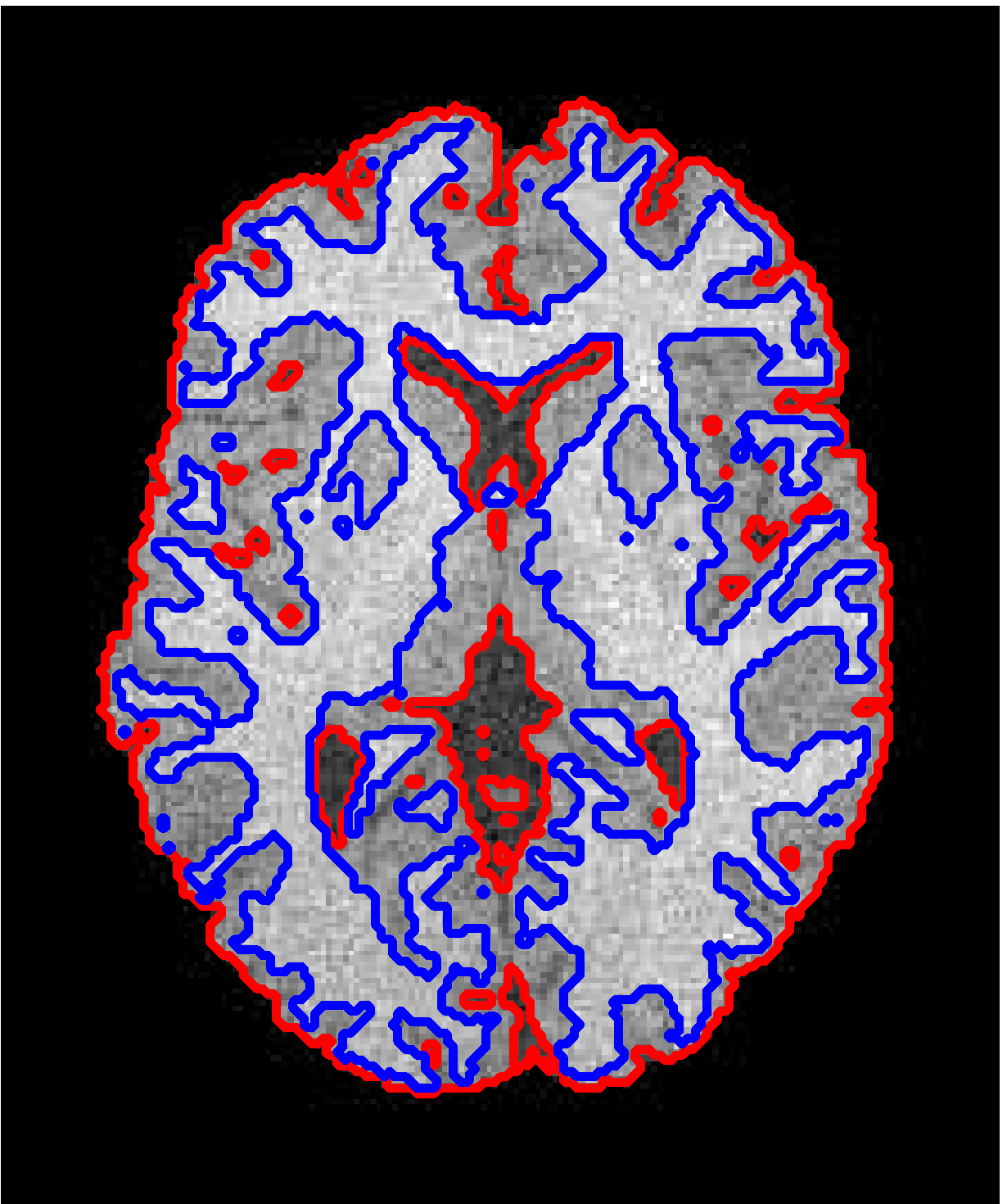}
  \end{subfigure}

 \begin{subfigure}[b]{0.16\linewidth}
      \centering
      \includegraphics[width=\textwidth]{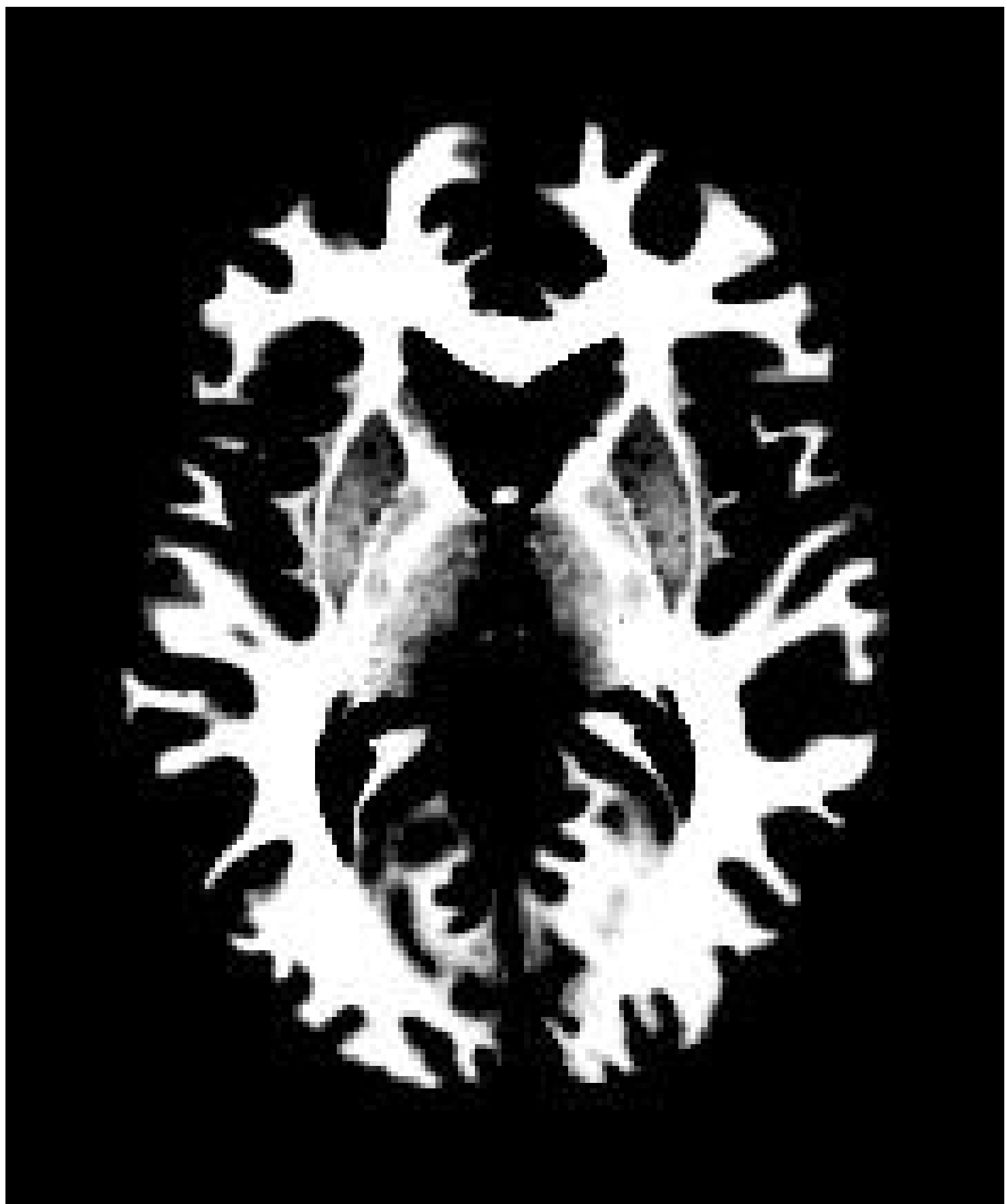}
  \end{subfigure}
   \hfill
  \begin{subfigure}[b]{0.16\linewidth}
      \centering
      \includegraphics[width=\textwidth]{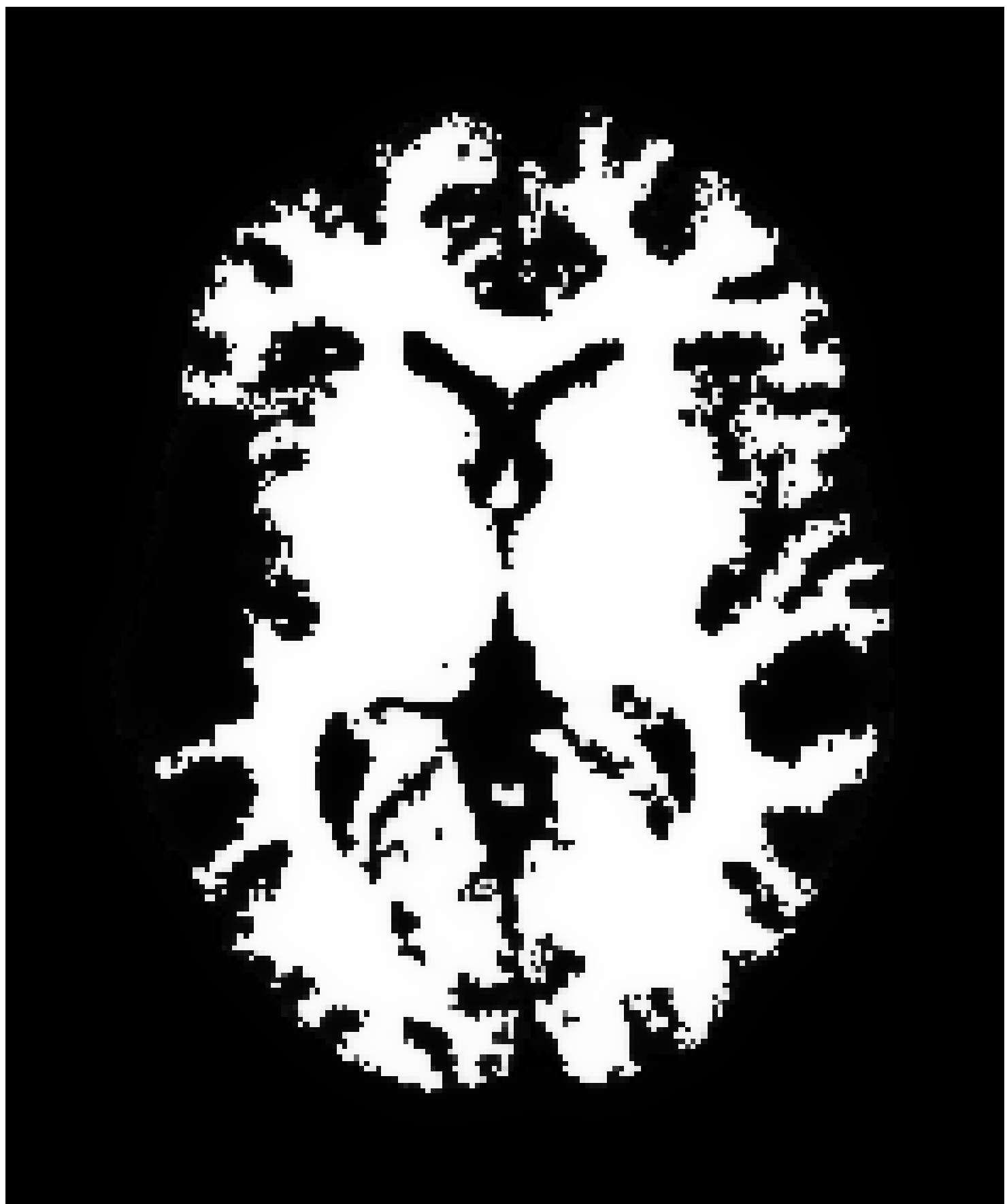}
  \end{subfigure}
  \hfill
  \begin{subfigure}[b]{0.16\linewidth}
      \centering
      \includegraphics[width=\textwidth]{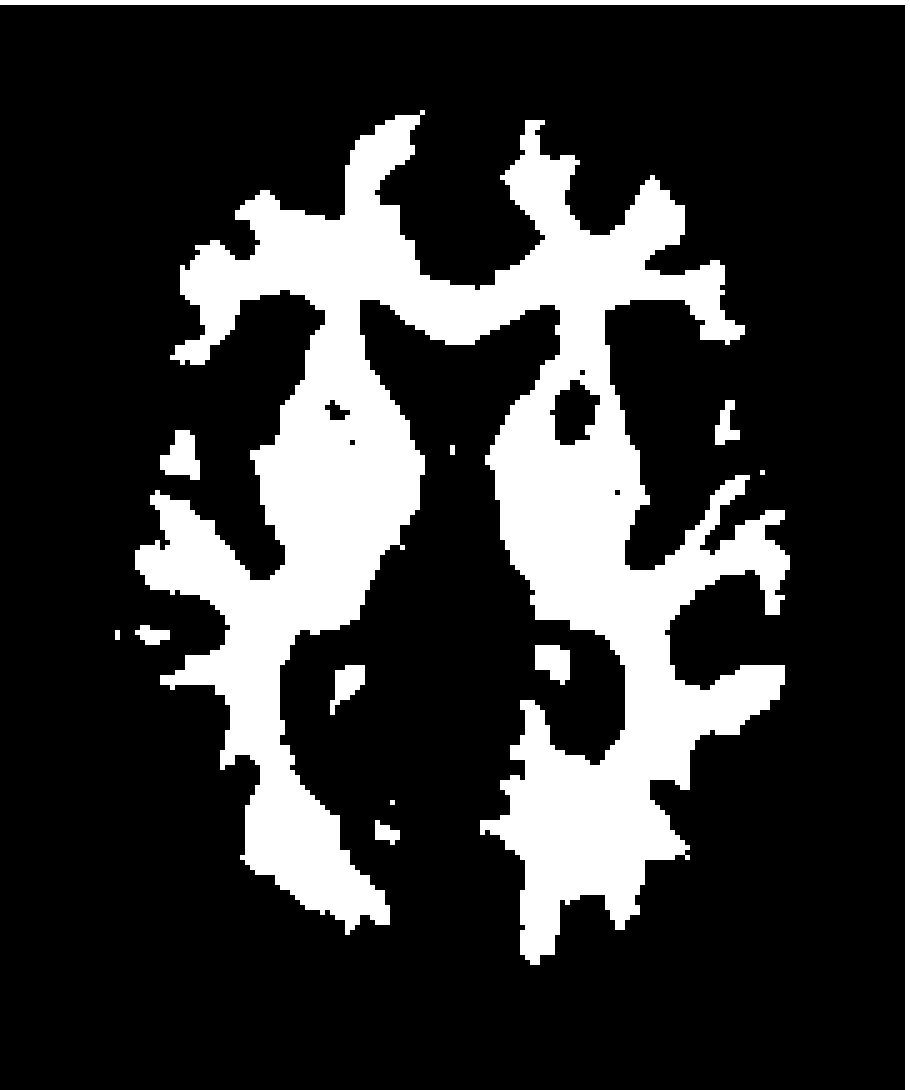}
  \end{subfigure}
   \hfill
  \begin{subfigure}[b]{0.16\linewidth}
      \centering
      \includegraphics[width=\textwidth]{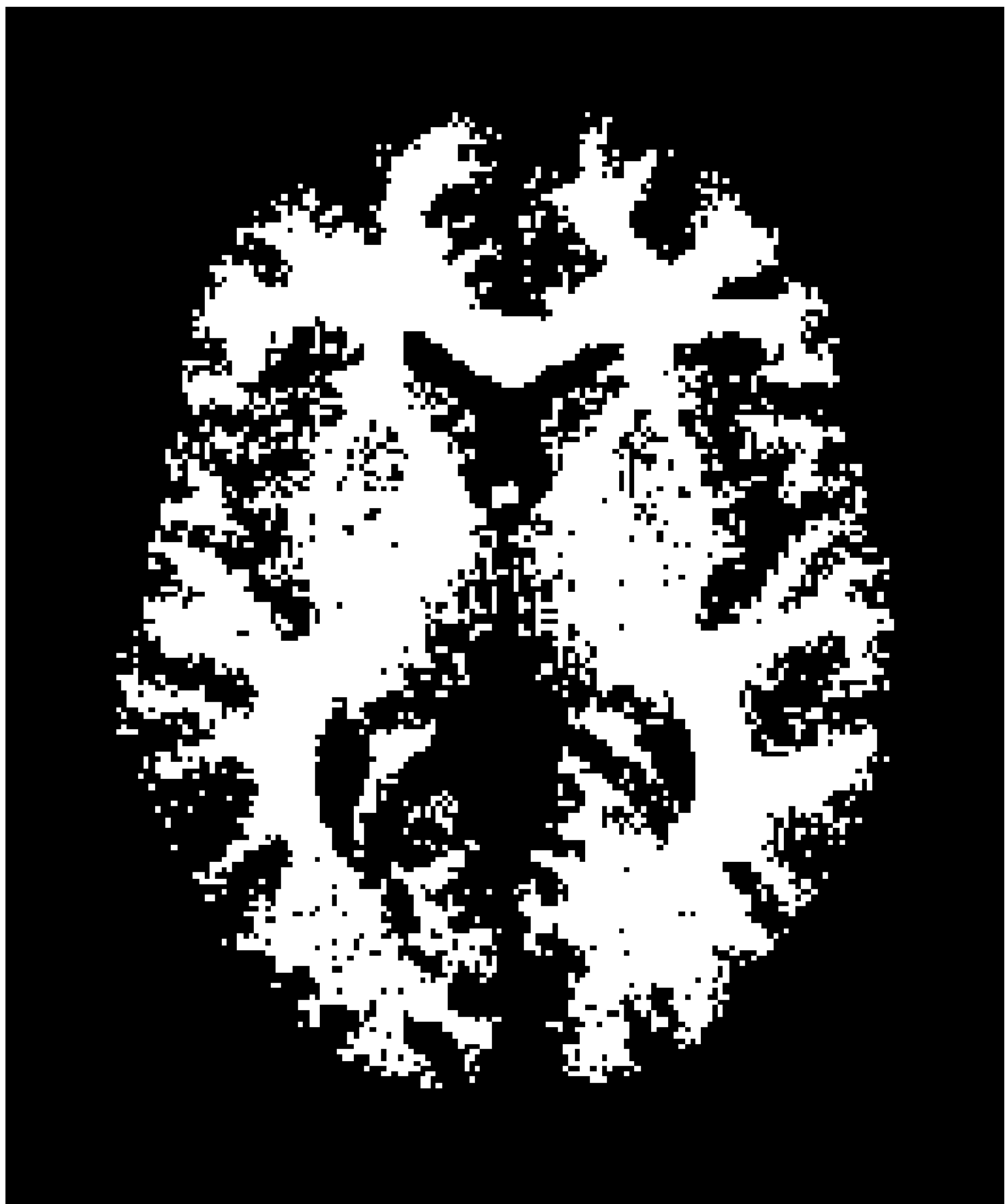}
  \end{subfigure}
  \hfill
  \begin{subfigure}[b]{0.16\linewidth}
      \centering
      \includegraphics[width=\textwidth]{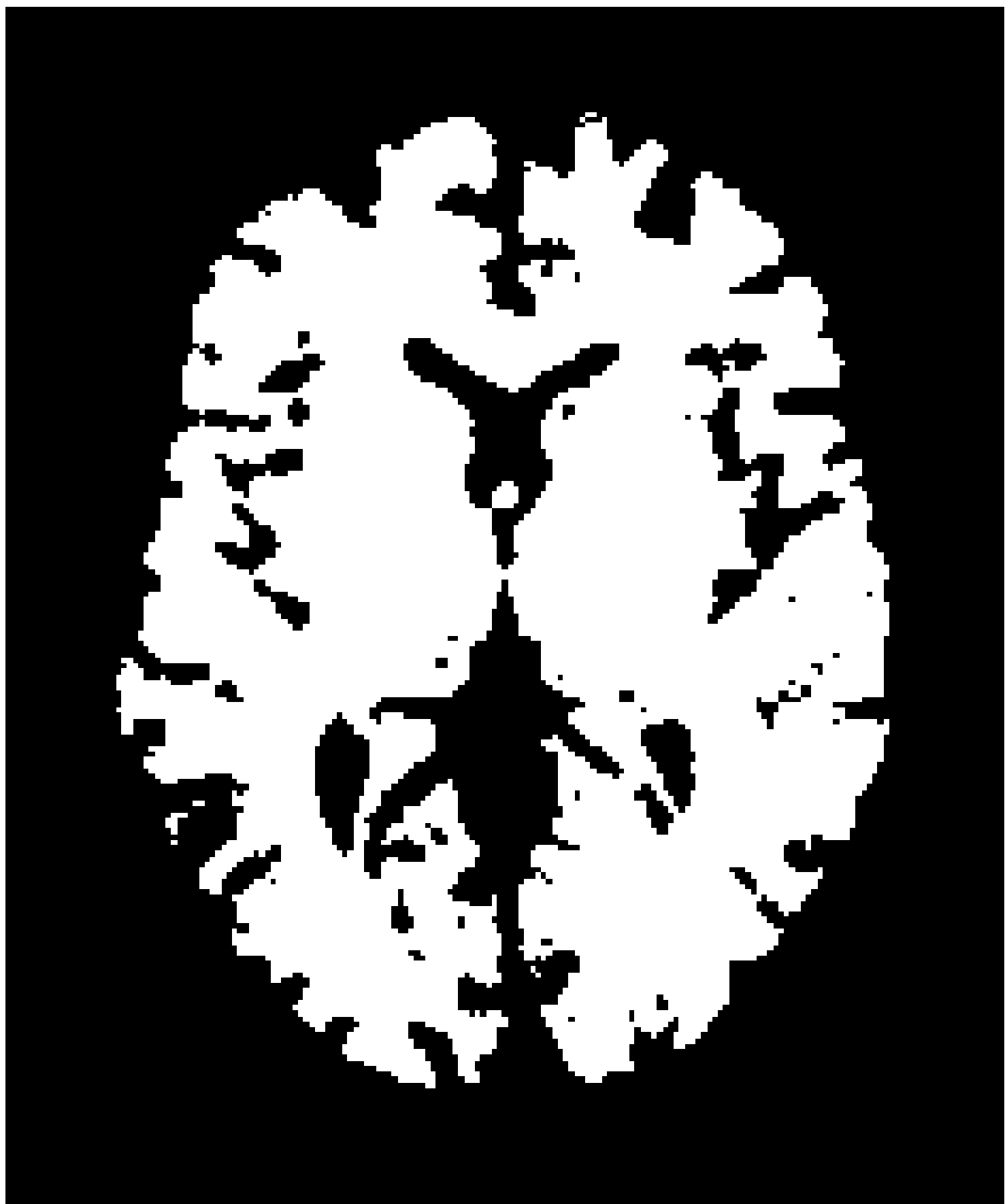}
  \end{subfigure}
   \hfill
  \begin{subfigure}[b]{0.16\linewidth}
      \centering
      \includegraphics[width=\textwidth]{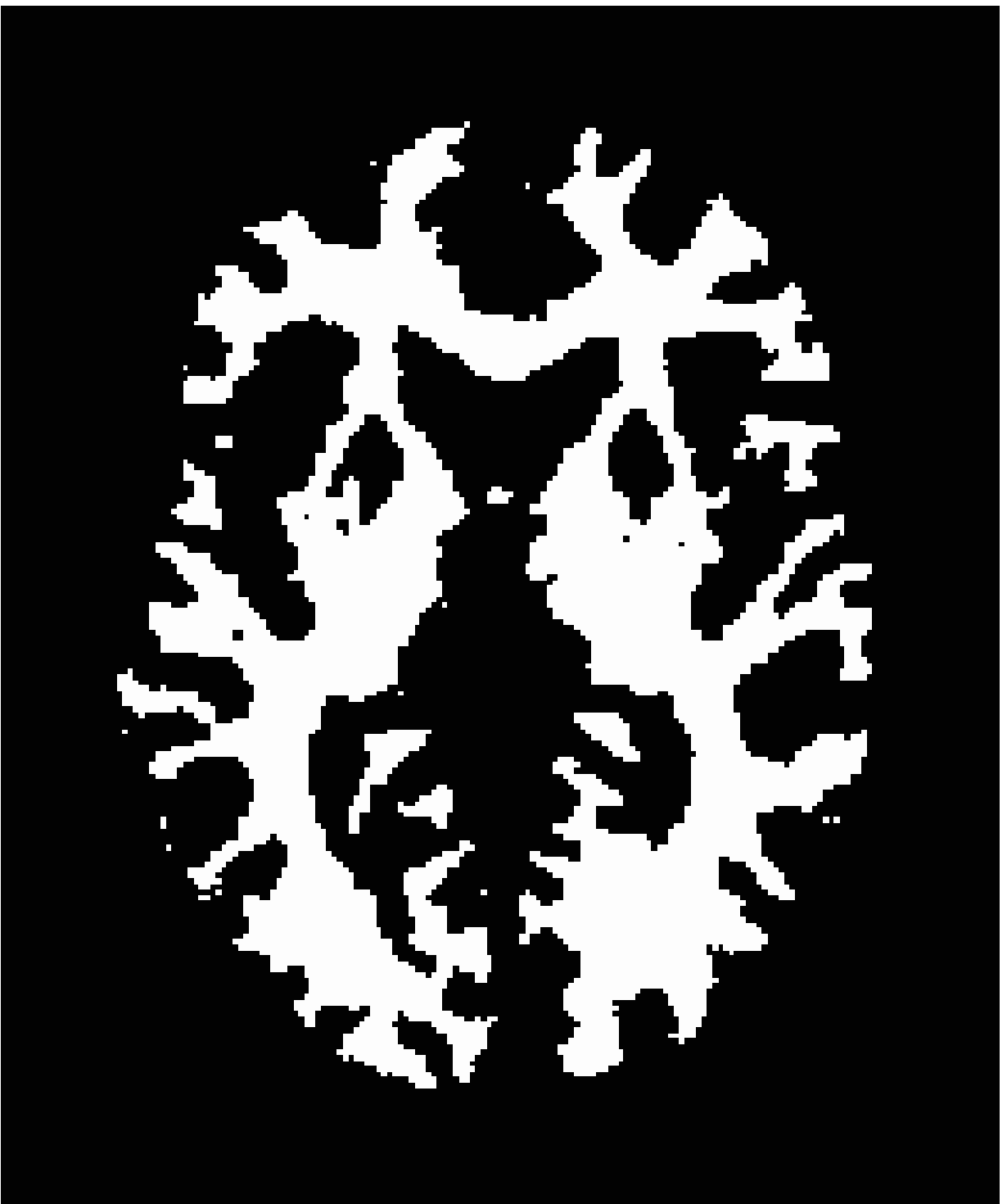}
  \end{subfigure}

 \begin{subfigure}[b]{0.16\linewidth}
      \centering
      \includegraphics[width=\textwidth]{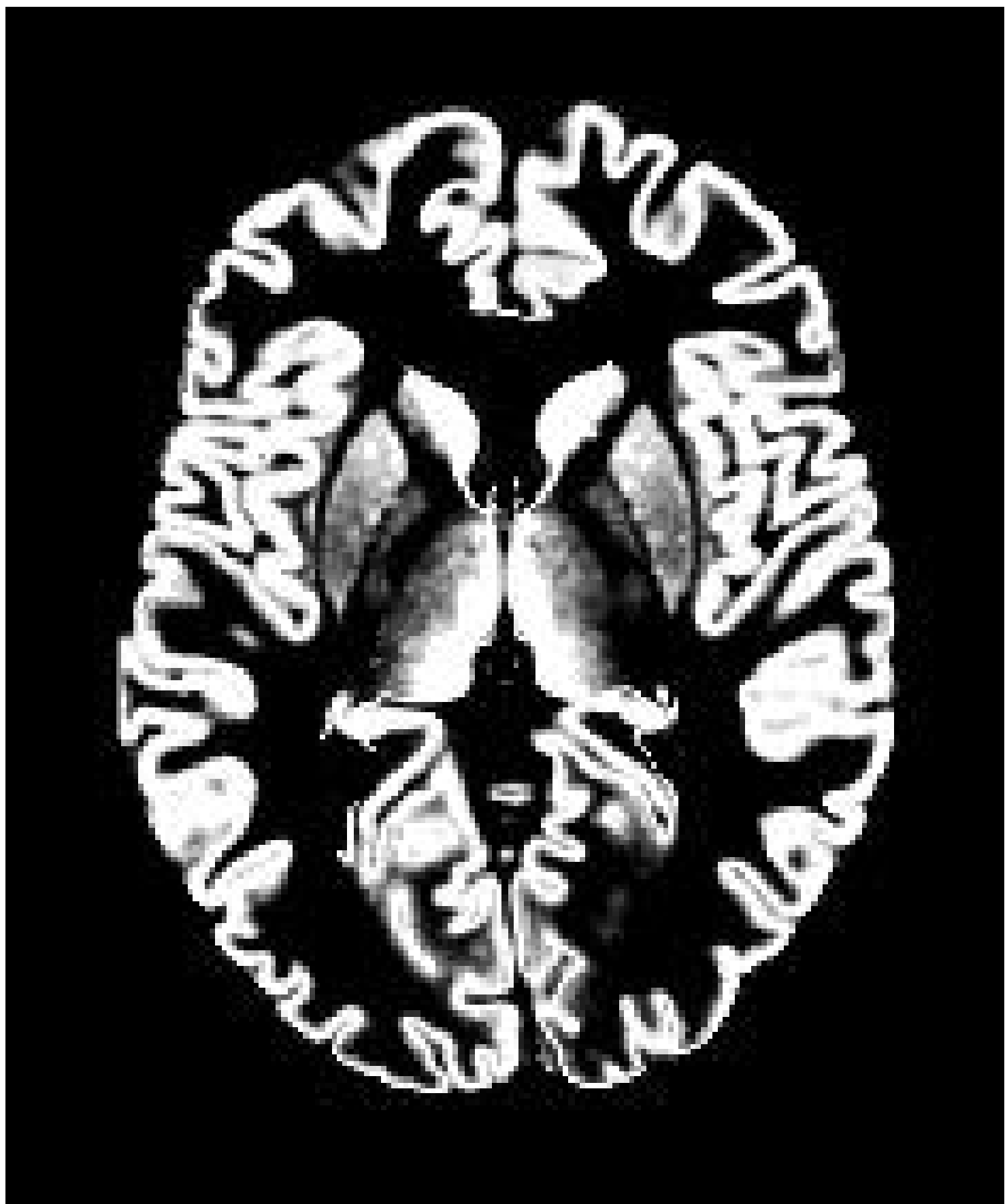}
      \caption{}
      \label{fig:76-initial}
  \end{subfigure}
   \hfill
  \begin{subfigure}[b]{0.16\linewidth}
      \centering
      \includegraphics[width=\textwidth]{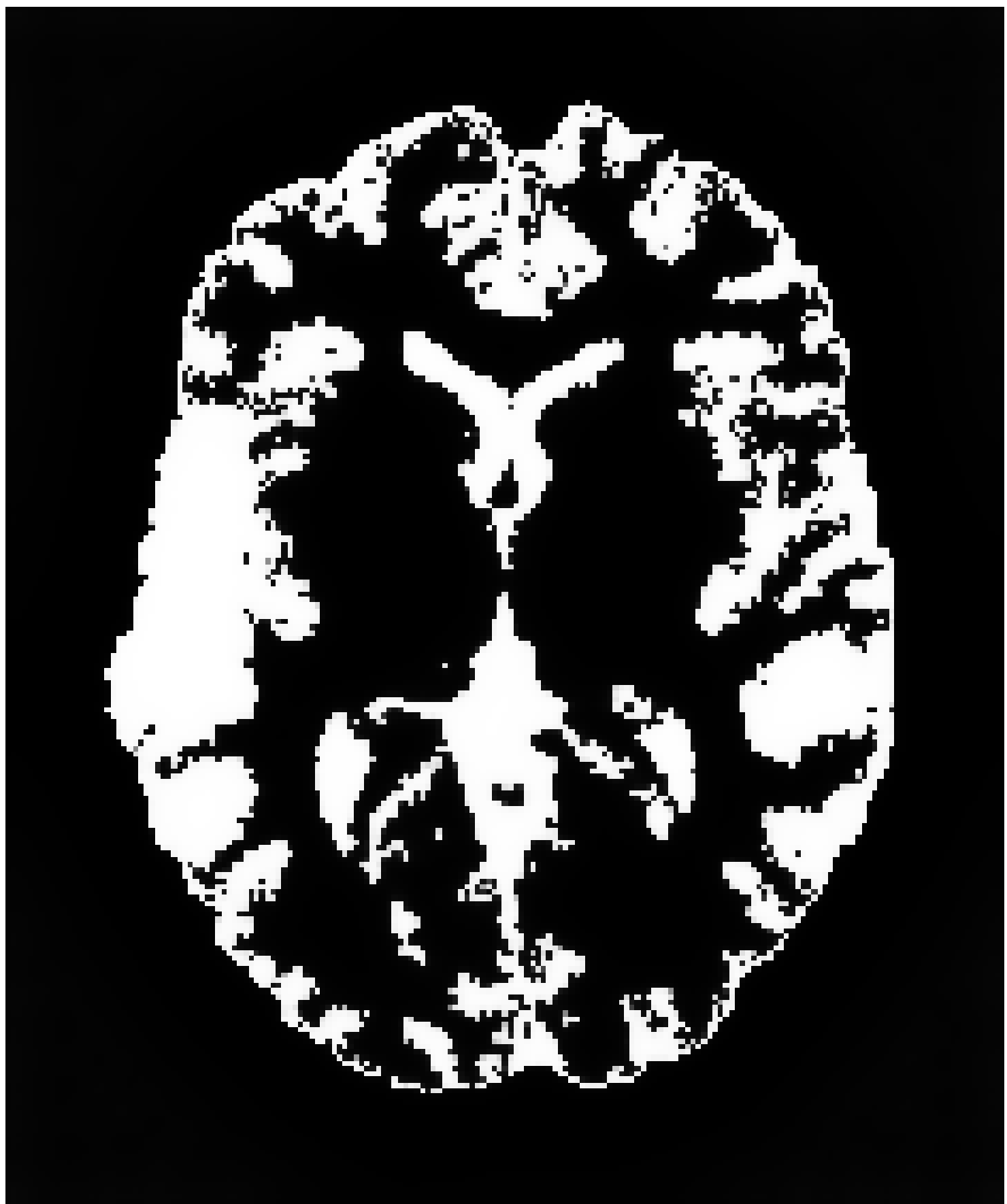}
       \caption{}
       \label{fig:76-LIC-seg}
  \end{subfigure}
  \hfill
  \begin{subfigure}[b]{0.16\linewidth}
      \centering
      \includegraphics[width=\textwidth]{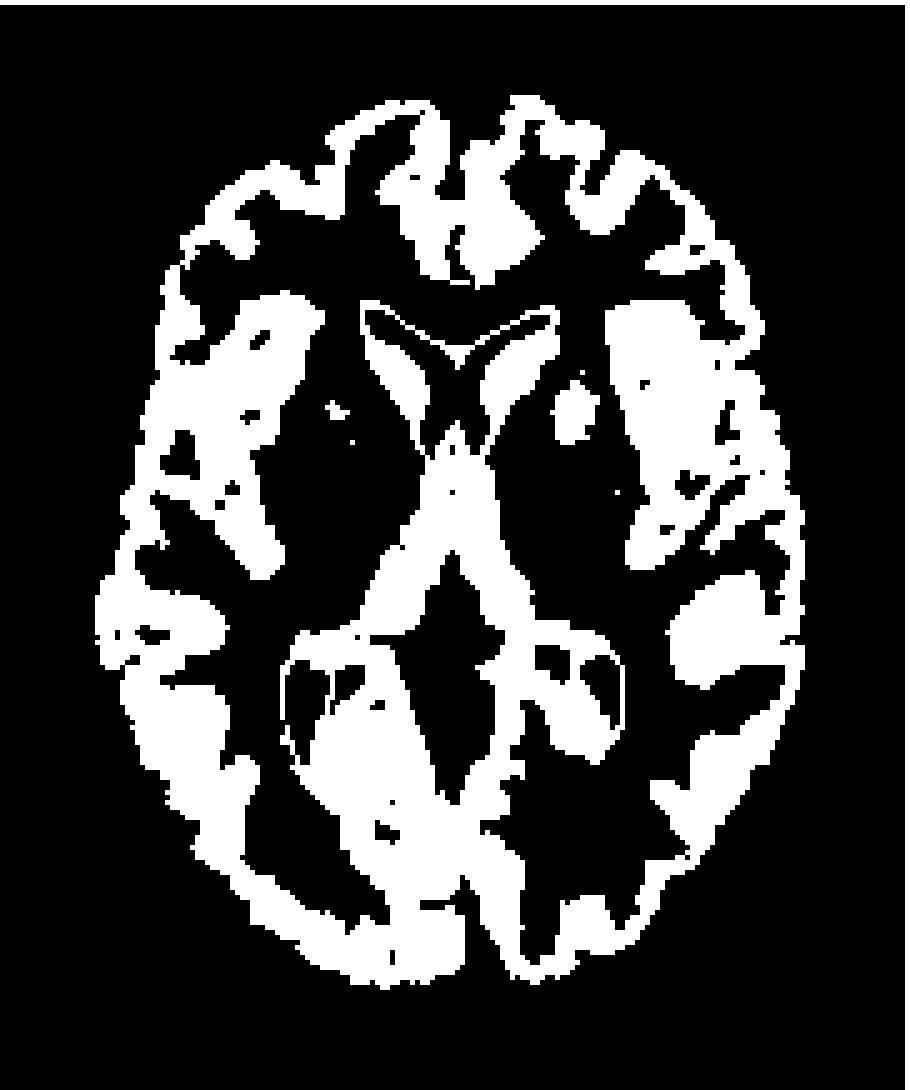}
       \caption{}
       \label{fig:76-TSS-seg}
  \end{subfigure}
   \hfill
  \begin{subfigure}[b]{0.16\linewidth}
      \centering
      \includegraphics[width=\textwidth]{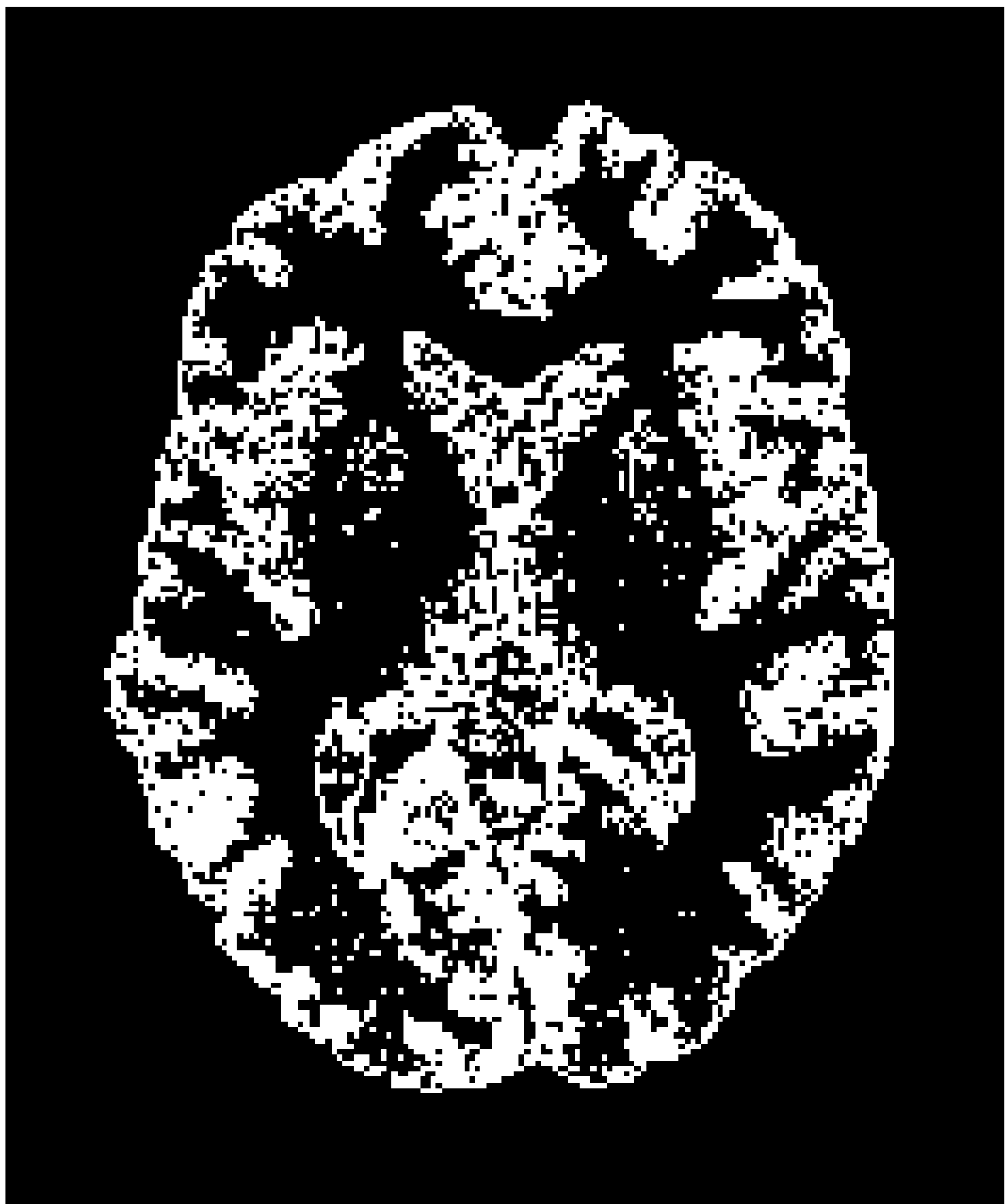}
       \caption{}
       \label{fig:76-ICTMCV-seg}
  \end{subfigure}
   \hfill
   \begin{subfigure}[b]{0.16\linewidth}
      \centering
      \includegraphics[width=\textwidth]{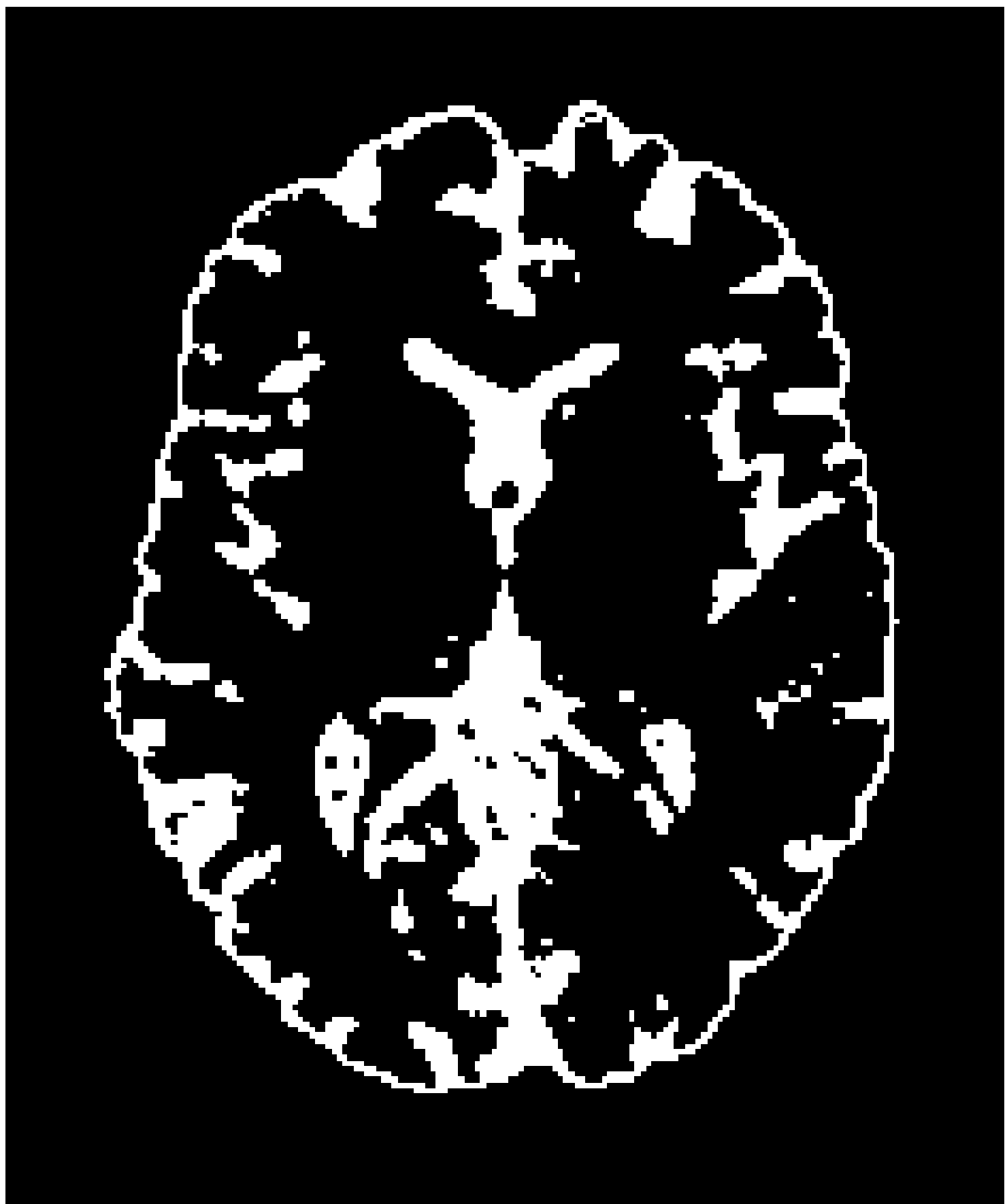}
       \caption{}
       \label{fig:76-LVFCV-seg}
  \end{subfigure}
   \hfill
  \begin{subfigure}[b]{0.16\linewidth}
      \centering
      \includegraphics[width=\textwidth]{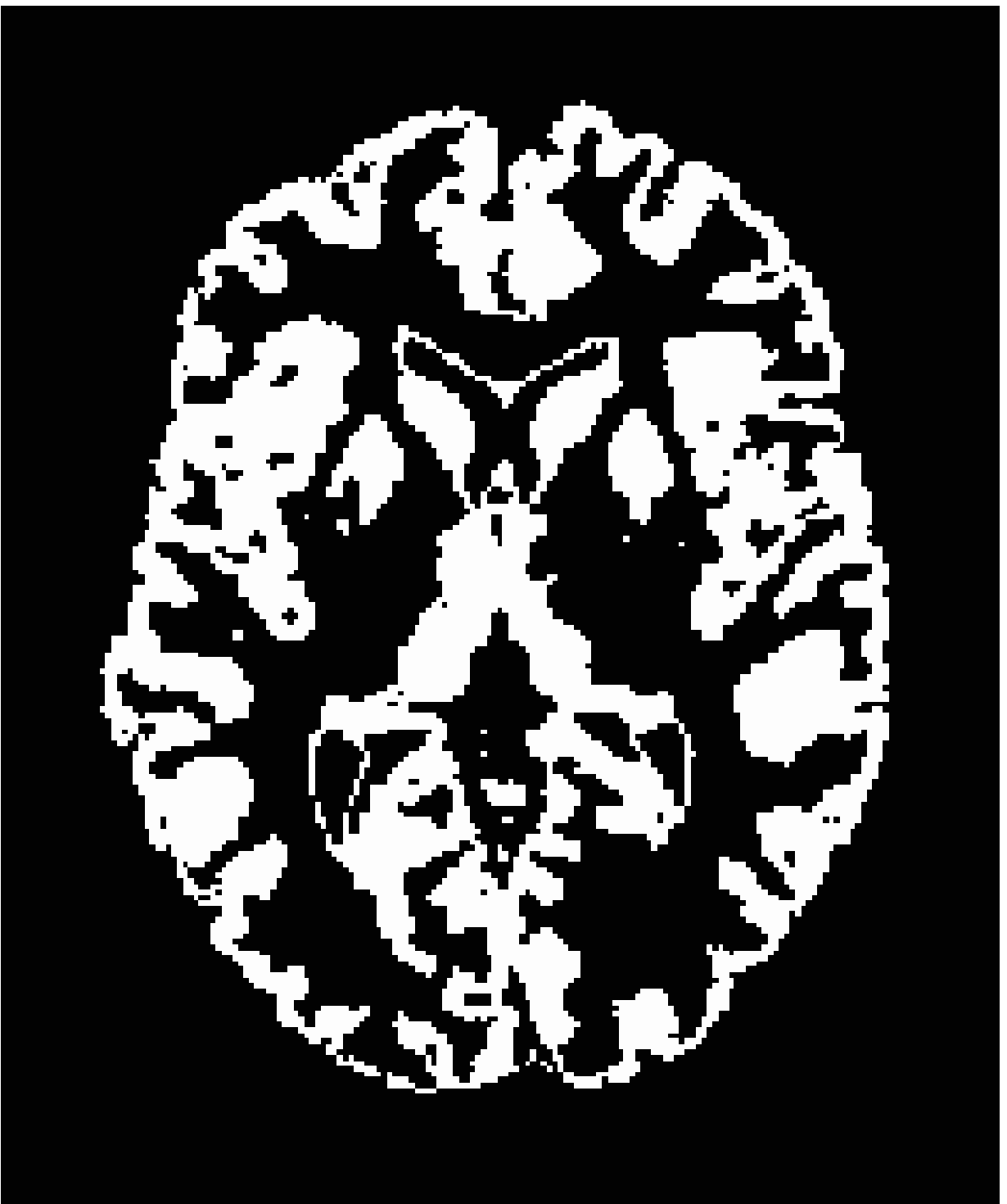}
       \caption{}
       \label{fig:76-our-seg}
  \end{subfigure}

  \begin{subfigure}[b]{0.16\linewidth}
      \centering
      \includegraphics[width=\textwidth]{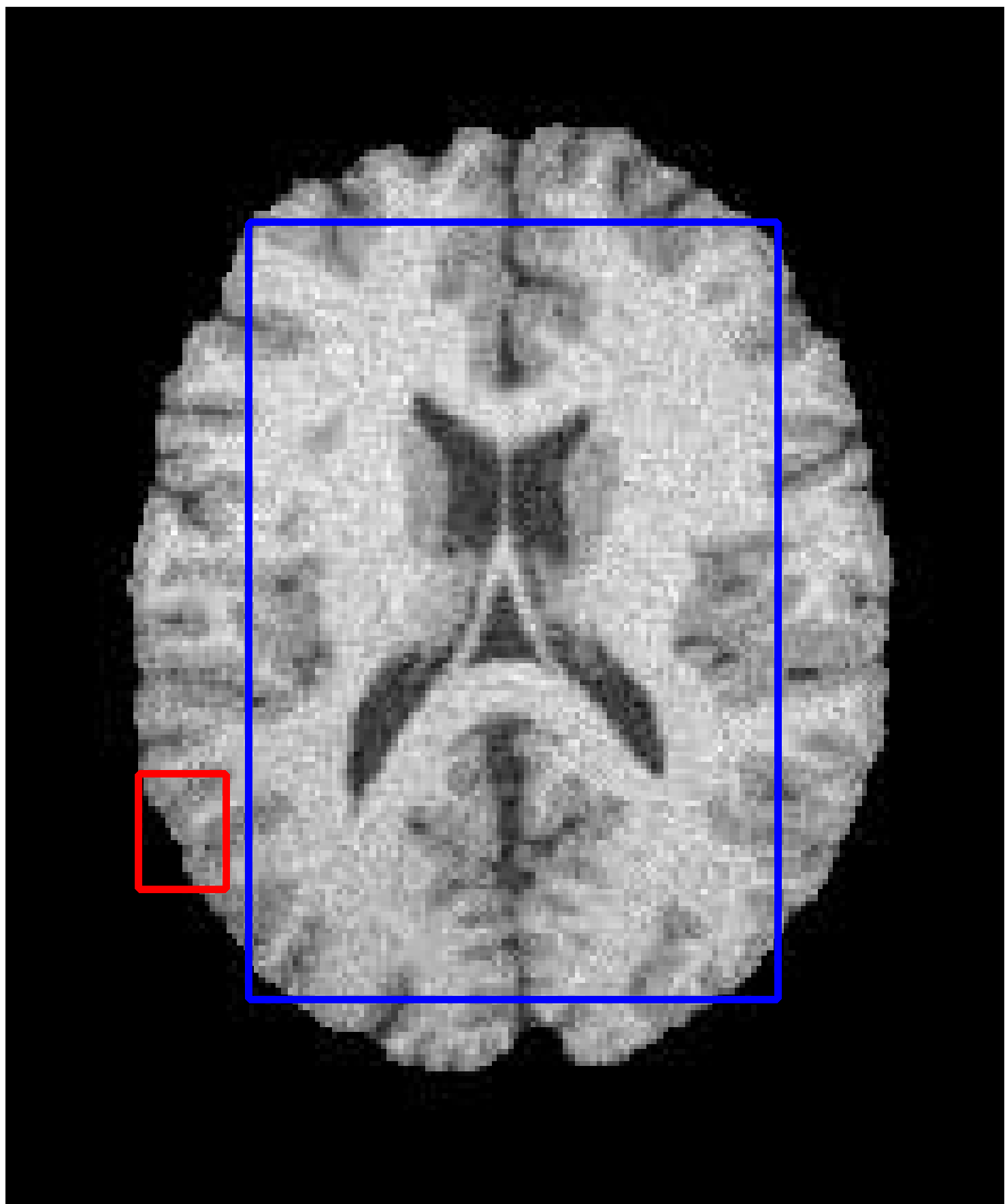}
  \end{subfigure}
  \hfill
  \begin{subfigure}[b]{0.16\linewidth}
      \centering
      \includegraphics[width=\textwidth]{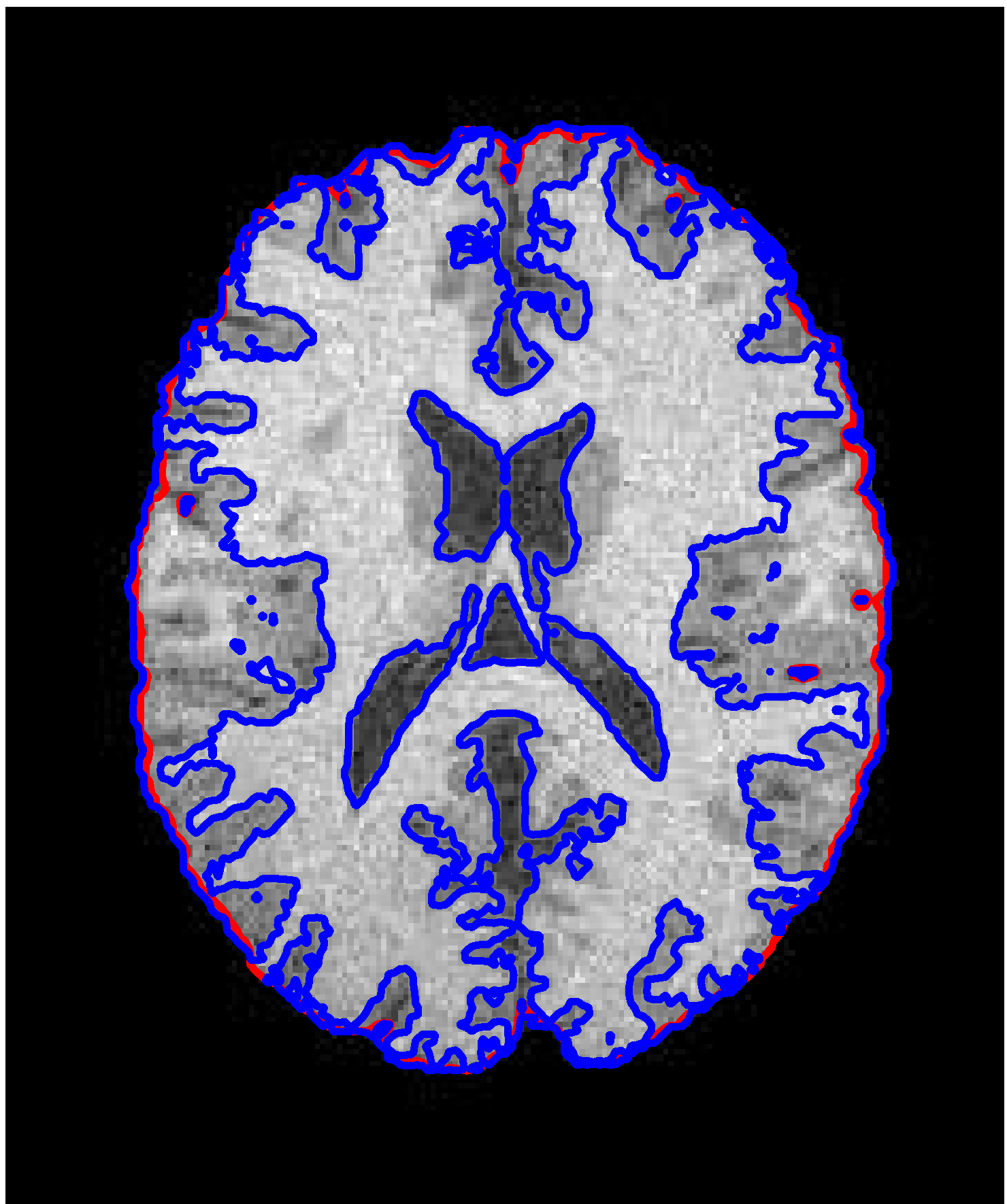}
  \end{subfigure}
  \hfill
  \begin{subfigure}[b]{0.16\linewidth}
      \centering
      \includegraphics[width=\textwidth]{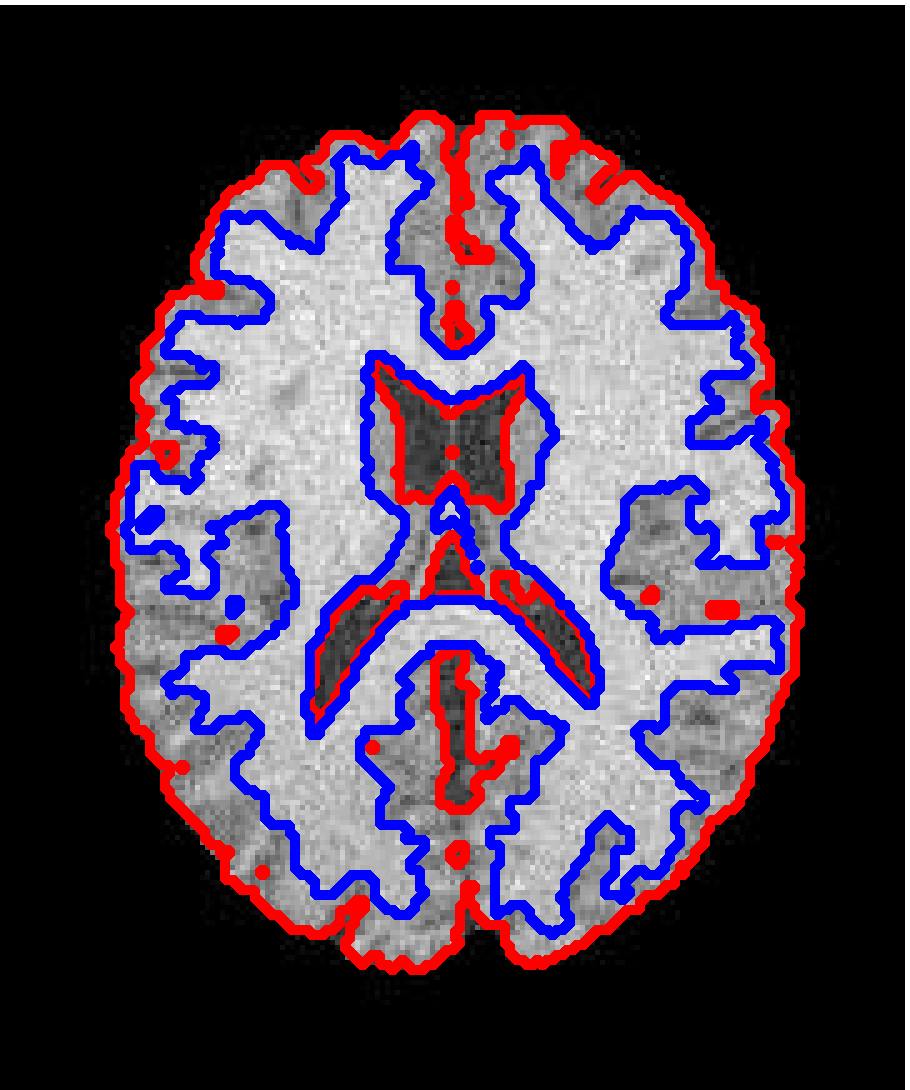}
  \end{subfigure}
 \hfill
  \begin{subfigure}[b]{0.16\linewidth}
      \centering
      \includegraphics[width=\textwidth]{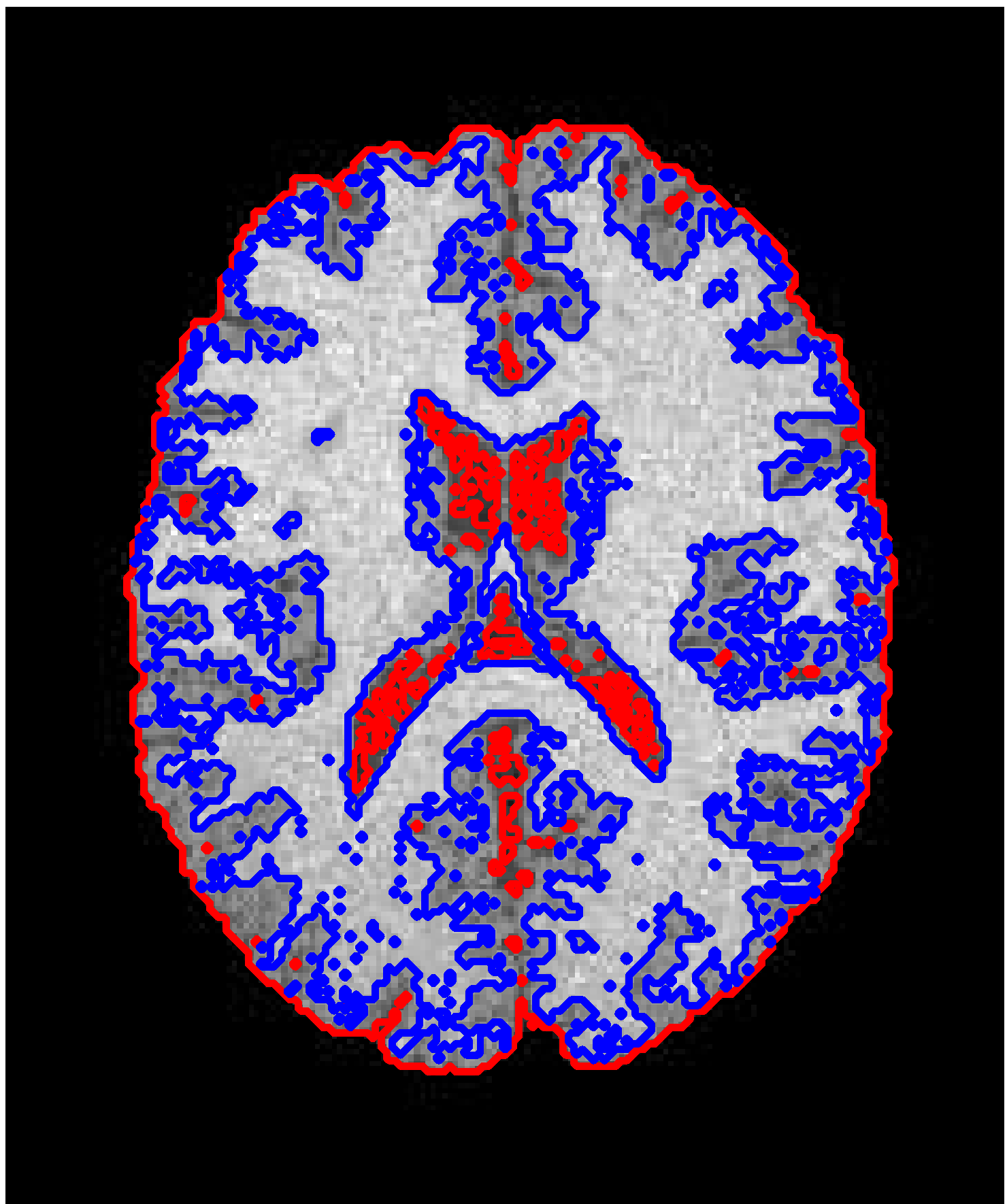}
  \end{subfigure}
  \hfill
  \begin{subfigure}[b]{0.16\linewidth}
      \centering
      \includegraphics[width=\textwidth]{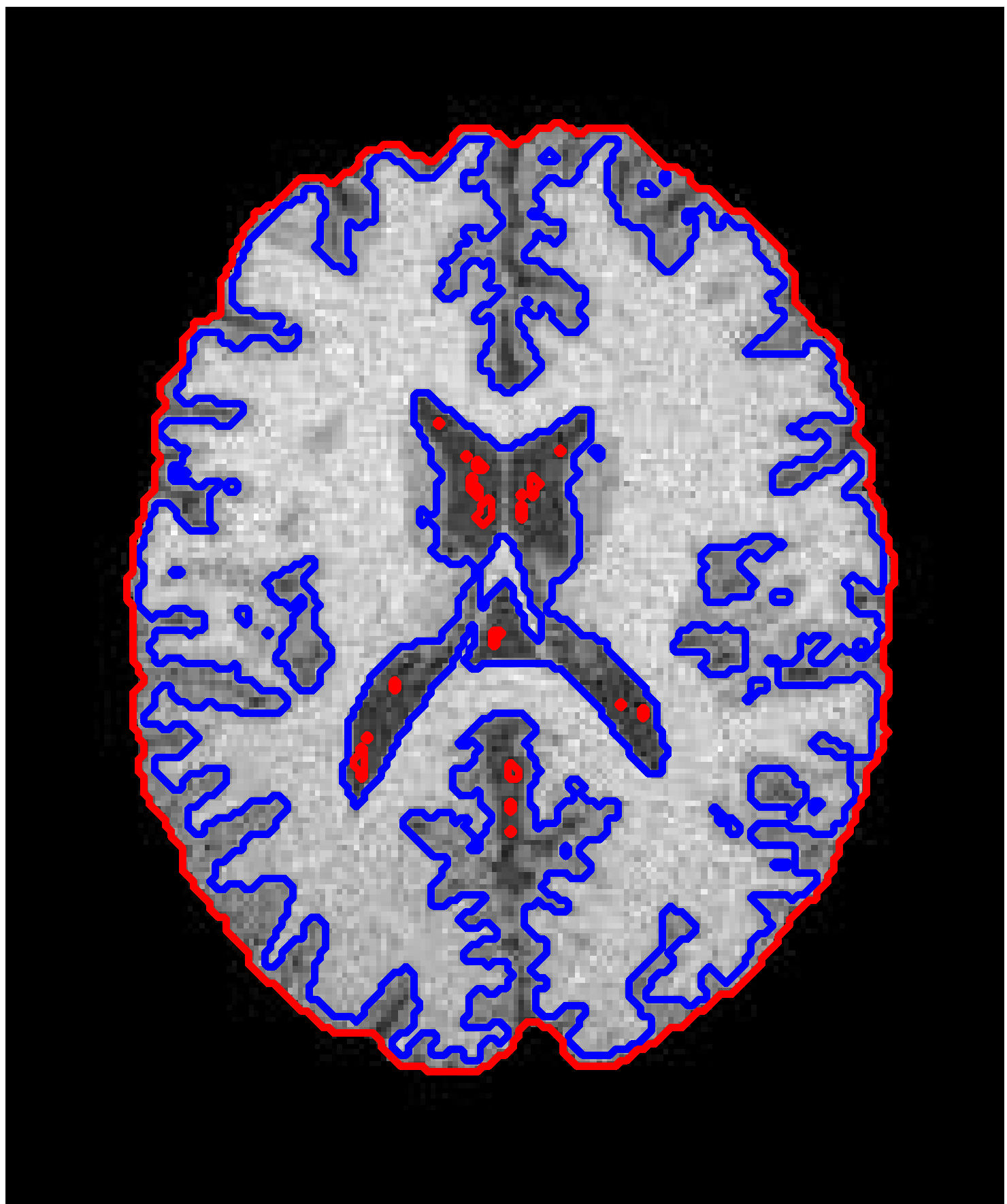}
  \end{subfigure}
 \hfill
  \begin{subfigure}[b]{0.16\linewidth}
      \centering
      \includegraphics[width=\textwidth]{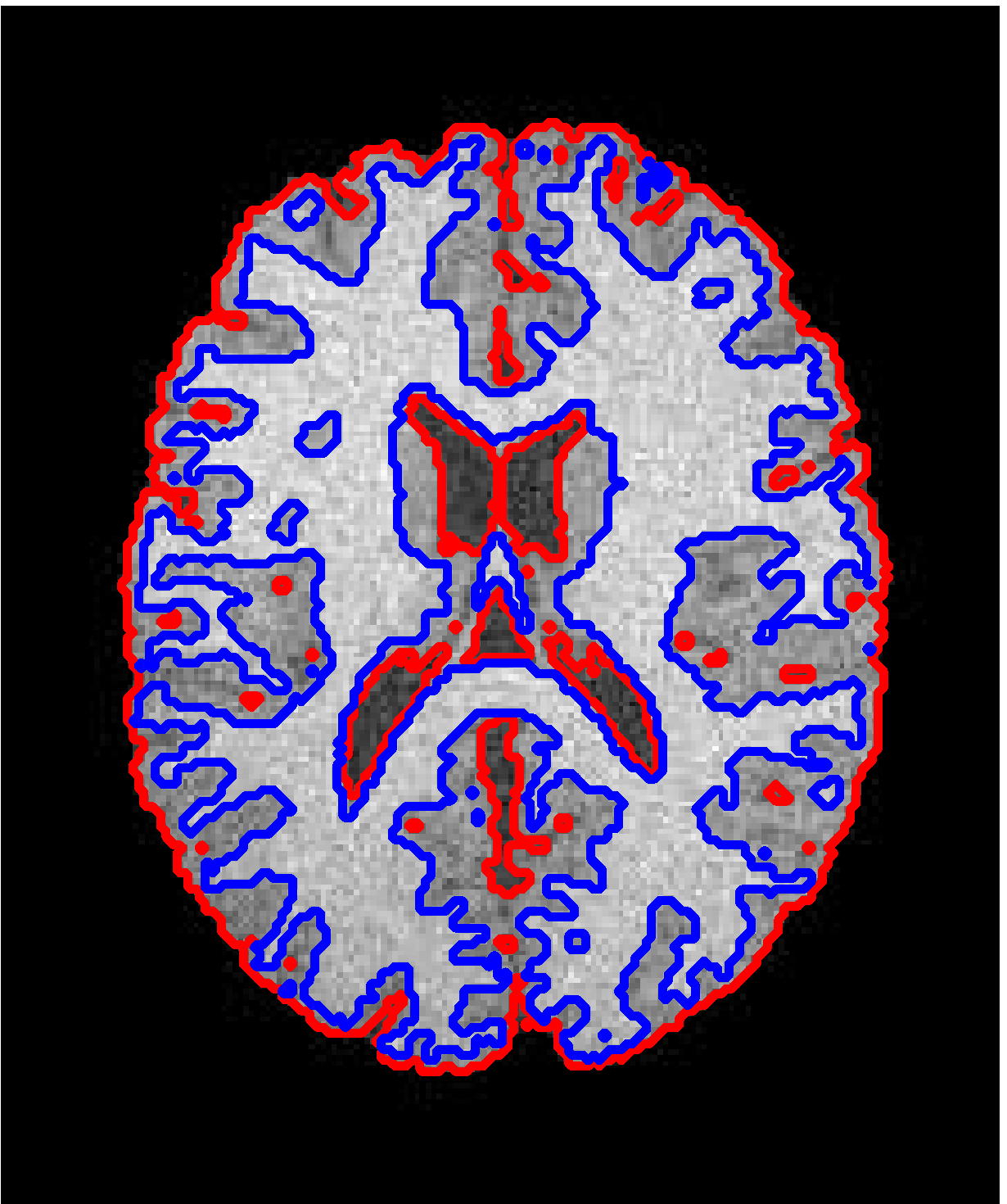}
  \end{subfigure}

 \begin{subfigure}[b]{0.16\linewidth}
      \centering
      \includegraphics[width=\textwidth]{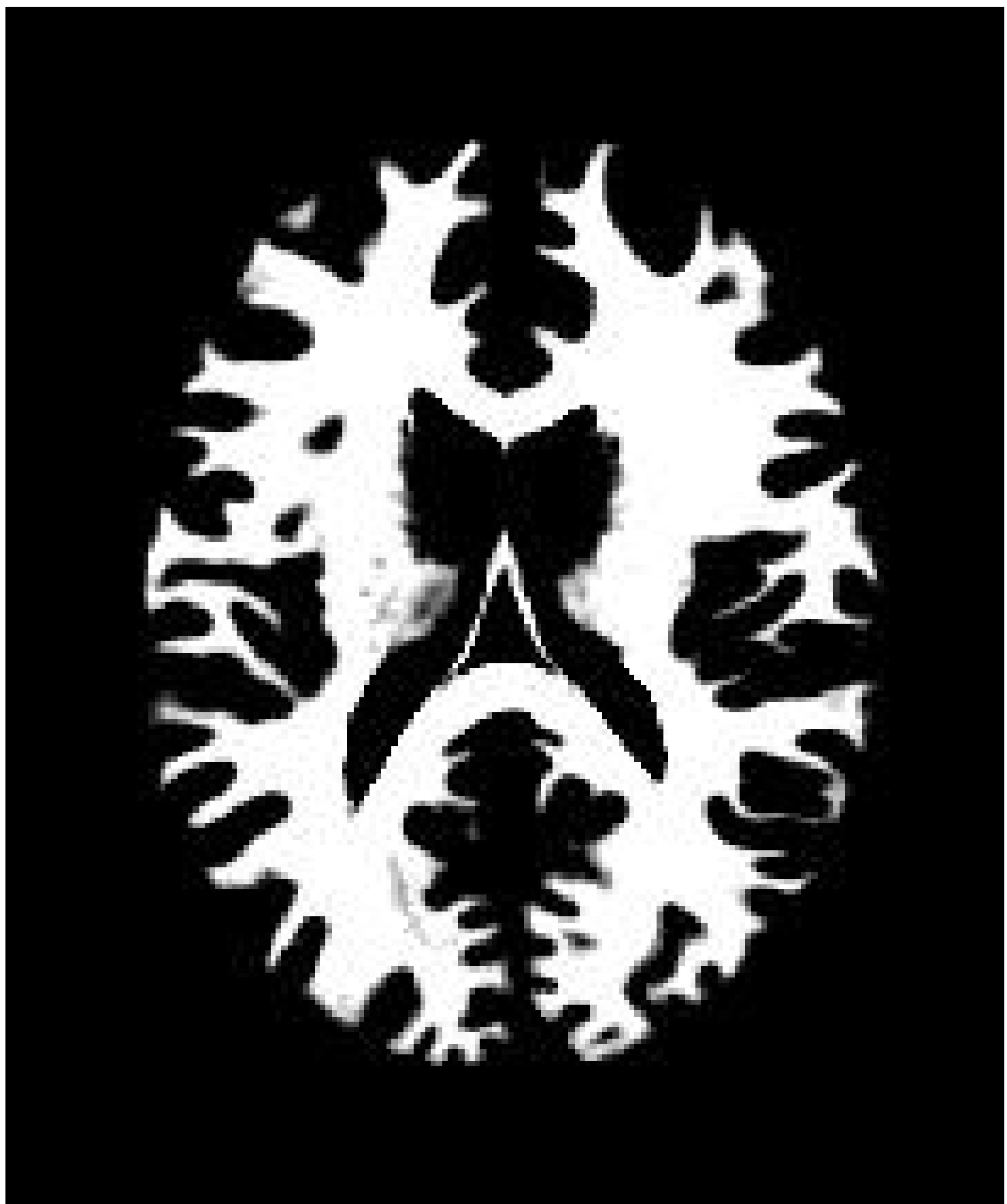}
  \end{subfigure}
   \hfill
  \begin{subfigure}[b]{0.16\linewidth}
      \centering
      \includegraphics[width=\textwidth]{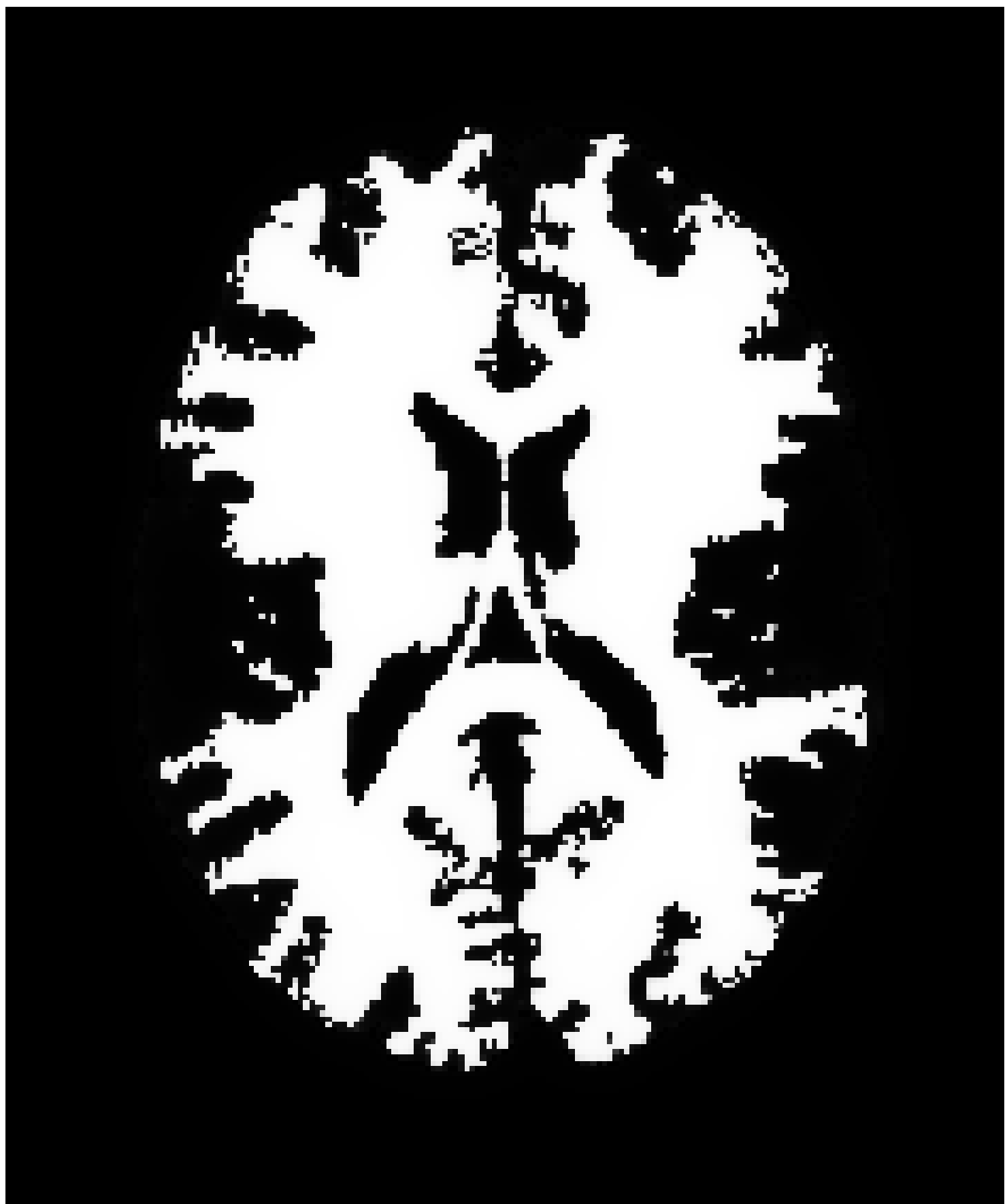}
  \end{subfigure}
  \hfill
  \begin{subfigure}[b]{0.16\linewidth}
      \centering
      \includegraphics[width=\textwidth]{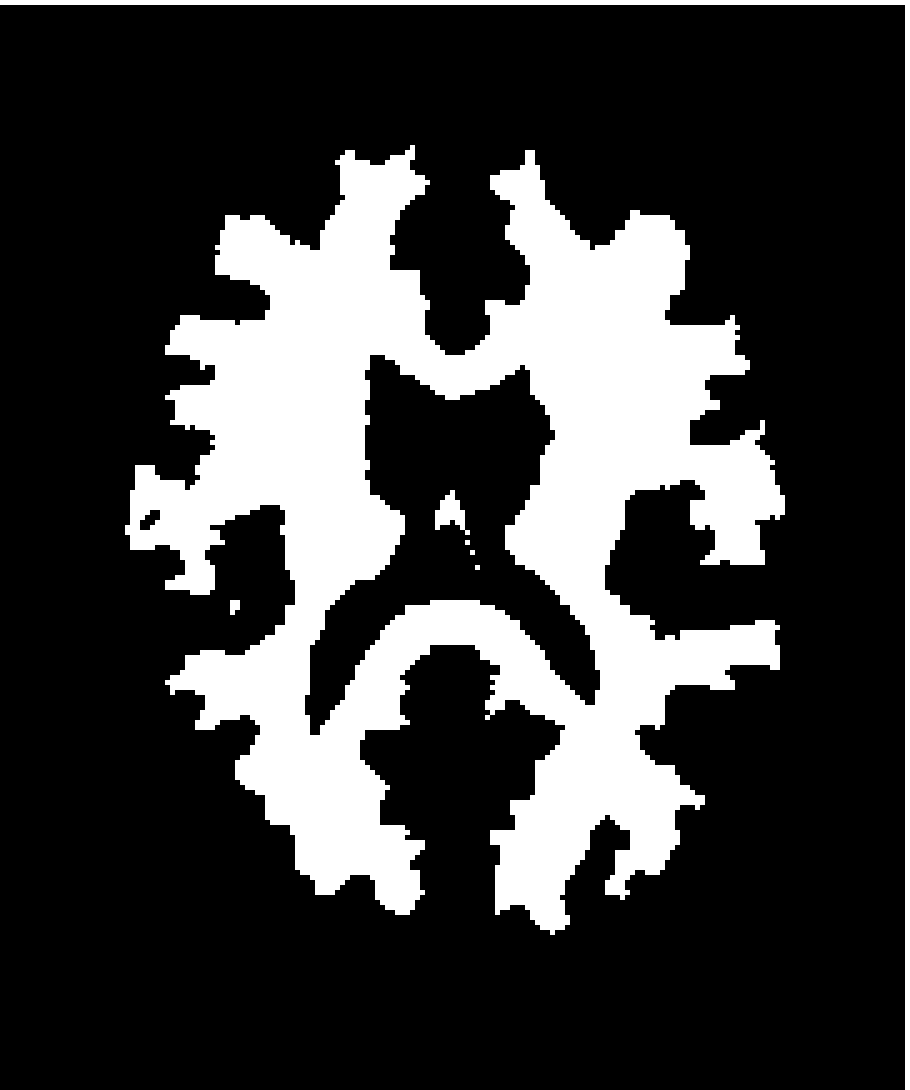}
  \end{subfigure}
   \hfill
  \begin{subfigure}[b]{0.16\linewidth}
      \centering
      \includegraphics[width=\textwidth]{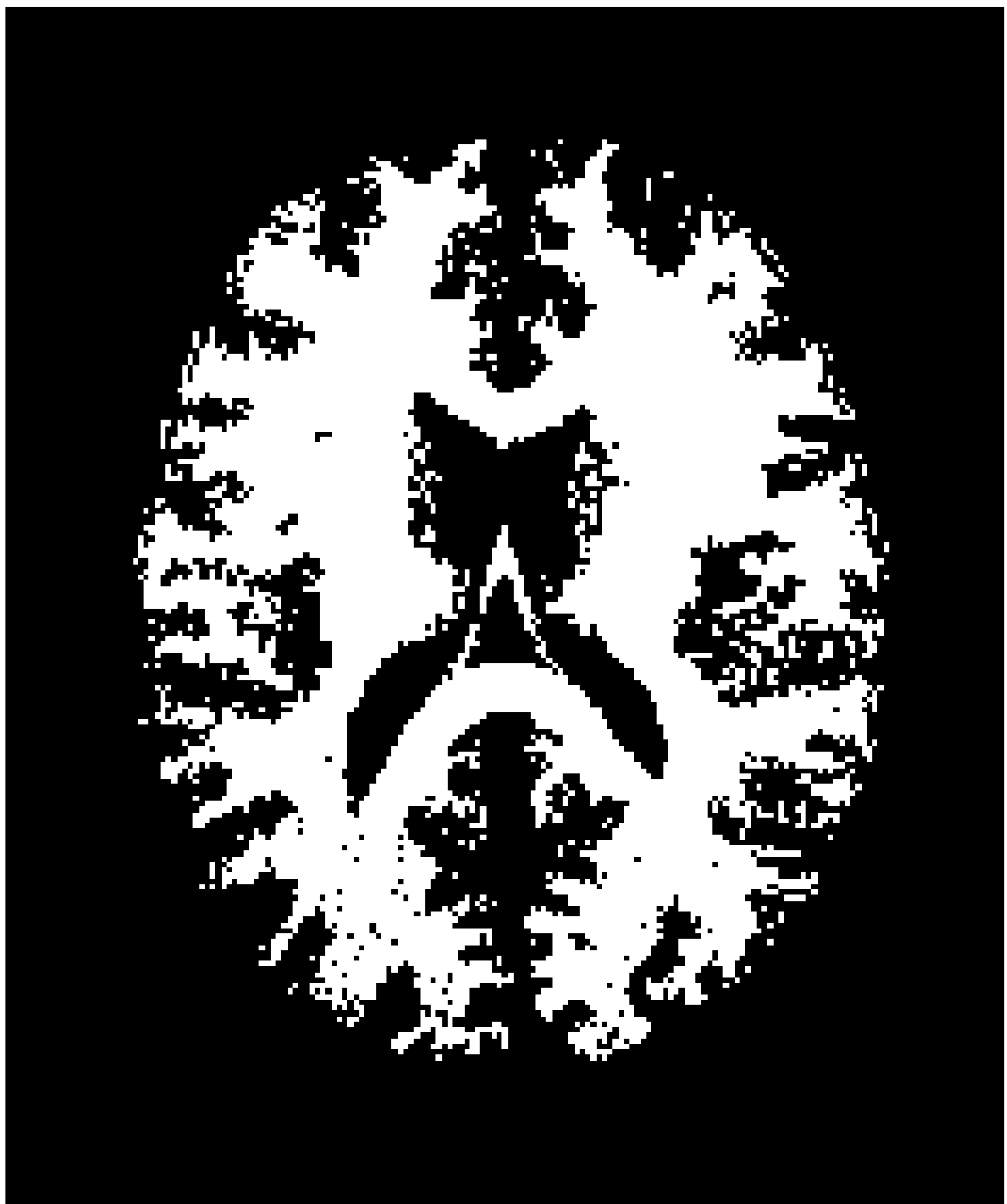}
  \end{subfigure}
  \hfill
  \begin{subfigure}[b]{0.16\linewidth}
      \centering
      \includegraphics[width=\textwidth]{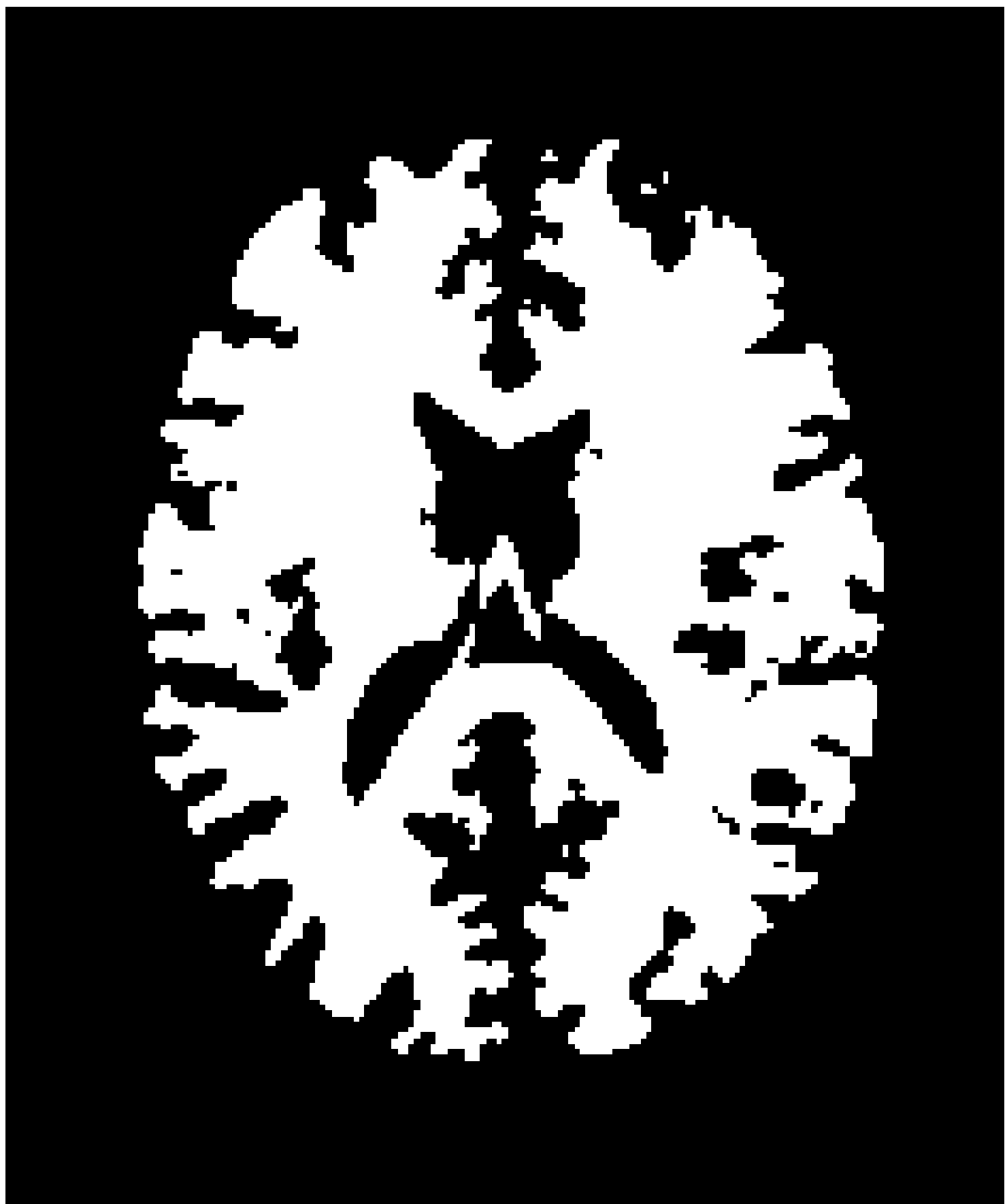}
  \end{subfigure}
   \hfill
  \begin{subfigure}[b]{0.16\linewidth}
      \centering
      \includegraphics[width=\textwidth]{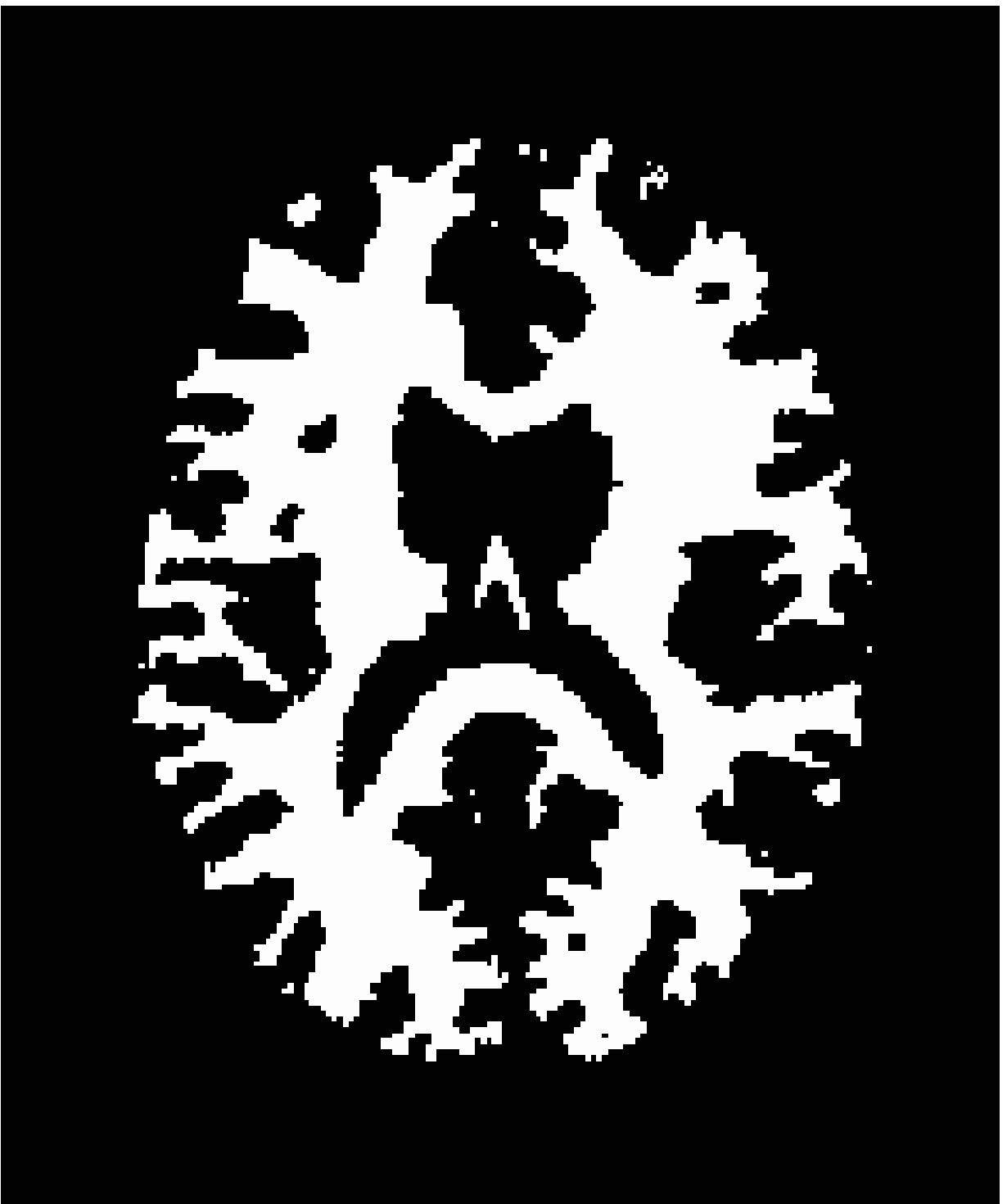}
  \end{subfigure}

 \begin{subfigure}[b]{0.16\linewidth}
      \centering
      \includegraphics[width=\textwidth]{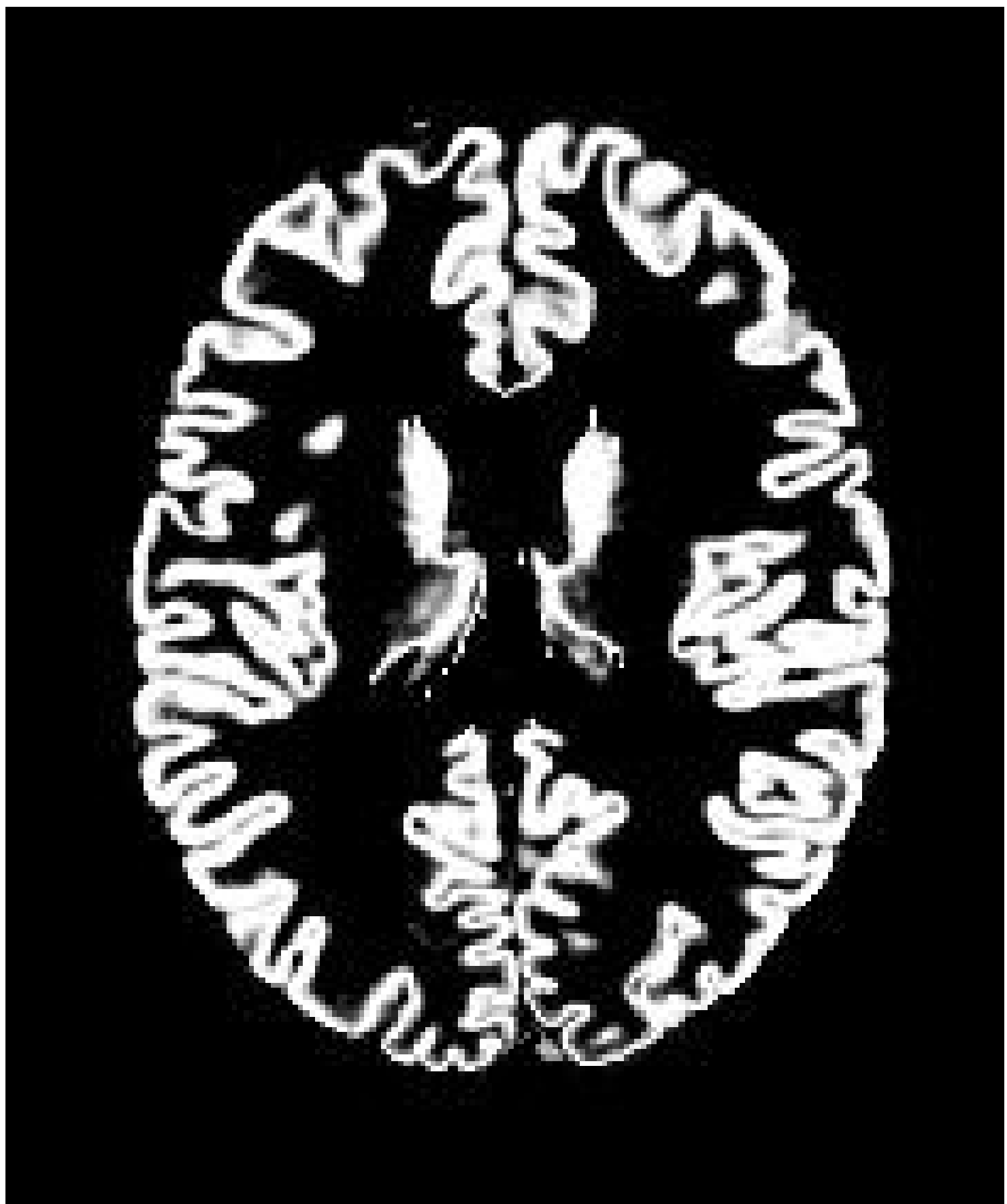}
       \caption{}
       \label{fig:90-initial}
  \end{subfigure}
   \hfill
  \begin{subfigure}[b]{0.16\linewidth}
      \centering
      \includegraphics[width=\textwidth]{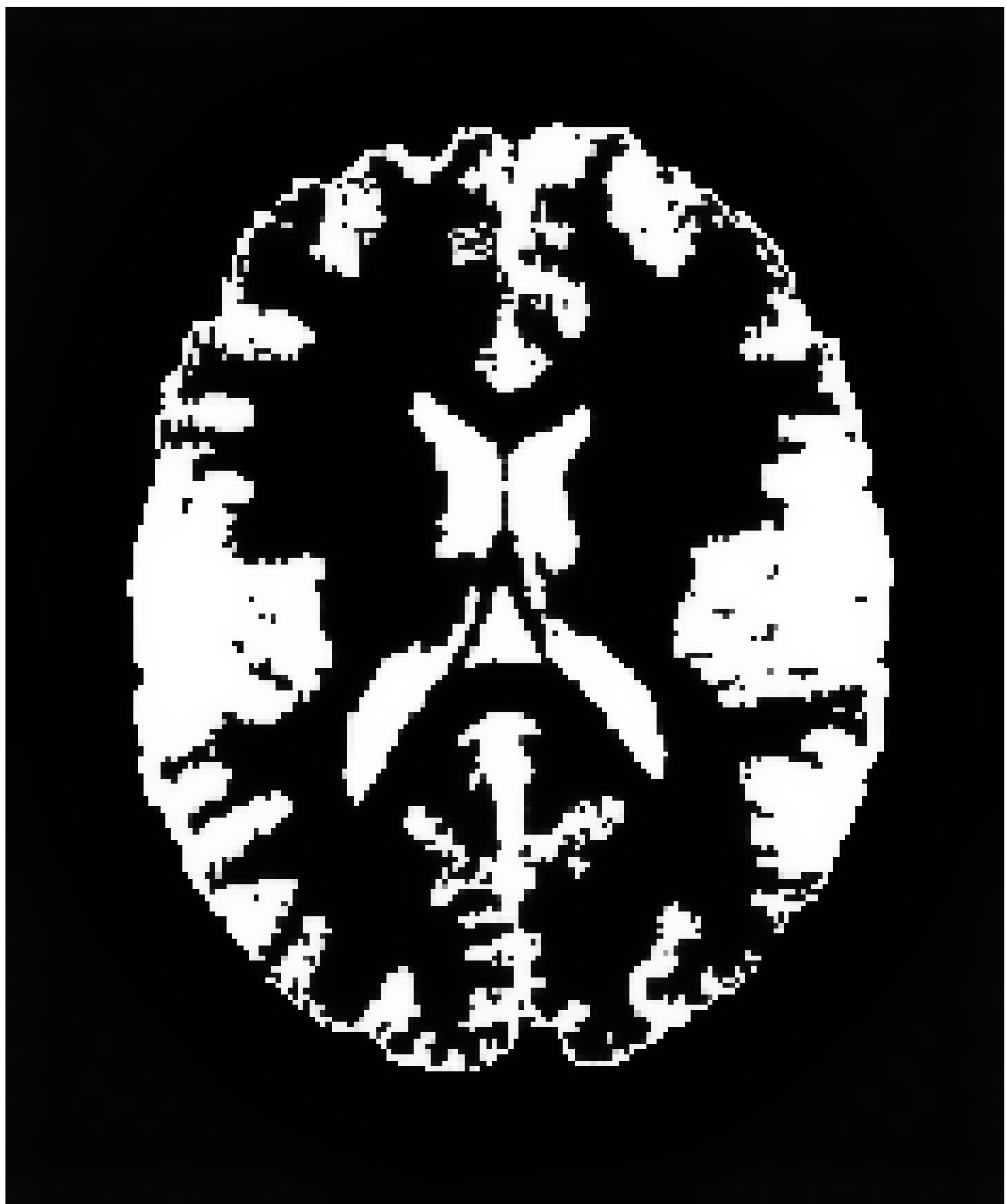}
       \caption{}
       \label{fig:90-LIC-seg}
  \end{subfigure}
  \hfill
  \begin{subfigure}[b]{0.16\linewidth}
      \centering
      \includegraphics[width=\textwidth]{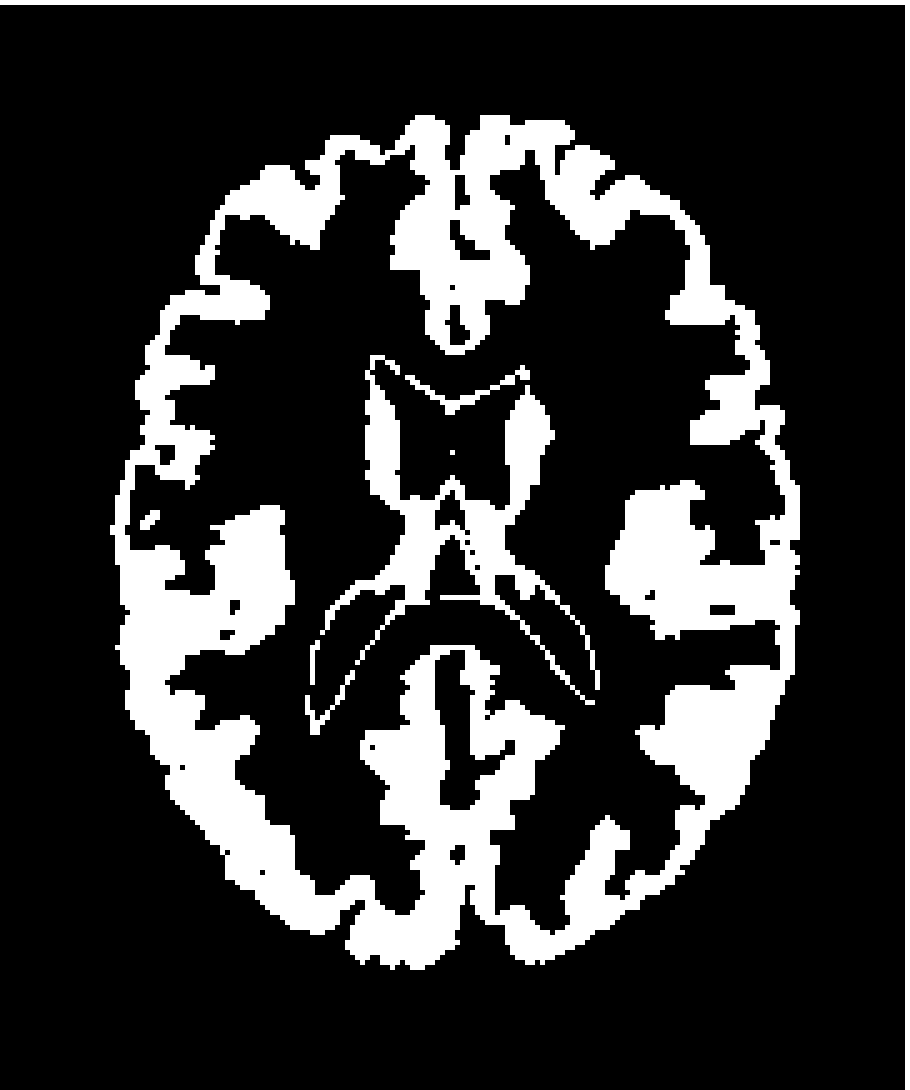}
       \caption{}
       \label{fig:90-TSS-seg}
  \end{subfigure}
   \hfill
  \begin{subfigure}[b]{0.16\linewidth}
      \centering
      \includegraphics[width=\textwidth]{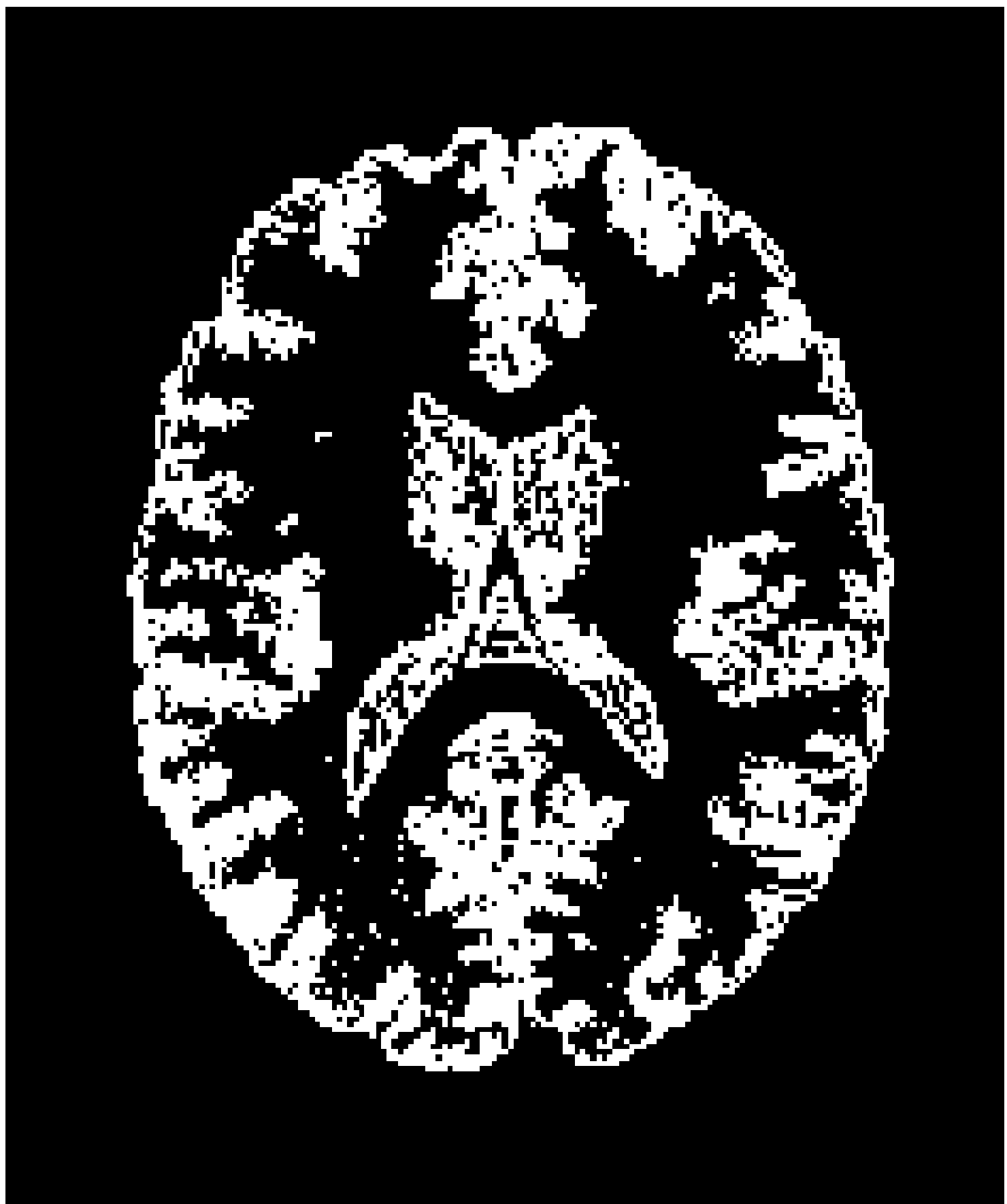}
       \caption{}
       \label{fig:90-ICTMCV-seg}
  \end{subfigure}
  \hfill
  \begin{subfigure}[b]{0.16\linewidth}
      \centering
      \includegraphics[width=\textwidth]{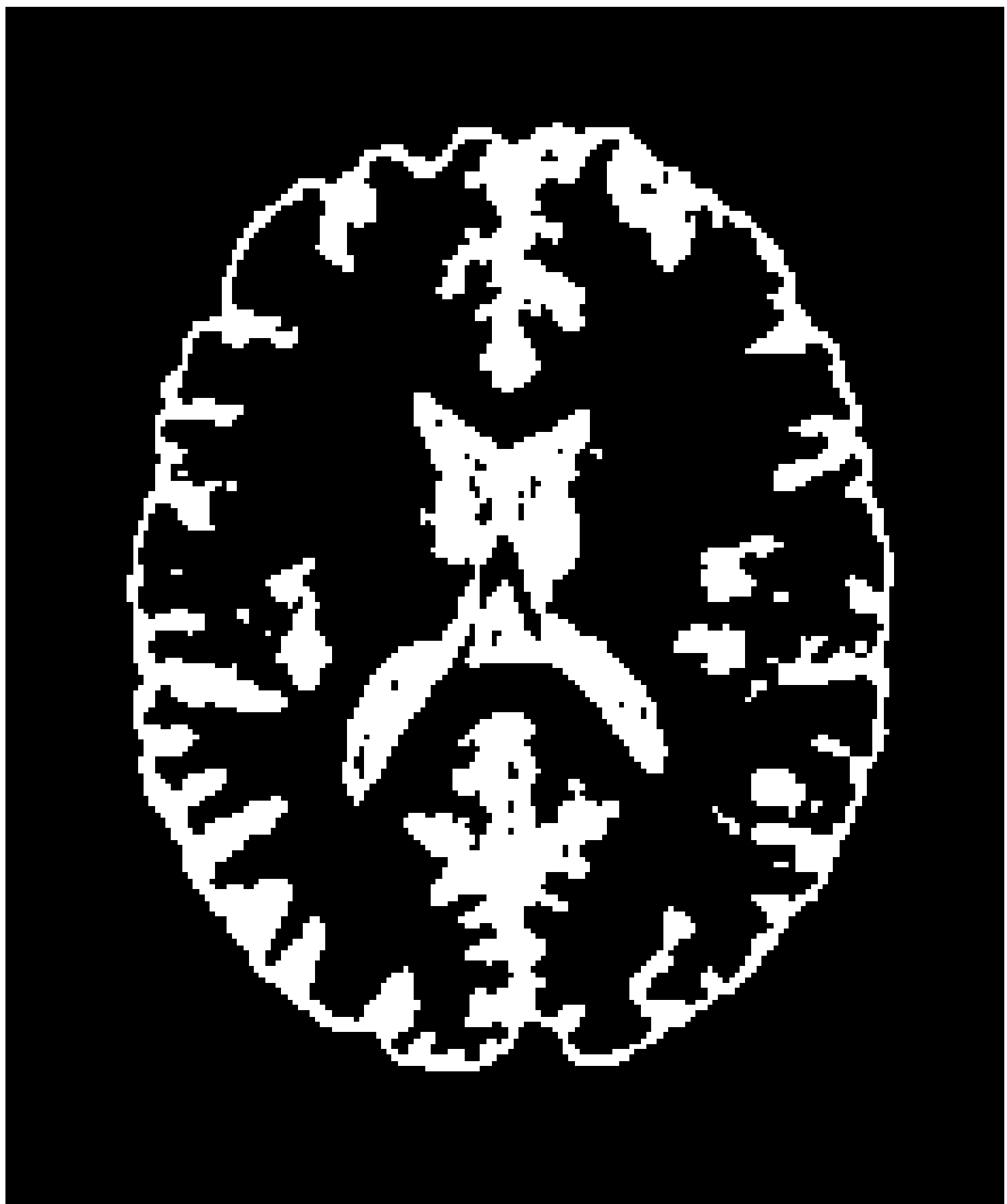}
       \caption{}
       \label{fig:90-LVFCV-seg}
  \end{subfigure}
   \hfill
  \begin{subfigure}[b]{0.16\linewidth}
      \centering
      \includegraphics[width=\textwidth]{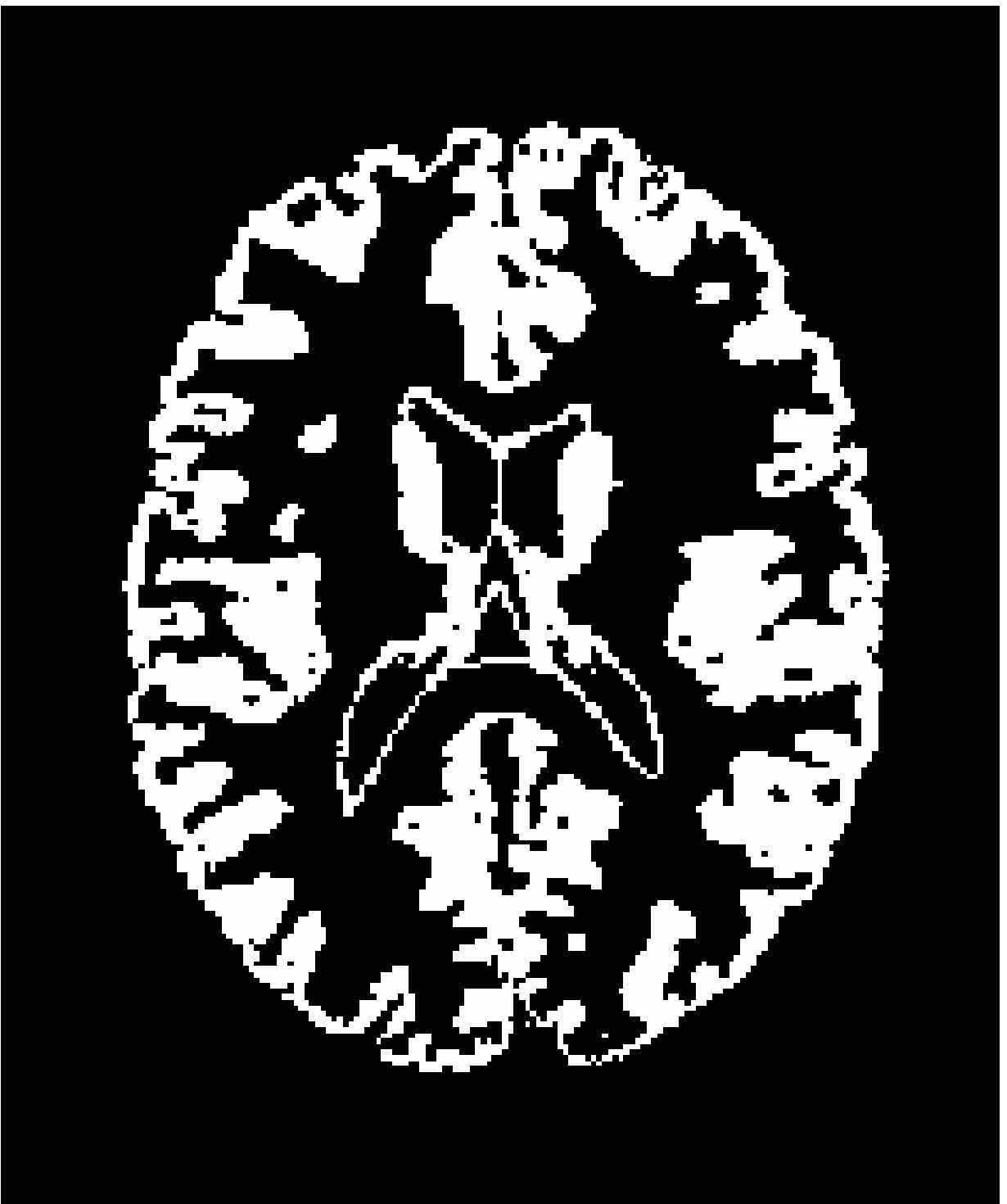}
       \caption{}
       \label{fig:90-our-gseg}
  \end{subfigure}

     \caption{Comparison with LIC, TSS, ICTM-CV, ICTM-LVF-CV models on brain MR image. Column 1: the input images with initial contours and the ground truth; Column 2--6: the segmentation results of the LIC, TSS, ICTM-CV, ICTM-LVF-CV model and our model}
  \label{fig:90}
\end{figure}
\begin{figure}
\centering
  \begin{subfigure}[b]{0.16\linewidth}
      \centering
      \includegraphics[width=\textwidth]{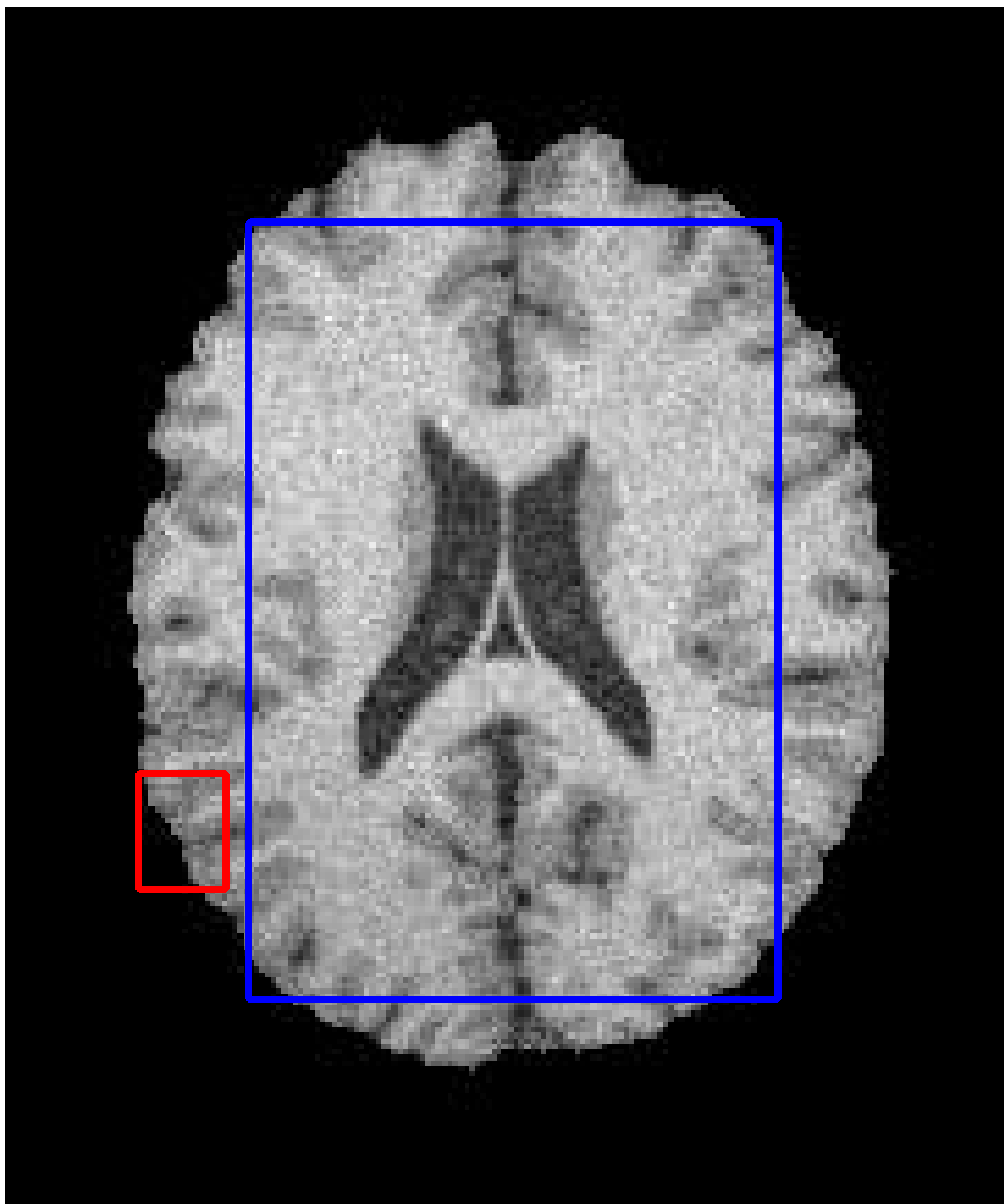}
  \end{subfigure}
  \hfill
  \begin{subfigure}[b]{0.16\linewidth}
      \centering
      \includegraphics[width=\textwidth]{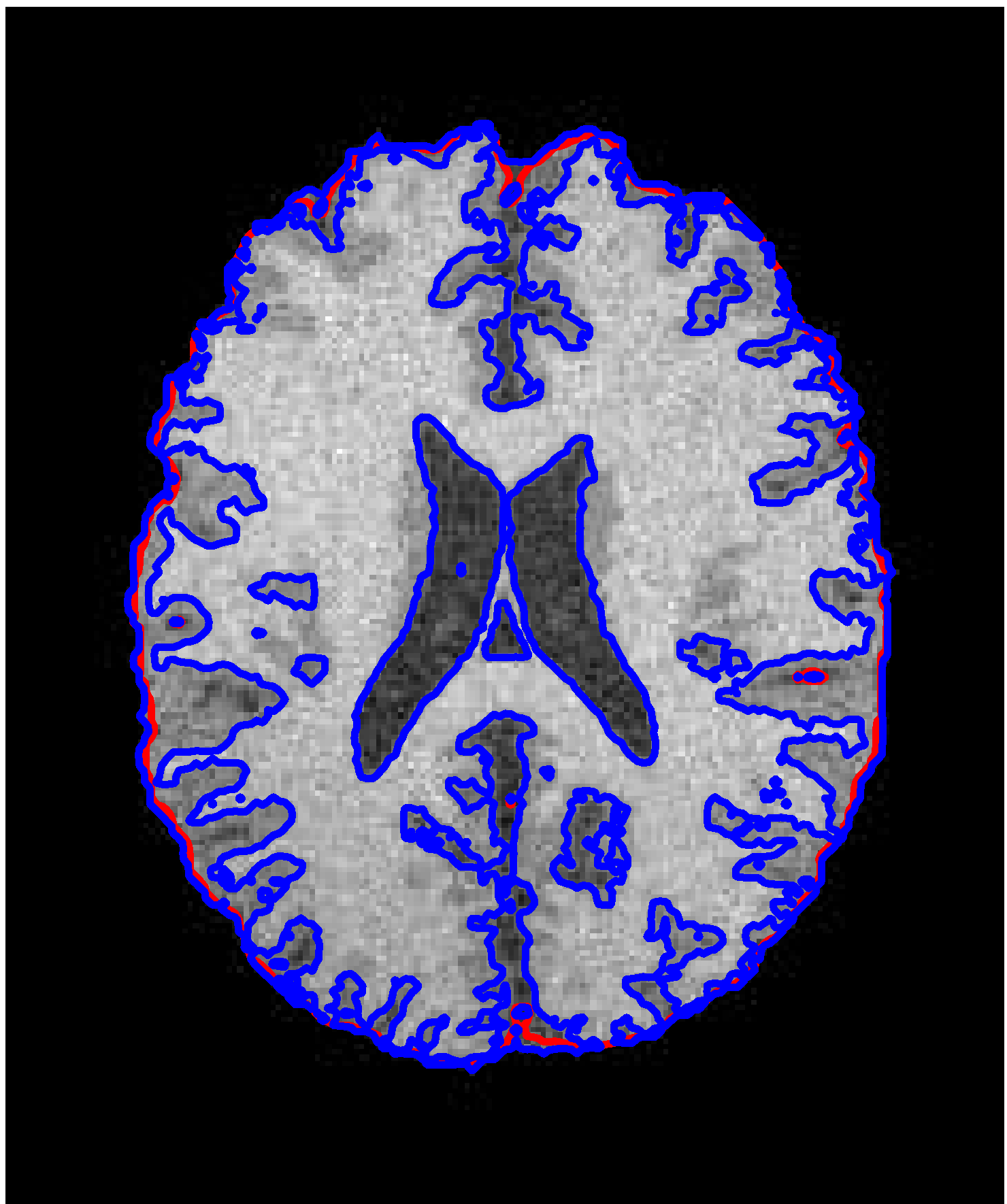}
  \end{subfigure}
  \hfill
  \begin{subfigure}[b]{0.16\linewidth}
      \centering
      \includegraphics[width=\textwidth]{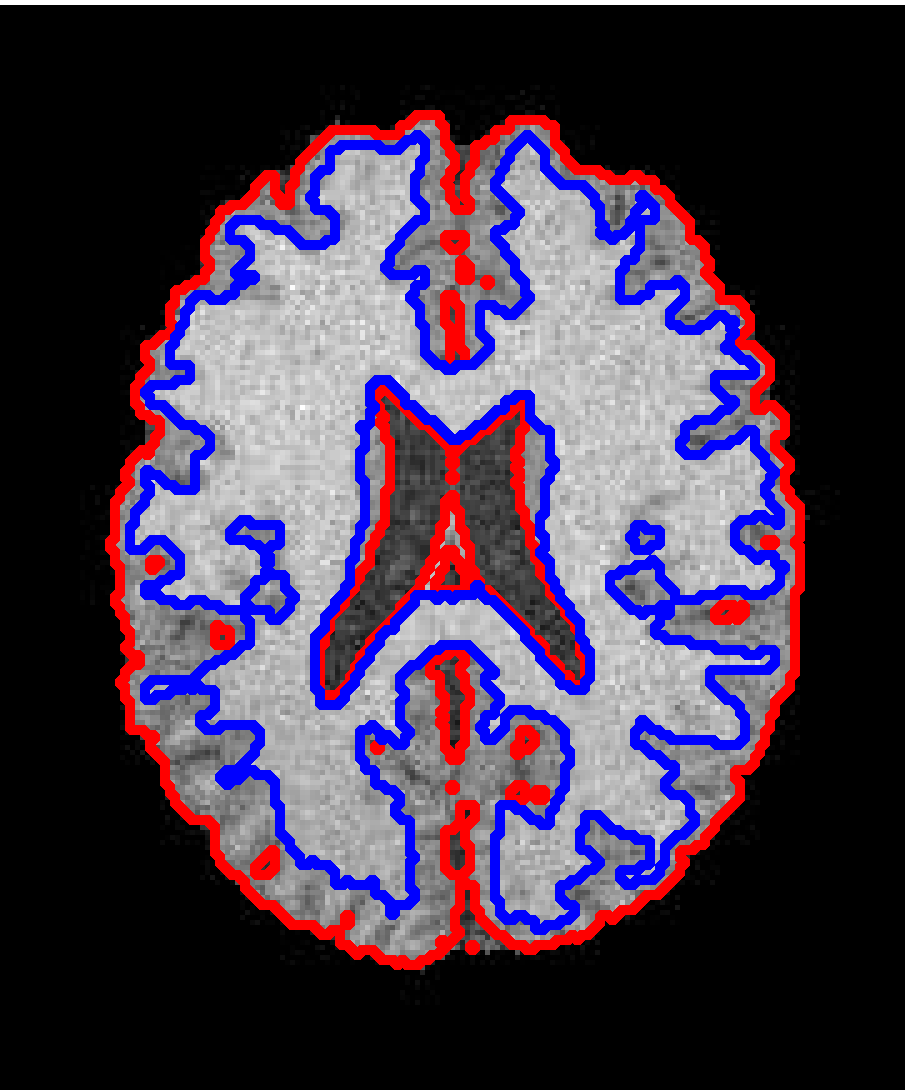}
  \end{subfigure}
 \hfill
  \begin{subfigure}[b]{0.16\linewidth}
      \centering
      \includegraphics[width=\textwidth]{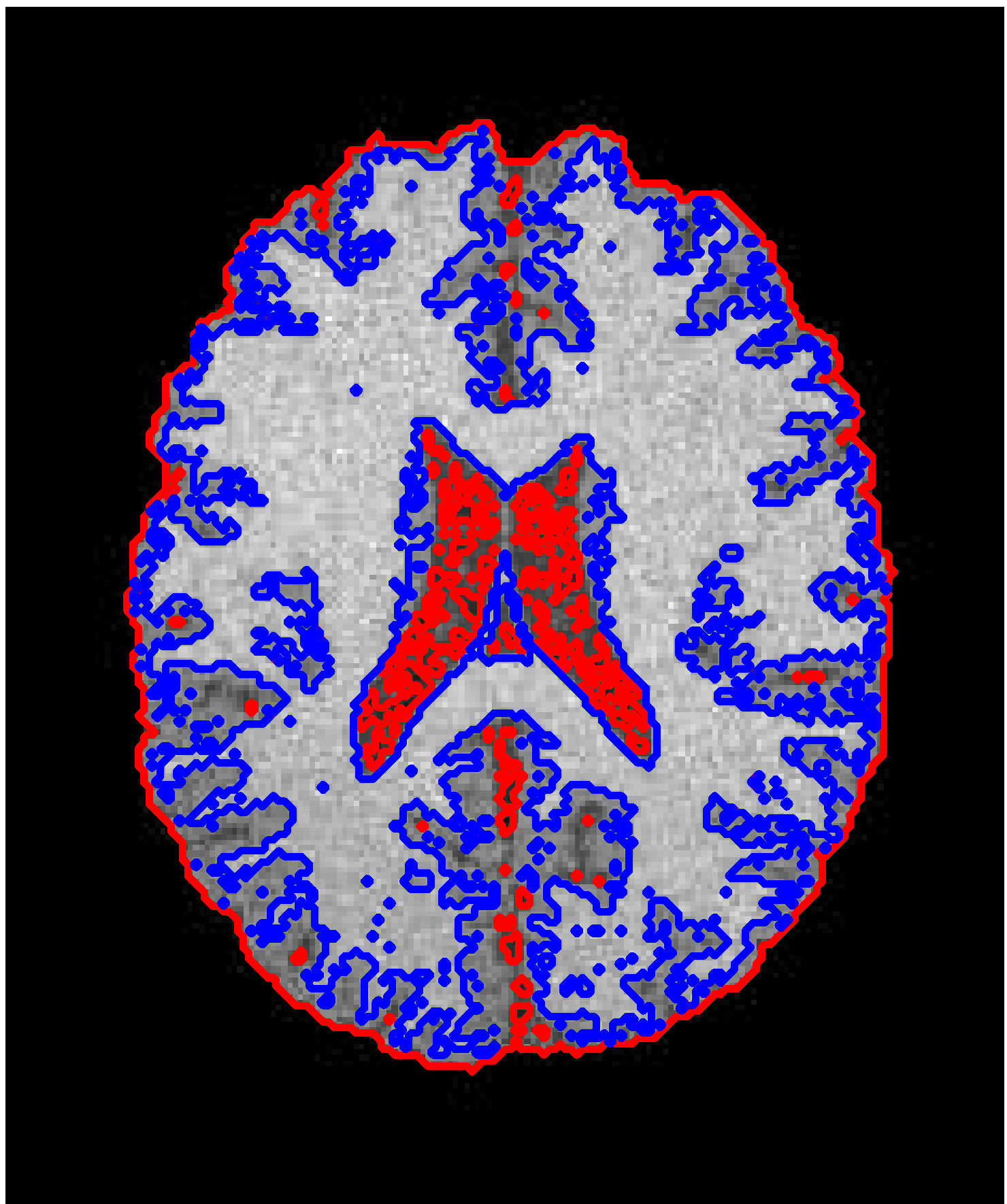}
  \end{subfigure}
  \hfill
  \begin{subfigure}[b]{0.16\linewidth}
      \centering
      \includegraphics[width=\textwidth]{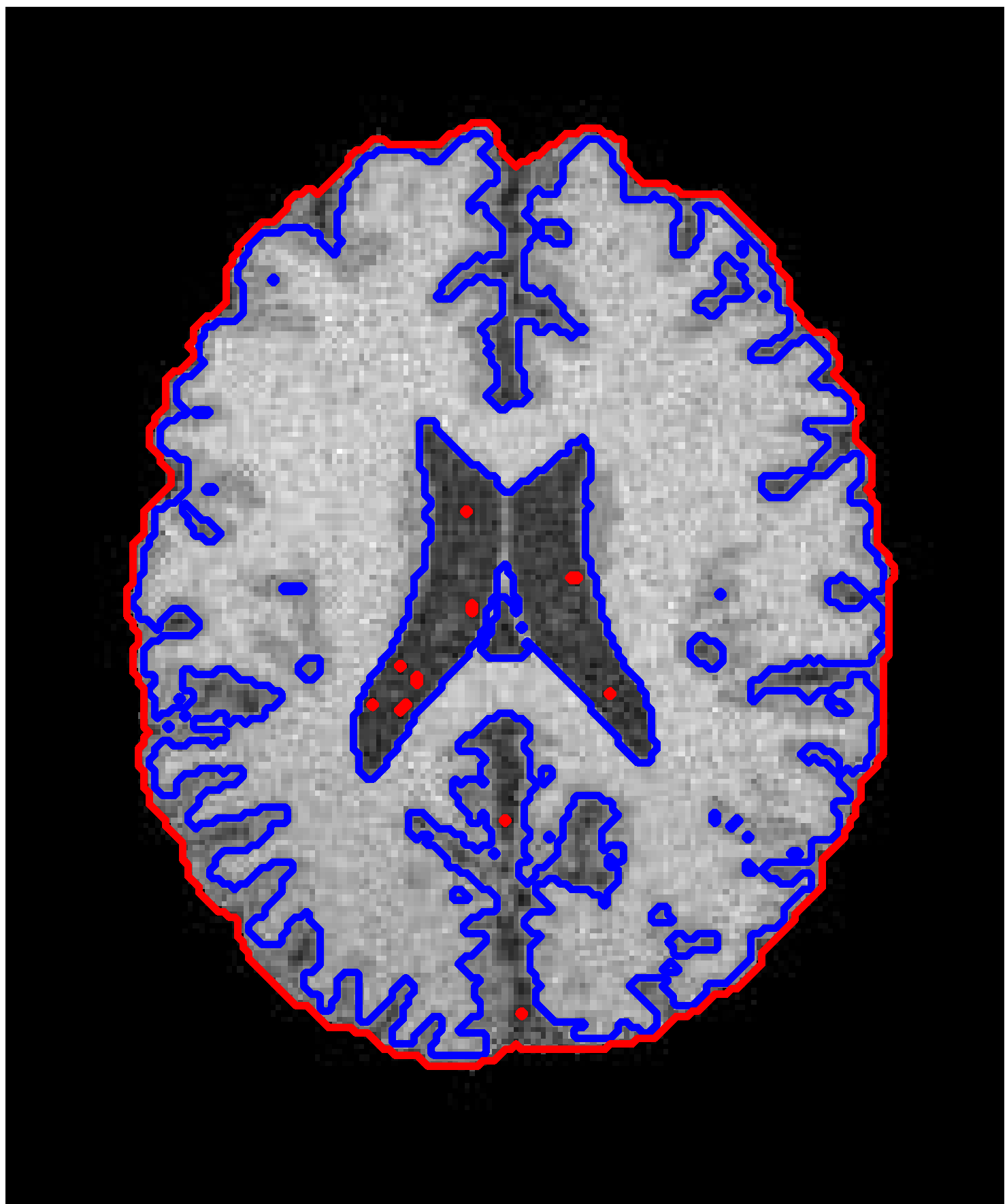}
  \end{subfigure}
 \hfill
  \begin{subfigure}[b]{0.16\linewidth}
      \centering
      \includegraphics[width=\textwidth]{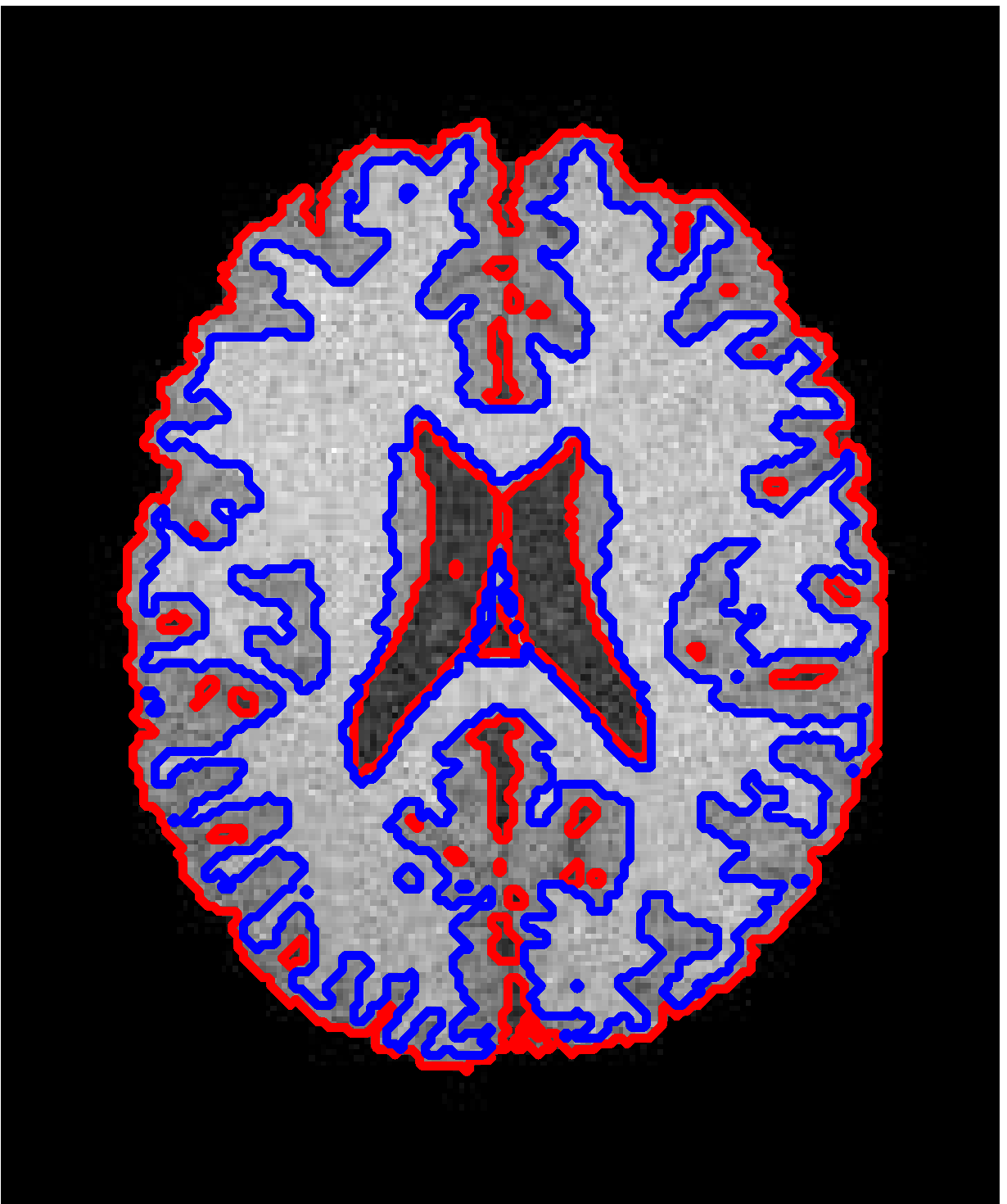}
  \end{subfigure}

 \begin{subfigure}[b]{0.16\linewidth}
      \centering
      \includegraphics[width=\textwidth]{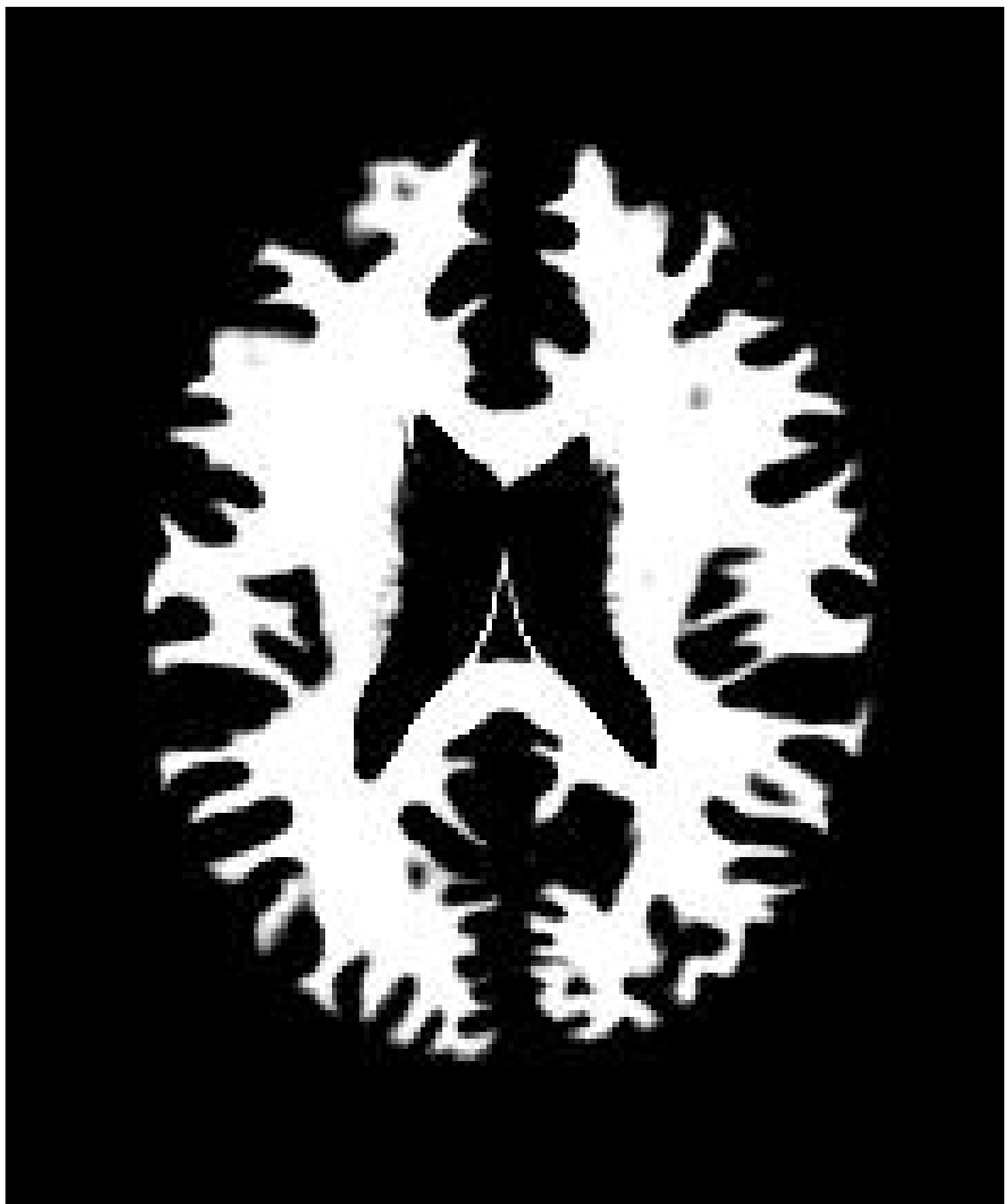}
  \end{subfigure}
   \hfill
  \begin{subfigure}[b]{0.16\linewidth}
      \centering
      \includegraphics[width=\textwidth]{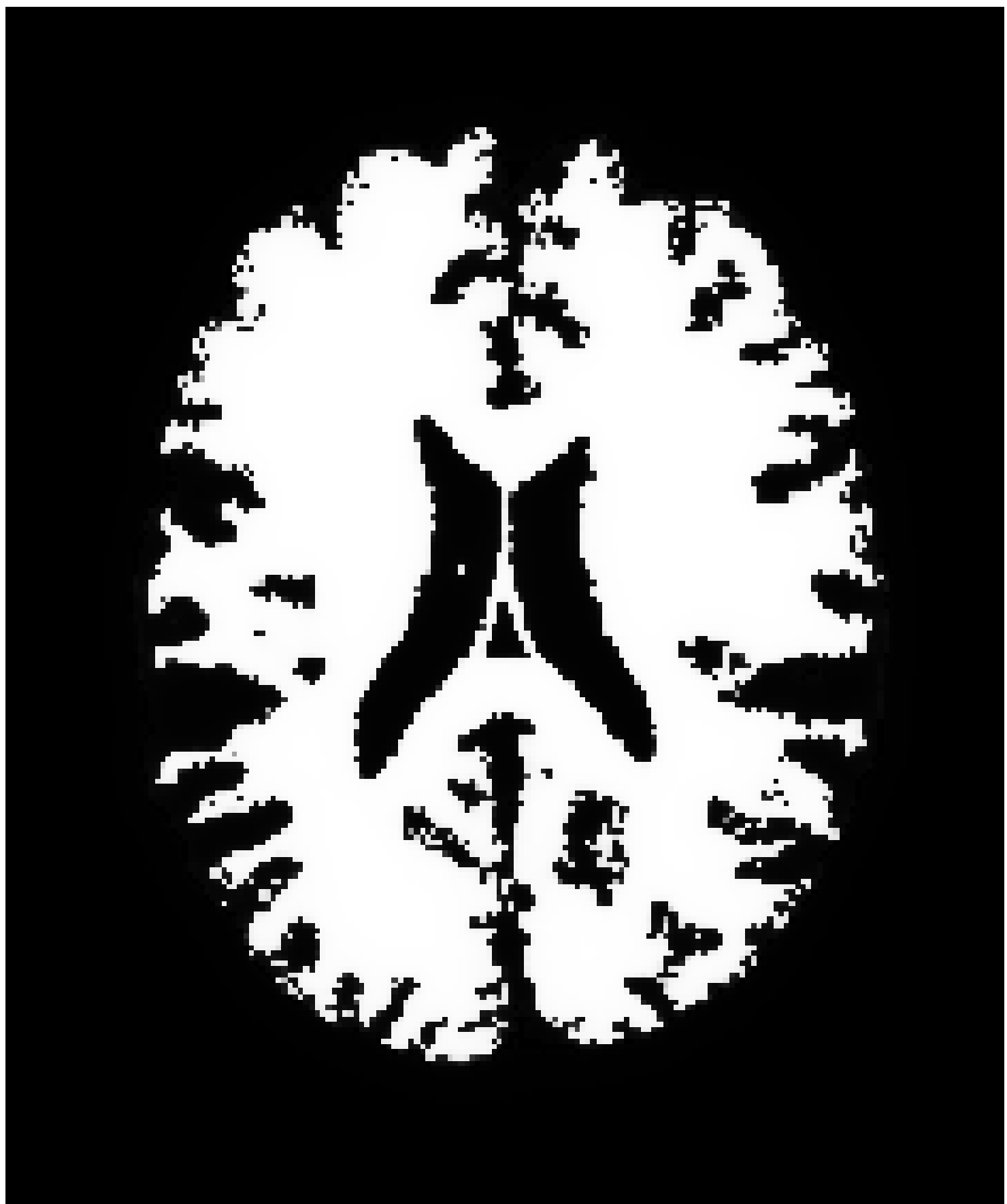}
  \end{subfigure}
  \hfill
  \begin{subfigure}[b]{0.16\linewidth}
      \centering
      \includegraphics[width=\textwidth]{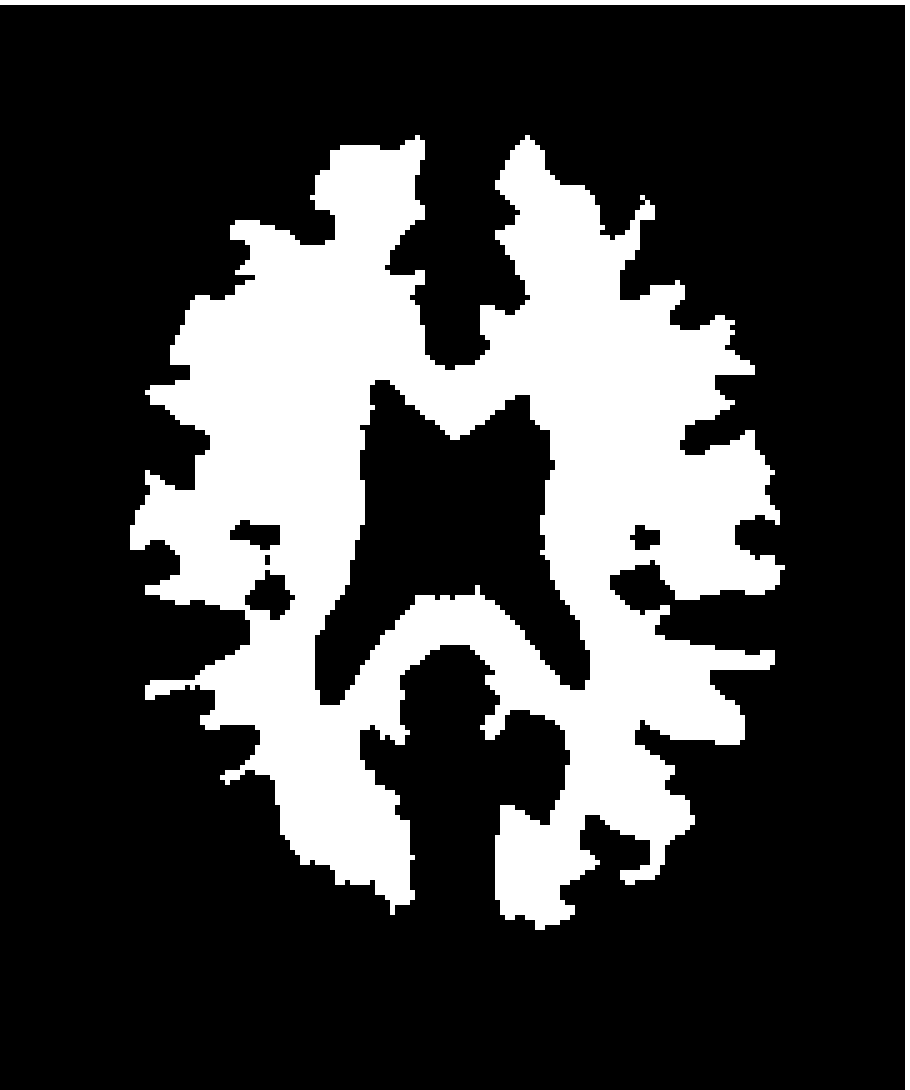}
  \end{subfigure}
   \hfill
  \begin{subfigure}[b]{0.16\linewidth}
      \centering
      \includegraphics[width=\textwidth]{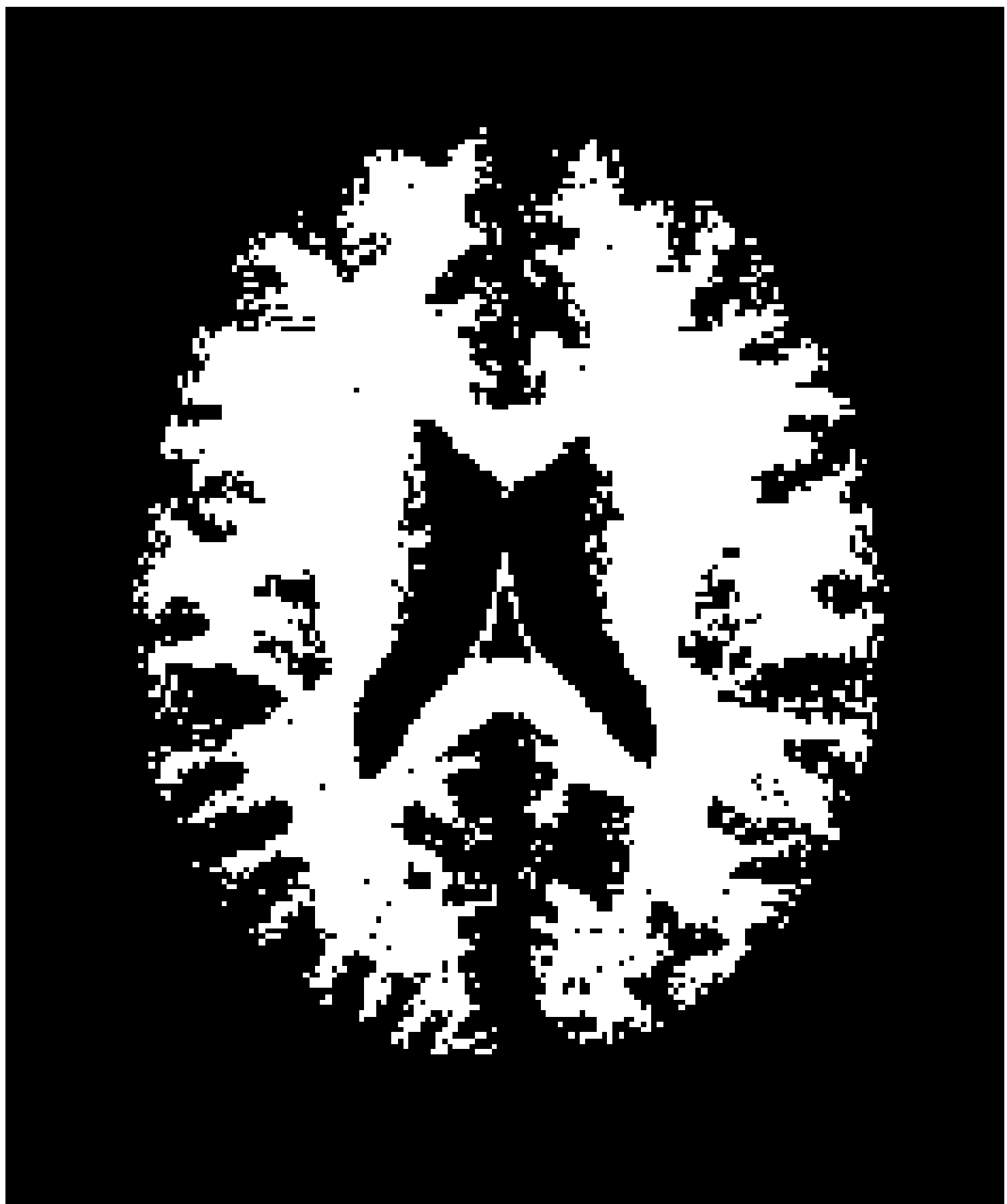}
  \end{subfigure}
  \hfill
  \begin{subfigure}[b]{0.16\linewidth}
      \centering
      \includegraphics[width=\textwidth]{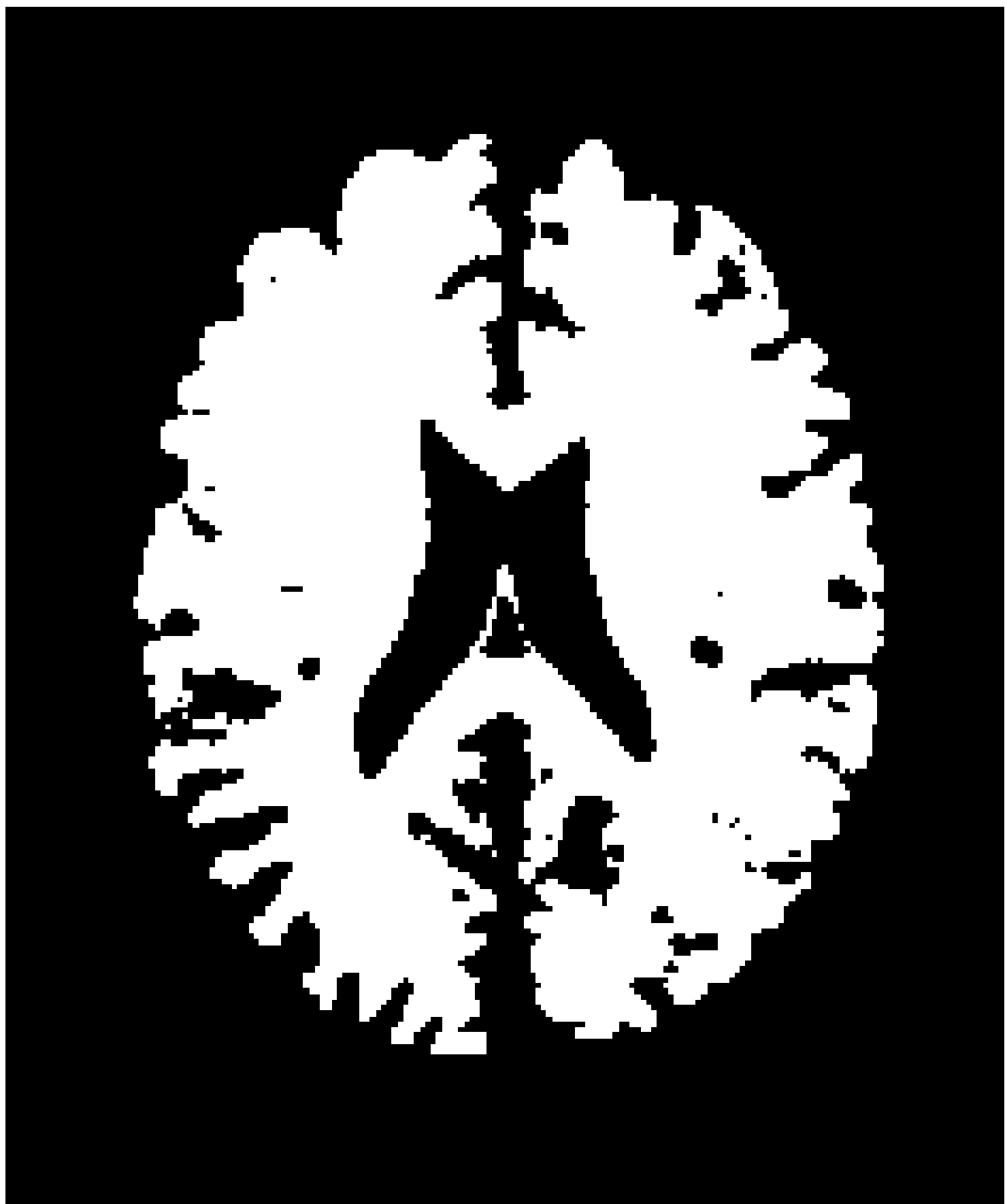}
  \end{subfigure}
   \hfill
  \begin{subfigure}[b]{0.16\linewidth}
      \centering
      \includegraphics[width=\textwidth]{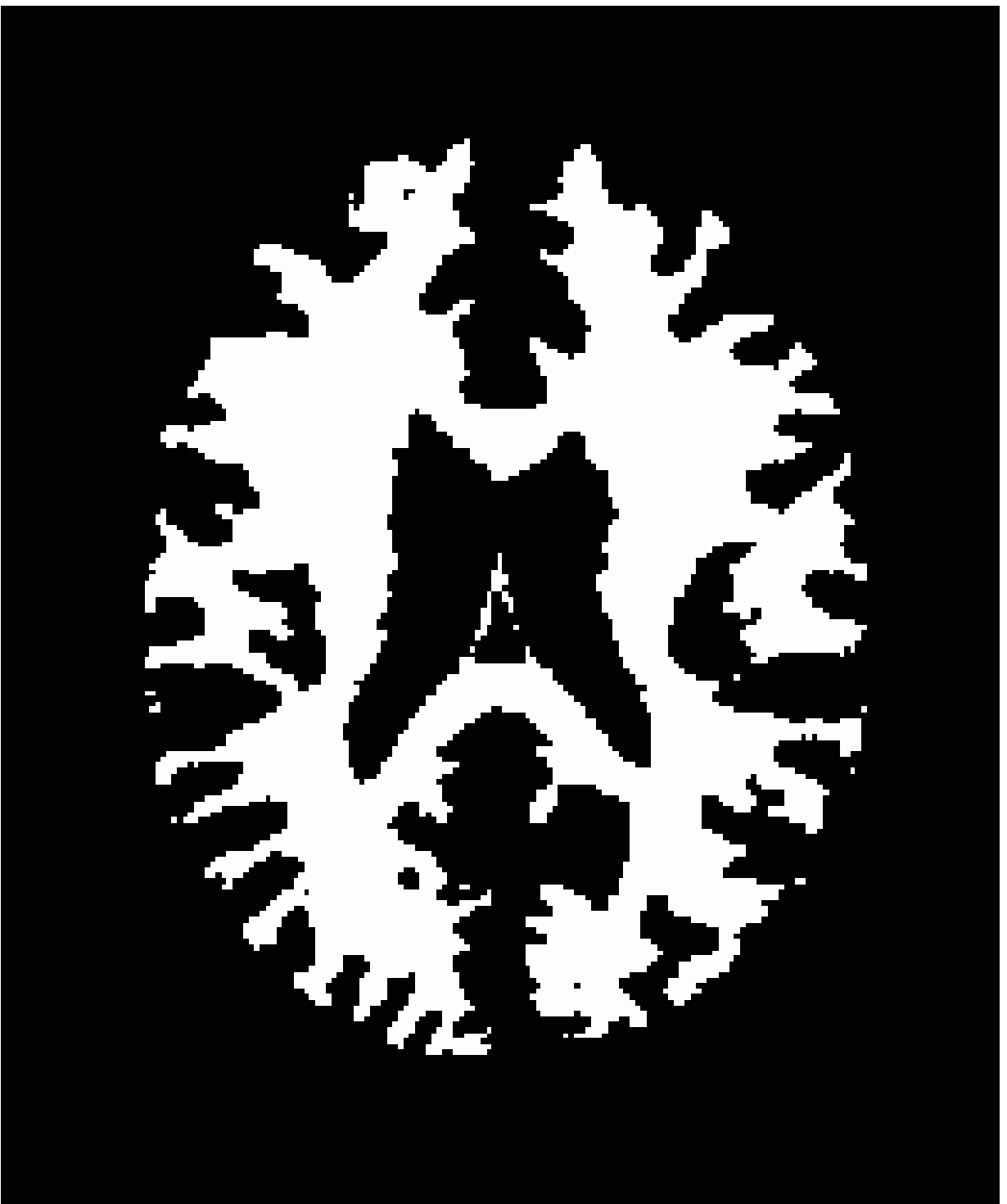}
  \end{subfigure}

 \begin{subfigure}[b]{0.16\linewidth}
      \centering
      \includegraphics[width=\textwidth]{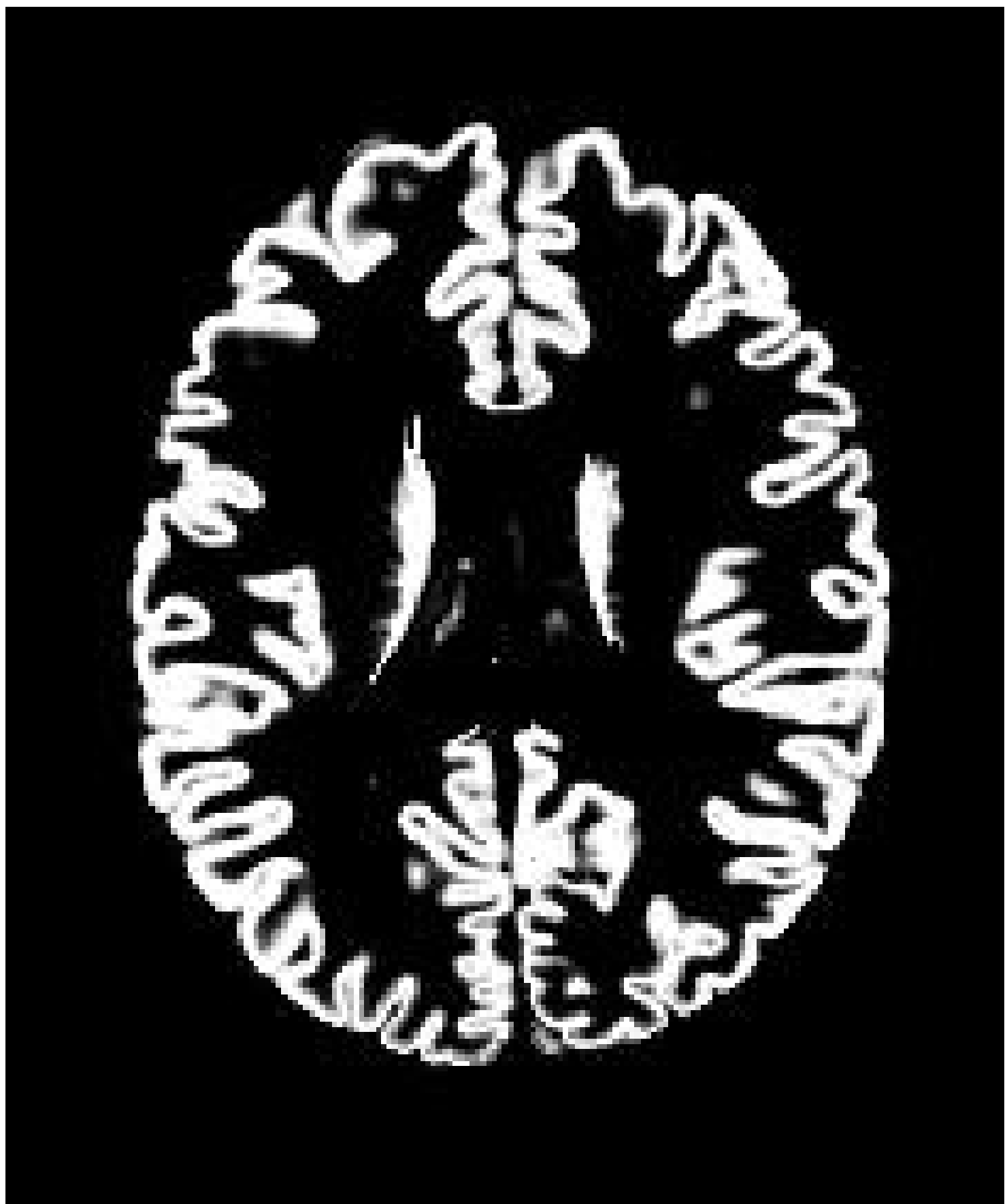}
       \caption{}
       \label{fig:93-initial}
  \end{subfigure}
   \hfill
  \begin{subfigure}[b]{0.16\linewidth}
      \centering
      \includegraphics[width=\textwidth]{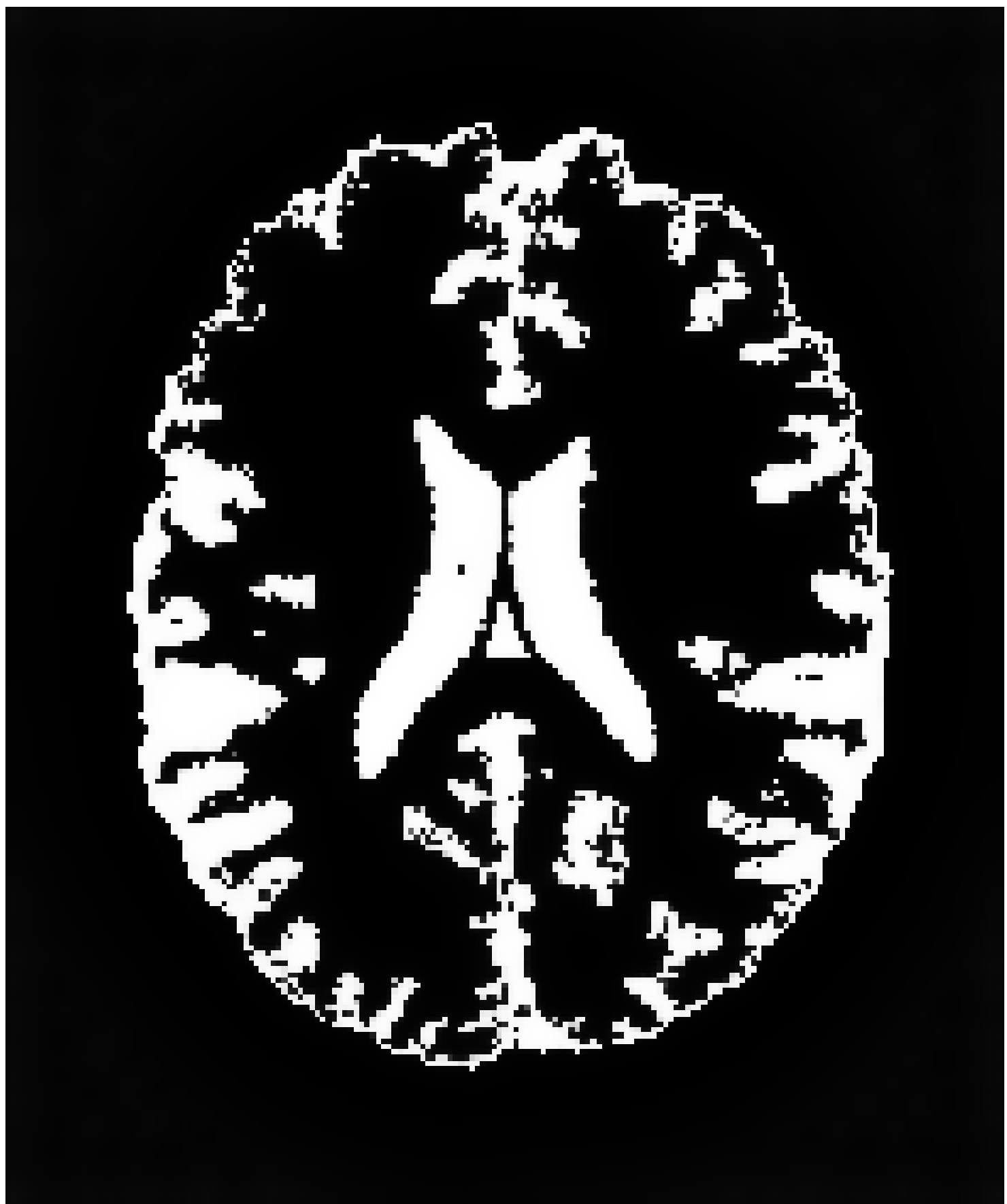}
       \caption{}
  \end{subfigure}
  \hfill
  \begin{subfigure}[b]{0.16\linewidth}
      \centering
      \includegraphics[width=\textwidth]{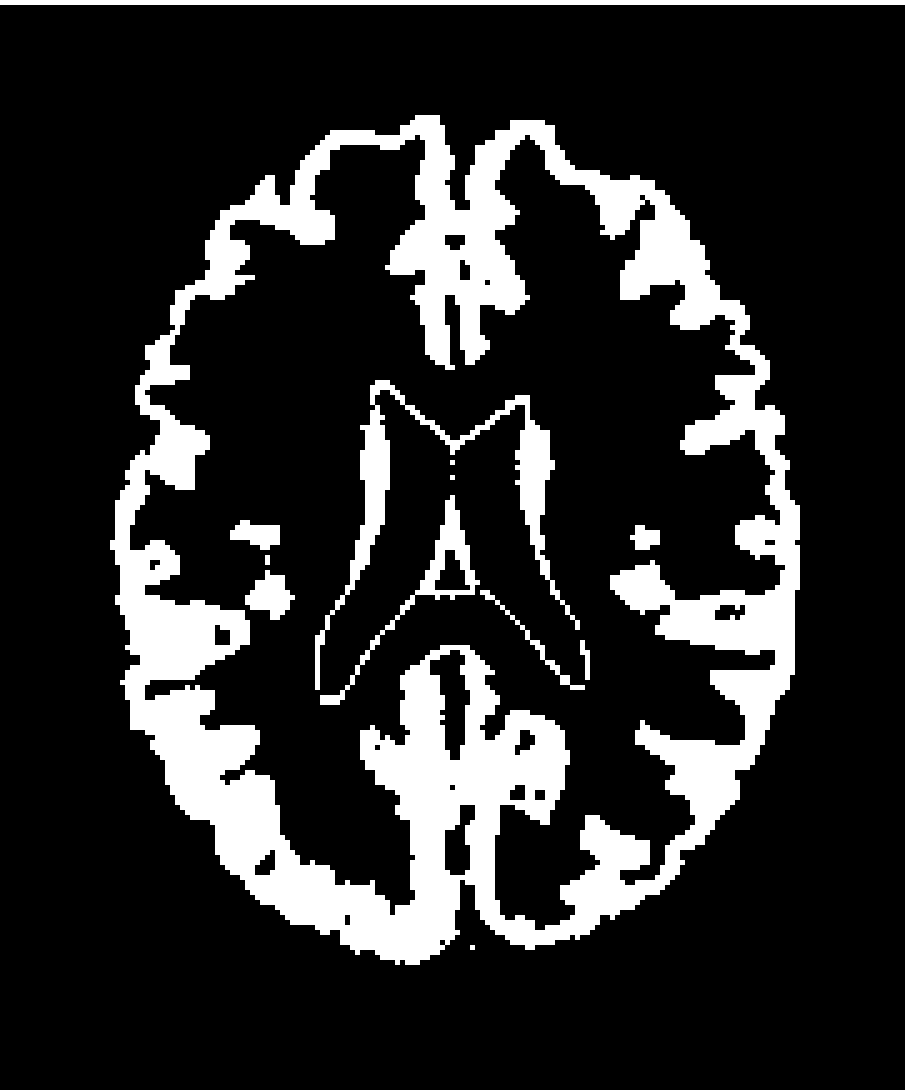}
       \caption{}
  \end{subfigure}
   \hfill
  \begin{subfigure}[b]{0.16\linewidth}
      \centering
      \includegraphics[width=\textwidth]{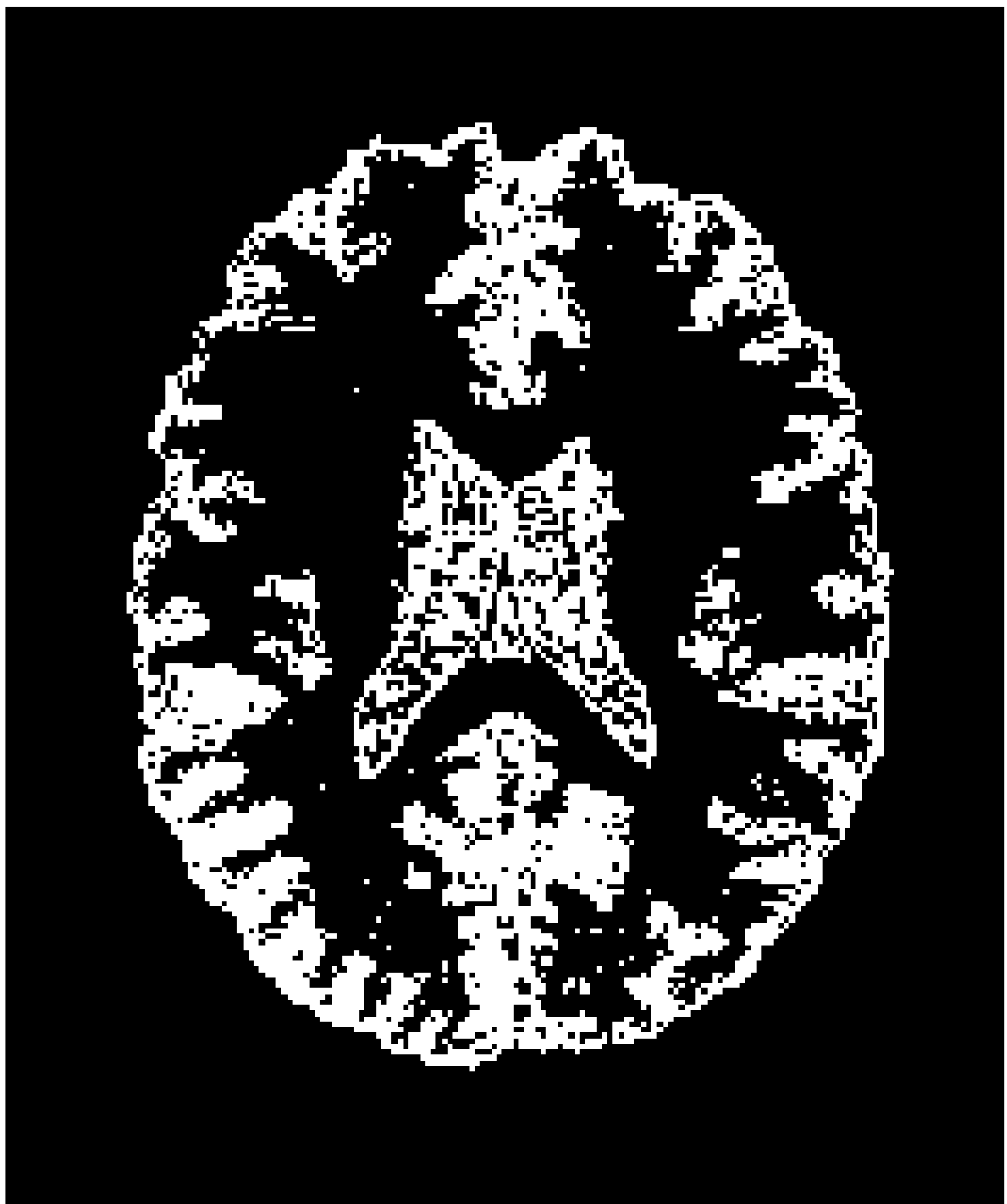}
       \caption{}
  \end{subfigure}
  \hfill
  \begin{subfigure}[b]{0.16\linewidth}
      \centering
      \includegraphics[width=\textwidth]{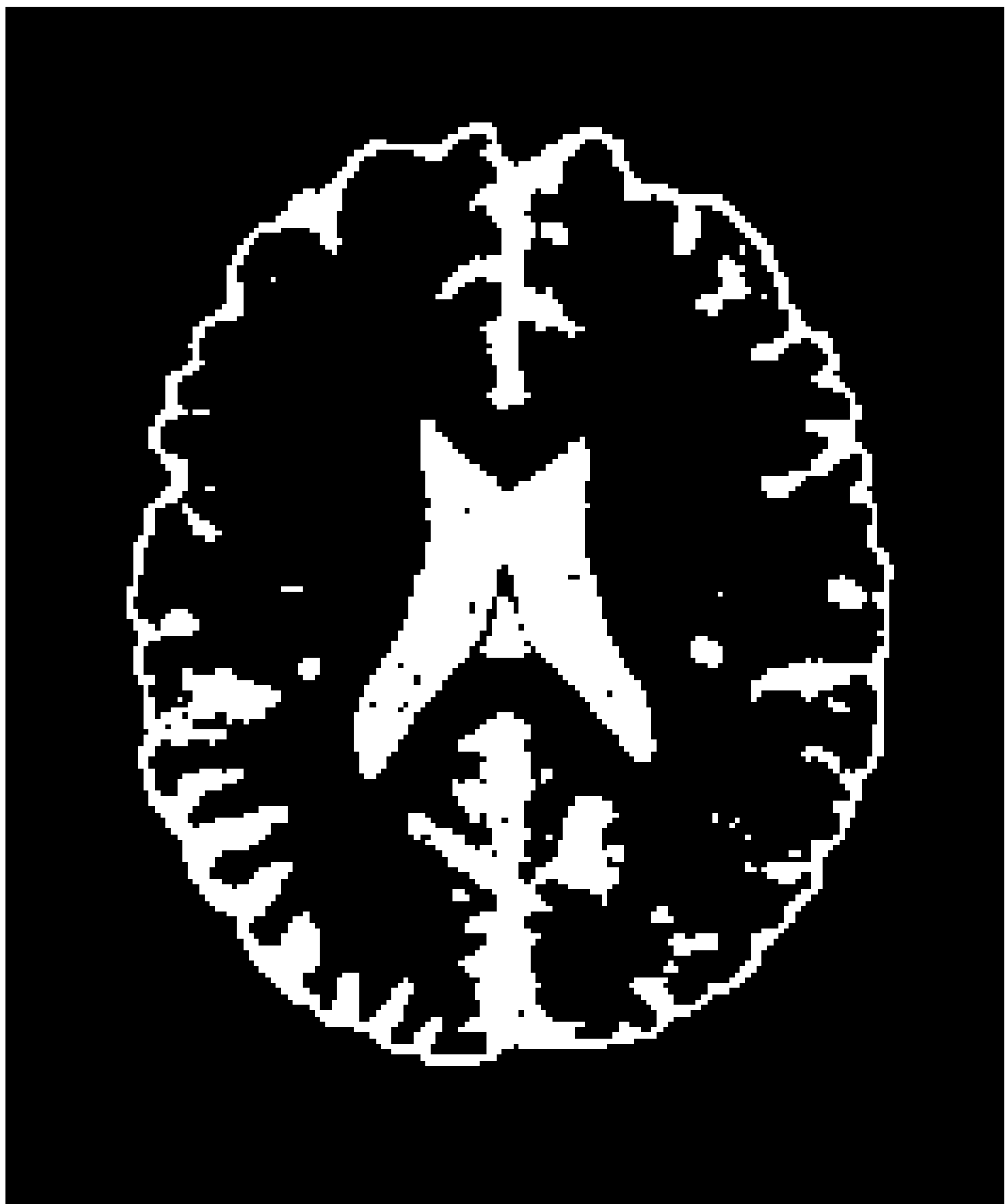}
       \caption{}
  \end{subfigure}
   \hfill
  \begin{subfigure}[b]{0.16\linewidth}
      \centering
      \includegraphics[width=\textwidth]{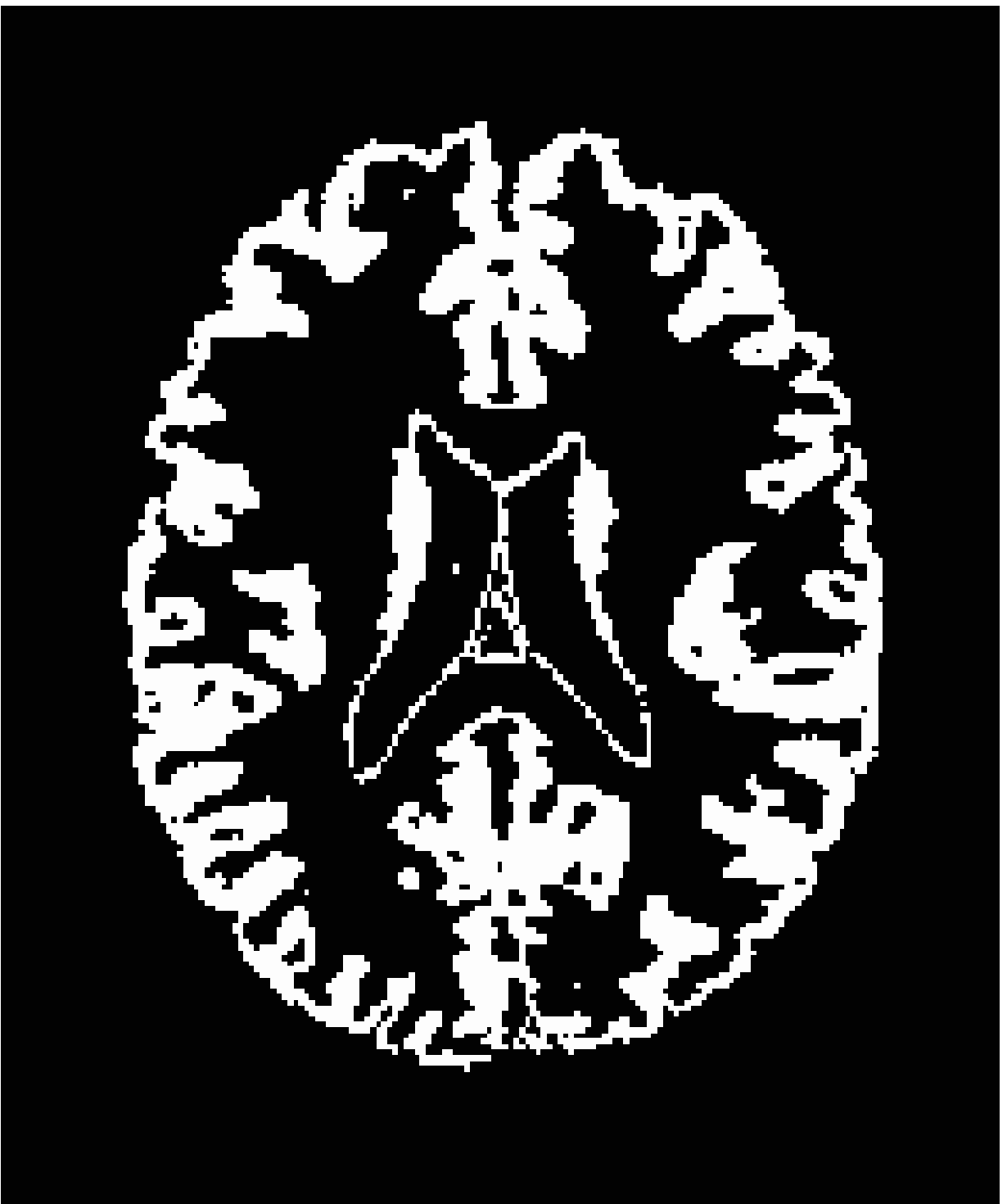}
       \caption{}
  \end{subfigure}

  \begin{subfigure}[b]{0.16\linewidth}
      \centering
      \includegraphics[width=\textwidth]{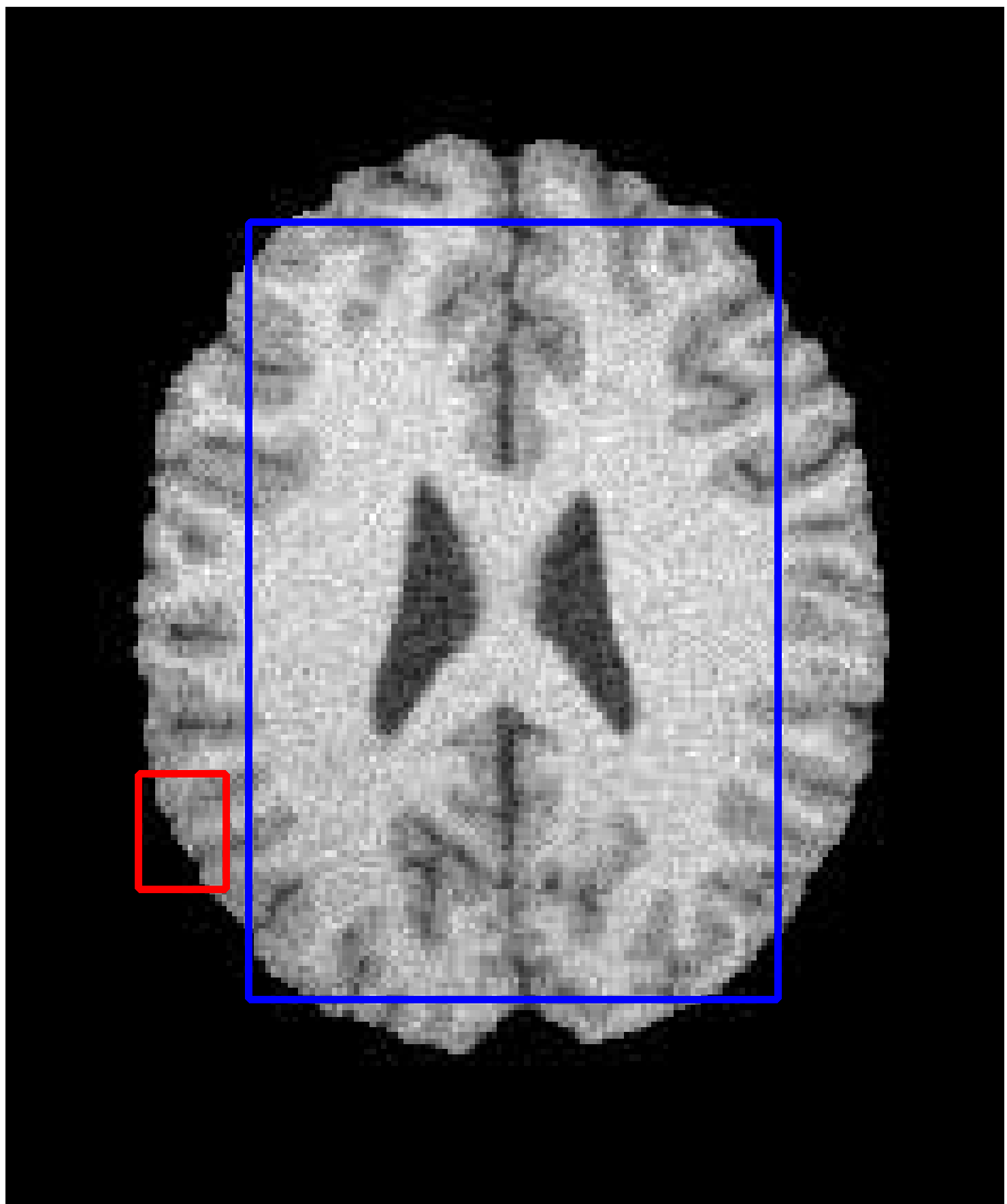}
  \end{subfigure}
  \hfill
  \begin{subfigure}[b]{0.16\linewidth}
      \centering
      \includegraphics[width=\textwidth]{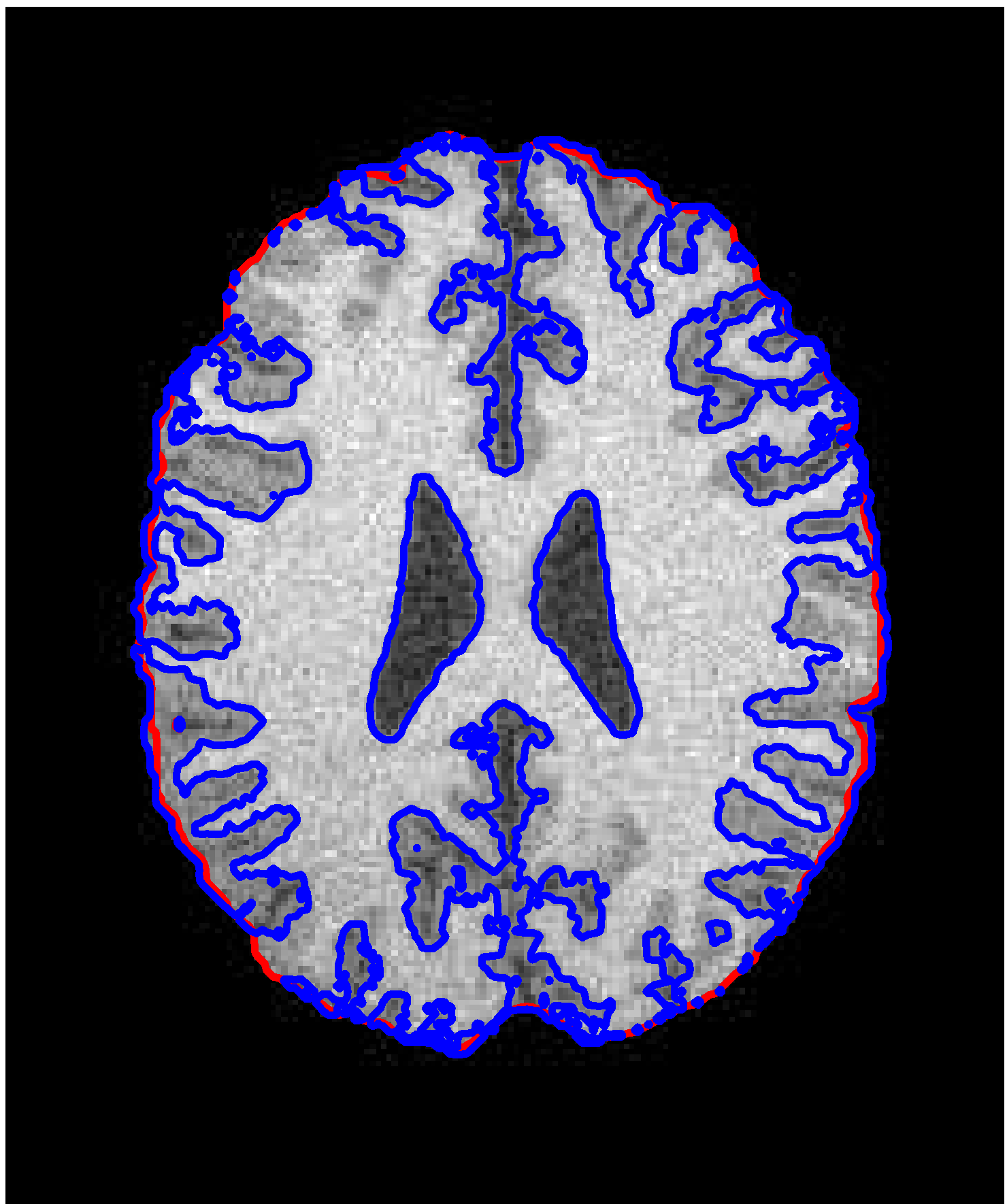}
  \end{subfigure}
  \hfill
  \begin{subfigure}[b]{0.16\linewidth}
      \centering
      \includegraphics[width=\textwidth]{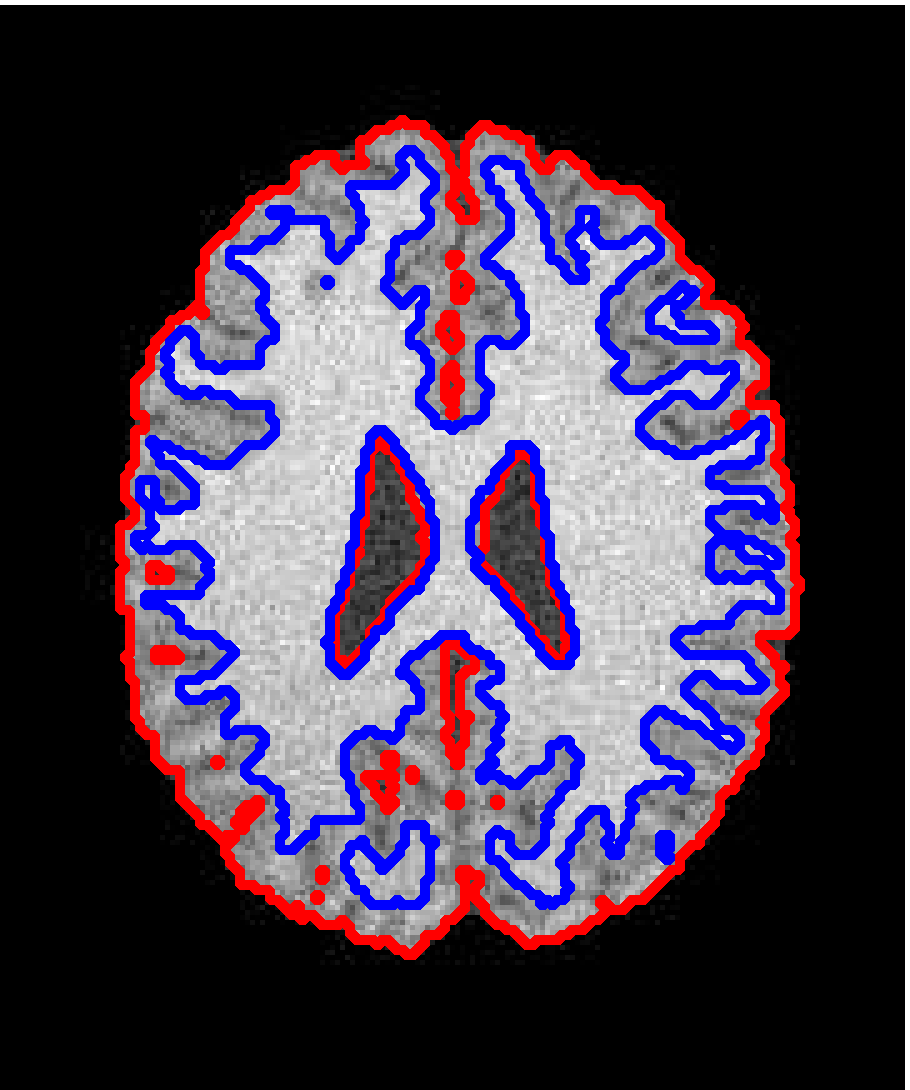}
  \end{subfigure}
 \hfill
  \begin{subfigure}[b]{0.16\linewidth}
      \centering
      \includegraphics[width=\textwidth]{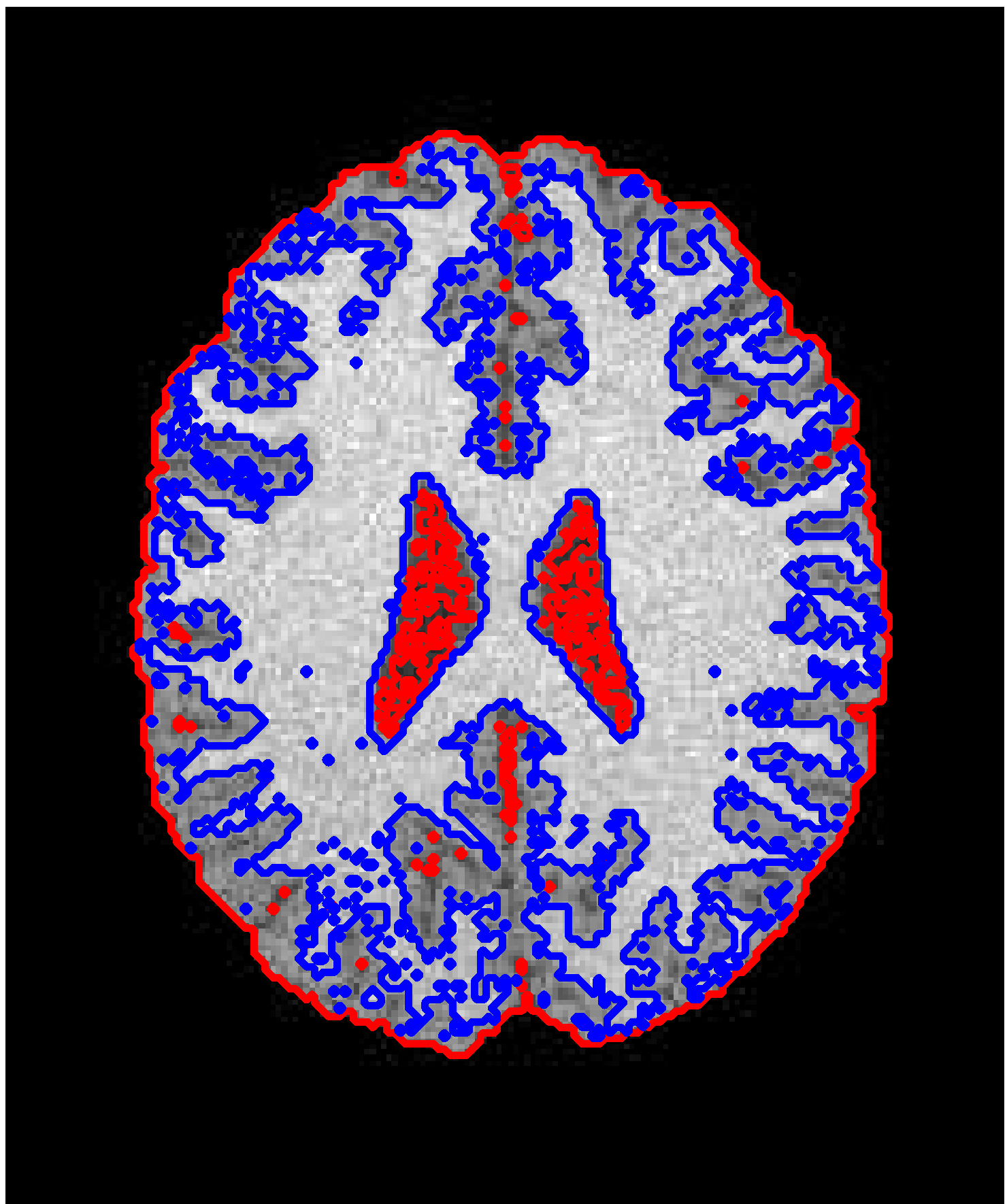}
  \end{subfigure}
  \hfill
  \begin{subfigure}[b]{0.16\linewidth}
      \centering
      \includegraphics[width=\textwidth]{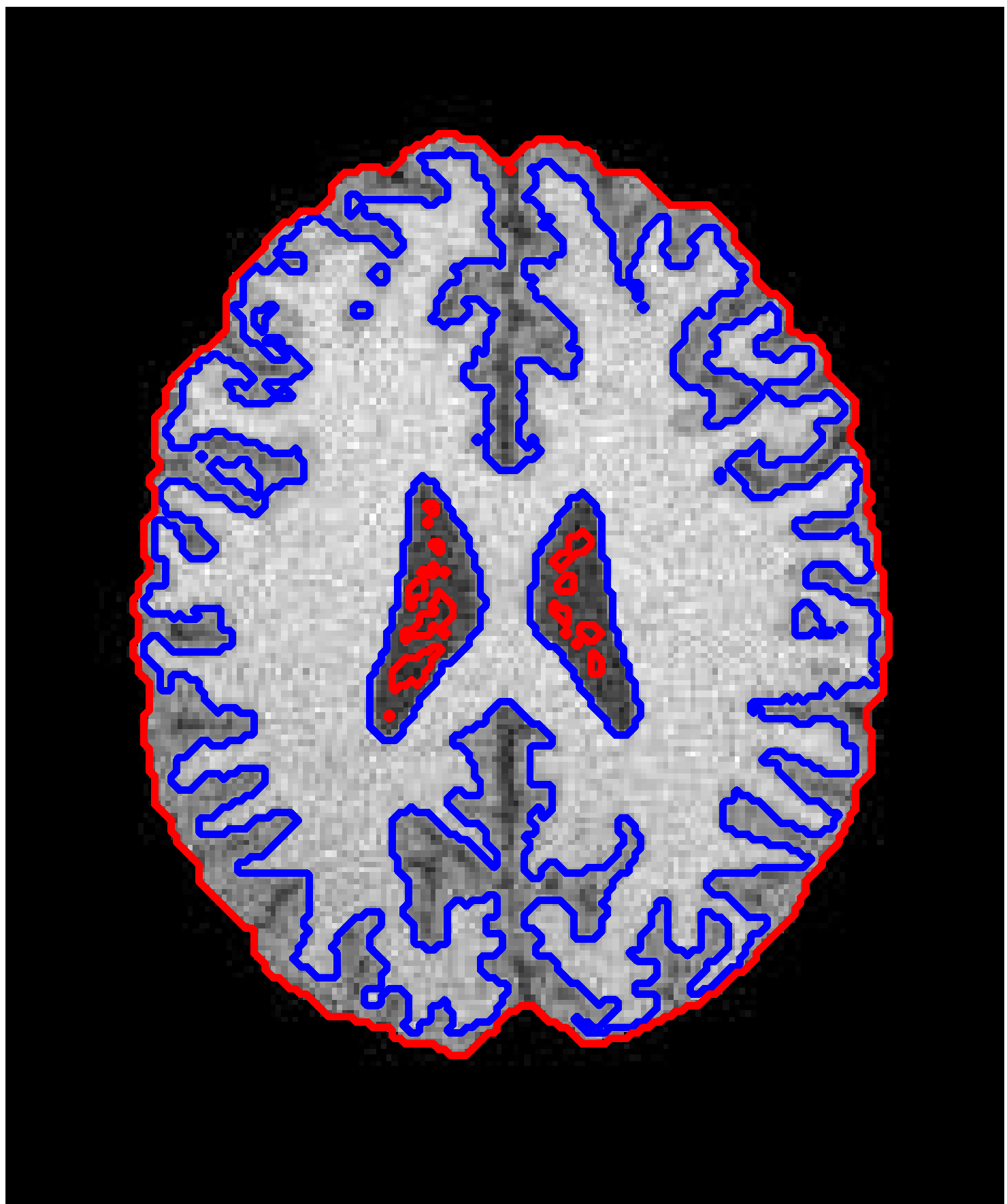}
  \end{subfigure}
 \hfill
  \begin{subfigure}[b]{0.16\linewidth}
      \centering
      \includegraphics[width=\textwidth]{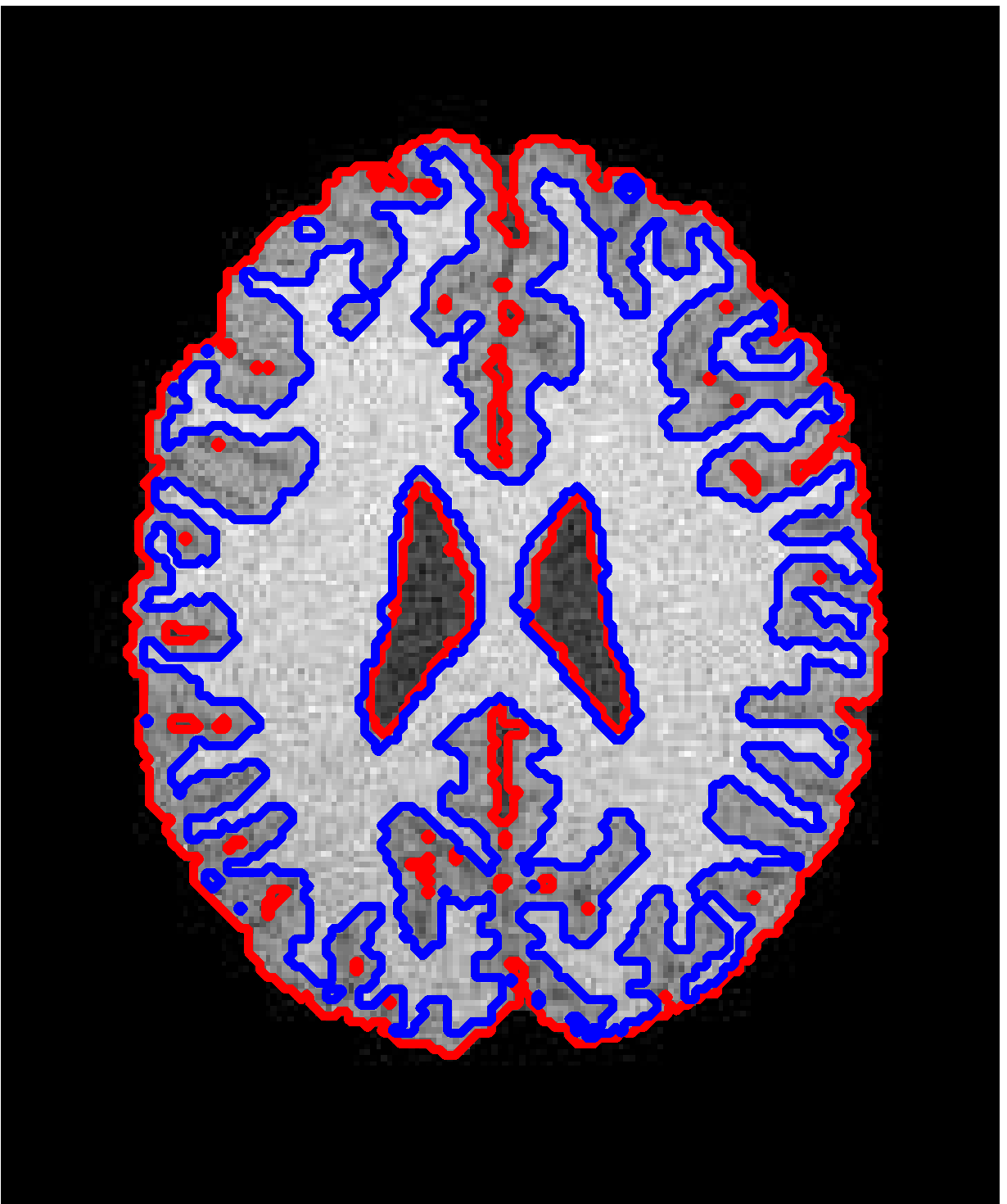}
  \end{subfigure}

 \begin{subfigure}[b]{0.16\linewidth}
      \centering
      \includegraphics[width=\textwidth]{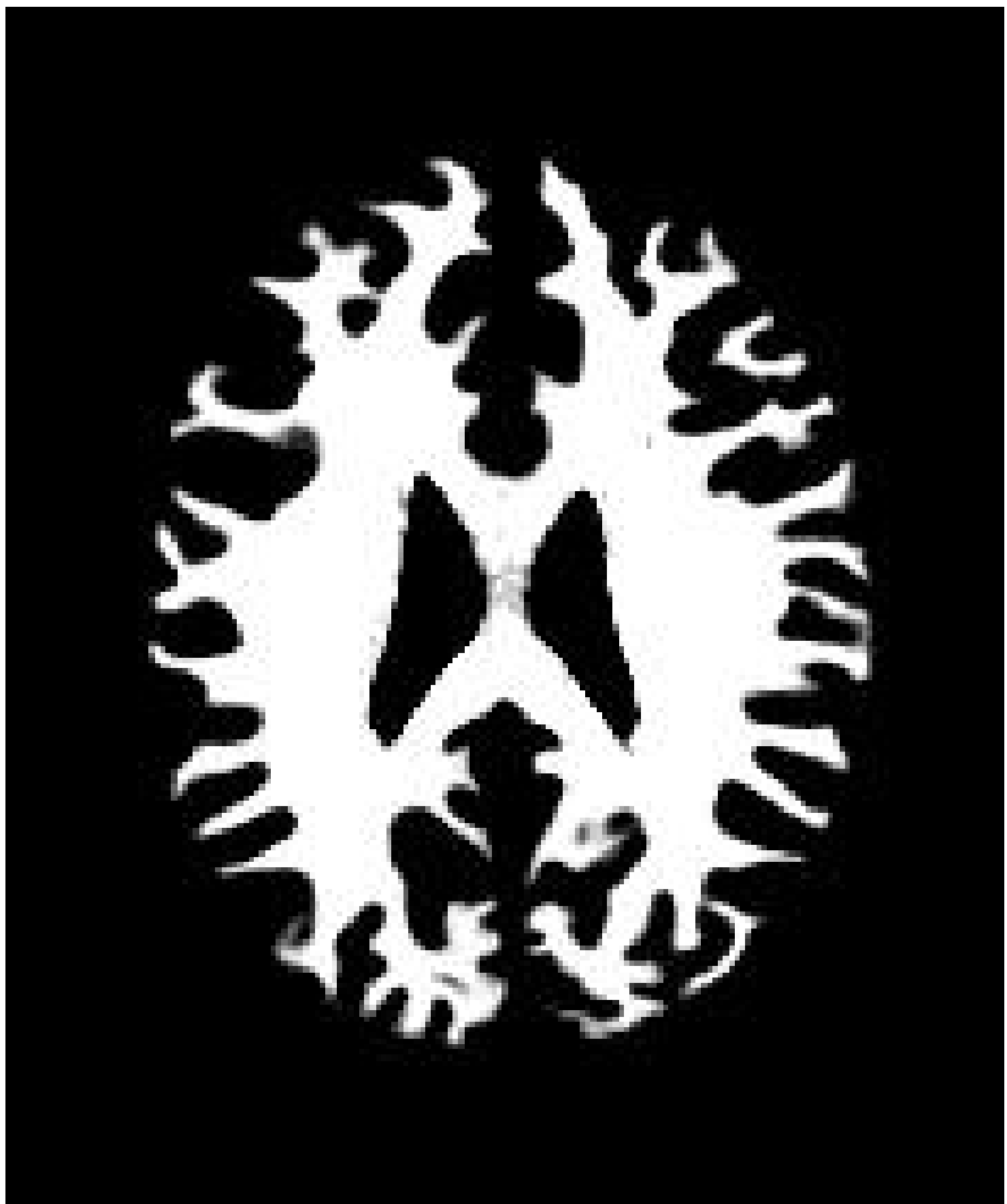}
  \end{subfigure}
   \hfill
  \begin{subfigure}[b]{0.16\linewidth}
      \centering
      \includegraphics[width=\textwidth]{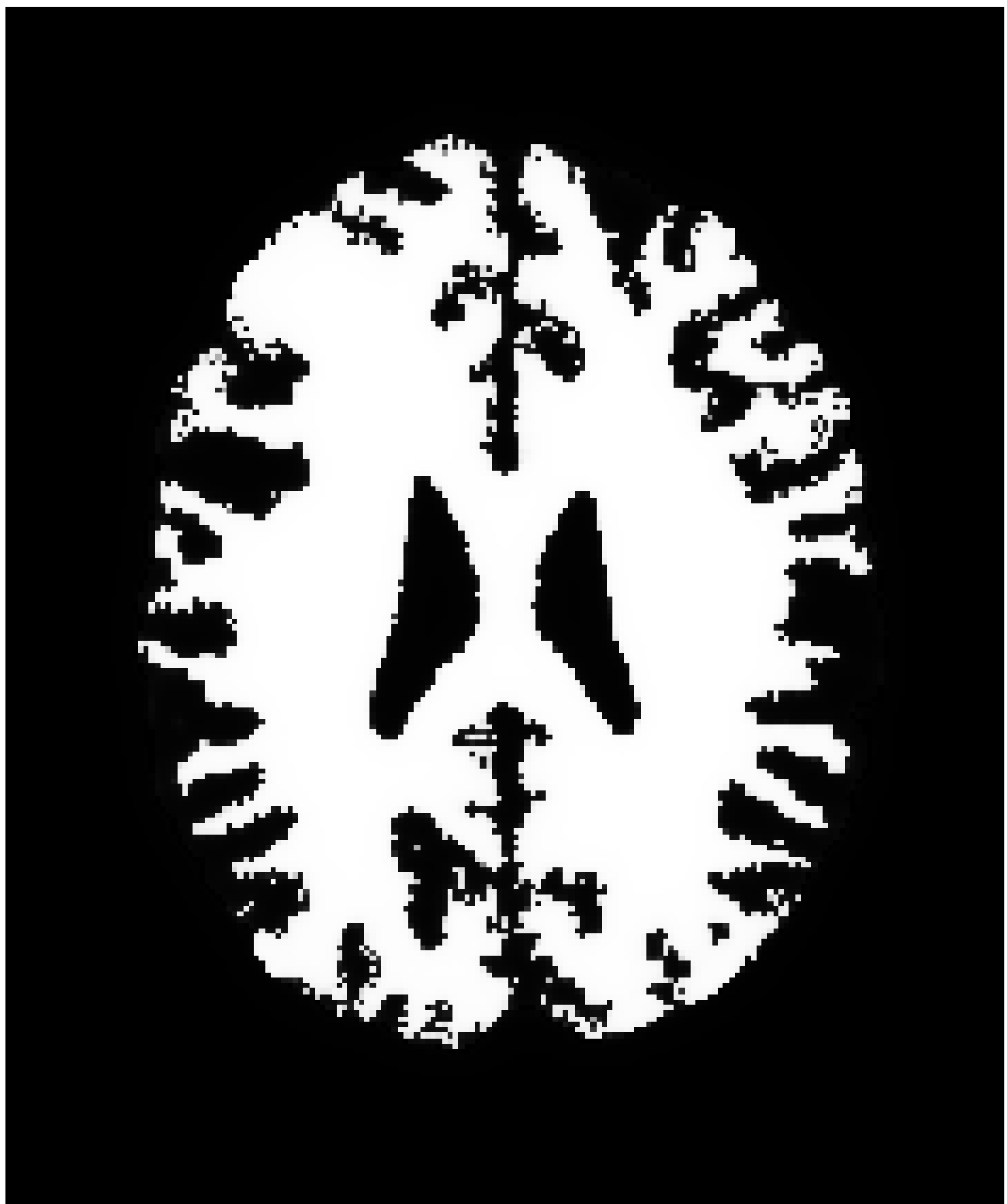}
  \end{subfigure}
  \hfill
  \begin{subfigure}[b]{0.16\linewidth}
      \centering
      \includegraphics[width=\textwidth]{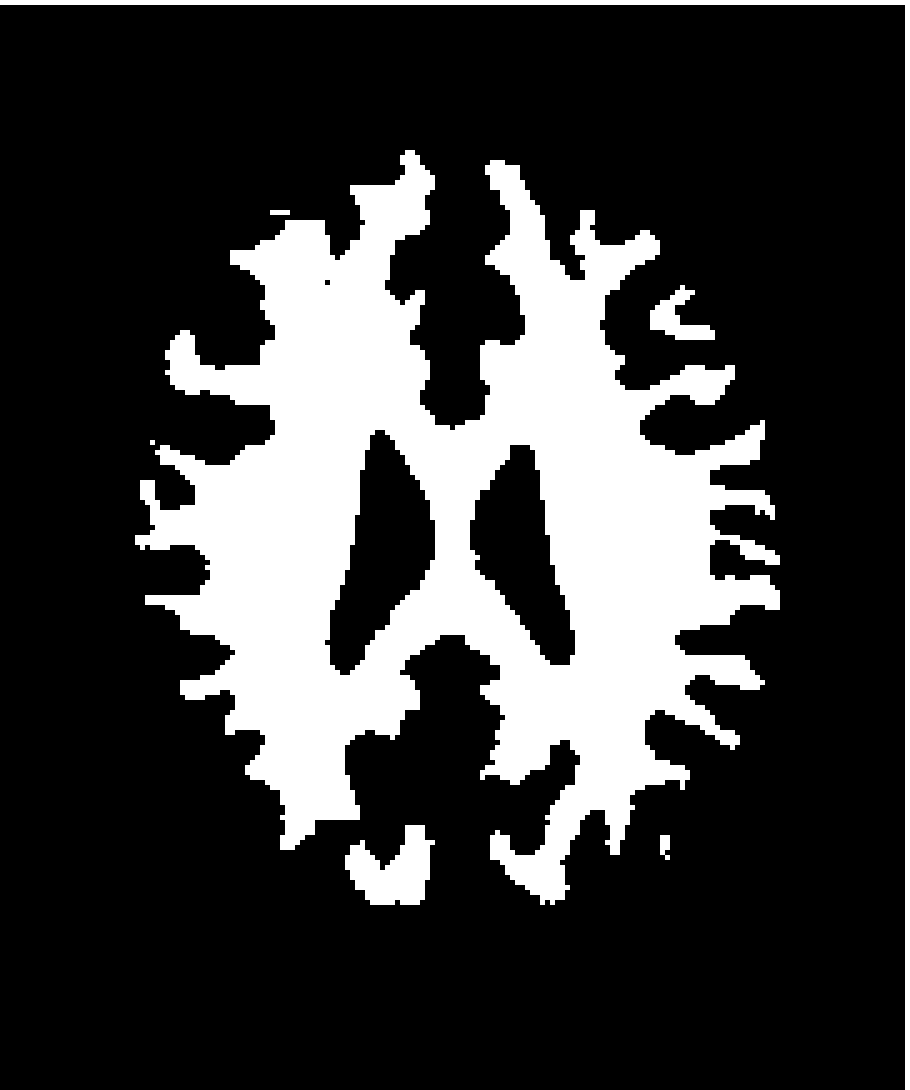}
  \end{subfigure}
   \hfill
  \begin{subfigure}[b]{0.16\linewidth}
      \centering
      \includegraphics[width=\textwidth]{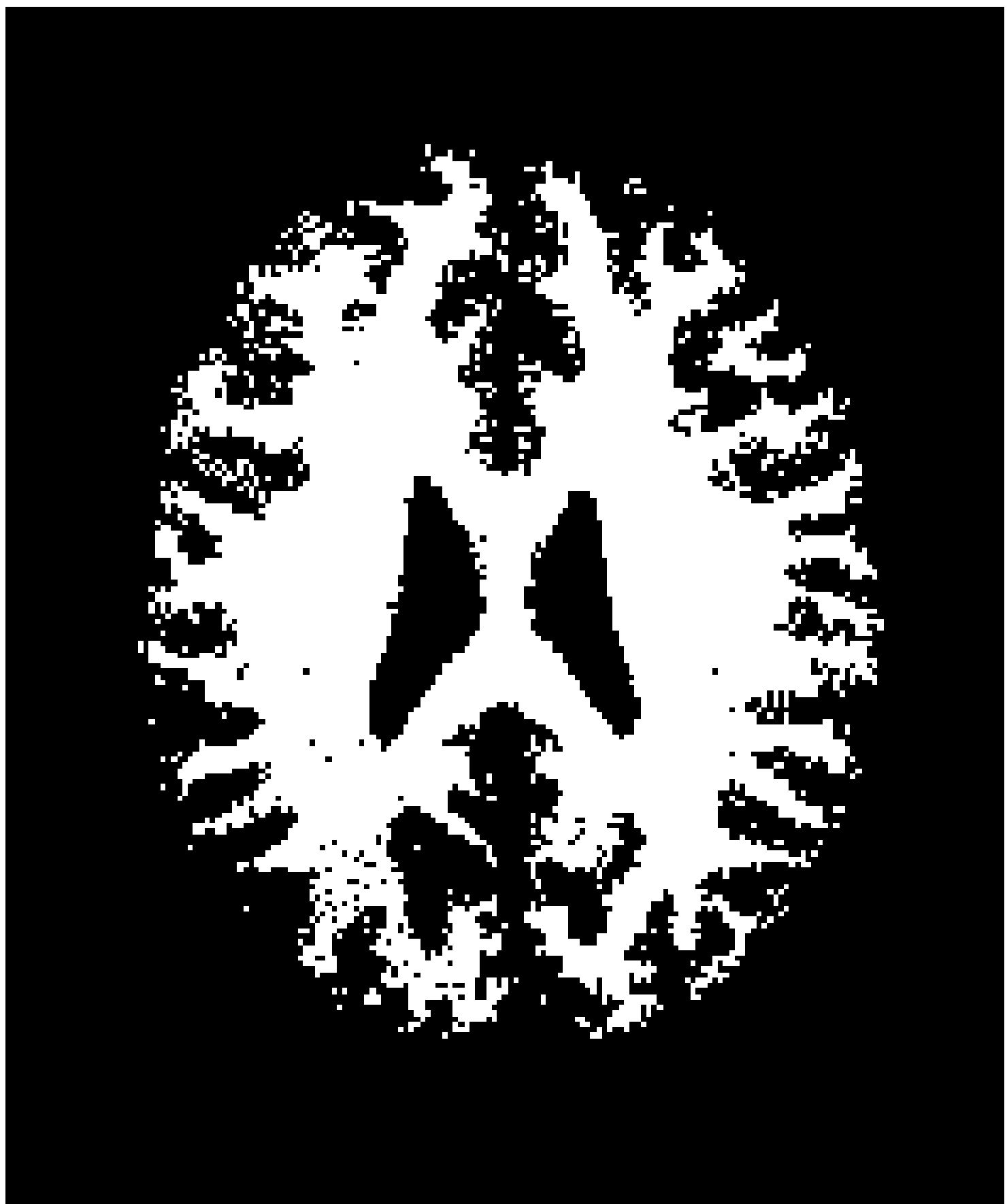}
  \end{subfigure}
  \hfill
  \begin{subfigure}[b]{0.16\linewidth}
      \centering
      \includegraphics[width=\textwidth]{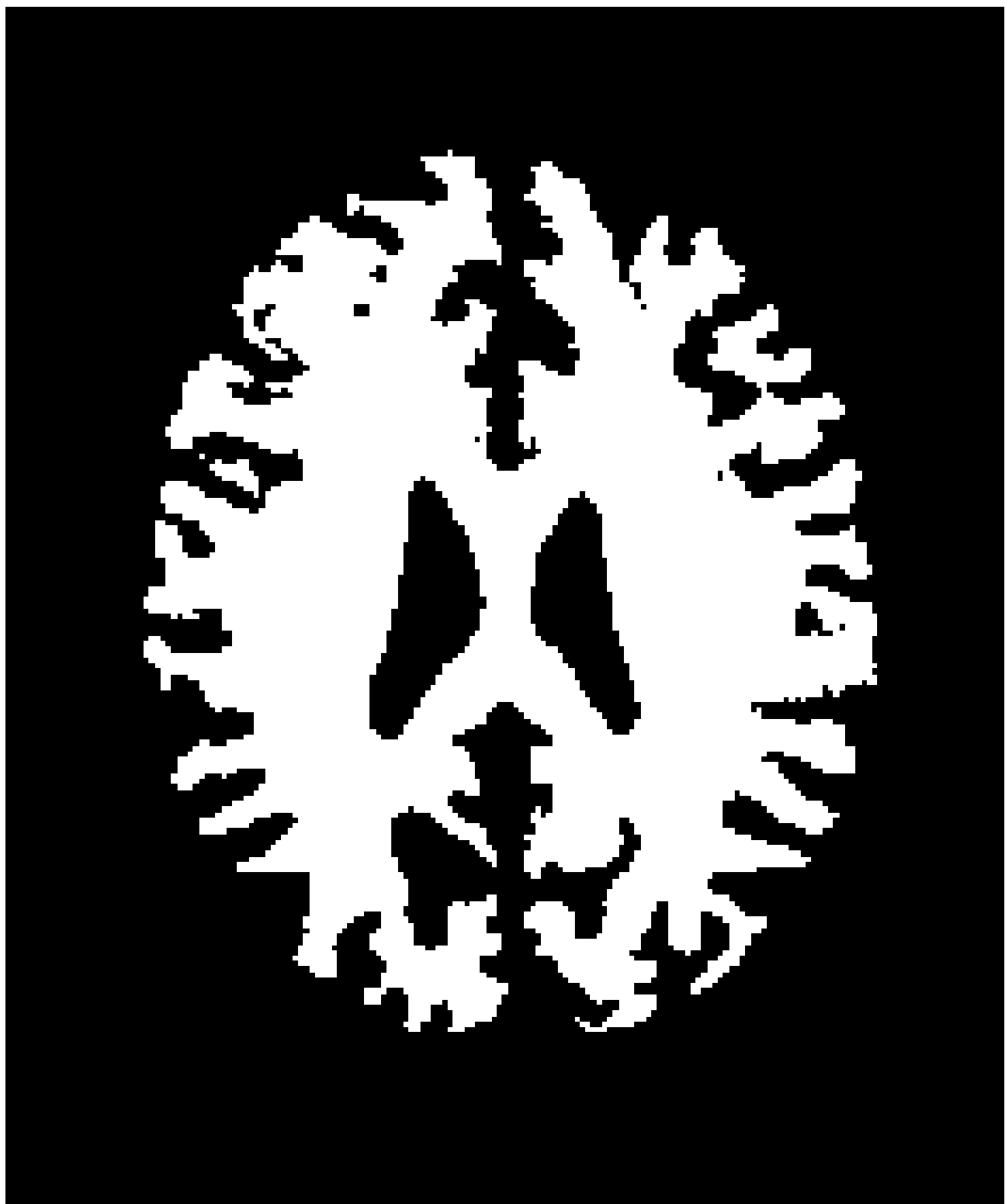}
  \end{subfigure}
   \hfill
  \begin{subfigure}[b]{0.16\linewidth}
      \centering
      \includegraphics[width=\textwidth]{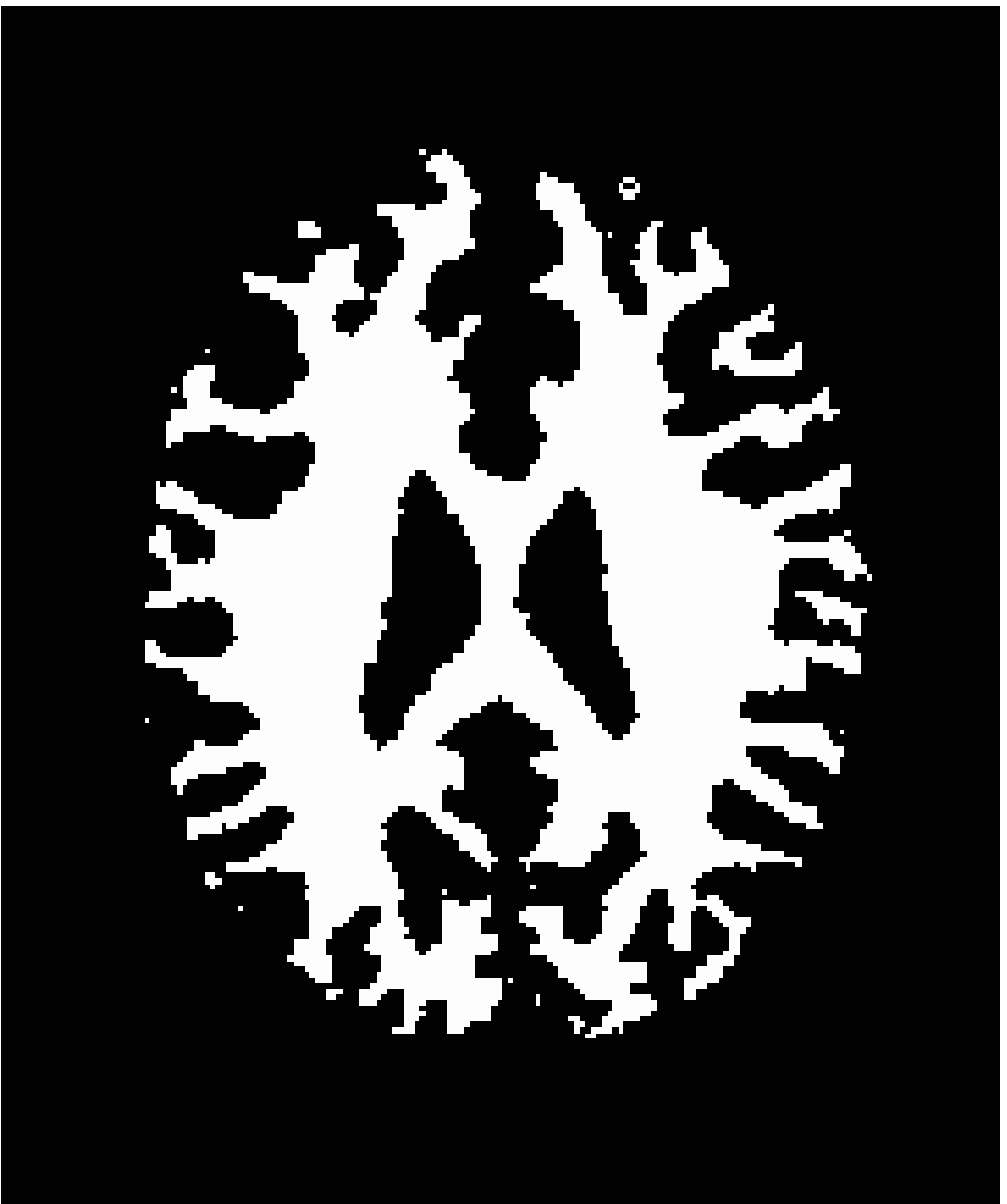}
  \end{subfigure}

 \begin{subfigure}[b]{0.16\linewidth}
      \centering
      \includegraphics[width=\textwidth]{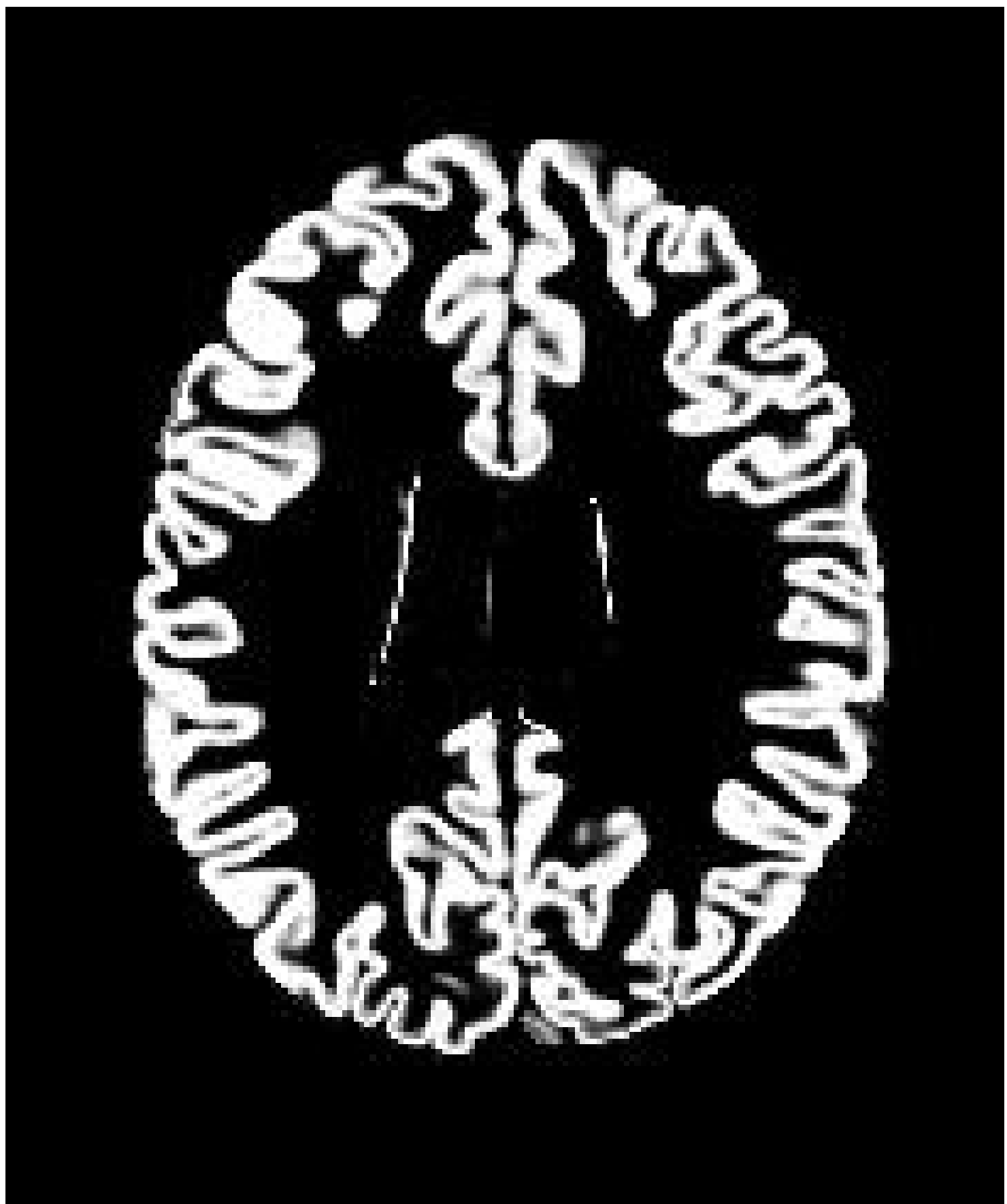}
       \caption{}
       \label{fig:98-initial}
  \end{subfigure}
   \hfill
  \begin{subfigure}[b]{0.16\linewidth}
      \centering
      \includegraphics[width=\textwidth]{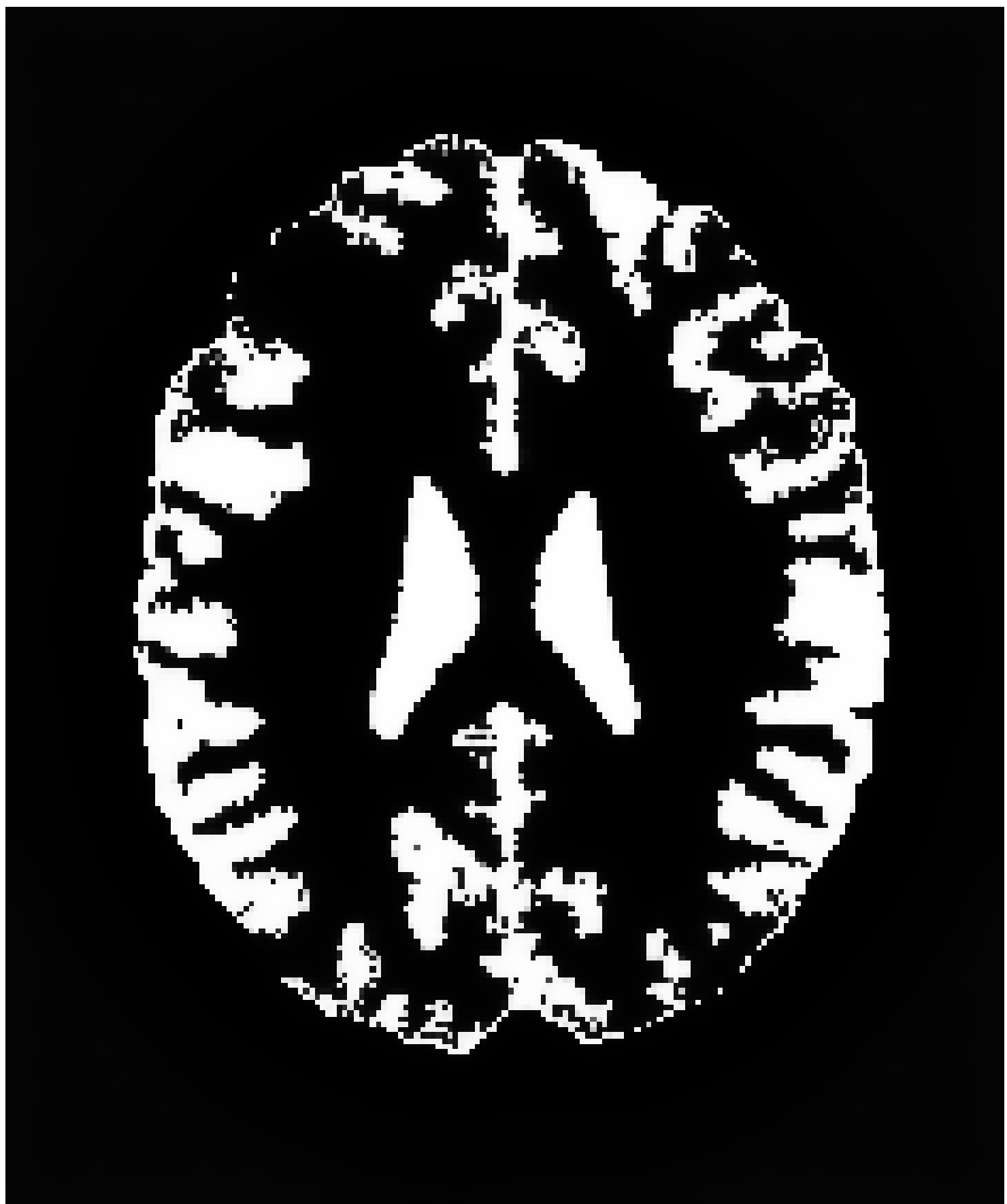}
       \caption{}
  \end{subfigure}
  \hfill
  \begin{subfigure}[b]{0.16\linewidth}
      \centering
      \includegraphics[width=\textwidth]{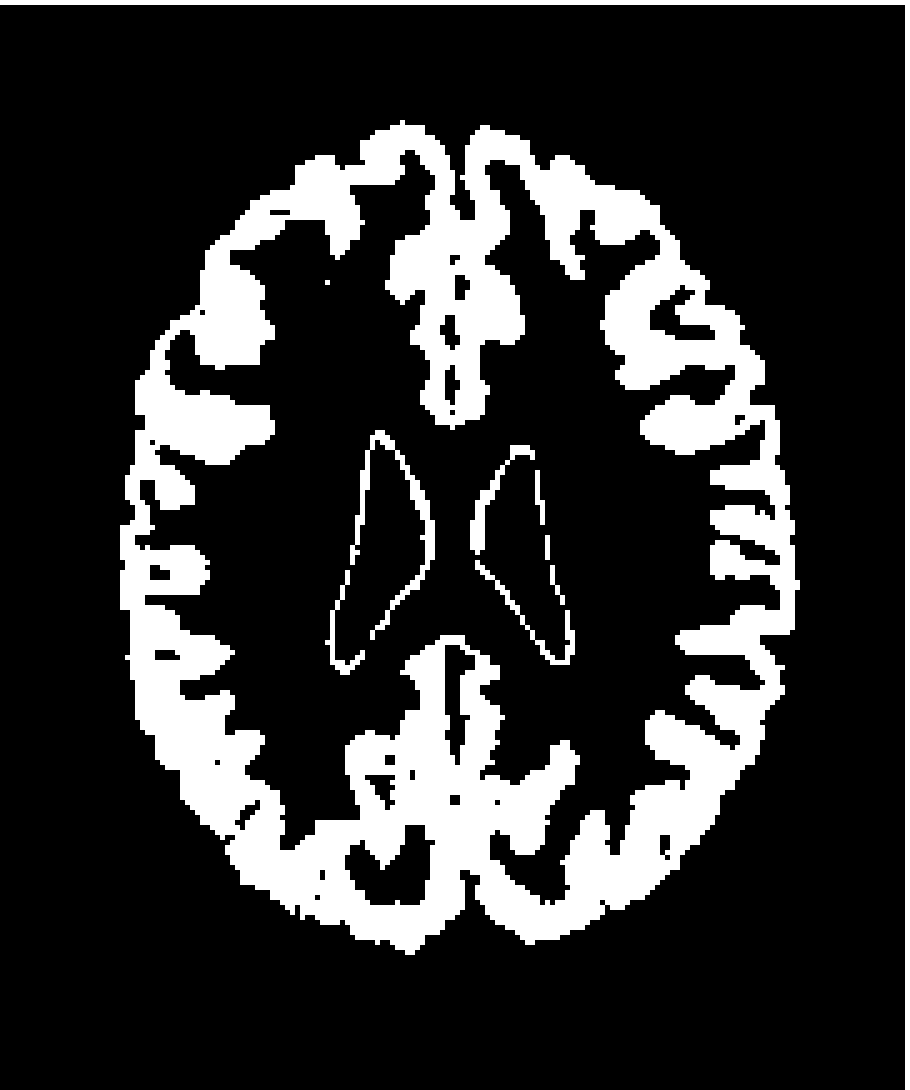}
       \caption{}
  \end{subfigure}
   \hfill
  \begin{subfigure}[b]{0.16\linewidth}
      \centering
      \includegraphics[width=\textwidth]{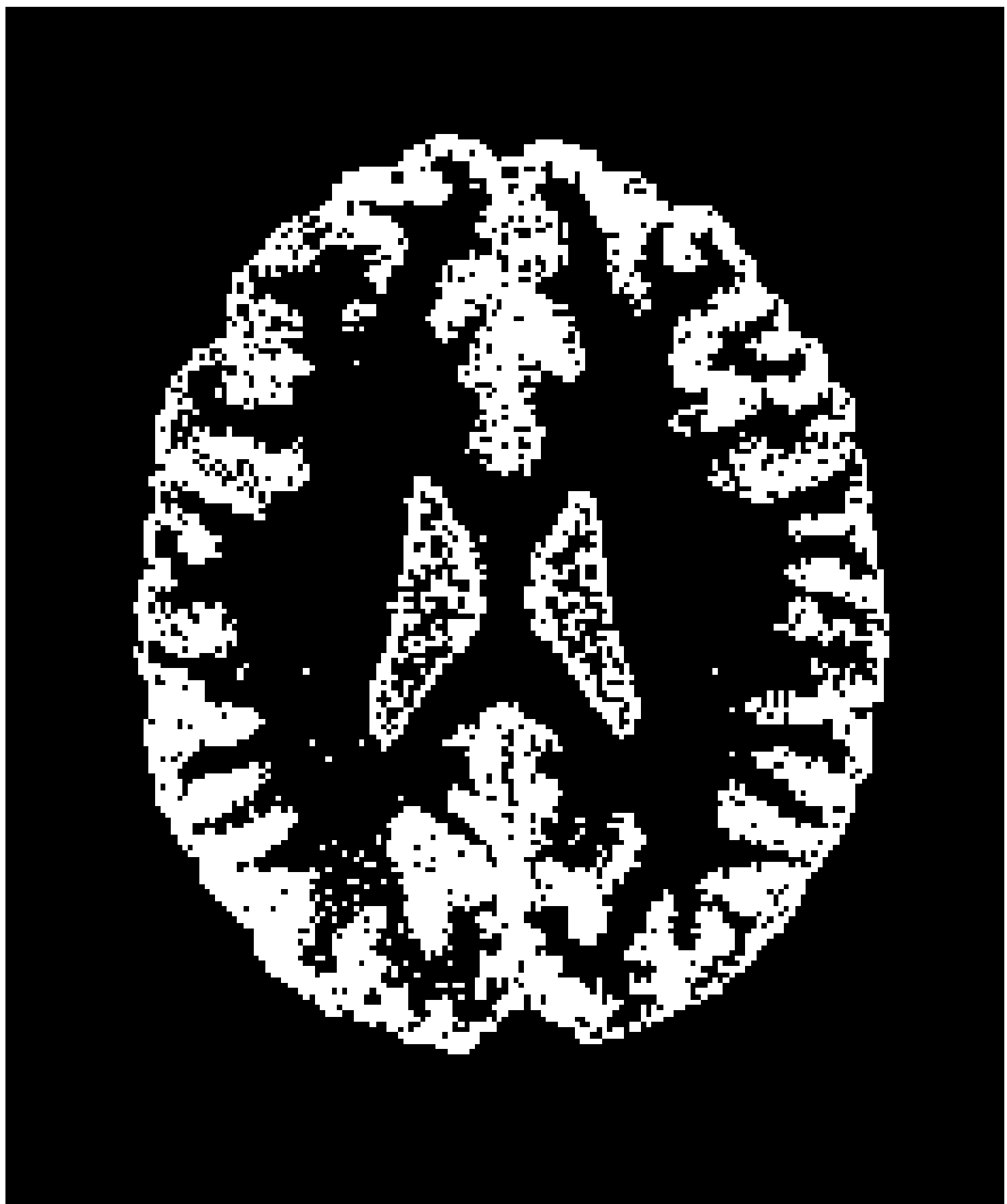}
       \caption{}
  \end{subfigure}
  \hfill
  \begin{subfigure}[b]{0.16\linewidth}
      \centering
      \includegraphics[width=\textwidth]{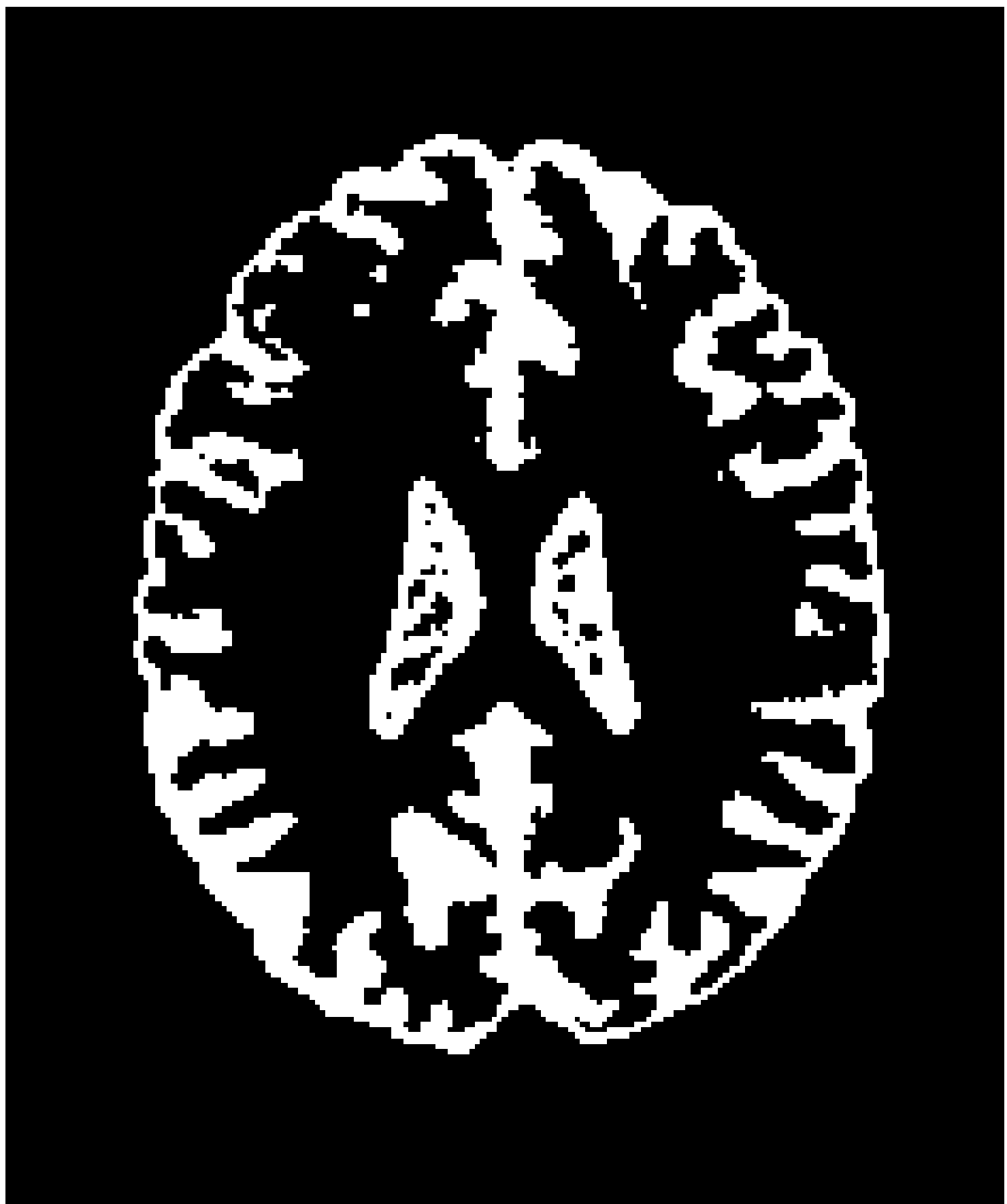}
       \caption{}
  \end{subfigure}
   \hfill
  \begin{subfigure}[b]{0.16\linewidth}
      \centering
      \includegraphics[width=\textwidth]{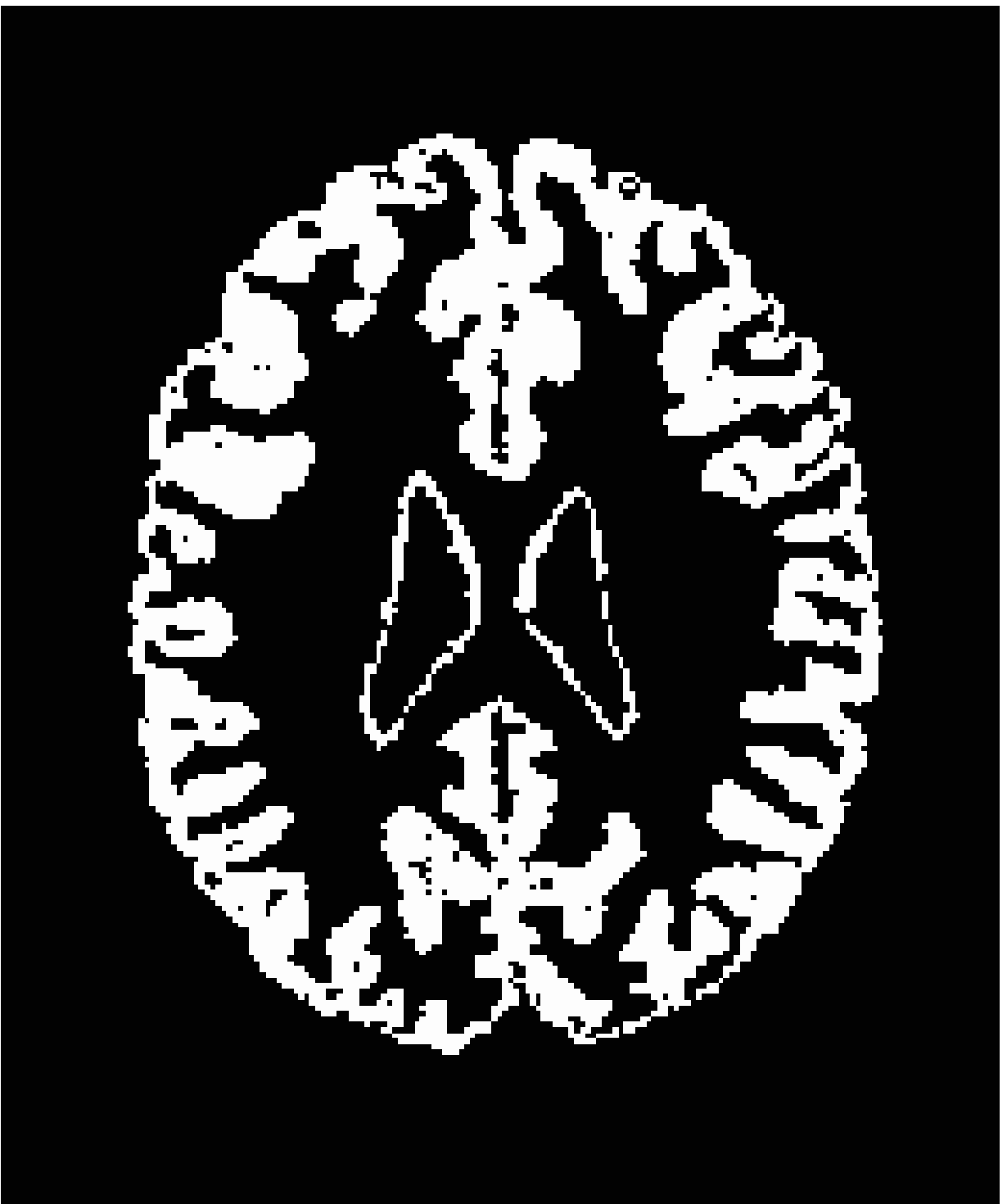}
       \caption{}
  \end{subfigure}

  \caption{Comparison with LIC, TSS, ICTM-CV, ICTM-LVF-CV models on brain MR image. Column 1: the input images with initial contours and the ground truth; Column 2--6: the segmentation results of the LIC, TSS, ICTM-CV, ICTM-LVF-CV model and our model}
  \label{fig:93}
\end{figure}
\begin{table}
    \caption{DSC, IoU and Accuracy values of different segmentation models for brain MR images}
    \label{tab:brain-result}
    \centering
    \begin{tabularx}{\textwidth}{llXXXXXX}
    \toprule
       \multirow{2}{*}{Image} & \multirow{2}{*}{Model} & \multicolumn{2}{l}{DSC} & \multicolumn{2}{l}{IoU} & \multicolumn{2}{l}{Accuracy} \\
       \cmidrule(lr){3-4} \cmidrule(lr){5-6}  \cmidrule(lr){7-8}
        & & WM & GM & WM & GM & WM & GM \\
       \midrule
       \multirow[t]{4}{*}{Fig.~\ref{fig:76-initial}}
       & LIC & 0.7860 & 0.6145 & 0.6475 & 0.4435 & 0.8882 & 0.8302 \\
       & TSS & 0.8807 & 0.8536 & 0.7868 & 0.7447 & 0.9491 & 0.9281 \\
       & ICTM-CV & 0.8428 & 0.7343 & 0.7283 & 0.5801 & 0.9219 & 0.8803 \\
        & ICTM-LVF-CV & 0.8445 & 0.3999 & 0.7308 & 0.2499 & 0.8835 & 0.6650 \\
       & Our & \textbf{0.9308} & \textbf{0.8948} & \textbf{0.8706} & \textbf{0.8096} & \textbf{0.9693} & \textbf{0.9495} \\
        \multirow[t]{4}{*}{Fig.~\ref{fig:90-initial} }
       & LIC & 0.8455 & 0.6617 & 0.7323 & 0.4944 & 0.9190 & 0.8703 \\
       & TSS & 0.9074 & 0.8352 & 0.8305 & 0.7171 & 0.9562 & 0.9357 \\
       & ICTM-CV & 0.8930 & 0.7408 & 0.8067 & 0.5883 & 0.9447 & 0.9018 \\
       & ICTM-LVF-CV & 0.8599 & 0.5110 & 0.7542 & 0.3432 & 0.9041 & 0.7479 \\
       & Our & \textbf{0.9498} & \textbf{0.8861} & \textbf{0.9044} & \textbf{0.7955} & \textbf{0.9760} & \textbf{0.9551} \\
       \multirow[t]{4}{*}{Fig.~\ref{fig:93-initial}}
       & LIC & 0.8600 & 0.5744 & 0.7544 & 0.4030 & 0.9210 & 0.8644 \\
       & TSS & 0.9169 & 0.8250 & 0.8466 & 0.7021 & 0.9589 & 0.9409 \\
       & ICTM-CV & 0.9036 & 0.7185 & 0.8242 & 0.5607 & 0.9492 & 0.9037 \\
       & ICTM-LVF-CV & 0.8682 & 0.4410 & 0.7671 & 0.2829 & 0.9075 & 0.7408 \\
       & Our & \textbf{0.9556} & \textbf{0.8927} & \textbf{0.9149} & \textbf{0.8062} & \textbf{0.9784} & \textbf{0.9626} \\
       \multirow[t]{4}{*}{Fig.~\ref{fig:98-initial}}
       & LIC & 0.8572 & 0.6318 & 0.7501 & 0.4618 & 0.9240 & 0.8789 \\
       & TSS & 0.9221 & 0.8492 & 0.8554 & 0.7379 & 0.9643 & 0.9444 \\
       & ICTM-CV & 0.9074 & 0.7738 & 0.8304 & 0.6311 & 0.9545 & 0.9189 \\
       & ICTM-LVF-CV & 0.8512 & 0.5780 & 0.7410 & 0.4064 & 0.9051 & 0.7972 \\
       & Our & \textbf{0.9530} & \textbf{0.8868} & \textbf{0.9102} & \textbf{0.7966} & \textbf{0.9781} & \textbf{0.9588} \\
    \bottomrule
    \end{tabularx}
\end{table}

To verify the effectiveness of our model on multi-phase segmentation, we perform three-phase segmentation on MR images, dividing the images into WM, GM and background. We compare its performance with the LIC, TSS, ICTM-CV and ICTM-LVF-CV models. The experimental results are shown in Fig.~\ref{fig:90}--\ref{fig:93}. Table~\ref{tab:brain-result} reports the DSC, IoU and Accuracy values of the segmentation results for the compared models. Our model and the ICTM-CV model better preserve the structural details of GM and WM in the segmentation results. However, the segmentation boundaries produced by the LIC and ICTM-CV models are severely affected by noise. Additionally, due to the intensity inhomogeneity of MR images, where GM and WM have similar intensity values, the TSS model tends to produce inaccurate segmentation results. In contrast, our model demonstrates greater robustness to both noise and intensity inhomogeneity, leading to better segmentation results. Table~\ref{tab:brain-result} demonstrates that our model achieves better quantitative performance.

\section{Conclusions}
\label{sec:conclusions}
In this paper, we proposed a robust variational segmentation model for images with high noise and intensity inhomogeneity. To reduce the impact of Poisson noise and multiplicative Gamma noise, we integrated the I-divergence term into the segmentation energy. The proposed model performs three tasks simultaneously: denoising, bias correction, and segmentation. To efficiently solve this model, the subproblems were solved using the ICTM and RMSAV algorithms. Moreover,  we relaxed the model and theoretically proved that the relaxed model is equivalent to the original model in terms of solutions, to solve for the characteristic function $u_i$ representing the segmented regions. Segmentation experiments on both synthetic and real images demonstrate the effectiveness and robustness to noise of our model.

\begin{acknowledgements}
The authors would like to thank Dong Wang for providing the reference code of the ICTM algorithm and Chunming Li for providing the reference code of the LIC model. The authors also wish to express their sincere gratitude to the anonymous reviewers for their insightful comments and constructive suggestions, which helped improve the quality of this paper.
\end{acknowledgements}

%
\section*{Declarations}
\subsection*{Funding}
\sloppy
This work is partially supported by the National Natural Science Foundation of China (12301536,
 U21B2075, 12171123, 12371419, 12271130), the Natural Science Foundation of Heilongjiang Province
 (ZD2022A001), the Fundamental Research Funds for the Central Universities (HIT.NSRIF202302,
2022FRFK060020, 2022FRFK060031, 2022FRFK060014).

\subsection*{Data availability} The data of all images and codes involved in this paper are available from the corresponding author upon reasonable request.

\subsection*{Conflict of interest} The authors declare that they have no conflict of interest.

\bibliographystyle{spmpsci}      
\bibliography{ref}   
%
%
%

\end{document}